        \renewcommand{\headrulewidth}{0pt}
\begin{document}
    
    \renewcommand{\bibname}{References}
    \frontmatter  
    \maketitle
    

\chapter{Abstract}

Unmanned aerial vehicles (UAVs) have become very popular for many military and civilian applications including in agriculture, construction, mining, environmental monitoring, etc. A desirable feature for UAVs is the ability to navigate and perform tasks autonomously with least human interaction. This is a very challenging problem due to several factors such as the high complexity of UAV applications, operation in harsh environments, limited payload and onboard computing power and highly nonlinear dynamics. Therefore, more research is still needed towards developing advanced reliable control strategies for UAVs to enable safe navigation in unknown and dynamic environments. This problem is even more challenging for multi-UAV systems where it is more efficient to utilize information shared among the networked vehicles. Therefore, the work presented in this report contributes towards the state-of-the-art in UAV control for safe autonomous navigation and motion coordination of multi-UAV systems. The first part of this report deals with single-UAV systems. Initially, a hybrid navigation framework is developed for autonomous mobile robots using a general 2D nonholonomic unicycle model that can be applied to different types of UAVs, ground vehicles and underwater vehicles considering only lateral motion. Then, the more complex problem of three-dimensional (3D) collision-free navigation in unknown/dynamic environments is addressed. To that end, advanced 3D reactive control strategies are developed adopting the sense-and-avoid paradigm to produce quick reactions around obstacles. A special case of navigation in 3D unknown confined environments (i.e. tunnel-like) is also addressed. General 3D kinematic models are considered in the design which makes these methods applicable to different UAV types in addition to underwater vehicles. Moreover, different implementation methods for these strategies with quadrotor-type UAVs are also investigated considering UAV dynamics in the control design. Practical experiments and simulations were carried out to analyze the performance of the developed methods. The second part of this report addresses safe navigation for multi-UAV systems. Distributed motion coordination methods of multi-UAV systems for flocking and 3D area coverage are developed. These methods offer good computational cost for large-scale systems. Simulations were performed to verify the performance of these methods considering systems with different sizes.

    \tableofcontents




\nomenclature{2D}{Two Dimensional}
\nomenclature{3D}{Three Dimensional}
\nomenclature{APF}{Artificial Potential Field}
\nomenclature{AUV}{Autonomous Underwater Vehicle}
\nomenclature{BLOS}{Beyond-Line-of-Sight}
\nomenclature{CPU}{Central Processing Unit}
\nomenclature{DOF}{Degrees of freedom}
\nomenclature{EKF}{Extended Kalman Filter}
\nomenclature{ESC}{Electronic Speed Controller}
\nomenclature{ESDF}{Euclidean Signed Distance Field}
\nomenclature{FCS}{Flight Control System}
\nomenclature{FCU}{Flight Control Unit}
\nomenclature{FOV}{Field of View}
\nomenclature{GCS}{Ground Control Station}
\nomenclature{GNSS}{Global Navigation Satellite System}
\nomenclature{GPS}{Global Positioning System}
\nomenclature{IMU}{Inertial Measurement Unit}
\nomenclature{LiDAR}{Light Detection and Ranging}
\nomenclature{LOS}{Line-of-Sight}
\nomenclature{MAV}{Micro Aerial Vehicle}
\nomenclature{MPC}{Model Predictive Control}
\nomenclature{MIQP}{Mixed-Integer Quadratic Program}
\nomenclature{MWSN}{Mobile Wireless Sensor Network}
\nomenclature{NCS}{Networked Control System}
\nomenclature{NED}{North, East, Down}
\nomenclature{NMPC}{Nonlinear Model Predictive Control}
\nomenclature{NSB}{Null-Space-based Behavioral}
\nomenclature{OCP}{Optimal Control Problem}
\nomenclature{PDB}{Power Distribution Board}
\nomenclature{QP}{Quadratic Program}
\nomenclature{RGB-D}{Red, Green, Blue, Depth}
\nomenclature{ROS}{Robot Operating System}
\nomenclature{RRT}{Rapidly-exploring Random Tree}
\nomenclature{RTK}{Real-Time Kinematic}
\nomenclature{SITL}{Software-in-the-Loop}
\nomenclature{SLAM}{Simultaneous Localization and Mapping}
\nomenclature{SMC}{Sliding Mode Control}
\nomenclature{UAS}{Unmanned Aerial System}
\nomenclature{UAV}{Unmanned Aerial Vehicle}
\nomenclature{UGV}{Unmanned Ground Vehicle}
\nomenclature{VTOL}{Vertical Take-off and Landing}

\printnomenclature

    
    \mainmatter
    \pagestyle{fancy}
        \fancyhf{}
        \fancyhead[LE]{\leftmark}
        \fancyhead[RO]{\rightmark}
        \fancyfoot[C]{\thepage}
        \renewcommand{\headrulewidth}{1pt}
        \setcounter{secnumdepth}{3}

    \chapter{Introduction}\label{cha:introduction}

This report deals with control problems related to safe navigation of unmanned aerial vehicles (UAVs) including single- and multi-vehicle systems.
Over the past decades, developments of UAV technologies have allowed them to be increasingly deployed in many applications extending into the civilian domain after being traditionally used mostly in military missions at early stages of development.
Nowadays, we see UAVs being utilized in various fields due to their low cost and agility.
Examples of fields and applications where UAVs have become popular tools include agriculture, mining, construction, engineering geology, archeology, surveying, inspection, autonomous firefighting, photography and many more.
Many of these applications still rely on manual operation or teleoperation due to reliability concerns as there are many challenges arise when designing fully autonomous solutions that require least to no human interaction.
This highlights the importance of research in advancing such technology to eliminate the human factor from the loop.
That is, it is very motivating to develop UAVs to a stage where they can carry out a complete mission without any human interaction operating in a fully autonomous mode.
This will greatly improve productivity, save costs and lives.

A key component for a UAV to operate autonomously is how it can navigate or fly safely to reach some targeted locations by only making decisions based on observations from onboard sensors.
This general problem has attracted great interest in the past years not only for UAVs but also for unmanned ground vehicles (UGVs), autonomous underwater vehicles (AUVs), and other mobile robots.
The level of complexity in software and hardware components design increases with the required level of autonomy as well as the complexity of the overall designated task.
Even operation in different environments with various challenging conditions may affect the overall system design in terms of the suitable sensors to use and how robust navigation algorithms should be.
Other challenges arise when developing UAVs are related to available technologies in computing, power, electronics and communications.
For example, some factors to consider when designing miniature UAVs are limited flight time (battery life), payload capacity and available onboard computational resources.
These are some of the factors that make developing navigation strategies for UAVs very challenging.
This motivates us to develop more advanced methods to cope with such limitations and to fully utilize the emerging advances in related technologies.

\section{Research Problem \& Objectives}\label{ch1:problem} %

The general research problem considered in this work is how to safely navigate UAVs utilizing their ability to perform 3D maneuvers in order to perform various tasks autonomously.
In other terms, it is required to develop navigation control strategies with 3D collision-avoidance capabilities to achieve certain global motion objective(s).
This broad problem is targeted in this report through tackling different subproblems including navigation in \textit{unknown}, dynamic and tunnel-like environments.
Referring to an environment as \textit{unknown} means that no previous knowledge or a map is available about the environment, and the vehicle can only build its understanding of the environment through information interpreted from onboard sensors.
Thus, it is a more complex and challenging problem than navigating in known environments.
Additionally, we also consider this problem for multi-vehicle systems where it is possible to achieve the safety navigation goal more efficiently through information exchange among networked vehicles.
Such information can be utilized through the development of distributed motion coordination control strategies.

The considered navigation subproblems address the following general research questions:
\begin{itemize}
	\item How to autonomously guide a UAV to reach some desired target position while avoiding collisions with surroundings when navigating in unknown and dynamic environments?
	\item How to autonomously guide a UAV to progressively advance through 3D unknown tunnel-like environments while avoiding collisions with its boundaries?
	\item How can a multi-vehicle system navigate safely as a group with no collisions among its vehicles to achieve some global objectives such as making a certain geometric formation and achieving consensus when moving to goal regions?
	\item How can a multi-vehicle system perform coverage tasks autonomously in 3D environments with guaranteed collision avoidance?
\end{itemize}
The complete description of each subproblem is provided in more detail in each chapter where they are addressed.

One of the main objectives is to develop control strategies with low computational cost adopting a \textit{"Sense \& React"} or \textit{"Sense \& Avoid"} paradigm.
Hence, they can be more effective in unknown and dynamic environments to provide quick motion decisions compared to many of the existing search-based and optimization-based methods.
This makes the developed methods more favorable in situations where it is more desired to dedicate more computational resources to other components within the overall autonomous stack such as computer vision, localization, mapping, etc.
The low computational cost also makes these approaches suitable for high-speed applications and miniature UAVs with limited onboard computing power.
Additionally, we aim to develop some of the ideas at a higher level so that they can be applicable to different UAV types in addition to other vehicles with the ability to move in 3D spaces such as AUVs.

\section{Research Approach}

Novel methods are proposed in this report to address the aforementioned problems. %
The development process include conceptual development, rigorous mathematical analysis, extensive simulations and experimental evaluations.
Observations made from simulations and practical implementations are used to further refine and improve the developed methods.
In simulations, we follow two approaches. 
General simulations using MATLAB are carried out based on a general kinematic model and some abstract sensing models to validate the overall concept.
Furthermore, Software-in-the-Loop (SITL) simulations are performed using complete models (i.e. including vehicle dynamics) of quadrotor-type UAVs through a physics engine.
In this case, the control methods are implemented in C++ and/or Python utilizing the Robot Operating System (ROS) middleware for efficient implementaion of the full autonomous stack which can be deployed directly to real systems.
SITL simulations help evaluating the computational performance since the same production code is applied in real-time to a simulation engine with a setup similar to the actual hardware.
This also provides an easier way of considering different practical sensing models. 

The practical evaluation is also done through several experiments in the Autonomous Systems Testing Laboratory at UNSW with different quadrotor UAVs.
Even though some of the proposed approaches have not been fully evaluated through experiments, practical aspects has been considered in their development based on insights from flights when testing the other approaches.
Examples of the used vehicles in some of the experiments are shown in \cref{fig:ch1:quadrotors}.

To achieve the highlighted objectives in the previous section, reactive-based approaches are adopted in the design process of many of the proposed methods.
This provides solutions with low-computational cost and quick reactions at the expense of being less optimal in some scenarios.
Furthermore, the overall ideas are developed at a high level using general kinematic models of UAVs treating them as single point moving in a three-dimensional (3D) workspace so that these methods can also be applicable to other vehicle types navigating in 3D such as AUVs.
Further possible ways of implementation and low-level control designs are also proposed for quadrotor UAVs to address potential practical concerns.

\begin{figure}[!htb]
	\centering
	\begin{adjustbox}{minipage=\linewidth,scale=0.75}
		\begin{subfigure}[t]{0.46\textwidth}
			\centering
			\includegraphics[width=\linewidth]{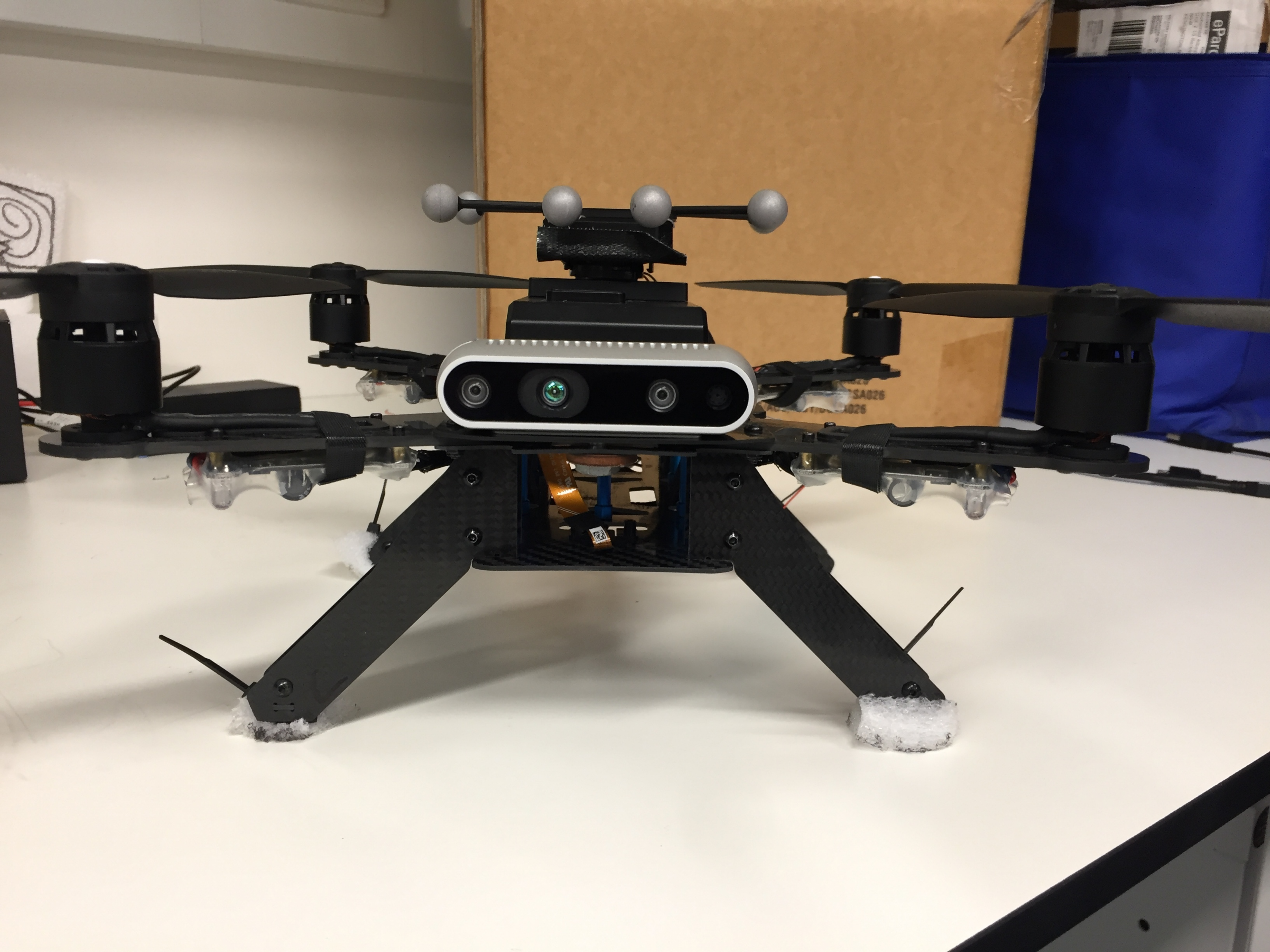} 
			\caption{}
			\label{fig:ch1:quadrotor1}
		\end{subfigure}
		\hfill
		\begin{subfigure}[t]{0.46\textwidth}
			\centering
			\includegraphics[width=\linewidth]{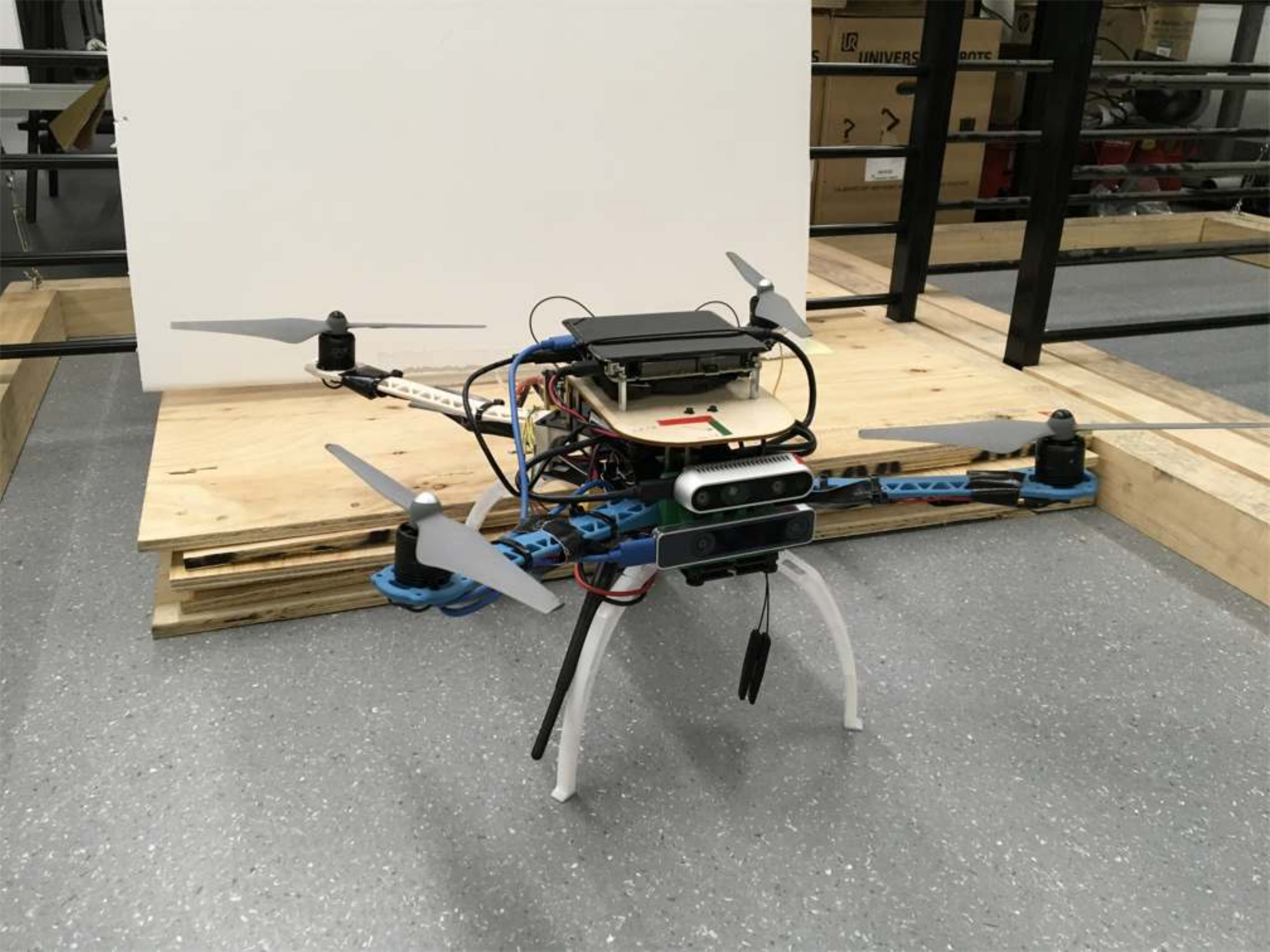} 
			\caption{}
			\label{fig:ch1:quadrotor2}
		\end{subfigure}
		
		\caption{Quadrotors used for experimental evaluation}
		\label{fig:ch1:quadrotors}
	\end{adjustbox}
\end{figure}

\section{Report Outline}

This report is organized into two main parts.
Part~I deals with navigation problems of single-UAV systems which includes chapters~\ref{cha:methods_hybrid}-\ref{cha:tunnel_navigation}, and part~II addresses problems related to motion coordination control strategies for multi-UAV systems which are covered in chapters~\ref{cha:flocking_control}-\ref{cha:coverage_control}.
Each chapter is presented in a complete manner containing problem formulation, proposed control strategy, mathematical analysis and validation through simulations and/or experimental results.
These chapters can be briefly described as follows:
\begin{itemize}%
	\item \textbf{Chapter~\ref{cha:litreivew}} %
	presents an overview of the components required for safe autonomous navigation of UAVs which is also common with other mobile unmanned robots such as ground and underwater vehicles.
	It also provides a general literature review of recent research addressing motion control, 3D collision avoidance and perception related to UAVs.

	\item \textbf{Chapter~\ref{cha:methods_hybrid}} %
	suggests a hybrid navigation approach which can act as a base framework to achieve reliable collision-free motions.
	This framework combines reactive collision avoidance with global path planning methods to provide better solutions in partially-unknown environments.
	As a general framework, it is presented considering only 2D movements for simplicity; however, it can be extended to adopt the 3D reactive methods developed in the subsequent chapters.
	
	\item \textbf{Chapter~\ref{cha:methods_reactive3D}} %
	proposes a novel 3D reactive navigation strategy for UAVs with obstacle avoidance capabilities.
	The developed approach can utilize the UAVs full capabilities in doing 3D maneuvers for obstacle avoidance in contrast to many of the existing 2D reactive methods.
	The method is developed using a general 3D nonholonomic kinematic model which is applicable to different UAV types in addition to autonomous underwater vehicles.
	
	\item \textbf{Chapter~\ref{cha:reactive_impl}} %
	extends the work done in chapter~\ref{cha:methods_reactive3D} by proposing control laws for quadrotor UAVs based on the differential-flatness property of quadrotor dynamics and the sliding mode control technique.
	Implementation details and experimental results are also provided in this chapter.
	
	\item \textbf{Chapter~\ref{cha:deforming_approach}} %
	also proposes a different 3D collision-free navigation method adopting the concept of real-time path deformation.
	This method relies on light processing of sensors measurements which make it considered as reactive to provide quick responses around unknown and dynamic obstacles.
	
	\item \textbf{Chapter~\ref{cha:tunnel_navigation}} %
	proposes a novel 3D tunnel navigation strategy with light computational complexity.
	The strategy generates motion commands directly based on sensors observations without the need for accurate localization.
	It can produce 3D maneuvers to navigate in very complex 3D tunnel-like environments in contrast to many of the available reactive methods which consider only 2D motions that are applicable to certain tunnel shapes.
	Simulation and experimental validations are provided with detailed technical discussion about some practical aspects. 
	
	\item \textbf{Chapter~\ref{cha:flocking_control}} %
	deals with the flocking problem related to motion coordination of multi-vehicle systems.
	It proposes a distributed control approach to ensure that each vehicle within the system can achieve four main objectives which are: avoiding collisions with other vehicles, avoiding collisions with obstacles, maintaining its position within the group to achieve some formation, and reaching a goal region in consensus with the other vehicles.

	\item \textbf{Chapter~\ref{cha:coverage_control}} %
	tackles a motion coordination problem with different global objective which is to perform coverage tasks in 3D environments using multi-vehicle systems.
	Coverage control methods with different control laws are proposed considering a general kinematic model that is applicable to different UAV types.
	The methods are also extended to quadrotor UAVs where control laws based on their dynamical models are developed.
	
	\item \textbf{Chapter~\ref{cha:conclusion}} %
	concludes the work presented in this report highlighting the main contributions made towards the state-of-the-art in these areas.
	It also outlines current ongoing research and potential directions for future work.
\end{itemize}

    \chapter{Literature Review\label{cha:litreivew}}

This chapter provides an overview of recent developments in the field of unmanned aerial vehicles related to autonomous navigation.
Since collision avoidance is a very critical component, a great part of this chapter focus on advanced methods capable of producing three-dimensional (3D) avoidance maneuvers.
The work presented in this chapter was published in \cite{elmokadem2021towards}

\section{Introduction}

Unmanned aerial vehicles (UAVs) have evolved greatly over the past decades with prevalent use in military and civilian applications such as search \& rescue \cite{goodrich2008supporting}, wireless sensor networks and the Internet of Things (IoT) \cite{li2018wireless,huang2018towards}, remote sensing \cite{pajares2015overview}, surveillance and monitoring \cite{savkin2019method,huang2020algorithm,savkin2020navigation}, 3D mapping \cite{nex2014uav}, objects grasping and aerial manipulation \cite{korpela2012mm,ruggiero2018aerial}, underground mines exploration \cite{li2020autonomous}, etc.
Challenges in developing UAVs keep increasing as the complexity of their tasks increases  especially with the aim of pushing towards fully autonomous operation (i.e. with least human interaction).
Moreover, many applications require UAVs to autonomously operate in unknown and dynamic environments in which they need to completely rely on onboard sensors to understand the environment they navigate in and complete their tasks efficiently.
The autonomous navigation problem can generally be defined as the vehicle's ability to reach a goal location while avoiding collisions with surroundings without human interaction.
This is a very challenging problem as it is important to achieve safe navigation to avoid causing damages or injuries.
Limitations on available technologies related to UAV add more complexities to the development of autonomous navigation methods in order to ensure reliability and robustness compared with unmanned ground vehicles (UGVs) and autonomous underwater vehicles (AUVs).
Examples of such are limitations on sensing capabilities, allowed payload capacity, flight time, energy consumption, communication, actuation and control effort.
Developing efficient and advanced motion control methods plays a critical role in minimizing the effect of these factors.
For example, adopting complex bio-inspired flying behaviors such as perching and maneuvering on surfaces can help extending mission flight time \cite{roderick2017touchdown}.

Many researchers have contributed towards addressing the navigation problem for UAVs.
This overview aims at surveying the developments made in the past ten years towards achieving fully autonomous operations.
Some key approaches developed earlier than the considered time frame is also reported for the sake of completion.
General definitions and research areas are also provided for new researchers interested in this field.
Additionally, a list of key open-source projects is provided which may aid in quick development and deployment of new approaches as part of a complete autonomous stack. %

A great focus of this review is dedicated to the more complex problem of three-dimensional (3D) obstacle avoidance utilizing the full maneuvering capabilities of UAVs. 
Given the fact that many of the existing algorithms are developed considering general 3D kinematic models, they are applicable to vehicles moving in 3D including different UAV types and autonomous underwater vehicles (AUVs).
Similarly, some of the general approaches developed for AUVs are also reported here given that they are applicable to UAVs.
Planar approaches usually consider flights at a fixed altitude to simplify the obstacle avoidance problem.
These approaches may fail with the increased complexity of the environments where UAVs are needed; hence, utilizing 3D avoidance maneuvers is more desirable.
However, some planar approaches are also reported here where they can potentially inspire extensions to more general 3D methods.

This chapter is organized as follows.
A general overview of existing UAV types, classifications, autonomous navigation paradigms, and control structures is given in \cref{sec:ch2:types}.
Next, many motion planning and obstacle avoidance techniques are surveyed in \cref{sec:ch2:planning}.
After that, \cref{sec:ch2:control} presents different control methods used for UAVs along with adopted dynamical models for different UAV types.
Brief information about existing localization and mapping techniques is provided in \cref{sec:ch2:localization}.
Additionally, some open-source projects and useful tools for UAV development are provided in \cref{sec:ch2:open_source}.
Finally, concluding remarks are made in \cref{sec:ch2:conclusion}.

\section{UAV Types, Autonomy \& System Architectures}\label{sec:ch2:types}

\subsection{UAV Types}

UAVs can be classified based on several factors such as size, mean takeoff weight, control configuration, autonomy level, etc.
For example, classifications of UAVs based on size according to the Australian Civil Aviation Safety Authority (CASA) are:
\begin{itemize}[noitemsep,topsep=0pt]
	\item Micro: less than $250g$
	\item Very Small: $0.25-2Kg$
	\item Small: $2-25Kg$
	\item Medium: $25-150Kg$
	\item Large: More than $150Kg$
\end{itemize}
Large UAVs are mainly used in tactical missions and military applications; for more detailed classifications related to military use, see \cite{valavanis2015handbook}.
Based on control configurations, UAVs can be categorized into (see \cref{fig:ch2:UAV_types}):
\begin{itemize}[noitemsep,topsep=0pt]
	\item single-rotor \cite{cai2005design,cai2008systematic,godbolt2013experimental,cai2013design}: helicopter 
	\item multi-rotor \cite{mahony2012multirotor,phang2014systematic,segui2014novel,verbeke2014design,kamel2018voliro,rashad2020fully}: tricopter, quadrotor, hexacopter, etc.
	\item fixed-wing \cite{shkarayev2007introduction,keane2017small,zhao2020structural}
	\item hybrid \cite{cetinsoy2012design,ozdemir2014design,ke2018design,chipade2018systematic}
	\item flapping wings \cite{gerdes2012review,karasek2014robotic,gerdes2014robo,hassanalian2017novel,icsbitirici2017design,holness2018characterizing,yousaf2020recent}: Ornithopters and Entomopters
\end{itemize}
Single-rotor aerial vehicles such as helicopters have not been utilized much as UAV platforms.
Multi-rotors on the other hand have become the most popular choice in most civilian applications when it comes to maneuverability.
Multi-rotors such as quadrotors, hexacopters and octocopters with fixed-pitch rotors share similar dynamical model for control.
However, quadrotors are cheaper, faster and highly maneuverable while hexacopters and octocopters can offer better flight stability, fault-tolerance and heavier payload.
Multi-rotors with fixed-pitch rotors are underactuated systems where it is not possible to completely control all degrees of freedom.
There have been recent advances in developing omnidirectional tilt-rotor UAVs which are fully actuated in 6DOF such as \cite{rashad2020fully,kaufman2014design,kamel2018voliro,allenspach2020design}. 

Multi-rotors in general lie under the category of vertical-takeoff-and-landing (VTOL) vehicles with the ability to hover in place.
On contrary, fixed-wing UAVs are horizontal-takeoff-and-landing (HTOL) vehicles, and they cannot hover at a certain position due to nonholonomic constraints.
Instead, they have to loiter around areas of interest.
However, fixed-wing UAVs have advantages such as long endurance and simpler mechanical structure compared to multi-rotors.
Hybrid UAVs combine both configurations of fixed-wings and multi-rotors utilizing the advantages of both such as vertical takeoff \& landing, hovering and long endurance flights. 
However, these vehicles are still under development, and more research is needed for a reliable control especially when switching between flight modes.

Another type of UAVs are those with flapping wings inspired from birds (Ornithopters) and insects (Entomopters).
They are still under development due to their complex dynamics and anticipated power problems \cite{gerdes2012review}. %
Recently, new bio-inspired hybrid unmanned vehicles have also been proposed to handle navigation in different domains such as underwater-aerial vehicles \cite{stewart2018design,stewart2019dynamic,tan2019design} and aerial-ground vehicles \cite{kalantari2014modeling,mulgaonkar2016flying,yamada2017development,sabet2019rollocopter,atay2021spherical}.

\begin{figure}[!htb]
	\centering
	\begin{adjustbox}{minipage=\linewidth,scale=1.0}
		\begin{subfigure}[t]{0.48\textwidth}
			\centering
			\includegraphics[clip, width=\linewidth]{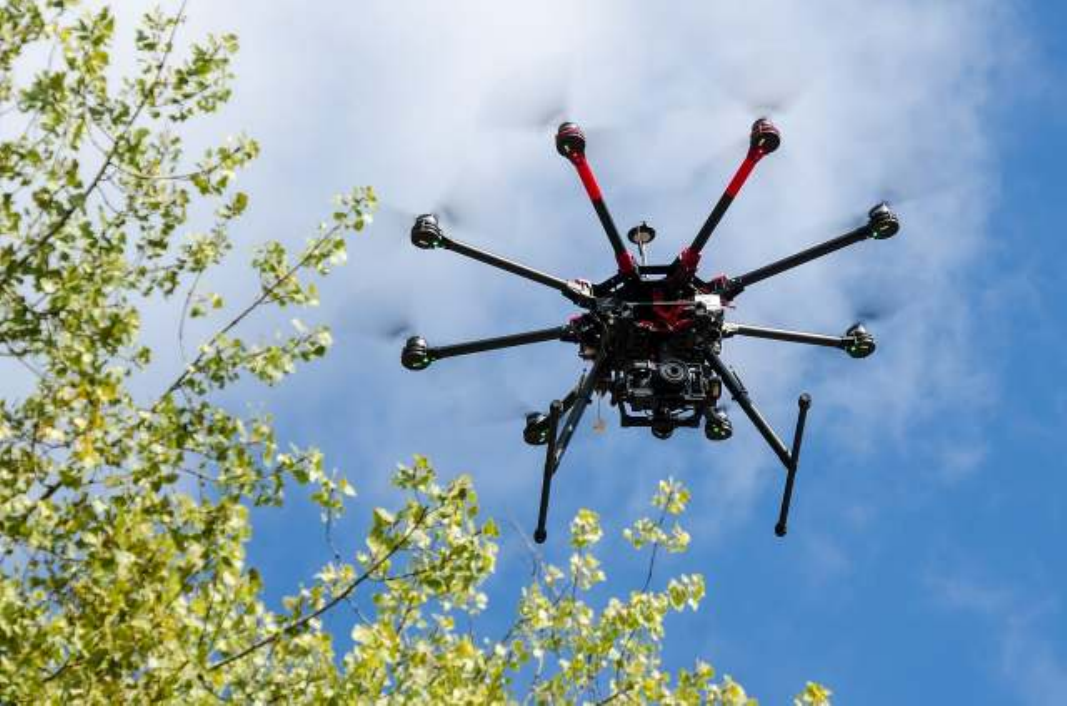} 
			\caption{Multirotor (Hexacopter)}
		\end{subfigure}
		\hfill
		\begin{subfigure}[t]{0.48\textwidth}
			\centering
			\includegraphics[clip, width=\linewidth]{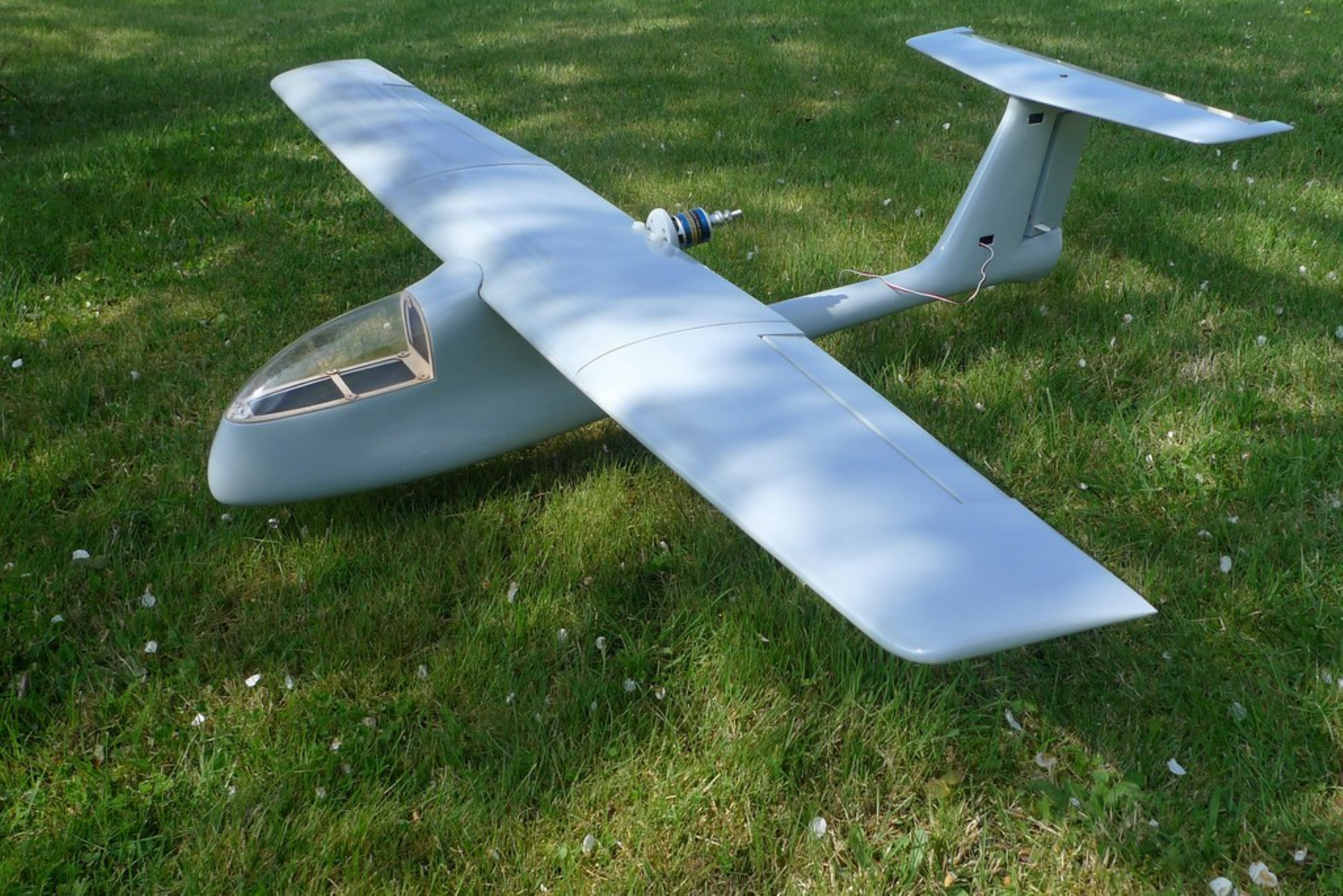}
			\caption{Fixed-Wing}
		\end{subfigure}

		\begin{subfigure}[t]{0.48\textwidth}
			\centering
			\includegraphics[clip, width=\linewidth]{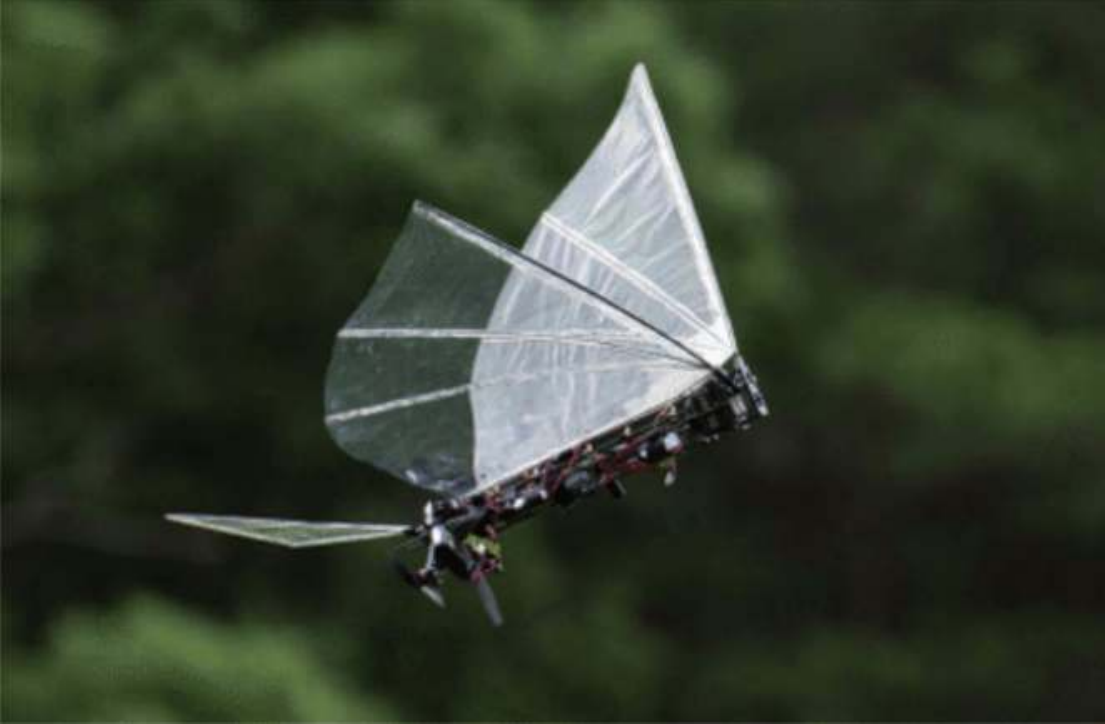} 
			\caption{Ornithopter flapping-wing UAV (Robo Raven) \cite{holness2018characterizing}}
		\end{subfigure}
		\hfill
		\begin{subfigure}[t]{0.48\textwidth}
			\centering
			\includegraphics[clip, width=\linewidth]{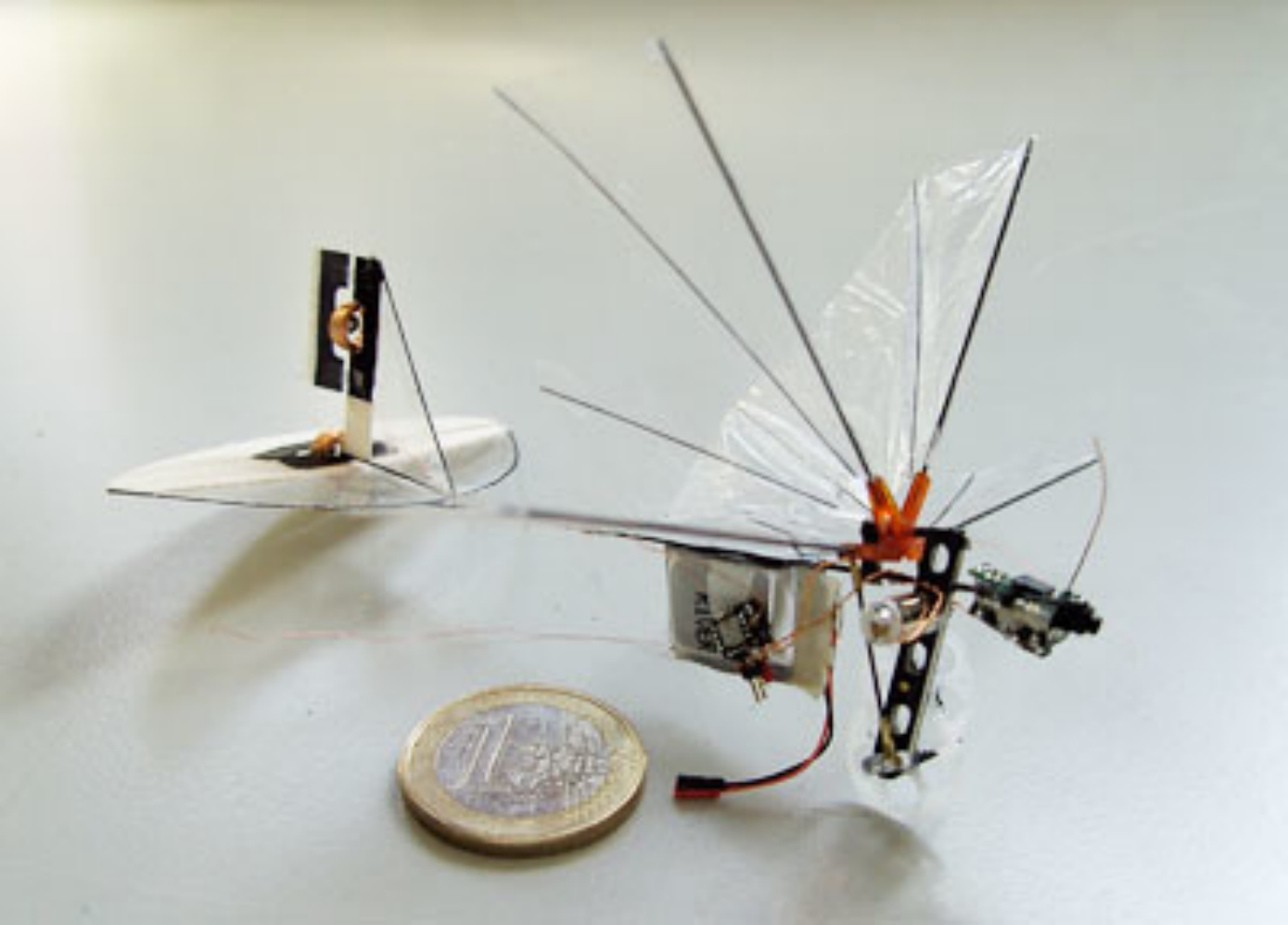}
			\caption{Entomopter flapping-wing UAV (DelFly Micro)}
		\end{subfigure}
		\end{adjustbox}
		\caption{Different UAV types based on control configurations}
		\label{fig:ch2:UAV_types}
\end{figure}

\subsection{Autonomy Levels}

Being completely able to carry out missions/tasks with least human interaction is an ultimate goal for unmanned aerial vehicles.
Different levels of autonomy can be achieved towards that goal depending on the complexity of tasks and whether a fully autonomous solution exists or not for that specific application.
These levels can be described based on the UAV mode of operation according to the National Institute of Standards and Technology (NIST) as follows \cite{huang2004autonomy}:
\begin{itemize}[noitemsep,topsep=0pt]
	\item \textbf{Fully autonomous:} UAV can carry out a delegated task/mission without human interaction where all decision making are made onboard based on sensors observations adapting to operational and environmental changes. 
	\item \textbf{Semi-autonomous:} A human operator is needed for high-level mission  planning and to interact during the movement when some decisions are needed that the UAV is not capable of making.
	The vehicle can maintain autonomous operation in between these interactions.
	For example, an operator can provide a list of waypoints to guide the vehicle where it can manage to move safely towards these positions with obstacle avoidance capability.
	\item \textbf{Teleoperated:} The remote operator relies on onboard sensors feedback to move the vehicle either by directly sending control commands or intermediate goals with no obstacle avoidance capabilities. This mode can be used in Beyond-Line-of-Sight (BLOS) applications.
	\item \textbf{Remotely controlled:} A remote pilot is needed to manually control the UAV without sensors feedback which can be used in Line-of-Sight (LOS) applications.
\end{itemize}

\subsection{Towards Fully Autonomous Operations}

Developing a fully autonomous UAV is a very challenging and complex problem.
A modular approach for both hardware and software architectural design is commonly adopted in the literature by most existing autonomous UAVs for a simpler and fault-tolerant solution.

On the hardware level, a UAV in a simplest form consists of a frame, a propulsion system and a \textit{Flight Control System (FCS)}.
The UAV size and propulsion system can be designed to support the needed payload and flight time as per mission requirements.
A propulsion system consists of a power source (ex. batteries, fuel cells, micro-diesels and/or micro gas turbines), electronic speed controllers (ESCs), DC Brushless motors, propellers and/or control surfaces (ailerons, flaps, elevators, and rudders).

A \textit{flight control system} is simply an embedded system consisting of the autopilot, avionics and other hardware directly related to flight control \cite{valavanis2015handbook}.
For example, main sensors critical to flight control include inertial measurement unit (IMU), barometers/altimeters, and GNSS (for outdoor use).
A computing unit (ex. a microcontroller), with real-time constraints, is usually used to implement the autopilot logic for a reliable and fault-tolerant flight control.
FCS is responsible for computing low-level control commands, estimating the vehicles states (altitude, attitude, velocity, etc.) based on sensors data, logging critical information for post-flight analysis and interfacing with higher level components either by wired connection or through other communication links.
Having a FCS is enough to allow teleoperation navigation mode where a remote operator can directly send waypoints and/or control commands.
It is also possible to achieve semi-autonomous operations in simple environments where reactive control methods with low computational cost are implemented within the autopilot to provide basic collision avoidance capabilities.

For more complex tasks/missions, it is required to have an onboard computer with higher processing power, namely a mission computer, to achieve fully autonomous operations given that a UAV with proper size and power is used.
In this structure, the mission computer usually implements the high-level mission and motion planning by relying on information interpreted from high-bandwidth sensory data in addition to running required processes with expensive computational cost.
It can also have its own communication link with a Ground Control Station (GCS) to stream high-bandwidth data such as images and depth point clouds.

Different kind of sensors can be used for advanced perception and planning which depends on the mission requirements, UAV available payload and power, and environmental conditions. 
Examples of commonly used sensors are cameras (monocular, RGBD, thermal, hyperspectral, etc.), range sensors (LiDAR, RADAR, ultrasonic) and other task-specific sensors.
A summary of the hardware and software components used with UAVs is shown in \cref{fig:ch2:uav_components}, and an example hexacopter is presented in \cref{fig:ch2:uav_setup} showing the system components.

The software architecture of the autonomous stack implemented on the mission computer typically consists of several processes/modules running in parallel and a messaging middleware is used to interchange messages between processes on the mission computer or with other computers on the same network (for example, in multi-UAV systems).
Some of these modules are related to the mobility aspects that can ensure a safe navigation which can be common among most UAV systems and other autonomous mobile robots.
Other modules would implement logic that is application-specific such that the UAV can autonomously perform the delegated task.
For example, in fire-fighting applications, a UAV is needed to autonomously locate and extinguish fires which requires additional modules to be included within the autonomous stack based on computer vision and extinguisher control mechanism.
In many remote sensing applications, the main task could be only collecting data whether images or information from other onboard sensors to be analyzed post-flight.
Mobility-related modules are the core components needed to ensure collision-free navigation in all applications.
By considering only the mobility-related components, popular modular structure for autonomous navigation is adopted in the literature which consists of the following modules/subsystems (\cref{fig:ch2:uav_modular_design}):
\begin{itemize}[noitemsep,topsep=0pt]
	\item Perception
	\item Localization \& Mapping
	\item Motion Planning \& Obstacle Avoidance
	\item Control
\end{itemize}
This modular approach of addressing the navigation problem offers a flexible expandable design with fault-tolerance.
However, other possible designs can also be seen for less complex tasks or for vehicles with very limited resources by coupling control and planning without the need for localization and mapping in a reactive fashion as will be shown in the next section.

\begin{figure}[!htb]
	\centering
	\includegraphics[clip, width=\linewidth]{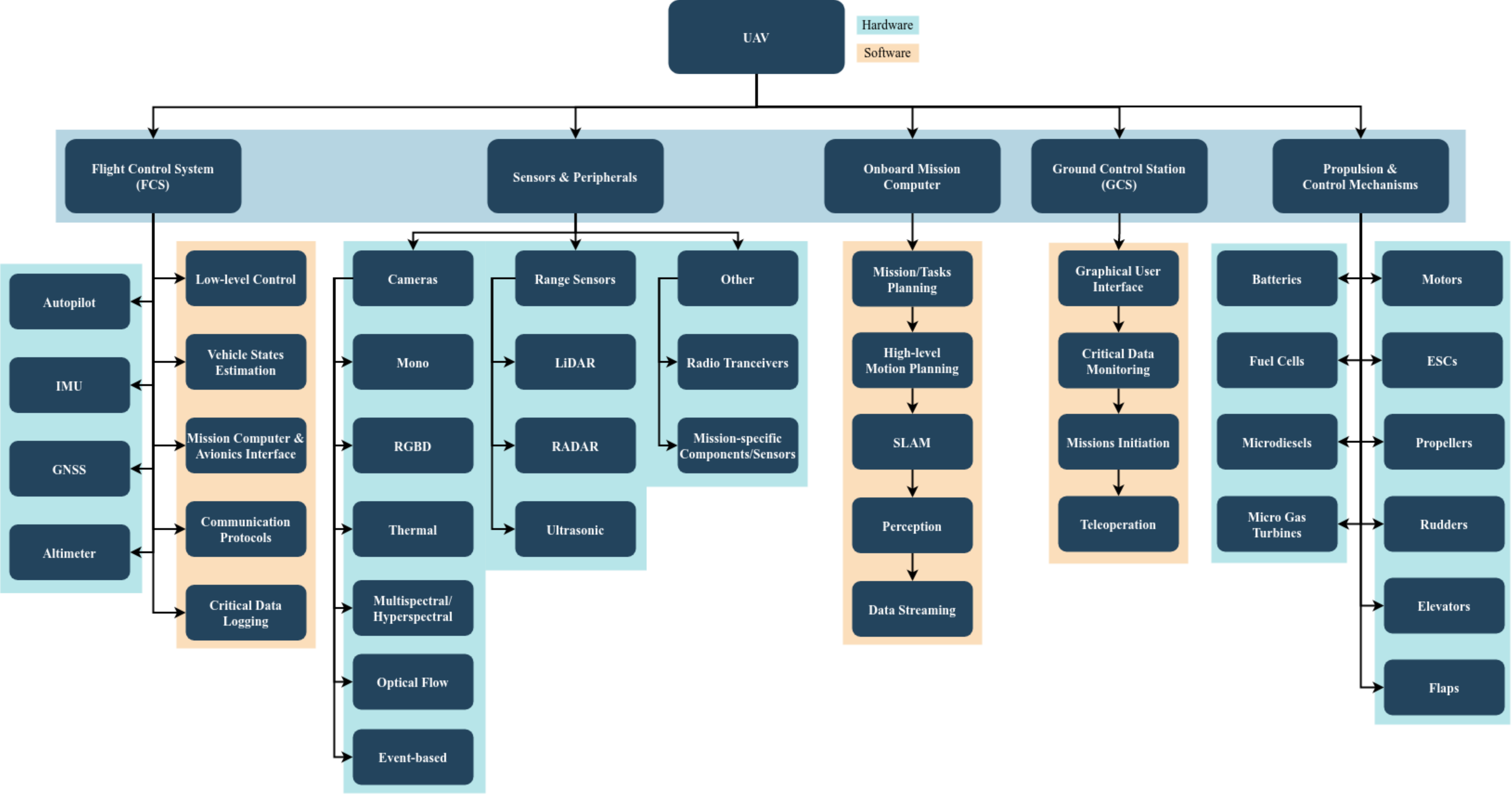}
	\caption{System architecture including hardware and software components commonly used with UAVs}
	\label{fig:ch2:uav_components}
\end{figure}

\begin{figure}[!htb]
	\centering
	\includegraphics[clip, width=\linewidth]{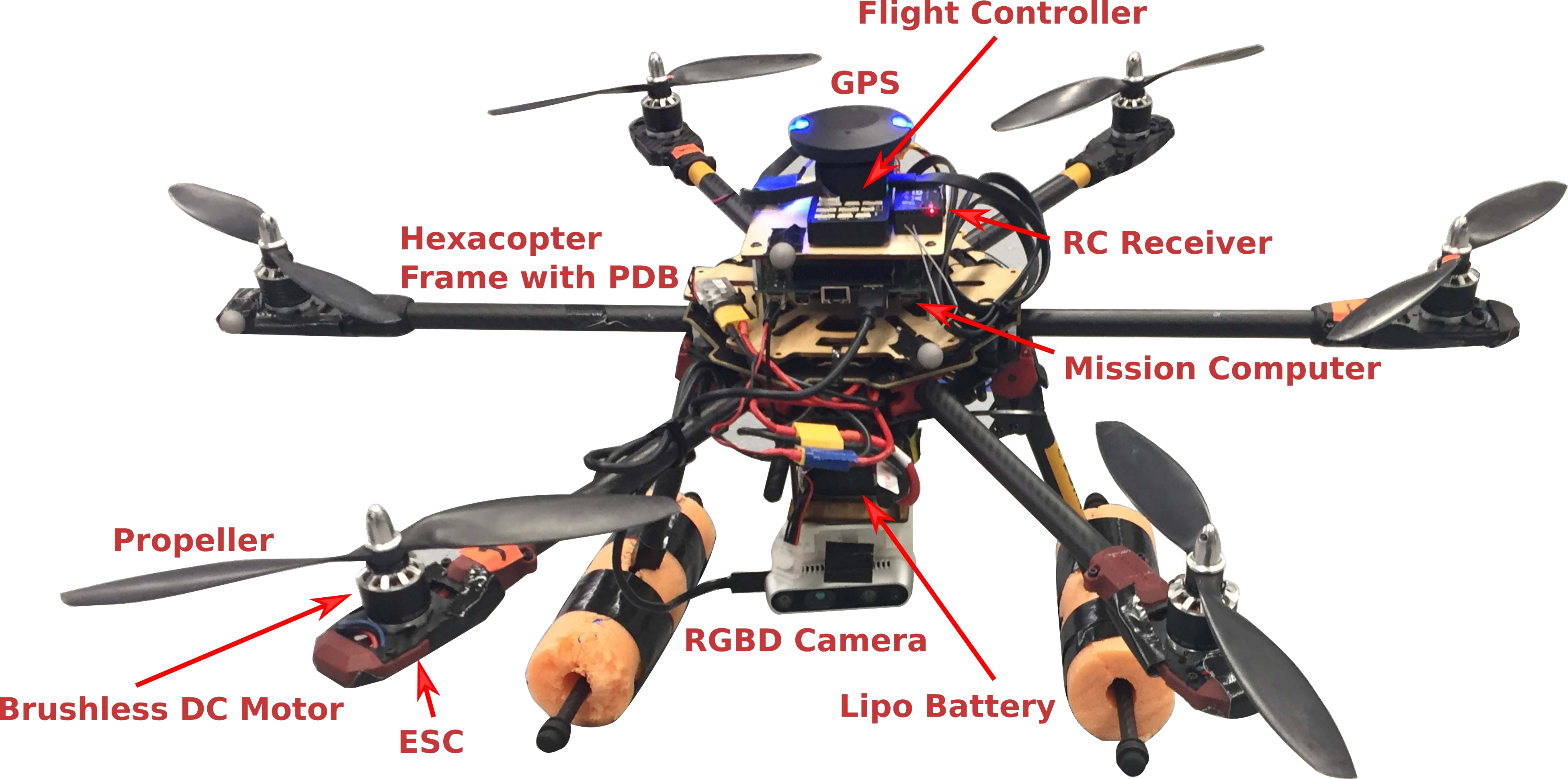}
	\caption{Example UAV setup of a hexacopter UAV type with FCS, mission computer and an RGBD sensor}
	\label{fig:ch2:uav_setup}
\end{figure}

\begin{figure}[!htb]
	\centering
	\includegraphics[clip, width=0.25\linewidth]{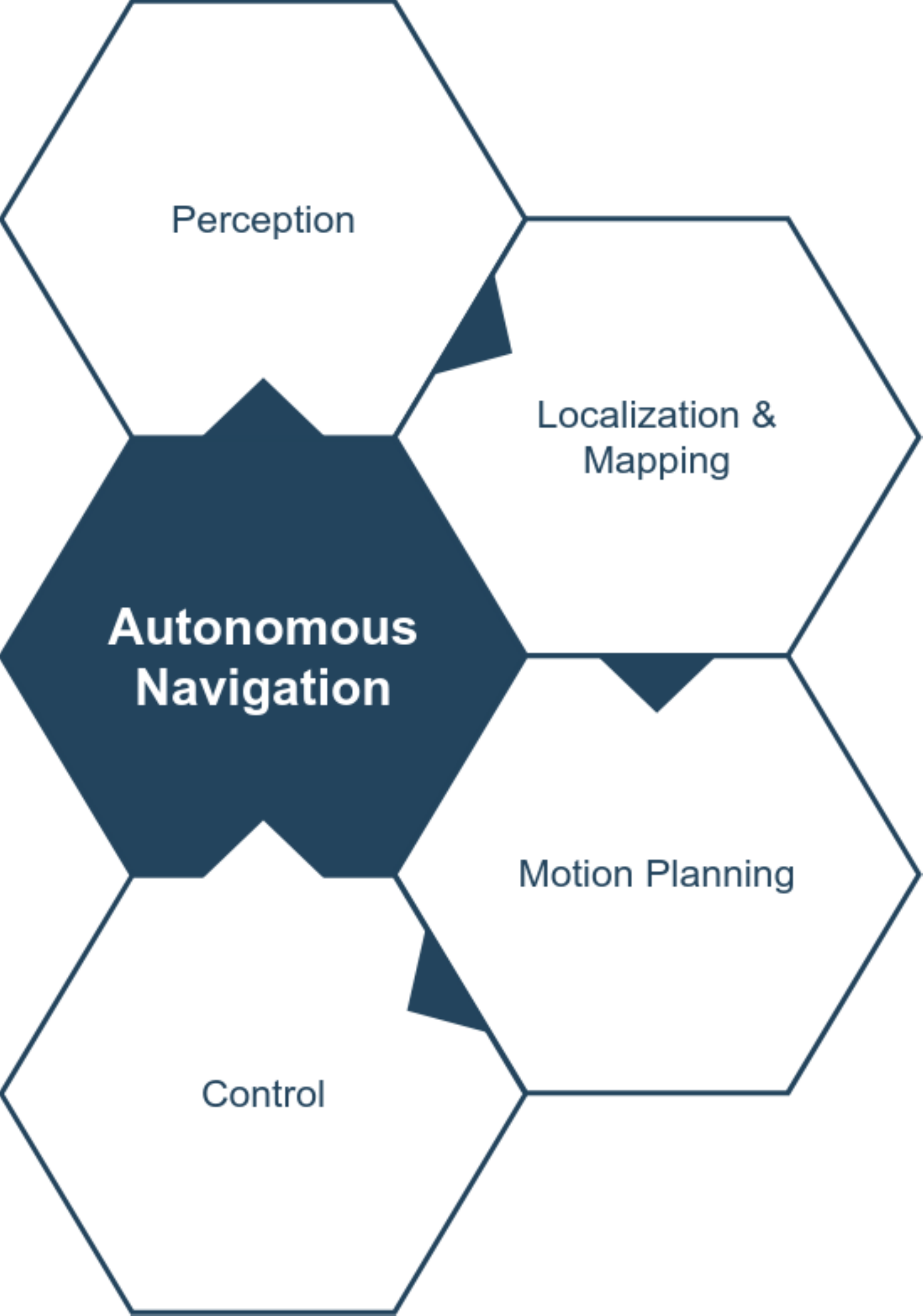}
	\caption{Modular software structure for UAV navigation stack}
	\label{fig:ch2:uav_modular_design}
\end{figure}

\section{UAV Navigation Techniques} \label{sec:ch2:planning}

A crucial part for autonomous navigation is to ensure that the vehicle can move while avoiding collisions with its surroundings.
This is a general problem in robotics which can be addressed by motion planning or reactive control. %
Generally, the motion planning problem can roughly be described as trying to find collision-free trajectories between initial and final \textit{configurations} while satisfying some kinematic and dynamic constraints.
A \textit{configuration} in this case refers to the position and orientation of a mobile robot where a \textit{configuration space} is the set of all possible configurations.
The dimension of the configuration space equals the number of controllable degrees of freedom.
For example, planning motions for quadrotors can be done in a space of their 3D position coordinates and heading (yaw) angle while motions for omnidirectional (fully actuated) UAVs can be planned considering all translational and rotational states (6DOF).

In a decoupled approach, the UAV control system can execute motions planned by a high level system, namely a motion planner, where these plans need to be feasible and safe (i.e. collision-free).
In other implementations, the motion planning can be coupled with the control system design where reactive control laws are developed to directly generate obstacle avoidance maneuvers based on sensors measurements.
Some refer to those in the literature in loose terms as obstacle/collision avoidance methods.
The term collision avoidance is mostly used by the UAV research society in referring to avoiding collisions with other cooperative or non-cooperative aerial vehicles (i.e. dynamic obstacles) sharing the same flight space while the term obstacle avoidance may be used more often in indoors, industrial and urban environments where the flight space is filled with other static/dynamic obstacles.
That is, high-altitude flights commonly adopt the collision avoidance terminology and low-altitude flights may use the more general obstacle avoidance term.
This terminology is also adopted more often in multi-UAV systems to differentiate between methods that only consider collision avoidance among the vehicles within the system to those that also consider obstacle avoidance in obstacle-filled environments.

\subsection{Navigation Paradigms}

Existing navigation techniques for autonomous mobile robots in general can be classified into \textit{deliberative} (global planning), \textit{sensor-based} (local planning) or \textit{hybrid} (see \cref{fig:ch2:Nav_paradigms}).
\textit{Deliberative} approaches require a complete knowledge of the environment represented as a map.
Global path planning methods can then be used to search for safe and optimal paths.
Classical path planning algorithms can be categorized into: 
\begin{itemize}[noitemsep,topsep=0pt]
	\item Search-based (ex. Djikstera, $A^*$, $D^*$, etc.)
	\item Potential Field (ex. navigation function, wavefront planner, etc.)
	\item Geometric (ex. cell decomposition, generalized Voronoi diagrams, visibility graphs, etc.)
	\item Sampling-based (ex. PRM, RRT, RRT*, FMT, BIT, etc.)
	\item Optimization-based (PSO, genetic algorithms, etc.)
\end{itemize}
These methods can find optimal paths if one exists at the expense of requiring full knowledge about the environment which is not suitable in unknown and dynamic environments.
For more detailed information about such planning methods, the reader is referred to \cite{lavalle2006planning}.

On the other hand, \textit{sensor-based} methods rely directly on current sensors measurements or a short history of the sensors observations (i.e. a local map) to plan safe paths in real-time.
The planning horizon can typically be very short for some period ahead of time or it could be done at each control update cycle in a receding horizon fashion.
A very special class of such methods is reactive approaches where sensors measurements are coupled to control actions either directly \cite{hoy2015algorithms} or after light processing \cite{tobaruela2017reactive}.
Sensor-based methods offer solutions with great computational performance which makes them favorable for navigation problems in unknown and dynamic environments.
These methods do not generate optimal solutions as they do not utilize the information acquired about the environment during the motion.
However, it is common to sacrifice optimality for computation speed especially when considering micro UAVs with fast dynamics and limited computing power.
Sensor-based methods are also prone to getting stuck sometimes due to local minimum.

\textit{Hybrid} approaches combine both deliberative and sensor-based methods to generate a more advanced navigation behavior benefiting from the advantages of both classes.
It relies on low-latency local planning or reactive control to handle unknown and dynamic obstacles while using a high-level global planning method to guide the vehicle utilizing accumulated knowledge about the environment.

\begin{figure}[!htb]
	\centering
	\includegraphics[clip, width=0.75\linewidth]{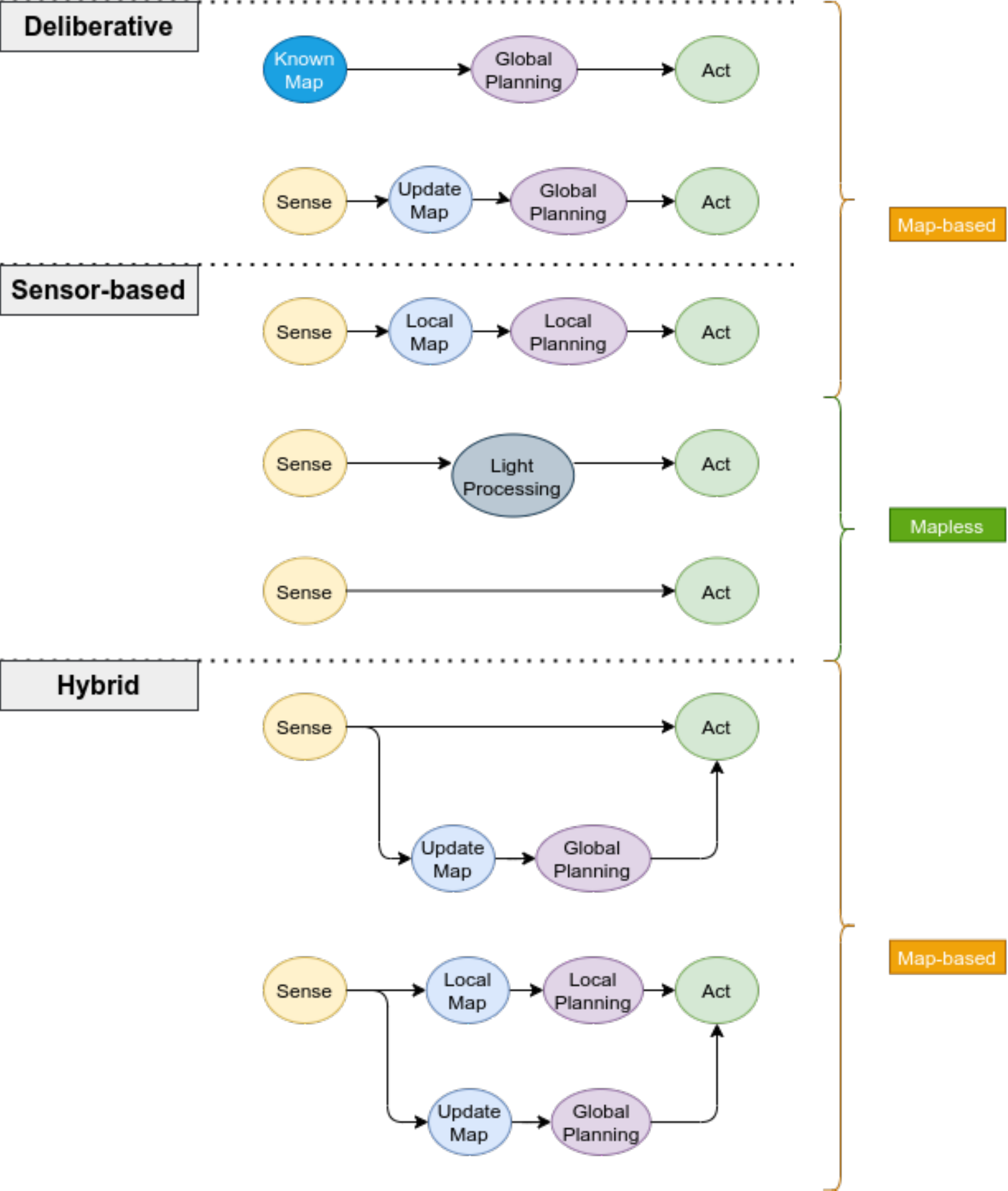}
	\caption{Different paradigms adopted for autonomous navigation from a high-level prospective}
	\label{fig:ch2:Nav_paradigms}
\end{figure}

\subsection{Map-based vs Mapless}

Navigation methods can alternatively be classified into \textit{map-based} or \textit{mapless} approaches \cite{desouza2002vision,bonin2008visual}.
This classification highlights the computational complexity and whether they rely on accurate localization and mapping or not.

\textit{Map-based} strategies require a local (or global) map representation of the environment which can be provided before navigation starts (deliberative approaches) or it can be built during navigation based on sensors measurements (some sensor-based approaches). 
Safe paths can then be found using local/global planning algorithms based on either \textit{metric} or \textit{topological} maps.
Therefore, such methods are demanding in terms of computational resources, planning time and memory requirements which is highly dependent on the environment size and its complexity.
Nevertheless, local map-based methods are very commonly used with UAVs to generate locally optimal solutions ought to technological advances where it is possible to have mini light-weight computers with high processing power onboard.

On contrary, \textit{mapless} strategies (reactive methods) rely directly on sensors measurements to make motion decisions without the need for maintaining global maps and accurate localization (except when using GNSS).
Hence, control actions can be directly coupled with either visual clues from image segmentation, optical flow or features tracking in subsequent frames in vision-based methods \cite{bonin2008visual} or interpreted information from range sensors and 3D point clouds such as relative-distance to obstacles, gaps or bounding objects.
These methods offer the best computational complexity for obstacle avoidance as control is coupled with planning through light processing of sensors data which can provide very quick reflex-like reactions to obstacles.
Some of the challenges when developing purely reactive navigation methods is the possibility to get stuck in local minimums, and limited field-of-view (FOV) may affect the overall performance.
Also, fast reactions to obstacles achieved by reactive methods come at the cost of generating non-optimal solutions in some cases due to the fact that they do not utilize information about previously sensed obstacles.

\subsection{Overall Navigation Control Structure}

Form a control prospective, different structures were adopted in the literature to deal with the high complexity of the navigation problem.
As mentioned before, the most common structure is based on decoupling planning and control due to its simplicity in design.
One can categorize the existing methods into seven different control structures as shown in \cref{fig:ch2:Nav_control}.
Structures I-III show the general decoupled approach where motion planning and control are decoupled while structure IV is used by reactive approaches which directly couple planning and control.
Structures V-VII correspond to hybrid approaches which can be a combination of structures I-IV.

In decoupled approaches, some motion planning methods simplify the problem by subdividing it into two stages.
The first stage simply tries to find a collision-free geometric path satisfying kinematic constraints.
Constraints can be considered directly in the planning algorithm, or the whole process can be further decomposed into finding a safe path first ignoring such constraints then applying path smoothing techniques to satisfy the kinematic constraints.
Then, it is followed by a trajectory generation stage to obtain feasible trajectories satisfying dynamic constraints.
Other approaches tackled this problem by directly planning trajectories using optimization-based methods which is a harder problem to solve.

In order to differentiate between different motion planning algorithms, the difference between \textit{path planning} and \textit{trajectory planning/generation} should be understood.
\textit{Path planning} is the process of finding a geometric collision-free path between starting and end positions without a timing law.
In \textit{trajectory planning}, a timing law is associated with the planned collision-free geometric path represented as a \textit{trajectory} which includes information about higher derivatives (i.e. velocity, acceleration, etc.).
Trajectories are mostly planned to satisfy dynamic constraints which can then be passed to a control system adopting a trajectory tracking control design.
One of the simplified approaches for trajectory planning is by combining a path planning algorithm with a trajectory generation method.
For example, a path planner could be used to generate a smooth geometric path which is then passed to a trajectory planner to generate a feasible trajectory characterized by position, velocity and acceleration satisfying some dynamic constraints.

In the following subsections, we will survey recent works adopting local motion planning or reactive paradigms in accordance with the considered control structures.

\begin{figure}[!htb]
	\centering
	\includegraphics[clip, width=0.75\linewidth]{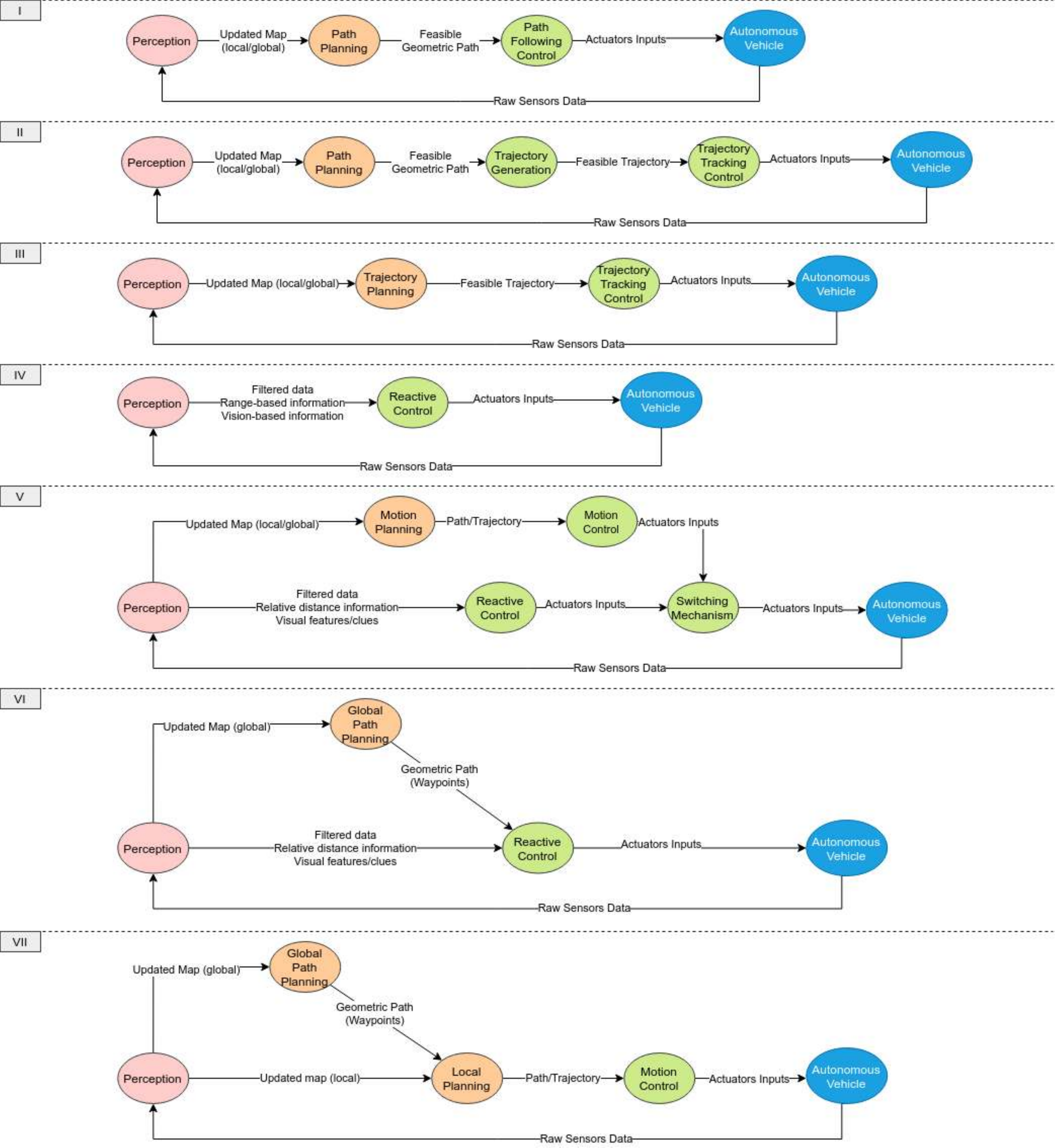}
	\caption{Different autonomous navigation control structures}
	\label{fig:ch2:Nav_control}
\end{figure}

\subsection{Local Path Planning}
A number of existing methods treat the problem through applying path planning algorithms locally to find feasible geometric paths assuming a general 2D/3D kinematic model.
Examples of these methods include sampling-based \cite{Georges2017,yang2010efficient,lin2017sampling,schmid2020efficient}, graph-based \cite{liu2016high,sanchez2019real} and optimization-based \cite{miller20113d,chen2016uav,roberge2012comparison}.
These methods are developed at a high level considering only kinematic constraints assuming a low-level path following controller exists to execute the planned paths while satisfying the dynamic constraints similar to control structure I.
They can also be combined with a trajectory generation method similar to structure II.

Adopting sampling-based methods helps addressing the high dimensionality problem of the 3D search space to generate collision-free paths in real-time which was considered in some of these works.
In \cite{yang2010efficient}, a planning algorithm was proposed for rotary-wing UAVs.
It decouples the motion planning problem into two stages, namely path planning and path smoothing, which is a common approach to simplify the problem especially when nonholonomic constraints needs to be satisfied (ex. for fixed-wing UAVs); for example, see \cite{roberge2018fast,sahingoz2014generation}.
A sampling-based planning algorithm, namely RRR, was adopted to search for collision-free paths followed by a path smoothing algorithm such that the smoothed path can satisfy curvature continuity and nonholonomic constraints.
An analytical solution for the adopted path smoothing algorithm was also presented in \cite{yang2010analytical} considering smoothing of 3D paths.
An explicit path-following model predictive control (MPC) was used in \cite{yang2010efficient} to ensure that the vehicle can track the planned paths, and it was formulated based on a linear model of the motion with no constraints.
Another real-time path planning algorithm was suggested in \cite{Georges2017} based on chance-constrained rapidly exploring random trees (CC-RRT) for safe navigation in 2D constrained and dynamic environments.
The motion planning relies on a proposed clustering-based trajectory prediction to model and predict future behavior of dynamic obstacles.
This motion prediction algorithm combines Gaussian processes (GP) with the sampling-based algorithm RRT-Reach to cope with GP shortcomings such as the high computational cost.
Another RRT variant, namely Closed-Loop RRT, was used in \cite{lin2017sampling} to handle navigation in 3D dynamic environments.
In \cite{schmid2020efficient}, a sampling-based approach was adopted in an informative path planning framework where the goal is to generate safe paths that can maximize the information gathered during movement which is important in exploring unknown environments. 

Some other works formulated the 3D path planning problem as an optimal control problem such as \cite{miller20113d,chen2016uav}.
The authors of \cite{miller20113d} formulated the optimal control problem in 2D to satisfy time and risk constraints as the 3D optimal control problem would be harder to solve.
Then, a 3D path was approximated in a final stage based on a terrain height map.
On contrarily, the method in \cite{chen2016uav} presented a path planner based on a 3D optimal control problem formulation where a model based on artificial potential field (APF) is used.  
Other optimization-based methods considered parallel genetic algorithm and particle swarm optimization as in \cite{roberge2012comparison}.

\subsection{Local Trajectory Planning}

A more popular approach in addressing the local planning problem for UAVs is through planning feasible trajectories to further satisfy dynamical constraints and optimality of path smoothness with respect to higher derivatives enabling high-speed and aggressive flights.
Generating smooth trajectories is important for high-speed applications to avoid sudden changes in actuators' accelerations and mechanical vibrations problems \cite{gasparetto2015path}.
Therefore, it can be seen from the literature that control structures II-III are commonly used for aggressive maneuvers whether by combining path planning and trajectory generation as in \cite{mellinger2011minimum,richter2016polynomial,Faculty2016,Oleynikova2016,liu2016high,liu2017planning,watterson2015safe,liu2017robust,spedicato2017minimum} or by direct trajectory planning as in \cite{ryll2019efficient,tordesillas2019fastrap,tordesillas2020mader,tordesillas2021panther,chen2021computationally,ye2020tgk,bucki2020rectangular,ji2020mapless,quan2020eva,lee2021autonomous}.

A trajectory generation method for quadrotors was suggested in \cite{mellinger2011minimum} to find minimum-snap trajectories between specified keyframes provided by a high-level planner with corridor-like constraints representing convex decompositions of free space.
This idea was adopted in several research works such as \cite{Faculty2016,liu2016high,liu2017planning,Mohta2018,watterson2015safe}.
The work \cite{Faculty2016} formulated the trajectory generation as a mixed-integer optimization problem to generate minimum-jerk polynomial trajectories constrained to convex collision-free regions with other constraints on velocity and acceleration. %
The authors have also proposed a way to generate the safe convex regions using Iterative Regional Inflation by Semi-definite programming (IRIS) which was initially proposed in \cite{deits2015computing}.

Similarly, a real-time trajectory generation method was proposed in \cite{liu2016high} for quadrotors suggesting another way of determining such safe convex regions.  %
It relies on online built voxel maps and short range planning algorithm where it uses $A^*$ search method to find a safe path in a discretized graph representation of the voxel map.
The generated path is then inflated using a set of polygons specifying the collision-free regions around the path resulting in corridor-like constraints.
This approach was further developed in \cite{liu2017planning} to provide a more robust and efficient solution which was implemented in \cite{Mohta2018} showing a complete system for autonomous flights of multi-rotors in GPS-denied indoors environments.
A minimum-jerk trajectory is then computed similar to the approach in \cite{watterson2015safe} where a convex optimization problem is formulated by confining the trajectory spline segments to be within specified flight corridors with constraints to ensure the continuity of the trajectory splines.
This approach avoids the more complex non-convex problem formulation that results when considering the trajectory planning problem with constraints corresponding to collisions with obstacles.

The works \cite{liu2017planning,watterson2015safe} adopt a receding horizon planning paradigm to plan trajectories over finite time intervals with safe stopping policies in case of planning failure.
The works \cite{liu2016high} and \cite{watterson2015safe} adopt a short-range planning paradigm where a set of candidate goals within the current sensing FOV are used for trajectory planning until the global goal is reached.
In contrast to expressing collision-free constraints as convex decomposition of free space, \cite{tordesillas2020mader} suggested a different approach to efficiently handle dynamic and cluttered environments by using planes to represent the separation between the polyhedral representations of each trajectory segment.

Another optimization-based method was suggested in \cite{richter2016polynomial} as an extension to \cite{mellinger2011minimum} by formulating the minimum-snap trajectory generation problem as an unconstrained quadratic program (QP).
This trajectory generation can be combined with a 3D kinematic planner to generate safe geometric paths where the authors have considered the RRT* planner in their implementation.
Additional iterative steps are needed if the generated trajectories were found in collision where the optimization problem is repeatedly resolved using safe intermediate waypoints until a collision-free trajectory is obtained.

In contrast to optimization-based trajectory generation where dynamic constraints are considered in the optimization problem, motion primitives were considered as a simpler computationally efficient way to generate collision-free trajectories in 3D in some works such as \cite{mueller2015computationally,paranjape2015motion,lopez2017aggressive,tordesillas2019real,ryll2019efficient,gonzalez2020autonomous}.
Motion primitives offer a light-weight algebraic solution to the problem which can then be checked for dynamic constraints violation.
The low-computational cost of such methods allows for high-speed and aggressive movements since it is possible to quickly search over a large number of motion primitives to achieve a certain goal \cite{mueller2015computationally}.
Motion and sensing uncertainty were also considered in some methods at planning time such as \cite{gonzalez2020autonomous}.

Generally, considering dynamic constraints and constraints due to collisions with obstacles in the planning problem makes it harder to solve in real-time causing potential convergence problems.
This is known as kinodynamic planning which is a motion planning problem in a higher dimensional space with differential and obstacle constraints \cite{lavalle2001randomized}.
Some approaches have considered this idea rather than decoupling the path planning and trajectory generation such as \cite{liu2017robust,spedicato2017minimum,lindqvist2020nonlinear}. 
The work \cite{liu2017robust} addressed the trajectory planning problem as a 3D Optimal Control Problem (OCP) with soft obstacle avoidance constraints on a non-convex quadratic optimization problem.
To reduce the computational burden of solving the formulated OCP, constraints based on a reduced number of obstacles, the most threatening ones, were considered.
In \cite{spedicato2017minimum}, trajectory planning and control of quadrators in constrained environments was achieved through a formulation as a minimum-time optimal control problem with several constraints on states and inputs, and it was based on the full 6DOF dynamical model.
The general problem was reformulated using a change of coordinates and state-input constraints relaxation to reduce the high computational complexity of the original constrained problem.

The motion planning problem for multi-rotors among dynamic obstacles was tackled in \cite{lindqvist2020nonlinear} at the control level using a nonlinear model predictive controller (NMPC) based on a cost function in terms of the tracking error, input cost and input smoothness cost.
Addressing path planning using a pure NMPC structure is challenging as it is computationally expensive to solve nonconvex optimization problems in real-time.
Therefore, \cite{lindqvist2020nonlinear} considered a new solver for such nonlinear nonconvex problems known as Proximal Averaged Newton for Optimal Control (PANOC) \cite{sathya2018embedded,stella2017simple} to make the solution more appealing.
There exist an open-source implementation of this solver which is OpEn (Optimization Engine) \cite{sopasakis2020open}.
A similar approach was also considered in \cite{mansouri2020unified}.

Formulating the 3D trajectory planning as a Quadratic Program (QP) was also considered in \cite{Oleynikova2016,tordesillas2019fastrap,ye2020tgk}.
In \cite{Oleynikova2016}, the optimization-based method was proposed to generate locally optimal safe trajectories for multirotor UAVs using high-order polynomial splines. %
The optimization problem was formulated to minimize costs related to higher order derivatives of the trajectory (ex. snap) and collisions with the environment. 
The objective function computes collision costs using an Euclidean Signed Distance Field (ESDF) function with a voxel-based 3D local map of the environment.
The optimization problem was formulated as an unconstrained quadratic program (QP) so that it can be solved in real-time.
The work \cite{tordesillas2019fastrap} adopted a mixed-integer quadratic program formulation allowing the solver to choose the trajectory interval allocation, and the time allocation is found by a line search algorithm initialized with a heuristic computed from the previous replanning iteration.
Another kinodynamic planner for quadrotors was introduced in \cite{ye2020tgk} using a sampling-based method in combination with an additional optimization-based stage using a sequence of QPs to refine the smoothness and continuity of the obtained trajectory.

Recently, there have also been some growing interest in the field of perception-aware trajectory planning considering perception constraints in the planning problem.
The developed methods in this area takes into account perception quality to minimize state estimation uncertainty \cite{zhang2018perception} which can be done by keeping specific objects/features in the vehicle's sensing FOV \cite{tordesillas2021panther}.
Examples of such methods can be seen in \cite{zhang2018perception,tordesillas2021panther,Falanga2018pampc,murali2019perception,spasojevic2020perception,sheckells2016optimal}.

\subsection{Reactive Methods}

Most of the existing reactive methods are developed at a higher-level considering different abstractions of UAV 2D/3D kinematic models with velocities/accelerations as control inputs. 
Collision avoidance can be ensured rigorously for some of these methods under certain technical assumptions \cite{hoy2015algorithms} in contrast to other motion planning methods.
For example, the design may rely on assumptions made about obstacles (shape, size, velocity profile, etc.), environment (static or dynamic) and sensing capabilities (vision-based, distance-based, FOV, range, etc.).
Many of the existing reactive methods are planar which can generally be applied to various types of mobile robots including UAVs moving at a fixed altitude; examples of such methods include \cite{toibero2009stable,teimoori2010biologically,matveev2011method,matveev2012real,savkin2013simple,matveev2015safe,choi2017two,mcguire2017efficient,matveev2015globally}.
Adopting these methods for vehicles that can navigate in 3D, such as UAVs and AUVs, becomes less efficient.
Therefore, there has been a growing interest in developing 3D reactive navigation methods which will be the main focus in this section in addition to some of the 2D vision-based approaches sufficiently suitable for UAVs in some applications.

A number of geometric-based reactive collision avoidance methods focused on non-cooperative scenarios (i.e. dynamic environments) for fixed-wing UAVs or vehicles with nonholonomic constraints adopting the idea of collision cones such as \cite{mujumdar2011reactive,wang2018strategy,lin2020fast,Belkhouche2012,belkhouche2017reactive,wiig20203d}.
Many of these approaches use linear or nonlinear guidance laws to align the velocity vector (i.e. controlling heading and flight path angles) in a certain direction while keeping a constant relative distance to the obstacle to avoid collisions.
The work \cite{mujumdar2011reactive} proposed two guidance laws for collision avoidance in static and dynamic environments based on collision cones where the vehicle is guided to track the surface of a safety sphere around the obstacle.
Similarly, the works \cite{wang2018strategy,lin2020fast} adopted collision cones to safely guide fixed-wing UAVs in 3D dynamic environments.
In \cite{Belkhouche2012}, a 3D reactive navigation law was proposed based on relative kinematics between the vehicle and obstacles decoupled into horizontal and vertical planes.  %
Obstacles were modeled as spheres, and collision cones were used for obstacle avoidance.
This method was further developed in \cite{belkhouche2017reactive} where a reactive optimal approach was suggested for motion planning in dynamic environments. %

A different implementation of collision cones was done in \cite{wiig20203d} for AUVs; however, the same idea can be applied to UAVs as well.
No assumptions were made about the obstacle shape; however, obstacles were modeled as spheres for the mathematical development, and it was only assumed that the collision cone to the obstacle can be interpreted from sensors measurements.
This method relied on maintaining a constant avoidance angle from a nearby obstacle while ensuring a minimum relative distance is achieved.
The same problem was addressed differently in \cite{wu2021obstacle} where a new nature-inspired 3D obstacle avoidance method for AUVs were developed based on concepts from fluid dynamics.

Another class of 3D reactive methods modified the Velocity Obstacle (VO) approach to allow navigation in dynamic environments such as \cite{yang20133d,tan2020three}.
In \cite{yang20133d}, the proposed method relied on decoupling the 3D motion to achieve constant relative bearing and elevation in both the horizontal and vertical planes simultaneously.
It was assumed that the desired relative bearing and elevation with respect to the non-cooperative vehicle can be estimated using onboard cameras.
Also, \cite{tan2020three} proposed an improvement to the Velocity Obstacle (VO) method to handle 3D static and dynamic environments.

Artificial potential field was also considered in some approaches to handle navigation in dynamic environments as in \cite{zhu20163d,roussos20103d,santos2017novel}.
The approaches \cite{zhu20163d,roussos20103d} developed modified APF methods for 3D nonholonomic vehicles while the work \cite{santos2017novel} designed an APF reactive controller for quadrotors.
The approach in \cite{santos2017novel} combines obstacle avoidance control law based on artificial potential field with a trajectory tracking control law using on a null-space-based scheme on the kinematic level where the obstacle avoidance input has the higher priority.
A dynamic controller was then proposed to generate low-level input to ensure that velocities generated by the kinematic controller can be tracked.

The authors of \cite{hrabar2011reactive} suggested a different 3D navigation approach for rotorcraft UAVs where an escape waypoint is determined whenever an obstacle is detected.
Obstacle detection was done by extending a cylindrical safety volume from the UAV position along the movement direction in 3D local map representation of the environment.
The escape waypoint is determined by performing a search through a set of concentric ellipsoids around the detected obstacles by iteratively incrementing the ellipses radii until a safe escape point is found.
Due to the low complexity of the algorithm, it belongs to the reactive class.

In \cite{nguyen2018real}, a computationally-light approach was suggested through real-time deformations of a predefined 3D path based on the intersection between two 3D surfaces determined according to the free space and obstacles.
Either one or both surfaces are modified in the presence of obstacles such that the intersection between the two surfaces provides a path around the obstacle.
To that end, proper functions need to be carefully chosen to represent the obstacle where the authors considered a Gaussian function whose parameters require proper tuning.
A path following controller was also proposed based on multirotor full dynamical model where a cascaded approach for control was adopted for position and attitude.
This was further implemented in \cite{iacono2018path} where a depth camera was used to detect obstacles.
Another 3D reactive method adopting the idea of real-time deformable paths around dynamic obstacles was also proposed in \cite{elmokadem2020control}.

A number of reactive methods considers vision-based structure such as \cite{Oleynikova2015,bucki2020rectangular,ji2020mapless,lee2021autonomous,potena2019joint}.
In \cite{Oleynikova2015}, a vision-based reactive approach was proposed for quadrotor MAVs based on embedded stereo vision. %
Obstacles are detected from stereo images based U-V disparity maps.
A short-term local map is built for planning purposes representing approximations of detected obstacles as ellipsoids.
Hence, no accurate odometery is needed since no global map is built.
The obstacle avoidance algorithm is mainly 2D to find the shortest path along obstacles' edges.
On the other hand, the works \cite{bucki2020rectangular,ji2020mapless,lee2021autonomous} proposed 3D mapless vision-based trajectory planning methods using depth images which can be considered reactive as the planning horizon becomes very short.
A different vision-based 3D reactive method was proposed in \cite{potena2019joint} based on NMPC for quadrotors navigating in dynamic environments.

Some other methods relied on LiDAR sensors such as \cite{mansouri2020unified} which combined 3D collision avoidance with control in a nonlinear model predictive control scheme considering both dynamic and geometric constraints at the same time.
It adopted a mapless approach by relying on a subspace clustering method applied to 3D point clouds obtained directly from a 3D LiDAR sensor.

Concepts from machine learning were also considered recently in some reactive methods to address obstacle avoidance problems for UAVs. %
However, these methods are more computationally expensive than other reactive methods, and there are still concerns related to how guaranteed a collision avoidance is as the performance relies on how good the training/learning stage is.
Also, many of the existing approaches consider only generating motion decisions/policies in 2D without utilizing the full maneuverability of UAVs.
Most of these methods are based on deep reinforcement learning \cite{ross2013learning,zhang2015geometric,wang2017autonomous,ma2018saliency,singla2019memory,walker2019deep,yan2019towards,wang2019autonomous} and deep neural networks \cite{padhy2018deep,dionisio2018deep,dai2020automatic,back2020autonomous,lee2021deep,yang2019fast,wang2020uav,sanket2020evdodgenet}.

\begin{table*}[!htb]
	\centering
	\resizebox{\columnwidth}{!}{%
		\begin{tabular}{| c | c | c | c | c | c | c |} 
			\hline
			Ref. & Control Structure & Local Motion Planning & Model & Dynamic Environment \\ 
			\hline
			\cite{Georges2017} & I/II & sampling-based path planning & 2D Kinematics (nonholonomic) & $\checkmark$ \\ 
			\hline
			\cite{yang2010efficient} & I/II & sampling-based path planning & 3D Single-rotor Dynamics &  \\ 
			\hline
			\cite{lin2017sampling} & I/II & sampling-based path planning  & 3D Kinematics (nonholonomic) & $\checkmark$  \\ 
			\hline
			\cite{schmid2020efficient} & I/II & sampling-based path planning & 3D Kinematics (holonomic) &  \\ 
			\hline
			\cite{sanchez2019real} & I/II & graph-based path planning  & 3D Kinematics (holonomic) & $\checkmark$ \\ 
			\hline
			\cite{miller20113d,roberge2012comparison} & I/II & optimization-based path planning & 3D Kinematics &  \\ %
			\hline
			\cite{chen2016uav} & I/II & optimization-based path planning  & 3D Quadrotor Dynamics &  \\ 
			\hline
			\cite{mellinger2011minimum,Faculty2016,liu2016high,liu2017planning,watterson2015safe,Mohta2018} & II/III & optimization-based trajectory generation using QP with corridor-like constraints & 3D Quadrotor Dynamics &  \\ 
			\hline
			\cite{tordesillas2019fastrap,ye2020tgk,tordesillas2019real} & III & optimization-based trajectory planning using QP  & 3D Dynamics (acceleration/jerk input) &  \\ 
			\hline
			\cite{richter2016polynomial,quan2020eva} & III & optimization-based trajectory planning using unconstrained QP & 3D Quadrotor Dynamics &  \\ 
			\hline
			\cite{Oleynikova2016,liu2017robust,spedicato2017minimum} & III & optimization-based trajectory planning with obstacles constraints & 3D Quadrotor Dynamics &  \\ 
			\hline
			\cite{ryll2019efficient,chen2021computationally,mueller2015computationally,lopez2017aggressive,zhang2018perception} & III & motion primitives & 3D Quadrotor Dynamics &  \\ 
			\hline
			\cite{gonzalez2020autonomous} & III & motion primitives  & 3D Kinematics (holonomic) &  \\ 
			\hline
			\cite{paranjape2015motion} & III & motion primitives & 3D Kinematics (nonholonomic) &  \\ 
			\hline
			\cite{tordesillas2020mader,tordesillas2021panther} & III & perception-aware trajectory planning  & 3D Dynamics (jerk input) & $\checkmark$ \\ 
			\hline
			\cite{zhang2018perception,Falanga2018pampc,murali2019perception,spasojevic2020perception,sheckells2016optimal} & III & perception-aware trajectory planning  & 3D Quadrotor Dynamics &  \\
			\hline
			\cite{lindqvist2020nonlinear,mansouri2020unified} & III/IV & non-convex optimization with obstacles constraints using NMPC  & 3D Quadrotor Dynamics & $\checkmark$ \\ 
			\hline
			\cite{bucki2020rectangular,ji2020mapless,lee2021autonomous} & III/IV & mapless vision-based trajectory planning using depth images & 3D Dynamics (jerk input) &  \\ 
			\hline
			\cite{wang2018strategy,mujumdar2011reactive,lin2020fast,wiig20203d,Belkhouche2012,belkhouche2017reactive} & IV & Geometric-based (collision cones) reactive control & 3D Kinematics & $\checkmark$ \\ 
			\hline
			\cite{yang20133d,tan2020three} & IV & reactive control based on Velocity Obstacle (VO) & 3D Kinematics & $\checkmark$ \\ 
			\hline
			\cite{zhu20163d,roussos20103d,santos2017novel} & IV & reactive control based on artificial potential field & 3D Kinematics (nonholonomic)/Quadrotor Dynamics & $\checkmark$ \\ 
			\hline
			\cite{wu2021obstacle} & IV & nature-inspired reactive control & 3D Kinematics (nonholonomic) &  \\ 
			\hline
			\cite{Oleynikova2015} & IV & vision-based reactive control & 2D Kinematics &  \\ 
			\hline
			\cite{nguyen2018real,iacono2018path,elmokadem2020control} & IV & real-time path deformation (reactive) & 3D Quadrotor Dynamics & $\checkmark$ \\ 
			\hline
			\cite{potena2019joint} & IV & vision-based reactive control based on NMPC & 3D Quadrotor Dynamics & $\checkmark$ \\ 
			\hline
			\cite{ross2013learning,zhang2015geometric,wang2017autonomous,ma2018saliency,singla2019memory,walker2019deep} & IV & deep reinforcement learning & 2D Kinematics &  \\ 
			\hline
			\cite{yan2019towards} & IV & deep reinforcement learning & 2D Kinematics & $\checkmark$ \\ 
			\hline
			\cite{wang2019autonomous} & IV & deep reinforcement learning & 3D Kinematics &  \\ 
			\hline
			\cite{padhy2018deep,dionisio2018deep,dai2020automatic,back2020autonomous,lee2021deep} & IV & deep neural networks & 2D Kinematics &  \\ 
			\hline
			\cite{yang2019fast,wang2020uav} & IV & deep neural networks & 3D Kinematics &  \\ 
			\hline
			\cite{sanket2020evdodgenet} & IV & deep neural networks & 3D Kinematics & $\checkmark$ \\ 
			\hline
		\end{tabular}
	}
	\caption{A summary of surveyed local motion planning methods for UAVs}
	\label{tab:ch2:motion_planning}
\end{table*}

\section{UAV Modeling \& Control} \label{sec:ch2:control}

\subsection{Modeling}

For control design and simulation purposes, it is required to have a valid mathematical model that can express the UAV motion.
Generally, such model consists of two main parts which are \textit{kinematics} and \textit{dynamics}.
\textit{Kinematic} equations are mainly derived to represent the geometrical aspects of the motion in 3D spaces through defining translation and rotation relationships between different coordinate frames.
\textit{Dynamics} can be obtained through the application of Newton laws for a moving rigid body to derive linear and angular momentum equations. %
Application of Newton laws requires an inertial reference frame $\mathcal{I}$ to be defined.
On the other hand, analyzing forces and torques acting on the vehicle needs to be done with respect to a coordinate frame  attached to the moving vehicle (i.e. a body-fixed frame $\mathcal{B}$). %
Clearly, different UAV types would have some differences in their dynamic equations depending on the actuators configurations and other external forces and torques acting on the vehicle.
For simplicity, the origin of the body-fixed frame is chosen to coincide with the vehicle's center of mass.
Note that there are other coordinate frames that can be used for different purposes for navigation and control such as Earth-Centered, Geodetic and wind coordinate frames.
For more details about these coordinate frames, refer to \cite{valavanis2015handbook}.

A rotation matrix between the inertial and body-fixed coordinate frames can be used to define the attitude/orientation of the UAV.
It is also common to use other representations such as Euler angles (i.e. roll $\phi$, pitch $\theta$ and yaw $\psi$) and quaternions $\bm{q}\in \R^4$.
Quaternions are more computationally efficient and do not have the gimbal lock problem while Euler angles are easier to understand physically and can be decoupled into separate degrees of freedom under some assumptions for simplicity.

Let the Euler's angles vector be $\bm{\Phi} = [\phi, \theta, \psi]^T$, and consider a quaternion vector $\bm{q} = [q_1,q_2,q_3,q_4]^T$.
Notice that with Euler angles, usually three rotations are applied in a specific order which can result in different forms for the rotation matrix.
The following is an example considering the rotation order $ZYX$,
\begin{equation}
{}^{\mathcal{I}}_{\mathcal{B}}\mathbf{R}(\bm{\Phi}) = \left[\begin{array}{ccc}
	c_{\theta} c_{\psi} & 
	s_{\phi} s_{\theta} c_{\psi} - c_{\phi} s_{\psi} & 
	c_{\phi} s_{\theta} c_{\psi} + s_{\phi} s_{\psi} \\
	c_{\theta} s_{\psi} & 
	s_{\phi} s_{\theta} s_{\psi} + c_{\phi} c_{\psi} & 
	c_{\phi} s_{\theta} s_{\psi} - s_{\phi} c_{\psi} \\
	-s_{\theta} & 
	s_{\phi} c_{\theta} & 
	c_{\phi} c_{\theta}
\end{array}\right]
\end{equation}
where $c_{\alpha}:=\cos\alpha$, and $s_{\alpha}:=\sin\alpha$.
Note that ${}^{\mathcal{I}}_{\mathcal{B}}\mathbf{R}(\bm{\Phi})$ represents the rotation from the body-fixed frame to the inertial frame.
Furthermore, ${}_{\mathcal{I}}^{\mathcal{B}}\mathbf{R}(\bm{\Phi})={}^{\mathcal{I}}_{\mathcal{B}}\mathbf{R}^{T}(\bm{\Phi})$.

For a velocity vector expressed in the body-fixed frame, it can be transformed to the inertial frame as follows:
\begin{equation}
{}^{\mathcal{I}}\bm{v} = {}^{\mathcal{I}}_{\mathcal{B}}\bm{R}(\bm{\Phi})\ {}^{\mathcal{B}}\bm{v}
\end{equation}
such that ${}^{\mathcal{I}}\bm{v} = [\dot{x},\dot{y},\dot{z}]^T$ and ${}^{\mathcal{B}}\bm{v}=[u,v,w]^T$.
Also, the angular velocity can be transformed from $\mathcal{B}$ to $\mathcal{I}$ as:
\begin{equation}
\bm{\dot{\Phi}} = T(\bm{\Phi}) \bm{\Omega}
\end{equation}
where
\begin{equation}
	T(\bm{\Phi}) = \left[\begin{array}{ccc}
		1 & s_{\phi} t_{\theta} & c_{\phi} t_{\theta} \\
		0 & c_{\phi} & -s_{\phi} \\
		0 & s_{\phi}/c_{\theta} & c_{\phi}/c_{\theta}
	\end{array}\right]
\end{equation}
with $t_{\theta}:=\tan\theta$.
The gimbal lock problem can be seen clearly from $T(\bm{\Phi})$ where a singularity occurs when $\theta = \pm 90^o$.
Such a problem does not exist when using quaternions for kinematic modelling.

Hence, the general model for a UAV is given by:
\begin{eqnarray}
\bm{\dot{p}} &=& {}_{\mathcal{B}}^{\mathcal{I}}\bm{R}(\bm{\Phi}) \ {}^{\mathcal{B}}\bm{v} \label{equ:ch2:model1} \\
{}^{\mathcal{B}}\bm{\dot{v}} &=& \frac{\bm{F}}{m} - \bm{\Omega}\times{}^{\mathcal{B}}\bm{v} \label{equ:ch2:model2} \\
{}_{\mathcal{B}}^{\mathcal{I}}\bm{\dot{R}} &=& {}_{\mathcal{B}}^{\mathcal{I}}\bm{R} \bm{\Omega} \label{equ:ch2:model3} \\
\bm{\dot{\Omega}} &=& \bm{I}^{-1} \Big(\bm{M} - \bm{\omega}\times\bm{I}\bm{\Omega}\Big) \label{equ:ch2:model4} 
\end{eqnarray}
where $\bm{p},{}^{\mathcal{I}}\bm{v}\in \R^3$ are the position and linear velocity expressed in the inertial frame, $\Omega\in \R^3$ is the angular velocity defined in the body-fixed frame, $m \in \R^{+}$ is the UAV mass, and $\bm{I}\in \R^{3\times 3}$ is the inertia matrix.
Furthermore, $\bm{F}\in\R^3$ and $\bm{M}\in\R^3$ correspond to external forces and torques acting on the vehicle. 

Modeling the forces and torques differ based on the UAV type, design and actuators configuration which affects the control system design.
Example of these differences can be seen in the complete models for fixed-pitch multi-rotors \cite{hamel2002dynamic,mellinger2011minimum,faessler2017differential}, variable-pitch multi-rotors \cite{kamel2018voliro,allenspach2020design}, helicopters \cite{godbolt2013experimental}, fixed-wing UAVs \cite{lesprier2015modeling}, flapping-wing UAVs \cite{karasek2014robotic}, etc. 
Some researchers have further extended the UAV modeling considered in the control design to include some added systems such as cable-suspended payload \cite{foehn2017fast,tang2018aggressive}.

\subsection{Low-level Control}

As mentioned earlier, a common approach to handle the navigation problem is by decoupling planning from control.
Thus, a low-level control can be designed independently to track the generated reference paths, trajectories, heading/flight path angles or velocity/acceleration commands.
Typically, control laws are developed to minimize tracking errors by determining required input forces and body torques which can then be mapped into motor and actuator commands depending on the UAV type.
State estimation is a very critical component for feedback control.
Extended Kalman Filter (EKF) is a popular choice in many implementations to provide estimates for the UAV attitude, linear and angular velocities by fusing data from different sensors.
Position can also be estimated by fusing information from a positioning source such as GNSS, visual odometry, external positioning system, etc.

A cascaded approach is very common in different control structures where the attitude dynamics (i.e. \eqref{equ:ch2:model3}-\eqref{equ:ch2:model4}) are decoupled to avoid considering the full nonlinear system dynamics in the control design \cite{bangura2014real}.
A high-bandwidth inner loop attitude controller is used to ensure that the vehicle can accurately track reference attitude or angular velocity commands.
This reduces the control problem to design an outer control loop for the translational dynamics \eqref{equ:ch2:model1}-\eqref{equ:ch2:model2} that can achieve position/velocity tracking by deciding proper laws in terms of thrust, attitude and/or angular velocities.
Several control techniques were adopted in the literature such as PID \cite{Yamasaki2009,godbolt2013experimental}, sliding mode control \cite{Zheng2014}, Lyapunov-based nonlinear control \cite{Ambrosino2009} and model predictive control \cite{yang2013adaptive,bangura2014real,bicego2020nonlinear,li2018development,kamel2017linear}. 

Multi-rotors are the most popular UAV type for many civilian applications due to their simplicity in mechanical design and control.
Therefore, there have been many recent developments in nonlinear control of multi-rotors enabling high-speed navigation \cite{liu2016high,ryll2019efficient}, aggressive flights \cite{mellinger2012trajectory,bry2015aggressive,loianno2016estimation} and aerial manipulation \cite{michael2011cooperative,fink2011planning,sreenath2013dynamics}.

Quadrotor dynamics are differentially-flat which was shown in \cite{mellinger2011minimum} (even under drag effects \cite{faessler2017differential}).
Differential-flatness denotes that all system variables (i.e. states and inputs) can be written in terms of a set of flat outputs (for example, $[x,y,z,\psi]$).
That is, trajectories can be planned in the space of flat outputs, and it ensures that any smooth trajectory with proper bounded derivatives can be tracked.
Hence, several control methods adopted a geometric-based control design utilizing the differential-flatness property such as \cite{mellinger2011minimum,lee2013nonlinear}.
Model predictive control was also considered in \cite{kamel2017linear} include blade flapping and induced drag effects modeled as external disturbances.
Including drag effects and external disturbances in the control design was considered by several other works such as \cite{bangura2014real,omari2013nonlinear,faessler2017differential}.
Some other control designs for fixed-pitch multirotor UAVs were proposed; for example, see \cite{mahony2012multirotor,manjunath2016application,Zheng2014,bicego2020nonlinear,lee2017trajectory,nascimento2019position} and references therein.

Variable-pitch/omni-directional multi-rotors are fully actuated vehicles where translational and rotational degrees of freedom can be decoupled; examples of control methods developed for these vehicles can be found in \cite{kamel2018voliro,allenspach2020design,rashad2020fully}.
Control of Single-rotor UAVs (helicopters) have also been tackled in several research works using the similar cascaded structure.
For example, a PID-based trajectory tracking controller was designed in \cite{godbolt2013experimental} while robust and perfect tracking (RPT) technique was suggested in \cite{cai2013design}.
Control of fixed-wing UAVs followed a similar control structure using decoupled control loops for translational and attitude dynamics.
Control designs for fixed-wing UAVs take into consideration the models nonholonomic kinematic constraints, and many of the existing methods adopt path-following techniques based on guidance laws such as \cite{Yamasaki2009,Ambrosino2009,Sujit2013}.
In \cite{Yamasaki2009}, the control method adopted pure pursuit guidance and a decoupled proportional control for velocity and attitude.
A similar control method was suggested in \cite{Ambrosino2009} based on LOS guidance algorithms and nonlinear control considering wind effects.
Model predictive control was also considered in the path-following control design proposed in \cite{yang2013adaptive}.
Alternatively, \cite{lesprier2015modeling} presented control designs for fixed-wing UAVs based on linear pole placement and nonlinear structured multimodal $H_{\infty}$ synthesis to track a reference air speed and flight path angle.
Control of other UAV types have also attracted some interest in the community developing new control methods for hybrid UAVs \cite{li2018development,atay2021spherical}, flapping-wing UAVs \cite{karasek2014robotic,icsbitirici2017design}, etc.

\section{Simultaneous Localization \& Mapping (SLAM)} \label{sec:ch2:localization}

Localization is trying to determine the vehicle's position given a certain map based on the newly obtained sensors information while mapping is trying to build a map representation of the environment given localization information.
Thus, navigation in unknown environments requires this to be done online simultaneously which is known as simultaneous localization and mapping (SLAM).
Development of SLAM methods is a very active field of research in robotics as the performance of map-based navigation methods rely on SLAM accuracy. %

Existing SLAM methods can be classified as either LiDAR-based or vision-based.
LiDAR-based methods adopt scan matching algorithms, and they offer better accuracy (ex. see \cite{zhang2014loam,hess2016real,koide2018portable,legoloam2018shan,liosam2020shan}).
However, vision-based SLAM methods have become more popular for UAVs due to the lower cost and light weight of cameras compared to LiDARs.
According to \cite{taketomi2017visual}, these can be classified into feature-based \cite{klein2007parallel,mur2015orb,mur2017orb}, direct \cite{engel2014lsd,engel2017direct} or RGB-D camera-based \cite{whelan2015real,naudet2021constrained}.
Feature-based methods relies on detecting and extracting features from an input image to be used for localization which can be challenging in textureless environments.
On contrary, direct methods use the whole image directly offering more robustness at the expense of increased computational cost.
RGB-D camera-based methods combines both image and depth information in its formulation. 
For more detailed overview of SLAM methods, the reader is referred to the following surveys \cite{cadena2016past,taketomi2017visual,lu2018survey}.

\section{Summary of Recent Developments} \label{sec:ch2:summary}

\newcommand{\refsM}{\cite{Georges2017,lin2017sampling,sanchez2019real,miller20113d,chen2016uav,roberge2012comparison,gonzalez2020autonomous,Faculty2016,Oleynikova2016,liu2017planning,watterson2015safe,spedicato2017minimum,ryll2019efficient,tordesillas2019fastrap,tordesillas2020mader,tordesillas2021panther,chen2021computationally,ye2020tgk,bucki2020rectangular,ji2020mapless,quan2020eva,lee2021autonomous,mueller2015computationally,lopez2017aggressive,tordesillas2019real,lindqvist2020nonlinear,zhang2018perception,spasojevic2020perception,mujumdar2011reactive,lin2020fast,yang20133d,zhu20163d,roussos20103d,tan2020three,wiig20203d,wu2021obstacle,Belkhouche2012,belkhouche2017reactive,hrabar2011reactive,iacono2018path,ross2013learning,zhang2015geometric,wang2017autonomous,ma2018saliency,singla2019memory,wang2019autonomous,yan2019towards,walker2019deep,padhy2018deep,dionisio2018deep,dai2020automatic,back2020autonomous,lee2021deep}}
\newcommand{\refsPM}{\cite{liu2016high,Oleynikova2015,yang2019fast,wang2020uav,sanket2020evdodgenet,hoy2012collision,Oleynikova2016}}

\newcommand{\refsME}{\cite{Yang2013a,Oleynikova2017,meera2019obstacle,schmid2020efficient}}

\newcommand{\refsC}{\cite{Yamasaki2009,Ambrosino2009,godbolt2013experimental,cai2013design,yang2013adaptive,lesprier2015modeling,mahony2012multirotor,faessler2017differential,kamel2017linear,manjunath2016application,Sujit2013,Zheng2014,lee2013nonlinear,bicego2020nonlinear,li2018development,atay2021spherical}}

\newcommand{\refsCM}{\cite{yang2010efficient,mellinger2011minimum,richter2016polynomial,liu2017robust,paranjape2015motion,Falanga2018pampc,murali2019perception,sheckells2016optimal,nguyen2018real,santos2017novel,potena2019joint}}

\newcommand{\refsP}{\cite{Holz2013}}

\newcommand{\refsS}{\cite{zhang2014loam,hess2016real,koide2018portable,legoloam2018shan,liosam2020shan,taketomi2017visual,klein2007parallel,mur2015orb,mur2017orb,engel2014lsd,engel2017direct,whelan2015real,naudet2021constrained}}

\newcommand{\refsCPM}{\cite{mansouri2020unified}}

\newcommand{\refsCPSM}{\cite{shen2011autonomous,perez2018architecture,Mohta2018}}

\newcommand{\refsCPSME}{\cite{Bachrach2011}}

\newcommand{\refsCPS}{\cite{Blosch2010}}

\Cref{tab:ch2:summary} summarizes some of the recent contributions made towards developing fully autonomous UAVs in terms of control, perception, SLAM, motion planning and exploration capabilities.

\begin{table*}[!htb]
	\centering
	\resizebox{\columnwidth}{!}{%
		\begin{tabular}{ | c | c| c | c | c | c| c |} 
			\hline
			Reference & Control & Perception & SLAM & Motion Planning & Exploration \\ 
			\hline
			\refsP  &  & $\checkmark$ & &  &  \\ %
			\hline
			\refsM & & & & $\checkmark$ &  \\ %
			\hline
			\refsPM & & $\checkmark$ & & $\checkmark$ &  \\ %
			\hline %
			\refsME & & & & $\checkmark$ & $\checkmark$ \\  %
			\hline
			\refsC  & $\checkmark$ & & & &  \\ %
			\hline
			\refsS  &  & & $\checkmark$ & &  \\ %
			\hline
			\refsCM & $\checkmark$  & & & $\checkmark$ & \\ %
			\hline
			\refsCPM & $\checkmark$  & $\checkmark$ & & $\checkmark$ & \\ %
			\hline
			\refsCPS  & $\checkmark$  & $\checkmark$ &$\checkmark$ &  &  \\ %
			\hline
			\refsCPSM  & $\checkmark$  & $\checkmark$ &$\checkmark$ & $\checkmark$ &  \\ %
			\hline
			\refsCPSME  & $\checkmark$  & $\checkmark$ &$\checkmark$ & $\checkmark$ & $\checkmark$ \\ %
			\hline
		\end{tabular}
	}
	\caption{A summary of some recent developments for UAVs in control, perception, SLAM and motion planning}
	\label{tab:ch2:summary}
\end{table*}

\section{Open-Source Projects}\label{sec:ch2:open_source}

There have been many developments in the field of UAVs in terms of perception, control, SLAM and path planning over the past years.
Implementing a complete autonomous navigation stack would require a large team with different skill-sets in these areas or collaborations among research groups.
Open-source projects contributed by many researchers have made it possible to focus on the development and improvement of a specific navigation component while easily integrating with other components developed by researchers in the community saving a lot of development time.
\Cref{tab:ch2:2} shows a list of some existing open-source projects and tools useful for autonomous UAV research and development.

\begin{table*}[!htb]
	\centering
	\resizebox{0.75\columnwidth}{!}{%
	\begin{tabular}{m{4cm} | m{4cm} | m{6cm} | l } 
		& Name & Description & Source \\ 
		\hline
		 \multirow{6}{*}{Navigation Stack} & Vision-based navigation for MAVs\cite{oleynikova2020open} & provides an open-source system for MAVs based on vision-based sensors including control, sensor fusion, mapping, local and global planning & \makecell{\href{http://github.com/ethz-asl/voxblox}{http://github.com/ethz-asl/voxblox} \\ \href{http://github.com/ethz-asl/rovio}{http://github.com/ethz-asl/rovio} \\ \href{http://github.com/ethz-asl/ethzasl_msf}{http://github.com/ethz-asl/ethzasl\_msf} \\ \href{http://github.com/ethz-asl/odom_predictor}{http://github.com/ethz-asl/odom\_predictor} \\ \href{http://github.com/ethz-asl/maplab}{http://github.com/ethz-asl/maplab} \\ \href{http://github.com/ethz-asl/mav_control_rw}{http://github.com/ethz-asl/mav\_control\_rw}} \\
		\cline{2-4}
		 & PULP-DroNet \cite{palossi2019open} & a deep learning-powered visual navigation engine for nano-UAVs & \href{https://github.com/pulp-platform/pulp-dronet}{https://github.com/pulp-platform/pulp-dronet}  \\
		\hline
		\multirow{4}{*}{LiDAR-based SLAM} & Google’s Cartographer\cite{hess2016real} & provides a real-time SLAM solution in 2D and 3D & \href{https://github.com/cartographer-project/cartographer}{https://github.com/cartographer-project/cartographer}  \\
		\cline{2-4}
		& hdl\_graph\_slam\cite{koide2018portable} & a real-time 6DOF SLAM using a 3D LIDAR & \href{https://github.com/koide3/hdl_graph_slam}{https://github.com/koide3/hdl\_graph\_slam}  \\ 
		\cline{2-4}
		& loam\_velodyne \cite{zhang2014loam} & Laser Odometry and Mapping  & \href{https://github.com/laboshinl/loam_velodyne}{https://github.com/laboshinl/loam\_velodyne}  \\
		\cline{2-4}
		& A-LOAM & Advanced implementation of LOAM  & \href{https://github.com/HKUST-Aerial-Robotics/A-LOAM}{https://github.com/HKUST-Aerial-Robotics/A-LOAM}  \\ 
		\cline{2-4}
		& FLOAM & a faster and optimized version of A-LOAM and LOAM  & \href{https://github.com/wh200720041/floam}{https://github.com/wh200720041/floam}  \\ 
		\hline
		\multirow{14}{*}{Vision-based SLAM} & ORB SLAM \cite{mur2015orb} & a keyframe and feature-based Monocular SLAM & \href{https://openslam-org.github.io/orbslam.html}{https://openslam-org.github.io/orbslam.html}  \\ 
		\cline{2-4}
		& ORB SLAM 2 \cite{mur2017orb} & a real-time SLAM library for Monocular, Stereo and RGB-D cameras & \href{https://github.com/raulmur/ORB_SLAM2}{https://github.com/raulmur/ORB\_SLAM2}  \\
		\cline{2-4}
		& LSD-SLAM \cite{engel2014lsd} & a Large-Scale Direct Monocular SLAM system & \href{https://github.com/tum-vision/lsd_slam}{https://github.com/tum-vision/lsd\_slam}  \\
		\cline{2-4}
		& SVO Semi-direct Visual Odometry \cite{Forster2014ICRA} & a semi-direct monocular visual SLAM & \href{https://github.com/uzh-rpg/rpg_svo}{https://github.com/uzh-rpg/rpg\_svo}  \\
		\cline{2-4}
		& PTAM \cite{klein2007parallel} &  a monocular SLAM system & \href{https://github.com/Oxford-PTAM/PTAM-GPL}{https://github.com/Oxford-PTAM/PTAM-GPL}  \\ 
		\cline{2-4}
		& RTAB-Map \cite{labbe2019rtab,labbe2018long} & RGB-D, Stereo and Lidar Graph-Based SLAM algorithm & \href{http://introlab.github.io/rtabmap}{http://introlab.github.io/rtabmap}  \\ 
		\cline{2-4}
		& ElasticFusion \cite{whelan2016elasticfusion} & Real-time dense visual SLAM system using RGB-D cameras & \href{https://github.com/mp3guy/ElasticFusion}{https://github.com/mp3guy/ElasticFusion}  \\ 
		\cline{2-4}
		& Kintinuous \cite{whelan2015real} & Real-time dense visual SLAM system using RGB-D cameras & \href{https://github.com/mp3guy/Kintinuous}{https://github.com/mp3guy/Kintinuous}  \\ 
		\hline
		\multirow{4}{*}{Motion Planning} & Fast-Planner\cite{zhou2019robust} & a set of planning algorithms for fast flights with quadrotors in complex unknown environments & \href{https://github.com/HKUST-Aerial-Robotics/Fast-Planner}{https://github.com/HKUST-Aerial-Robotics/Fast-Planner} \\
		\cline{2-4}
		& FUEL\cite{zhou2021fuel} & a hierarchical framework for Fast UAV Exploration & \href{https://github.com/HKUST-Aerial-Robotics/FUEL}{https://github.com/HKUST-Aerial-Robotics/FUEL} \\
		\cline{2-4}
		& EGO-Planner & Gradient-based Local Planner for Quadrotors & \href{https://github.com/ZJU-FAST-Lab/ego-planner}{https://github.com/ZJU-FAST-Lab/ego-planner} \\
		\cline{2-4}
		& TopoTraj\cite{zhou2020robust} & a robust planner for quadrotor trajectory replanning based on gradient-based trajectory optimization & \href{https://github.com/HKUST-Aerial-Robotics/TopoTraj}{https://github.com/HKUST-Aerial-Robotics/TopoTraj} \\
		\cline{2-4}
		& toppra\cite{pham2018new} & a library for computing time-optimal trajectories subject to kinematic and dynamic constraints & \href{https://github.com/hungpham2511/toppra}{https://github.com/hungpham2511/toppra} \\
		\cline{2-4}
		& Open Motion Planning Library & a library for sampling-based motion planning algorithms & \href{https://ompl.kavrakilab.org/core/index.html}{https://ompl.kavrakilab.org/core/index.html} \\
		\cline{2-4}
		& AIKIDO & a C++ library for motion planning and decision making problems & \href{https://github.com/personalrobotics/aikido}{https://github.com/personalrobotics/aikido} \\
		\cline{2-4}
		& PathPlanning & a collection of search-based and sampling-based path planners implemented in Python & \href{https://github.com/zhm-real/PathPlanning}{https://github.com/zhm-real/PathPlanning} \\
		\hline
		\multirow{4}{*}{Control} & mav\_control\_rw\cite{kamel2017linear} & Linear and nonlinear MPC controllers for Micro Aerial Vehicles & \href{https://github.com/ethz-asl/mav_control_rw}{https://github.com/ethz-asl/mav\_control\_rw} \\
		\cline{2-4}
		& rpg\_mpc\cite{Falanga2018pampc} & Perception-Aware MPC for quadrotors & \href{https://github.com/uzh-rpg/rpg_mpc}{https://github.com/uzh-rpg/rpg\_mpc} \\
		\cline{2-4}
		& ACADO Toolkit & collection of algorithms for automatic control and dynamic optimization & \href{http://acado.github.io/}{http://acado.github.io/} \\
		\cline{2-4}
		& Control Toolbox & a C++ library for robotics addressing control, estimation and motion planing & \href{https://github.com/ethz-adrl/control-toolbox}{https://github.com/ethz-adrl/control-toolbox} \\
		\cline{2-4}
		& PX4 & an open-source flight control software for UAVs & \href{https://px4.io/}{https://px4.io/} \\
		\cline{2-4}
		& ArduPilot & an open-source flight control software for UAVs & \href{https://ardupilot.org/}{https://ardupilot.org/} \\
		\hline
		\multirow{3}{*}{Perception} & Augmented Autoencoders\cite{Sundermeyer2018ECCV} & 3D object detection pipeline from RGB images & \href{https://github.com/DLR-RM/AugmentedAutoencoder}{https://github.com/DLR-RM/AugmentedAutoencoder} \\
		\cline{2-4}
		& MoreFusion\cite{wada2020morefusion} & a perception pipeline for 6D pose estimations of multi-objects & \href{https://github.com/wkentaro/morefusion}{https://github.com/wkentaro/morefusion} \\
		\cline{2-4}
		& OpenCV & an optimized computer vision library & \href{https://opencv.org/}{https://opencv.org/} \\
		\cline{2-4}
		& Point Cloud Library (PCL) & efficient point cloud processing C++ library & \href{https://pointclouds.org/}{https://pointclouds.org/} \\
		\cline{2-4}
		& cilantro \cite{zampogiannis2018cilantro} & efficient point cloud processing C++ library & \href{https://github.com/kzampog/cilantro}{https://github.com/kzampog/cilantro} \\
		\hline
		\multirow{2}{*}{Simulators} & Gazebo & a robot simulator & \href{http://gazebosim.org/}{http://gazebosim.org/} \\
		\cline{2-4}
		& CoppeliaSim/V-REP & a robot simulator & \href{https://www.coppeliarobotics.com/}{https://www.coppeliarobotics.com/} \\
		\cline{2-4}
		& Webots & a robot simulator & \href{https://cyberbotics.com/}{https://cyberbotics.com/} \\
		\cline{2-4}
		& Hector Quadrotor & provides simulation tools for quadrotors (ROS-based) & \href{http://wiki.ros.org/hector_quadrotor}{http://wiki.ros.org/hector\_quadrotor} \\
		\cline{2-4}
		& RotorS \cite{Furrer2016} & a set of tools to simulate multi-rotors in Gazebo & \href{https://github.com/ethz-asl/rotors_simulator}{https://github.com/ethz-asl/rotors\_simulator} \\
		\hline
		\multirow{3}{*}{General} & Robot Operating System (ROS) & a middleware to facilitate building large robotic applications & \href{https://www.ros.org/}{https://www.ros.org/} \\
		\cline{2-4}
		& Ceres Solver & a C++ library for solving large optimization problems & \href{http://ceres-solver.org/}{http://ceres-solver.org/} \\
		\cline{2-4}
		& g2o & a C++ framework for graph-based nonlinear optimization & \href{https://github.com/RainerKuemmerle/g2o}{https://github.com/RainerKuemmerle/g2o} \\
		\cline{2-4}
		& NLopt & a nonlinear optimization library & \href{https://nlopt.readthedocs.io/en/latest/}{https://nlopt.readthedocs.io/en/latest/} \\
		\cline{2-4}
		& Optimization Engine (OpEn) & a fast solver for optimization problems in robotics & \href{https://nlopt.readthedocs.io/en/latest/}{https://nlopt.readthedocs.io/en/latest/} \\
		\hline
	\end{tabular}
	}
	\caption{Open-source projects and tools for UAV development}
	\label{tab:ch2:2}
\end{table*}

\section{Conclusion}\label{sec:ch2:conclusion}

This chapter presented an overview of recent navigation methods with collision avoidance capability needed to develop fully autonomous UAVs.
The main focus was on 3D collision avoidance strategies where several works related to control, 3D motion planning and SLAM were surveyed.
Moreover, an overview of some existing open-source projects was provided to aid researchers in quickly developing and deploying new technologies for UAVs.

    \renewcommand{\target}{\bm{p}_{goal}}

\part{Navigation Strategies for Single-UAV Systems}
\chapter{A Hybrid Navigation Strategy for Partially-Known and Dynamic Environments\label{cha:methods_hybrid}}

This chapter proposes a hybrid strategy for the navigation of unmanned aerial vehicles considering only lateral motion at a fixed altitude.
It is based on a general 2D kinematic model with nonholonomic constraints that best suits fixed-wing UAVs.
However, the model applies also to multi-rotor UAVs for constant-speed applications in addition to other mobile robots such as unmanned ground vehicles (UGVs) and autonomous underwater vehicles (AUVs).
This strategy is designed to allow autonomous operation in partially-known and dynamic environments.
It can also handle navigation in unknown environments with the requirement of building a map during navigation by means of simultaneous localization and mapping (SLAM).
The overall strategy can be extended to allow for 3D motions when combined with 3D control laws designed based on 3D kinematic models similar to the ones developed in \cref{cha:methods_reactive3D,cha:reactive_impl,cha:deforming_approach}.
The results presented in this chapter were published in \cite{elmokadem2018hybrid}.

\section{Introduction}\label{sec:ch3:Intro}

Unmanned aerial vehicles (UAVs) have emerged in many applications where it is required to do some repetitive tasks in a certain environment.
Autonomous operation is highly desirable in these applications which adds more requirements on the vehicle to achieve safe navigation towards areas of interest.
Navigation methods can be generally classified as global path planning (deliberative), local path planning (sensor-based) and hybrid.
A subset of sensor-based methods include reactive approaches \cite{hoy2015algorithms}.

Global path planning requires an overall knowledge about the environment to produce optimal and efficient paths which can be tracked by the vehicle's control system.
There exist many different techniques to address global planning problems including roadmap methods \cite{lozano1979algorithm,dunlaing1985retraction}, cell decomposition \cite{lozano1983spatial}, potential field \cite{khatib1986real,ge2000new}, Probabilistic Roadmaps (PRM) \cite{kavraki1996probabilistic}, Rapidly-exploring Random Tree (RRT) \cite{lavalle1998rapidly} and optimization-based techniques \cite{dragan2011manipulation,zucker2013chomp,schulman2014motion,li2018path}.
Some of these techniques become computationally challenging when dealing with unknown and dynamic environments since a complete updated map is required a priori.
As a workaround to handle such environments, extensions to some of these approaches were proposed by adding an additional layer to continuously refine the initial path locally around detected obstacles.
This still may be less efficient in highly complex and dynamic environments.

On the other hand, sensor-based methods generate local paths or motion commands in real-time based on a locally observed fraction of the environment interpreted directly from sensors measurements.
Search-based methods such those used for global planning and optimization-based methods can be used with a local map to generate paths locally where computational complexity depends on the selected map size.
On contrary, reactive methods offer better computational solutions by directly coupling sensors observations into control inputs providing quick reactions to perceived obstacles.
Hence, they can be more suitable in unknown and dynamic environments.
Examples of classical reactive methods used in unknown environments are dynamic window \cite{fox1997dynamic} and curvature velocity \cite{simmons1996curvature}.
Other classical examples of reactive methods dealing with dynamic obstacles include collision cones \cite{chakravarthy1998obstacle} and velocity obstacles \cite{fiorini1998motion}.
A class of reactive approaches adopt a boundary following paradigm to circumvent obstructing obstacles; for example, see \cite{bemporad2000sonar,toibero2009stable,teimoori2010biologically,matveev2011method,matveev2012real,savkin2013simple,matveev2015safe}.
The low computational cost of such methods comes  at the expense of being prone to trapping situations.
Some researchers suggested a combination of a randomized behavior with the boundary following approach to escape such situations \cite{savkin2013reactive}.
However, this may sometimes produce very unpredictable motions and even inefficient ones \cite{urdiales2003hybrid} without utilizing previous sensors observation acquired through the motion.

Hybrid strategies tend to address the afromentioned drawbacks by combining both deliberative and reactive approaches for a more efficient navigation behavior in unknown and dynamic environments.
There exists a body of literature on hybrid approaches; for example, see \cite{urdiales2003hybrid,zhu2012new,nieuwenhuisen2016layered,hank2016hybrid,wzorek2017framework,d2017safe,adouane2017reactive} and references therein.
A hybrid approach was suggested in \cite{urdiales2003hybrid} for navigation in dynamic environments which combined a potential field-based local planner with the A* algorithm as a global planner based on a topological map.
Similarly, the work \cite{zhu2012new} adopted the A* algorithm with binary grid maps while using a variant of the Bug algorithms as a reactive component to address navigation in partially unknown environments.
The authors of \cite{nieuwenhuisen2016layered} suggested another hybrid approach  for micro aerial vehicles that uses A* for both deliberative and local planning components where the global path gets refined locally around obstacles through replanning processes.
The hybrid approach presented in \cite{hank2016hybrid} adopted a fuzzy logic-based boundary following technique to implement the reactive layer while an Optimal Reciprocal Collision Avoidance (ORCA) algorithm was used in \cite{wzorek2017framework}.
Sampling-based search methods were also used in some approaches such as \cite{d2017safe} where a global planner based on the Dynamic Rapidly-exploring Random Tree (DRRT) was suggested.
Real-time obstacle avoidance was then dealt with by choosing a best candidate trajectory from a sampled set.
In \cite{adouane2017reactive}, Parallel Elliptic Limit-Cycle approach was adopted to implement both global and local planning components.

This chapter aims to enrich the literature on hybrid navigation strategies for UAVs to allow efficient navigation in partially unknown/dynamic environments.
A general hybrid strategy is presented by combining a global path planning layer with a reactive component using a general 2D kinematic model.
The global path planning layer, based on RRT,  can produce more efficient paths based on the available knowledge about the environment in addition to work as a recovery layer to escape trapping situations.
The sliding mode technique is adopted to implement a boundary following behavior in the reactive layer.
This choice provides quick reactions to obstacles with a cheap computational cost compared to search-based and optimization-based local planners.
To develop a proper hybrid navigation strategy,an implementation of a proper switching mechanism is presented to handle the transition between the two layers.
Overall, the proposed method can overcome the shortcomings of relying purely on a deliberative or a reactive approach.

This chapter is organized as follows.
\Cref{sec:ch3:prob} provides a formulation of the tackled navigation problem.
The suggested hybrid strategy is then presented in \cref{sec:ch3:nav}.
The performance of this approach is confirmed through different simulation scenarios which is shown in \cref{sec:ch3:sim}.
Finally, concluding remarks are made in \cref{sec:ch3:conc}.

\section{Problem Statement}\label{sec:ch3:prob}

A general navigation problem is considered here where a UAV is required to navigate safely in a partially-known environment $\env \subset \R^3$ to reach some goal position $\target \subset \env$ represented in a world coordinate frame $\{W\}$ with obstacle avoidance capability.
The environment $\env$ contains a set of $n$ obstacles with no assumptions made on their shape $\obs = \{\obs_1, \obs_2,\cdots,\obs_n\} \subset \env$.
These obstacles can be categorized into static known obstacles $\obs_k$ and unknown static/dynamic obstacles $\obs_u$ such that $\obs = \obs_k \cup \obs_u$.
The UAV starts with an initial map of the environment containing only information about $\obs_k$ based on a previous knowledge.
Let $\bm{p}(t)= [x(t),y(t),z(t)]^T$ be the UAV Cartesian coordinates expressed in $\{W\}$.
A safety requirement for the UAV is defined as keeping a safe distance $d_{safe}>0$ to all obstacles according to the following:
\begin{equation}\label{equ:ch3:safety}
	d(t) := \min_{\bm{p}^\prime(t) \in\obs}||\bm{p}(t) - \bm{p}^\prime(t)|| \geq d_{safe}\ \forall t
\end{equation}
where  $d(t)$ is the distance to closest obstacle, and $||\cdot||$ is the standard Euclidean norm of a vector.

We consider a unicycle-type kinematic model based only on planar motion in a subspace $\bar{\env} = \{(x,y,z): x,y\in \R, z = L\} \subset \env$ by assuming that the motion is constrained to a fixed altitude $L \in \R$ by a separate control loop.
Hence, the $z$ coordinate will be omitted from now on for brevity.
A description of this model is given by the following equations:
\begin{equation} \label{equ:ch3:model}
	\begin{aligned}
		\dot{x}(t) &= V(t) \cos \theta(t) \\
		\dot{y}(t) &= V(t) \sin \theta(t) \\
		\dot{\theta}(t) &= u(t)
	\end{aligned}
\end{equation}
where the UAV's linear and angular velocities are represented by $V(t)\in \R$ and $ u(t)\in \R$ respectively.
The motion direction is characterized by an angle $\theta(t)\in[-\pi,\pi]$ measured from the $+ve$ $x$-axis of $\{W\}$ in the counter clockwise direction.
The UAV velocities $V(t)$ and $u(t)$ are regarded as control inputs with some upper bounds on linear and angular velocities (denoted by $V_{max},u_{max}>0$ respectively) due to physical limitation.
These constraints can be expressed as follows:
\begin{equation}\label{equ:ch3:constraints}
	\begin{aligned}
		V(t) &\in [0,\ V_{max}], \\
		u(t) &\in [-u_{max},\ u_{max}].
	\end{aligned}
\end{equation}
Notice that we consider only forward motions by not allowing negative values for $V(t)$.
Also, the above model is nonholonomic exhibiting a constraint on the velocity given by:
\begin{equation}
	\dot{x} \sin\theta - \dot{y} \cos\theta = 0
\end{equation}
where the time parameter is removed for brevity.
Moreover, the velocities constraints impose a lower bound $R_{min} \geq 0$ on the turning radius given by:
\begin{equation}\label{equ:ch3:Rmin}
	R_{min} = \frac{V(t)}{u_{max}}
\end{equation}
For constant-speed applications, $V(t)$ is kept constant at some value $\bar{V} > 0$ resulting in $\bar{R}_{min} = \frac{\bar{V}}{u_{max}} > 0$.
Additionally, the following assumptions are made.
\begin{assumption}\label{asm:ch3:sensing}
	The UAV senses a fraction of the surroundings by onboard sensors, and it can determine the distance to closest obstacle as part of its perception system.
	An abstract sensing model is considered where obstacles within a distance of $d_{sensing}>0$ from the UAV can be detected.
\end{assumption}

\begin{assumption}\label{asm:ch3:localization}
	Estimates of the UAV's position $\bm{p}(t)$ and orientation $\theta(t)$ are available.
\end{assumption}

The following statement summarizes the considered problem in this chapter.
\begin{problem}
	Consider a UAV moving at a constant altitude whose planar movement is described by the model \eqref{equ:ch3:model}.
	Under assumptions~\ref{asm:ch3:sensing} and \ref{asm:ch3:localization}, design control laws for the linear and angular velocities $V(t)$ and $u(t)$ to ensure a collision-free navigation through an environment $\env$ to reach a goal position $\bm{p}_{goal}$ by satisfying the safety requirement in \eqref{equ:ch3:safety} and limits constraints in \eqref{equ:ch3:constraints}.
\end{problem}

\begin{remark}
	The model \eqref{equ:ch3:model} is applicable to fixed-wing UAVs, multi-rotor UAVs for constant-speed applications, unmanned ground vehicles and autonomous underwater vehicles.
\end{remark}

\section{Proposed Hybrid Navigation Strategy}\label{sec:ch3:nav}

The architecture of the overall hybrid navigation strategy is shown in \cref{fig:ch3:architecture}.
It consists of a perception subsystem, a low-level control subsystem and a high-level navigation subsystem.
The perception subsystem handles processing onboard sensors measurements to provide meaningful information about the environment through updating a map representation of the environment and providing distance to closest obstacle as required by our navigation strategy.
Then, a low-level control subsystem is used to generate actuators/motors commands to execute high level velocity commands generated by the high-level navigation subsystem.
The design of the high-level navigation subsystem is the main contribution of this work, and the proposed design combines three main layers, namely a \textit{deliberative layer (global planner)}, \textit{a reactive layer} and \textit{an execution layer}.

A global path planning algorithm constitutes the main component of the \textit{deliberative layer} in addition to an updated map provided from the perception subsystem based on the current knowledge of the environment.
The role of this layer is to generate a feasible and safe geometric path based on the current map which is needed initially when a new goal position is assigned and every time a trapping situation is detected as will be described later.
The \textit{reactive layer} provides quick actions to unknown obstacles by directly coupling command velocities into current sensors observations to provide collision free motion around obstacles.
The quick reactions to obstacles provided by this layer comes from the low computational cost of reactive control relative to map-based local planners.
The \textit{execution layer} is then responsible for making decisions on which layer to activate based on some designed switching rules, and it generates the actual command velocities to be executed by the low-level control subsystem.
The activation of each layer results in two navigation modes, namely \textit{path tracking} and \textit{collision avoidance} respectively.

\begin{figure}[!htb]
	\centering
	\includegraphics[width=\linewidth]{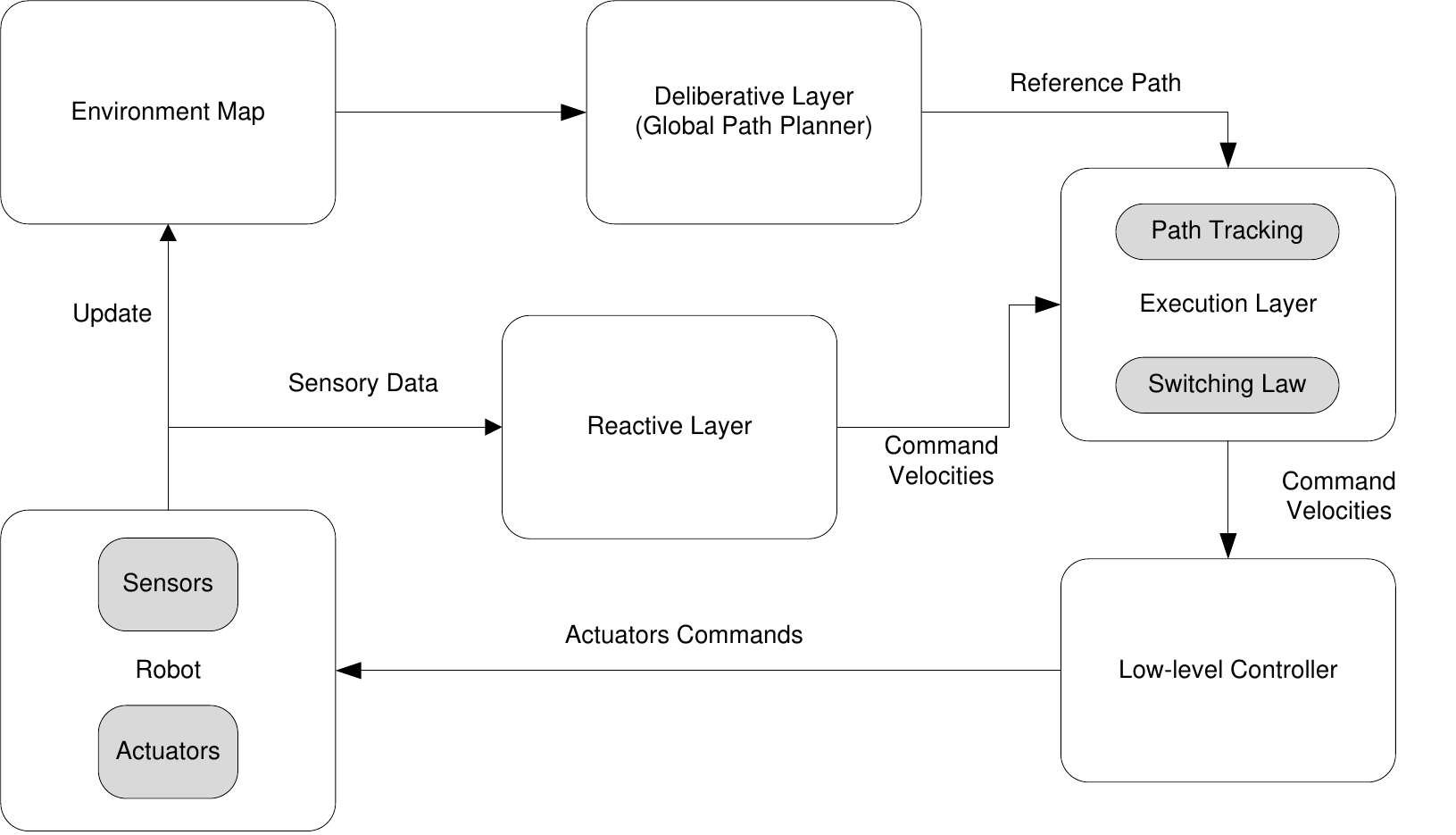} 
	\caption{Architecture of the proposed navigation strategy} \label{fig:ch3:architecture}
\end{figure}

\subsection{Global Planner}

The deliberative layer is based mainly on a global planning algorithm which requires a map representation of the environment.
There exist many algorithms in the literature where the choice of appropriate method can vary according to the UAV sensing and computing capabilities.
Optimality of the planned paths and the computational complexity of the overall algorithm could be key factors in determining which method to consider for a specific application.
Note that it is also possible to adopt more than one path planning algorithm within the deliberative layer where each planner gets activated under certain conditions.  
A typical example for this case is the use of optimization-based algorithms to initially generate optimal paths while implementing a computationally-efficient algorithm to modify the initial path when needed. 

The deliberative layer in our proposed framework is not restricted to a specific planning algorithm.
However, for the sake of validating our navigation strategy, this work will consider the Rapidly-exploring Random Tree (RRT) algorithm as a backbone for implementing the global planner. %
This choice is due to the algorithm's popularity and low computational cost which is favourable since the deliberative layer will be required to perform online replanning in some trapping scenarios as will be shown later.
Also, we follow an optimistic approach in our implementation of the RRT algorithm where unknown space is considered to be unoccupied (i.e. obstacle-free).
Based on this assumption, the reactive layer will be responsible for handling obstacle avoidance whenever an unknown space is found to be occupied. 

RRT is a randomized sampling-based path planning method which can find collision-free paths if exist with a probability that will reach one as the runtime increases (i.e. probabilistically complete).
However, it has been found to provide quick solutions in practice.
The basic concept of an RRT planner can be summarized as follows.%
Let $\graph$ be a search tree (a graph) initialized with an initial \textit{configuration} (defined later). 
An iterative approach is used to extend the search tree through the \textit{configuration space} until a feasible solution is found.
At each iteration, a configuration is sampled either randomly or using some heuristics which can help biasing the growth of the search tree. 
Biasing the sampling process to select the goal configuration with some probability ($0<pb_{goal}<1$) was found to enhance RRT growth.
The algorithm then tries to extend the search tree $\graph$ to the sampled configuration by connecting it to the nearest one within the tree.
Different methods could be used to connect configurations within the configuration space especially when trying to satisfy some constraints.
However, a common approach is to use straight lines to connect two configurations especially when dealing with Euclidean spaces which is considered here.
A collision checker is then used to check the feasibility of each extension based on the available environment map where each feasible extension results in growing the search tree by adding the new configuration.
The sampling process gets repeated iteratively until a feasible path between the initial and goal configurations is found or until a stopping criteria is met (for example, exceeding a predefined planning time limit).

Due to the sampling nature of RRT algorithms, the generated paths are non-optimal in terms of the overall length.
Furthermore, these paths do not satisfy nonholonomic constraints if straight lines were considered when extending the search tree.
Therefore, we follow a common practice by refining the obtained paths through two post-processing stages, namely \textit{pruning} and \textit{smoothing}, as was done in \cite{yang2010efficient,yang2010analytical}.

For the \textit{pruning} stage, redundant waypoints are removed to improve the overall quality of the path.
Let $W=\{w_1,w_2,\cdots,w_k\}$ be a set of waypoints representing a path generated by RRT, and let $W_p$ be the pruned path obtained after this stage.
Redundant waypoints can then be removed using Algorithm~\ref{alg:ch3:pruning} based on \cite{yang2010efficient}.

A \textit{smoothing} algorithm is applied next to the pruned path to ensure that the final path satisfies the vehicle's nonholonomic constraints (minimum turning radius).
To that end, parametric Bezier curves are used to generate continuous-curvature smooth paths following the approach suggested in \cite{yang2010analytical}.

\begin{algorithm}
	\caption{RRT Path Prunning}
	\label{alg:ch3:pruning}
	\begin{algorithmic}
		\State \textbf{Input}: $W=\{w_1,w_2,\cdots,w_k\}$
		\State \textbf{Output}: $W_p$
	\end{algorithmic}
	\begin{algorithmic}[1]
		\State $W_p \gets \{\}$ \Comment \begin{scriptsize}
			$W_p$ is initialised as an empty list
		\end{scriptsize}
		\State $i\gets k$
		\State insert $w_i$ into $W_p$
		\Repeat
		\State $j\gets 0$
		\Repeat
		\State $j\gets j + 1$
		\Until{collisionFree($w_i, w_j$) or $j = i-1$} \Comment \begin{scriptsize}
			Stop when a collision-free segment is found
		\end{scriptsize} 
		\State insert $w_j$ into $W_p$
		\State $i\gets j$
		\Until{i = 1}
	\end{algorithmic}
\end{algorithm}

\subsection{Reactive Layer}

Navigating in highly dynamic environments requires more safety measures as the planned path from the deliberative layer can become unsafe whenever new obstacles are detected especially if they are dynamic.
Hence, the reactive layer generates reflex-like reactions to detected obstacles by navigating around them until it's safe to continue following the previously planned path.

In our implementation, we adopt a reactive control law utilizing sliding mode control technique based on \cite{matveev2011method,hoy2012collision}.
Distance to closest obstacle $d(t)$ is the only information needed to implement this controller which can be obtained from onboard sensors.
Thus, the reactive layer implements the following navigation law:
\begin{equation}\label{equ:ch3:ReactiveLaw}
	\begin{aligned}
		V(t) &= V_{max} \\
		u(t) &= \Gamma\ u_{max}\ \text{sgn}\Big(\dot{d}(t) + \chi(d(t) - d_0)\Big)
	\end{aligned}
\end{equation}
where a constant forward speed is considered, $d_0>d_{safe}>0$ is a desired distance, $\Gamma=\pm1$ determines the avoidance manoeuvre direction (i.e. clockwise or counter clockwise), and $\text{sgn}(\alpha)$ is the signum function.
Also, $\chi(\beta)$ is a saturation function which is defined as:
\begin{equation}
\chi (\beta) = \left\{\begin{array}{cc}
\gamma \beta & if\  |\beta|\leq \delta \\
\delta \gamma\ \text{sgn}(\beta) & otherwise
\end{array}\right.,\ \ \gamma,\delta>0
\end{equation}
for some design parameters $\gamma,\delta>0$.

The navigation law \eqref{equ:ch3:ReactiveLaw} can ensure that the vehicle will maintain a fixed distance $d_0$ while navigating around the nearest obstacle under some assumptions as was mathematically proven in \cite{matveev2011method}.
Once it is safe, the vehicle should continue following the planned path as described next.

\subsection{Execution Layer} \label{subs:ch3:exe}

Decision making for the autonomous navigation stack is managed by the execution layer.
It is responsible for deciding whether it is safe to follow the planned path, to activate the reactive layer or to issue replanning commands.
In general, two navigation modes will be used and a switching mechanism implemented within the execution layer switches between those two modes.
The two modes are: path tracking mode $\mathcal{M}_1$ and obstacle avoidance/reactive mode $\mathcal{M}_2$.

The path tracking mode is initially activated after acquiring a global path.
This mode adopts the Pure Pursuit tracking (PP) algorithm which is known for its stability and simplicity \cite{amidi1991integrated}.
Assuming that a geometric path $\tau \subset \R^2$ is available, the PP algorithm steers the vehicle to follow a virtual target $p_v=(x_v,y_v) \in \tau$ moving on the path.
This target is usually selected as to be at some lookahead distance $L$ away from the closest path point $p_c\in \tau$ to the vehicle's current position.
For more stability, a modified version of the PP algorithm is considered in this work based on \cite{giesbrecht2005path} which suggest using an adaptive lookahead distance instead of a fixed value.
This PP variant provides more stability when the vehicle is further from the planned path which is needed here since the vehicle can sometimes deviate from the planned path when activating the reactive mode.
The following control law is used for path tracking:
\begin{equation}\label{equ:ch3:TrackingLaw}
	\begin{aligned}
		V(t) &= V_{max} \\
		u(t) &= u_{max}\ sgn\Big(\theta_d(t) - \theta(t)\Big) \\
		\theta_d(t) &= \tan^{-1}\frac{y_v(t) - y(t)}{x_v(t) - x(t)}
	\end{aligned}
\end{equation}

On the other hand, the obstacle avoidance mode is simply activating the reactive layer to generate velocity commands to navigate around obstacles as was presented earlier.

The critical role of the execution layer is to handle the switching between the two modes to make sure that the vehicle can safely reach the final goal while avoiding trapping situations when possible.
Assuming that the vehicle initially starts in mode $\mathcal{M}_1$, we consider the following switching rules: \\[0.1cm]
\noindent\textbf{R1:} switch to the reactive mode $\mathcal{M}_2$ when the distance to the closest obstacle $\obs_i$ drops below some threshold distance $C < d_{sensing}$ (i.e. $d_i(t)=C$ and $\dot{d}_i(t)<0$).\\[0.1cm]
\noindent\textbf{R2:} switch to the Path Tracking mode $\mathcal{M}_1$ from $\mathcal{M}_2$ when $d_i(t)>C\ \forall i$ and the vehicle's heading is targeted towards the lookahead goal $p_G$ on the planned path.

Moreover, trapping situations can be detected by the execution layer whenever the reactive controller gets stuck in a local minima.
An example to such scenario can be seen when navigating in maze-like environments, concave obstacles and/or long blocking walls \cite{nakhaeinia2015hybrid}.
The proposed approach to tackle this problem is by issuing a replanning command to the deliberative layer to acquire a new path based on the updated knowledge about the environment.
A description of this approach is given next.

Consider a frontal field of view (FOV) $\alpha\in[\theta-\alpha_0,\theta+\alpha_0]$ where $\alpha_0\leq \frac{\pi}{2}$ achieved by an onboard sensor such as a LIDAR or a depth camera which can be in a form of a point cloud or a discrete array of distances with certain angular resolution.
We characterize all directions or points within the FOV using a mapping function $M(\alpha,t)$ motivated by \cite{savkin2014seeking}.
This function simply maps available sensor measurements at an given time $t$ as follows:
\begin{itemize}
	\item $M(\alpha,t) = 0$ for clear directions 
	\item $M(\alpha,t) = 1$ for directions blocked by an obstacle
\end{itemize}
Hence, trapping situations are detected when $M(\alpha,t)=1,\ \forall \alpha\in[\theta-\alpha_0,\theta+\alpha_0]$.
Whenever this happens, the execution layer triggers a replanning command which will be performed by the deliberative layer.
One possible way is for the vehicle to come to a complete stop or hover in place.
Another possible approach which is considered here is to determine an intermediate \textit{escape goal} $p_{ge}$ for the vehicle to move towards until a new safe path is available.
This helps compensating for the planning time $\tau_p$ which should be less than the estimated time $\triangle \tau$ to reach $p_{ge}$.
In this case, the global planner will consider $p_{ge}$ as the starting position for the replanning process.
Note that the vehicle will be in the path tracking navigation mode $\mathcal{M}_1$ when moving towards $p_{ge}$ and when the new path becomes available.

\begin{remark}
	The RRT planner used in this work considers the $\R^2$ Euclidean space for a better computational cost.
	Therefore, there could be a small deviation from the new planned path when the vehicle reaches $p_{ge}$ if the minimum turning radius is not satisfied depending on the vehicle's heading angle when it reaches  $p_{ge}$.
	This is handled by allowing for some safety margin around $p_{ge}$ when it is decided.
	Howver, there exist variants of the RRT planner that considers non-holonomic constraints within the planning process such as \cite{yang2014spline} that could be used instead.
\end{remark}

\begin{remark}\label{rem:ch3:Escape}
	Escape goals $p_{ge}$ can be generated randomly in a safe direction $\alpha_{ge}$ within the FOV (i.e. $\alpha_{ge} \in [\theta-\alpha_0,\theta+\alpha_0]$) at some distance $l_{ge}$ from the current position.
	Thus, $p_{ge}$ can be computed according to the following:
	\begin{equation}
		p_{ge} = p_{tr} + l_{ge} \left[\begin{array}{c}
			\cos\alpha_{ge} \\ \sin\alpha_{ge}
		\end{array}\right]
	\end{equation}
	where $p_{tr}$ is the position at which the trapping situation is detected.
\end{remark}

\section{Simulations}\label{sec:ch3:sim}

The developed hybrid navigation approach was tested using simulations considering different scenarios including static and dynamic environments.
Some knowledge about these environments were assumed known a priori to show the role of the deliberative layer; however, the developed approach can work well even when no such knowledge is available.
Three simulation cases are presented in this section.
The first two cases deal with unknown static and dynamic obstacles respectively.
The third case shows how the proposed navigation scheme can detect and handle trapping situations. 
We considered a maximum linear speed of $V=1~m/s$ and a maximum angular velocity of $u_{max} = 1.5~rad/s$ in all simulations.
Also, the update rate of the control was set to be $0.1$ seconds.
Description and results of these simulations are presented next.

\subsection{Case I: Unknown Static Obstacles}

In the first simulation case, we consider an environment in which the UAV does some repetitive tasks.
Hence, some knowledge about the environment is known a priori  such walls locations.
However, new static obstacles can be added to the environment at different times which are not known to the vehicle.
A real life example of this scenario is when operating in a warehouse or inside a building.
The  warehouse/building layout can be known in advance or from an initial mapping process while objects can be moved around all the time.

Figure~\ref{fig:ch3:simStatic1} shows the considered environment for this simulation case.
The initial map available to the vehicle includes only information about the two walls (shown in black).
There are also 30 randomly generated unknown obstacles with different sizes (radius between $0.2m$ and $1m$).
Note that the generation process of obstacles rejects generating new obstacles that are very close to available ones to avoid having a cluttered or blocking situation.
A bounding shape can usually be estimated in practice by the vehicle's perception system to represent very close obstacles as a single object. 

It can clearly be seen from Fig~\ref{fig:ch3:simStatic1} that the proposed hybrid navigation strategy can safely guide the vehicle starting from some initial position (green square marker) to reach the goal location (red star marker).
This figure shows both the path generated by the global planner based on the initial knowledge about the environment as well as the actual executed path by the vehicle.
The vehicle can successfully track the planned path whenever it has good clearance from obstacles.
Once an obstacle is detected by the vehicle's sensors, the vehicle switches to the reactive mode $\mathcal{M}_2$ to move around the obstacle.
Then, it goes back to the path tracking mode $\mathcal{M}_1$ whenever it is clear to do so according to the switching rule \textbf{R2} as described in subsection~\ref{subs:ch3:exe}.
Example locations during the motion when such switching from $\mathcal{M}_1$ to $\mathcal{M}_2$ occurs are shown in \Fig{fig:ch3:simStatic2}.
Overall, these results confirms that the proposed method can guide the vehicle safely among unknown static obstacles.

\begin{figure}[h]
	\centering
	\includegraphics[width=0.75\linewidth]{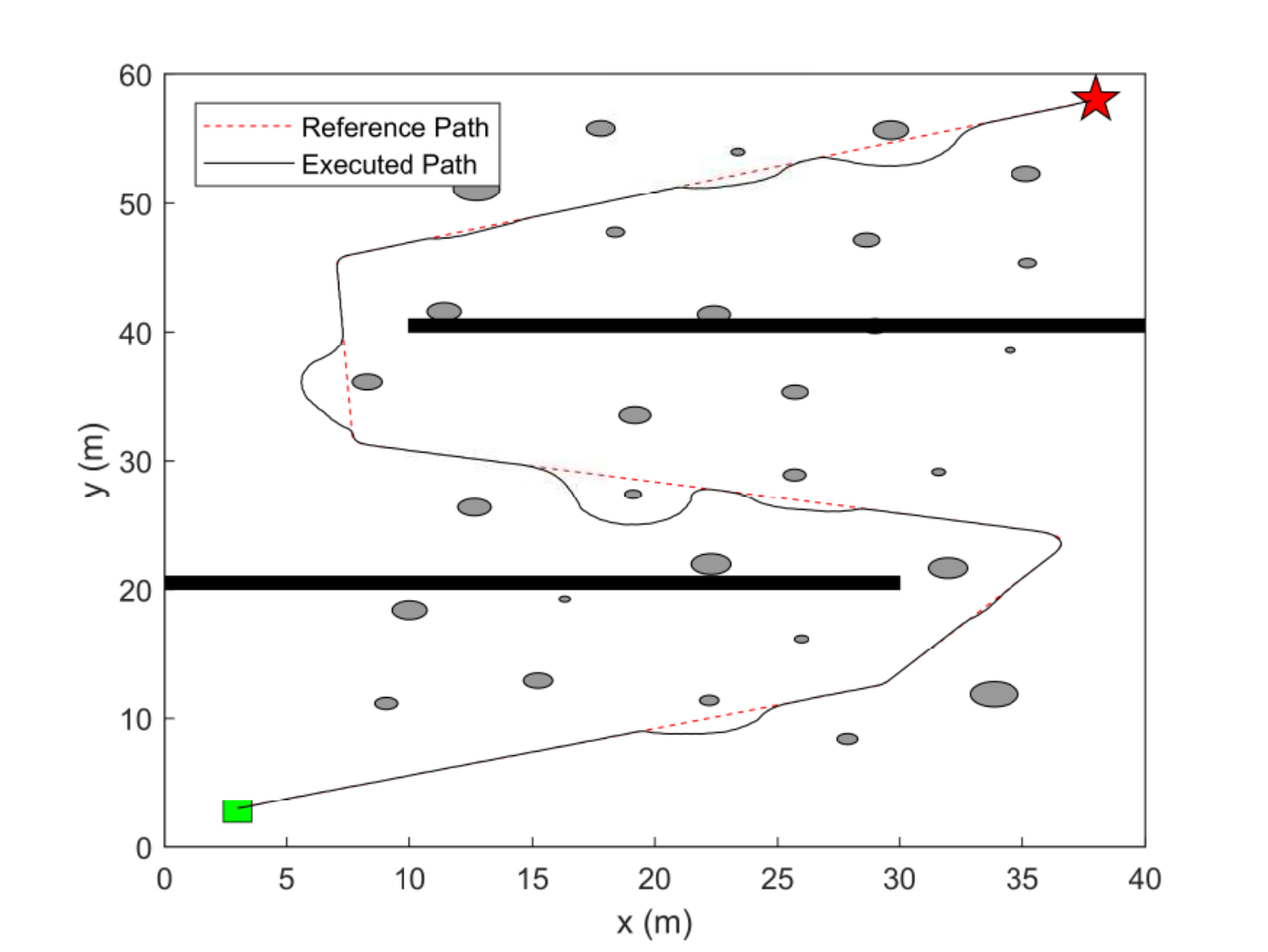} 
	\caption{Simulation Case I: Initially planned path and executed motion (static environment)}\label{fig:ch3:simStatic1}
\end{figure}

\begin{figure}[h]
	\centering
	\begin{adjustbox}{minipage=\linewidth,scale=0.8}
		\begin{subfigure}[t]{0.5\textwidth}
			\centering
			\includegraphics[width=\linewidth]{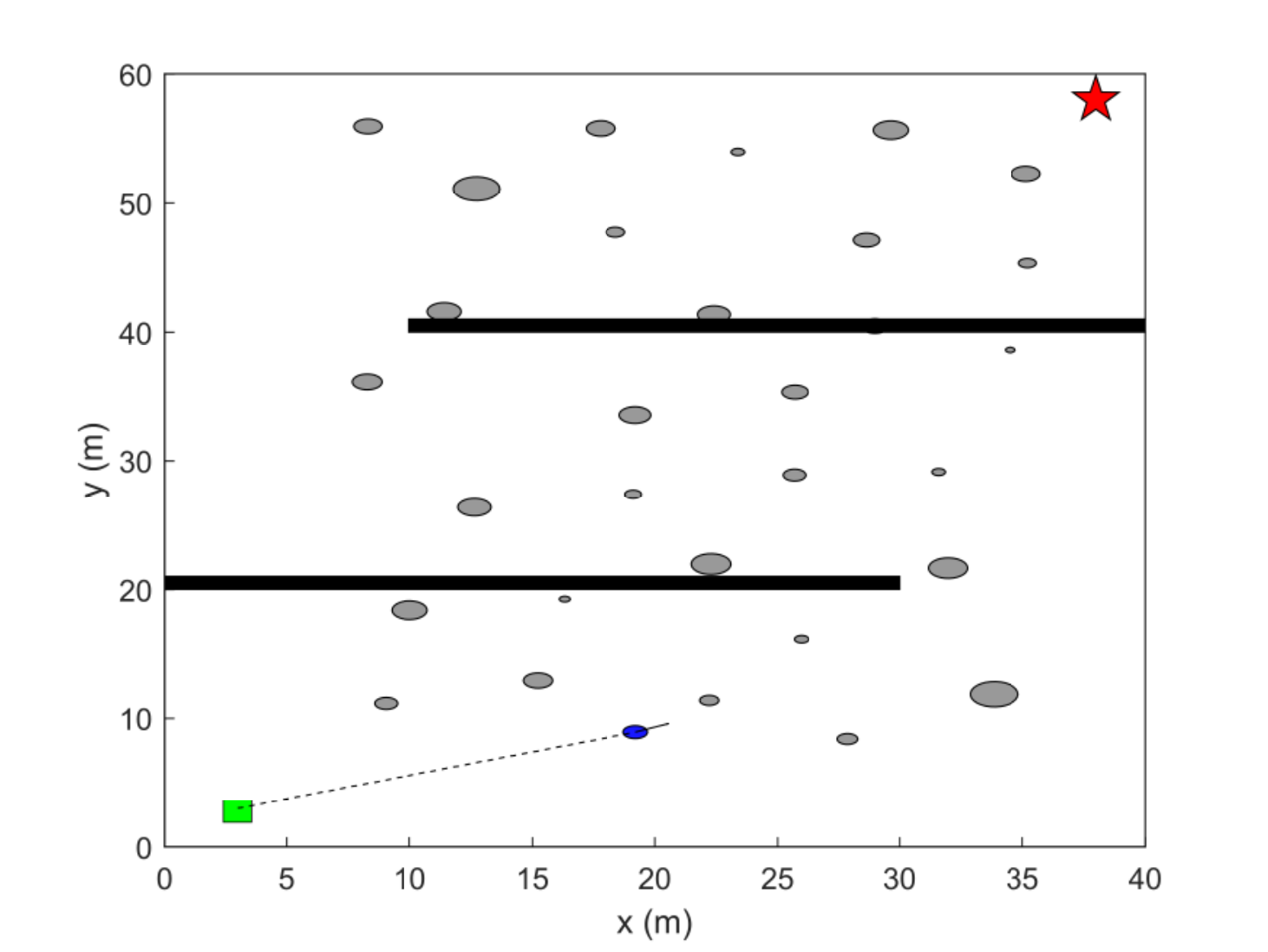} 
			\caption{}
		\end{subfigure}
				\hfill
				\begin{subfigure}[t]{0.5\textwidth}
					\centering
					\includegraphics[width=\linewidth]{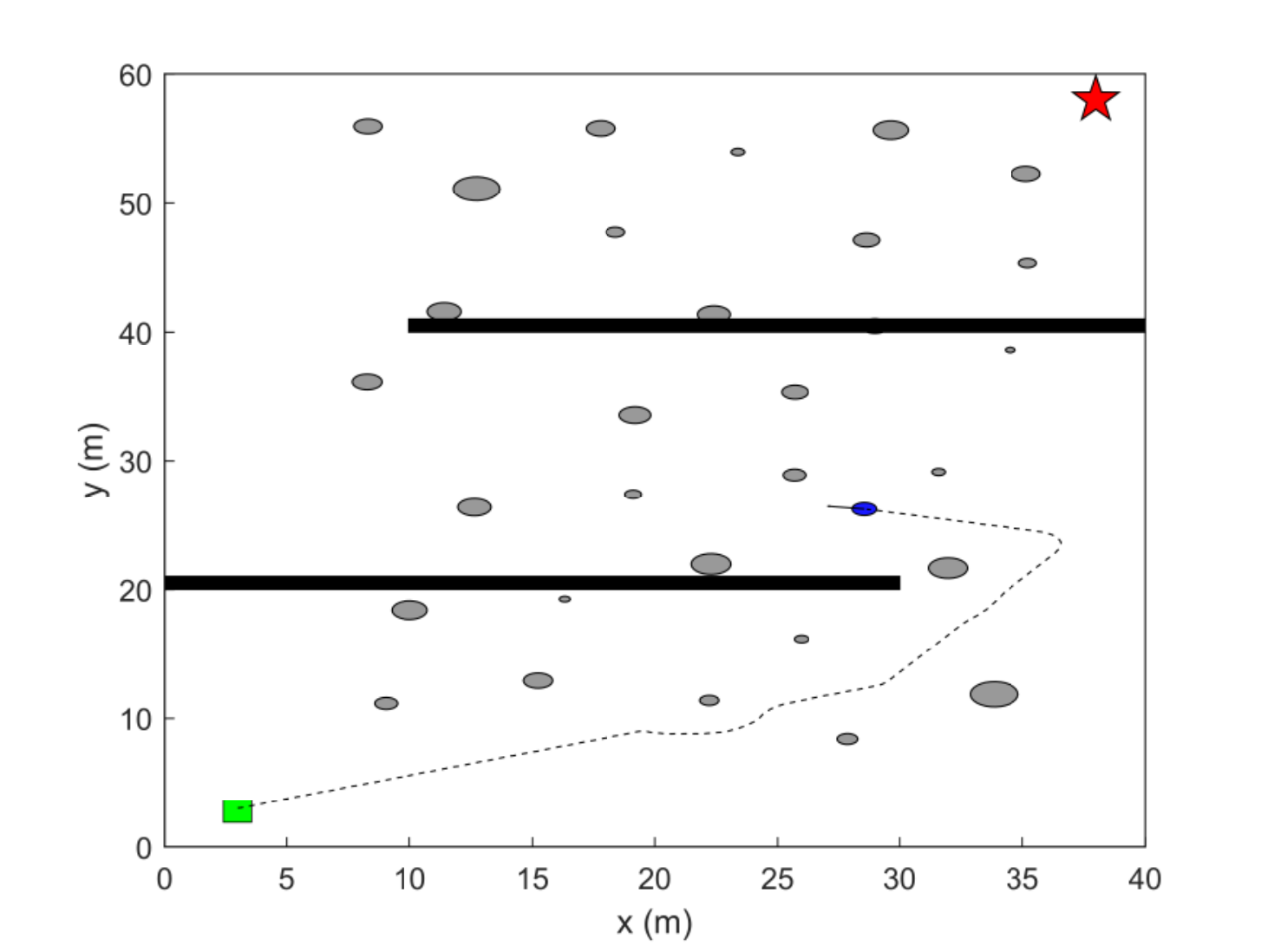} 
					\caption{}
				\end{subfigure}
				
				\begin{subfigure}[t]{0.5\textwidth}
					\centering
					\includegraphics[width=\linewidth]{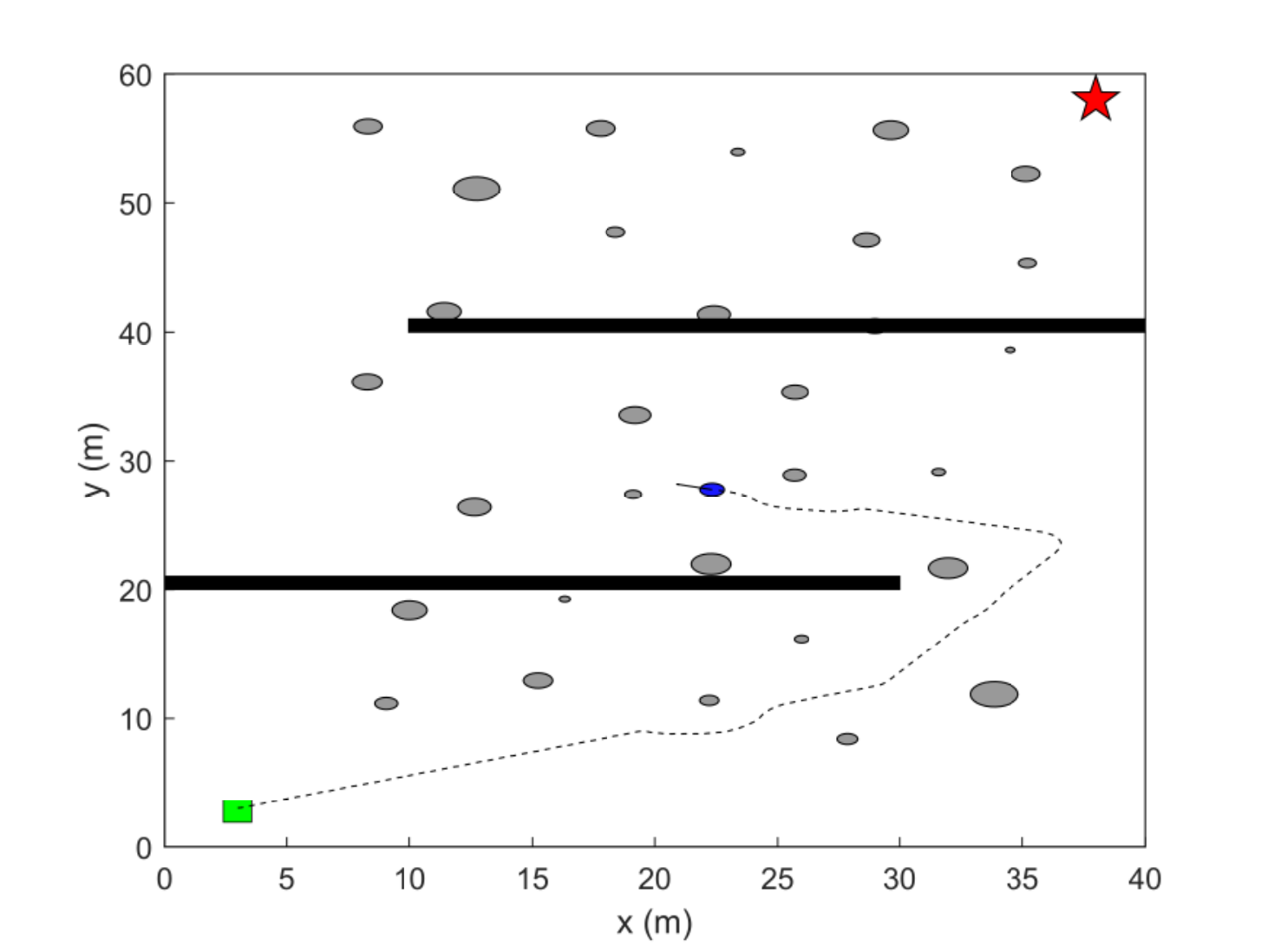} 
					\caption{}
				\end{subfigure}
		\hfill
		\begin{subfigure}[t]{0.5\textwidth}
			\centering
			\includegraphics[width=\linewidth]{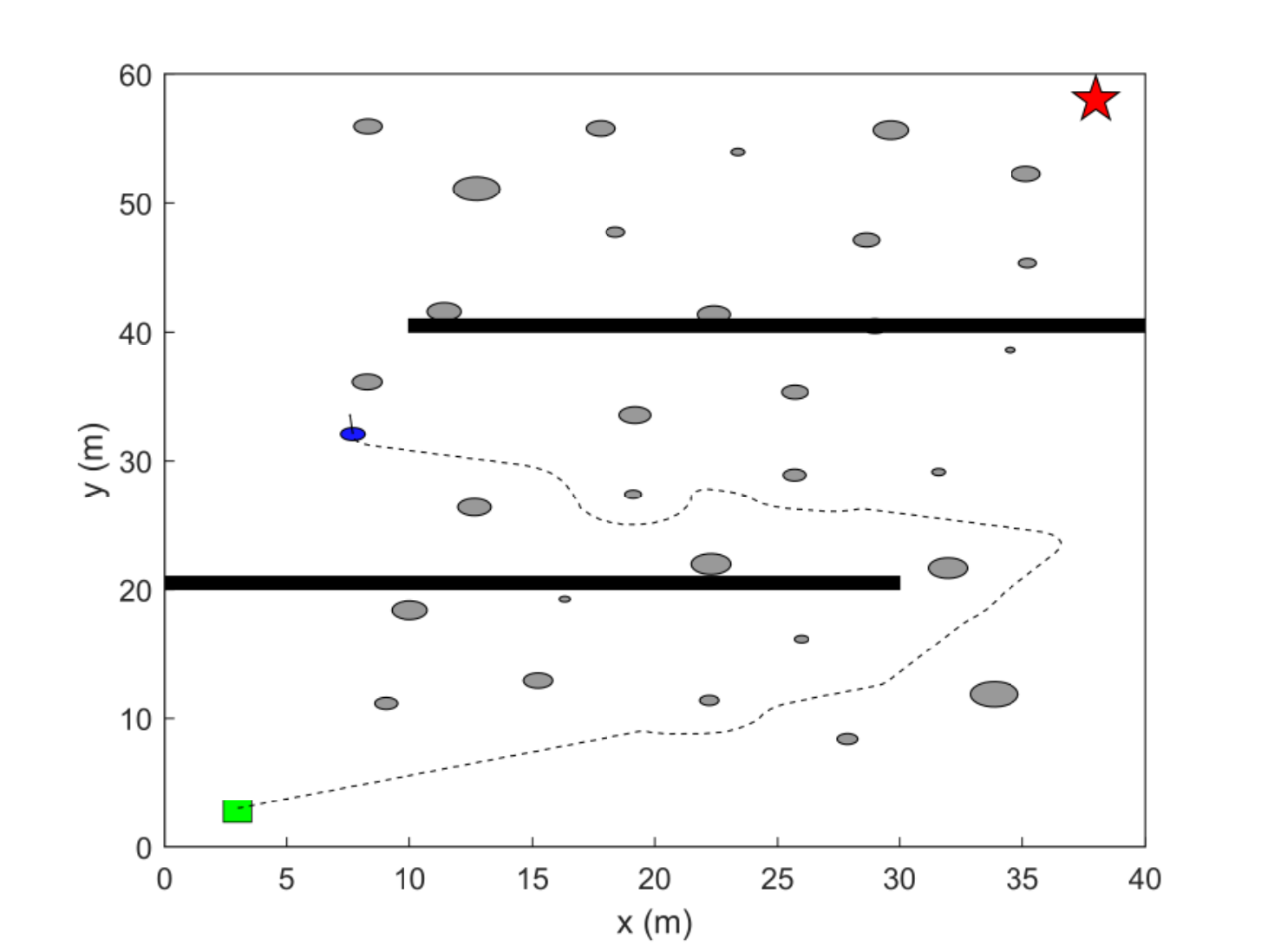} 
			\caption{}
		\end{subfigure}
		
				\begin{subfigure}[t]{0.5\textwidth}
					\centering
					\includegraphics[width=\linewidth]{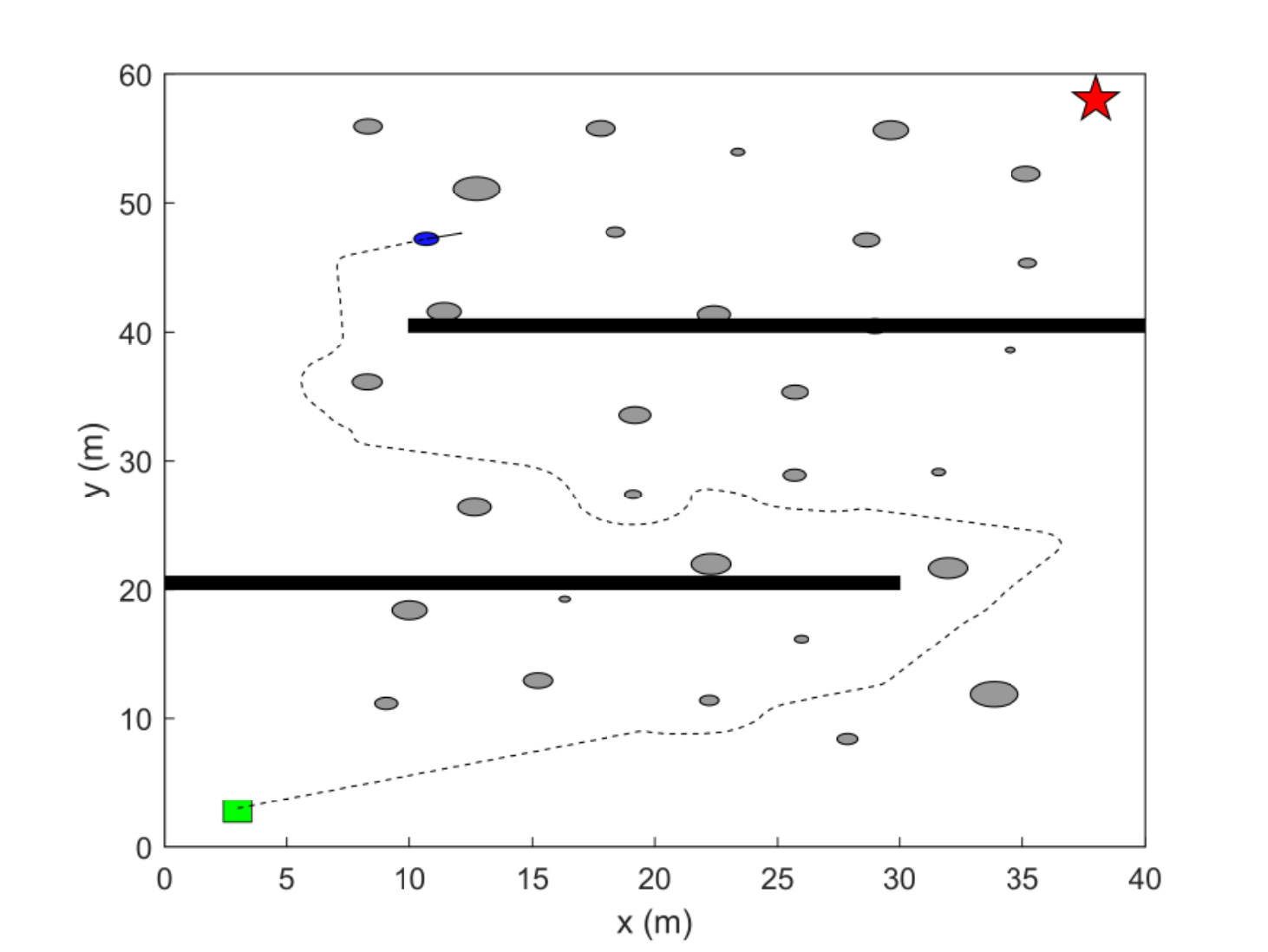} 
					\caption{}
				\end{subfigure}
				\hfill
				\begin{subfigure}[t]{0.5\textwidth}
					\centering
					\includegraphics[width=\linewidth]{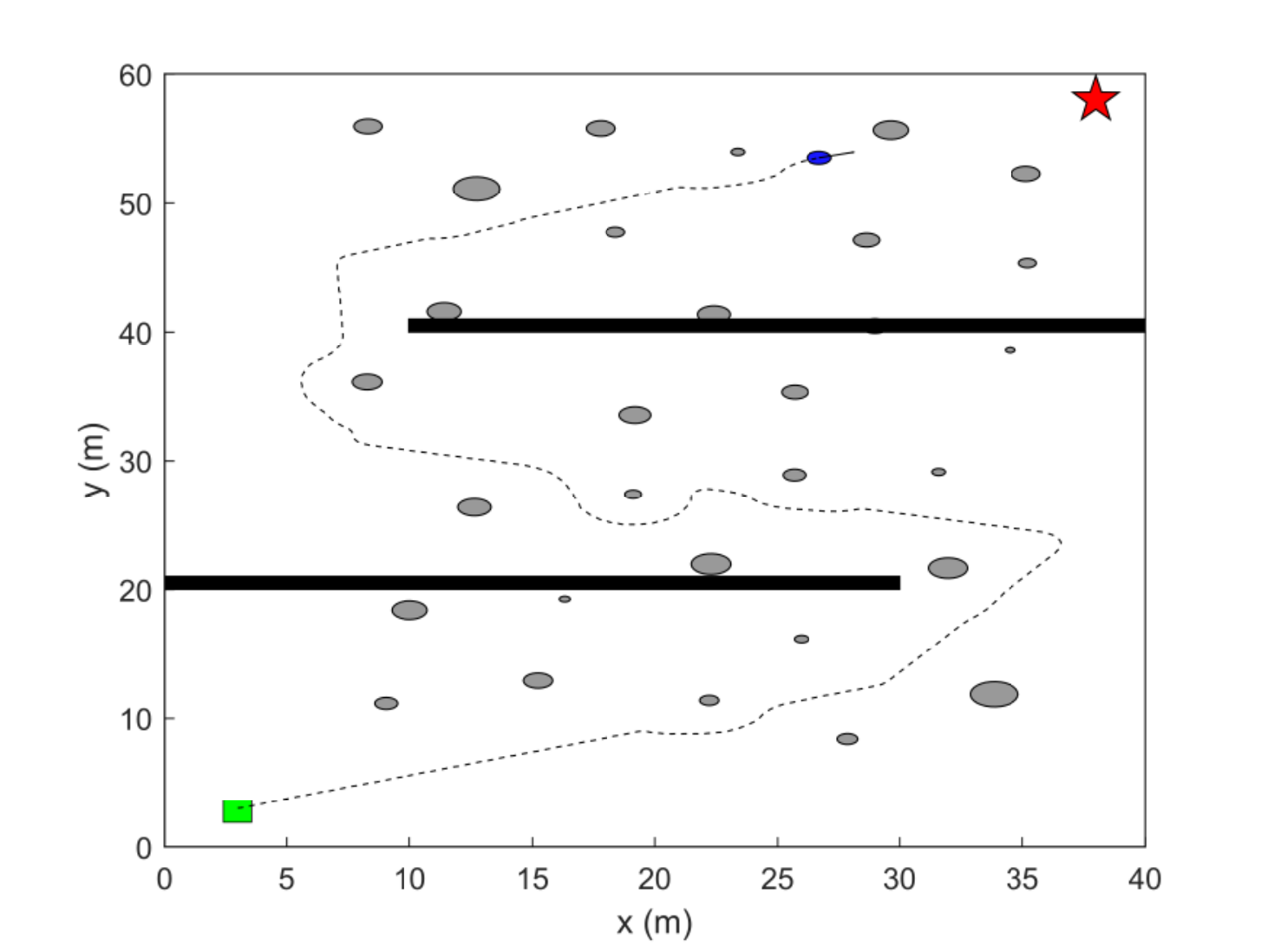}
					\caption{}
				\end{subfigure}
		
		\begin{subfigure}[t]{0.5\textwidth}
			\centering
			\includegraphics[width=\linewidth]{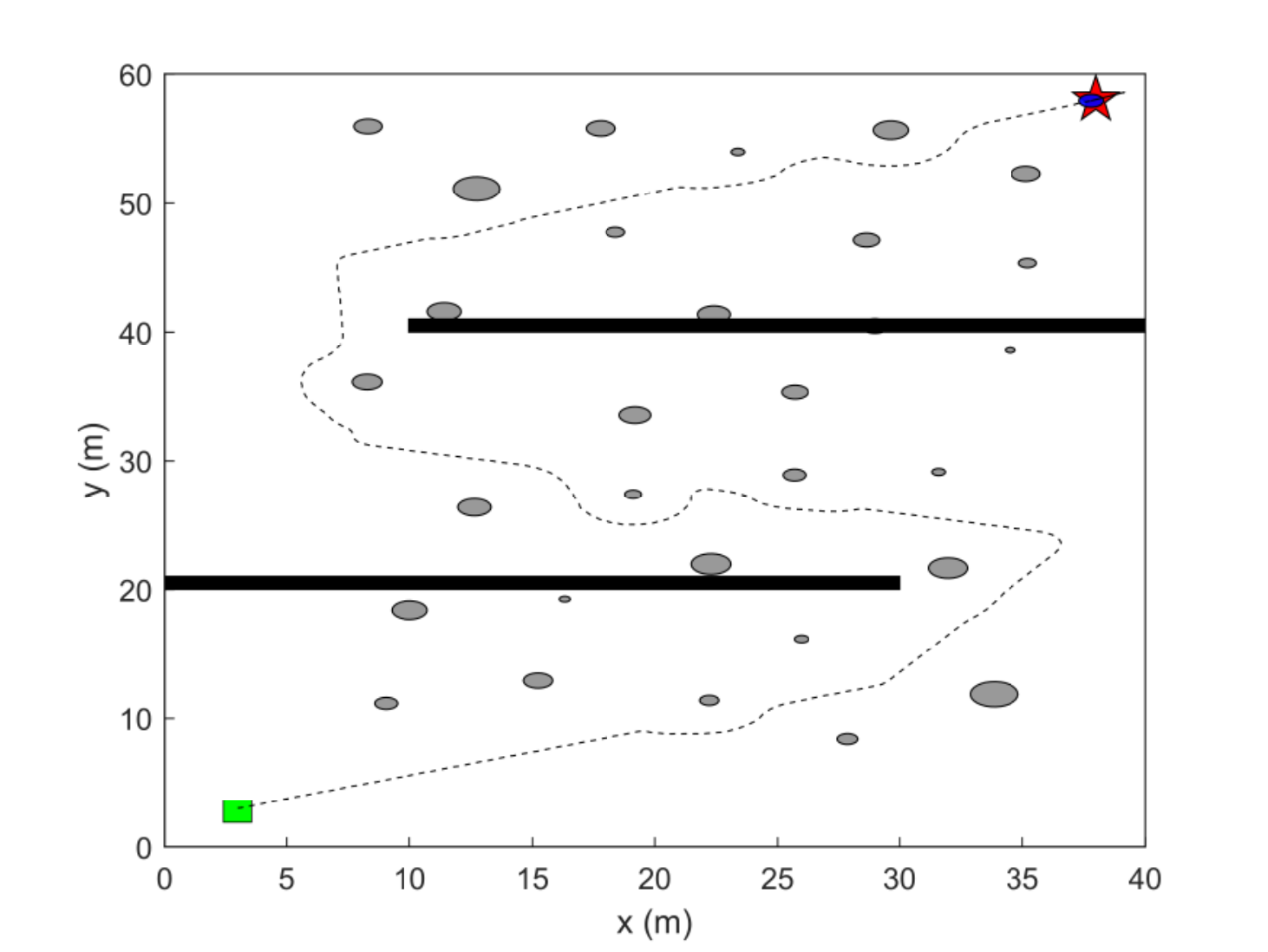}
			\caption{}
		\end{subfigure}
		
		\caption{Simulation Case I: different instances during motion at which switching to reactive mode $\mathcal{M}_2$ is triggered by the execution layer}\label{fig:ch3:simStatic2}
	\end{adjustbox}
\end{figure}

\subsection{Case II: Unknown Dynamic Obstacles}

The second simulation scenario considers the case where there are several unknown moving obstacles which makes the environment highly dynamic.
Similar to previous, some initial knowledge about the environment layout is assumed known which is shown in black in \Fig{fig:ch3:simDynamic1}.
Moreover, there are multiple unknown dynamic obstacles with random sizes between $0.2m$-$1m$ in radius and velocities (less than $1 m/s$) whose trajectories are shown in \Fig{fig:ch3:simDynamic2}.
Note that collisions between different obstacles are not considered in this case for simplicity.

Based on the initial map, the deliberative layer plans a safe path using RRT as explained earlier which is shown as a dashed red line in \Fig{fig:ch3:simDynamic1}.
The vehicle then starts moving in mode $\mathcal{M}_1$ to track the planned (reference) path.
However, due to the highly dynamic nature of the environment, this path becomes unsafe whenever there are obstacles approaching the vehicle as shown in \Fig{fig:ch3:simDynamic3} at different instance during the motion.
In that figure, the vehicle and moving obstacles are represented by blue and grey circular markers respectively with a small arrow denoting the motion direction for each one. 
Each time such a threatening obstacle is detected, the vehicle switches to navigation in the reactive mode $\mathcal{M}_2$ according to the switching rule \textbf{R1}.
This can be seen in Figures~\ref{fig:ch3:simDynamic1}-\ref{fig:ch3:simDynamic2} which show that the vehicle's actual executed path deviates from the planned path at some locations to avoid obstacles.
It is evident from these results that the proposed strategy works well in dynamic environments as well given.
It should be mentioned that the vehicle's maximum velocity much be larger than obstacles' velocities to guarantee safety.
However, the reactive control law can still handle some cases where the obstacles are moving faster the vehicle but with no safety guarantees in some aggressive scenarios.

\begin{figure}[h]
	\centering
	\includegraphics[width=0.7\linewidth]{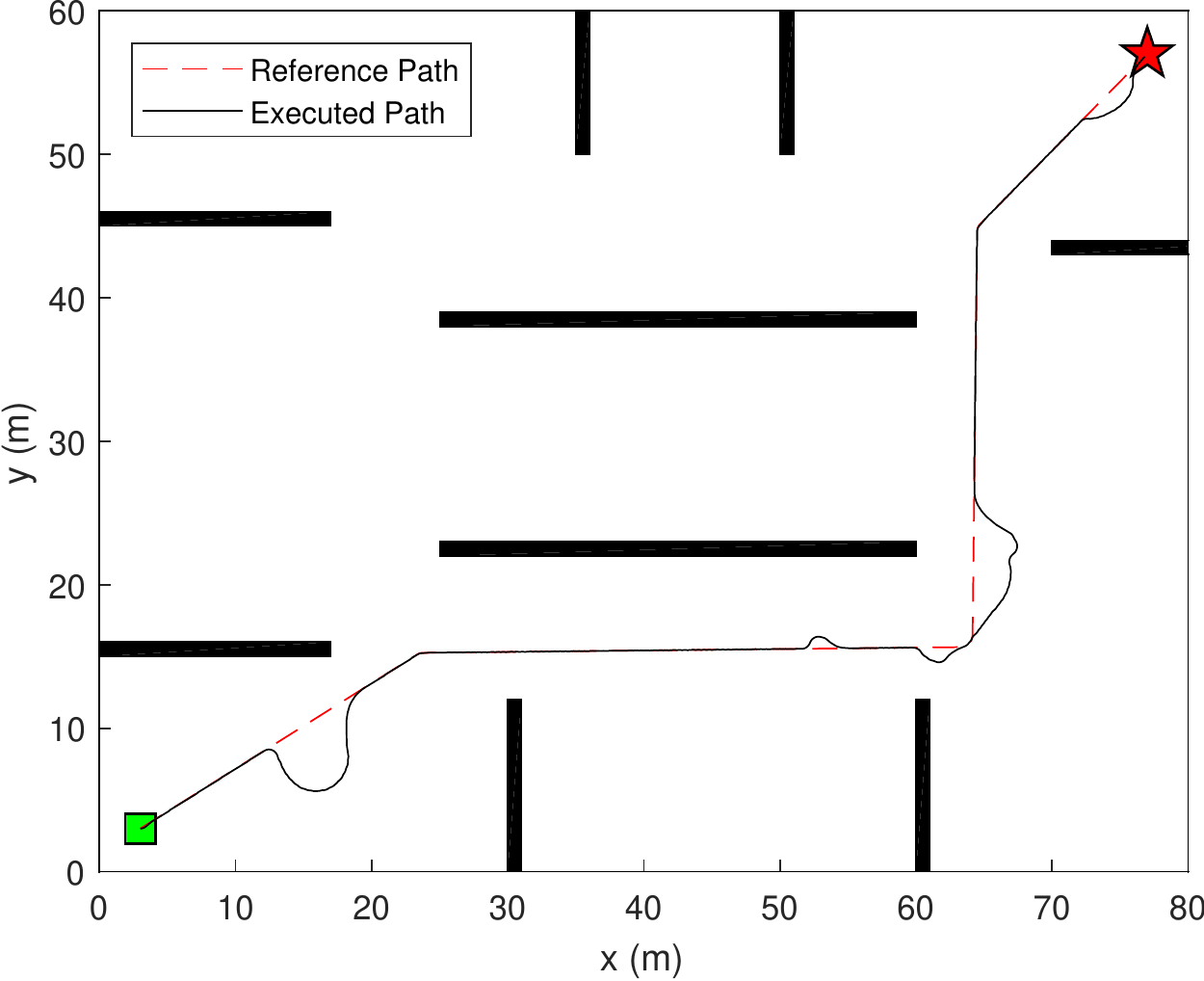} 
	\caption{Simulation Case II: initial planned path and executed motion (dynamic environment)}\label{fig:ch3:simDynamic1}
\end{figure}

\begin{figure}[h]
	\centering
	\includegraphics[width=0.7\linewidth]{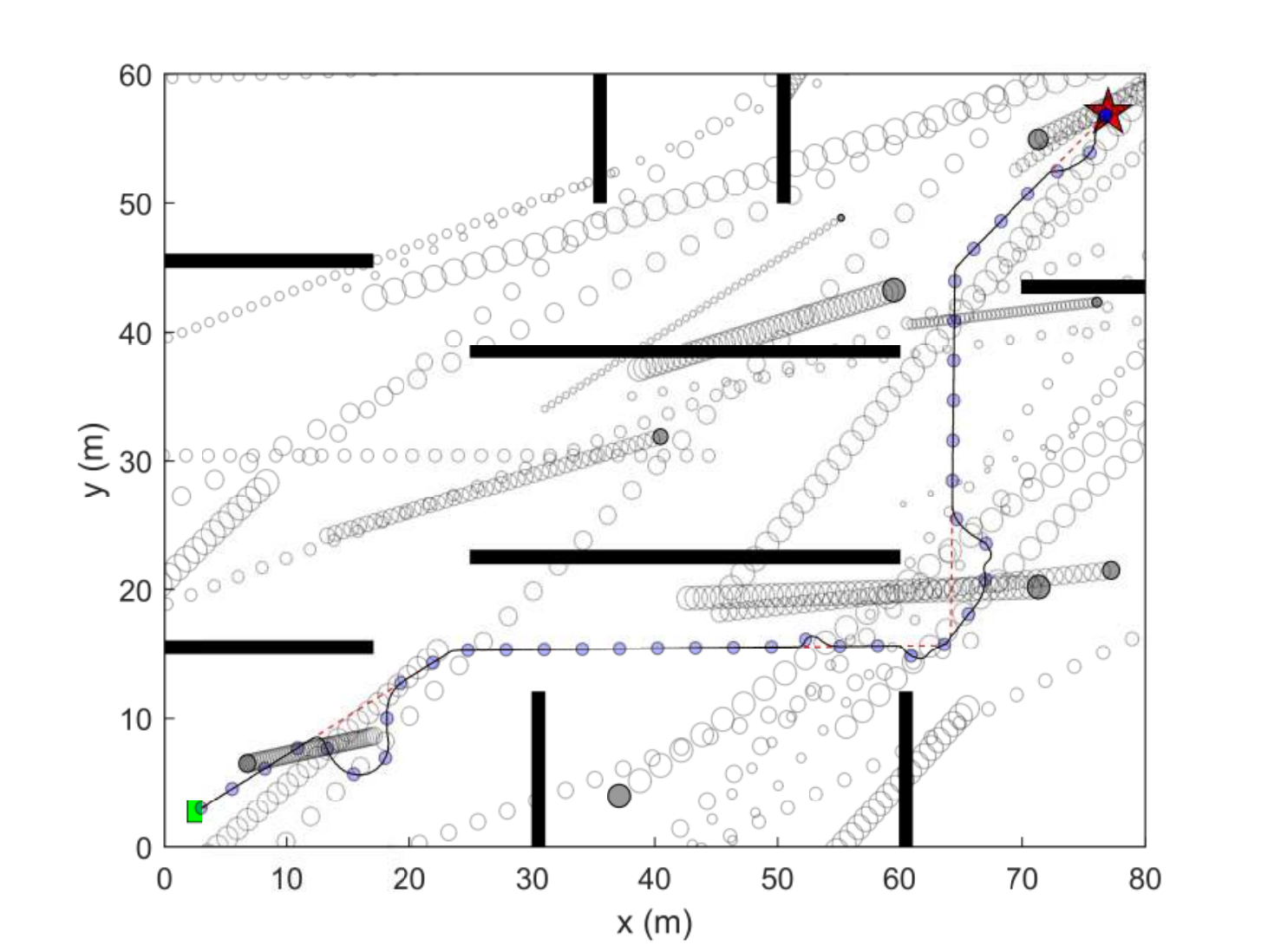} 
	\caption{Simulation Case II: Dynamic obstacles trajectories }\label{fig:ch3:simDynamic2}
\end{figure}

\begin{figure}[h]
	\centering
	\begin{adjustbox}{minipage=\linewidth,scale=1}
		\begin{subfigure}[t]{0.5\textwidth}
			\centering
			\includegraphics[width=\linewidth]{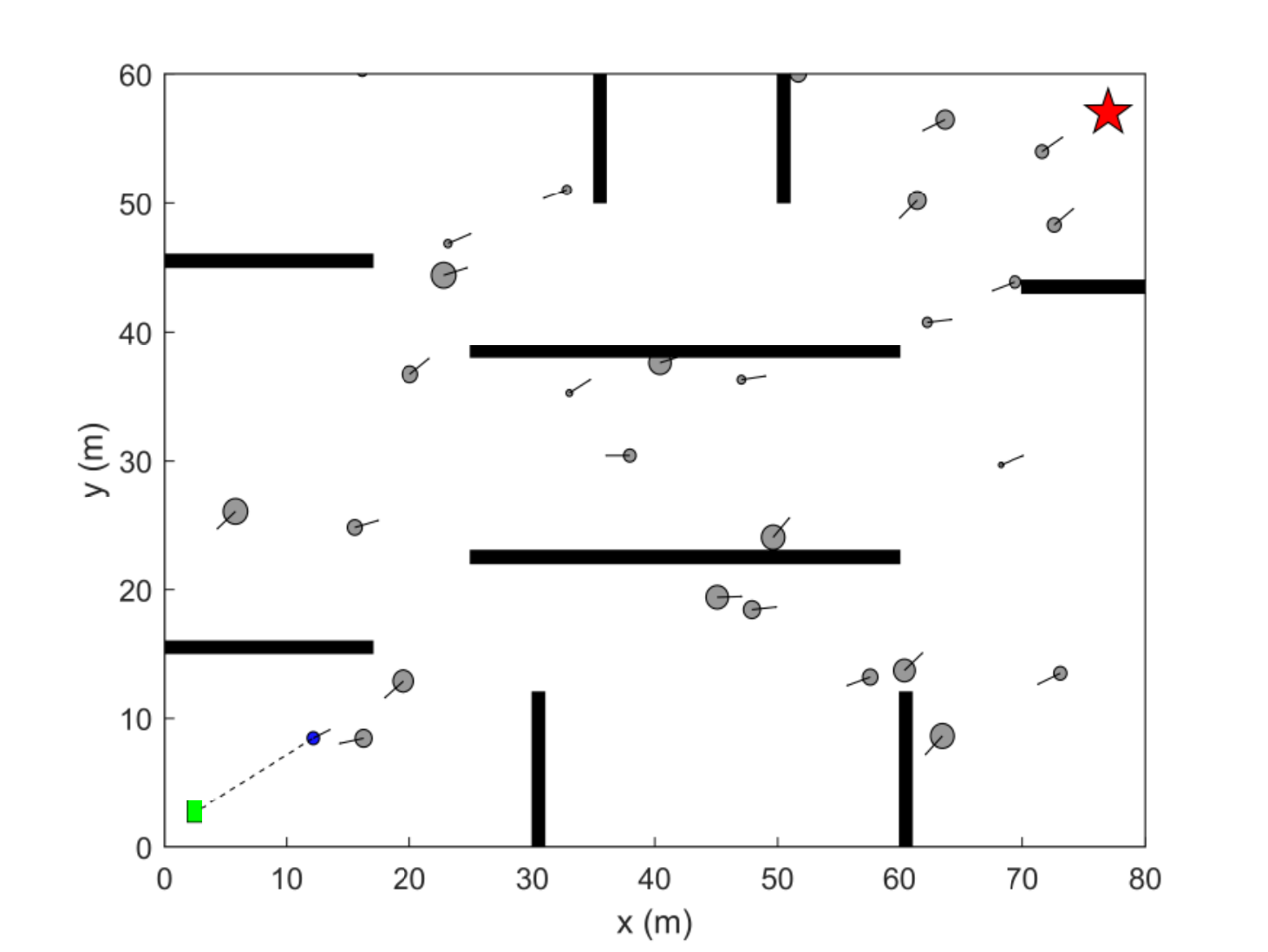} 
			\caption{}
		\end{subfigure}
		\hfill
		\begin{subfigure}[t]{0.5\textwidth}
			\centering
			\includegraphics[width=\linewidth]{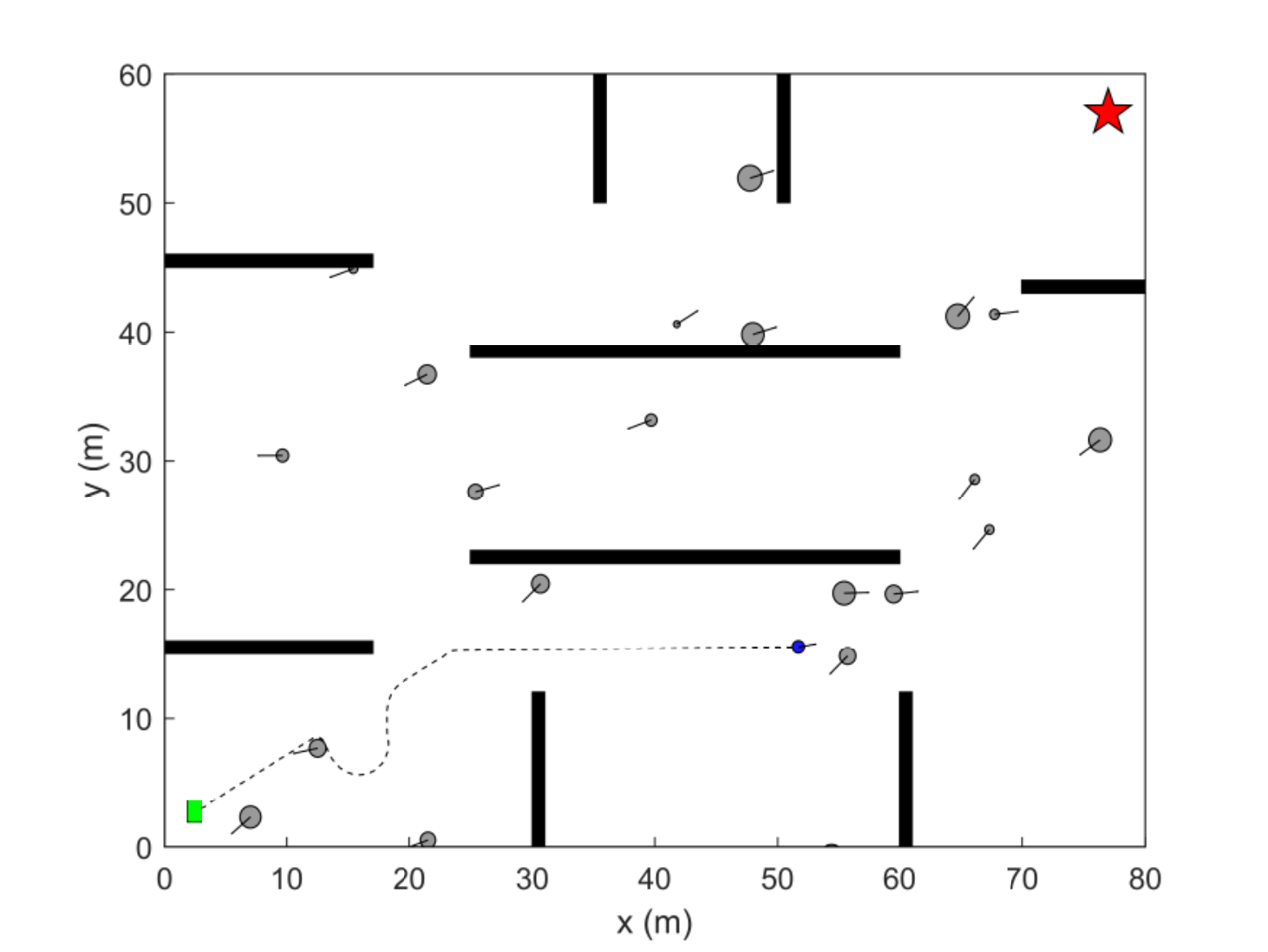} 
			\caption{}
		\end{subfigure}
		
		\begin{subfigure}[t]{0.5\textwidth}
			\centering
			\includegraphics[width=\linewidth]{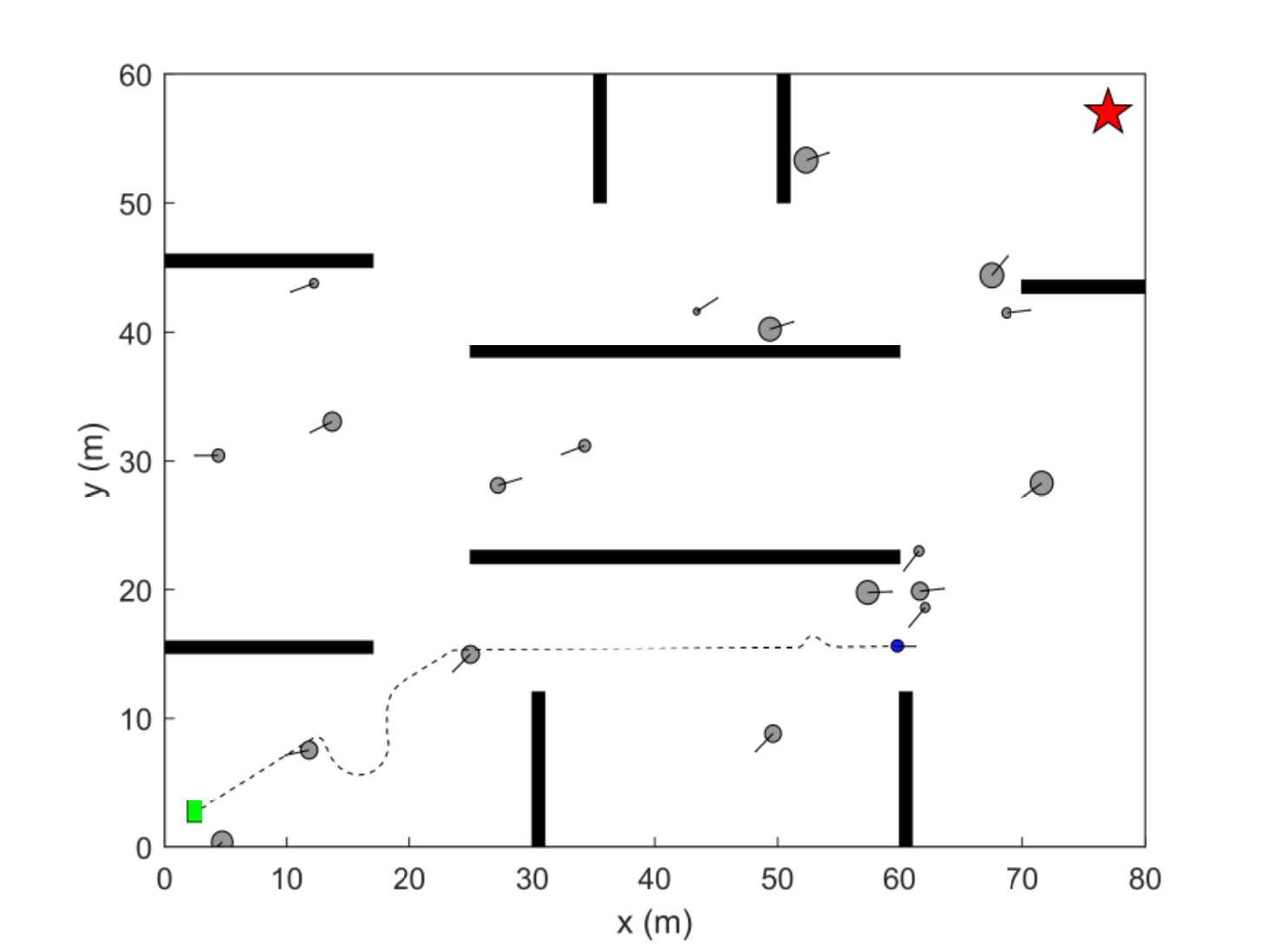} 
			\caption{}
		\end{subfigure}
		\hfill
		\begin{subfigure}[t]{0.5\textwidth}
			\centering
			\includegraphics[width=\linewidth]{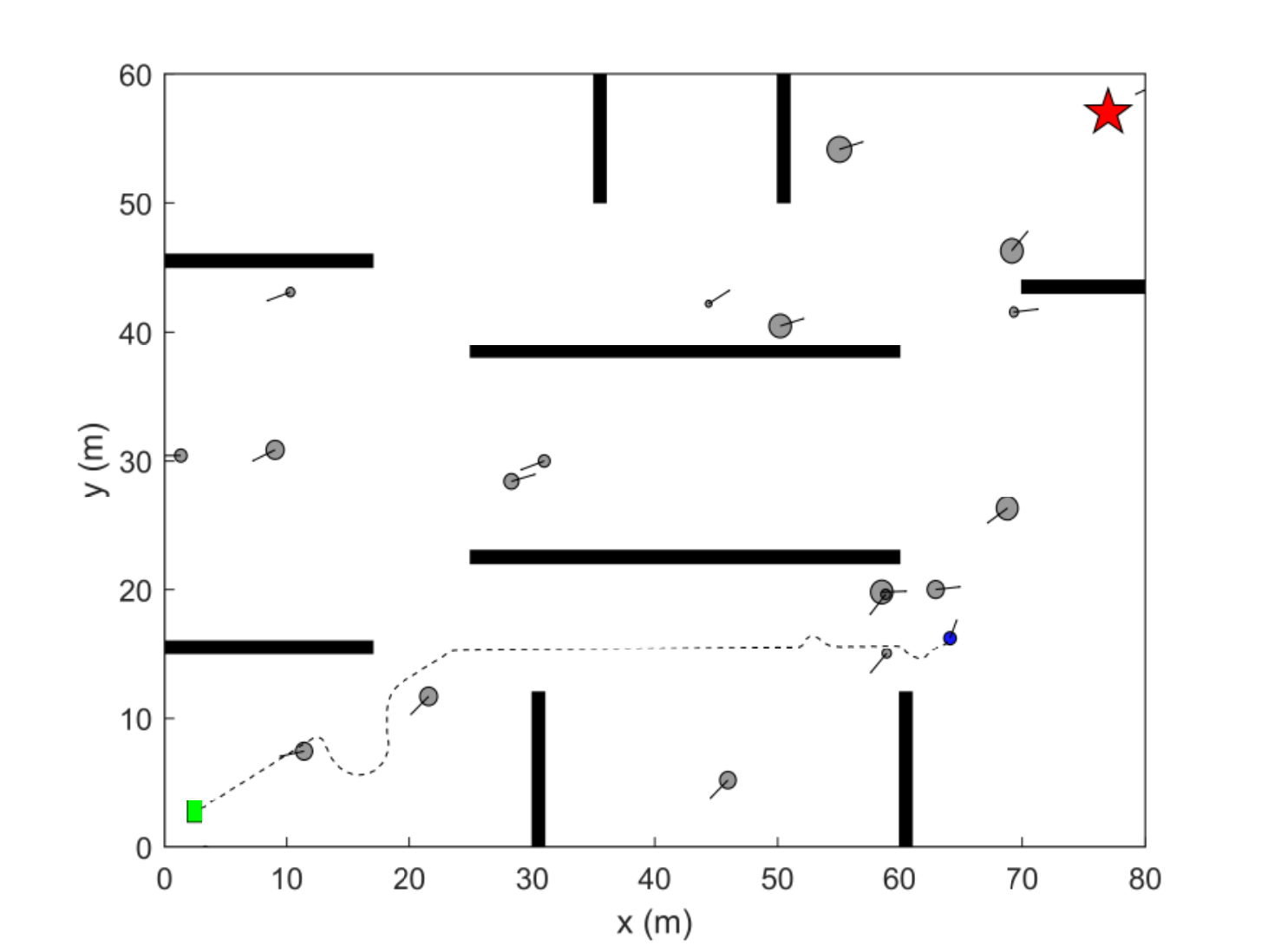} 
			\caption{}
		\end{subfigure}
		
		\begin{subfigure}[t]{0.5\textwidth}
			\centering
			\includegraphics[width=\linewidth]{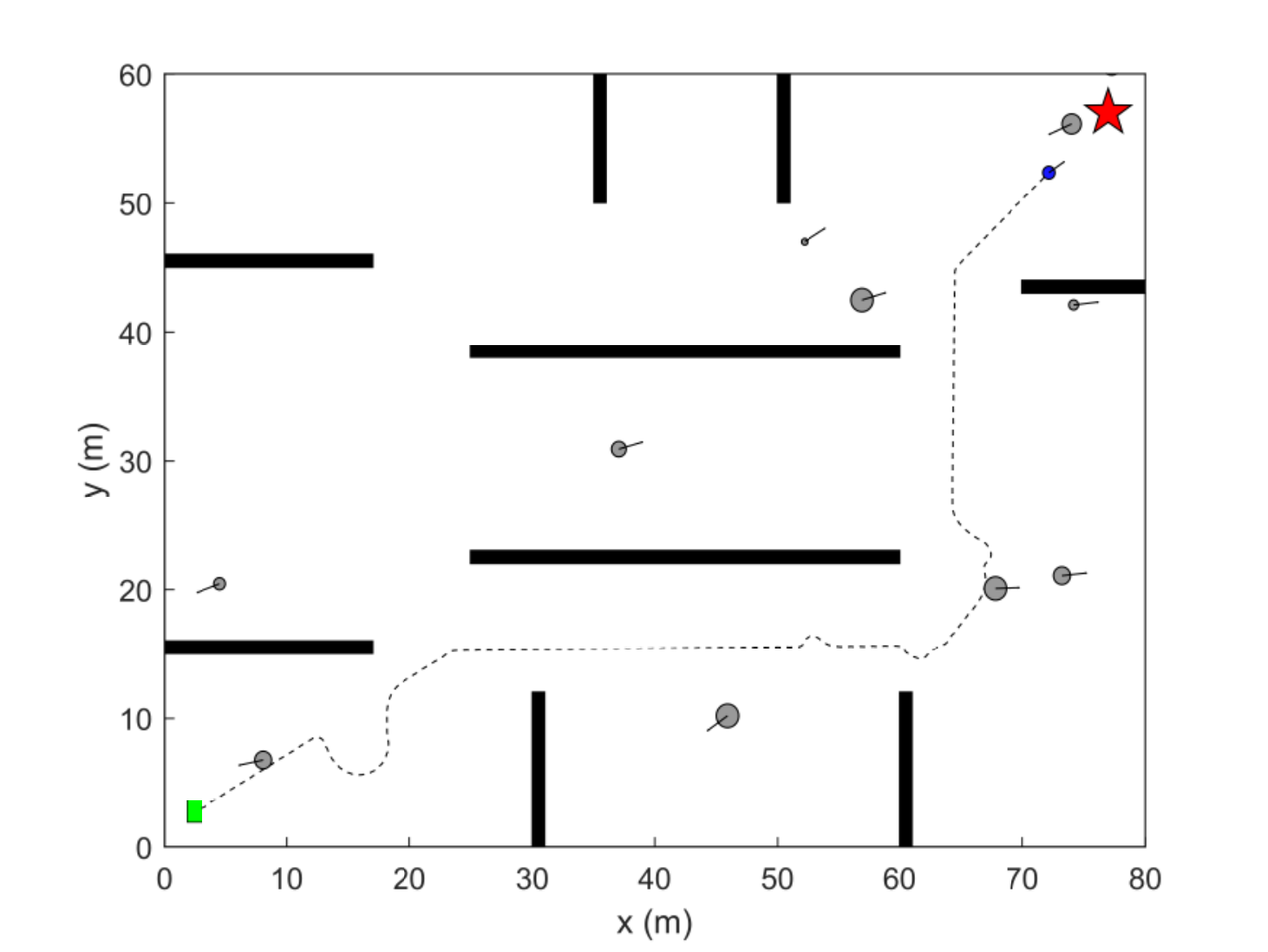}
			\caption{}
		\end{subfigure}
		\hfill
		\begin{subfigure}[t]{0.5\textwidth}
			\centering
			\includegraphics[width=\linewidth]{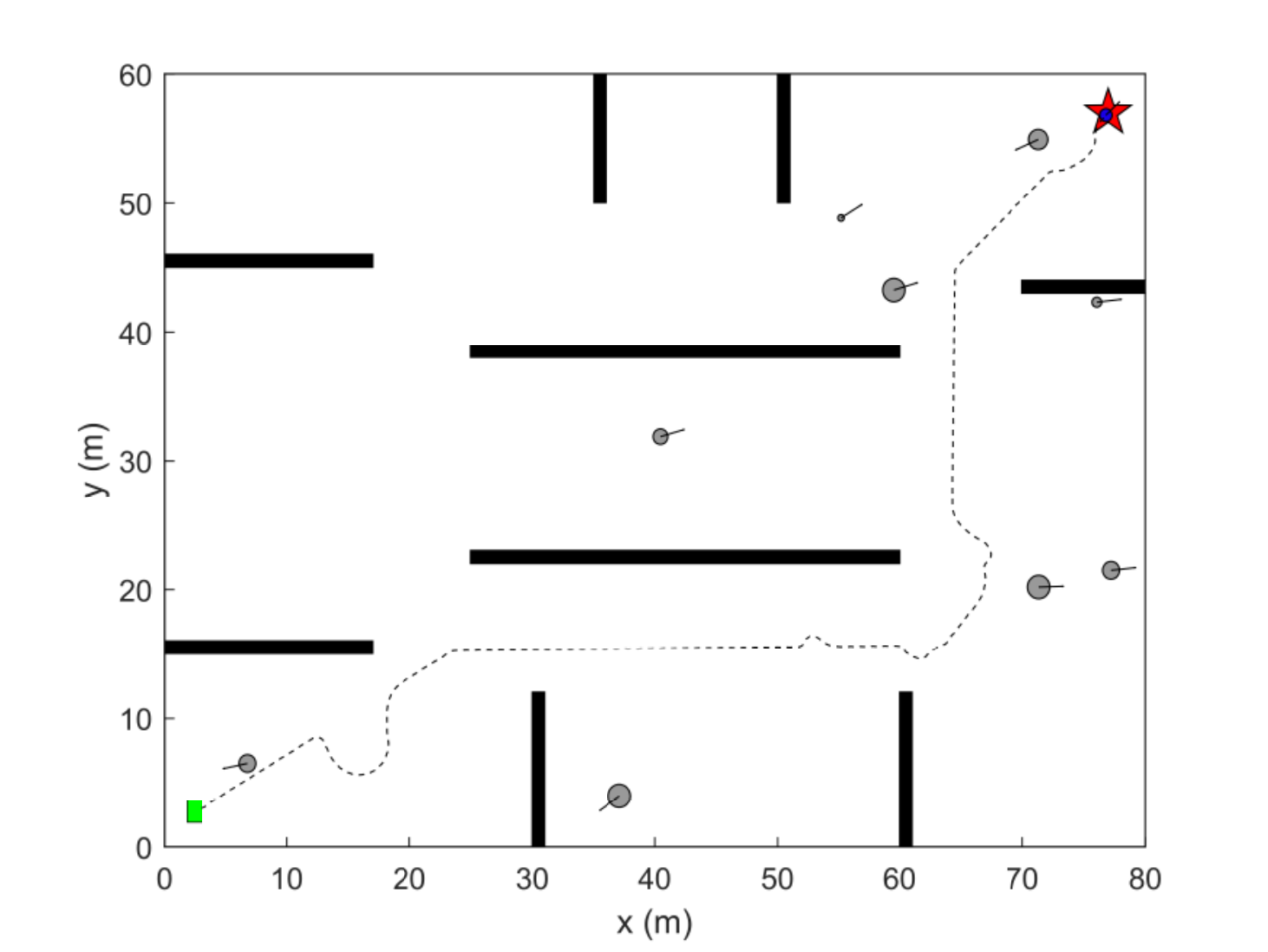}
			\caption{}
		\end{subfigure}
		
		\caption{Simulation Case II: Different instances during motion at which switching to reactive mode triggers}\label{fig:ch3:simDynamic3}
	\end{adjustbox}
\end{figure}

\subsection{Case III: Trapping Situation Scenario}

An additional simulation case was carried out to show the effectiveness of the proposed method in trapping situations.
We considered an environment as shown in \Fig{fig:ch3:simReplan1} where information is initially available regarding obstacles highlighted in black while the grey ones are unknown.
As in previous cases, the vehicle initially plan a path towards the goal location (red star marker).
It is clear from \Fig{fig:ch3:simReplan1} that there are only two ways to reach the goal based on the initial knowledge.
However, one of these two ways is actually obstructed by the large unknown obstacle shown in grey.
We considered one of the simulations where the initial planned path goes through this obstructed way as shown in \Fig{fig:ch3:simReplan1} to show how such scenario is handled by our approach.

As the vehicle tracks the planned path, it reaches a position where most of the FOV is obstructed by a newly detected obstacle (i.e. $M(\alpha,t)=1,\ \forall \alpha\in[\theta-\alpha_0,\theta+\alpha_0]$) as shown in \Fig{fig:ch3:simReplan2} which indicates a trapping situation as was explained in subsection~\ref{subs:ch3:exe}.
Note that the edges of the map are also considered to be non traversable.
Also, we select the design parameter $C$ to be less than the sensing range to avoid switching to the reactive mode $\mathcal{M}_2$ before detecting the trapping situation.
Hence, the execution layer immediately generates an escape goal based on \cref{rem:ch3:Escape} in some direction within $[\theta-\frac{\pi}{2},\theta+\frac{\pi}{2}]$ and at a chosen distance of $3m$.
It also requests a new path from the global planner starting from the escape goal and utilizing the updated map of the environment.
The generated escape goal and new planned path are shown in \Fig{fig:ch3:simReplan3}.
The complete vehicle trajectory is presented in \Fig{fig:ch3:simReplan4} which clearly shows a collision-free motion avoiding other unknown obstacles along the newly planned path.
It is worth mentioning that if a purely reactive method was used in this situation, it would result in an inefficient motion when following the boundary of the obstructing wall in either direction when the vehicle is at the position shown in \Fig{fig:ch3:simReplan2}.
It can be seen the vehicle would take a very long path to reach the goal by just following the walls boundaries.
This shows the importance of considering hybrid navigation strategies as was suggested.

\begin{figure}[h]
	\centering
	\begin{adjustbox}{minipage=\linewidth,scale=1}
		\begin{subfigure}[t]{0.5\textwidth}
			\centering
			\includegraphics[width=\linewidth]{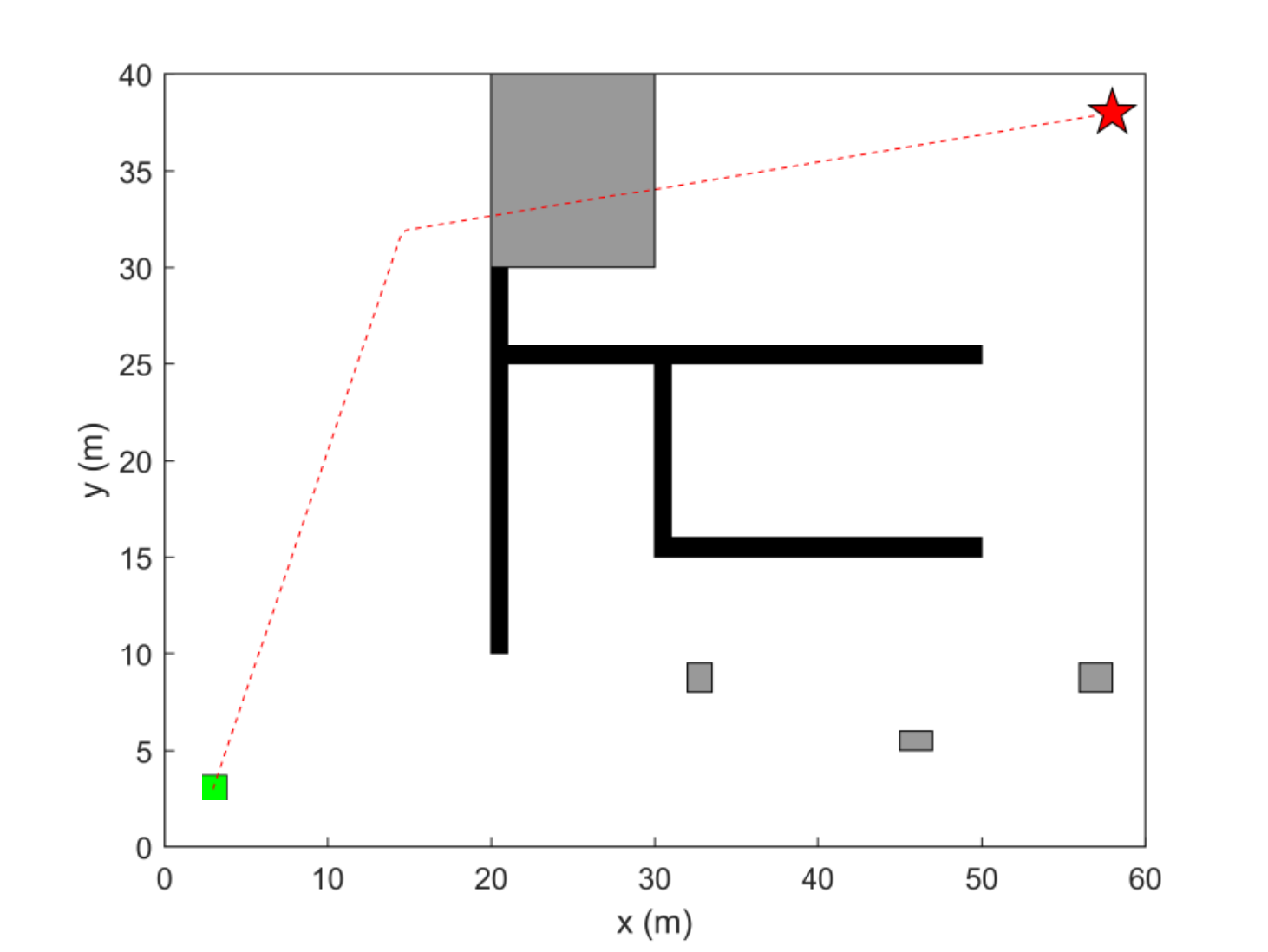} 
			\caption{Initially Planned Path}
			\label{fig:ch3:simReplan1}
		\end{subfigure}
		\hfill
		\begin{subfigure}[t]{0.5\textwidth}
			\centering
			\includegraphics[width=\linewidth]{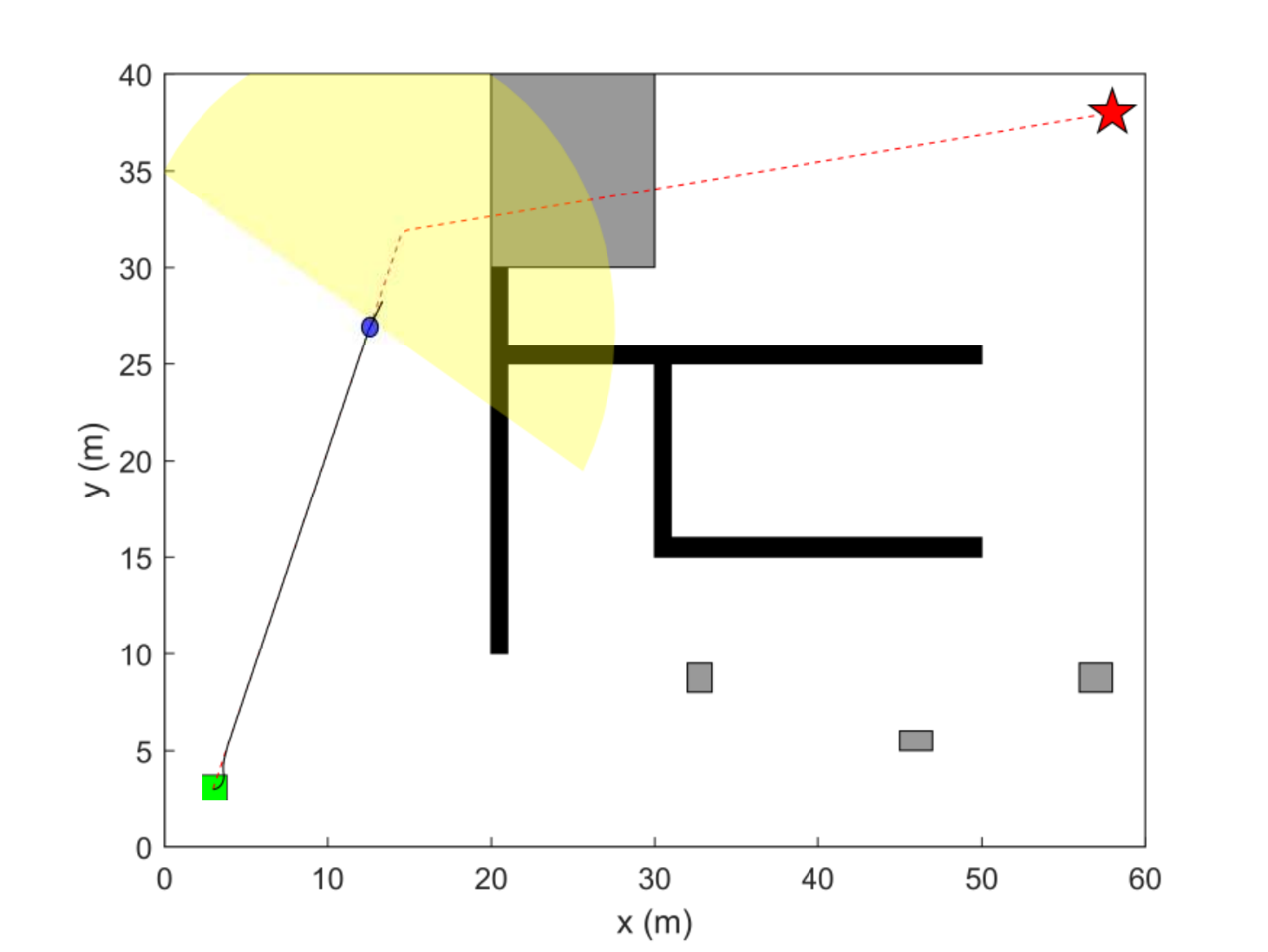} 
			\caption{Trapping Situation Detection}
			\label{fig:ch3:simReplan2}
		\end{subfigure}
		
		\begin{subfigure}[t]{0.5\textwidth}
			\centering
			\includegraphics[width=\linewidth]{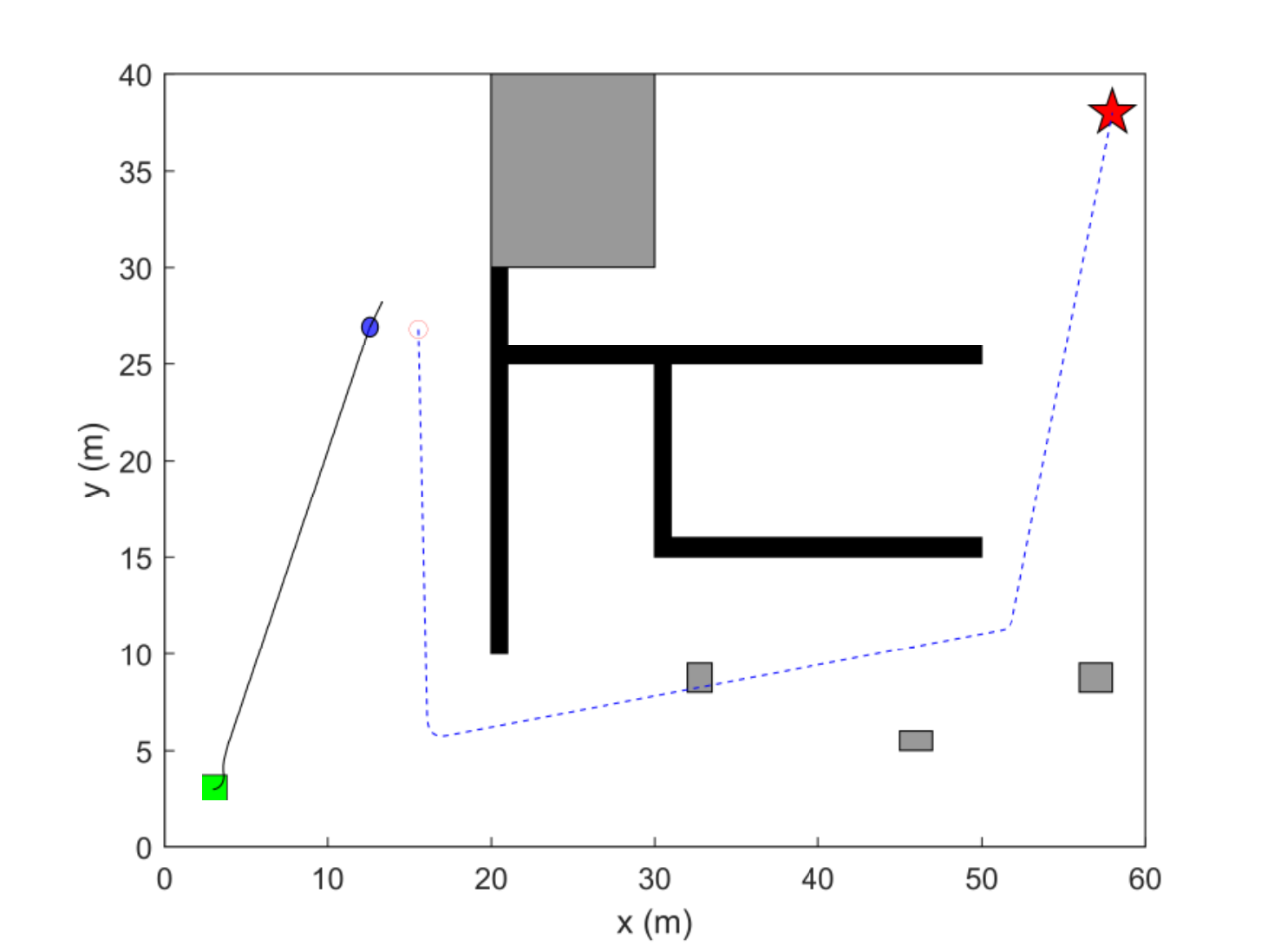} 
			\caption{Escape Goal \& Replanning}
			\label{fig:ch3:simReplan3}
		\end{subfigure}
		\hfill
		\begin{subfigure}[t]{0.5\textwidth}
			\centering
			\includegraphics[width=\linewidth]{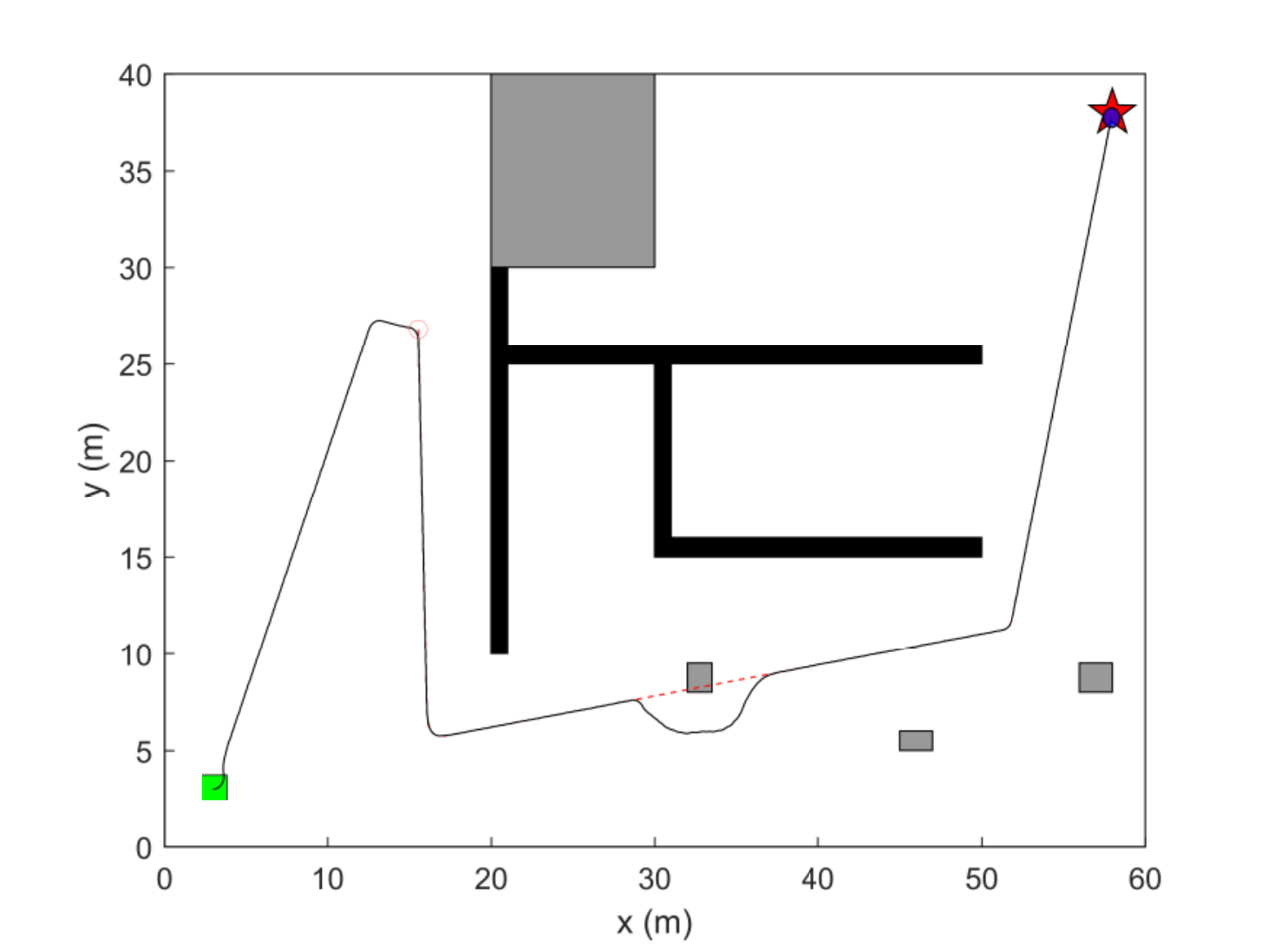} 
			\caption{Complete Trajectory}
			\label{fig:ch3:simReplan4}
		\end{subfigure}
		\caption{Simulation Case III: A trapping situation scenario}\label{fig:ch3:simReplan}
	\end{adjustbox}
\end{figure}

\section{Conclusion}\label{sec:ch3:conc}

A hybrid navigation strategy for UAVs was presented in this chapter.
The problem formulation considered a general 2D nonholonomic kinematic model for UAVs flying at a fixed altitude.
This model is also applicable to ground and underwater vehicles.
A global planner based on rapidly-exploring random tree algorithm was used which works well in three-dimensional spaces as well.
Simulation results confirm that the proposed hybrid navigation method works well in 2D environments with unknown static and dynamic obstacles.
The overall strategy structure can be modified to consider 3D motion by adopting 3D control laws for both path tracking and reactive modes based on a 3D kinematic model.
The following chapters present 3D control laws fr UAVs that can be adopted within this hybrid navigation scheme.

    \chapter{A 3D Reactive Collision-Free Navigation Strategy for Nonholonomic Mobile Robots\label{cha:methods_reactive3D}}

The previous chapter presented a general hybrid navigation framework which was implemented using some of the existing 2D reactive methods.
This chapter starts to treat the tackled navigation problem in this report for UAVs by considering the more complex 3D problem.
One of the novel 3D reactive navigation strategies for UAVs is presented here to also address the first research question in \cref{ch1:problem}.
This computationally-light strategy couples control inputs directly with interpreted information from sensors to address navigation problems in 3D unknown environments.
The work presented in this chapter was published in \cite{elmokadem20183d}.

\section{Introduction}\label{sec:ch4:Intro}

The last few decades have seen a growing trend towards developing autonomous mobile robots including unmanned aerial vehicles and underwater vehicles.
These vehicles have become essential for a wide range of applications such as surveillance, real-time monitoring, search and rescue, border patrolling, reconnaissance, objects inspection, scientific and military missions, etc.
A real challenge in developing autonomous vehicles is the development of safe and reliable navigation systems that can generate safe paths reaching a goal destination within an environment while avoiding collision with obstacles.

A considerable amount of literature has been published on mobile robot navigation with obstacle avoidance.
Generally, approaches used for mobile robot navigation can be classified into two categories: global path planning techniques and sensor-based techniques \cite{hoy2015algorithms}.
Path planning is usually concerned with problems where a full knowledge about the environment is available.
Classical path planning approaches include cell decomposition \cite{lozano1983spatial}, roadmap \cite{lozano1979algorithm,dunlaing1985retraction}, potential field \cite{khatib1986real,ge2000new}, probabilistic approaches \cite{kavraki1996probabilistic,lavalle1998rapidly}, etc.
Alternatively, sensor-based approaches are more suitable for applications that require operation in unknown environments where only sensors data collected locally is available; reactive strategies are subset of sensor-based navigation techniques.
A range of available reactive methods are designed to handle static obstacles only such as the dynamic window approach \cite{fox1997dynamic} and the curvature velocity approach \cite{simmons1996curvature}.
On contrary, other approaches are developed to be suitable for collision avoidance with moving obstacles (for example, see \cite{matveev2015safe,matveev2012real,chakravarthy1998obstacle,fiorini1998motion}).
A survey of available navigation techniques for mobile robots can be found in \cite{hoy2015algorithms}.

Many studies in the field of mobile robot navigation have focused on the planner case of collision avoidance.
For the case of unmanned aerial vehicles and underwater vehicles, it is more efficient to utilize the vehicle's full capability to perform 3D avoidance maneuvers.
Few research works have dealt with the more general and complex case of three-dimensional avoidance. 
Examples of available 3D strategies include \cite{liu2017robust,choi2017two,wang2018strategy,hrabar2011reactive,liu2016high,roussos20083d,chen2013three,nieuwenhuisen2014obstacle,popp2015novel,subramanian2014real,mcguire2017efficient,yang2017obstacle}.
In \cite{liu2017robust,choi2017two}, strategies based on model predictive control are developed to generate safe trajectories, and obstacle avoidance is achieved by solving a local optimization problem with obstacle constraints.
Although these techniques can provide near optimal paths, they have expensive computational cost making them not suitable for mobile robots with limited capabilities.
A collision-free navigation strategy in dynamic environments is proposed in \cite{wang2018strategy}.
This strategy adopts an enlarged vision cone technique assuming that obstacles are covered by spheres, and the direction of possible 3D avoidance maneuvers can only be in one of the two boundary rays of the vision cone. 
A map-based navigation approach is proposed in \cite{hrabar2011reactive} which relies on performing a localized search on 3D occupancy maps to find an escape point.
Experimental results were provided to show the computation tractability of the approach using a certain flying robot; however, the use of this method could still be limited to vehicles with high processing capabilities.
The authors of \cite{liu2016high} attempted to develop a navigation algorithm for quadrotors with limited capabilities.
However, their approach may fail to find a safe trajectory in some scenarios, and they overcome this problem by using a stopping policy to halt the vehicle which is an undesirable behavior for autonomous vehicles navigation.
Research works \cite{roussos20083d,chen2013three,nieuwenhuisen2014obstacle,popp2015novel,subramanian2014real} present navigation methods based on artificial potential field technique.
Such methods suffer from the local minimum problem. 
A vision-based strategy is developed in \cite{mcguire2017efficient} for indoor obstacles avoidance but the technique is limited to low-speed navigation.
Another vision-based approach is proposed in \cite{yang2017obstacle} where obstacle avoidance is managed by predicting safe trajectories from the images obtained by the vehicle's camera based on convolutional neural networks.

This work aims to address the complex problem of 3D navigation in unknown environments with collision avoidance.  
A reactive strategy for nonholonomic mobile robots is developed to tackle this problem.
One of the main features of this strategy is its low computational cost because it requires only information about the distance to nearest obstacles and the heading to target destination.
Thus, it removes the need for heavy computational processing of sensors data required by many of the available strategies. For example, some approaches need to perform computationally exhaustive search algorithms on local maps to construct safe paths.
An equally important feature of this approach is its capability of generating different 3D paths which results in more efficient obstacle avoidance maneuvers when combined with a global path planner.
Also, the strategy does not add restrictions on obstacles' shapes.
Moreover, it is based on the sliding mode control theory which is known for its robustness against disturbances and uncertainties (for example, see \cite{chen2016formation,yu2017distributed,gao2017integral}).

This chapter is organized as follows.
In section~\ref{sec:ch4:problem}, the problem of 3D navigation under consideration is formulated.
Then, the suggested navigation strategy is proposed in section~\ref{sec:ch4:strategy} with some detailed analysis.
This strategy is then validated in section~\ref{sec:ch4:simulation} using computer simulations along with a discussion of the results.  
Finally, this work is concluded in section~\ref{sec:ch4:conclusion}, and potential future work is suggested.

\section{Problem Formulation}\label{sec:ch4:problem}

A nonholonomic mobile robot moving in a 3D environment is considered in this work.
The mathematical model (kinematics) of such robot can be described using its absolute Cartesian coordinates $\mathbf{s} = [x,\ y,\ z]^T$ and its orientation $\vect{a}\in \R^3$ where $||\vect{a}||=1$.
A description of this model is given as follows \cite{matveev20143d},
\begin{equation} \label{equ:ch4:model3D}
\begin{array}{cccc}
\dot{\mathbf{s}} = V \vect{a}, & \dot{\vect{a}} = \vect{u}, & \vect{a} \cdot \vect{u} = 0,  & ||\vect{u}|| \leq \bar{u}.
\end{array}
\end{equation} 
where $V\in [0,\ V_{max}]$ is the linear speed, $\vect{u}$ is the two-degrees-of-freedom control input, and $\bar{u}$ is a given constant that represents the control limits.
The third equation in the above model indicates that the input must always be perpendicular to the heading vector which results in steering-like behavior while maintaining the unity length of $\vect{a} $ as required.
Moreover, the turning radius of the robot is lower bounded by,
\begin{equation*}
R_{min} = \frac{V}{\bar{u}}
\end{equation*}
The above non-holonomic model is applicable to many rigid-body vehicles including aerial and underwater vehicles \cite{matveev20143d}.

We consider a mobile robot $\A$ moving in a 3D environment with $n$ obstacles $\obs_i,\ i=\{1,\cdots,n\}$ whose boundaries $\partial \obs_i$ are piece-wise smooth.
This work aims to develop a 3D navigation strategy with collision avoidance that allows $\A$ to navigate safely reaching a goal destination $G=(x_G,\ y_G,\ z_G)$ within the environment whose location is static and known (i.e. $\mathbf{s}(t)\in \R^3 \setminus \{\obs_1 \cup \obs_2 \cup \cdots \cup \obs_n\}\ \forall t$ and $\mathbf{s}(t_f) = G$ for some $t_f>0$).

\section{Navigation Strategy in 3D}\label{sec:ch4:strategy}

\subsection{Strategy}

This section presents the proposed navigation algorithm for mobile robots in 3D environments by developing ideas from \cite{matveev2011method,matveev2017method}.

Given any nearby obstacle $\obs_i$, let $\vect{T}_i$ be a vector from the robot's coordinates tangent to $\partial\obs_i$ at a given time $\tau$. 
Also, let $\Pav$ be called \textit{plane of avoidance} in which the avoidance maneuver takes place.
This plane can be constructed  by $\vect{T}_i$ and $\vect{a}(\tau)$ where $\tau$ is the time at which the avoidance maneuver starts, and the plane's normal $\vect{n}_{\Pav} $ is obtained as follows,
\begin{equation}\label{equ:ch4:Pav}
\vect{n}_{\Pav} = \vect{T}_i \times \vect{a}(\tau).
\end{equation}
The choice of $\vect{T}_i$ can be based on different criteria to determine the best avoidance direction; for example, one can select $\vect{T}_i$ to be the tangent that makes minimal angle with $\vect{a}(\tau)$.
An illustration of these definitions is given in Fig.~\ref{fig:ch4:illustration}.

\begin{figure}[!htb]
	\centering
	\includegraphics[scale=0.55]{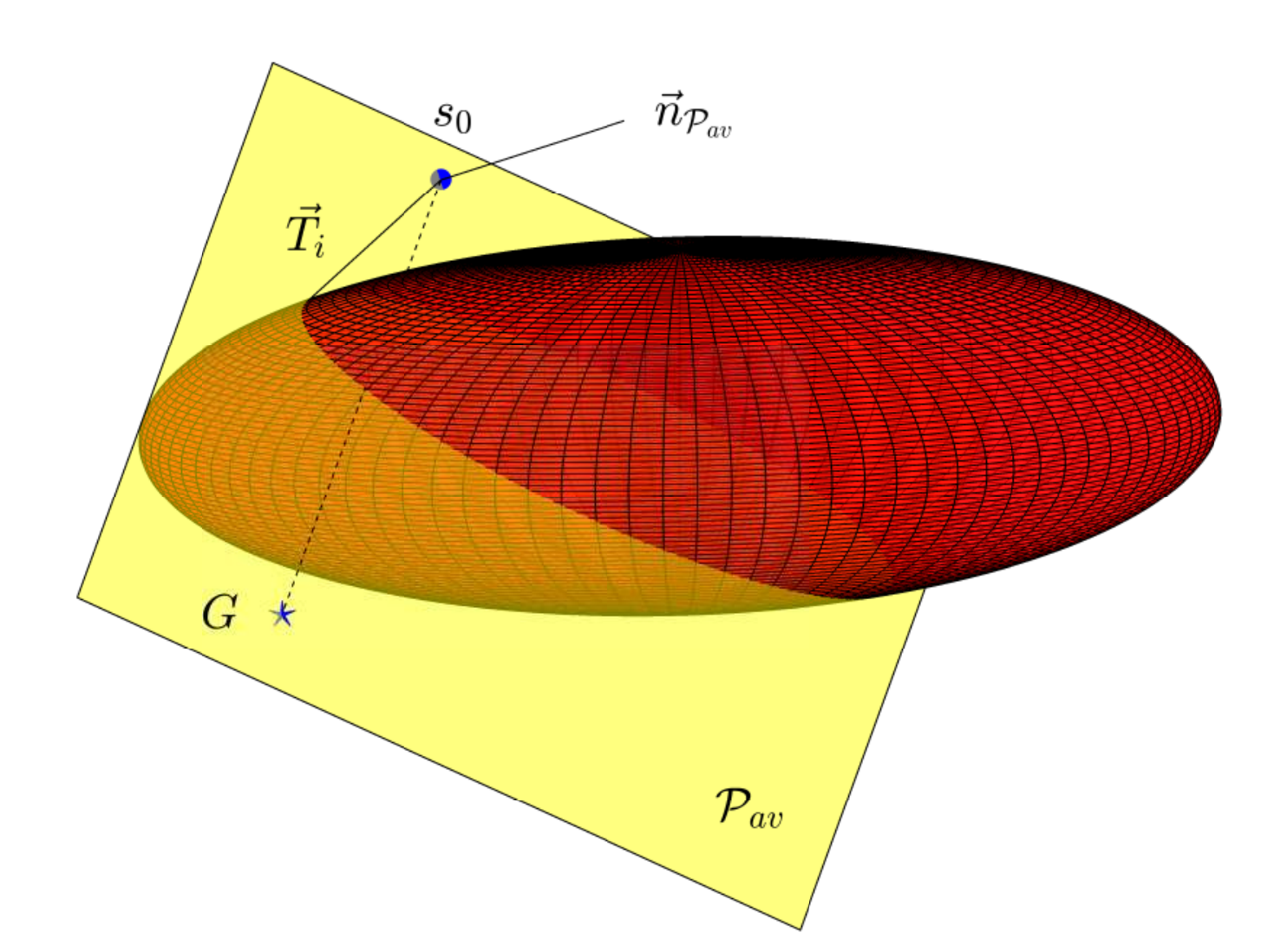} 
	\caption{Plane of avoidance illustration} \label{fig:ch4:illustration}
\end{figure}

\noindent Using these definitions, the following obstacle avoidance law can be introduced:
\begin{equation}\label{equ:ch4:navigationLaw3D}
\begin{array}{l}
V = \bar{V} \\
\vect{u}(t) = \Gamma \ \bar{u} \ \text{sgn}\Big(\dot{d}(t) + \chi (d(t) - d_0)\Big) \vect{i}_{n}(t), \\ 
\vect{i}_{n}(t)  = \vect{a}(t) \times \vect{n}_{\Pav} %
\end{array}
\end{equation}
where $\bar{V}\in (0,V_{max}]$ is a desired speed, $\vect{i}_{n} \in \Pav$ is a unit vector perpendicular to $\vect{a}(t)$ directing away from $\obs_i$, $\text{sgn}(\alpha)$ is the signum function defined as follows
\begin{equation*}
\text{sgn}(\alpha) := \left\{\begin{array}{cc}
1 & if\  \alpha>0 \\
0 & if\  \alpha=0 \\
-1 & if\  \alpha<0 \\
\end{array}\right.,
\end{equation*}
$\chi (\beta)$ is a saturation function given by
\begin{equation*}
\chi (\beta) = \left\{\begin{array}{cc}
\gamma \beta & if\  |\beta|\leq \delta \\
\delta \gamma\ \text{sgn}(\beta) & otherwise
\end{array}\right.,\ \ \gamma,\delta>0,
\end{equation*}
$d_0>d_{safe}>0$ is a desired distance with $d_{safe}$ being a safety margin, $\Gamma=1$ if $\vect{a}(\tau)$ intersects with $\obs_i$, and $\Gamma=-1$ otherwise.
These choices of $\Gamma$ are made to ensure that $\Gamma \vec{i}_n$ is always directing away from $\obs_i$.
It is worth mentioning that this navigation law is based on the sliding mode control theory.
Also, It is clear from \eqref{equ:ch4:navigationLaw3D} that $\vect{u}(t)$ is feasible since it satisfies the last two equations in \eqref{equ:ch4:model3D}.
Moreover, this law requires only access to $d_i$ (can be obtained from sensors), $\dot{d}_i$ (can be obtained numerically) and heading towards $G$.

\begin{remark}\label{rem:Ti}
	There are different possible approaches to determine $\vect{T}_i$ depending on the capabilities of the robot.
	One approach could be the use of a feature detection algorithm along with an onboard camera to detect the obstacle's edge that is closest to the current heading.
	Another possible approach is to determine a bounding shape for the obstacles based on distance measurements constructed by a perception system such as an ellipsoid.
	Then, one can determine the tangent to that ellipsoid that makes the minimal angle with $\vect{a}(\tau)$.
	The latter one is considered in the simulations done in this work.
\end{remark}

\begin{remark}
	In a specific case where the robot detects a very large obstacle for which an edge cannot be detected (a wall for example), the plane of avoidance $\Pav$ can be selected randomly to follow the obstacle's boundary until an edge is detected.
	This can be implemented more efficiently with a global planner.
\end{remark}

\begin{remark}
	The proposed algorithm can be applied to close bounded obstacles with smooth boundary given that a method for finding the best plane of avoidance $\Pav$ is available.
\end{remark}

\noindent The proposed navigation strategy consists of two modes: \\
\textbf{M1:} Obstacle avoidance given by the law \eqref{equ:ch4:navigationLaw3D} in which a plane of avoidance $\Pav$ is determined at the beginning of the maneuver\\
\textbf{M2:} Pure pursuit to destination $G$ with maximum speed

\vspace{0.5cm}
The rules for switching between the two modes are borrowed from \cite{matveev2011method}; these rules are as follows, \\
\textbf{R1:}  Mode \textbf{M1} is activated at a given time $\tau$ when the distance from the vehicle to an obstacle $\obs_i$ is reduced to $C$.
That is, $d_i(\tau) = C$ and $\dot{d}_i(\tau) < 0$.\\
\textbf{R2:} Switching from \textbf{M1} to \textbf{M2} occurs when $d_i(t) \leq d_0 + \epsilon$ ($\epsilon>0$) and the vehicle's heading is directed towards the destination $G$.

\subsection{Determining $\vect{T}_i$ for ellipsoidal obstacles (Special Case)} \label{ssec:ch4:Ti}

As highlighted in Remark~\ref{rem:Ti}, it is possible to determine $\vect{T}_i$ using different methods.
In this subsection, a possible approach is proposed considering that obstacles are enclosed in ellipsoids constructed by the vehicle's perception system.

Let $f(\sigma)$ be a desired objective function where $\sigma = [x,\ y,\ z]$.
The surface points of an ellipsoid $E$ must satisfy the ellipsoid's equation,
\begin{equation}\label{equ:ch4:ellipsoid}
h_1(\sigma) := \dfrac{x^2} {a^2} + \dfrac{y^2} {b^2} + \dfrac{z^2} {c^2} - 1 = 0
\end{equation}
where $[x,\ y,\ z]$ are the coordinates of a point in the ellipsoid's coordinate system, and $2a,\ 2b$ and $2c$ are the lengths of its principal axes.
Furthermore, if a tangent line $\vect{T}_i$ starting at point $P_0=[x_0,\ y_0,\ z_0]$ touches $E$ at a point $P=[x,\ y,\ z]$, it must satisfy the following,
\begin{equation}
h_2(\sigma) := \dfrac{2x}{a^2} (x-x_0)+ \dfrac{2y}{b^2} (y-y_0) + \dfrac{2z}{c^2} (z-z_0) = 0
\end{equation}

The proposed method of finding $\vect{T}_i$ is solving the following optimization problem,
\begin{equation}\label{equ:ch4:optProblem}
\begin{array}{lccc}
\min_{\sigma} \ f(\sigma) & & & \\
s.t. & h_1(\sigma) = 0 &\&	&h_2(\sigma) = 0 \\ 
\end{array}
\end{equation}

The solution of \eqref{equ:ch4:optProblem} yields the point at which $\vect{T}_i$ touches $\obs_i$.
For example, $\vect{T}_i$ can be chosen to be the vector that makes the minimal angle with $\vect{a}(\tau)$. 
In this case, $f(\sigma)$ can be written as follows,
\begin{equation}
f(\sigma) = -\dfrac{\vect{T}_i \cdot \vect{a}(\tau)}{||\vect{T}_i||\ ||\vect{a}(\tau)||}
\end{equation}

\subsection{Analysis}\label{subsec:ch4:Analysis}

The following assumptions are made,
\begin{assumption}\label{assmp:1}
	When mode \textbf{M1} starts at time $\tau$, the vehicle is initially oriented towards the destination, i.e,
	\begin{equation}
	\vect{a}(\tau) = \dfrac{G - s(\tau)}{||G - s(\tau)||}
	\end{equation}
\end{assumption}

As in \cite{matveev2011method}, it is assumed that \textbf{M1} cannot be activated for multiple obstacles simultaneously.
This implies that the obstacles should be disjoint and far enough from each other.
Hence, the following assumption is made.

\begin{definition}
	Consider two points $\mathbf{r},\mathbf{r}_*\in \R^3$ such that $\mathbf{r} \not\in \obs_i$ and $\mathbf{r}_*\in\partial\obs_i$.
	The C-neighborhood of an obstacle $\obs_i \subset \R^3$ is defined as follows: $\mathcal{N}[C,\obs_i]:= \{\mathbf{r}\in \R^3:\ ||\mathbf{r} - \mathbf{r}_*|| \leq C\}$.
	This gives a set of all the points at a distance $\leq\ C$ from $\obs_i$.
\end{definition}

\begin{assumption}\label{assmp:2}
	$\mathcal{N}[C+2R_{min},\obs_i] \cap \mathcal{N}[C,\obs_j] = \emptyset $ for all $i \neq j$.
\end{assumption}

However, if multiple close obstacles are detected, the strategy will still be valid by constructing a bounding object around these obstacles to satisfy Assumption~\ref{assmp:2}.

\begin{theorem}\label{thm:navLaw}
	For a given $\vect{T}_i$, the navigation law given by \eqref{equ:ch4:navigationLaw3D} forces a vehicle governed by \eqref{equ:ch4:model3D} to follow the boundary of an obstacle $\obs_i$ within the plane $\Pav$ while keeping a safe distance from the obstacle.
\end{theorem}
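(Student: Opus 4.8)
The plan is to reduce the three-dimensional statement to the planar boundary-following result already established in \cite{matveev2011method}. The central observation is that the control $\vect{u}(t)$ in \eqref{equ:ch4:navigationLaw3D} is always directed along $\vect{i}_{n}(t)=\vect{a}(t)\times\vect{n}_{\Pav}$, which is orthogonal to the \emph{constant} vector $\vect{n}_{\Pav}=\vect{T}_i\times\vect{a}(\tau)$ fixed at the instant $\tau$ when the maneuver starts. This should force the whole maneuver to take place inside one affine plane parallel to $\Pav$, after which the dynamics and the control law coincide with the planar unicycle model and sliding-mode law treated in \cite{matveev2011method}.

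First I would establish planar confinement. Differentiating the scalar $\vect{a}(t)\cdot\vect{n}_{\Pav}$ and using $\dot{\vect{a}}=\vect{u}$ from \eqref{equ:ch4:model3D} gives
\[
\frac{d}{dt}\big(\vect{a}(t)\cdot\vect{n}_{\Pav}\big)=\vect{u}(t)\cdot\vect{n}_{\Pav}=\Gamma\bar{u}\,\text{sgn}(\cdot)\,\big(\vect{a}(t)\times\vect{n}_{\Pav}\big)\cdot\vect{n}_{\Pav}=0,
\]
since a cross product is orthogonal to its factors. Hence $\vect{a}(t)\cdot\vect{n}_{\Pav}$ is constant; because $\Pav$ is spanned by $\vect{T}_i$ and $\vect{a}(\tau)$, the triple product gives $\vect{a}(\tau)\cdot\vect{n}_{\Pav}=\vect{a}(\tau)\cdot(\vect{T}_i\times\vect{a}(\tau))=0$, so $\vect{a}(t)\cdot\vect{n}_{\Pav}\equiv 0$ and $\vect{a}(t)$ stays parallel to $\Pav$ for all $t\geq\tau$. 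Integrating $\dot{\mathbf{s}}=\bar{V}\vect{a}$ then confines $\mathbf{s}(t)$ to the affine plane through $\mathbf{s}(\tau)$ parallel to $\Pav$, while $\|\vect{a}\|=1$ is preserved by the constraint $\vect{a}\cdot\vect{u}=0$.

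Next I would introduce orthonormal coordinates in $\Pav$ and verify that the reduced system is \emph{verbatim} the planar model of \cite{matveev2011method}: expressing $\vect{a}$ through an in-plane heading angle $\theta$, the steering input collapses to a scalar angular velocity $u=\Gamma\bar{u}\,\text{sgn}(\dot{d}+\chi(d-d_0))$ bounded by $\bar{u}$, with $\vect{i}_n$ acting as the in-plane normal to the heading and $\Gamma$ fixing the turn direction so that $\Gamma\vect{i}_n$ points away from $\obs_i$. Interpreting $d(t)$ as the distance from $\mathbf{s}(t)$ to the cross-section $\obs_i\cap\Pav$, this is exactly the sliding-mode boundary-following controller analyzed in \cite{matveev2011method}. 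I would then invoke that analysis: the sliding surface $S:=\dot{d}+\chi(d-d_0)=0$ is reached in finite time and is attracting (the sign choice in $u$ yields $S\dot{S}<0$ in a neighborhood of $S=0$), and the reduced sliding dynamics $\dot{d}=-\chi(d-d_0)$ drive $d(t)$ monotonically to the desired value $d_0>d_{safe}$, which is precisely boundary following at a safe distance within $\Pav$.

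The main obstacle I anticipate is the reduction step rather than the confinement step. Confinement is a one-line orthogonality computation, but to apply \cite{matveev2011method} one must check that every hypothesis of its planar theorem survives: that $d(t)$ and $\dot{d}(t)$ computed from the in-plane cross-section are well defined and smooth along the motion (needing the piecewise-smoothness of $\partial\obs_i$ and the choice of $\vect{T}_i$ from Remark~\ref{rem:Ti}), and that Assumption~\ref{assmp:2} guarantees a single obstacle governs the maneuver so the constant plane $\Pav$ stays the correct one throughout. Confirming that the finite-time-reaching and attractivity estimates of \cite{matveev2011method} transfer unchanged under the curvature bound $R_{min}=\bar{V}/\bar{u}$ is where the care is needed.
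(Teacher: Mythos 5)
Your proposal is correct and follows essentially the same route as the paper's proof: show that the control, being orthogonal to $\vect{n}_{\Pav}$, confines the motion to the plane of avoidance, and then invoke the planar sliding-mode boundary-following analysis of \cite{matveev2011method} for the surface $S=\dot{d}+\chi(d-d_0)=0$. Your confinement step (proving $\vect{a}(t)\cdot\vect{n}_{\Pav}\equiv 0$ by differentiation) is in fact a more careful version of the paper's one-line claim that $\vect{u}(t)\in\Pav$ restricts the motion to $\Pav$, and the hypotheses you flag for the reduction are exactly what the paper defers to \cite{matveev2011method}.
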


\begin{proof}
	Clearly, the proposed choice of $\vect{i}_n$ results in $\vect{u}(t)$ being perpendicular to both $\vect{a}(t)$ and $\vect{n}_{\Pav}$ which implies that $\vect{u}(t)\in \Pav\ \forall t$, and $\vect{a} \cdot \vect{u} = 0$.
	The first condition ensures that the proposed law restricts the motion within $\Pav$ while the latter one maintains the feasibility of $\vect{u}(t)$.
	Moreover, the law \eqref{equ:ch4:navigationLaw3D} is a class of the sliding mode control where the sliding surface is $S := \dot{d}(t) + \chi (d(t) - d_0) = 0$.
	This choice of sliding surface guarantees that the vehicle can keep a distance of $d(t)\in[d_+,d_-]$ from $\obs_i$, where $d_+>d_->0$ represent limits of a regular interval and $d_->d_{safe}$, until $d(t)$ converges to $d_0$ (see \cite{matveev2011method} for a detailed proof).
	Based on these points, the proposed design for $\vect{u}(t)$ forces the vehicle to follow the boundary of $\obs_i$ at a distance $d_0$ within the plane $\Pav$.
	This completes the proof.
\end{proof}

\begin{definition}\cite{matveev2011method}
	A navigation strategy is said to be \textit{target reaching with obstacle avoidance} if there exists a time $t_f>0$ such that $s(t_f) = G$ and $d_i(t)>d_{safe}>0$ $\forall i$ and $t \leq t_f$. 
\end{definition}

\begin{theorem}
	Consider a certain criteria is chosen for determining $\vect{T}_i$ in \eqref{equ:ch4:Pav}, let $d_0> d_{safe}$ and let Assumptions~\ref{assmp:1}-\ref{assmp:2} hold true.
	Then, the proposed navigation strategy given by the rules \textbf{R1} and \textbf{R2} and associated with the modes \textbf{M1} and \textbf{M2} is a target reaching strategy with obstacle avoidance.
\end{theorem}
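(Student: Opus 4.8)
The plan is to prove the two requirements of the target-reaching-with-obstacle-avoidance definition separately, and the crucial first move is to reduce the apparently three-dimensional problem to a planar one. The key geometric observation is that Assumption~\ref{assmp:1} forces $\vect{a}(\tau) = (G - \mathbf{s}(\tau))/\|G - \mathbf{s}(\tau)\|$ at the instant a maneuver begins, so the target $G$ lies on the ray from $\mathbf{s}(\tau)$ along $\vect{a}(\tau)$ and therefore inside the plane of avoidance $\Pav$, which by \eqref{equ:ch4:Pav} is spanned by $\vect{T}_i$ and $\vect{a}(\tau)$. Since Theorem~\ref{thm:navLaw} guarantees that mode \textbf{M1} confines the motion to $\Pav$, the entire avoidance maneuver, the relevant boundary arc of $\obs_i$, and the target all lie in a single fixed plane. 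First I would make this reduction explicit, so that each avoidance episode can be analysed as the two-dimensional boundary-following-plus-pure-pursuit problem treated in \cite{matveev2011method}.

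Next I would establish collision avoidance. During \textbf{M1}, Theorem~\ref{thm:navLaw} already yields that the vehicle follows $\partial\obs_i$ at a distance that stays in $[d_-,d_+]$ with $d_->d_{safe}$ and converges to $d_0$, so safety holds throughout any avoidance phase. During \textbf{M2} the vehicle executes pure pursuit and moves straight toward $G$; the only way safety could be threatened is by approaching an obstacle, but rule \textbf{R1} switches to \textbf{M1} the moment $d_i$ decreases to the trigger value $C$. Choosing $C$ large enough relative to $R_{min}$, so that the turn initiating the maneuver can be completed before the distance falls to $d_{safe}$, guarantees $d_i(t)>d_{safe}$ across the \textbf{M2}-to-\textbf{M1} transition as well. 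Combining the two modes gives $d_i(t)>d_{safe}$ for all $i$ and all $t\le t_f$.

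The target-reaching part is the substantive step. I would index the successive maneuvers by the times $\tau_1 < \tau_2 < \cdots$ at which \textbf{M1} is activated by \textbf{R1}, and argue two things: that each maneuver terminates in finite time, and that a suitable progress functional decreases strictly from one avoidance episode to the next. Finite termination follows from the planar boundary-following result together with the separation condition in Assumption~\ref{assmp:2}, which ensures the vehicle interacts with only one obstacle at a time and has room to complete the turn and re-align with $G$ before \textbf{R2} fires. For the progress functional I would use $\|G - \mathbf{s}(t)\|$ evaluated at the leaving instants: because the vehicle departs each obstacle heading exactly toward $G$ (the condition in \textbf{R2}) and thereafter travels along the straight line to $G$ during \textbf{M2}, this distance decreases strictly across episodes. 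Since there are only $n$ obstacles and the leaving distances form a strictly decreasing sequence, only finitely many maneuvers can occur; after the last one the vehicle proceeds unobstructed along a straight \textbf{M2} segment and reaches $G$ at some finite $t_f$, with $\mathbf{s}(t_f)=G$.

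The main obstacle I anticipate is proving the strict monotone decrease of this sequence rigorously, i.e. ruling out limit cycles in which the vehicle circulates indefinitely around an obstacle or oscillates between a pair of them without net progress. This is exactly where Assumption~\ref{assmp:2} must be used carefully, since it is what prevents a second obstacle from re-triggering \textbf{R1} before the vehicle has re-aligned with $G$, and it is the planar reduction established at the outset that lets me import the corresponding non-looping estimate from \cite{matveev2011method} rather than re-deriving it in three dimensions.
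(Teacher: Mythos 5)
Your proposal follows essentially the same route as the paper's own proof, which rests on exactly the two pillars you identify: (i) Assumption~\ref{assmp:1} together with the construction \eqref{equ:ch4:Pav} places $G$ inside the plane of avoidance $\Pav$, so that during mode \textbf{M1} the heading must at some instant align with $G$ and rule \textbf{R2} fires; and (ii) Theorem~\ref{thm:navLaw} confines the \textbf{M1} motion to a distance in $[d_-,d_+]$ with $d_->d_{safe}$, which gives the obstacle-avoidance half. Where you diverge is that you attempt \emph{more} than the paper: the paper's proof stops after asserting that every \textbf{M1} episode terminates and that \textbf{M2} then resumes pursuit of $G$, deferring all remaining rigor to the planar analysis of \cite{matveev2011method}, whereas you try to close the global argument with a progress functional over successive leaving instants. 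That extra step is, as you yourself flag, the shaky part: $\|G-\mathbf{s}(t)\|$ can grow during boundary following, so the leaving distances are not obviously monotone, and even a strictly decreasing sequence would not by itself bound the number of episodes --- you would need either a uniform decrement per episode or the stronger claim (which is what the cited planar analysis actually provides, via the tangency of the departure direction) that each obstacle is encountered at most once, so that Assumption~\ref{assmp:2} and the finiteness of the obstacle set cap the number of maneuvers at $n$. Since the published proof does not establish this either, your core argument matches the paper's; the episode-counting machinery is an incomplete strengthening rather than a gap relative to what the paper itself proves.
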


\begin{proof}
	Once mode \textbf{M1} is activated when the vehicle reaches a distance $C$ close to an obstacle $\obs_i$ according to rule \textbf{R1}, it starts to follow the boundary of $\obs_i$ based on Theorem~\ref{thm:navLaw} within $\Pav$ while maintaining a distance of $d_0\in[d_+,d_-]$ from $\obs_i$.
	Since $d_->d_{safe}$, it is ensured that $d_i(t)>d_{safe}\ \forall t$.
	
	Furthermore, it follows from \eqref{equ:ch4:Pav} and Assumption~\ref{assmp:1} that $G \in \Pav$ for any choice of $\vect{T}_i$.
	Therefore, there exists an instant $t_\star$ at which the vehicle's heading $\vect{a}(t_\star)$ is oriented at $G$ during the avoidance maneuver \textbf{M1}.
	This implies that whenever mode \textbf{M1} is activated, it is eventually terminated by rule \textbf{R2}, and mode \textbf{M2} continues to navigate the vehicle towards $G$.
	Hence, the strategy  is a target reaching strategy with obstacle avoidance.
\end{proof}

For a rigorous mathematical analysis of the proposed method, the reader is referred to \cite{matveev2011method} where a planar analysis is made.
The applicability of that analysis to the proposed strategy holds within the plane of avoidance $\Pav$.

\section{Simulation Results}\label{sec:ch4:simulation}

Simulations are performed to validate the performance of the proposed navigation strategy using the MATLAB software; the obtained results are given in this section.
The developed strategy is applied considering two scenarios.
The first one considers a single static obstacle to show the possibility of executing different safe paths depending on the choice of $\Pav$.
The second scenario deals with a more complex situation considering multiple static obstacles.

In all simulations, the design parameters are chosen as follows: $\bar{V} = V_{max} = 1\ m/s$, $\bar{u}=1.5\ rad/s$, $d_0 = 1m$, $\epsilon=0.5$, $C = 2.5$, $\delta = 0.5$ and $\gamma = 1$.

In Fig.~\ref{fig:ch4:simSingle}, the navigation strategy is applied for a single obstacle case.
The initial position is set to be $s_0 = [5,\ 12,\ 2]^T$ (marked as o), and the goal location is $G = [5,\ -6,\ 1]^T$ (marked as x).
The obstacle is represented by an ellipsoid whose center is at $[5,\ 3,\ -3]^T$, and its principal axes has lengths of $a=5,\ b=7$, and $c=2$.
Simulations are performed using random choices of $\vect{T}_i$ in each one.
The results show that the vehicle can reach its destination safely using the proposed strategy.
Furthermore, it is clear from the results that the strategy can produce different safe paths in 3D which makes it capable of performing more efficient avoiding maneuvers.
Depending on the application, the choice of $\Pav$ could be simply the one that produces the shortest path, or a better direction based on a certain criteria if incorporated with a global planner.

\begin{figure}[h]
	\centering
	\begin{adjustbox}{minipage=\linewidth,scale=0.5}
		\begin{subfigure}[t]{0.5\textwidth}
			\centering
			\includegraphics[width=\linewidth,trim={5cm 0 4cm 0},clip]{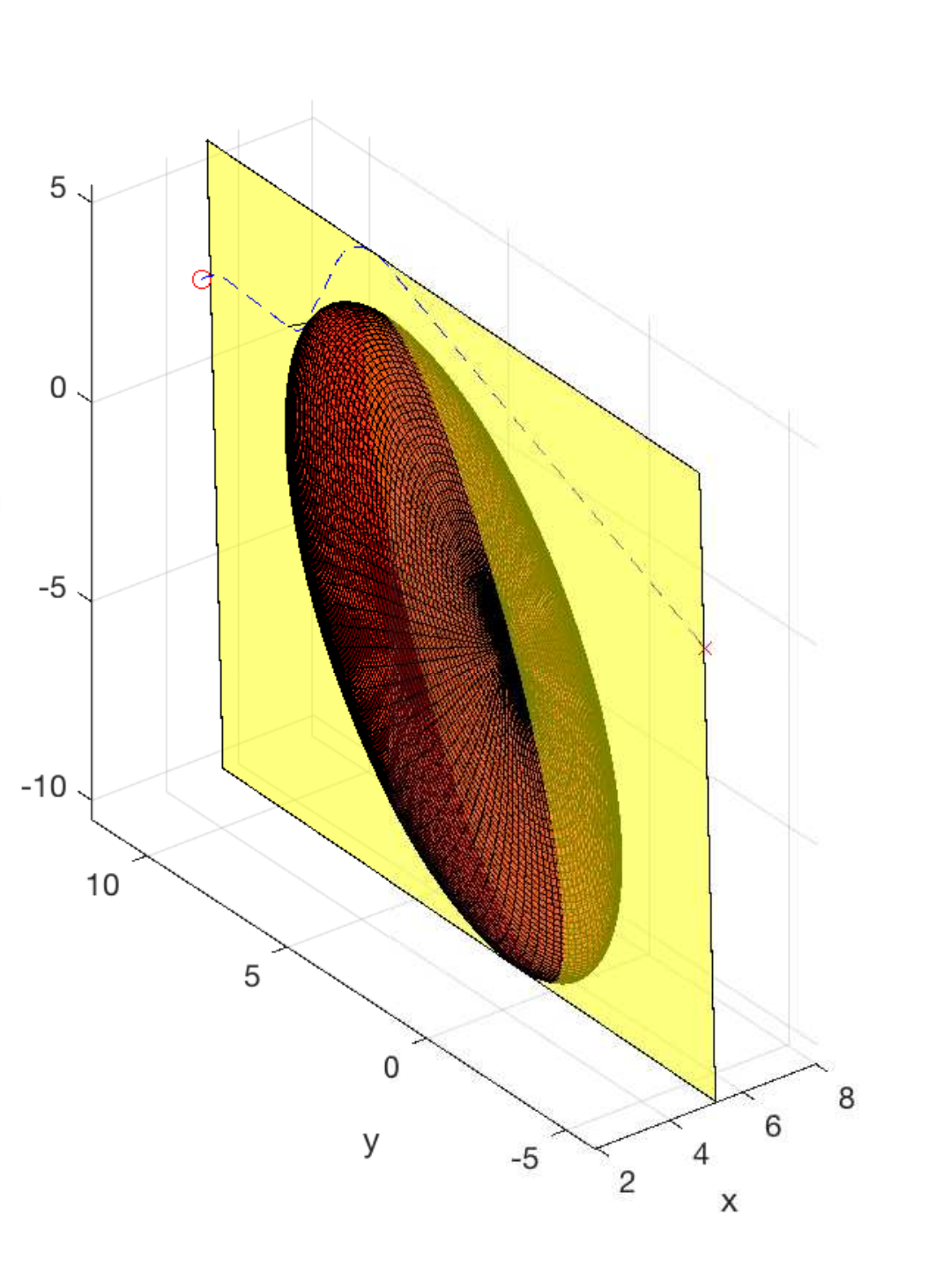} 
			\caption{}
		\end{subfigure}
		\hfill
		\begin{subfigure}[t]{0.5\textwidth}
			\centering
			\includegraphics[width=\linewidth,trim={5cm 0 4cm 0},clip]{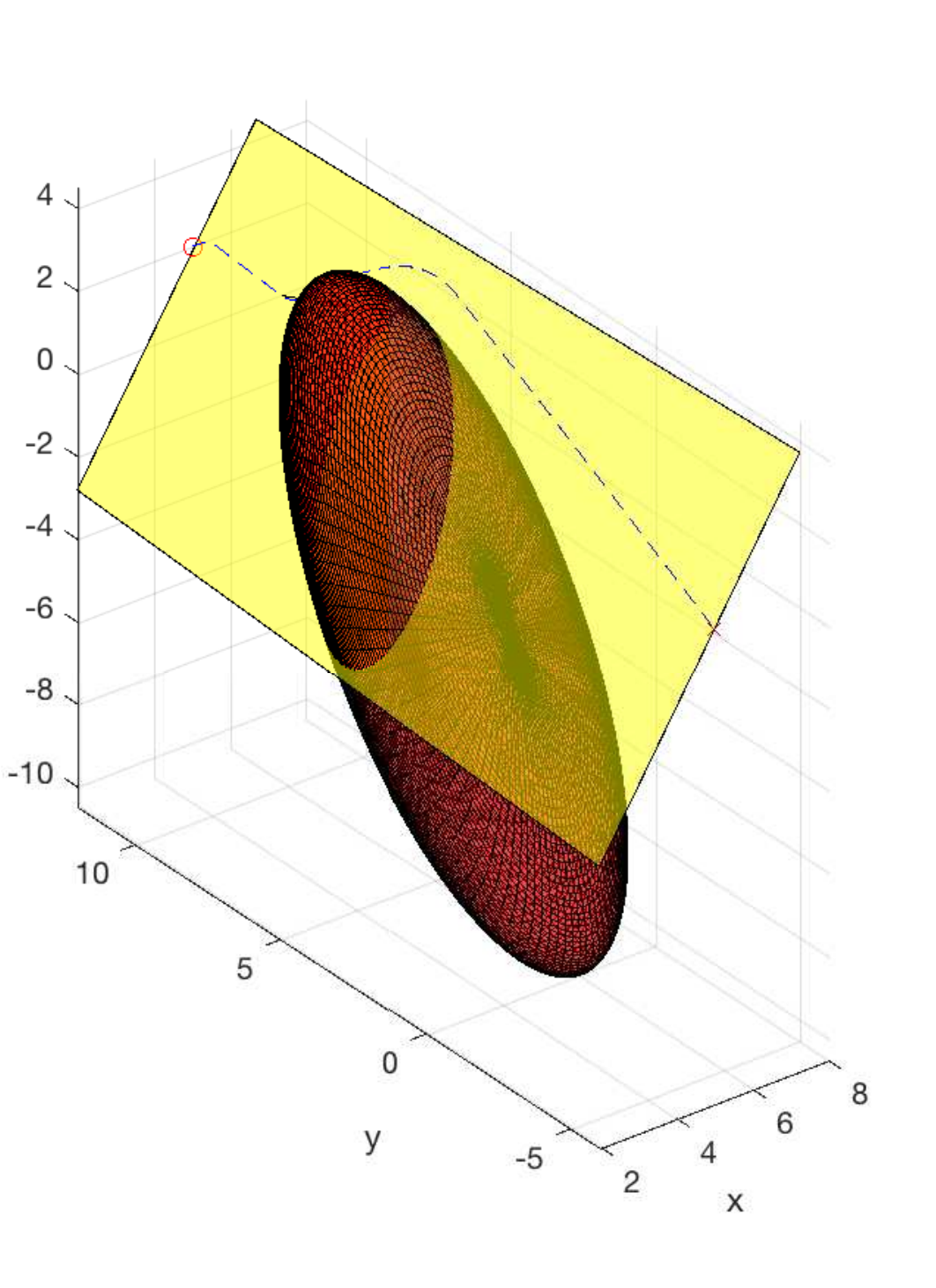} 
			\caption{}
		\end{subfigure}
		
		\begin{subfigure}[t]{0.5\textwidth}
			\centering
			\includegraphics[width=\linewidth,trim={5cm 0 4cm 0},clip]{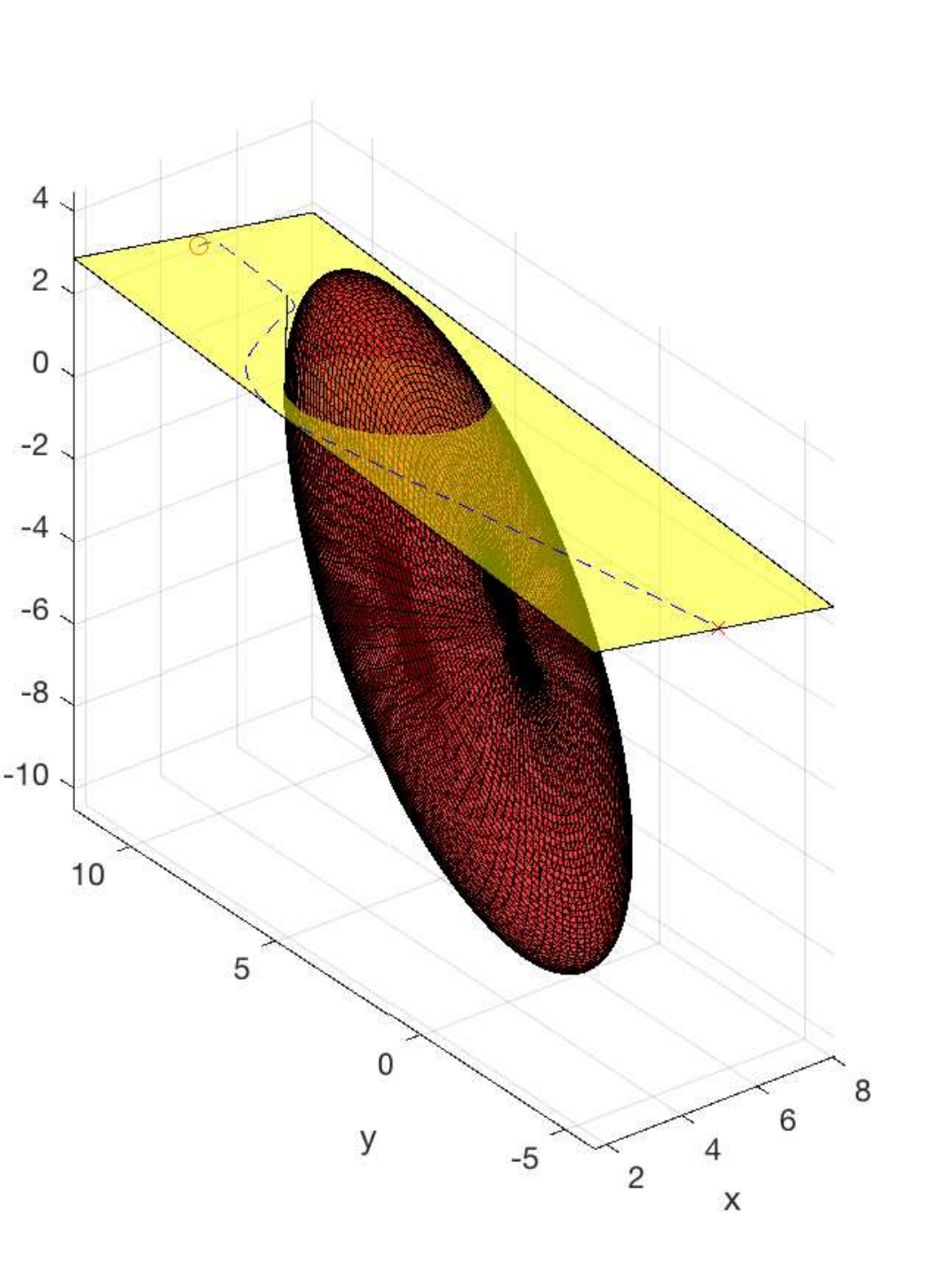} 
			\caption{}
		\end{subfigure}
		\hfill
		\begin{subfigure}[t]{0.5\textwidth}
			\centering
			\includegraphics[width=\linewidth,trim={5cm 0 4cm 0},clip]{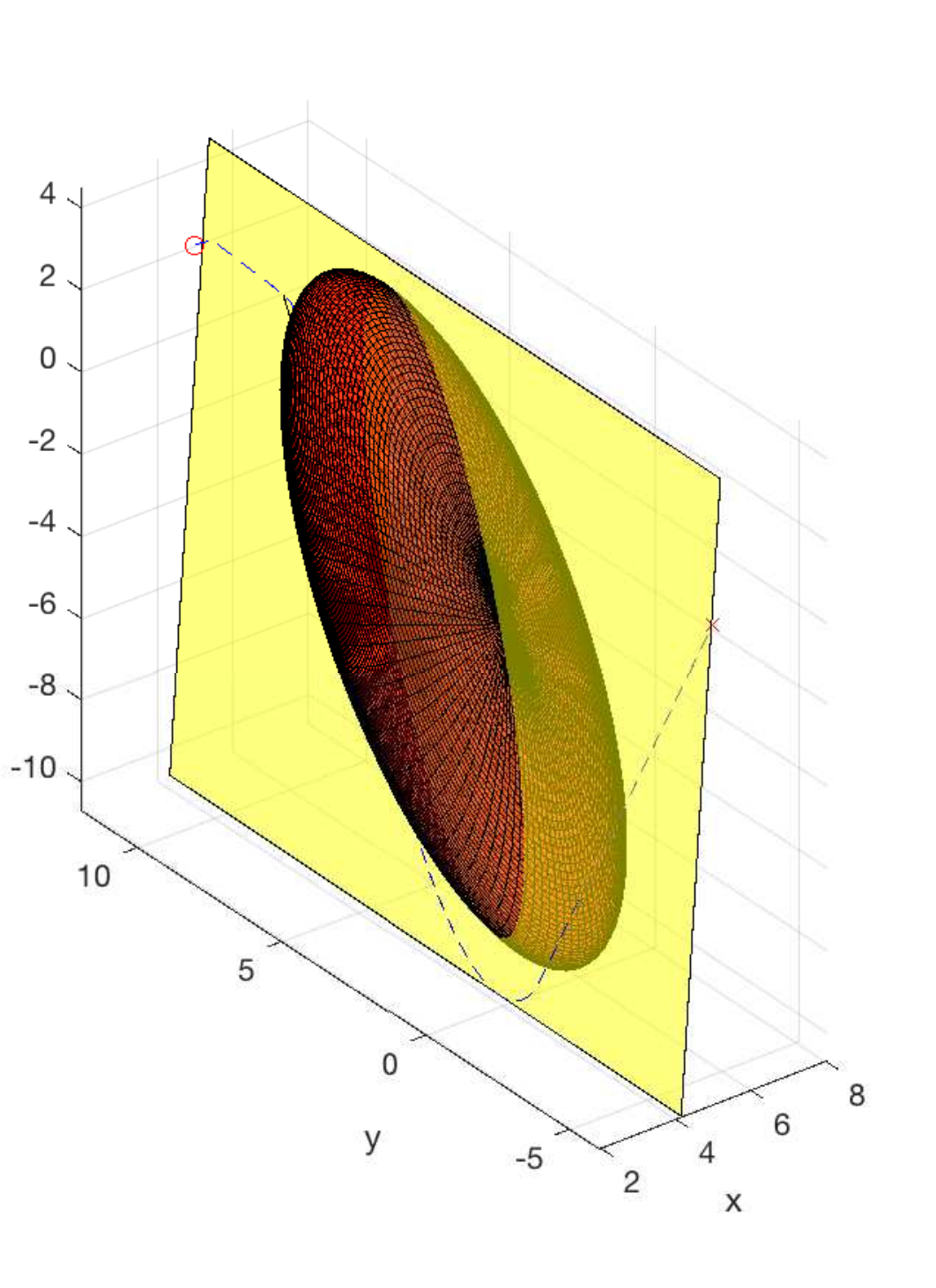} 
			\caption{}
		\end{subfigure}
		
		\begin{subfigure}[t]{0.5\textwidth}
			\centering
			\includegraphics[width=\linewidth,trim={5cm 0 4cm 0},clip]{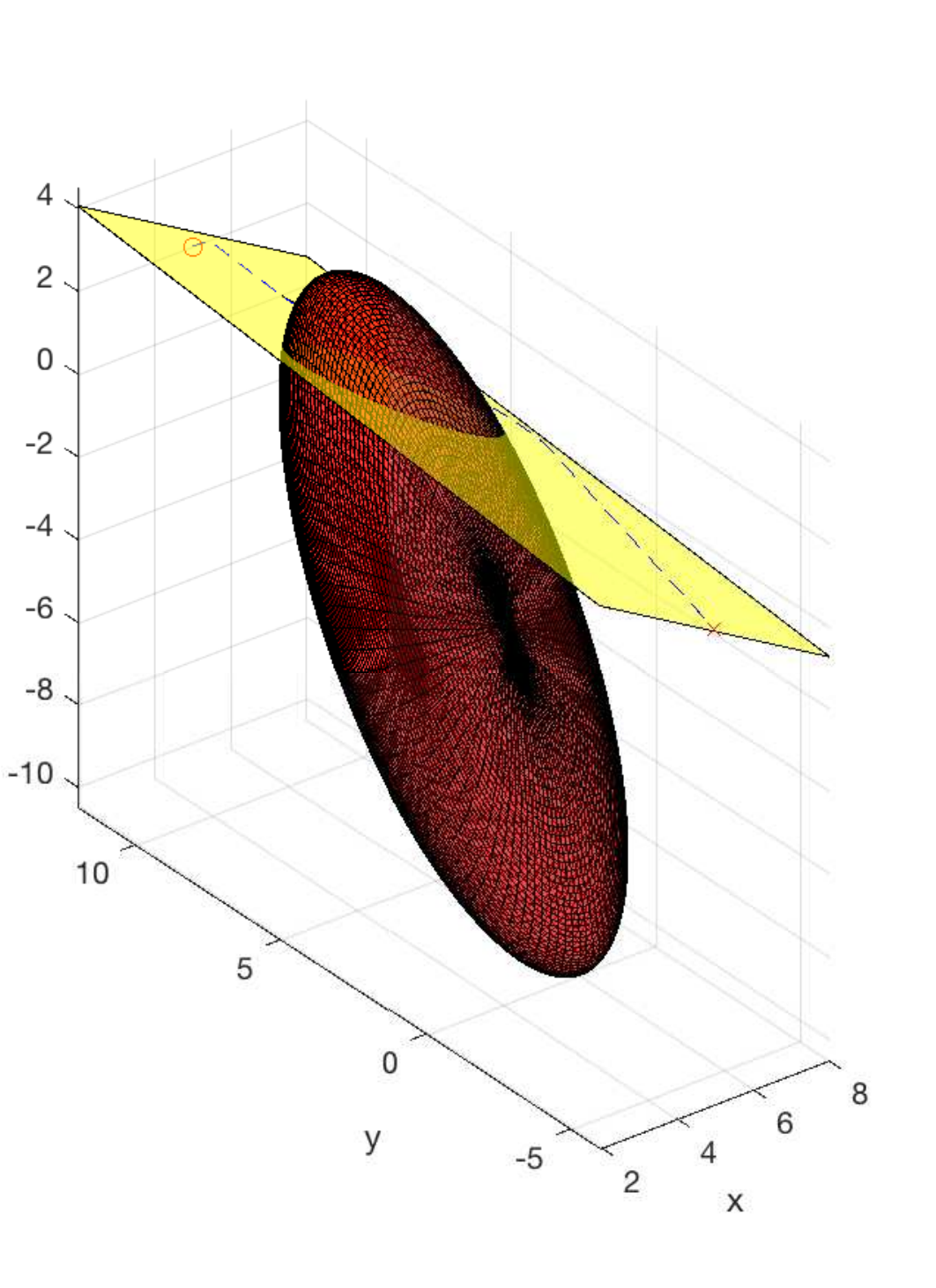} 
			\caption{}
		\end{subfigure}
		\hfill
		\begin{subfigure}[t]{0.5\textwidth}
			\centering
			\includegraphics[width=\linewidth,trim={5cm 0 4cm 0},clip]{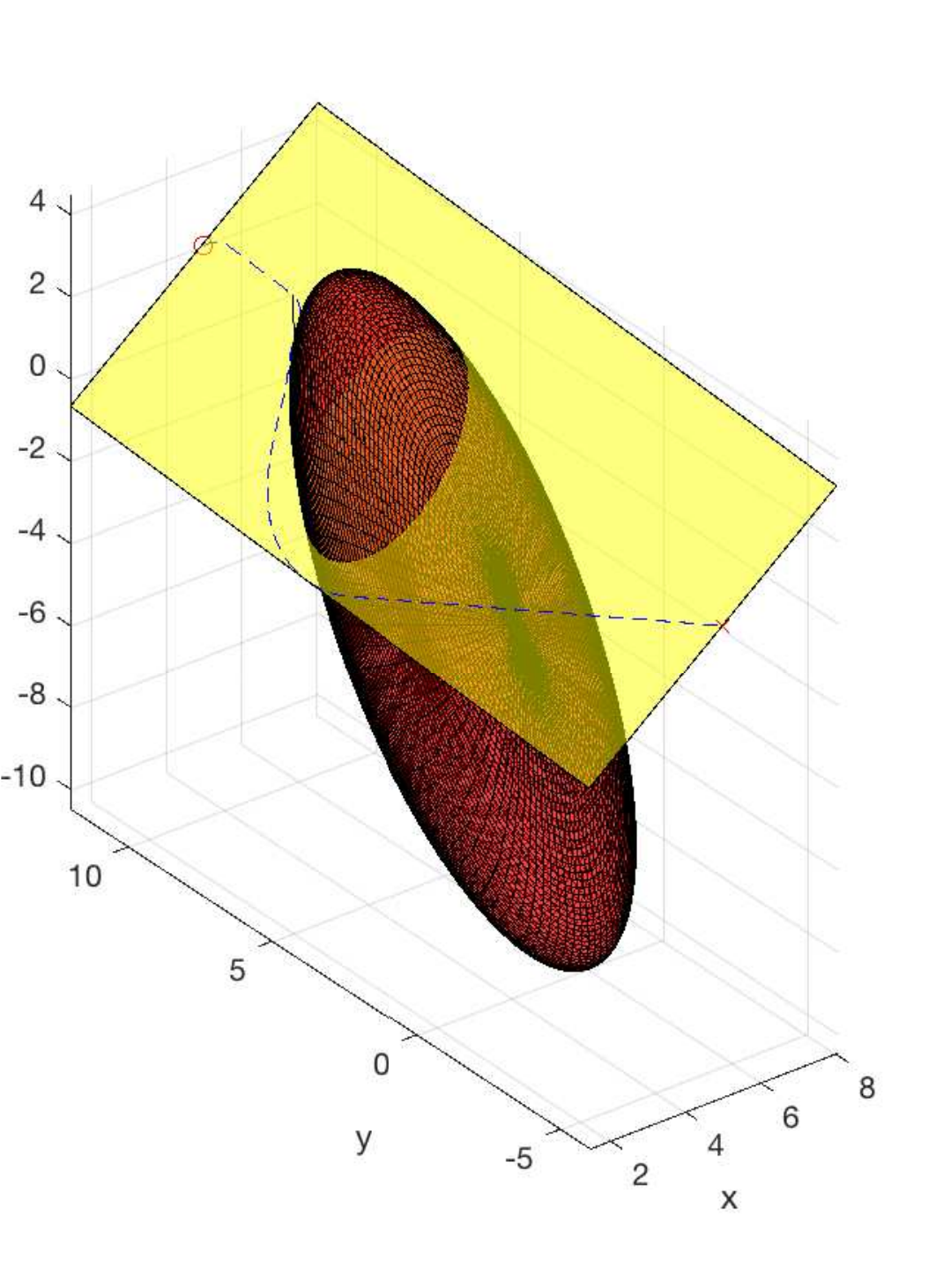} 
			\caption{}
		\end{subfigure}
	\end{adjustbox}
		\caption{Simulation results with single obstacle showing random choices of $\vect{T}_i$ with their corresponding $\Pav$ and executed paths}\label{fig:ch4:simSingle}
	
\end{figure}

The proposed strategy is then evaluated considering a 3D environment with multiple obstacles.
The environment is filled with five ellipsoidal obstacles of different sizes.
The robot's initial coordinates are taken to be $s_0 = [15,\ 5,\ 7]^T$ (marked as a triangle), and the goal location is $G = [-10,\ 35,\ 14]^T$ (marked as a star).
Furthermore, $\vect{T}_i$ is determined using the proposed method in the subsection~\ref{ssec:ch4:Ti}.
Figures~\ref{fig:ch4:simMultiple1}-\ref{fig:ch4:simMultiple3} show the simulations results of this case.
The executed path is given in Fig.~\ref{fig:ch4:simMultiple1} at different time instances when the vehicle (marked as o) reaches a distance $C$ from an obstacle (at which the switching from \textbf{M2} to \textbf{M1} occurs).
The complete path taken by the vehicle is presented in Fig.~\ref{fig:ch4:simMultiple2}.
It should be noted that the figures are shown from different viewing angles for illustration purposes.
These results clearly verify that the proposed strategy can safely navigate the vehicle in 3D environments among multiple obstacles.
Moreover, Fig.~\ref{fig:ch4:simMultiple3} depicts the distance between the vehicle and the encountered obstacles versus time $d_i(t)$.
It is obvious from this figure that the distances between the vehicle and the obstacles are lower bounded by $d_0$.
This proves that the strategy can successfully maintain a safety margin between the vehicle and the obstacles during its motion.

\begin{figure}[!htb]
	\centering
	\begin{adjustbox}{minipage=\linewidth,scale=1.0}
		\begin{subfigure}[t]{0.48\textwidth}
			\centering
			\includegraphics[width=\linewidth]{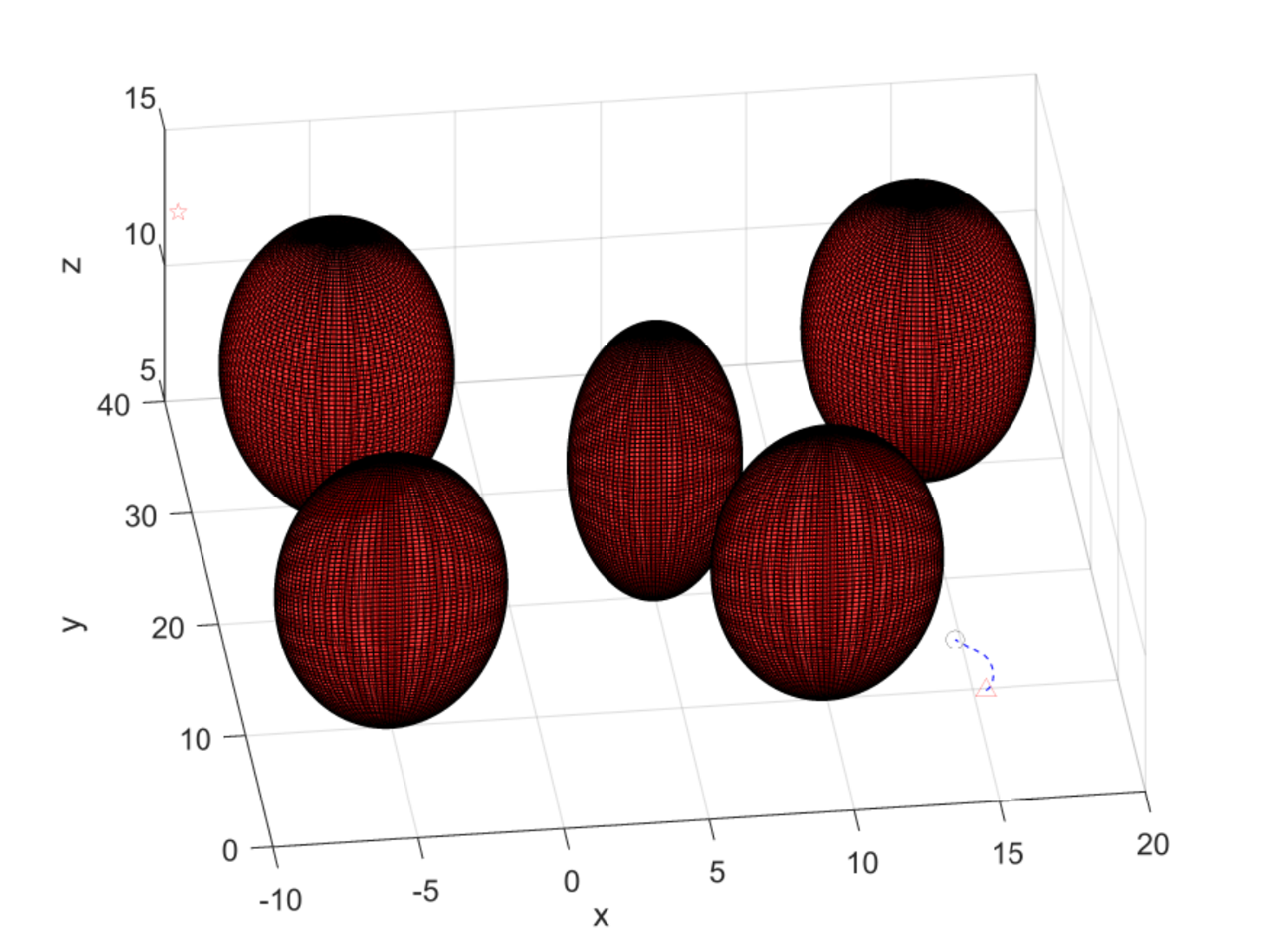} 
			\caption{at $t=2.88s$}
		\end{subfigure}
		\hfil
		\begin{subfigure}[t]{0.48\textwidth}
			\centering
			\includegraphics[width=\linewidth]{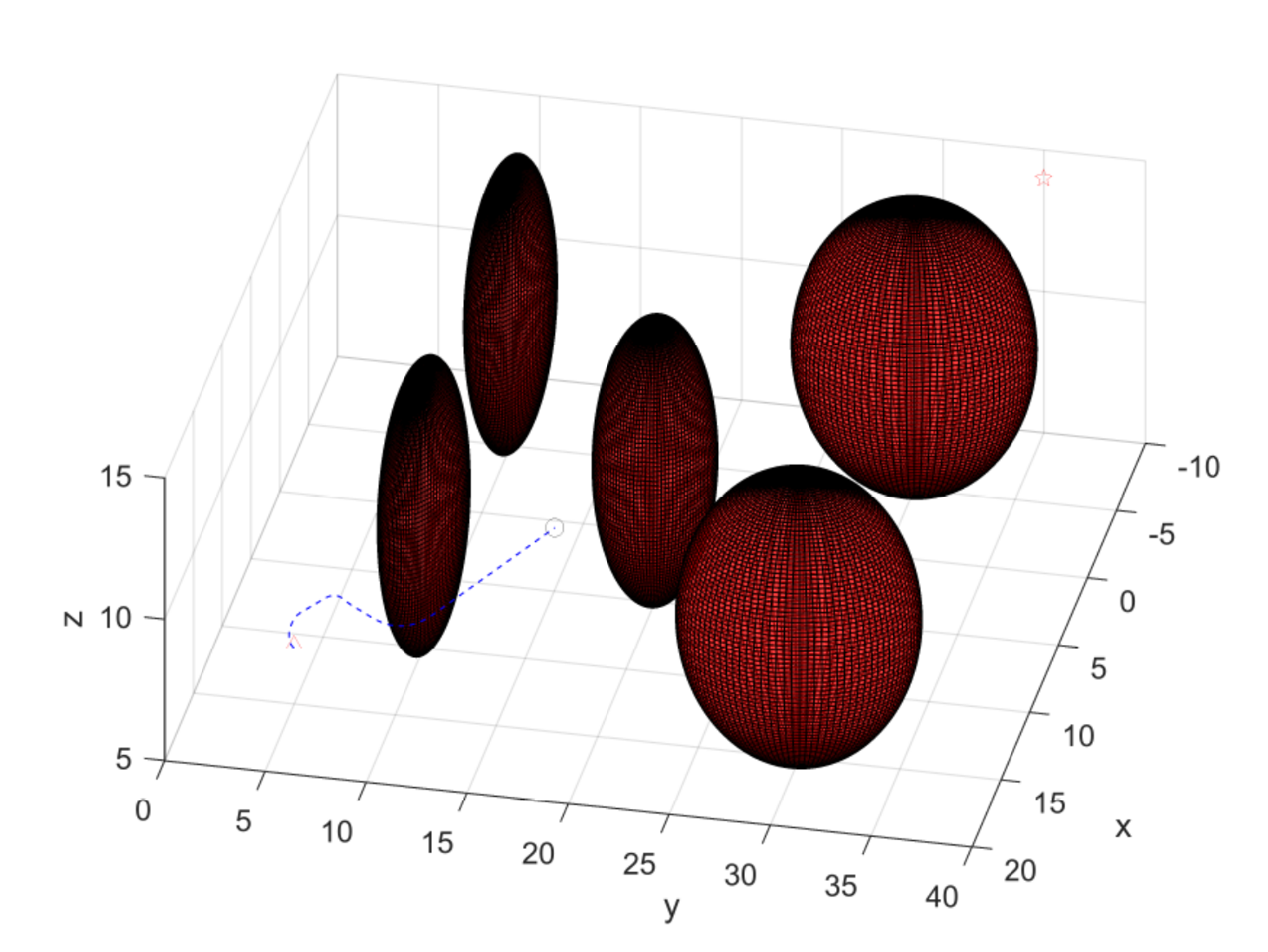} 
			\caption{at $t=14.34s$}
		\end{subfigure}
		
		\begin{subfigure}[t]{0.48\textwidth}
			\centering
			\includegraphics[width=\linewidth]{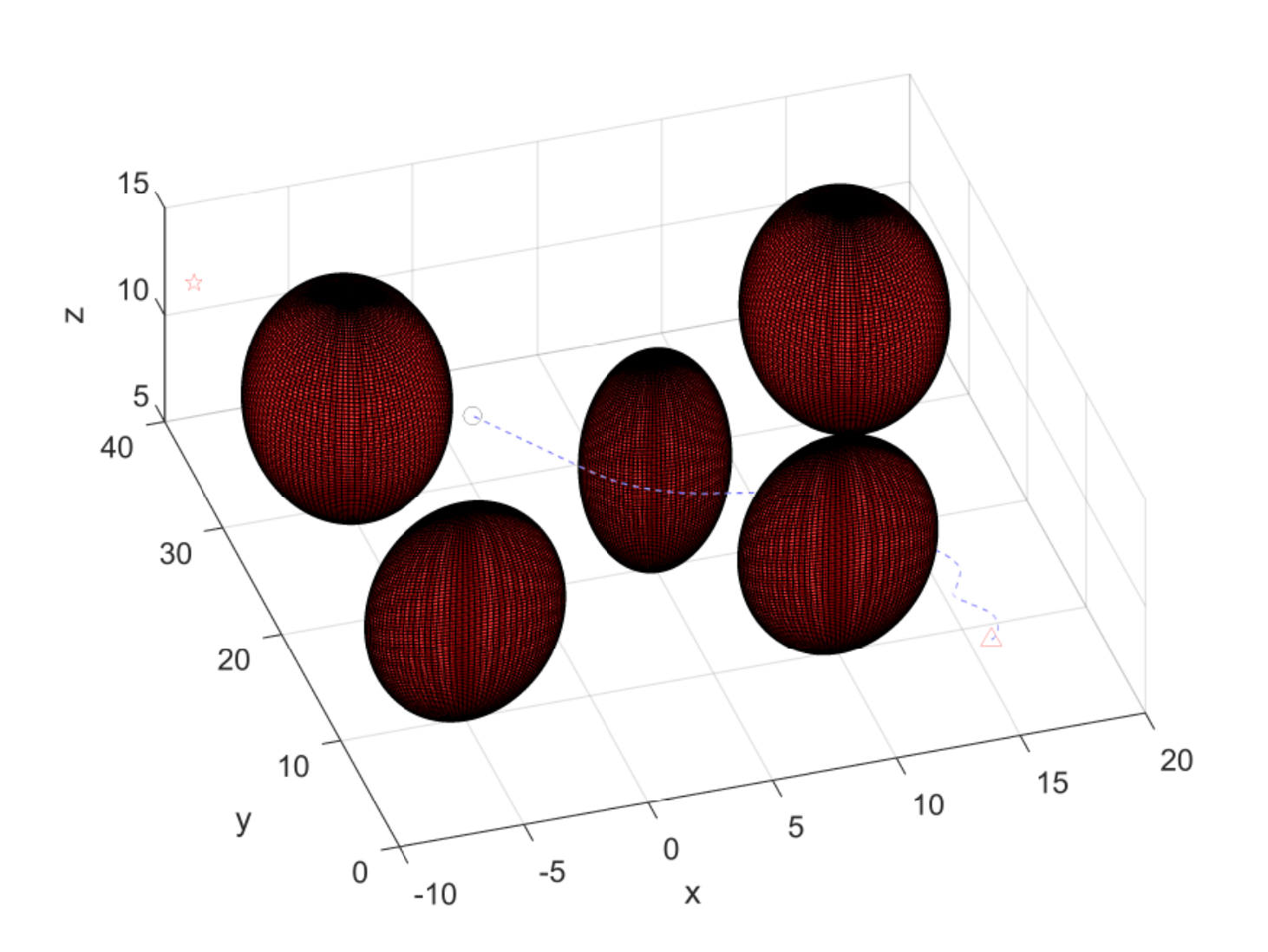} 
			\caption{at $t=26.43s$}
		\end{subfigure}
		\end{adjustbox}
		\caption{Simulation results with multiple obstacles} \label{fig:ch4:simMultiple1}
\end{figure}

\begin{figure}[!htb]
	\centering
	\begin{adjustbox}{minipage=\linewidth,scale=1.0}
		\begin{subfigure}[t]{0.45\textwidth}
			\centering
			\includegraphics[width=\linewidth]{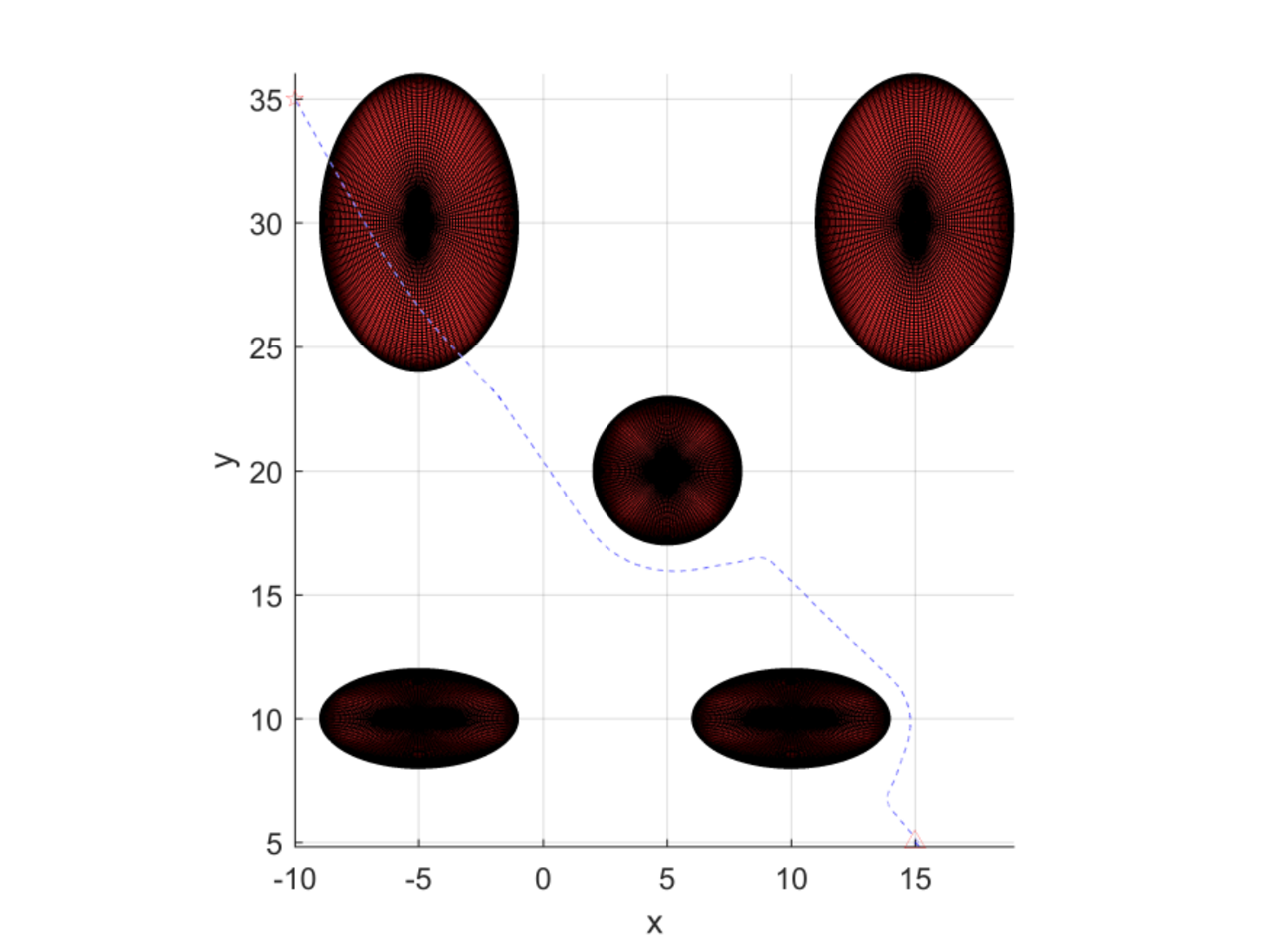} 
			\caption{XY view}
		\end{subfigure}
		\hfil
		\begin{subfigure}[t]{0.45\textwidth}
			\centering
			\includegraphics[width=\linewidth,trim={0 2.5cm 0 2.5cm},clip]{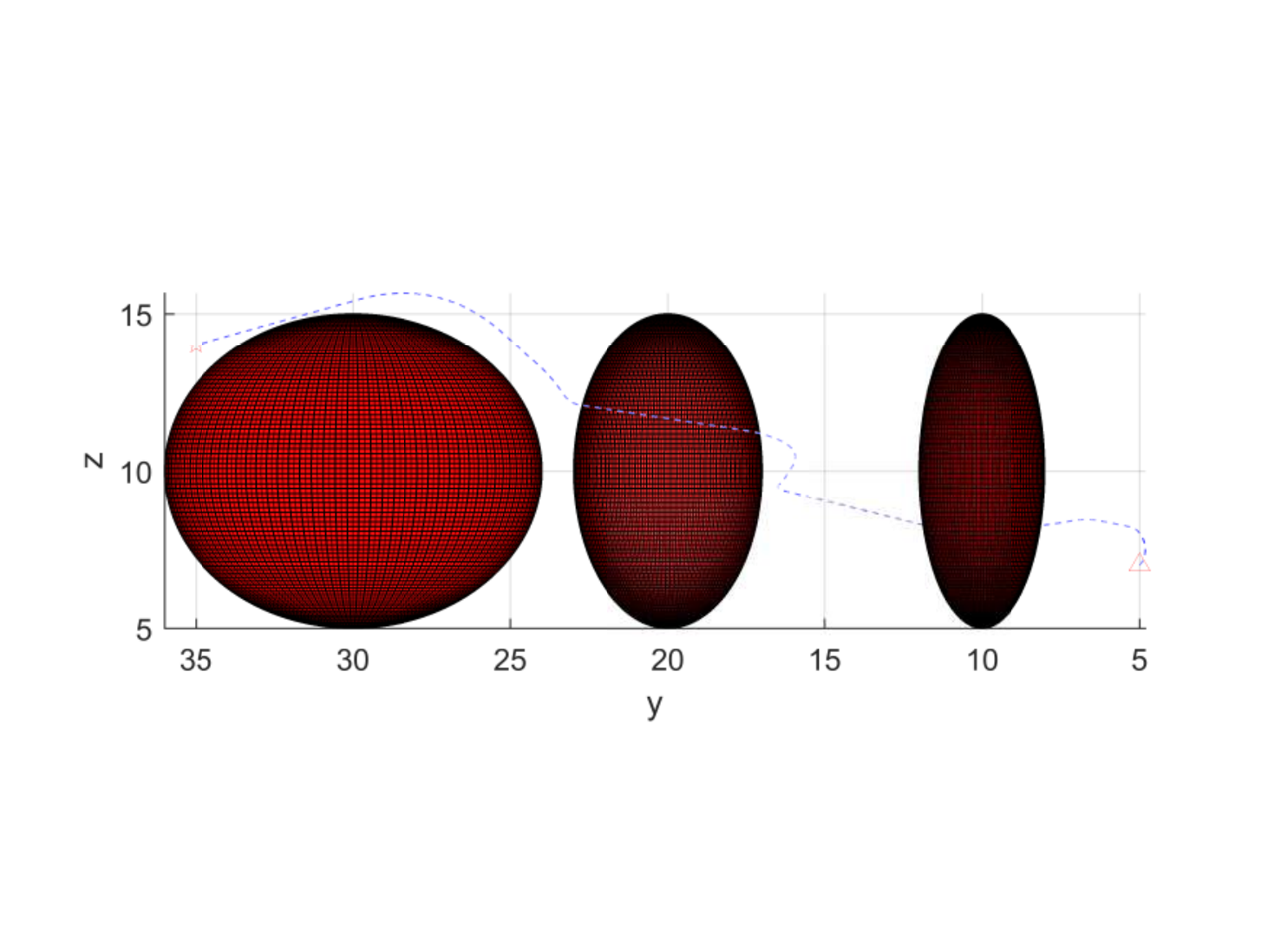} 
			\caption{YZ view}
		\end{subfigure}
		
		\begin{subfigure}[t]{0.45\textwidth}
			\centering
			\includegraphics[width=\linewidth]{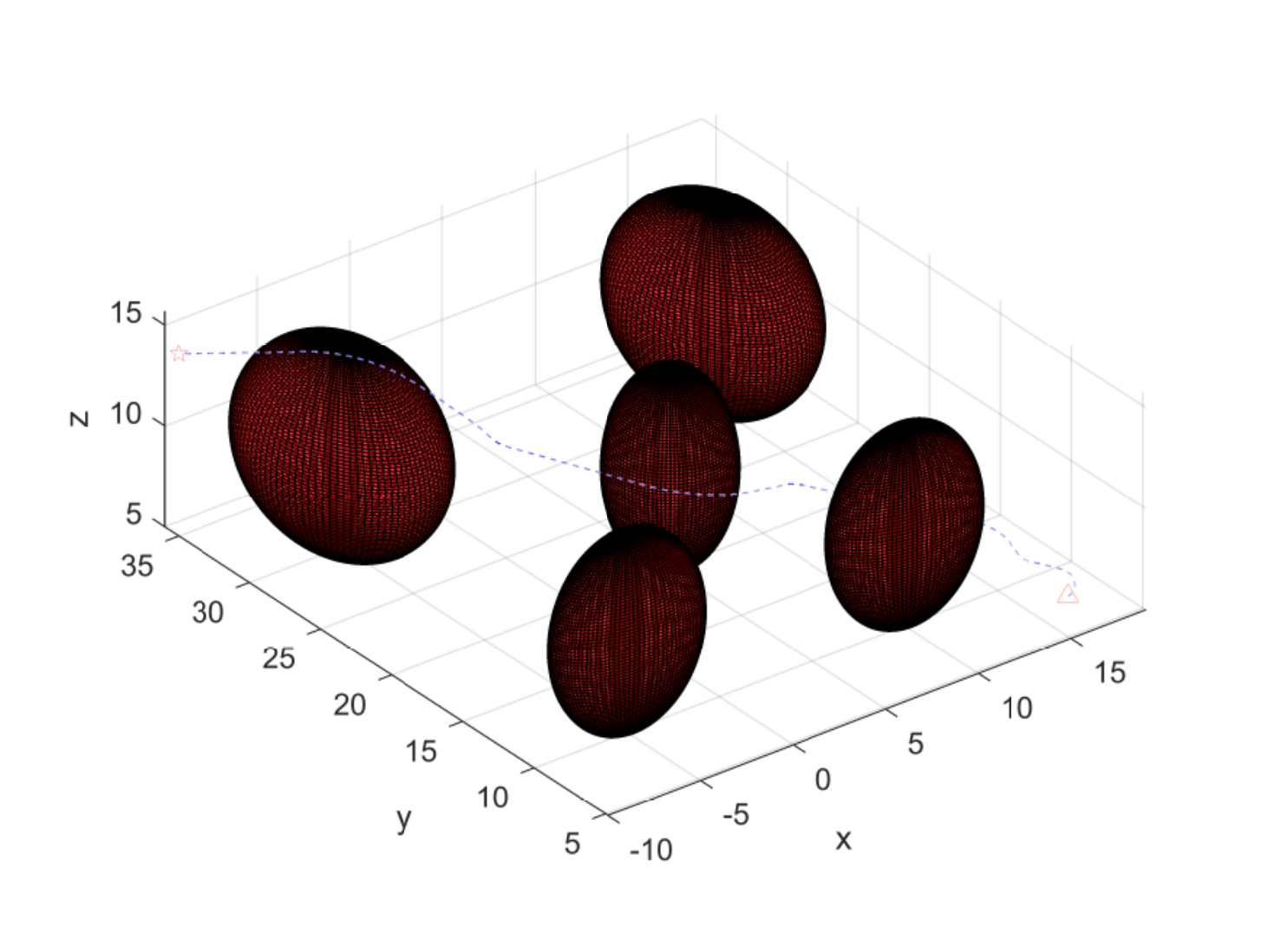} 
			\caption{3D view}
		\end{subfigure}
	\end{adjustbox}
	\caption{Simulation results with multiple obstacles (complete executed Path with different views)}\label{fig:ch4:simMultiple2}
\end{figure}

\begin{figure}[!htb]
	\centering
	\includegraphics[scale=0.55]{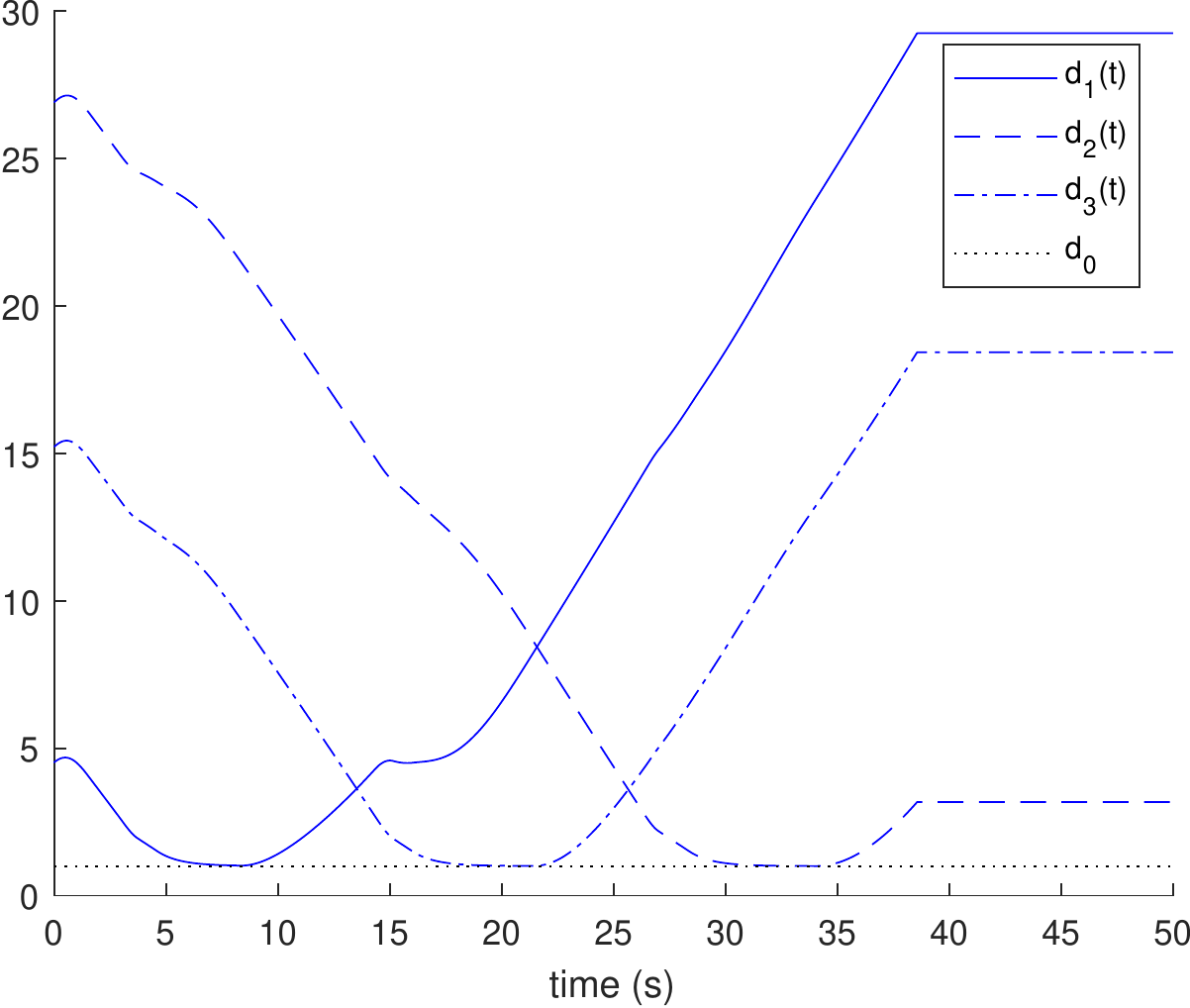} 
	\caption{Distances to obstacles $d_i(t)$ during the simulation} \label{fig:ch4:simMultiple3}
\end{figure}

\section{Conclusion \& Future Work}\label{sec:ch4:conclusion}

This work proposed a 3D navigation strategy for nonholonomic mobile robots.
Computer simulations were performed, and the results verified that the proposed strategy can successfully navigate a vehicle safely in 3D environments while keeping a safe distance from obstacles. 
An extension of this technique to handle moving obstacles is currently under investigation.
Furthermore, possible techniques of determining the plane of avoidance $\Pav$ and a practical implementation on flying robots are considered for future work.

    \renewcommand{\vect}[1]{\bm{#1}}

\chapter{A Reactive Navigation Method of Quadrotor UAVs in Unknown Environments with Obstacles based on Differential-Flatness\label{cha:reactive_impl}}

The 3D reactive navigation method suggested in \cref{cha:methods_reactive3D} was developed at a high-level considering a general 3D nonholonomic kinematic model.
This chapter further extends that approach showing a possible implementation with quadrotor UAVs with experimental validation using a real quadrotor.
Control laws are developed in this chapter based on the sliding mode control technique and the differential-flatness property of quadrotor dynamics.
This work was presented in \cite{elmokadem2019reactive}.

\section{Introduction}

Unmanned aerial vehicles (UAVs) have become very important for many applications.
Several research works have been conducted contributing towards the development of fully autonomous aerial vehicles.
Some of the major challenges for UAVs are the safe operation in unknown environments and autonomously detecting and reacting to obstacles (i.e. sense and avoid).
Hence, planning of collision-free trajectories for quadrotors is an active field of research.

Generally, navigation methods can be classified as deliberative (global planning) or sensor-based (local planning) according to \cite{hoy2015algorithms}.
Global planning methods can find optimal solutions to reach goal positions.
However, these methods require a prior knowledge about the environment (i.e. a map).
On contrary, local planners can generate motions utilizing limited local knowledge by observing a fraction of the environment with onboard sensors.

Computation performance of navigation approaches is one of the important factors especially when dealing with small fast UAVs \cite{hoy2015algorithms}.
Generally, approaches based on local planning are faster than those relying on global planning.
When the planning horizon becomes infinitesimally small, the approach acts as a reactive feedback controller \cite{hoy2015algorithms}.
One of the drawbacks of reactive methods is that they do not generate optimal trajectories.
However, they have lower computational complexity compared with optimization-based and search-based techniques, and they do not depend on convergence of optimization algorithms to find feasible solutions \cite{cole2018reactive}.

Many of the available reactive approaches consider only planar motions, and they are mostly implemented on ground vehicles or UAVs with a fixed flying altitude.
Recently, three-dimensional (3D) reactive approaches have attracted much interest.
However, many of them were only tested in simulations, and very few were implemented on UAVs.
Some of the available reactive methods considering planar motion of mobile robots can be found in \cite{matveev2011method,matveev2012real,savkin2014seeking,matveev2015safe,matveev2015globally} while examples of 3D reactive strategies include \cite{yang20133d,hebecker2015model,thanh2018simple,elmokadem20183d,wang2018strategy,cole2018reactive}.

A common practice when planning trajectories for quadrotors with enough computation power is to use a global planner to generate goal waypoints within the environment based on the current knowledge (map) and use a local planner running at a higher rate to generate collision-free trajectories between the waypoints.
In case of small quadrotors with limited computation capabilities, relying only on a local planner would be better for safe motion.
Several methods for generation of collision-free trajectories for quadrotors have been proposed such as \cite{mellinger2011minimum,mueller2015computationally,bry2015aggressive,loianno2016estimation,richter2016polynomial,chen2016online,allen2016real,liu2017planning,spedicato2017minimum,tordesillas2019fastrap}.

Most approaches use optimization-based techniques to generate optimal smooth trajectories with respect to higher order derivatives such as jerk or snap.
For example, in \cite{mellinger2011minimum}, a generation method was proposed that formulates the problem as a constrained quadratic program (QP) to find minimum snap trajectories confined in a decomposed convex region of free space expressed as corridor-like constraints.
This method was further extended in \cite{richter2016polynomial} where it was shown that it can be solved for long-range trajectories using unconstrained QP.
Another optimization-based approach using a less conservative Mixed Integer QP formulation as local planner was proposed in \cite{tordesillas2019fastrap} where it solves for two trajectories at every planning step to ensure safety.
A trajectory generation method was also proposed in \cite{mueller2015computationally} which is based on rapidly generating motion primitives with minimum input aggressiveness using closed-form solutions in addition to a recursive feasibility verification step.

This paper presents a reactive local planner which can generate collision-free trajectories so that quadrotors can safely reach a goal position.
The proposed strategy is based on concepts from guidance laws and equiangular navigation to ensure safe motion.
Our method can be implemented either using a closed-form expression where sensors observations can be mapped directly to control actions or by generating a trajectory for a short horizon.
Nevertheless, the current implementation adopts the latter approach as using the closed form expression of the current approach might be sensitive to initial conditions according to \cite{manjunath2016application} which requires further investigation.
A development of the suggested reactive planner was initially proposed in the previous chapter considering only the kinematic model.
Extending the method for quadrotor UAVs and experimental evaluation of the reactive navigation pipeline are considered the main focus of this chapter. 
Furthermore, a trajectory tracking control design for quadrotor UAVs based on differential-flatness and sliding mode control technique is also developed.
The control design utilizes the differential-flatness property of quadrotor dynamics which guarantees that it can follow any smooth trajectory in the space of flat outputs that is feasible with bounded derivatives \cite{mellinger2011minimum}.
The proposed method is implemented and tested on a quadrotor, and the results of the experiments are used to evaluate the performance of the overall control architecture.

The organization of this chapter is as follows.
In section~\ref{sec:ch5:model}, a brief description of the used quadrotor's model is given.
Then, a trajectory tracking control design is given in section~\ref{sec:ch5:control} followed by the proposal of a reactive trajectory generation method in section~\ref{sec:ch5:trajectoryGeneration}.
Experimental setup and results are then given in section~\ref{sec:ch5:results} to evaluate the performance of the proposed method\footnote{Video: https://youtu.be/ByZklzqjMW0}.
Finally, this work is concluded in section~\ref{sec:ch5:conclusion} with suggestions of possible future work.

\section{Model}\label{sec:ch5:model} %

The dynamical model of a quadrotor is considered here based on \cite{hamel2002dynamic,faessler2017differential} by neglecting wind and rotor drag effects.
Two coordinate frames are used including a world frame $W=\{\vect{x}_{W},\ \vect{y}_{W},\ \vect{z}_{W}\}$ where $\vect{z}_{W}$ is pointing upward and a body-fixed frame $\mathcal{B}=\{\vect{x}_{\mathcal{B}},\ \vect{y}_{\mathcal{B}},\ \vect{z}_{\mathcal{B}}\}$ with an origin that coincides with the quadrotor's center of mass.
Note that the vectors spanning the orthonormal basis for both frames are of unit length. 
An illustration of these coordinate frames is shown in \cref{fig:ch5:quadFrames}.
The position, velocity and acceleration of the quadrotor's center of mass are expressed in the frame $W$ as $\vect{p}=[x,\ y,\ z]^T$ relative to an arbitrary fixed origin $\bm{0} \in W$, $\vect{v}\in \R^3$ and $\vect{a}\in \R^3$ respectively.
The quadrotor's orientation is represented as a rotation matrix $\vect{R}=[\vect{x}_{\mathcal{B}},\ \vect{y}_{\mathcal{B}},\ \vect{z}_{\mathcal{B}}]\in SO(3):\mathcal{B}\to W$, and its angular velocity vector is denoted as $\omega=[p,\ q,\ r]^T$.
Both orientation and angular velocities are expressed in the body-fixed frame. 
Additionally, the orientation can also be parametrized locally by Euler angles (i.e. roll $\phi$, pitch $\theta$ and yaw $\psi$).
Therefore, the rotation matrix $\vect{R}$ can be alternatively written as:
\begin{equation}\label{equ:ch5:RotR}
\vect{R} = \left[
\begin{array}{ccc}
c_{\theta} c_{\psi} & 
s_{\phi} s_{\theta} c_{\psi} - c_{\phi} s_{\psi} & 
c_{\phi} s_{\theta} c_{\psi} + s_{\phi} s_{\psi} \\
c_{\theta} s_{\psi} & 
s_{\phi} s_{\theta} s_{\psi} + c_{\phi} c_{\psi} & 
c_{\phi} s_{\theta} s_{\psi} - s_{\phi} c_{\psi} \\
-s_{\theta} & 
s_{\phi} c_{\theta} & 
c_{\phi} c_{\theta}
\end{array}
\right]
\end{equation}
where the notation $c_{\alpha}:=\cos\alpha$ and $s_{\alpha}:=\sin\alpha$ is used.

Using the above definitions, the quadrotor's dynamical model can be written as follows:
\begin{equation}\label{equ:ch5:quadrotorModel}
\begin{aligned}
\vect{\dot{p}} &= \vect{v} \\
\vect{\dot{v}} &= -g \bm{e}_3 + T \bm{R} \bm{e}_3 \\
\vect{\dot{R}} &= \vect{R} \vect{\hat{\omega}} \\
\vect{\dot{\omega}} &= \vect{I}^{-1} \Big(-\vect{\omega} \times \vect{I} \vect{\omega} + \vect{\tau}\Big)
\end{aligned}
\end{equation}
where $g$ is the gravitational acceleration, $\bm{e}_3 = [0,0,1]^T$, $T \in \R$ is the mass-normalized collective thrust, $\vect{\hat{\omega}}$ is a skew-symmetric matrix defined according to $\vect{\hat{\omega}} \vect{r} = \vect{\omega} \times \vect{r}$ for any vector $\vect{r}\in\R^3$, $\vect{I}$ is the inertia matrix corresponding to the quadrotor's center of mass along the axes $\vect{x}_\mathcal{B},\ \vect{y}_\mathcal{B}$ and $\vect{z}_\mathcal{B}$, and $\vect{\tau} \in \R^3$ is the torques input vector.
Model \eqref{equ:ch5:quadrotorModel} is clearly an underactuated system where its states are given by $\vect{x} = [x,y,z,\dot{x},\dot{y},\dot{z},\phi,\theta,\psi,p,q,r]^T$, and its four inputs are given by $\vect{u} = \left[\begin{array}{c}
T \\ \vect{\tau}
\end{array}\right]$.

\begin{figure}[!th]
	\centering
	\includegraphics[width=0.3\linewidth]{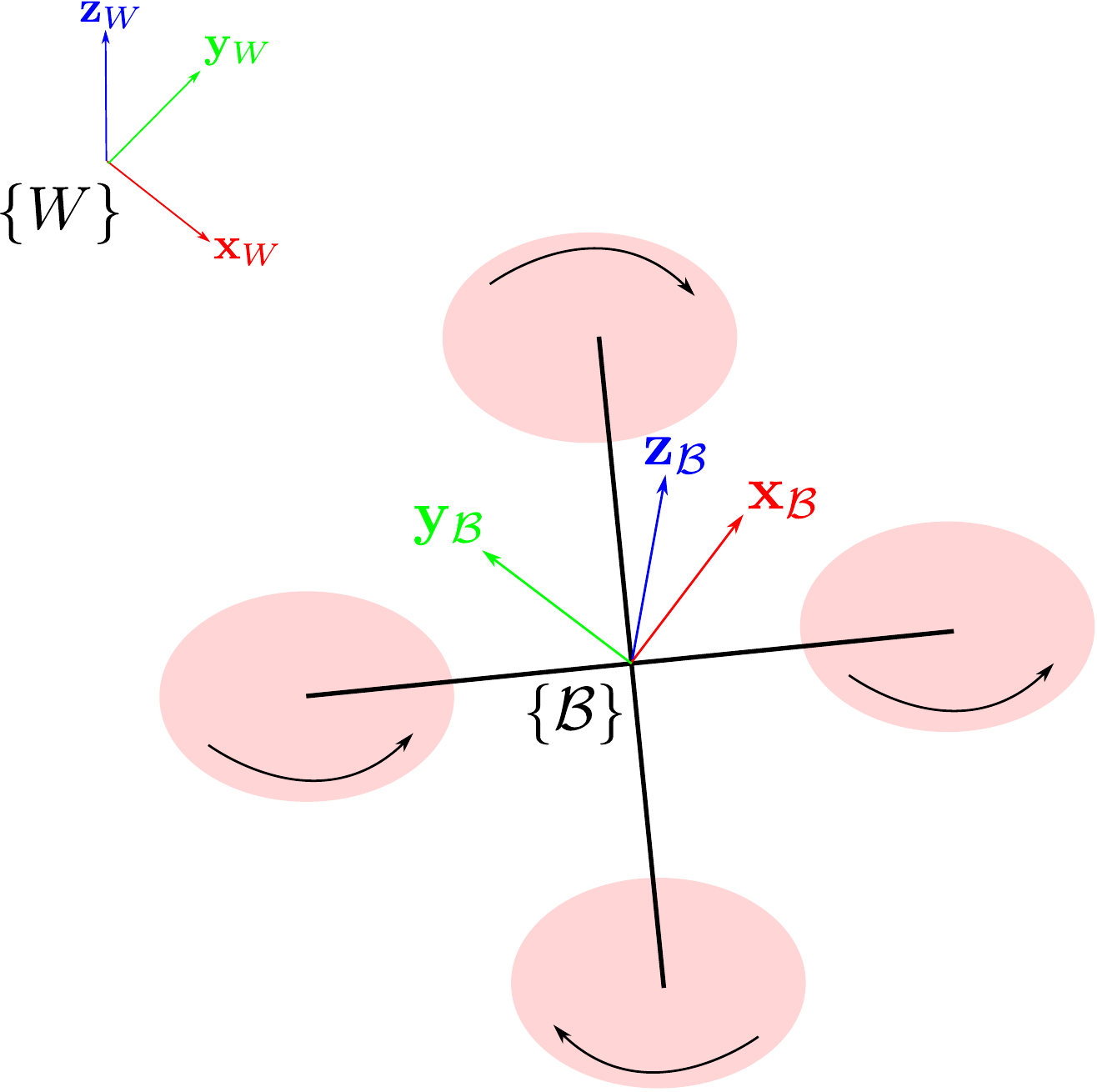}
	\caption{Schematic of the quadrotor coordinate frames} \label{fig:ch5:quadFrames}
\end{figure}

\section{Control}\label{sec:ch5:control} %

In this section, we propose a trajectory tracking control design utilizing the differential-flatness property as was done in \cite{mellinger2011minimum,faessler2017differential}; however, the adopted control technique is based on the sliding mode control theory.
It has been shown in \cite{mellinger2011minimum} that the quadrotor dynamics is differentially flat.
That is, we can express the system states $\vect{x}$ and inputs $\vect{u}$ using algebraic functions of four flat outputs and their derivatives.
We consider the following choice of flat outputs $(x,y,z,\psi)$ which is very common.

To track a reference trajectory, we define the following position and velocity tracking errors:
\begin{equation}\label{equ:ch5:errors}
\vect{e}_{\vect{p}} = \vect{p}_r - \vect{p},\ \vect{e}_{\vect{v}} = \vect{v}_r - \vect{v}
\end{equation}
where $\vect{p}_r$ and $\vect{v}_r$ are reference position and velocity respectively.
A sliding surface $\bm{\sigma}$ is then chosen to be:
\begin{equation}\label{equ:ch5:slidingsurface}
\bm{\sigma} = \vect{e}_{\vect{v}} + \vect{C}_1 \vect{e}_{\vect{p}}
\end{equation}
where $\vect{C}_1 \in \R^{3\times 3}$ is a positive-definite diagonal matrix.
It is obvious that the choice \eqref{equ:ch5:slidingsurface} guarantees that both $\vect{e}_{\vect{p}}$ and $\vect{e}_{\vect{v}}$ converge to zero when the system states reach $\bm{\sigma}=0$.
Hence, a feedback-control term for the desired acceleration of the quadrotor is designed to force the system states to reach $\vect{\sigma}=0$ as:
\begin{equation}
\vect{a}_{fb} = \vect{C}_2 \tanh\left(\mu \bm{\sigma}\right)
\end{equation} 
where $\vect{C}_2 \in \R^{3\times 3}$ is a positive-definite diagonal matrix, $\tanh(\cdot)$ is the element-wise hyperbolic tangent function, and $\mu>0$ is a parameter that controls how steep the $\tanh$ function is around 0. 
The overall desired acceleration can then be written as:
\begin{equation}
\vect{a}_{des} = \vect{a}_{fb} + \vect{a}_{r} + g \bm{e}_3
\end{equation}
where $\vect{a}_r$ is a feedforward term representing the reference acceleration.

The next step is to determine the input thrust $T$ and the desired orientation $\vect{R}_{des}=[\vect{x}_{\mathcal{B},des},\ \vect{y}_{\mathcal{B},des},\ \vect{z}_{\mathcal{B},des}]$ that satisfy the constraints imposed by the desired acceleration $\vect{a}_{des}$ and a reference yaw angle $\psi_r$.
This is done similar to \cite{faessler2017differential} using the following:
\begin{eqnarray}
\vect{z}_{\mathcal{B},des} &=& \frac{\vect{a}_{des}}{||\vect{a}_{des}||} \\
\vect{y}_{\mathcal{B},des} &=& \frac{ \vect{z}_{\mathcal{B},des} \times \vect{x}_{C} }{||\vect{z}_{\mathcal{B},des} \times \vect{x}_{C}||} \\
\vect{x}_{\mathcal{B},des} &=& \vect{y}_{\mathcal{B},des} \times \vect{z}_{\mathcal{B},des}
\end{eqnarray}
where $||\cdot||$ is the Euclidean norm in $\R^3$, and $\vect{x}_{C}$ is given by:
\begin{equation}\label{equ:ch5:yCref}
\vect{x}_{C} = [\cos(\psi_{r}),\ \sin(\psi_{r}),\ 0]^T
\end{equation}
The input mass-normalized collective thrust can then be computed by projecting the desired acceleration $\vect{a}_{des}$ onto the body-fixed frame $z-$axis (i.e. $z_{\mathcal{B}}$) as follows:
\begin{equation}\label{equ:ch5:thrustInput}
T = \vect{a}_{des} \cdot \vect{z}_{\mathcal{B}} = \vect{a}_{des}^T \vect{R} \vect{e}_3
\end{equation}

Finally, a low-level attitude controller with high bandwidth is typically used to provide the required body moments to achieve the desired orientation $\vect{R}_{des}$ and angular velocity $\vect{\omega}_{des}$.
One possible approach is as done in \cite{mellinger2011minimum}.
Let an orientation and angular velocity error vectors be defined as:
\begin{eqnarray}
\vect{e}_{\vect{R}} &=& \frac{1}{2}(\vect{R}_{des}^T \vect{R} - \vect{R}^T \vect{R}_{des})^{\vee} \\
\vect{e}_{\vect{\omega}} &=& \vect{\omega} - \vect{\omega}_{des}
\end{eqnarray}
where $\vee$ is the vee map from a skew-symmetric matrix in $SO(3)$ to a vector in $\R^3$.
The input torques can then be computed using:
\begin{equation}
\vect{\tau} = - \bm{K}_{\vect{R}} \vect{e}_{\vect{R}} - \vect{K}_{\vect{\omega}} \vect{e}_{\vect{\omega}}
\end{equation}
where $\vect{K}_{\vect{R}}$ and $\vect{K}_{\vect{\omega}}$ are positive definite gain matrices.

\section{Reactive Trajectory Generation} \label{sec:ch5:trajectoryGeneration}

Consider the problem of navigating to a goal position $\vect{p}_{goal} \in W$ in space where unknown obstacles may be detected by onboard sensors as the quadrotor moves through the environment.
A safe trajectory can be generated online depending on light processing of sensors information providing quick responses (i.e. a reactive approach).
The considered local trajectory planning here adopts the strategy described in the previous chapter which was based on concepts from guidance laws and equiangular navigation as was developed in \cite{matveev2011method} for planar motions.

A description of the overall trajectory generation is as follows.
The following kinematic model is considered to generate the reference trajectory:
\begin{equation} \label{equ:ch5:model3D}
\begin{aligned}
&\vect{\dot{p}}_r = V \vect{s}_r \\
&\vect{\dot{s}}_r = \vect{\Omega} \\
&\dot{V} = c_{v} \tanh\Big(\mu (V_0 - V)\Big) \\
\end{aligned}
\end{equation}
where $V\in \R$ is the linear speed, $\vect{s}_r \in \R^3$ is a heading vector of unit length (i.e. direction of the velocity vector), $\vect{\Omega} \in \R^3$ is an angular velocity vector, and $V_0 \in \R$ is a desired constant linear speed.
Note that a different notation is used here than the one used in the previous chapter to avoid confusions with the defined quantities from the dynamical model.

Moreover, the following conditions must hold:
\begin{eqnarray}
\vect{s}_r \cdot \vect{\Omega} &=& 0 \label{equ:ch5:constraints1}\\
||\vect{\Omega}|| &\leq& \Omega_{max} \label{equ:ch5:constraints2}
\end{eqnarray}
The reference position $\vect{p}_r$ is obtained from the solution of \eqref{equ:ch5:model3D} over a small period of time $[0,\Delta t_f]$.
Furthermore, by comparing \eqref{equ:ch5:quadrotorModel} and \eqref{equ:ch5:model3D}, the reference velocity and acceleration can be computed according to:
\begin{eqnarray}
\vect{v}_r &=& V \vect{s}_r \\
\vect{a}_r &=& c_{v} \tanh\Big(\mu (V_0 - V)\Big) \vect{s}_r + V \vect{\Omega}
\end{eqnarray}
where $c_v,\mu>0$.
The model parameters $c_v$ and $\Omega_{max}$ can be selected properly based on the choice of $V_0$ to respect physical maximum limits on velocity $v_{max}$ and acceleration $a_{max}$ of the quadrotor such that $||\vect{v}||\leq v_{max}$ and $||\vect{a}||\leq a_{max}$.

The proposed navigation law to compute $\vect{\Omega}$ needed in \eqref{equ:ch5:model3D} is based on two modes:
\begin{itemize}
	\item[] \textbf{M1:} pure pursuit mode corresponding to $\vect{\Omega} = \vect{\Omega}_{PP}$
	\item[] \textbf{M2:} obstacle avoidance mode corresponding to $\vect{\Omega} = \vect{\Omega}_{OA}$
\end{itemize}
Note that $V_0$ will be the same for both modes as will be shown later.

Let $d(t)$ be the distance to nearest obstacle from a quadrotor's position $\vect{p}_r$ at time instant $t \in [0,\ \Delta t_f]$ whose time derivative $\dot{d}(t)$ can be obtained numerically.
Also, let $d_0$ be a desired distance to be respected when moving around obstacles, and let $d_{safe}$ be a safety margin such that $d_0 > d_{safe}$.
Now, the strategy can be described as follows.
Initially, the pure pursuit mode is activated where the quadrotor moves with a velocity $\bar{V} \leq v_{max}$ towards $\vect{p}_{goal}$.
The following rules are then used to switch between the two navigation modes $\textbf{M1}$ and $\textbf{M2}$:
\begin{itemize}
	\item[] \textbf{R1:} The activation of mode \textbf{M2} occurs at a time $t_{s_1}$ when the distance $d(t_{s_1})$ reduces to a threshold value $C$ (i.e. $d(t_{s_1})=C$ and $\dot{d}(t_{s_1})<0$).
	\item[] \textbf{R2:} The switching from $\textbf{M2}$ to $\textbf{M1}$ occurs at a time $t_{s_2} > t_{s_1}$ when $d(t_{s_2}) \leq d_0 + \epsilon$ ($\epsilon>0$) and the vehicle's heading vector $\vect{s}_r(t_{s_2})$ is directed towards $\vect{p}_{goal}$.
\end{itemize}
Note that the condition $d(t_{s_1})=C$ in \textbf{R1} can be checked with some tolerance around $C$ for robustness against numerical errors.

The overall architecture including the proposed trajectory generation and control is shown in \cref{fig:ch5:blockDiag}, and the trajectory generation strategy is illustrated in \cref{fig:ch5:blockDiagTraj}. 
The following subsections present the navigation laws for both modes (i.e. $\vect{\Omega}_{PP}$ and $\vect{\Omega}_{OA}$).

\begin{remark}
	The heading vector $\vect{s}_r$ in \eqref{equ:ch5:model3D} can be represented as:
	\begin{equation}
	\vect{s}_r = \left[\begin{array}{c}
	\cos\alpha\cos\beta \\
	\sin\alpha\cos\beta \\
	\sin\beta
	\end{array}\right]
	\end{equation}
	where $\alpha$ and $\beta$ are the heading and flight path angles.
	Even though quadrotors are holonomic, the used nonholonomic model is still applicable when considering moving wit constant speed.
	Generally, $\alpha$ can differ from the quadrotor's yaw angle depending on the direction of motion and the orientation of the quadrotor but it is common to have the quadrotor's orientation aligned with the direction of motion. 
\end{remark}

\begin{remark}
	Initial conditions used to solve \eqref{equ:ch5:model3D} can be computed from system states as $V(0) = ||\vect{v}(0)||$ and $\vect{s}_r(0)=\frac{\vect{v}(0)}{V(0)}$.
	However, it can be numerically more stable to use final solutions of previous trajectory generation cycles (i.e. using $\vect{p}_r(\Delta t_f)$, $V(\Delta t_f)$ and $\vect{s}_r(\Delta t_f)$ as initial states for next cycle) assuming that a good tracking performance can be achieved by the controller.
	Also, if the quadrotor's initial velocity $V(0)=0$ (i.e. hovering), the initial heading vector can be selected as $\vect{s}_r(0)=\vect{p}_{goal}-\vect{p}(0)$ to make sure that the condition $\|\vect{s}(t)\|=1$ is satisfied.
\end{remark}

\begin{figure}[!th]
	\centering
	\includegraphics[width=0.75\linewidth]{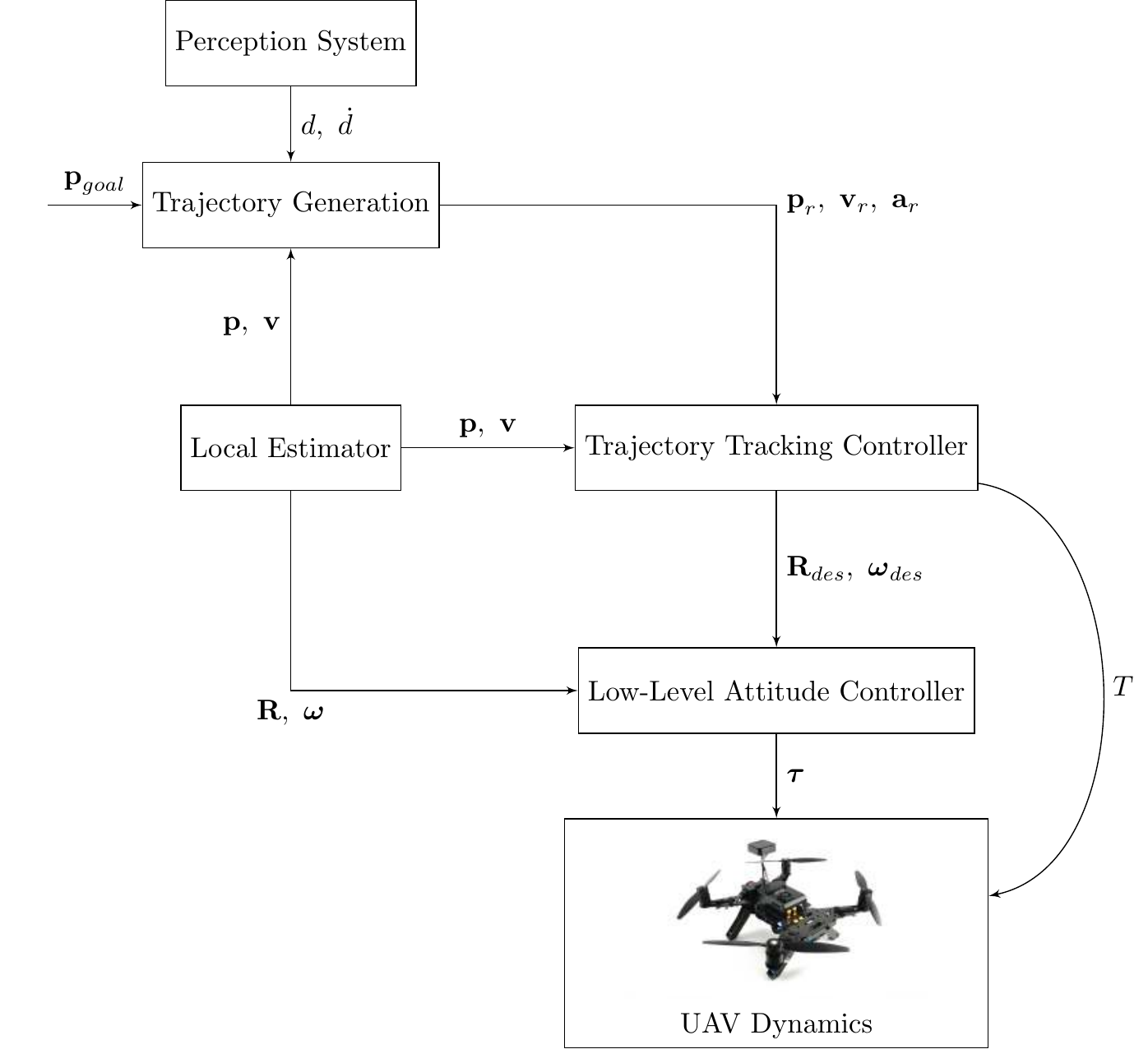}
	\caption{Overall architecture including trajectory generation and control for quadrotors} \label{fig:ch5:blockDiag}
\end{figure}

\begin{figure}[!th]
	\centering
	\includegraphics[width=0.75\linewidth]{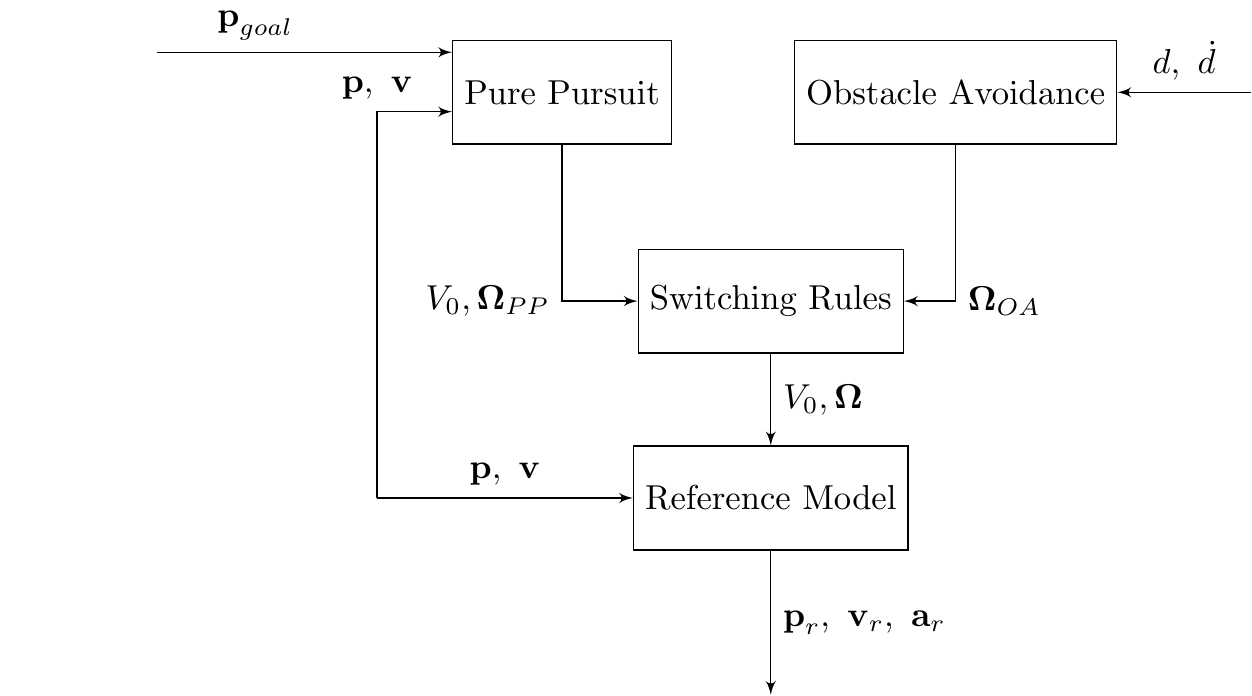}
	\caption{Illustration of the trajectory generation process} \label{fig:ch5:blockDiagTraj}
\end{figure}

\subsection{Pure Pursuit Guidance Law}

Define an error vector $\vect{p}_e$ towards the goal position as follows:
\begin{equation}\label{equ:ch5:err_goal}
\vect{p}_e = \vect{p}_{goal} - \vect{p}_r
\end{equation}
In pure pursuit mode, the quadrotor is moving with a constant velocity $\bar{V}$ towards the goal.
Hence, $V_0$ and $\vect{\Omega}_{PP}$ are designed according to the following:
\begin{eqnarray}
V_0 &=& \bar{V} \tanh(\mu ||\vect{p}_e||) \label{equ:ch5:V0} \\
\vect{\Omega}_{PP} &=& \Omega_{max} F(\vect{s}_r,\ \vect{p}_e) \label{equ:ch5:u_pp}
\end{eqnarray}
where the map function $F(\vect{w}_1,\ \vect{w}_2):\R^3\times \R^3 \to \R^3$ is defined by \cite{wang2018strategy}:
\begin{eqnarray}
&F(\vect{w}_1,\ \vect{w}_2) &:= \left\{
\begin{array}{ll}
0, & f(\vect{w}_1,\ \vect{w}_2)=0 \\
\frac{f(\vect{w}_1,\ \vect{w}_2)}{||f(\vect{w}_1,\ \vect{w}_2)||}, & otherwise
\end{array}
\right. \\
&f(\vect{w}_1,\ \vect{w}_2) &:= \vect{w}_2 - (\vect{w}_1 \cdot \vect{w}_2) \vect{w}_1
\end{eqnarray}
The map $F(\vect{w}_1,\ \vect{w}_2)$ is simply a vector in the plane spanned by both $\vect{w}_1$ and $\vect{w}_2$ that is perpendicular to $\vect{w}_1$ and directing towards $\vect{w}_2$ which acts as a steering function.
Hence, it is clear that \eqref{equ:ch5:u_pp} satisfies the constraints in \eqref{equ:ch5:constraints1}-\eqref{equ:ch5:constraints2}.

\subsection{Obstacle Avoidance Law}

Let $\vect{i}_{T}\in \R^3$ be a vector pointing towards a safe direction away from the nearest detected obstacle at time $t_{s_1}$.
A \textit{plane of avoidance} $\mathcal{P}_{av}$ is defined at the beginning of the avoiding maneuver (i.e. at $t=t_{s1}$) around each obstacle as the plane spanned by $\vect{p}_e(t_{s_1})$ (defined in \eqref{equ:ch5:err_goal}) and $\vect{i}_{T}$.
It is assumed that the heading vector $\vect{s}_r(t_{s_1})$ coincides with $\vect{p}_e(t_{s_1})$ when the obstacle avoidance mode gets activated according to the rule \textbf{R1} as it switches from the pure pursuit mode.
Thus, the plane's normal can be determined by:
\begin{equation}
\vect{i}_{\mathcal{P}_{av}} = \vect{s}_r(t_{s_1}) \times  \vect{i}_{T} 
\end{equation}
Using the above, the navigation law for obstacle avoidance mode is given by:
\begin{equation}\label{equ:ch5:navigationLaw3D}
\begin{aligned}
\vect{\Omega}_{OA} &= \Gamma \Omega_{max}\ \tanh(\mu\sigma_d) \vect{i}_{n} \\ 
\sigma_d &= \dot{d}(t) + \chi (d(t) - d_0) \\
\vect{i}_{n}  &= \vect{i}_{\Pav} \times\vect{s}_r
\end{aligned}
\end{equation}
where $\vect{i}_{n} \in \Pav$ is a unit vector orthogonal to $\vect{s}_r$ and directing away from nearest obstacle, $\Gamma=1$ if the obstacle is in the direction of motion, $\Gamma=-1$ otherwise, and $\chi(\cdot)$ is a saturation function defined as:
\begin{equation}
\chi(c) = \left\{\begin{array}{cc}
\gamma c, & \text{if}\ |c|\leq \delta \\
\delta \gamma, & \text{if}\ c > \delta \\
-\delta\gamma, & \text{if}\ c < -\delta
\end{array}\right.
\end{equation}
where $\gamma,\delta>0$.
It is evident that \eqref{equ:ch5:navigationLaw3D} is feasible since it respects the conditions in \eqref{equ:ch5:constraints1}-\eqref{equ:ch5:constraints2}.
More details and assumptions about this strategy are provided in the previous chapter.

\begin{remark}
	The switching condition in \textbf{R2} was implemented with some tolerance $\bar{\epsilon}>0$ to handle numerical and tracking errors where $\vect{s}_r(t_{s_2})$ is considered to be directed towards $\vect{p}_{goal}$ when
	\begin{equation*}
	\cos^{-1}\left(\frac{\vect{s}_r(t_{s_2}) \cdot \vect{p}_e (t_{s_2})}{||\vect{p}_e (t_{s_2})||}\right) < \bar{\epsilon}
	\end{equation*}
\end{remark}

\section{Experiments} \label{sec:ch5:results}

\subsection{Experimental Setup}

An Intel Aero Ready to Fly (RTF)\footnote{https://github.com/intel-aero/meta-intel-aero/wiki} quadrotor was used in the experiments which is shown in \cref{fig:ch5:aero}.
It is equipped with an onboard computer with Intel® Atom™ x7-Z8700 processor running the Ubuntu operating system, and it has a flight controller unit running the PX4\footnote{https://px4.io/} flight stack which is a collection of guidance, navigation and control algorithms for UAVs.
An Intel Realsense D435 camera is also attached to the quadrotor which can provide RGB images and depth information.
However, the camera was not considered in the current experiments.
We also use a motion capture system (OptiTrack) to provide the ground truth  position and orientation of the vehicle at 125Hz.
An extended Kalman filter is used within the PX4 stack to provide estimates of the quadrotor's states (i.e. position, orientation and velocity) which are used for control.

Our strategy runs completely on the onboard computer utilizing the open-source Robot Operating System (ROS) which makes it easier to build the complete navigation stack.
The proposed trajectory tracking controller in \eqref{equ:ch5:errors}-\eqref{equ:ch5:thrustInput} was implemented to generate thrust $T$ and attitude commands $(\phi_{des},\theta_{des},\psi_{des})$ extracted from $\vect{R}_{des}$ in accordance with \eqref{equ:ch5:RotR}.
These commands are sent to the flight controller at 100 Hz where a low-level attitude controller is used to generate the required body moments at a higher rate.
It should be mentioned that PX4 accepts normalized collective thrust inputs within $[0,\ 1]$ which was done by using an estimated scaling factor for $T$.
Also, the trajectory generation method presented in \cref{sec:ch5:trajectoryGeneration} was implemented to run in parallel to provide $\vect{p}_r$, $\vect{v}_r$ and $\vect{a}_r$ with a resolution of $0.01s$. 

\begin{figure}[!t]
	\centering
	\includegraphics[width=0.75\linewidth,trim=0 500 0 400, clip]{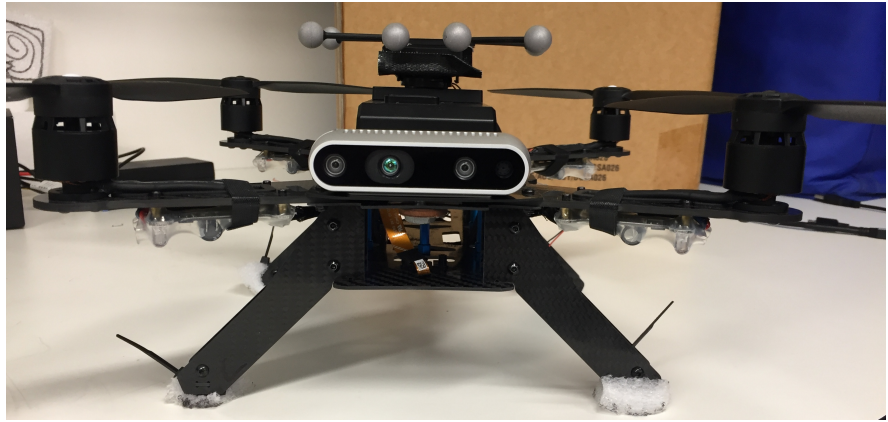}
	\caption{Intel Aero RTF quadrotor used in the experiments equipped with an Intel Realsense D435 camera} \label{fig:ch5:aero}
\end{figure}

\subsection{Experiments Description}

The purpose of the conducted experiments is to show the capability of the proposed strategy to generate safe 3D avoidance maneuvers around an obstacle to reach a desired goal position along with evaluating the performance of the trajectory tracking control where all computations were done onboard.
A single obstacle with a known location (enclosed by a sphere with some safety margin around the flying altitude) was introduced in the way of the quadrotor.
Four cases were considered for the online trajectory generation with different choices of safe directions $\vect{i}_{T}$ resulting in different $\mathcal{P}_{av}$ for each case, and they are given by:
\begin{itemize}
	\item Case 1: $\vect{\bar{i}}_{T}=[0,\ 2,\ -1]^T$
	\item Case 2: $\vect{\bar{i}}_{T}=[0,\ 0,\ -1]^T$
	\item Case 3: $\vect{\bar{i}}_{T}=[-1,\ 0,\ 0]^T$
	\item Case 4: $\vect{\bar{i}}_{T}=[0,\ 3,\ 1]^T$ 
\end{itemize}
where $\vect{i}_{T} = \frac{\vect{\bar{i}}_{T}}{||\vect{\bar{i}}_{T}||}$ since $\vect{i}_{T}$ should be of unit length.
Furthermore, the parameters used in the experiments are as follows: $g = 9.81\ m/s^2$, $\vect{C}_1 = diag\{1.3, 1.3, 3.5\}$, $\vect{C}_2 = diag\{2.0, 2.0, 4.0\}$, $c_v=4.0$, $\bar{V}=0.5\ m/s$, $\Omega_{max}=5.0$, $d_0 = 0.5 \ m$, $C = 0.75 \ m$, $\delta=0.35$ and $\gamma=0.5$.
Also, the reference yaw angle was chosen to be $\psi_r=0$ during the whole flight.
However, it is possible to align $\psi_r$ with the direction of motion (i.e. by aligning $\vect{x}_{\mathcal{B}}$ with the heading vector $\vect{s}_r$) when using the onboard camera.
Another possible approach is to choose an orientation that can maximize the camera's field of view (FOV) to get good information about nearest obstacle.

\Cref{fig:ch5:env} shows the indoor environment used for the flights, and a video of the experiments can be found at \href{https://youtu.be/ByZklzqjMW0}{https://youtu.be/ByZklzqjMW0}.

\begin{figure}[!t]
	\centering
	\includegraphics[width=\linewidth]{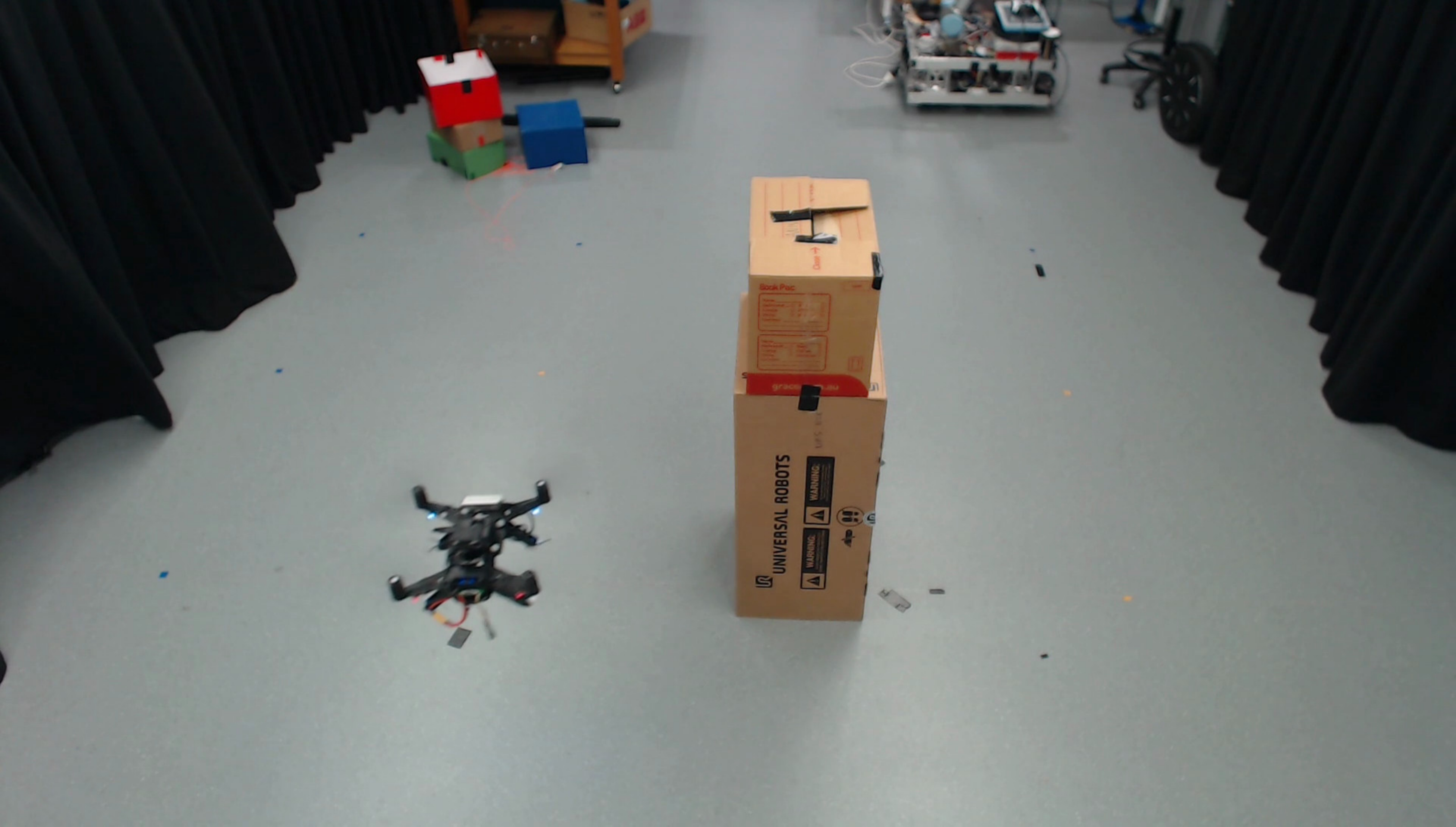}
	\caption{Indoor setup used for the experiments} \label{fig:ch5:env}
\end{figure}

\subsection{Results}

Each experiment follows a sequence of flight modes which is: takeoff $\to$ hover $\to$ reactive navigation to goal position $\to$ hover $\to$ land.
Flight data was recorded for all cases, and the results were analyzed using MATLAB.
These results are shown in \cref{fig:ch5:exp1,fig:ch5:exp2,fig:ch5:exp3,fig:ch5:exp4,fig:ch5:exps_vel,fig:ch5:exps_perr,fig:ch5:dobs}.
The actual executed paths in each case can be seen in \cref{fig:ch5:exp1,fig:ch5:exp2,fig:ch5:exp3,fig:ch5:exp4} where an enclosing sphere near the flying altitude is used to represent the obstacle, and the plane of avoidance is shown as well.
The velocity profile $||\vect{v}||$ for all cases is given in \cref{fig:ch5:exps_vel}.
For takeoff and land modes, the quadrotor's velocity was around 1.5 $m/s$ and 0.7 $m/s$ respectively.
The reactive navigation took place between $35s$ and $51s$ approximately during which the velocity was around 0.5-0.6 $m/s$ due to tracking errors.
The norms of position tracking errors $||\vect{e}_{\vect{p}}||$ for the four cases are shown in \cref{fig:ch5:exps_perr} showing good performance.
It was noticed that better tracking performance can be achieved by obtaining a better estimate of the thrust scaling factor.
When selecting the design parameters $C$ and $d_0$, good margin needs to be considered to maintain safety even with some small tracking error.
The distance between the quadrotor and the obstacle during the reactive navigation mode period is also shown in \cref{fig:ch5:dobs} which verifies that the proposed reactive strategy can maintain a good safety margin around $d_0$. 

\begin{figure}[!th]
	\centering
	\includegraphics[width=\linewidth]{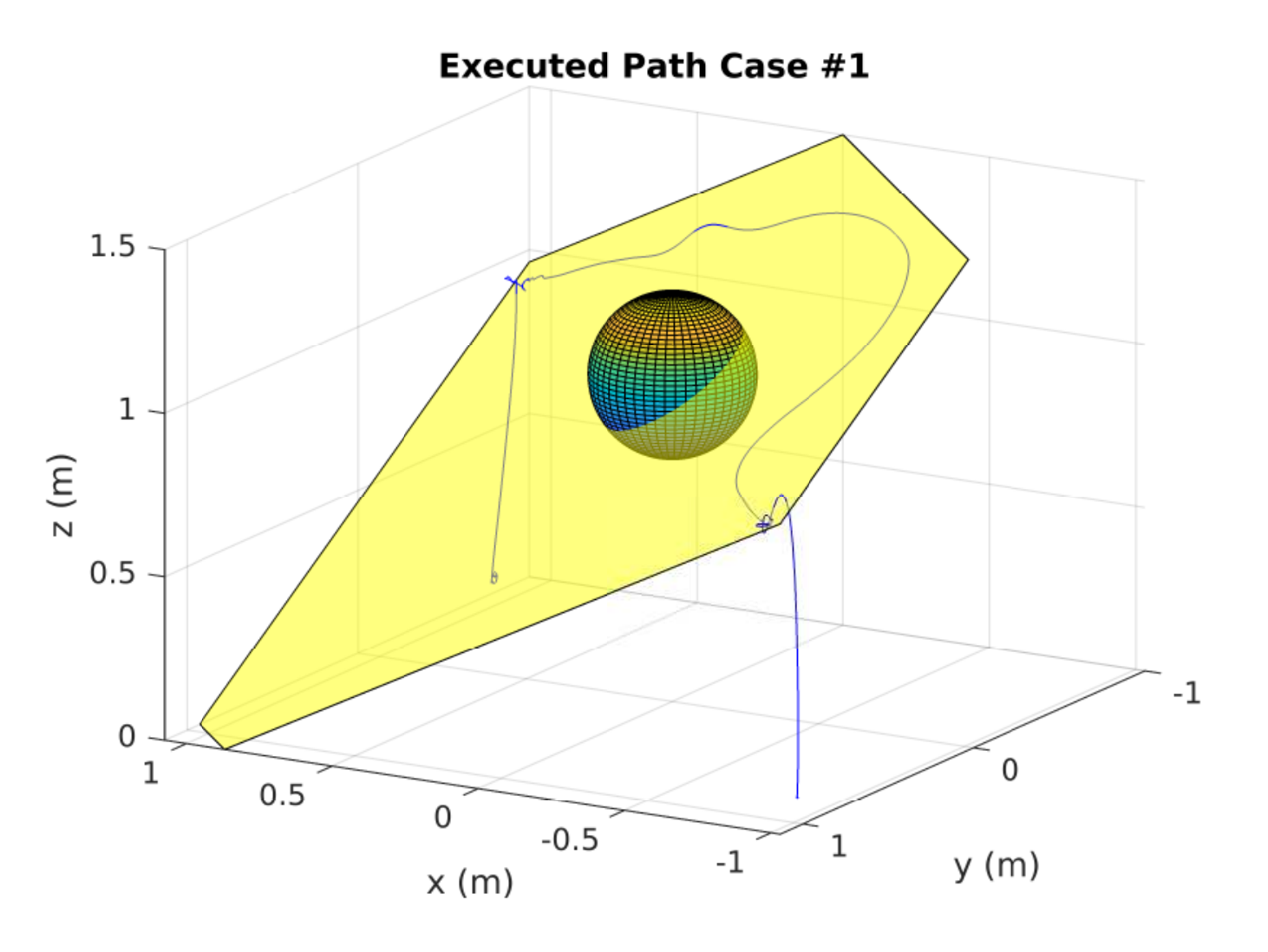}
	\caption{Actual executed path for case 1 along with the assigned plane of avoidance} \label{fig:ch5:exp1}
\end{figure}

\begin{figure}[!th]
	\centering
	\includegraphics[width=\linewidth]{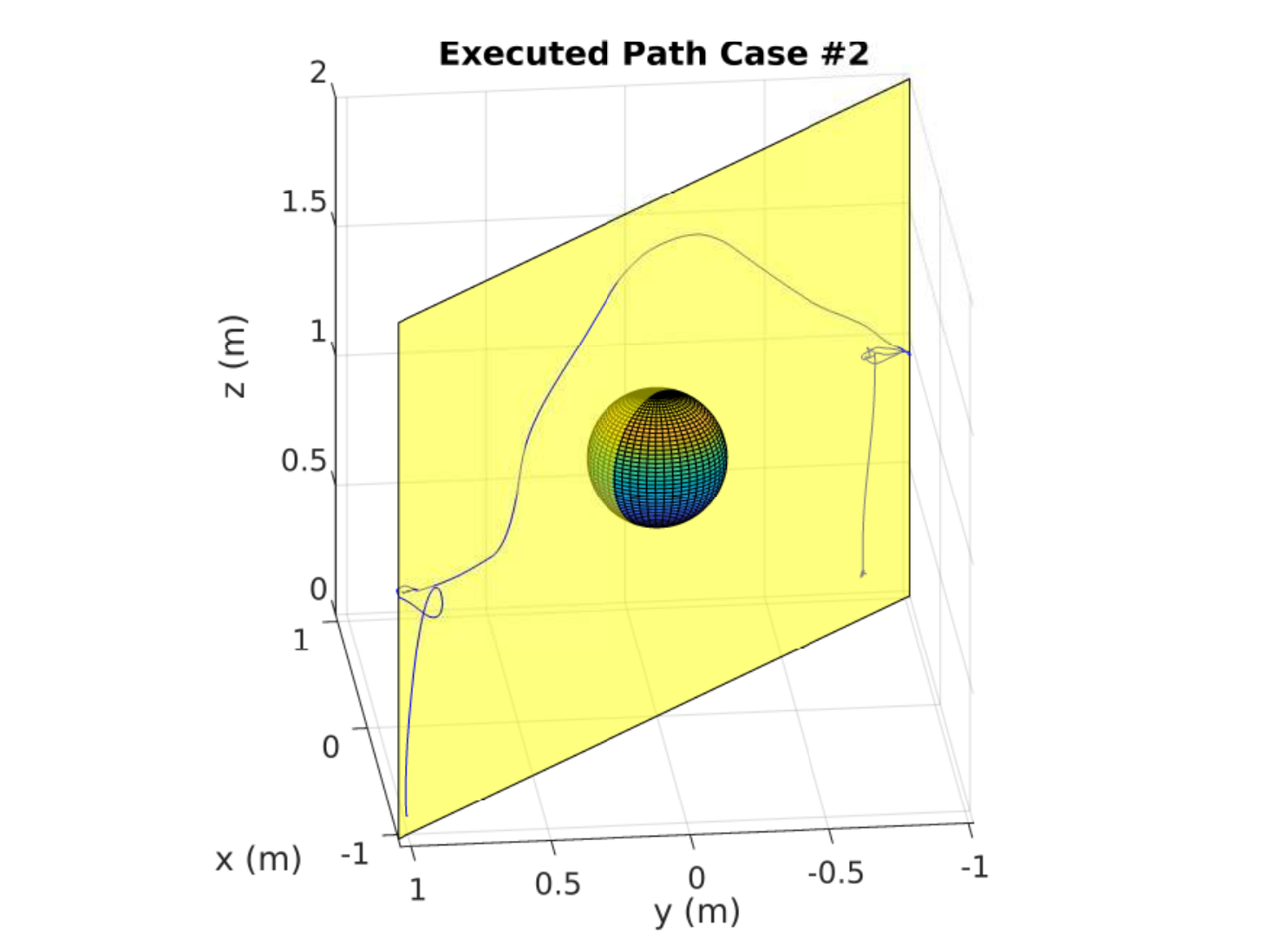}
	\caption{Actual executed path for case 2 along with the assigned plane of avoidance} \label{fig:ch5:exp2}
\end{figure}

\begin{figure}[!th]
	\centering
	\includegraphics[width=\linewidth]{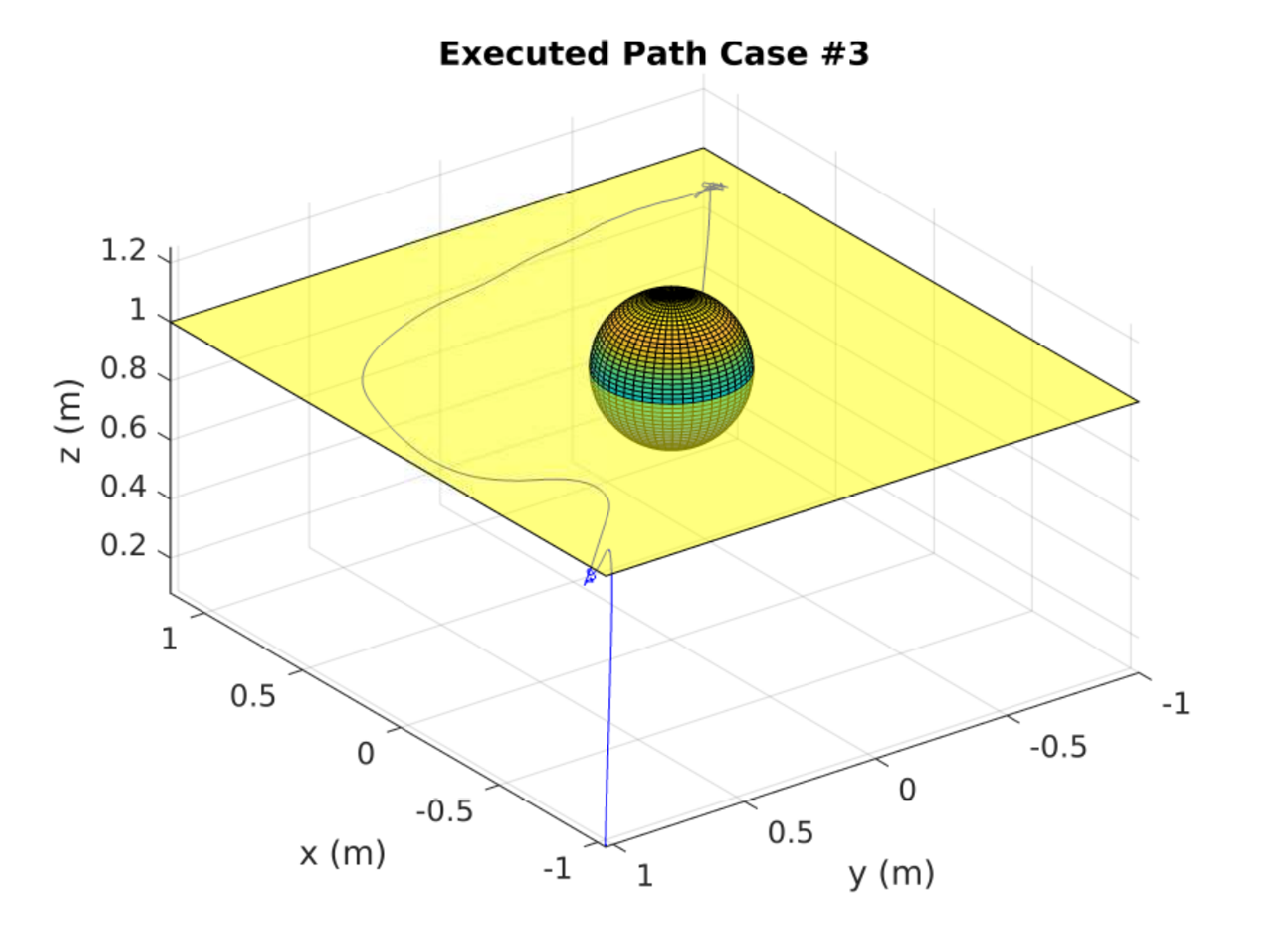}
	\caption{Actual executed path for case 3 along with the assigned plane of avoidance} \label{fig:ch5:exp3}
\end{figure}

\begin{figure}[!th]
	\centering
	\includegraphics[width=\linewidth]{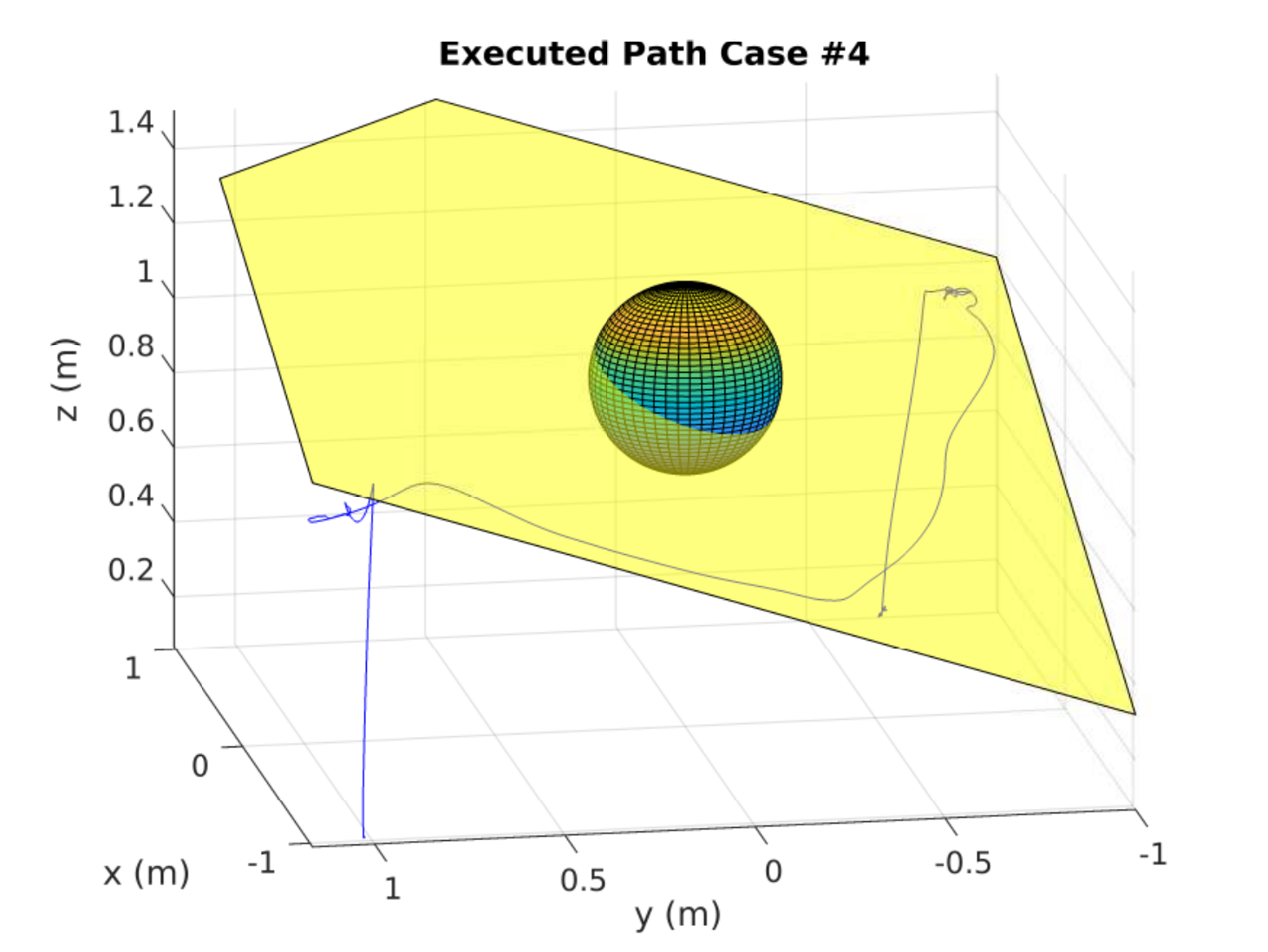}
	\caption{Actual executed path for case 4 along with the assigned plane of avoidance} \label{fig:ch5:exp4}
\end{figure}

\begin{figure}[!th]
	\centering
	\includegraphics[width=\linewidth]{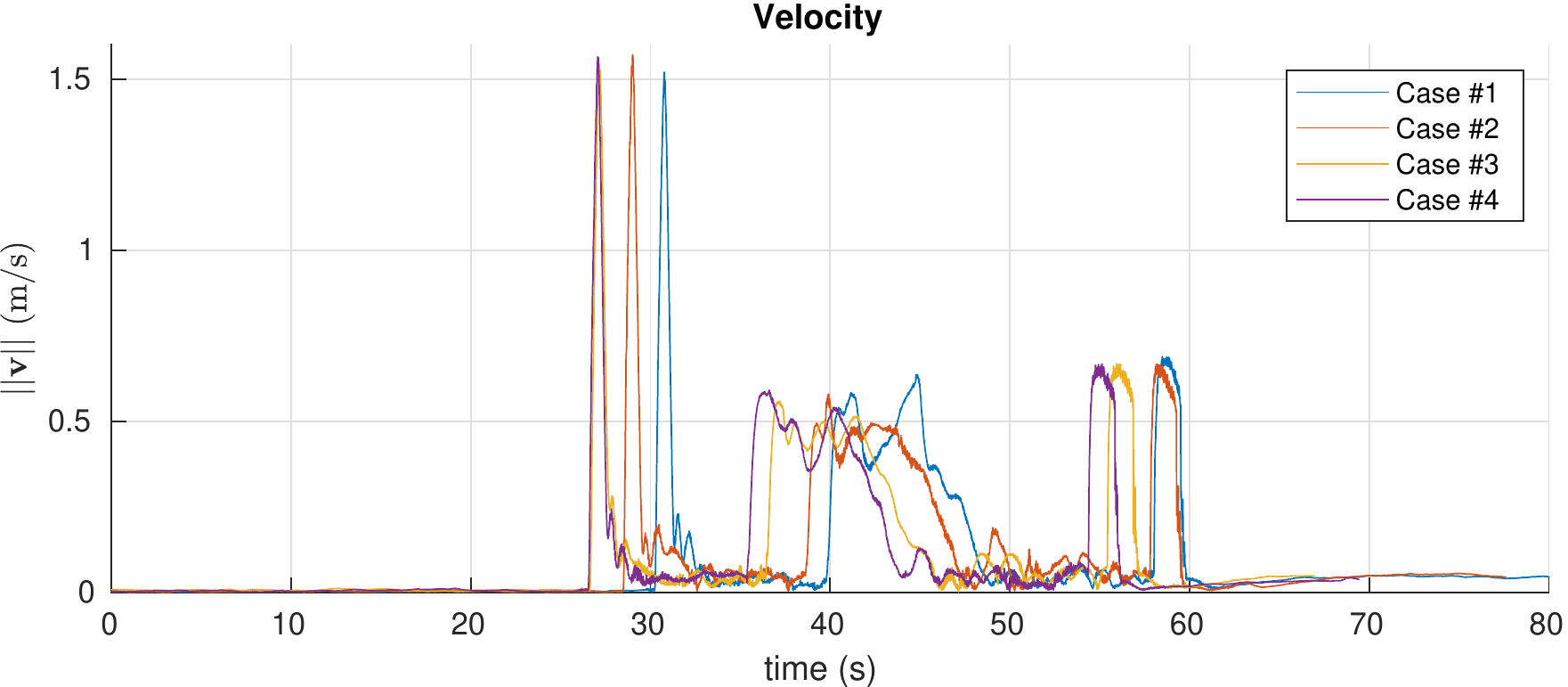}
	\caption{Recorded quadrotor velocity $||\vect{v}||$ over time for all cases} \label{fig:ch5:exps_vel}
\end{figure}

\begin{figure}[!th]
	\centering
	\includegraphics[width=\linewidth]{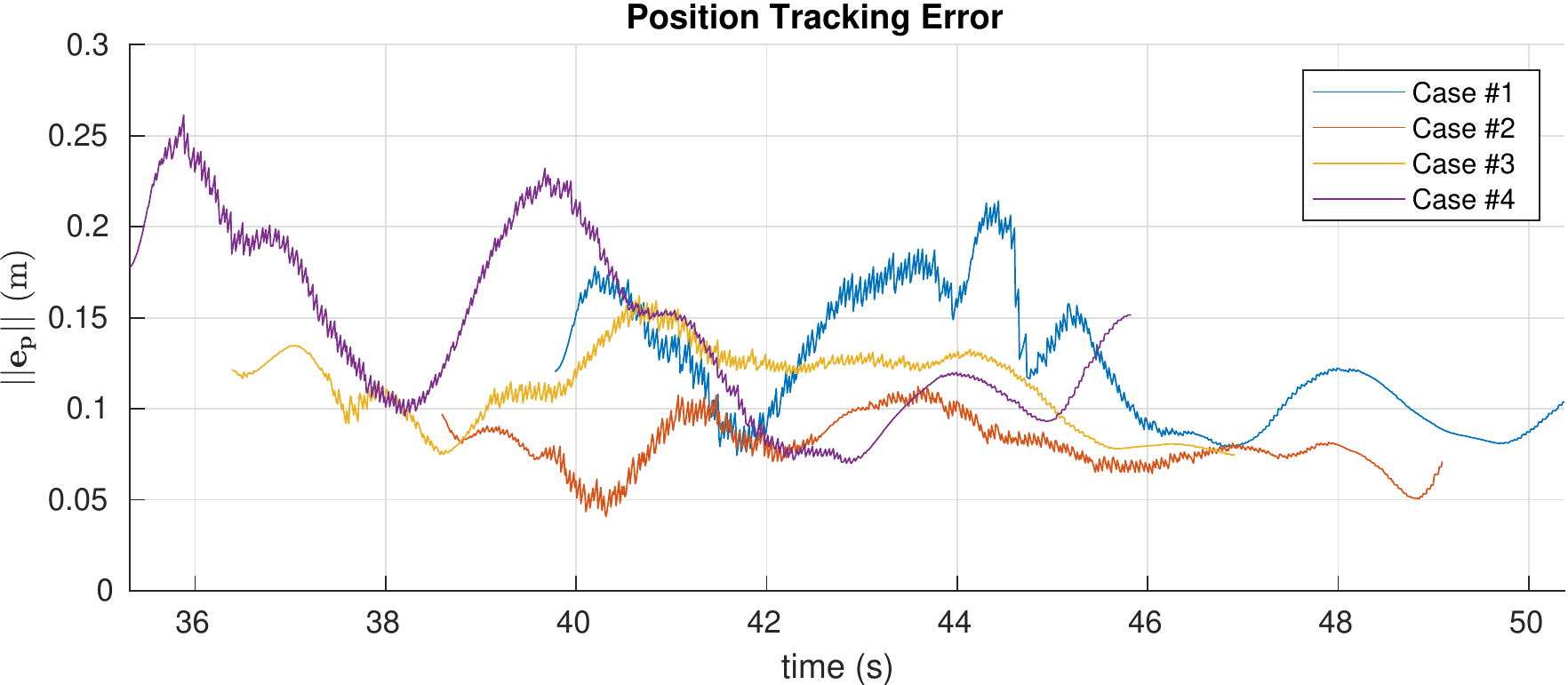}
	\caption{Recorded position tracking errors $||\vect{e}_{\vect{p}}||$ during the reactive navigation mode period for all cases} \label{fig:ch5:exps_perr}
\end{figure}

\begin{figure}[!th]
	\centering
	\includegraphics[width=\linewidth]{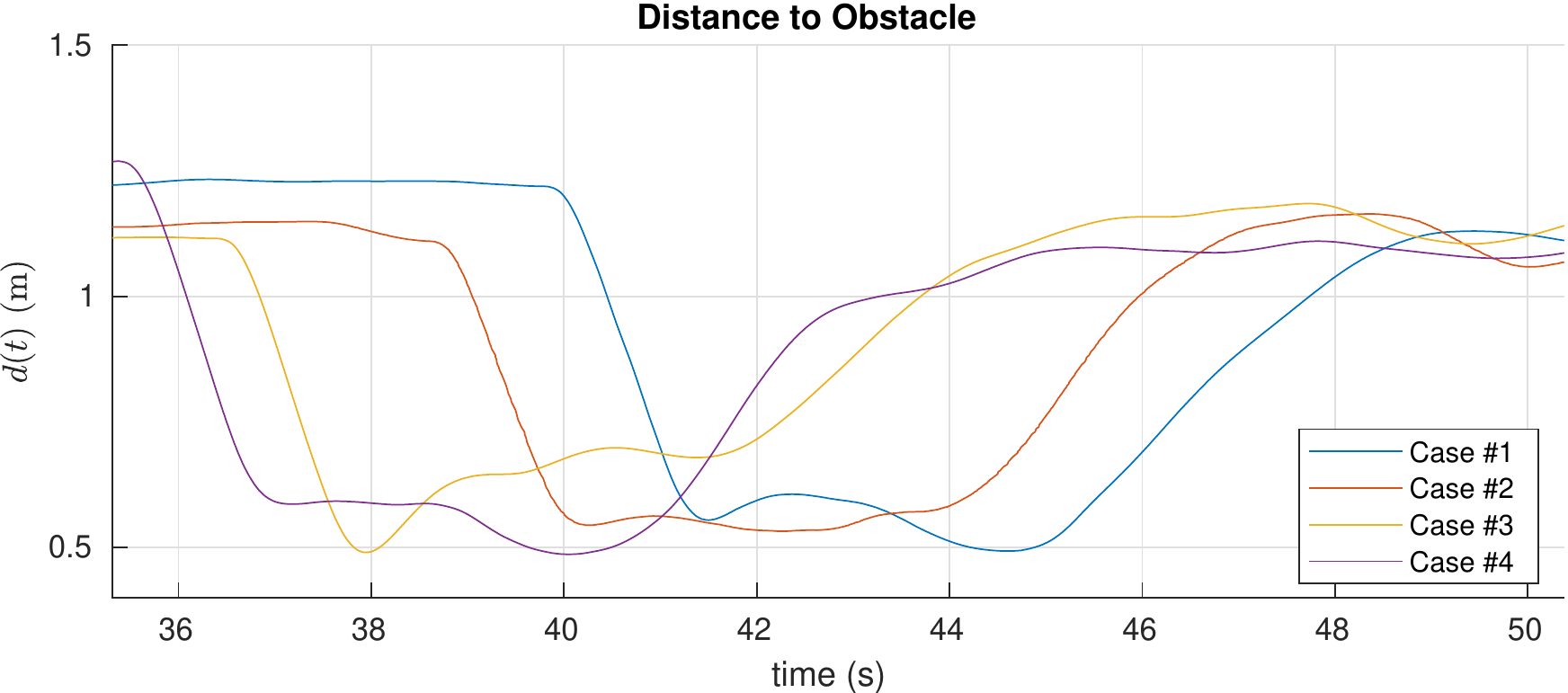}
	\caption{Distance to nearest obstacle $d(t)$ during the reactive navigation mode period for all cases} \label{fig:ch5:dobs}
\end{figure}

\section{Conclusion \& Future Work} \label{sec:ch5:conclusion}

A reactive strategy was proposed to generate collision-free trajectories for quadrotors navigating in unknown environments.
Concepts from guidance laws and equiangular navigation are used to allow for fast computational performance when avoiding obstacles.
Additionally, a trajectory tracking control law was developed based on sliding mode control and differential-flatness property of quadrotor dynamics.
Experiments were conducted running the proposed trajectory generation and control methods online on a quadrotor.
Future work include implementing the developed strategy using measurements from onboard sensors and considering navigation in cluttered environments.
    \chapter{A 3D Collision Avoidance Method for UAVs using Deformable Paths \label{cha:deforming_approach}}

The developed 3D reactive method in \cref{cha:methods_reactive3D,cha:reactive_impl} can generate 3D boundary following avoidance maneuvers based only on relative distance to obstacles which offers a good solution for vehicles with very limited resources.
Another 3D collision-free navigation approach for UAVs is presented in this chapter.
It can offer better maneuverability around dynamic obstacles by applying real-time 3D deformations to local paths.
The method has low computational cost compared to search-based and optimization-based local planning methods as it determines the deformations directly based on sensors measurements.
An improved trajectory tracking controller for quadrotors is also proposed in this chapter based on the one developed in the previous chapter.
Some of the results shown in this chapter were presented in \cite{elmokadem2019real,elmokadem2020control}.

\section{Introduction}\label{sec:ch6:Intro}

Unmanned aerial vehicles (UAVs) have rapidly emerged in many civilian applications such as search \& rescue \cite{goodrich2008supporting}, wireless sensor networks \cite{li2018wireless}, 3D mapping \cite{nex2014uav}, objects grasping and aerial  manipulation \cite{korpela2012mm,ruggiero2018aerial}, underground Mines exploration \cite{li2020autonomous}, etc.
Achieving a fully autonomous behavior with least human interaction is highly desirable in many applications.
However, it is very challenging especially when sharing flight space with other aerial vehicles or navigating in highly dynamic areas such as indoor and urban environments.
Thus, reliable navigation strategies with collision avoidance abilities are required to maintain the safety of the vehicle and its surroundings.

As highlighted earlier, navigation approaches can generally be classified as global path planning, sensor-based or hybrid \cite{hoy2015algorithms}.
Global path planning requires full knowledge about the environment which makes it inefficient in dynamic scenarios in terms of memory and computational requirements especially in three-dimensional (3D) spaces.
On contrary, sensor-based and hybrid methods can handle dynamic environments by planning safe motions in real-time.
Reactive or \textit{Sense and Avoid (S\&A)} techniques offer solutions with the best computational cost among other sensor-based methods. 

There exist a number of collision avoidance methods dealing with dynamic obstacles such as velocity obstacle \cite{fiorini1998motion}, collision cones \cite{chakravarthy1998obstacle, chakravarthy2012generalization}, boundary following \cite{matveev2011method,elmokadem20183d}, artificial potential field (APF) \cite{khatib1986real,zhu20163d}, optimization-based \cite{gao2017quadrotor,lindqvist2020nonlinear}, and other methods like \cite{shim2007evasive, belkhouche2009reactive,van2011reciprocal,yang20133d,kamel2017robust,wang2018strategy,falanga2020dynamic}.
Some of these methods achieve avoidance of dynamic obstacles by moving relative to its boundary which can sometimes produce jerky and/or non-optimal motions.
Another major drawback is that most of the existing methods focused only on the 2D problem without utilizing the full maneuvering capabilities of vehicles which can move freely in 3D such as UAVs and autonomous underwater vehicles (AUVs).
Some recent works have attempted to address the more complex 3D problem such as \cite{thanh2018simple,wang2018strategy,yang20133d,Choi2017,Wiig2018,lindqvist2020nonlinear}.

As the complexity of UAV applications increases, more advanced 3D navigation strategies are needed.
Hence, the main contribution of this chapter is the development of a computationally-light 3D navigation strategy for UAVs moving in unknown and dynamic environments.
A sense-and-avoid based approach is adopted inspired by Elastic Bands \cite{quinlan1993elastic} to close the gap between path planning and control.
Quintic Bezier splines are used to generate smooth paths, and real-time smooth deformations are applied based on information interpreted from sensors measurements to provide quick reactions to obstacles.
As only light processing of sensory data are needed, the approach can be classified as reactive \cite{tobaruela2017reactive}.
The method is developed at a high-level first considering a general 3D kinematic model applicable to different UAV types and AUVs.
The design is further extended to quadrotor UAVs including their dynamics.
Several simulations were carried out based on the general kinematic model considering different scenarios of unknown and dynamic obstacles.
Software-in-the-loop (SITL) simulations were also performed using the full quadrotor dynamical model running in real-time using the Gazebo simulator.
The used hardware in these SITL simulations is similar to that of our real quadrotors which help evaluating the computational performance of our algorithms.

The structure of this chapter is as follows.
First, 3D navigation problems are formulated in \cref{sec:ch6:problem} based on both a general 3D kinematic model and quadrotor dynamical model.
Next, the suggested sense-and-avoid navigation strategy is described in \cref{sec:ch6:strategy} with control laws designed for both models.
After that, several simulation cases are presented in \cref{sec:ch6:simulation} to evaluate the overall approach and its computational performance including software-in-the-loop simulations in Gazebo.
Finally, this work is concluded in \cref{sec:ch6:conclusion}.

\section{Problem Formulation}\label{sec:ch6:problem}

Consider a UAV navigating in a three-dimensional (3D) environment $\mathcal{W}\subset \R^3$ filled with obstacles $\mathcal{O} \subset \mathcal{W}$ which can be static or dynamic.
The environment is assumed unknown which means information about obstacles are not known a priori.
We first consider a general 3D kinematic model which is applicable to different types of vehicles moving in 3D spaces.
A special case is then considered for quadrotor UAVs where a full dynamical model is used capturing quadrotor dynamics.
The description of these two models are given next.

\subsection{General 3D Kinematic Model}

Let the position of the UAV be defined with respect to some inertial coordinate frame $\mathcal{I}$ as $\bm{p} = [x, y, z]^T$.
The linear speed of the vehicle is denoted as $V$ where the direction of motion (i.e. the normalized velocity vector) is expressed using two angles, namely the heading angle $\psi$ and the flight path angle $\alpha$.
The following nonholonomic kinematic model is considered:
\begin{equation}\label{equ:ch6:model}
	\begin{aligned}
		\dot{x}(t) &= V(t) \cos\beta(t) \cos\alpha(t) \\
		\dot{y}(t) &= V(t) \sin\beta(t) \cos\alpha(t) \\
		\dot{z}(t) &= V(t) \sin\alpha(t) \\
		\dot{\beta}(t) &= u_{\beta}(t) \\
		\dot{\alpha}(t) &= u_{\alpha}(t)
	\end{aligned}
\end{equation}
where the control inputs to this model are the linear speed $V$, the heading angular speed $u_{\beta}$ and the flight path angular speed $u_{\alpha}$.
Due to physical limitations on any mechanical system, it is also considered that the control inputs need to be bounded according to the following:
\begin{equation}
	\begin{aligned}
		0 \leq V(t) &\leq V_{max} \\ |u_{\beta}(t)| \leq u_{\beta,max}&,\ \ |u_{\alpha}(t)| \leq u_{\alpha,max}
	\end{aligned}
\end{equation}
The model \eqref{equ:ch6:model} is a general kinematic model which is applicable to different vehicles moving in 3D spaces not just UAVs such as missiles, and autonomous underwater vehicles. 

\subsection{Quadrotor Dynamical Model}

The general kinematic model \eqref{equ:ch6:model} can be mapped into a quadrotor full model which include both kinematics and dynamics.
In order to describe the quadrotor dynamics, one need to define an additional coordinate frame attached to the vheicle, namely a body-fixed frame $\mathcal{B}$.
The UAV position, linear velocity and linear acceleration ($\bm{p},\bm{v},\bm{a}\in \R^3$) are expressed in the inertial frame.
A rotation matrix $\bm{R}$ between the inertial and body-fixed frames can be used to describe the vehicle orientation.
Commonly, quaternions or Euler's angles, namely roll $\phi$, pitch $\theta$ and yaw $\psi$, are also used to describe the UAV orientation.
Furthermore, the angular velocity of the UAV is expressed in th inertial frame as $\bm{\omega}=[\omega_x,\ \omega_y,\ \omega_z]^T$.
Using the above definitions, a quadrotor full UAV can be written as \cite{hamel2002dynamic,faessler2017differential}:
\begin{align}
	\vect{\dot{p}}(t) &= \vect{v}(t) \label{equ:ch6:model1}\\
	\vect{\dot{v}}(t) &= -g \bm{e}_3 + T(t) \bm{R}(t)\bm{e}_3 := \vect{a}(t)\label{equ:ch6:model2}\\
	\vect{\dot{R}}(t) &= \vect{R}(t) \vect{\hat{\omega}}(t)\label{equ:ch6:model3} \\
	\vect{\dot{\omega}}(t) &= \vect{J}^{-1} \Big(-\vect{\omega}(t) \times \vect{J} \vect{\omega}(t) + \vect{\tau}(t)\Big)\label{equ:ch6:model4}
\end{align}
where $g$ is the gravitational acceleration, $\vect{J}$ is the UAV inertia matrix, and $\bm{e}_3 = [0,\ 0,\ 1]^T$.
Also, $\bm{\hat{\omega}}$ is a skew-symmetric matrix defined in terms of $\bm{\omega}$ as follows:
\begin{equation}
	\bm{\hat{\omega}} = \left[\begin{array}{ccc}
		0 & -\omega_z & \omega_y \\
		\omega_z & 0 & -\omega_x \\
		-\omega_y & \omega_x & 0
	\end{array}\right].
\end{equation}
The control inputs are the mass-normalized collective thrust $T(t) \in \R$, and the body torques $\bm{\tau}(t)\in \R^3$ which can then be mapped into appropriate motors speeds.

\subsection{Problem Definition}

The first considered problem is how to allow a vehicle to safely reach some desired feasible goal position $\bm{p}_{goal} \in \mathcal{W}\backslash\{\mathcal{O}\}$ starting from any initial position $\bm{p}_{initial}\in \mathcal{W}\backslash\{\mathcal{O}\}$ while avoiding collisions with obstacles given that it is feasible to do so.
Thus, it is required that $\lim\limits_{t \to \infty}\bm{p}(t) = \bm{p}_{goal}$ and $d(t)\geq d_{safe}$ for $t \geq 0$ where $d(t)=\min_{\bm{p}_*(t) \in \mathcal{O}}{\|\bm{p}_*(t) - \bm{p}(t)\|}$ is the shortest distance to the closest obstacle at time $t$, and $d_{safe}>0$ is some required minimum safety distance.
This problem is dealt with in a general setup where it is required to develop a control strategy which can be applicable to any vehicle moving in 3D spaces whose motion is governed by the general kinematic model \eqref{equ:ch6:model}.
The second problem is targeted towards how to implement such a strategy with quadrotor UAVs where the model \eqref{equ:ch6:model1}-\eqref{equ:ch6:model4} is adopted for the control system design.

Generally, the following assumptions are made:
\begin{assumption}
	The vehicle can estimate its position $\bm{p}$ and orientation (i.e. $\beta$ and $\alpha$) at any given time. 
\end{assumption}
\begin{assumption}\label{asm:ch6:sensing}
	The vehicle can determine the distance to the closest obstacle using onboard sensors.
\end{assumption}
\begin{assumption}\label{asm:ch6:obs_speed}
	Dynamic obstacles are not chasing the UAV, and their linear speeds are lower than the maximum possible speed of the UAV for guaranteed safety. 
\end{assumption}

\begin{remark}
	\Cref{asm:ch6:obs_speed} is a necessary technical assumption to ensure the motion safety.
	However, the suggested approach can still work in many cases where the obstacles are moving with higher speeds in a non-aggressive direction.
\end{remark}

\section{Sense \& Avoid Control Strategy}\label{sec:ch6:strategy}

The overall suggested control structure is decomposed into local path planning and path tracking control design.
The local path planning adopts the idea of \textit{deformable paths} which is basically modifying path segments around obstacles in a way similar to stretching elastic materials such as rubber bands.
Hence, it can be categorized as a sense and avoid strategy.
This is motivated by the method of Elastic Bands proposed by Quinlan and Khatib in \cite{quinlan1993elastic}.
The main goal behind this idea is to close the gap between path planning and control providing reactions directly based on sensors information.
Hence, the deformation process is mainly based on a closed-form solution in terms of the distance to the closest obstacle to avoid typical higher latency that comes when using search-based path planning methods.
Thus, the proposed method has a better computational cost compared to search-based path planning methods making it suitable for vehicles with limited computing power as well as increased capability in avoiding obstacles with higher speeds.

The proposed strategy can be described as follows.
On a lower-level, a path tracking controller is used to track some reference geometric path $\Gamma$ which is generated by a high-level path planning component. 
Starting from any initial position $\bm{p}_{initial}$ and given some initial knowledge about the obstacles $\mathcal{O}_0$, an initial path $\Gamma_0 \subset \mathcal{W}\backslash\{\mathcal{O}_0\}$ is determined to reach $\bm{p}_{goal}$ which is then assigned as the current reference path to track such that $\Gamma = \Gamma_0$.
The goal position can typically be a location of interest in a global map, a visually obtained target, a promising exploration location or a command from a remote operator.
In many cases, it can be assumed that there is no knowledge about the obstacles initially (i.e. $\mathcal{O}_0=\emptyset$) which makes it easier to quickly compute a path that satisfies some given boundary conditions.
Otherwise, it is possible to consider initial paths obtained from a global path planner if one exists.
The high-level layer checks a segment of the reference path $\Gamma$ ahead of the vehicle's position whenever new measurements arrive from onboard sensors.
Once a detected obstacle is found to be closer to the current reference path than the safety margin, the path segment is deformed in a certain way such that the new deformed path becomes safe.
The deformation behavior looks more like stretching part of the path away from the obstacle in some safe direction as will be explained in the next subsections.
An illustration of such deformation is shown in \cref{fig:ch6:illustration}.
Note that the suggested method does not make assumptions on the obstacles shapes and considers only a general sensing model where a fraction of nearby obstacles can be detected.

\subsection{Path Parametrization using quintic Bezier splines}

An ideal choice to represent geometric paths is to use parametric curves or splines whose shapes can be manipulated through some control points (or knots).
This allows us to apply real-time deformations to existing path with very low computational cost.
Examples of parametric functions commonly used for path representation include interpolating polynomials, Bezier curves/splines, B-splines, etc.
In order to satisfy $C2$ continuity constraints, quintic Bezier splines were chosen in this work to represent paths and to generate smooth path deformations.
Another driving factor for this choice is the localism property of Bezier splines where it is possible to apply changes to a segment of the overall path without significantly affecting the whole path \cite{sprunk2008planning}.
This helps providing more efficient path deformations around obstacles.
Also, the selected Bezier splines order mainly relies on the number of boundary conditions need to be satisfied as will be shown later.
It is possible to consider Bezie splines with lower orders but it was observed that applied deformations result sometimes in longer paths away to satisfy the continuity constraints which is less efficient.

In general, Bezier splines can be described using the following equation:
\begin{equation}\label{equ:ch6:bezier}
f(s) = \sum_{k=0}^{n} \left(\begin{array}{c}
n \\ k
\end{array}\right) (1-s)^{n-k} s^k P_k, \ \ \ 0 \leq s \leq 1
\end{equation}
where $n$ represents the Bezier curve order, $P_k,\ k={0,\cdots,n}$ are $n+1$ control points (knots), and
\begin{equation*}
\left(\begin{array}{c}
n \\ k
\end{array}\right) = \frac{n!}{k!(n-k)!}
\end{equation*}

Quintic Bezier curves requires 6 control points $\{P_0,\cdots,P_5\}$ where $n = 5$.
Thus, one can rewrite \eqref{equ:ch6:bezier} for quintic Bezier splines in a matrix multiplication form such as:
\begin{equation}\label{equ:ch6:quinticBezier}
\begin{array}{lr}
	f(s) = T(s) \cdot G \cdot CP, & s \in[0,1]
\end{array}
\end{equation}
where
\begin{equation*}
\begin{aligned}
T(s) &= [s^5,\ s^4,\ s^3,\ s^2,\ s,\ 1]_{1\times 6} \\
G &= \left[\begin{array}{cccccc}
-1 & 5 & -10 & 10 & -5 & 1 \\
5 & -20 & 30 & -20 & 5 & 0 \\
-10 & 30 & -30 & 10 & 0 & 0 \\
10 & -20 & 10 & 0 & 0 & 0 \\
-5 & 5 & 0 & 0 & 0 & 0 \\
1 & 0 & 0 & 0 & 0 & 0
\end{array}\right]_{6\times 6} \\
CP &= \left[\begin{array}{c}
P_0 \\
P_1 \\
\vdots \\
P_5
\end{array}\right]_{6\times 1}
\end{aligned}
\end{equation*}

Given a number of $L$ waypoints $\{p_1,p_2,\cdots,p_{L}\}$, $p_i \in \R^3$, a smooth path connecting all points can be constructed as piecewise continuous Bezier splines according to the following:
\begin{equation}\label{equ:ch6:fullPath}
\Gamma(\bar{s}) = \left\{\begin{array}{c}
f_1(s_1) \\
f_2(s_2) \\
\vdots \\
f_{L-1}(s_{L-1})
\end{array}\right.
\end{equation}
where the splines $f_i(s_i)$ are given by \eqref{equ:ch6:quinticBezier}.
It is important to ensure that the path $\Gamma(\bar{s})$ interpolating the $L$ waypoints has $C2$ continuity.
To that end, the following constraints must be satisfied: 
\begin{itemize}
	\setlength\itemsep{0.2em}
	\item[\textbf{D1.}] $f_i(0) = p_i$ for $i={1,\cdots,L-1}$ and $f_{L-1}(1) = p_{L}$
	\item[\textbf{D2.}] $f_i^{'}(1) = f_{i+1}^{'}(0)$ for $i={1,\cdots,L-2}$
	\item[\textbf{D3.}] $f_i^{''}(1) = f_{i+1}^{''}(0)$ for $i={1,\cdots,L-2}$
	\item[\textbf{D4.}] $f_1^{'}(0) = \sigma^{'}_s$ and
	$f_1^{''}(0) = \sigma^{''}_s$
	\item[\textbf{D5.}] $f_{L-1}^{'}(1) = \sigma^{'}_e$ and
	$f_{L-1}^{''}(1) = \sigma^{''}_e$
\end{itemize}
where $\sigma^{'}_s$, $\sigma^{''}_s$, $\sigma^{'}_e$ and $\sigma^{''}_e$ are the first and second derivatives at the endpoints (i.e. $p_1$ and $p_L$).

Typically, two endpoints are needed two represent the path segment where the deformation will be applied.
In order to be able to deform that segment, at least one more intermediate waypoint is need which can be moved arbitrary to properly deform the considered path segment.
The more intermediate waypoints considered the more complex deformations could be applied.
For simplicity, we will consider the case with a single intermediate waypoint in addition to the endpoints such that $L=3$.
Therefore, only two Bezier splines are needed to connect these three waypoints.
The coefficients of these splines can be obtained based on the conditions \textbf{D1}-\textbf{D5}.
Let $p_i = (x_i,y_i,z_i)^T$, where $i={1,2,3}$, be the 3 waypoints representing the selected path segment.
Also, let the conditions on the endpoints $p_1$ and $p_3$ be defined as follows:
\begin{equation}
p_1^{'} = \sigma^{'}_s, \ \ p_1^{''} = \sigma^{''}_s, \ \ p_3^{'} = \sigma^{'}_e, \ \ p_3^{''} = \sigma^{''}_e
\end{equation} 
Now, the Bezier splines coefficients (i.e. control points) can be obtained in the following manner.
\begin{itemize}
	\item Conditions on the start position $p_1$ gives:
	 \begin{equation}\label{equ:ch6:CPs1}
	 \begin{aligned}
	 P_{0,1} &= p_1 \\
	 P_{1,1} &= \frac{1}{5} \sigma^{'}_s + P_{0,1} \\
	 P_{2,1} &= \frac{1}{20} \sigma^{''}_s + 2 P_{1,1} - P_{0,1}
	 \end{aligned} 
	 \end{equation}
 	\item Conditions on the final position $p_3$ yields:
 	\begin{equation}\label{equ:ch6:CPs2}
 	\begin{aligned}
 	P_{5,2} &= p_3 \\
 	P_{4,2} &= P_{5,2} - \frac{1}{5} \sigma^{'}_e \\
 	P_{3,2} &= \frac{1}{20} \sigma^{''}_e + 2 P_{4,2} - P_{5,2}
 	\end{aligned} 
 	\end{equation}
 	\item Satisfying continuity constraints at the intermediate point $p_2$ gives:
 	\begin{equation}\label{equ:ch6:CPs3}
 	\begin{aligned}
 	&P_{5,1} = P_{0,2} = p_2 \\[0.25cm]
 	&\left[\begin{array}{c}
 	P_{3,1} \\ P_{4,1} \\ P_{1,2} \\ P_{2,2}
 	\end{array}\right] = \left[\begin{array}{cccc}
 	0 & 1 & 1 & 0 \\
 	1 & -2 & 2 & -1 \\
 	2 & -2 & -2 & 2 \\
 	6 & -4 & 4 & -6
 	\end{array}\right]^{-1} D, \\[0.25cm]
 	&D := \left[\begin{array}{c}
 	P_{5,1} + P_{0,2} \\
 	-P_{5,1} + P_{0,2} \\
 	P_{2,1} - P_{5,1} - P_{0,2} + P_{3,2} \\
 	-P_{1,1} + 4 P_{2,1} - P_{5,1} + P_{0,2} - 4 P_{3,2} + P_{4,2} \\
 	\end{array}\right]
 	\end{aligned} 
 	\end{equation}
\end{itemize}

To sum up, equations \eqref{equ:ch6:fullPath}-\eqref{equ:ch6:CPs3} are used to compute the new deformed path segment where the two endpoints are stitched to the original path, and an intermediate midpoint is moved freely to produce safer paths.
A more detailed description about this deformation process is provided next.

\subsection{Path Deformation} \label{sec:ch6:deformation}

The suggested deformation process, which can produce deformations in arbitrary 3D directions around obstacles, is presented here.
Let $\Gamma_* \subset \mathcal{W}$ be a parametrized geometric reference path to be followed.
It can be obtained either from a global path planner, a straight path generator, or given by \eqref{equ:ch6:fullPath} after a deformation process.
During the motion, deformations are applied to $\Gamma_*$ at different segments whenever needed.
This deformation occurs in a "sense and avoid" manner whenever a sensed obstacle is found to make the current reference path $\Gamma_*$ unsafe.
That is, a deformation is triggered whenever the distance $d$ from obstacles to closest point on $\Gamma_*$ violates the safety constraint (i.e. $d < d_{safe}$).
Only a segment of $\Gamma_*$ starting from the vehicle's current position $p_v\ \in \R^3$ when it was triggered is deformed as shown in \cref{fig:ch6:illustration}. %
In practice, $p_v\ \in \R^3$ can be chosen as some point ahead of the vehicle's position to allow for any sensing and computational latencies.
Generally, the considered path segment can be defined by two endpoints $\bm{p}_s=\bm{\Gamma}_*(s_0),\ \bm{p}_e =\bm{\Gamma}_*(s_2)$ and some other point $\bm{p}_c=\bm{\Gamma}_*(s_1)$ in between such that $s_0 < s_1 < s_2$.
The point $p_c$ is used to control the deformation process.
In other words, this point acts like a handle that can be used to stretch the path in a certain direction changing its shape around the obstacle.
Manipulating $p_c$ depends on interpreted information from onboard sensors.
Thus, the deformation is performed by moving $\bm{p}_c$ in a safe direction $\bm{v}_{safe}\in\R^3$ resulting in a new smooth segment $\Gamma^{new}$ connecting $\bm{p}_s$, $\bm{p}_c^{new}$ and $\bm{p}_e$ such that:
\begin{equation}
	\bm{p}_c^{new} = \bm{p}_c + \bm{v}_{safe}
\end{equation}
Using advanced perception algorithms such as tracking nearby dynamic obstacles may help in making better deformations resulting in overall shorter paths which would be more optimal in many cases.
This can be seen from \cref{fig:ch6:illustration} which shows two possible deformed paths ($\Gamma_{*,1} = \tau_1$ and $\Gamma_{*,2} = \bar{\tau}_1$) generated by moving $p_c$ into some safe directions ($v_{safe}\ \in \R^3$ and $\bar{v}_{safe}\ \in \R^3$ respectively).

Different approaches could be adopted to determine a proper deformation.
In this work, we consider one way based only on the distance to obstacles edges.
Let $\vect{E}\ \in \R^3$ be a unit vector from $p_c$ in the direction towards the nearest edge, and let $\vect{T}\ \in \R^3$ be the unit tangent vector of $\Gamma_*$ at $p_c$.
A plane $\mathcal{P} \subset \R^3$ spanning both $\vect{T}$ and $\vect{E}$ vectors can be described by $p_c$ and its normal which is given by:
\begin{equation}
	\vect{N} = \vect{T} \times \vect{E}
\end{equation}
It is possible to find some vector $\vect{v}_{safe}$ by rotating $\vect{T}$ with an angle $\alpha$ around $\vect{N}$ towards or away from $\vect{E}$.
The decided direction of rotation relies on whether the obstacle is overlapping with $\Gamma_*$ or not.
Since $\vect{N}$ is used as a rotation axis, the following rotation matrix $R_{N}(\alpha)$ can be obtained based on the Rodrigues' rotation formula:
\begin{equation}\label{equ:ch6:rotation}
R_{N}(\alpha) = \cos(\alpha)\ \mathbf{I}_{3\times3} + \sin(\alpha) [\vect{N}]_{\times} + (1-\cos(\alpha)) (\vect{N} \otimes \vect{N})
\end{equation}
where $\mathbf{I}_{3\times3}$ is an identity matrix, and $[\vect{N}]_{\times}$ is a matrix related to the cross product which is given by:
\begin{equation*}
[\vect{N}]_{\times} = \left[\begin{array}{ccc}
0 & -N_z & N_y \\ N_z & 0 & -N_x \\ -N_y & N_x & 0
\end{array}\right]
\end{equation*}
Also, $\otimes$ represents the tensor product where $\vect{N} \otimes \vect{N} = \mathbf{N}\ \mathbf{N}^T$.
Thus, one can now use \eqref{equ:ch6:rotation} to compute $\vect{v}_{safe}$ according to the following:
\begin{equation}\label{equ:ch6:vsafe}
\vect{v}_{safe} = \gamma R_{N}(\alpha_0) \vect{T}
\end{equation}
where $\gamma>0$ is a safety factor which determines how far from the obstacle the deformed segment is at $p_c$.
Additionally, $\alpha_0$ is some desired rotation angle that affects the deformed segment shape, and its sign is determined by the right-hand rule convention with respect to $\vect{N}$.
To sum up, equation \eqref{equ:ch6:vsafe} corresponds to a vector of length $\gamma$ obtained through rotating $\vect{T}$ by $\alpha_0$ around the axis $\vect{N}$.

Typically, multiple real-time deformations may occur whenever it is become unsafe based on the sensors measurements.
A scenario with deformations at two different times during the motion is shown in \cref{fig:ch6:illustrationMultiple}.
It can be seen that the first time the path was found unsafe was when the vehicle reached $p_{v1}$.
The deformation direction was determined then based on sensed fraction of the obstacle to generate $\tau_2$.
Once the vehicle reached $p_{v2}$, a second deformation was triggered as a reaction to the newly sensed part of the obstacle to maintain the motion safety.

\begin{remark}
	It is assumed that vectors $\vect{T}$ and $\vect{E}$ are not collinear.
	However, if that case occurs, $\vect{N}$ can be chosen randomly.
	Another possible way is to chose it similar to the previously obtained $\vect{N}$ from last deformation which results in stretching the path further in the same direction increasing clearance from the obstacle.
\end{remark}

\begin{remark}
	More complex deformations can be applied by considering using more than one control point along the selected segment which gives more control over the deformed segment shape. 
	However, a proper way needs to be used in this case to determine how to manipulate these points in a way that guarantees increased clearance from obstacles.
\end{remark}

\begin{figure}[!htb]
	\centering
	\includegraphics[scale=0.45]{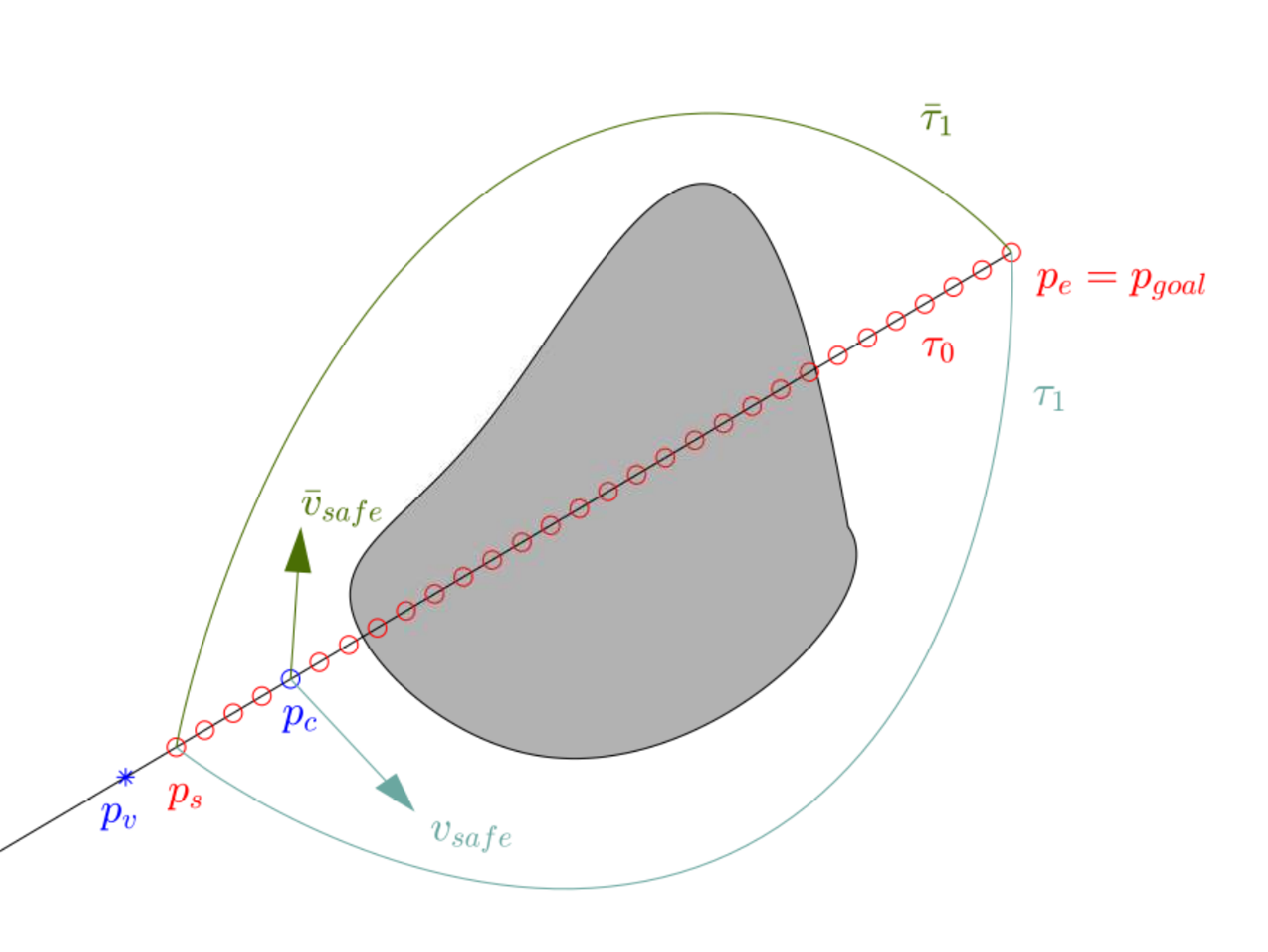} 
	\caption{Planar illustration of the suggested strategy} \label{fig:ch6:illustration}
\end{figure}

\begin{figure}[!htb]
	\centering
	\includegraphics[scale=0.45]{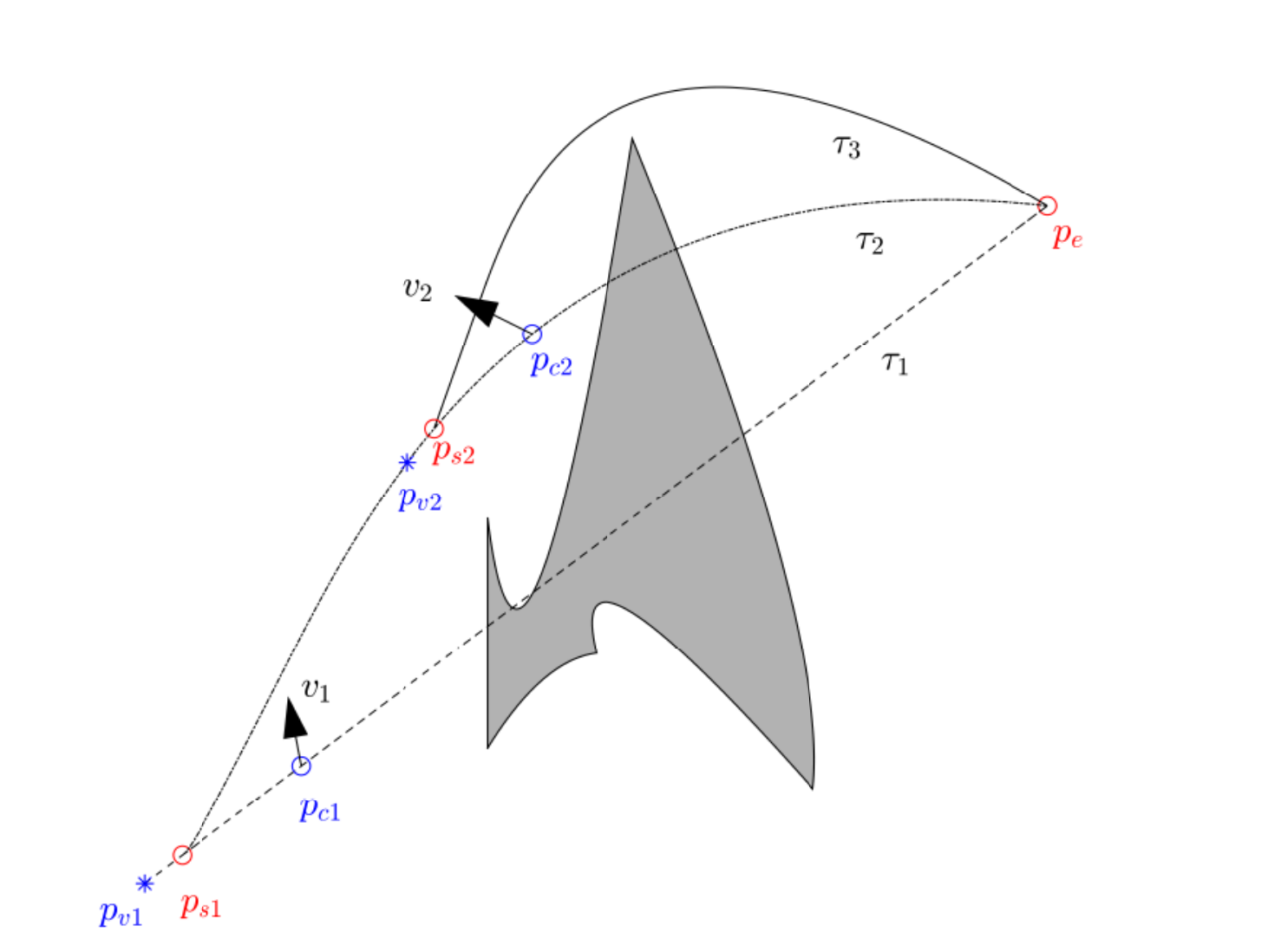} 
	\caption{Strategy illustration with multiple deformations} \label{fig:ch6:illustrationMultiple}
\end{figure}

\subsection{Low-level Control}

The low-level control component is responsible for generating control inputs (i.e. velocities) to track the deformable reference path $\Gamma$ at all times.
Different path-tracking control methods exists in the literature based on the considered motion models.
We provide control designs considering both the general kinematic model \eqref{equ:ch6:model} and the special case of quadrotor UAVs with the model \eqref{equ:ch6:model1}-\eqref{equ:ch6:model4}.

\subsubsection{General Kinematic Model Control}\label{sec:ch6:generalKinControl}

The adopted algorithm here is based on the pure pursuit guidance laws which can be summarized as follows.
At any given time, a virtual target moving on the reference path $\bm{p}^* \in \Gamma$ can be determined $L$ distance ahead of the vehicle's current position.
A desired velocity vector to move towards $\bm{p}^*$ is defined as $\bm{v}_d=[v_{x,d},\ v_{y,d},\ v_{z,d}]$.
The corresponding desired orientation angles following the model in \eqref{equ:ch6:model} are given by:
\begin{equation}
\begin{aligned}
\beta_d &= \tan^{-1}\left(\frac{v_{y,d}}{v_{x,d}}\right) \\
\alpha_d &= \tan^{-1}\left(\frac{v_{z,d}}{\sqrt{v_{x,d}^2+v_{y,d}^2}}\right)
\end{aligned}
\end{equation} 
To enforce the vehicle to track this moving target, we consider the following guidance law based on the sliding mode control method:
\begin{equation}\label{equ:ch6:guidanceLaw}
u_\mu = k_\mu \tanh\Big(c(\mu - \mu_d)\Big),\ \ \ \ \mu=\{\alpha,\beta\}
\end{equation}
where $k_\mu$ and $c$ are positive design parameters, and $V$ is set as constant.
In theory, the sliding mode control law would use a signum function.
However, the suggested control law in \eqref{equ:ch6:guidanceLaw} uses the hyperbolic tangent function $\tanh(\cdot)$ as a smooth approximation to avoid the well-known chattering problem in practice. 
The finite-time convergence of the system trajectories to achieve $v_d$ under the application of \eqref{equ:ch6:guidanceLaw} can be shown using Lyapunov stability analysis.

\subsubsection{Quadrotor Control}\label{sec:ch6:quadrotor_control}

There are different possible ways to extend the suggested approach to implemented with quadrotors.
One possible way is to directly map the guidance laws \eqref{equ:ch6:guidanceLaw} into velocity and acceleration commands which then can be used to compute the control inputs utilizing the differential-flatness property of quadrotor dynamics.
Another potential approach is to generate a feasible trajectory based on the geometric reference path satisfying the quadrator dynamic constraints.
Then, a controller can be designed to ensure that the trajectory can be tracked.
This approach is considered here, and it described next. 

\textbf{Trajectory Generation}\label{sec:ch6:traj_generation}

Whenever a deformation occurs, a new trajectory needs to be computed in a computationally-efficient way to ensure that the UAV can track $\Gamma^{new}$ while satisfying some constraints such as bounds on velocity and acceleration.
To that end, an reference model is used to generate smooth trajectories by extending the model \eqref{equ:ch6:model} as follows:
\begin{align}
	\dot{x}_r &= V_r \cos\alpha_r\cos\beta_r \label{equ:ch6:nhModel1} \\
	\dot{y}_r &= V_r \cos\alpha_R\sin\beta_r \label{equ:ch6:nhModel2}\\
	\dot{z}_r &= V_r \sin\alpha_r \label{equ:ch6:nhModel3} \\
	\dot{\alpha}_r &= \omega_{\alpha,r},\ \ \ \ \dot{\beta}_r = \omega_{\beta,r} \label{equ:ch6:nhModel4} \\
	\dot{\omega}_{\alpha,r} &= u_{\alpha,r},\ \ \  \dot{\omega}_{\beta,r} = u_{\beta,r} \label{equ:ch6:nhModel5}  \\
	\dot{V}_r & = a_r,\ \ \ \  \dot{a}_r = j_r \label{equ:ch6:nhModel6}
\end{align}
where $V_r \in [0,V_{max}]$ is the linear speed, $a_r$ is the linear acceleration, $\alpha_r$, $\beta_r$, $\omega_{\alpha,r}$ and $\omega_{\beta,r}$ are as defined in \eqref{equ:ch6:model}, and the control inputs are the jerk $j_r$ and angular accelerations $u_{\alpha,r}$ and $u_{\beta,r}$.
The model \eqref{equ:ch6:nhModel1}-\eqref{equ:ch6:nhModel6} is integrated forward in time every time a new trajectory is needed.
This can also be done in a receding horizon fashion where the model is only integrated for a short period ahead in time compensating for state-estimations errors. 
For a lower computational cost, the model \eqref{equ:ch6:model} can be used instead with a similar control design as proposed earlier for the trajectory generation.

We adopt a similar approach to \cref{sec:ch6:generalKinControl} for the reference input design suggested here.
Thus, we determine the closest position $\bm{p}_*=\bm{\Gamma^{new}}(s^*)$ on the path to $\bm{p}_r(t) = [x_r,y_r,z_r]^T$.
Then we compute a lookahead position as explained earlier such that $\|\bm{p}_L - \bm{p}_*\| > L$.
This can be used to obtain the following desired velocity vector:
\begin{equation}\label{equ:ch6:lookaheadError}
	\bm{v}_d(t) = \bm{p}_L(t) - \bm{p}_r(t) := [v_{x,d},\ v_{y,d},\ v_{z,d}]^T
\end{equation}
which corresponds to the following desired orientation:
\begin{equation}
	\begin{array}{cc}
		\alpha_{r,d} = \tan^{-1}\frac{v_{z,d}}{\sqrt{v_{x,d}^2 + v_{y,d}^2}}, & \beta_{r,d} = \tan^{-1}\frac{v_{y,d}}{v_{x,d}}
	\end{array}
\end{equation}
Finally, the following laws are used to compute the reference inputs for the model \eqref{equ:ch6:nhModel1}-\eqref{equ:ch6:nhModel6} to ensure that \eqref{equ:ch6:lookaheadError} can be achieved:
\begin{equation}
	\begin{aligned}
		\sigma_\mu(t) &= \dot{\mu}(t) + c_{\mu,1} \tanh(\gamma_1(\mu(t) - \mu_d(t))) \\
		u_\mu (t)     &= - c_{\mu,2} \tanh(\gamma_2\sigma_\mu(t))
	\end{aligned}
\end{equation}
where $\mu = \{V_r, \alpha_r, \beta_r\}$, $c_{\mu,1}$, $c_{\mu,2}$ and $\gamma_1,\gamma_2$ are positive design parameters, $\mu_d = \{V_d, \alpha_{r,d}, \beta_{r,d}\}$ are the desired values for some desired speed $V_d>0$, and $u_\mu = \{j_r, u_\alpha, u_\beta\}$ are the reference control inputs.
This design provides an easy way to tune the gains while satisfy limits on velocities and accelerations.

\textbf{Trajectory Tracking Control}

For the quadrotor dynamical model \eqref{equ:ch6:model1}-\eqref{equ:ch6:model4}, we design control laws based on the sliding mode control method and the differential-flatness property of the quadrotor dynamics.
The control objective is to track a smooth position trajectory $\bm{p}_r(t) = [x_r(t),\ y_r(t),\ z_r(t)]^T$ with bounded derivatives and a reference yaw trajectory $\psi_r(t)$.
Let the position and velocity tracking errors be defined as:
\begin{equation}\label{equ:ch6:errors}
	\vect{e}_{\vect{p}}(t) = \vect{p}_r(t) - \vect{p}(t),\ \vect{e}_{\vect{v}}(t) = \vect{\dot{p}}_r(t) - \vect{v}(t)
\end{equation}
By regarding the acceleration a virtual input, we adopt the sliding mode control technique to obtain the following acceleration command:
\begin{align}
	\bm{\sigma}(t) &= \vect{e}_{\vect{v}}(t) + \vect{C}_1 \tanh(\gamma_1\vect{e}_{\vect{p}}(t)) \label{equ:ch6:acc_cmd1} \\
	\vect{a}_{cmd}(t) &= \vect{C}_2 \tanh\left(\gamma_2 \bm{\sigma}(t)\right) + \vect{\ddot{p}}_{r}(t) + g \bm{e}_3 \label{equ:ch6:acc_cmd2}
\end{align}
where $\bm{\sigma}(t)\in\R^3$ is the sliding surface, $\vect{C}_1$ and $\vect{C}_2$ are positive-definite $3\times 3$ diagonal matrices, $\gamma_1,\gamma_2\in\R^{+}$, and $\tanh(\bm{b})$ is the element-wise tangent hyperbolic function of a vector $\bm{b}\in\R^3$.
Note that we replaced the signum function commonly used in the sliding mode reaching law with a smooth approximation (i.e. the hyperbolic tangent function).
Thus, using the control law \eqref{equ:ch6:acc_cmd2} will ensure that the system states converge to the sliding surface $\bm{\sigma}(t) = \bm{0}$.
Furthermore, once the sliding surface is reached, the proposed design in \eqref{equ:ch6:acc_cmd1} guarantees that the position and velocity tracking errors will asymptotically converge to zero (i.e. $\lim\limits_{t \to \infty} \bm{e}_p = \bm{0}$ and $\lim\limits_{t \to \infty} \bm{e}_v = \bm{0}$).

The next step is to determine the proper input thrust $T(t)$ and desired UAV orientation $\bm{R}_{des}(t)$ to achieve the acceleration in \eqref{equ:ch6:acc_cmd2}.
Ought to the differential flatness property of quadrotor dynamics, this can be done using the following equations \cite{mellinger2011minimum}:
\begin{equation}\label{equ:ch6:thrustInput}
	T(t) = \vect{a}_{cmd}^T(t) \vect{R}(t) \vect{e}_3
\end{equation}
and, 
\begin{align}
	\vect{R}_{des}(t) &= [\vect{x}_{\mathcal{B},des}(t),\ \vect{y}_{\mathcal{B},des}(t),\ \vect{z}_{\mathcal{B},des}(t)] \\
	\vect{z}_{\mathcal{B},des}(t) &= \frac{\vect{a}_{cmd}(t)}{\|\vect{a}_{cmd}(t)\|} \\
	\vect{y}_{\mathcal{B},des}(t) &= \frac{ \vect{z}_{\mathcal{B},des}(t) \times \vect{x}_{C}(t) }{\|\vect{z}_{\mathcal{B},des}(t) \times \vect{x}_{C}(t)\|} \\
	\vect{x}_{\mathcal{B},des}(t) &= \vect{y}_{\mathcal{B},des}(t) \times \vect{z}_{\mathcal{B},des}(t)
\end{align}
where $\vect{x}_{C}(t)$ is given by:
\begin{equation}\label{equ:ch6:yCref}
	\vect{x}_{C}(t) = [\cos\psi_{r}(t),\ \sin\psi_{r}(t),\ 0]^T.
\end{equation}
Moreover, the body torques control inputs $\bm{\tau}(t)$ need to be designed to ensure that the vehicle's attitude can track the desired attitude $\vect{R}_{des}(t)$ (for example, as in \cite{mellinger2011minimum}).

\section{Simulation Results}\label{sec:ch6:simulation}

\subsection{General Model Simulation Cases}

Simulations were carried out to validate the performance of the suggested approach considering different static and dynamic cases.
This section shows the results obtained when applying the proposed control laws \eqref{equ:ch6:guidanceLaw} for the general 3D kinematic model \eqref{equ:ch6:model} using MATLAB. 
Simulations step time was taken to be 0.1s, and the constant forward speed was set as $V(t) = 1\ m/s, \ t > 0$.

The first simulation scenario considers a static environment, and the results are shown in \cref{fig:ch6:simStatic1,fig:ch6:simStatic2,fig:ch6:simStatic3}.
The environment consists of a set of cylindrical obstacles can represent bounding objects for trees, pillars, people, etc.
Using bounding shapes to represent obstacles reduces the complexity of collision checking algorithms.
However, the proposed approach does not really add restriction on obstacles shapes as long as a computationally-efficient algorithm can be used to detect collisions with obstacles in real-time.
The initial location of the vehicle was chosen as $p_i = [1,1,3]^T$, and the vehicle was required to safely navigate to a goal location at $p_{goal}=[20,20,20]^T$.
A safety factor of $\gamma = 0.6$ was considered in this simulation which determines the length of $\vect{v}_{safe}$.
A planar view of the vehicle's motion at different time instants is shown in \cref{fig:ch6:simStatic1}.
Starting from a straight path towards the goal, multiple deformations are applied to the path whenever an obstacle is detected (i.e. when the vehicle becomes close enough to sense part of it).
A small value for $\vect{v}_{safe}$ was used to make sure that deformations do not produce longer paths.
However, you can see that multiple deformations are also needed around a single obstacle until the deformed path becomes safe as in \cref{fig:ch6:simStatic1}(a)-(d).
You can also notice that the direction of deformation could be different depending on the closest obstacle edge as you can see from \cref{fig:ch6:simStatic1}(e)-(f).
The complete trajectory is shown \cref{fig:ch6:simStatic2} showing the 3D motion.
These results clearly shows how well the proposed method works in avoiding static obstacles.
Furthermore, the minimum distance to obstacles during the motion is given in \cref{fig:ch6:simStatic3} which shows how the vehicle can keeps a good clearance from the obstacles above the safety margin $d_{safe}=0.5m$.

The second scenario deals with dynamic obstacles where two cases were considered.
In these cases, moving spherical obstacles were deployed such that they intercept the vehicle's motion when going towards the goal from different directions as in \cref{fig:ch6:simDynamic1,fig:ch6:simDynamic2}.
Note that the obstacles velocities $\bm{v}_{o,i}(t),\ i=\{1,2\}$ where chosen such that $\|\bm{v}_{o,i}(t)\| < V$.
Also, the safety factor $\gamma$ was chosen differently in the two cases as 1.5 and 2.5 respectively. 
This was done to show that smaller values of $\gamma$ produce obstacle avoidance manoeuvrers closer to the obstacle while avoidance with better clearance can be obtained using larger values at the expense of having longer deformed paths.
An adaptive approach can also be considered to determine a good value for $\gamma$ depending on the distance between the current path and the obstacle rather than using a constant value if found more efficient.
The smaller value of $\gamma$ in the first case caused the need for several deformations to maintain safety as the obstacle was approaching the vehicle which can be seen from ~\cref{fig:ch6:simDynamic1}(c) and \cref{fig:ch6:simDynamic1}(d).
Choosing a larger value for $\gamma$ or performing a deformation away from the motion direction of the obstacle would have been better in that case to determine $\vect{v}_{safe}$ in a way that can result in better manoeuvrers.
In this case, an estimate of the obstacle's velocity needs to be computed.
However, the proposed approach still managed to guarantee the collision avoidance while maintaining a distance larger than the safety margin.
The second case with a larger safety factor is shown in \cref{fig:ch6:simDynamic2} which clearly verifies that the vehicle can safely avoid the dynamic obstacle.
\Cref{fig:ch6:simDynamicDt} shows how close the distance to the moving obstacle from the safety margin when using a smaller value of $\gamma$.
Overall, the obtained results confirms the performance of our method in static and dynamic unknown environments.

\begin{figure}[!htb]
	\centering
	\begin{adjustbox}{minipage=\linewidth,scale=0.9}
		\begin{subfigure}[t]{0.45\textwidth}
			\centering
			\fbox{\includegraphics[width=\linewidth]{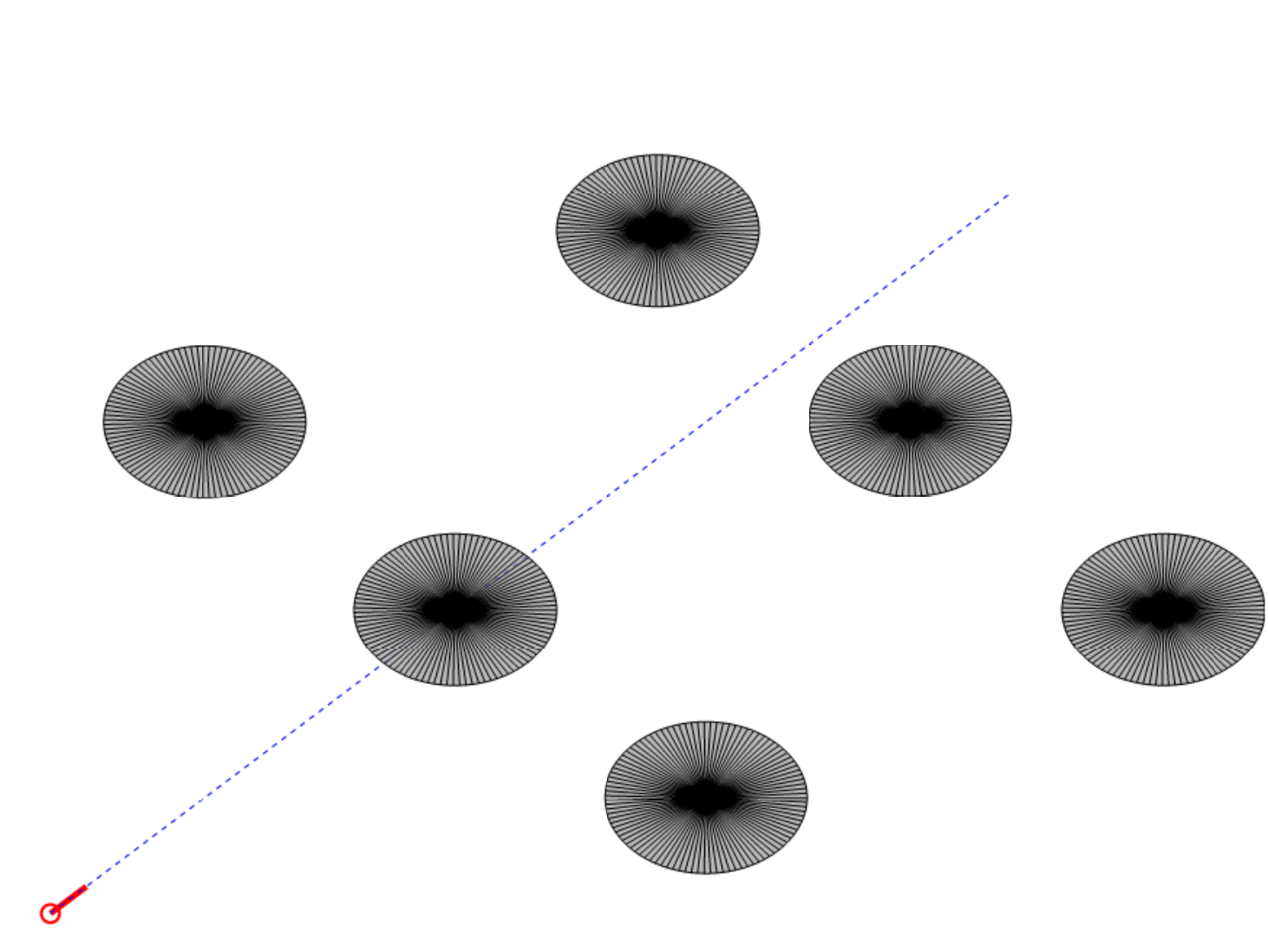}}
			\caption{}
		\end{subfigure}
		\hfill
		\begin{subfigure}[t]{0.45\textwidth}
			\centering
			\fbox{\includegraphics[width=\linewidth]{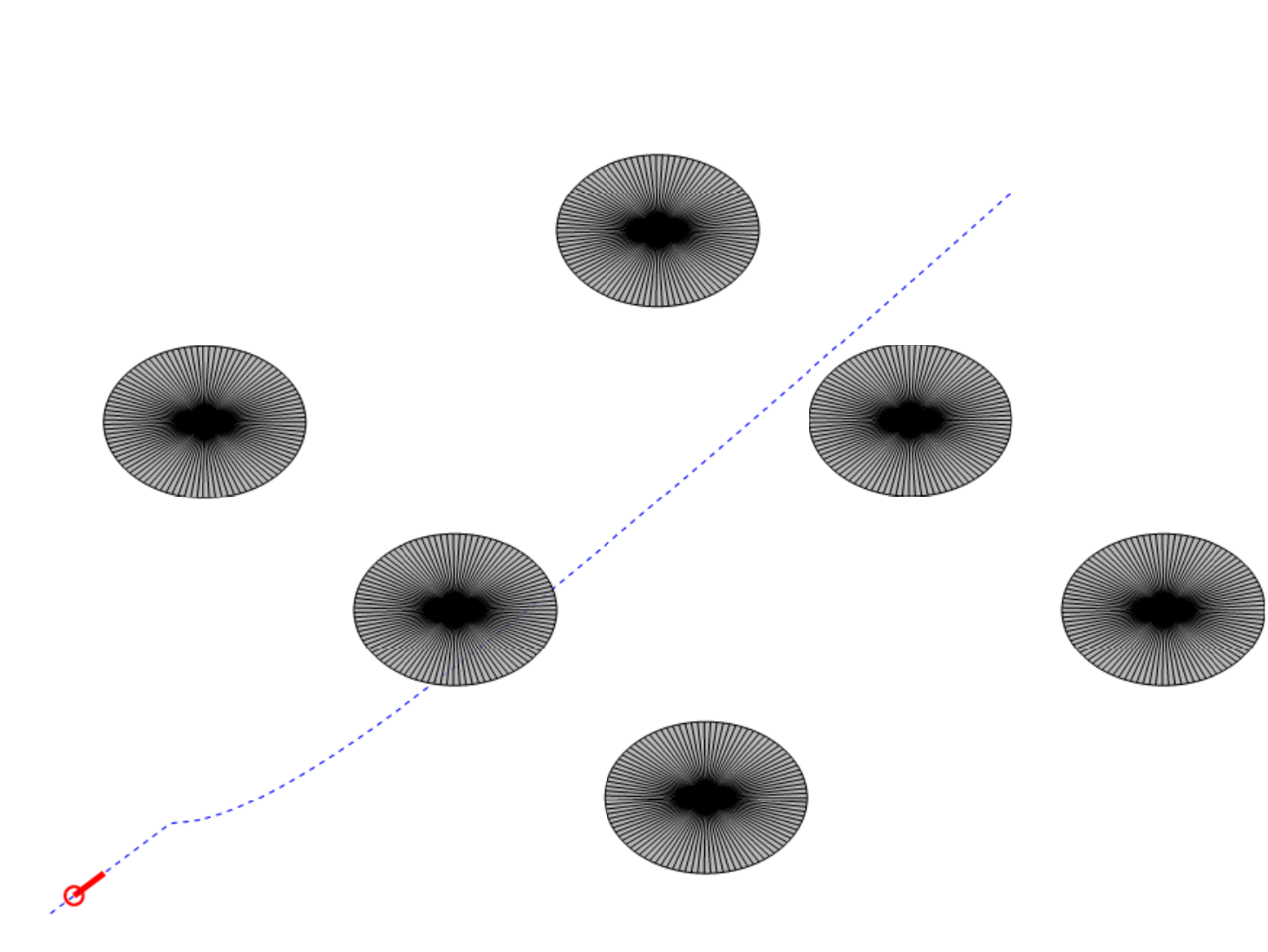}}
			\caption{}
		\end{subfigure}
		
		\begin{subfigure}[t]{0.45\textwidth}
			\centering
			\fbox{\includegraphics[width=\linewidth]{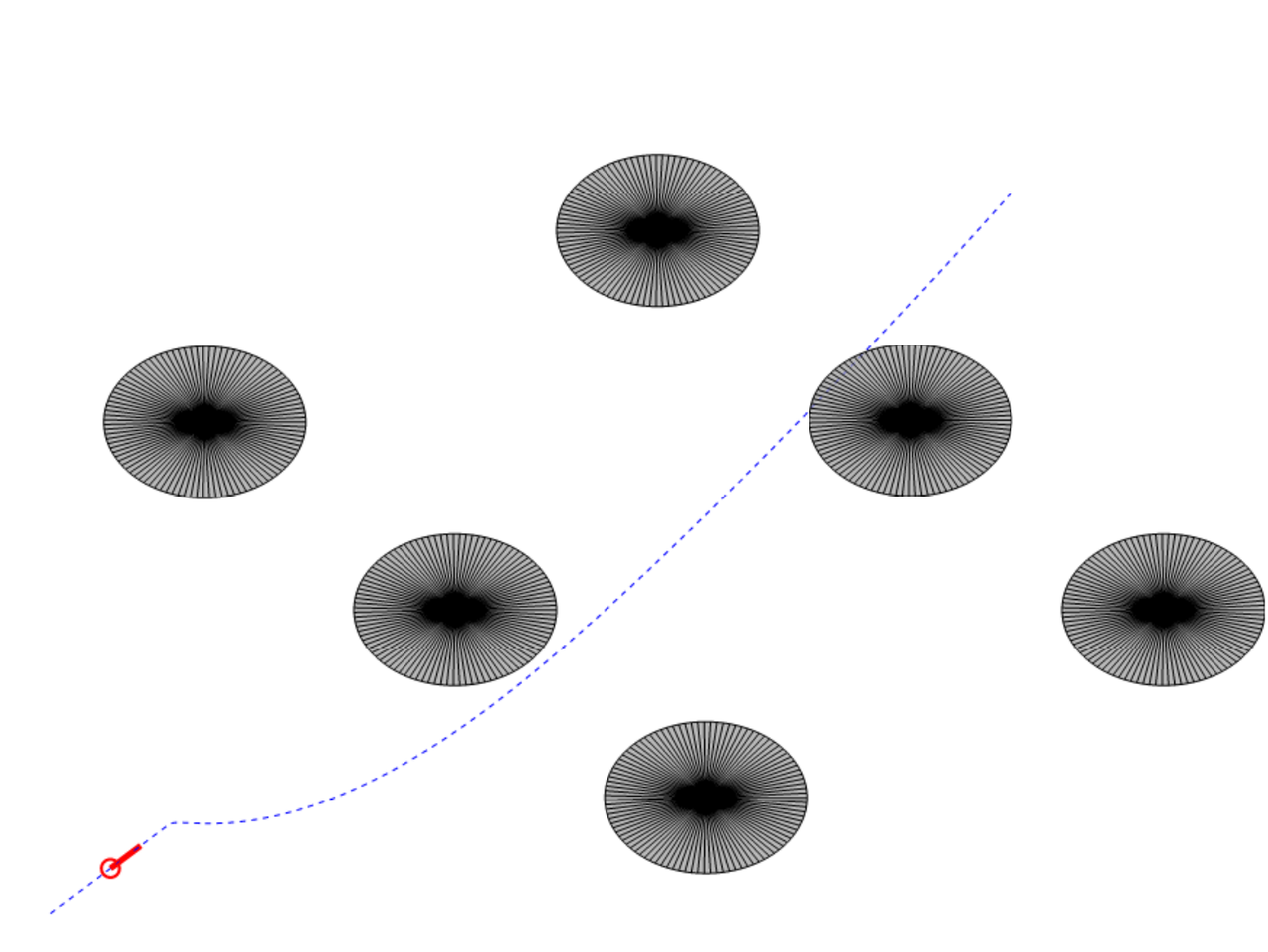}}
			\caption{}
		\end{subfigure}
		\hfill
		\begin{subfigure}[t]{0.45\textwidth}
			\centering
			\fbox{\includegraphics[width=\linewidth]{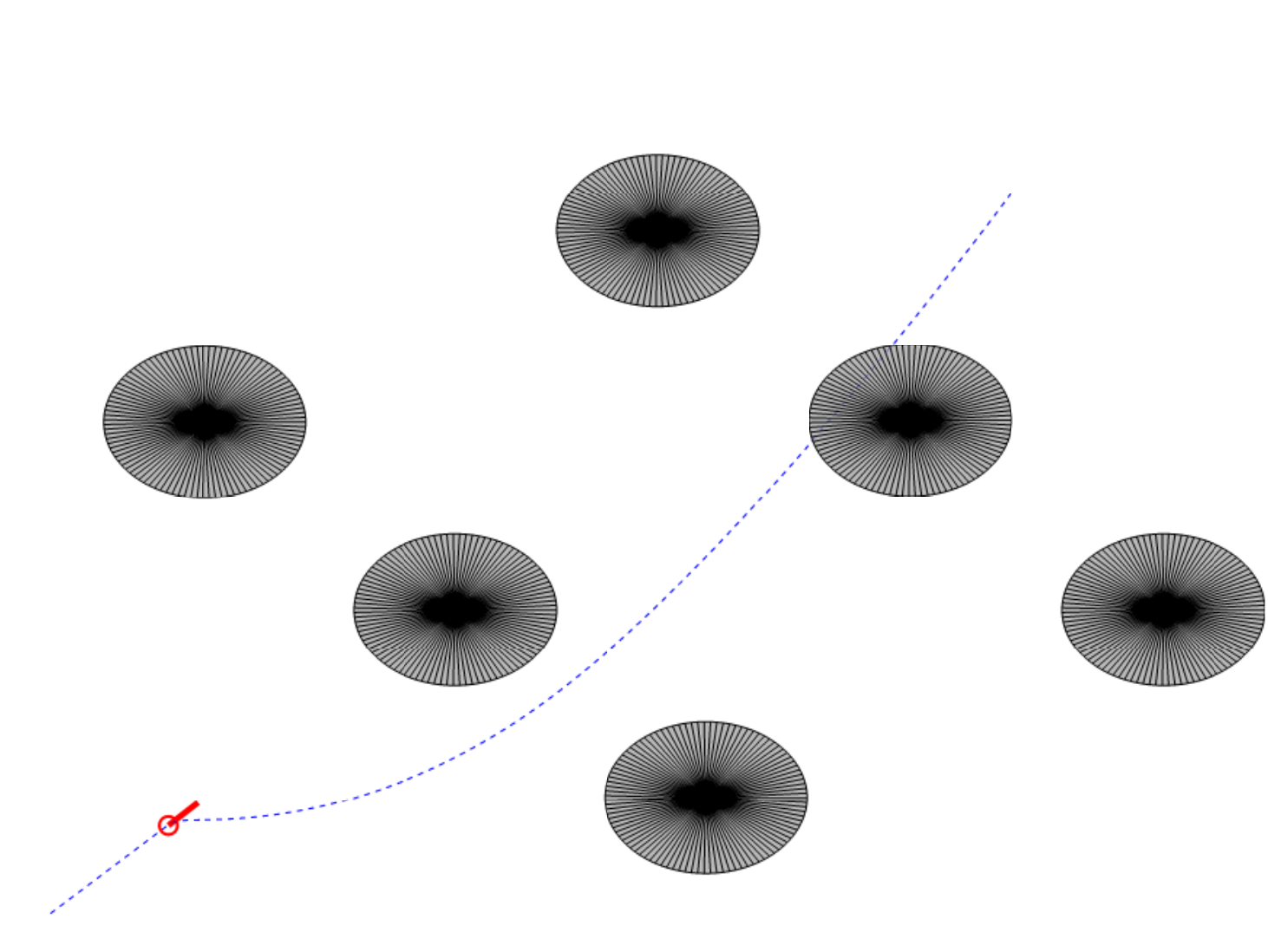}}
			\caption{}
		\end{subfigure}
		
		\begin{subfigure}[t]{0.45\textwidth}
			\centering
			\fbox{\includegraphics[width=\linewidth]{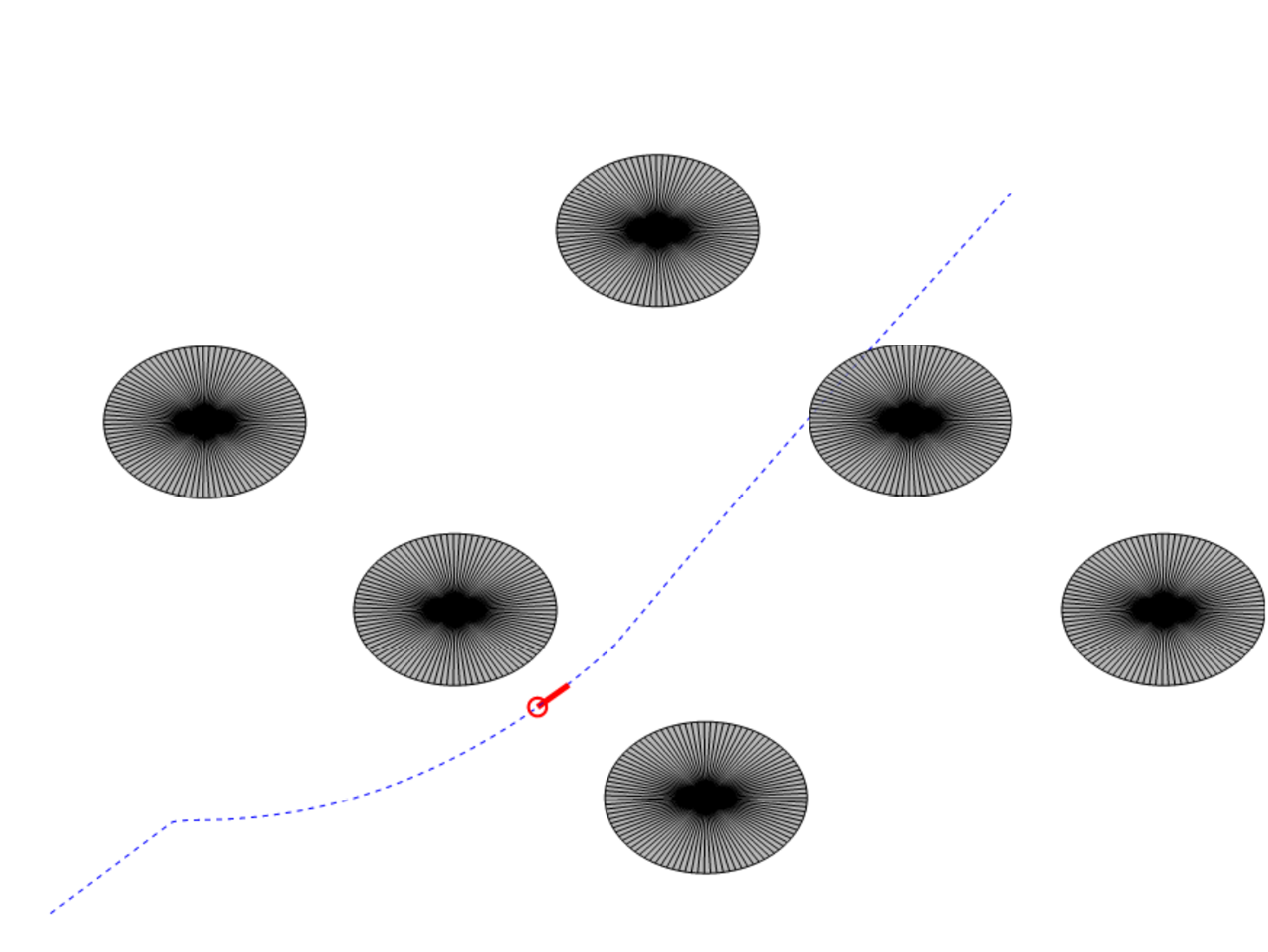}}
			\caption{}
		\end{subfigure}
		\hfill
		\begin{subfigure}[t]{0.45\textwidth}
			\centering
			\fbox{\includegraphics[width=\linewidth]{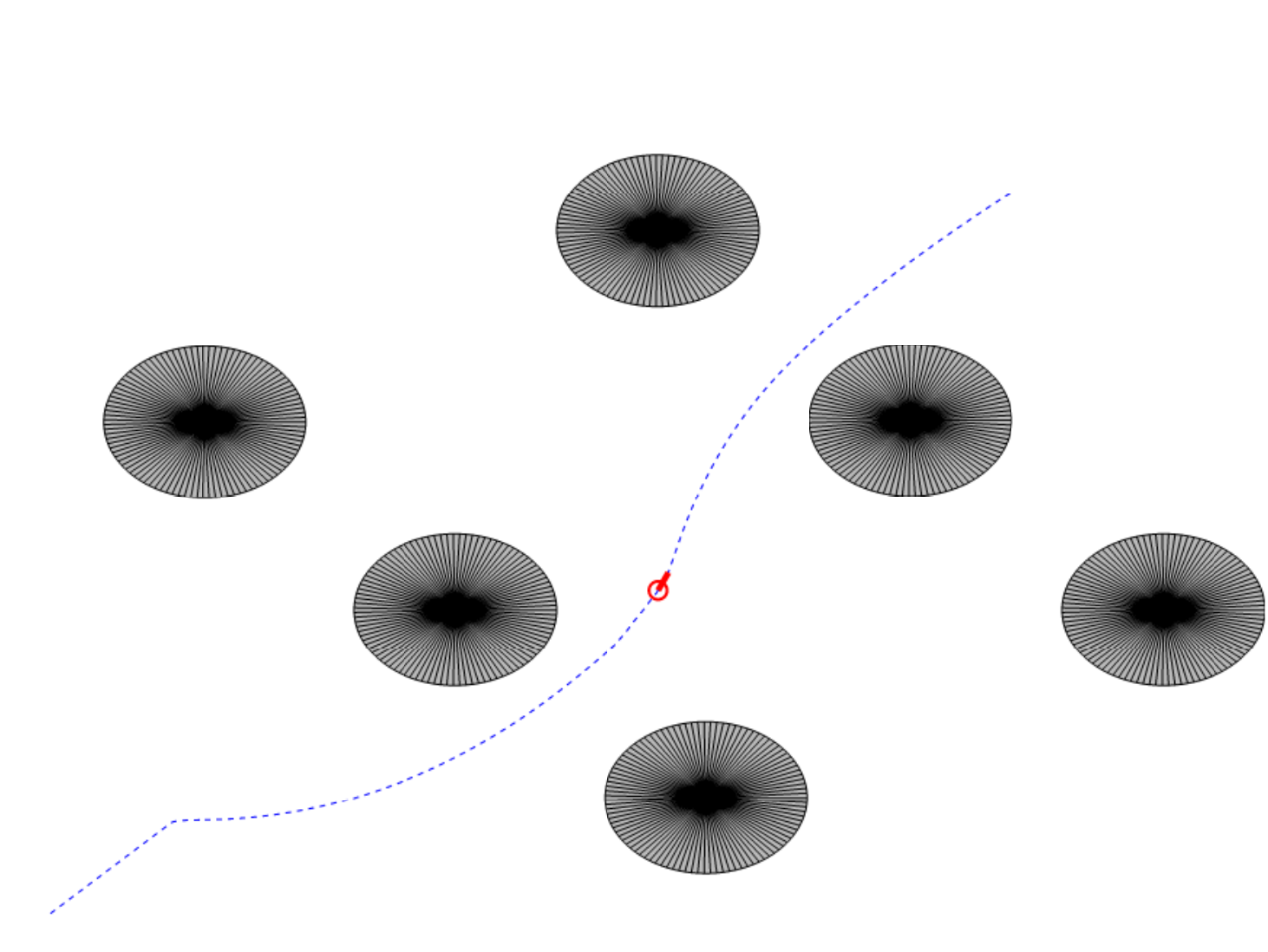}}
			\caption{}
		\end{subfigure}
		
		\caption{Different instances during motion at which segments of the path are being deformed (shown from top view)}\label{fig:ch6:simStatic1}
	\end{adjustbox}
\end{figure}

\begin{figure}[!htb]
	\centering
	\begin{adjustbox}{minipage=\linewidth,scale=0.9}
		\begin{subfigure}[t]{\textwidth}
			\centering
			\includegraphics[width=\linewidth]{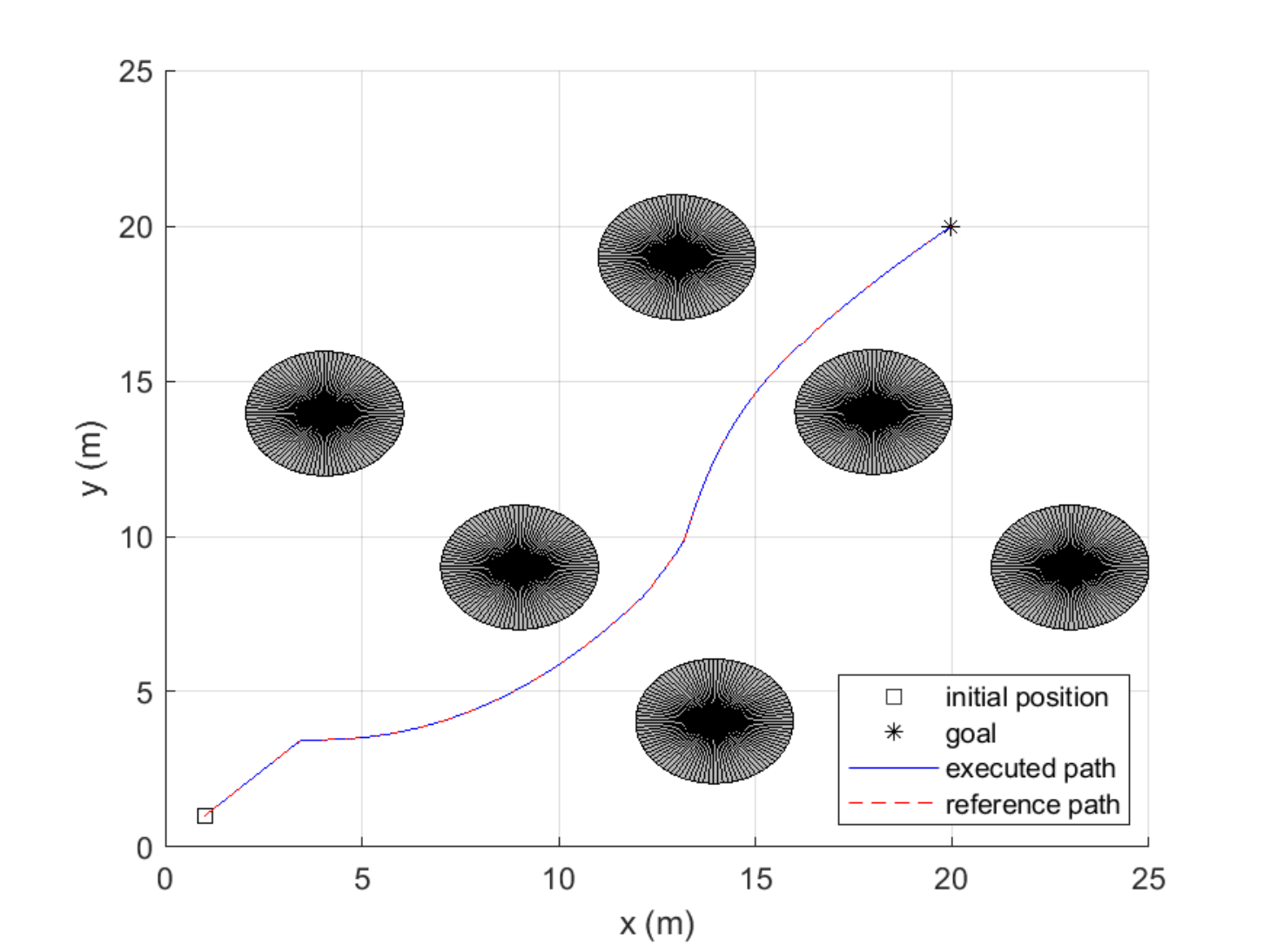} 
			\caption{Top view}
		\end{subfigure}
		
		\begin{subfigure}[t]{\textwidth}
			\centering
			\includegraphics[width=\linewidth]{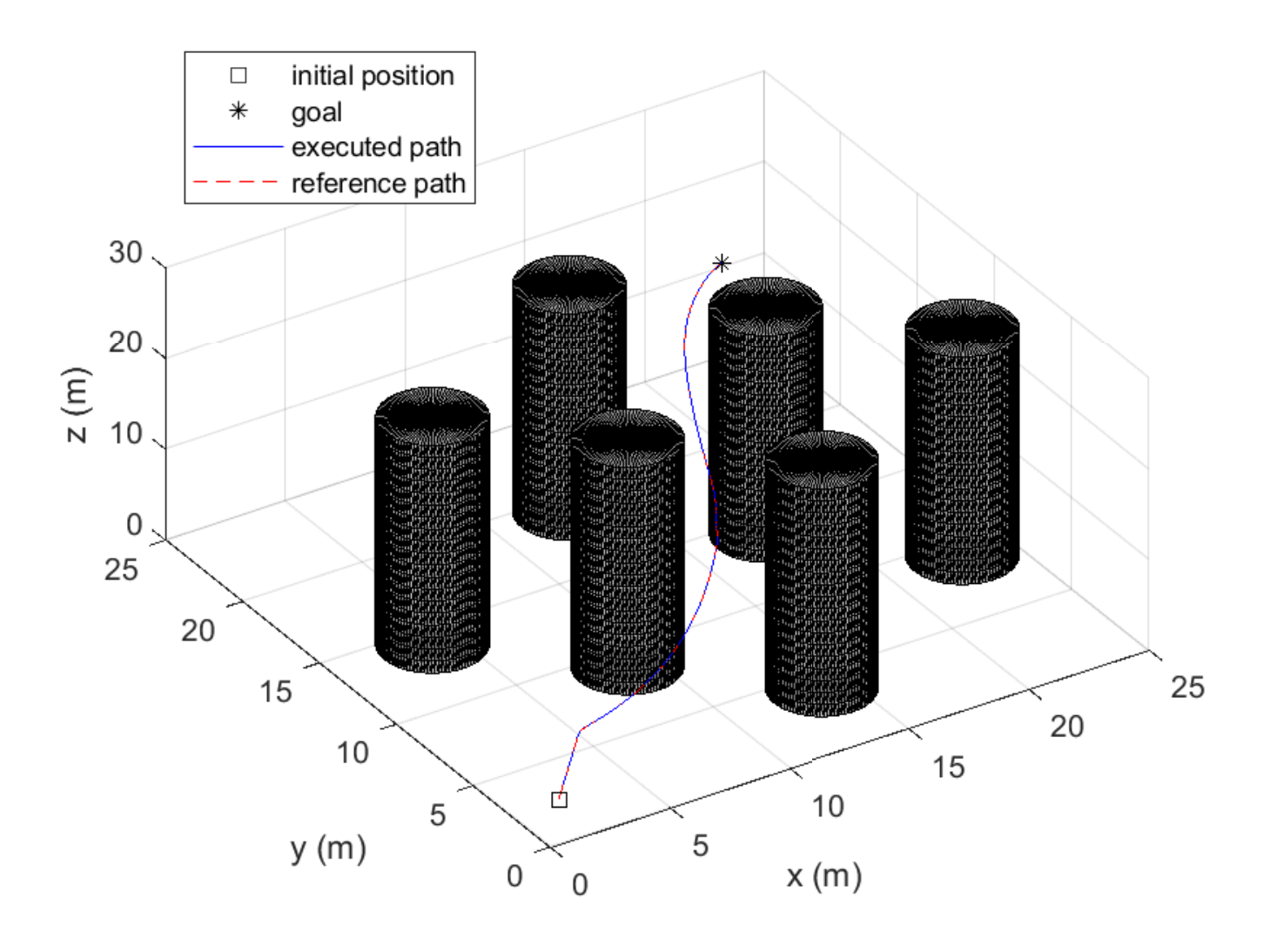} 
			\caption{3D view}
		\end{subfigure}
		
		\caption{Simulation results: executed Path with different views}\label{fig:ch6:simStatic2}
	\end{adjustbox}
\end{figure}

\begin{figure}[!htb]
	\centering
	\includegraphics[width=0.9\linewidth]{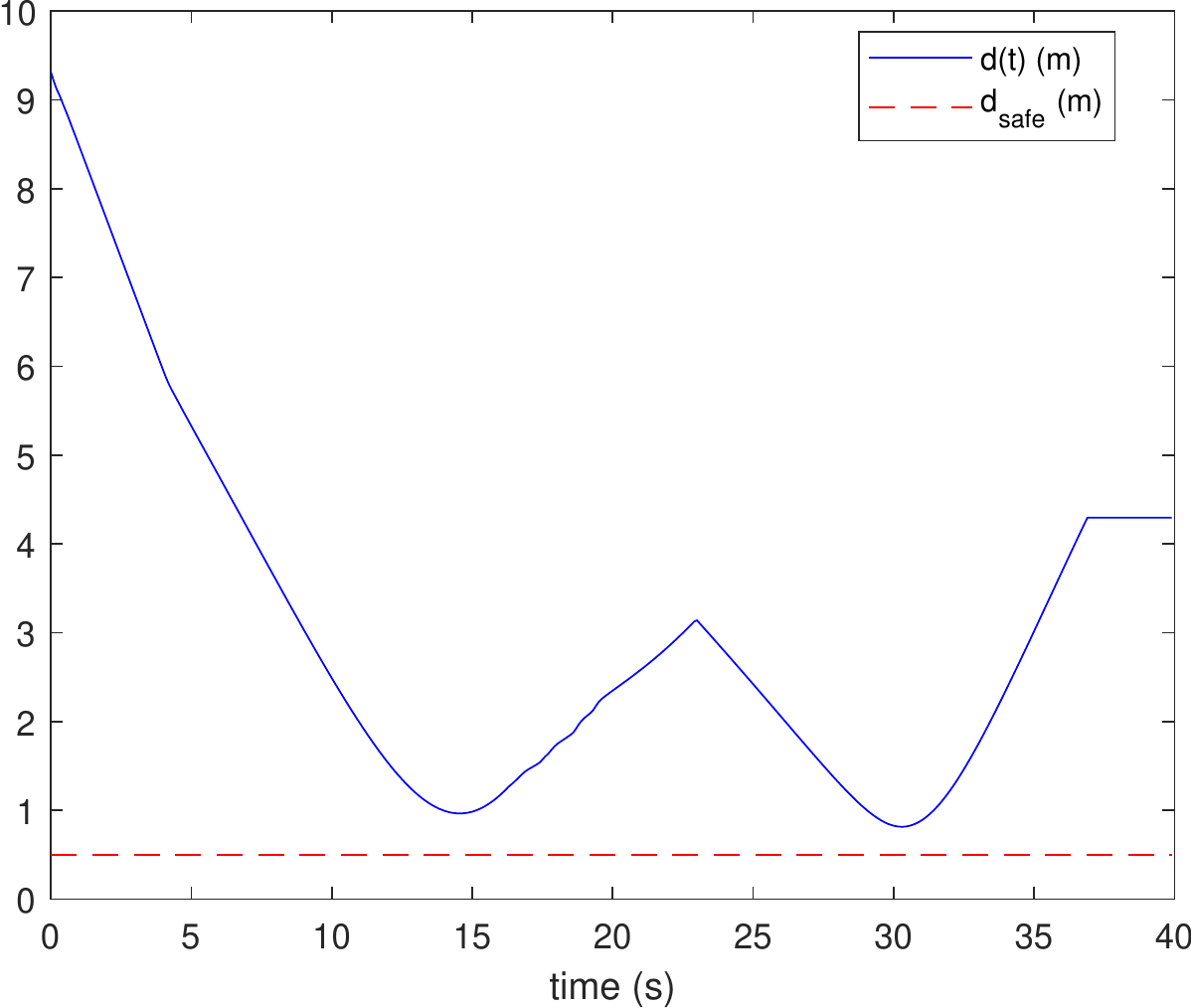} 
	\caption{Simulation results: vehicle's relative distance to nearest obstacle during the motion ($d(t)$ versus time)} \label{fig:ch6:simStatic3}
\end{figure}

\begin{figure}[!htb]
	\centering
	\begin{adjustbox}{minipage=\linewidth,scale=0.9}
		\begin{subfigure}[t]{0.48\textwidth}
			\centering
			\includegraphics[width=\linewidth]{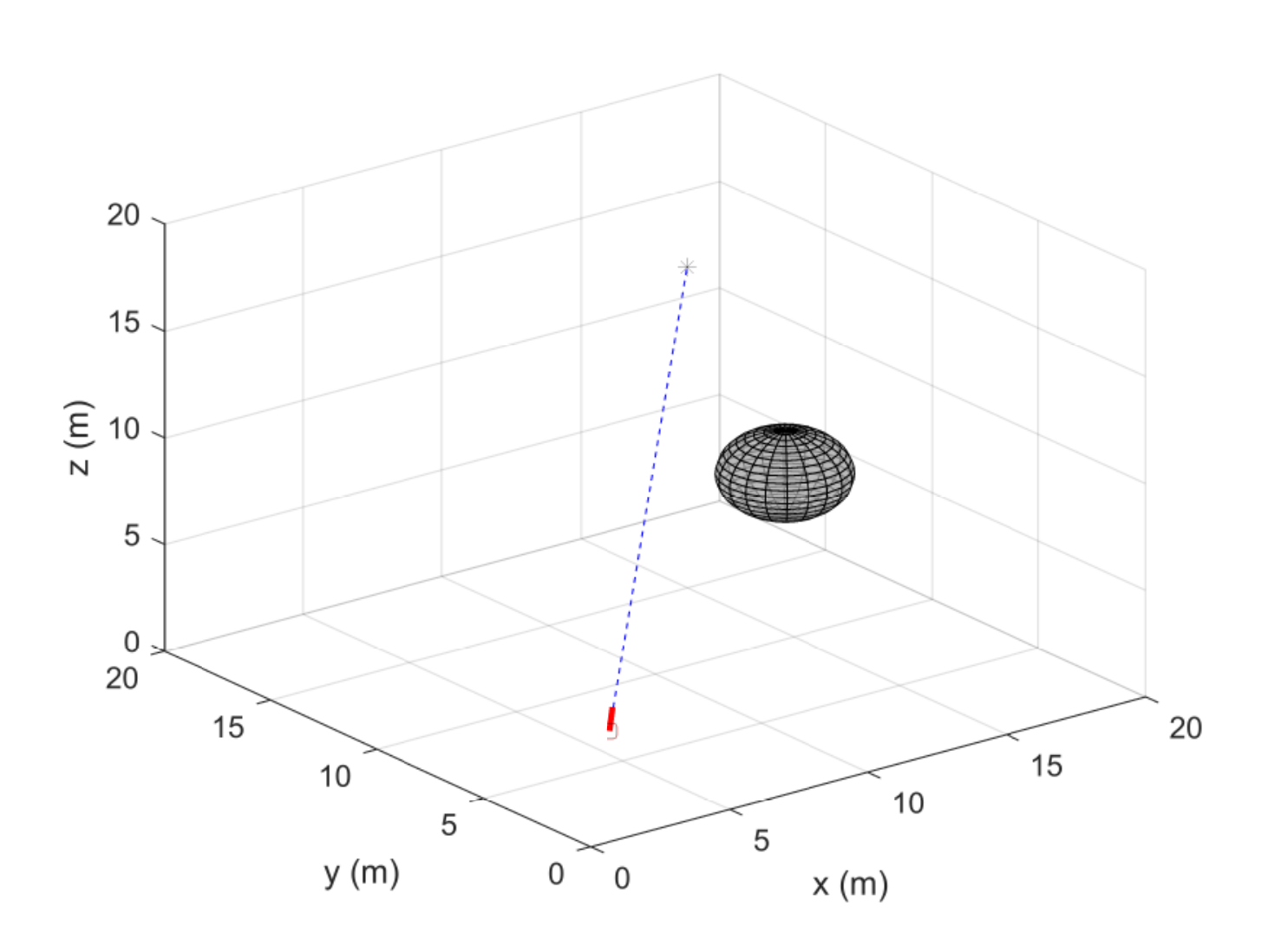} 
			\caption{}
		\end{subfigure}
		\hfill
		\begin{subfigure}[t]{0.48\textwidth}
			\centering
			\includegraphics[width=\linewidth]{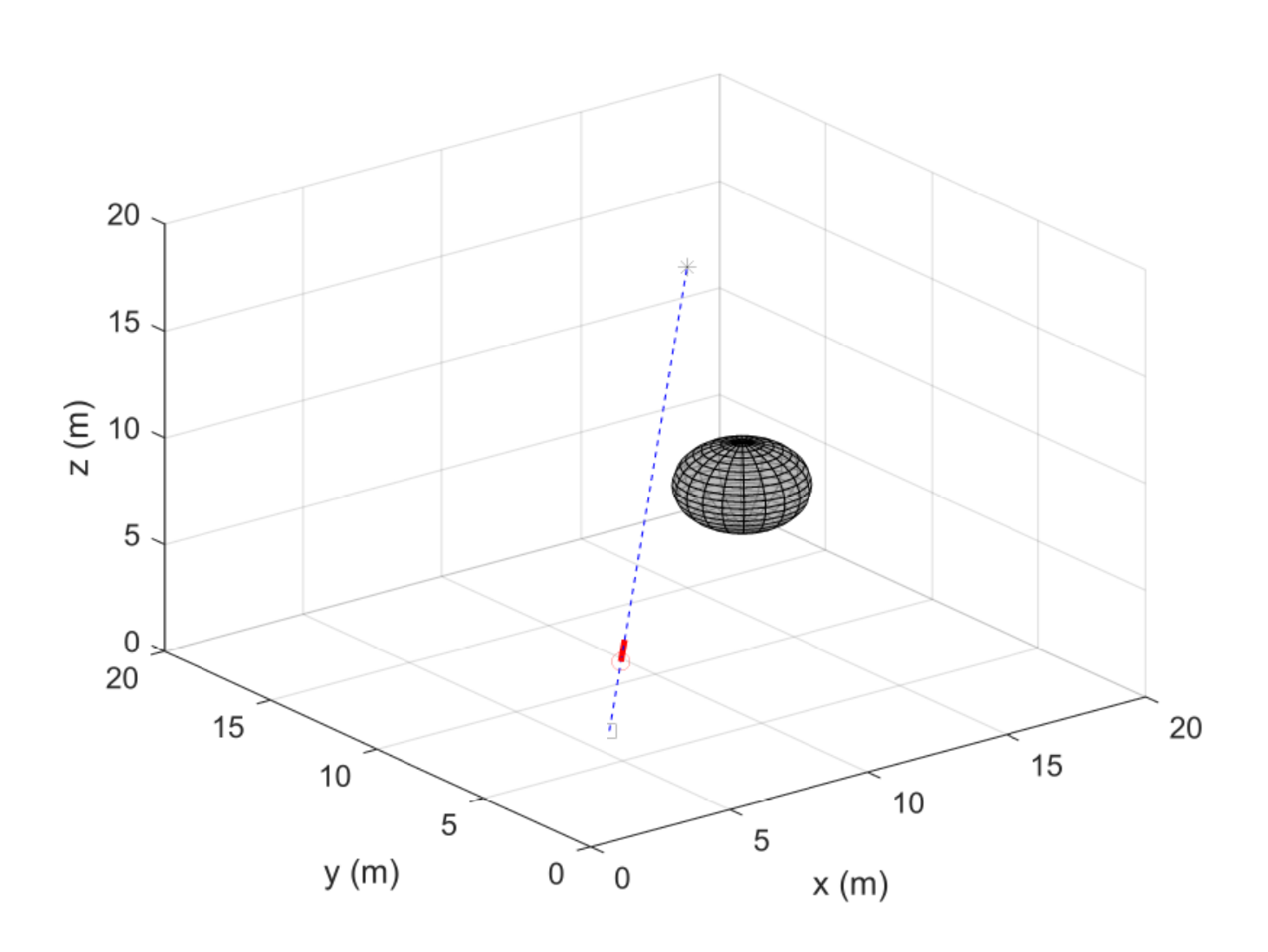} 
			\caption{}
		\end{subfigure}
		
		\begin{subfigure}[t]{0.48\textwidth}
			\centering
			\includegraphics[width=\linewidth]{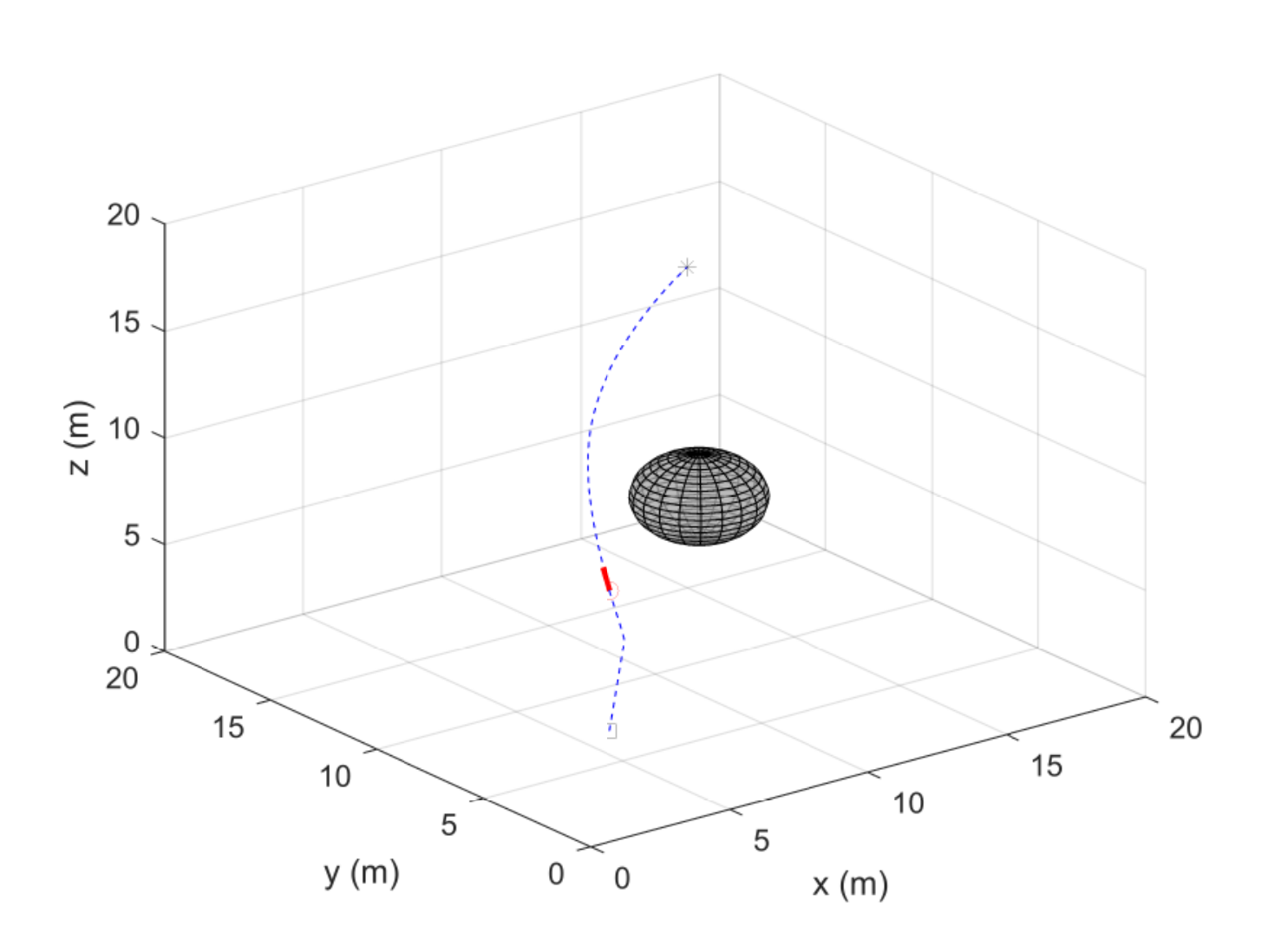} 
			\caption{}
		\end{subfigure}
		\hfill
		\begin{subfigure}[t]{0.48\textwidth}
			\centering
			\includegraphics[width=\linewidth]{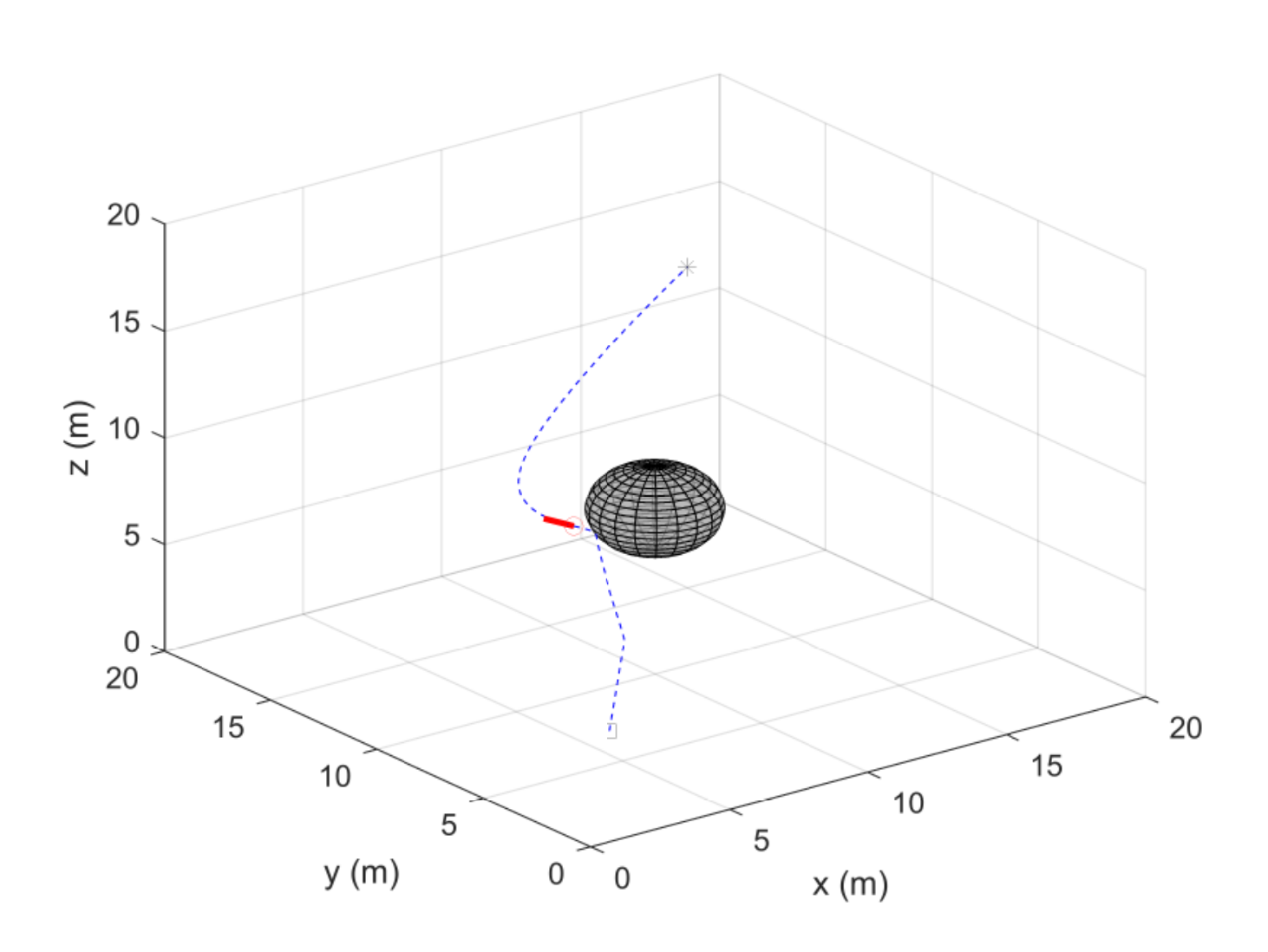} 
			\caption{}
		\end{subfigure}
		
		\begin{subfigure}[t]{0.48\textwidth}
			\centering
			\includegraphics[width=\linewidth]{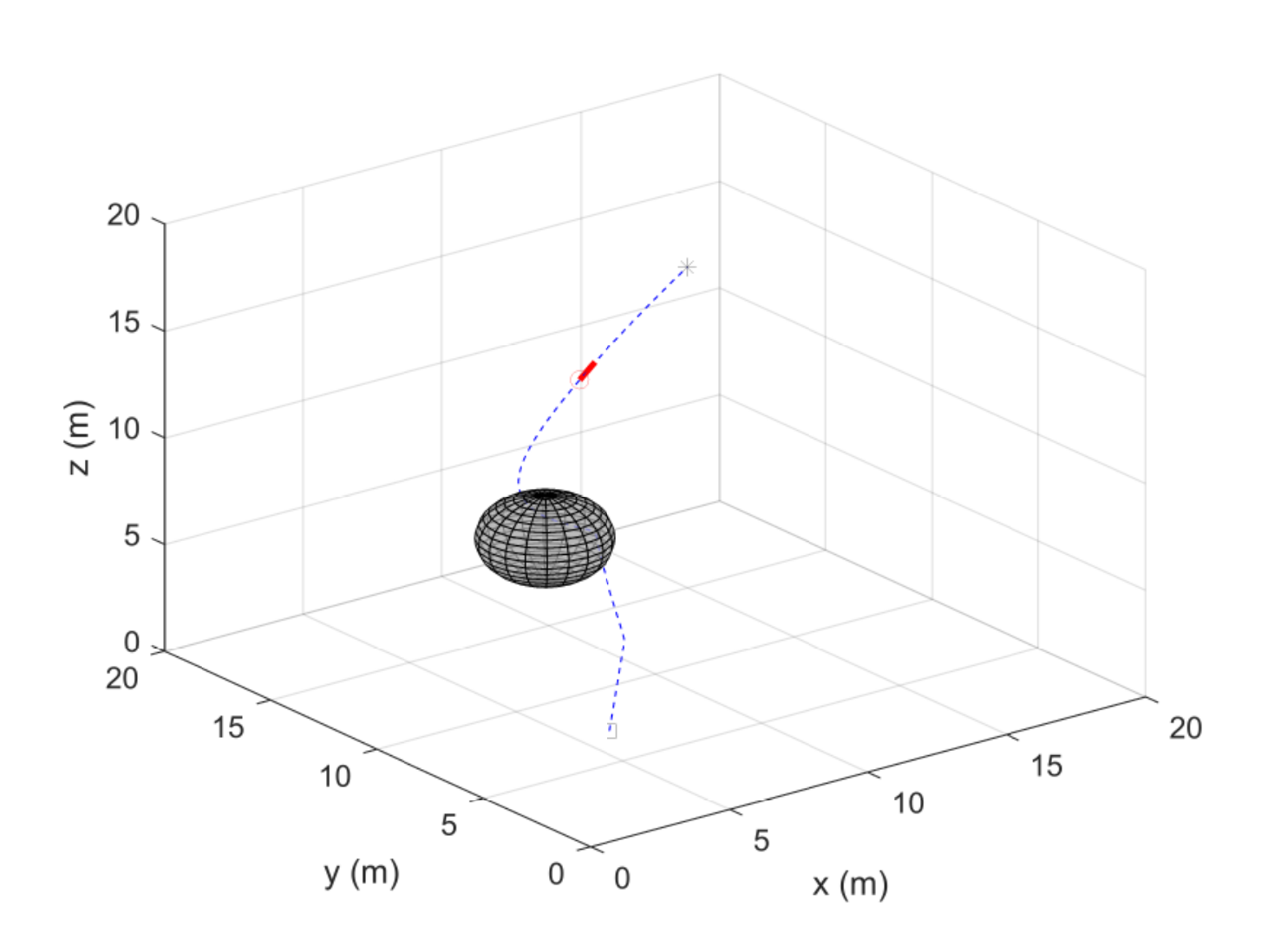} 
			\caption{}
		\end{subfigure}
		\hfill
		\begin{subfigure}[t]{0.48\textwidth}
			\centering
			\includegraphics[width=\linewidth]{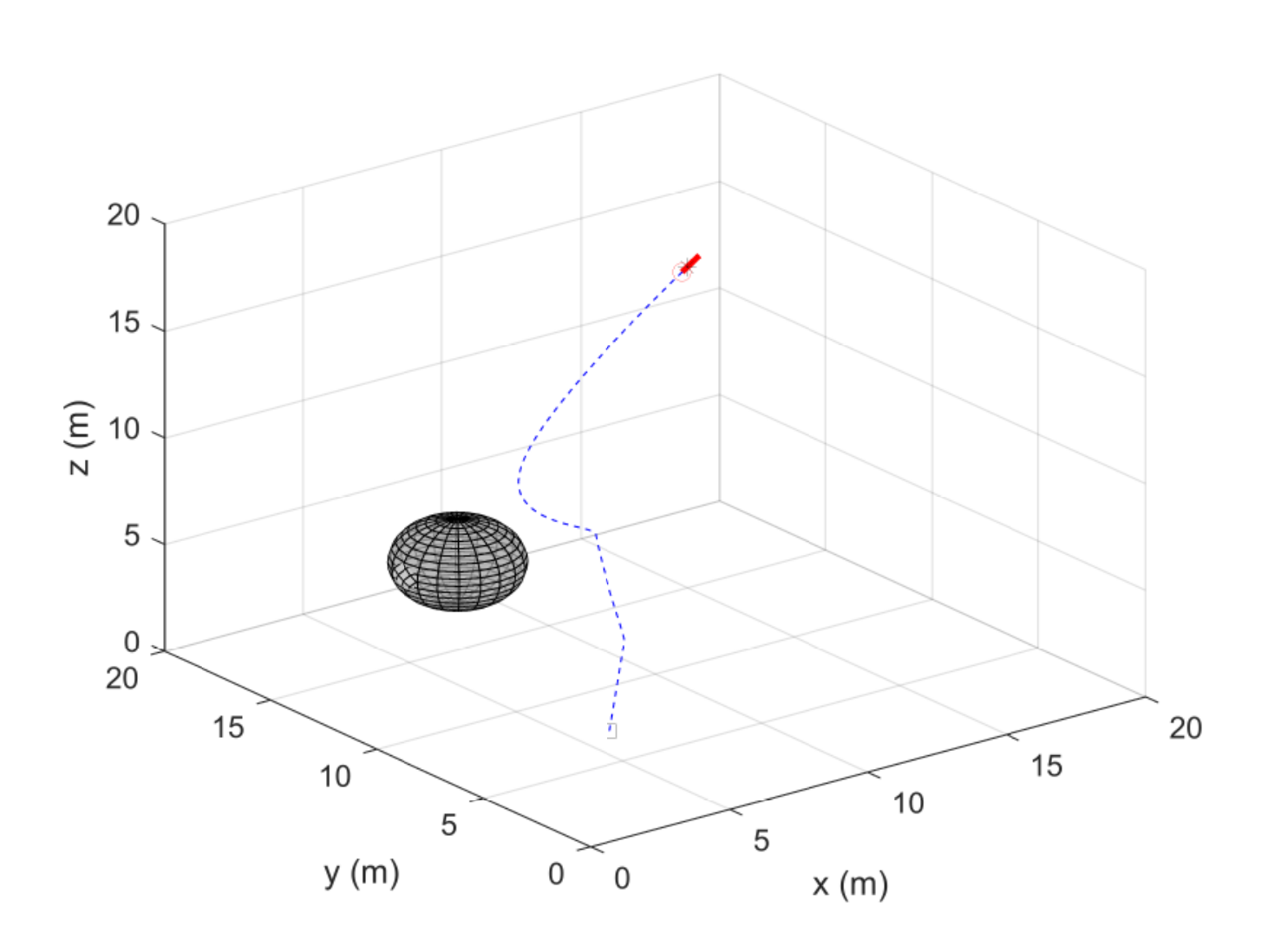} 
			\caption{}
		\end{subfigure}
		
		\caption{Simulation results: strategy validation with dynamic obstacles (Case 1)}\label{fig:ch6:simDynamic1}
	\end{adjustbox}
\end{figure}

\begin{figure}[!htb]
	\centering
	\begin{adjustbox}{minipage=\linewidth,scale=0.9}
		\begin{subfigure}[t]{0.48\textwidth}
			\centering
			\includegraphics[width=\linewidth]{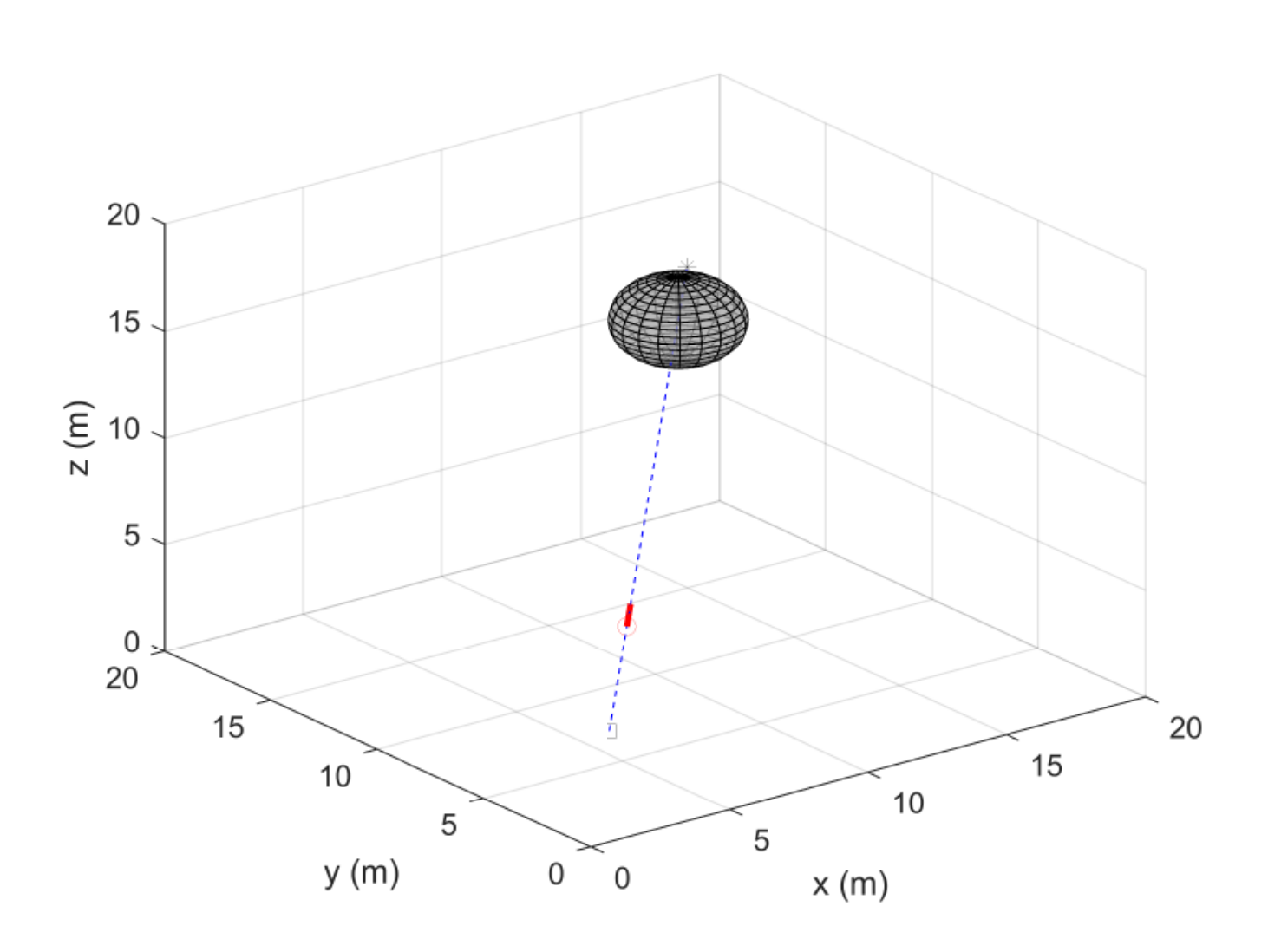} 
			\caption{}
		\end{subfigure}
		\hfill
		\begin{subfigure}[t]{0.48\textwidth}
			\centering
			\includegraphics[width=\linewidth]{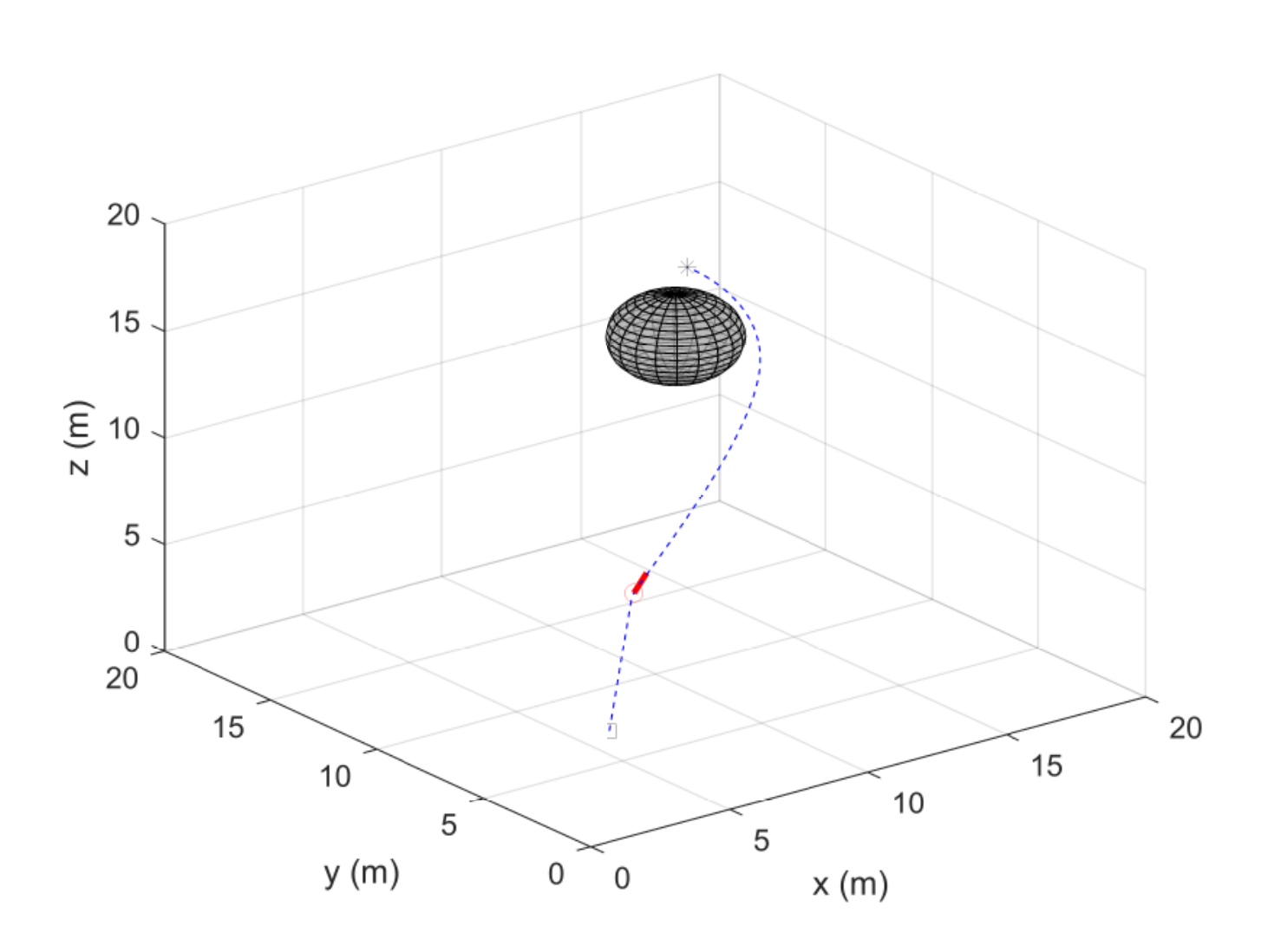} 
			\caption{}
		\end{subfigure}
		
		\begin{subfigure}[t]{0.48\textwidth}
			\centering
			\includegraphics[width=\linewidth]{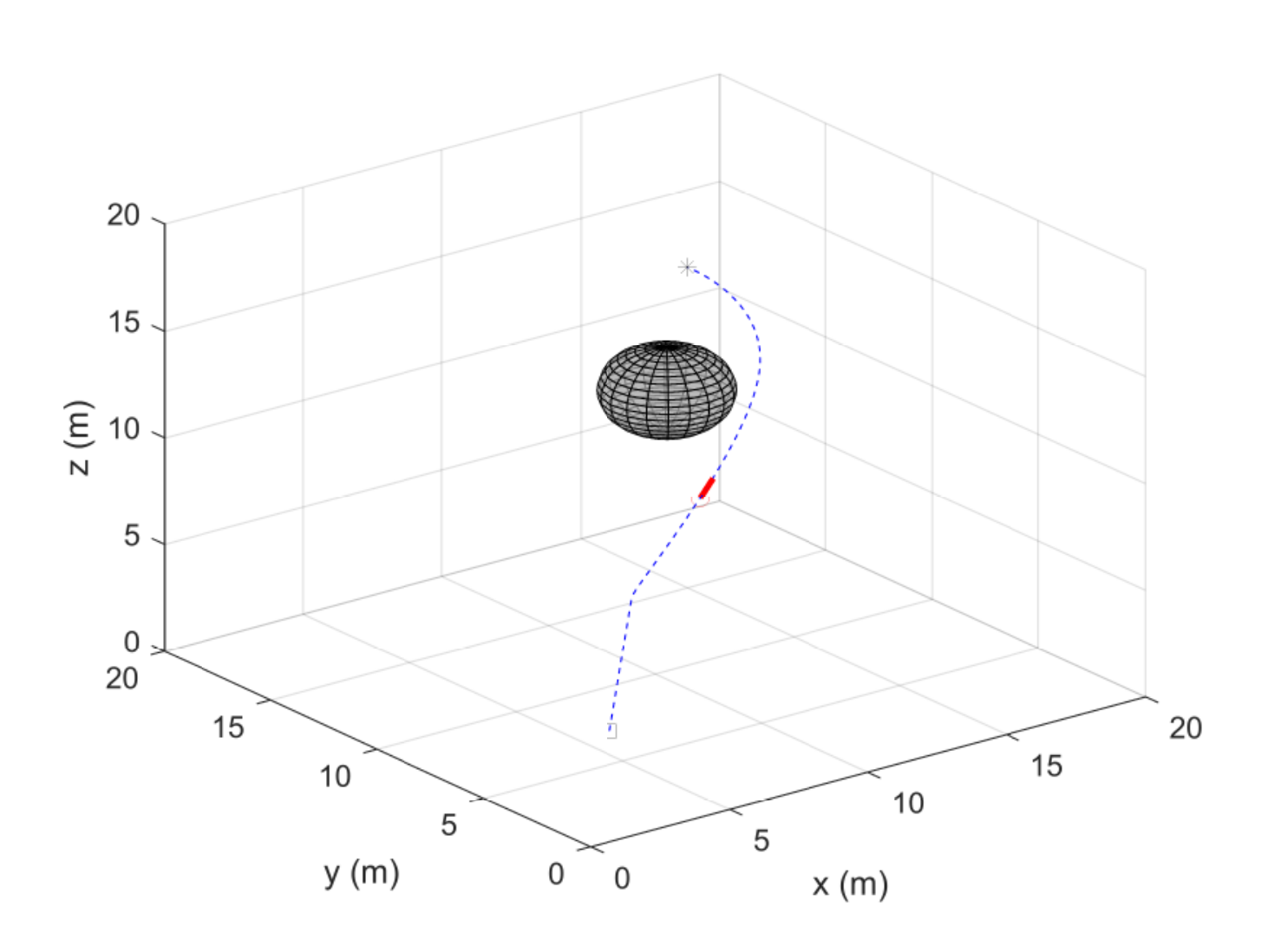} 
			\caption{}
		\end{subfigure}
		\hfill
		\begin{subfigure}[t]{0.48\textwidth}
			\centering
			\includegraphics[width=\linewidth]{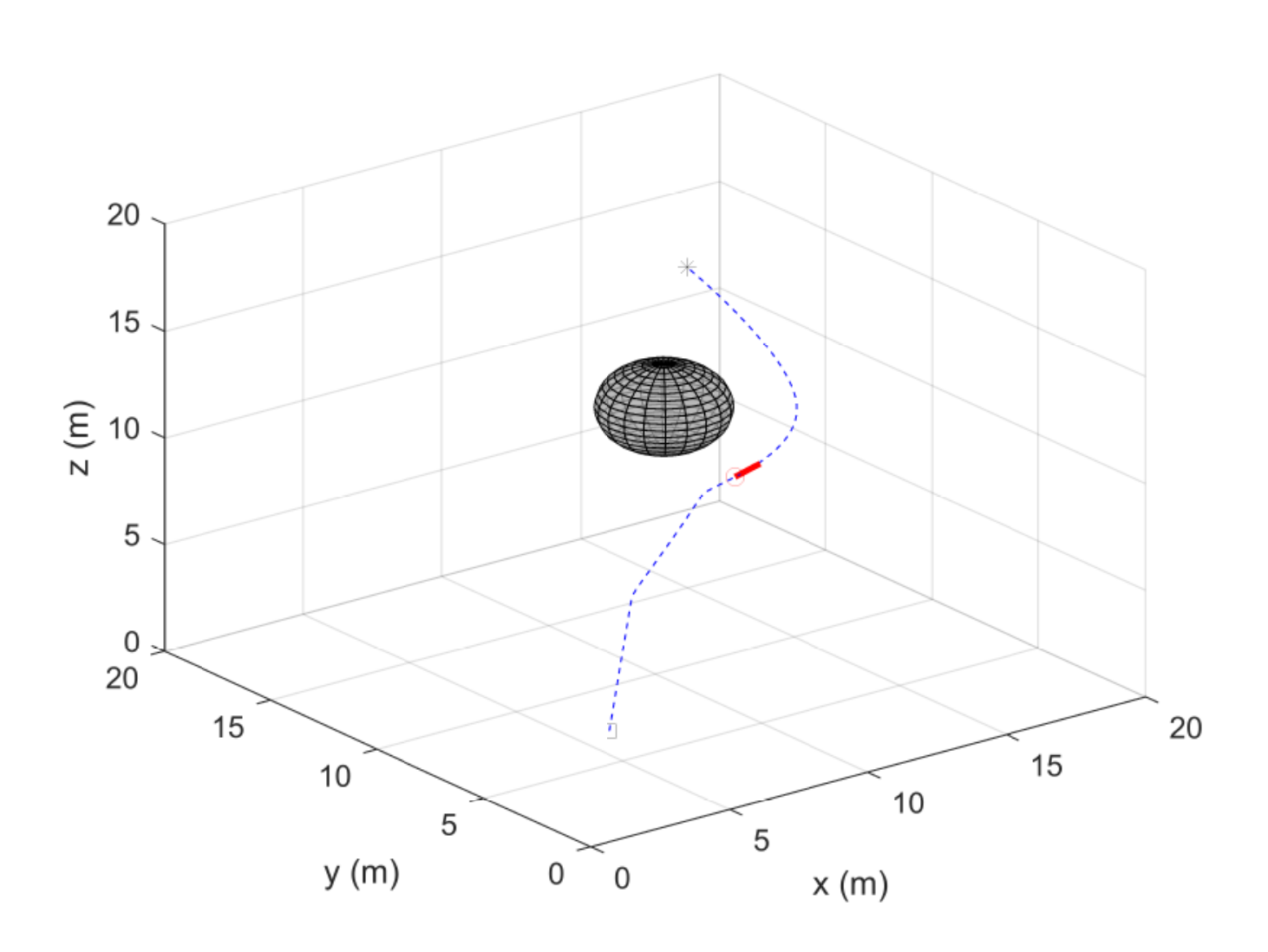} 
			\caption{}
		\end{subfigure}
		
		\begin{subfigure}[t]{0.48\textwidth}
			\centering
			\includegraphics[width=\linewidth]{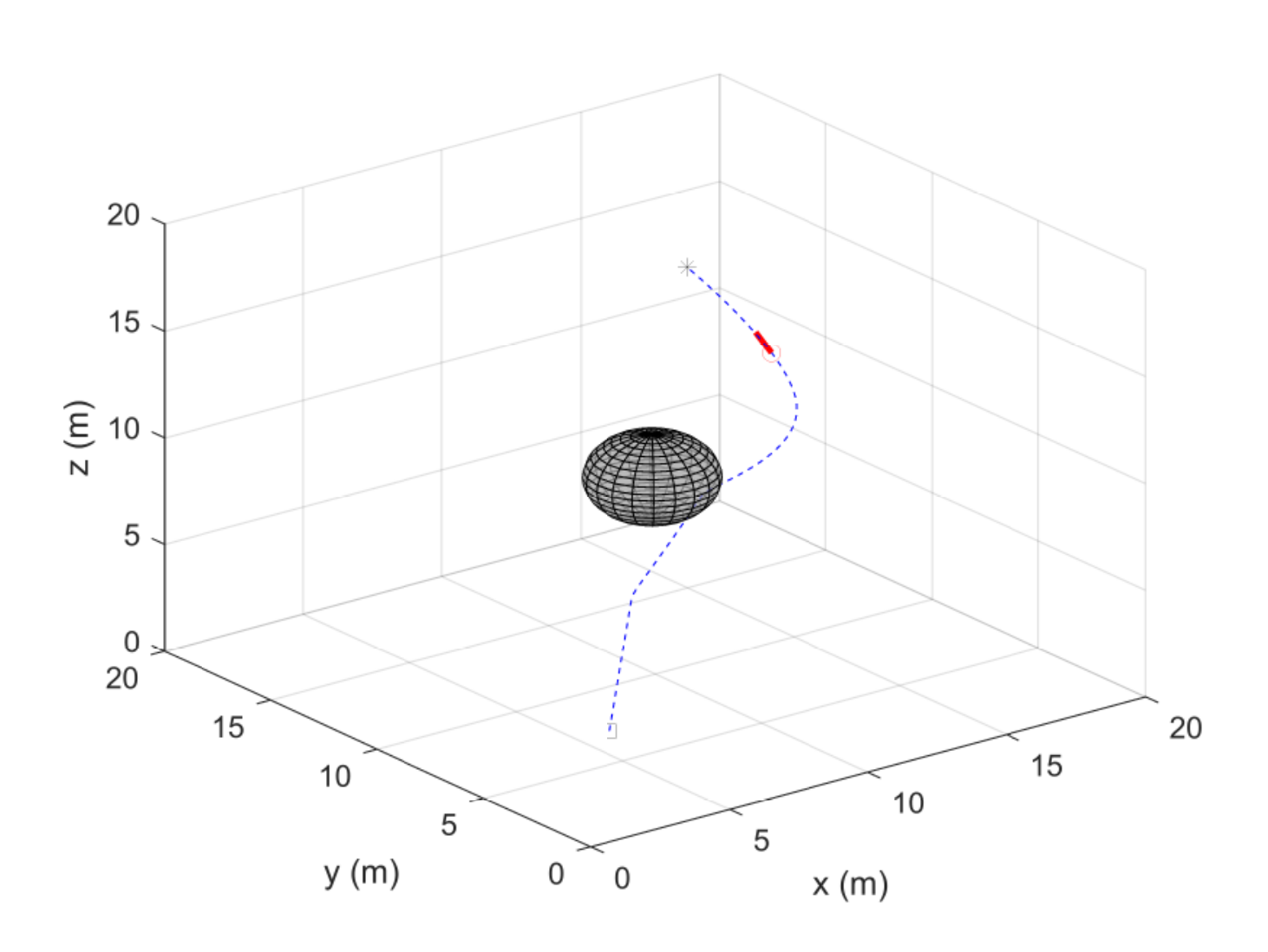} 
			\caption{}
		\end{subfigure}
		\hfill
		\begin{subfigure}[t]{0.48\textwidth}
			\centering
			\includegraphics[width=\linewidth]{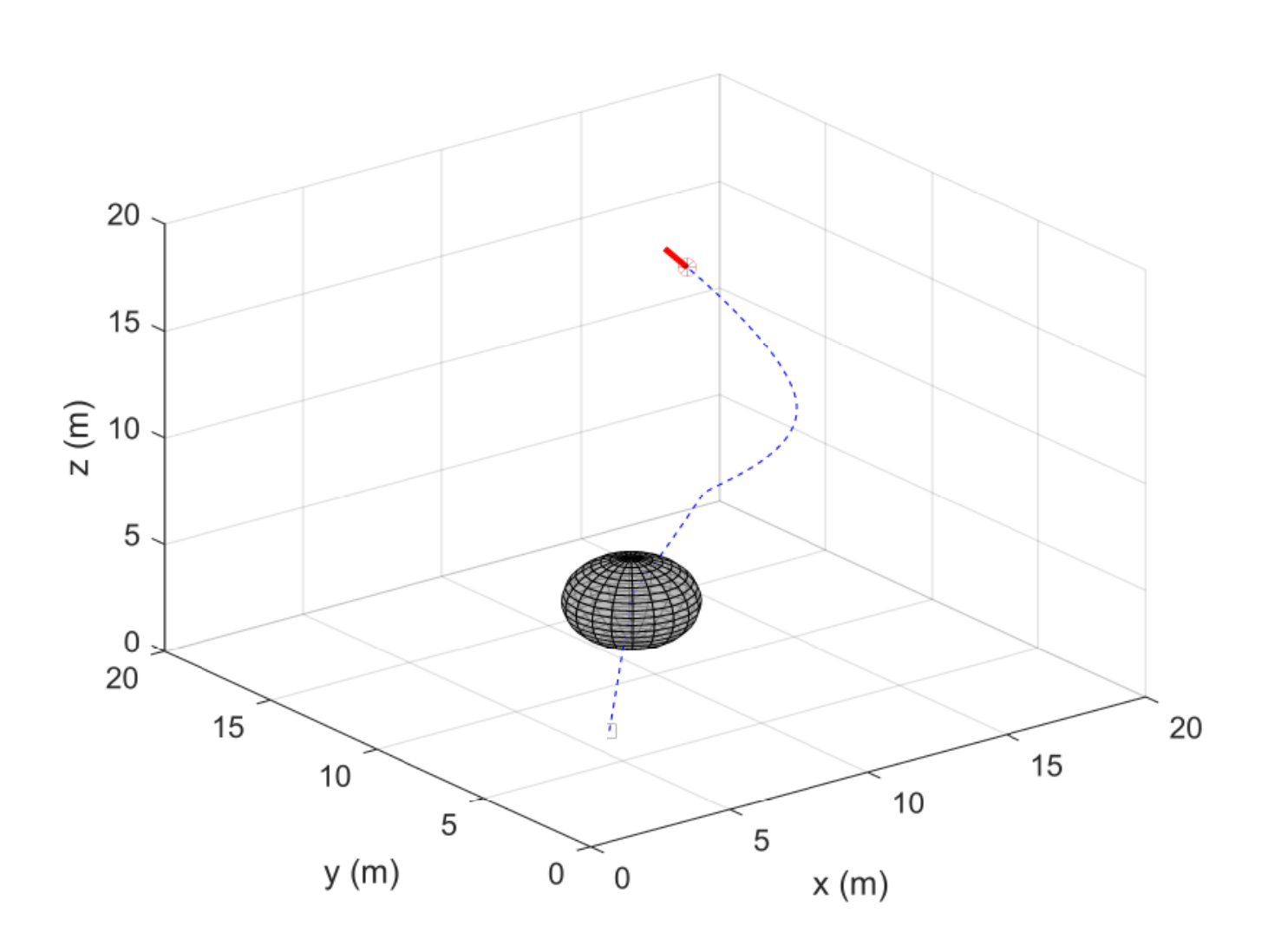} 
			\caption{}
		\end{subfigure}
		
		\caption{Simulation results: strategy validation with dynamic obstacles (Case 2)}\label{fig:ch6:simDynamic2}
	\end{adjustbox}
\end{figure}

\begin{figure}[!htb]
	\centering
	\begin{adjustbox}{minipage=\linewidth,scale=0.9}
		\begin{subfigure}[t]{0.48\textwidth}
			\centering
			\includegraphics[width=\linewidth]{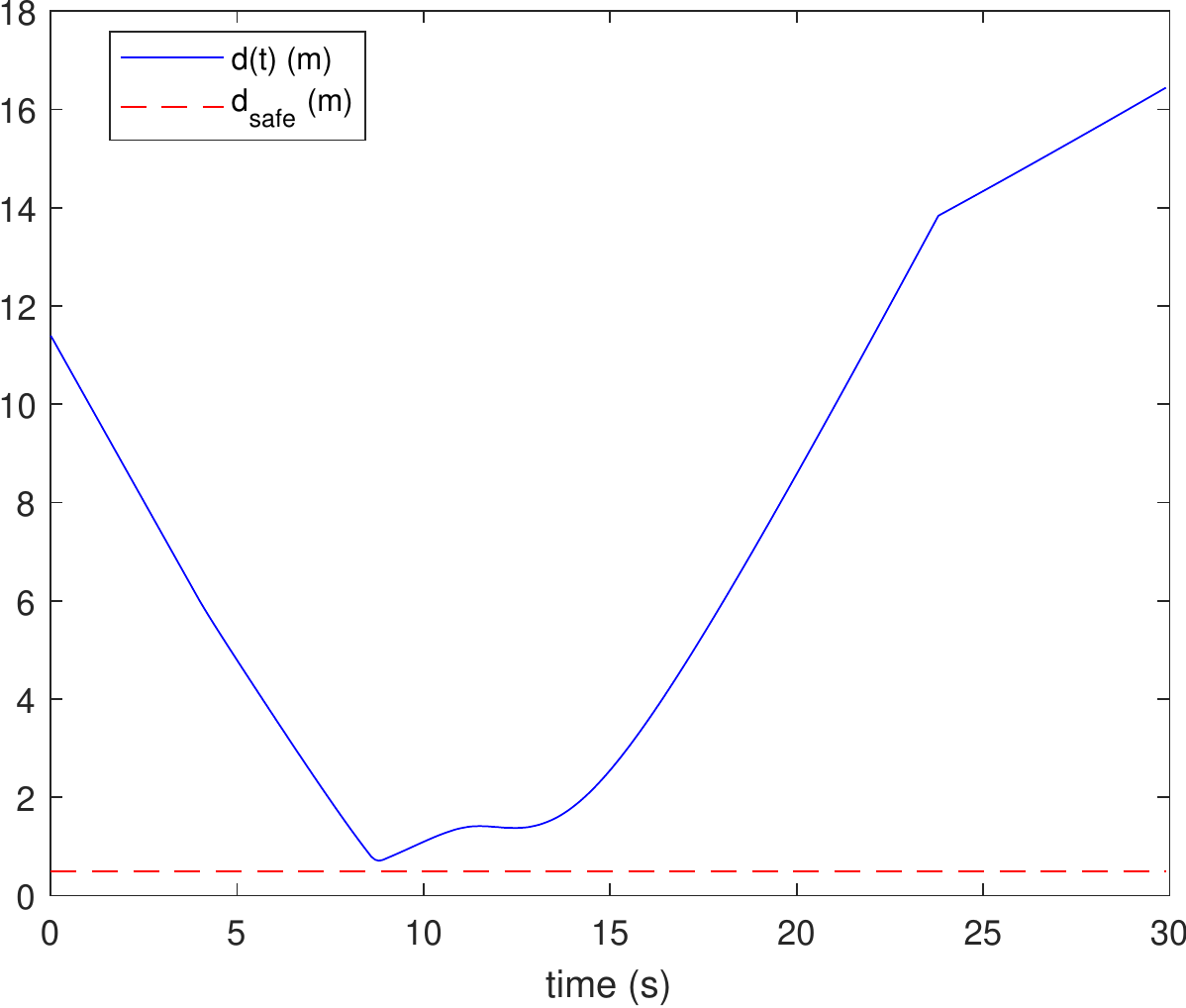} 
			\caption{Case 1}
		\end{subfigure}
		\hfill
		\begin{subfigure}[t]{0.48\textwidth}
			\centering
			\includegraphics[width=\linewidth]{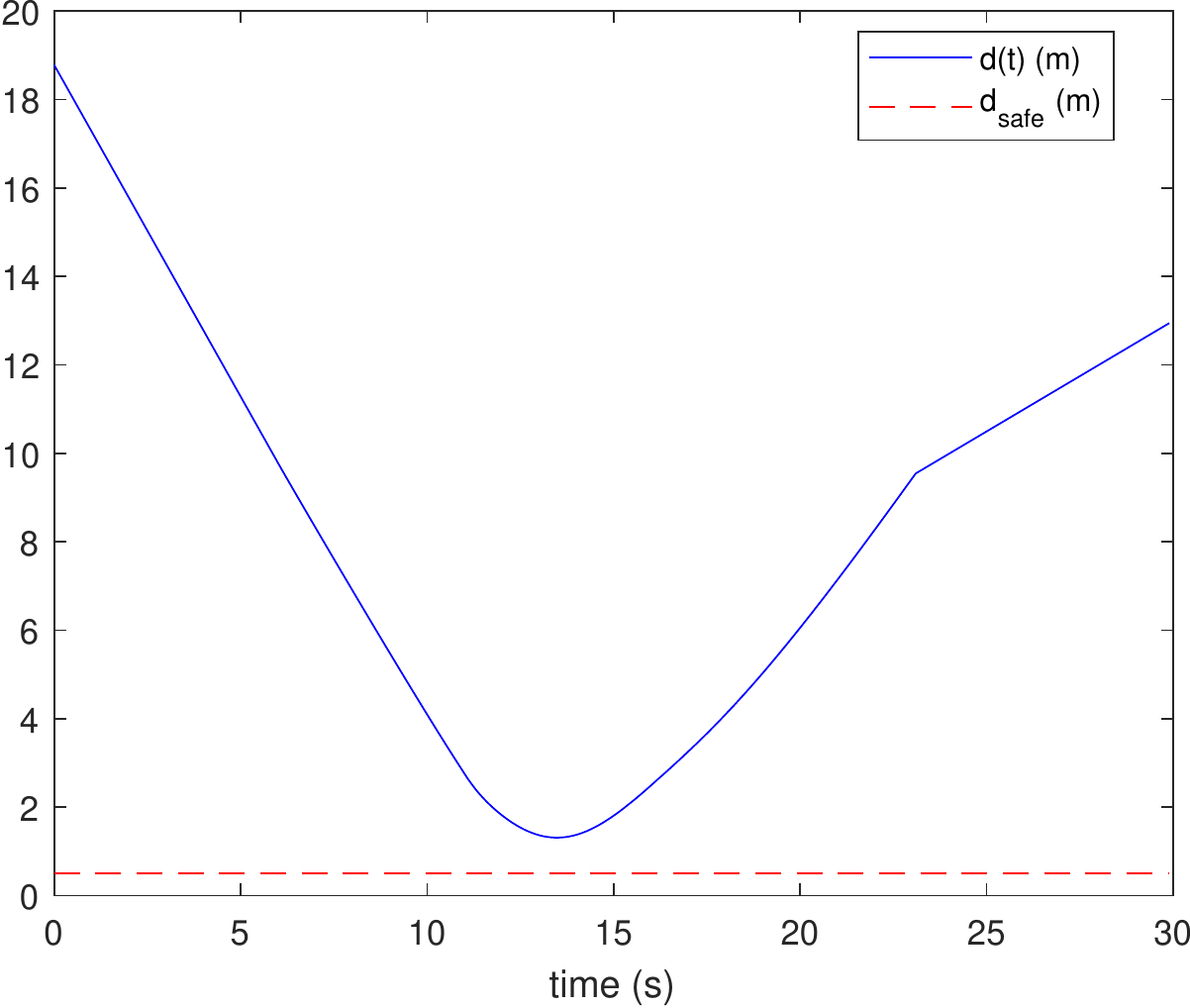} 
			\caption{Case 2}
		\end{subfigure}
		
		\caption{Simulation results: vehicle's relative distance to nearest obstacle ($d(t)$ versus time) for dynamic cases}\label{fig:ch6:simDynamicDt}
	\end{adjustbox}
\end{figure}

\subsection{Quadrotor Software-in-the-Loop Simulations}

The suggested implementation in \cref{sec:ch6:quadrotor_control} for quadrotor UAVs have also been evaluated in simulations to further investigate the computation performance of the overall strategy.
To that end, Software-in-the-Loop (SITL) simulations were performed where it's possible to integrate the production source code into a robotic simulator.
Thus, such implementation can be applied to physical UAVs without modifications other than just tuning some parameters.
We use the Gazebo simulator, and the overall software structure is implemented using the Robot Operating System (ROS) framework.
Furthermore, the presented control scheme is built on top of the open-source PX4 flight stack which is simply a collection of tools for low-level control, states estimation through extended Kalman filter (EKF) and interfacing with onboard sensors.
This is similar to our hardware setup to allow for quick deployment, and the simulations were performed on a computer with similar capabilities to the mini computers used with our UAVs (typically, an Intel NUC with good processing power).

We considered two simulation cases with dynamic environments.
In these cases, the quadrotor performs a takeoff to reach some position which is considered as the initial position $\bm{p}_{initial}$.
A goal location $\bm{p}_{goal}$ gets assigned next to the UAV with a signal to start the mission.
The vehicle directly computes and executes a trajectory with a smooth trapezoidal velocity profile as explained in Appendix~A. %
Along the motion, a collision checker algorithm running at 10Hz checks whether the active path is safe or not based on sensors measurements and some safety margin $d_{safe}$.
It simply computes the closest distance $d_*(t)=\min\limits_{\bm{p}_o(t) \in \mathcal{O}} \|\bm{p}_*(t) - \bm{p}_o(t)\|$ to the obstacles set $\mathcal{O}$ from some point on the active reference path $\bm{p}_* \in \Gamma$.
Once it is found that $d_*(t) < d_{safe}$, a segment of the currently active path is selected for deformation where the starting point can be at some location ahead of the vehicle's current position to allow for computation latency $\triangle t_{c}$ while maintaining the motion continuity.
The control point $\bm{p}_c$ is chosen as the path point which is the closest to obstacles (i.e. $\bm{p}_c = \bm{p}_*$); however, different methods could be used to determine a good control point.
The deformation process is applied in real-time as described in \cref{sec:ch6:traj_generation,sec:ch6:traj_generation}.

The key factors affecting the computational complexity of the overall strategy were found to be the the reference path resolution used for collision checking and the adopted algorithms to determine $\bm{p}_*$ and $\bm{p_L}$ (used in \eqref{equ:ch6:lookaheadError}).
Based on the implemented algorithms, $\triangle t_{c}$ was found to be relatively small, few milliseconds, compared to the control and collision checking update rates.

In these simulations, a different and simpler approach was used to determine $\bm{v}_{safe}$ rather than the one suggested in \cref{sec:ch6:deformation}.
Let $\bm{\vec{D}} \in \R^3$ be some unit vector perpendicular to the reference path tangent $\bm{T}$ at point $\bm{p}_{c}$.
Then, $\bm{v}_{safe}$ can be computed as follows:
\begin{equation}
	 \bm{v}_{safe} = \gamma \bm{\vec{D}}
\end{equation}
where $\|\bm{\vec{D}}\| = 1$, and $\gamma$ is the safety factor defined in \eqref{equ:ch6:vsafe}.
Similar to our observations from the previous simulation cases, better quality deformations in terms of the path length were achieved using smaller values of $\gamma$ at the expense of requiring more deformations at each computation cycle to increase the path safety.

\begin{figure}[!htb]
	\centering
	\includegraphics[clip,width=0.9\columnwidth]{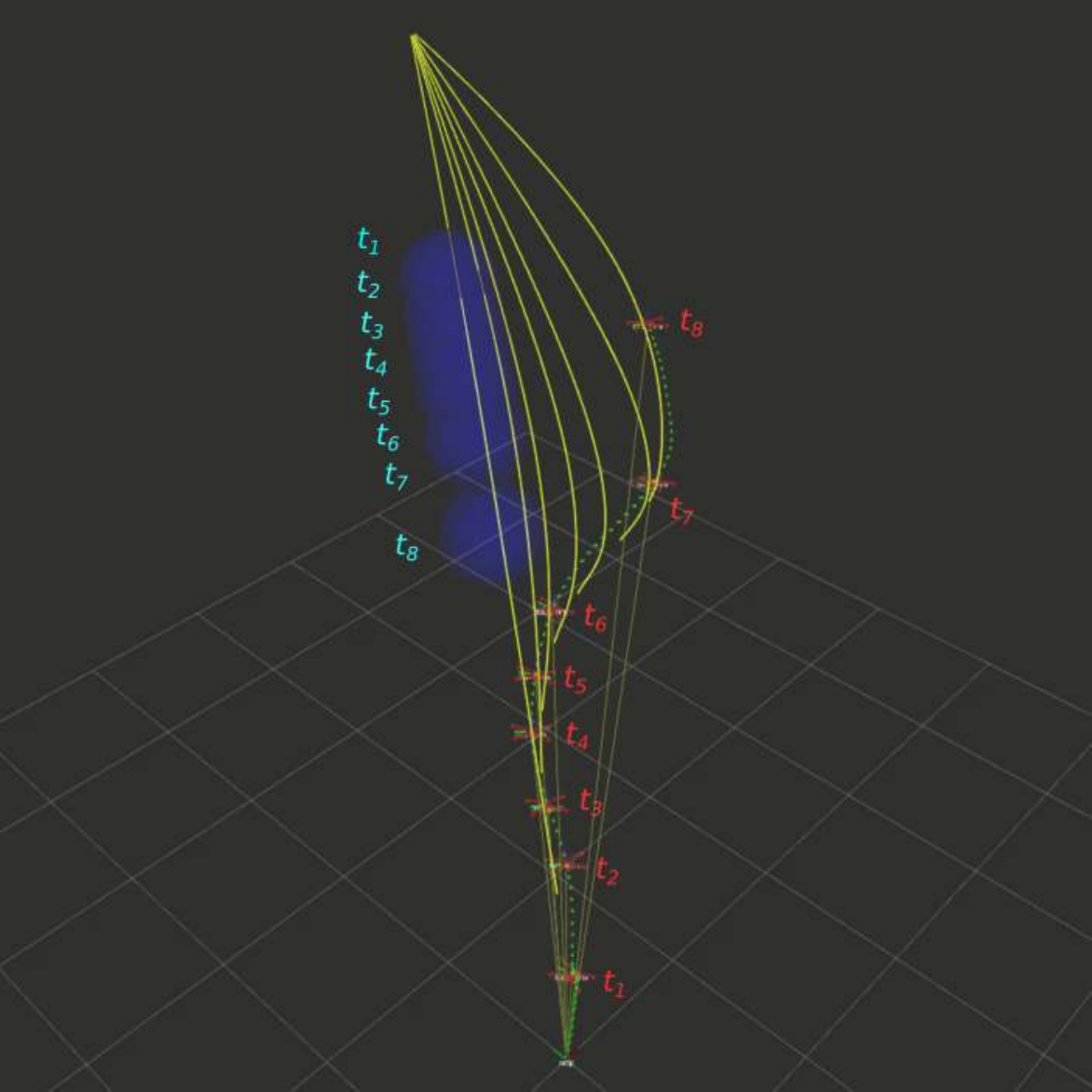}%
	\caption{Snapshots of the UAV position during movement for scenario 1}
	\label{fig:motion1}
\end{figure}

\begin{figure}[!htb]
	\centering
	\includegraphics[clip,width=0.75\columnwidth]{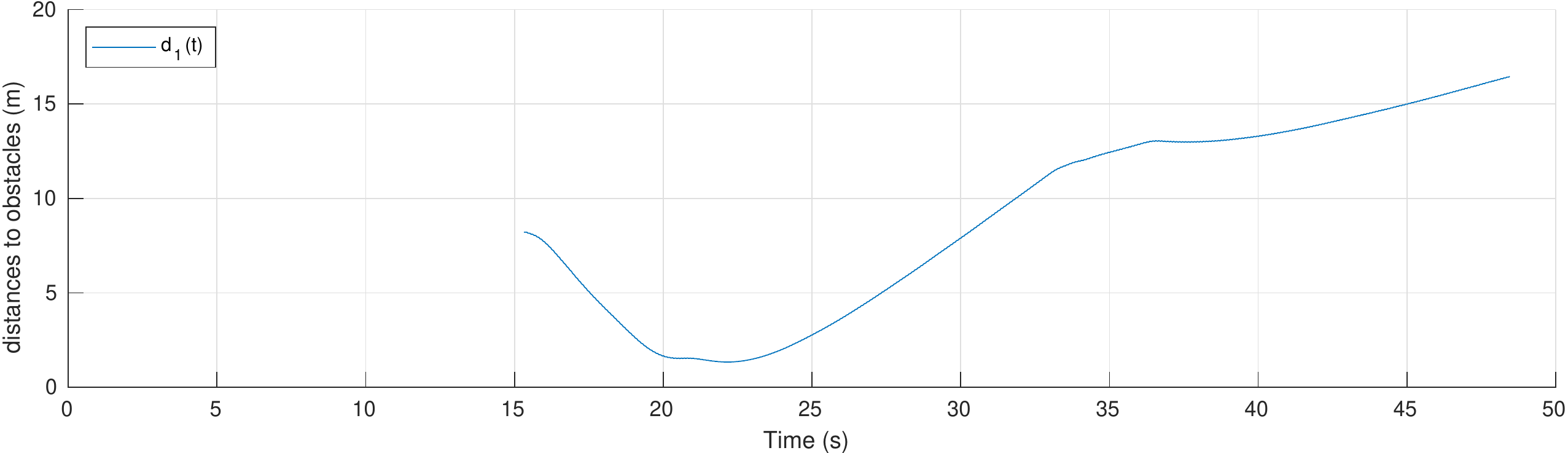} \\
	\includegraphics[clip,width=0.75\columnwidth]{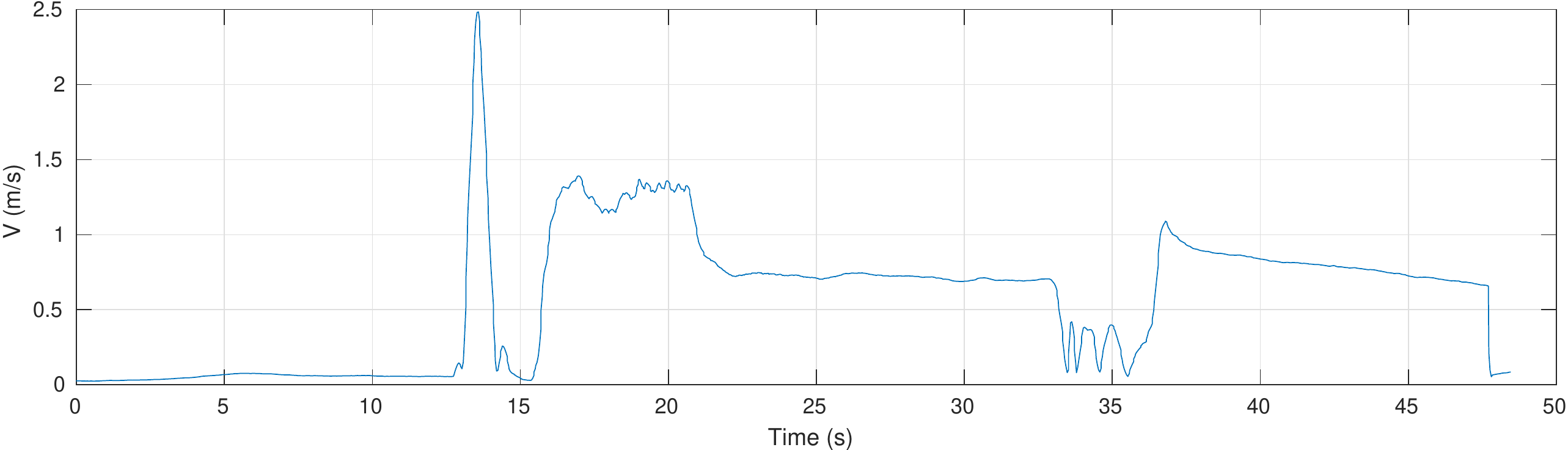}%
	\caption{Distance to obstacle and UAV velocity for scenario 1}
	\label{fig:results1}
\end{figure}

\begin{figure}[!htb]
	\centering
	\includegraphics[clip,width=0.75\columnwidth]{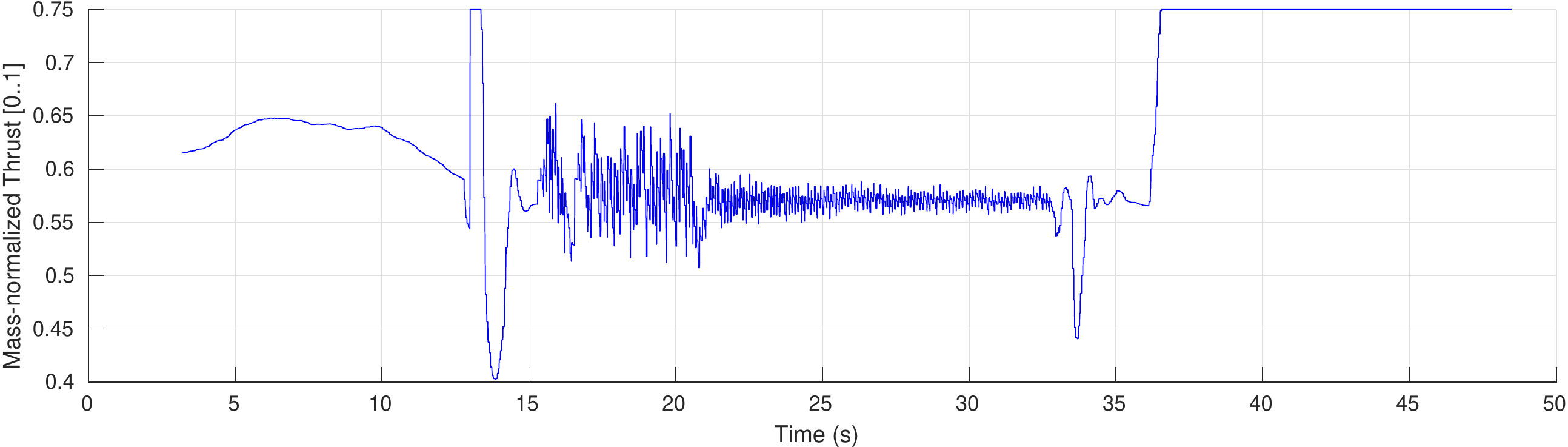} \\
	\includegraphics[clip,width=0.75\columnwidth]{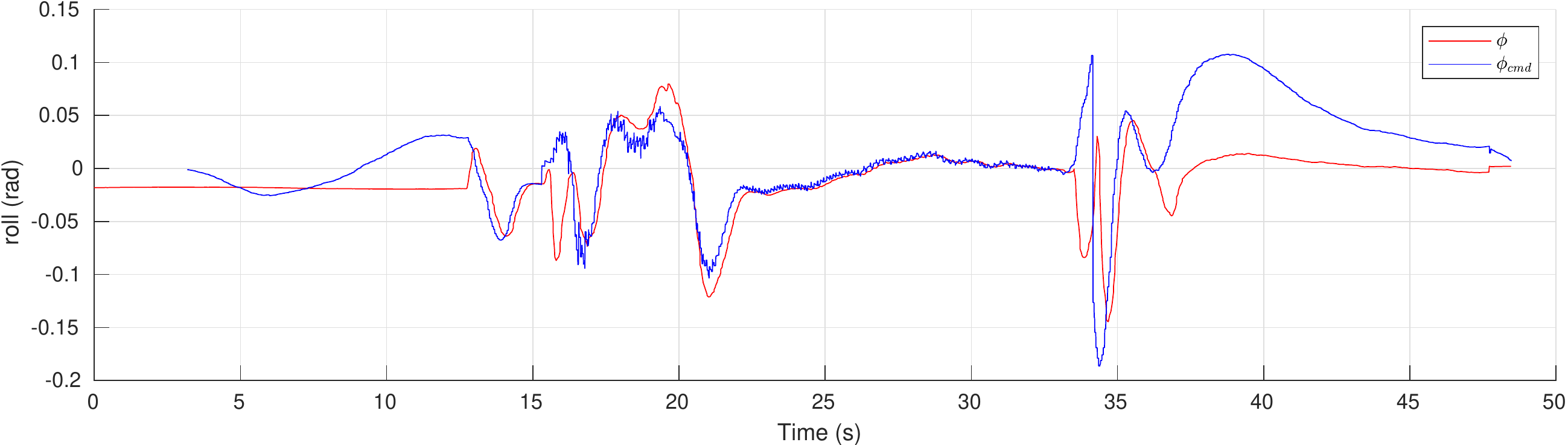} \\
	\includegraphics[clip,width=0.75\columnwidth]{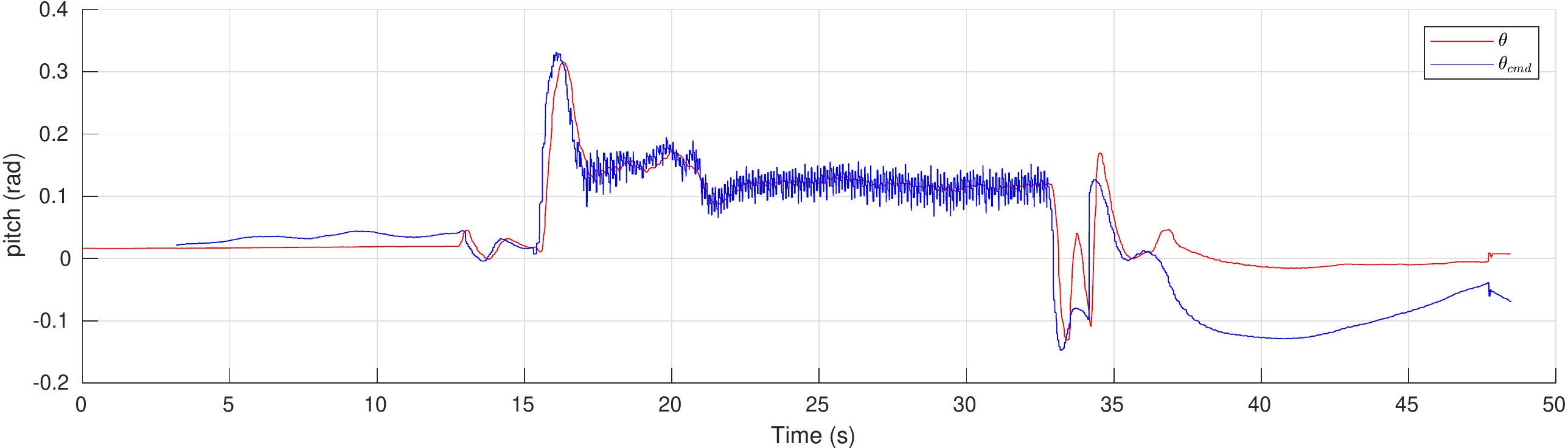} \\
	\includegraphics[clip,width=0.75\columnwidth]{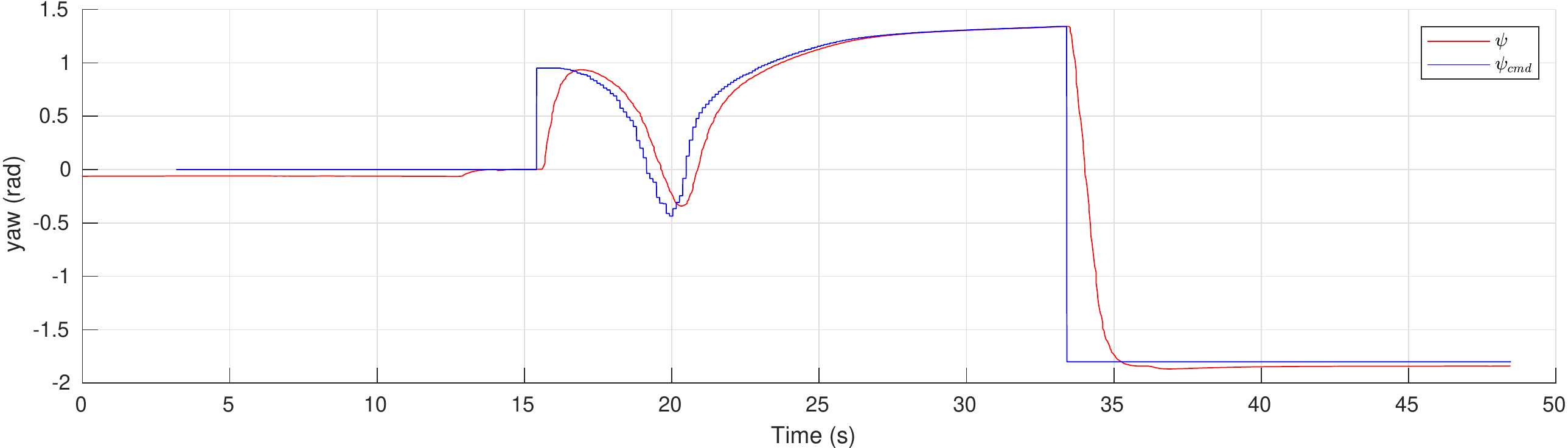}
	\caption{Input mass-normalized thrust and attitude commands vs time for scenario 1}
	\label{fig:results1b}
\end{figure}

\begin{figure}[!htb]
	\centering
	\includegraphics[clip,width=\columnwidth]{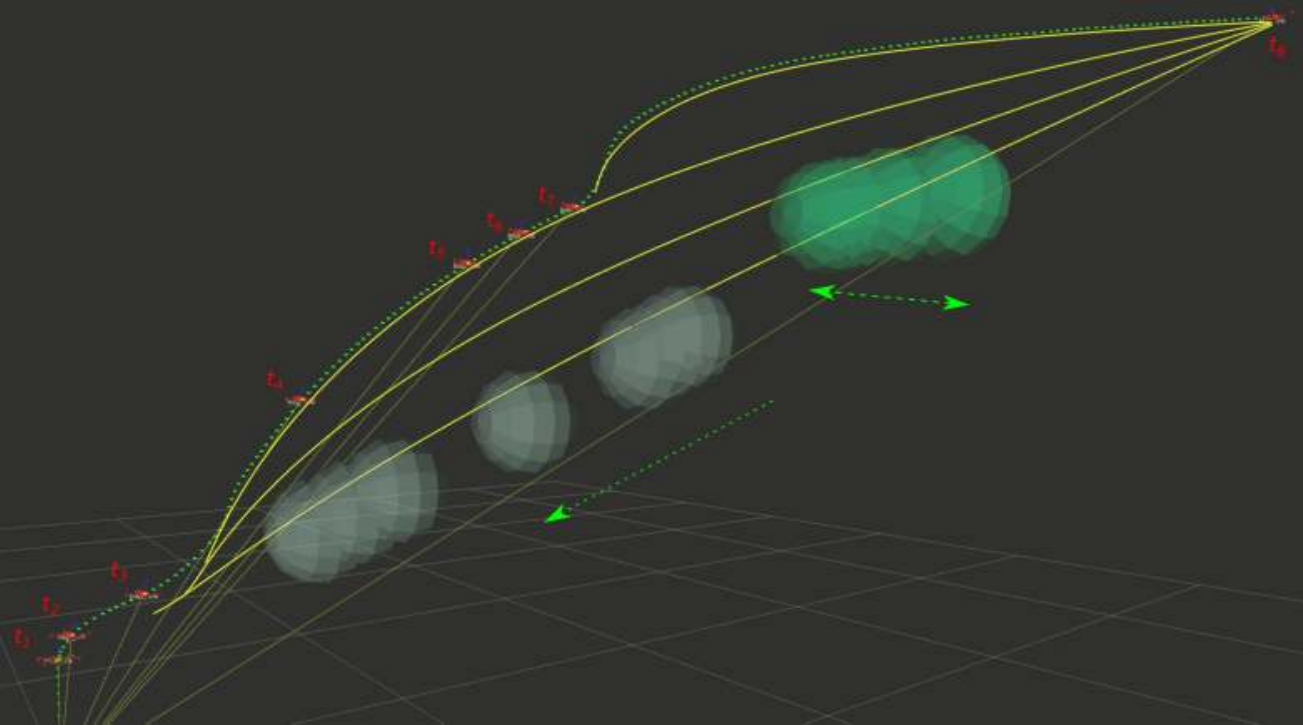}%
	\caption{Snapshots of the UAV position during movement for scenario 2}
	\label{fig:motion3}
\end{figure}

\begin{figure}[!htb]
	\centering
	\includegraphics[clip,width=0.75\columnwidth]{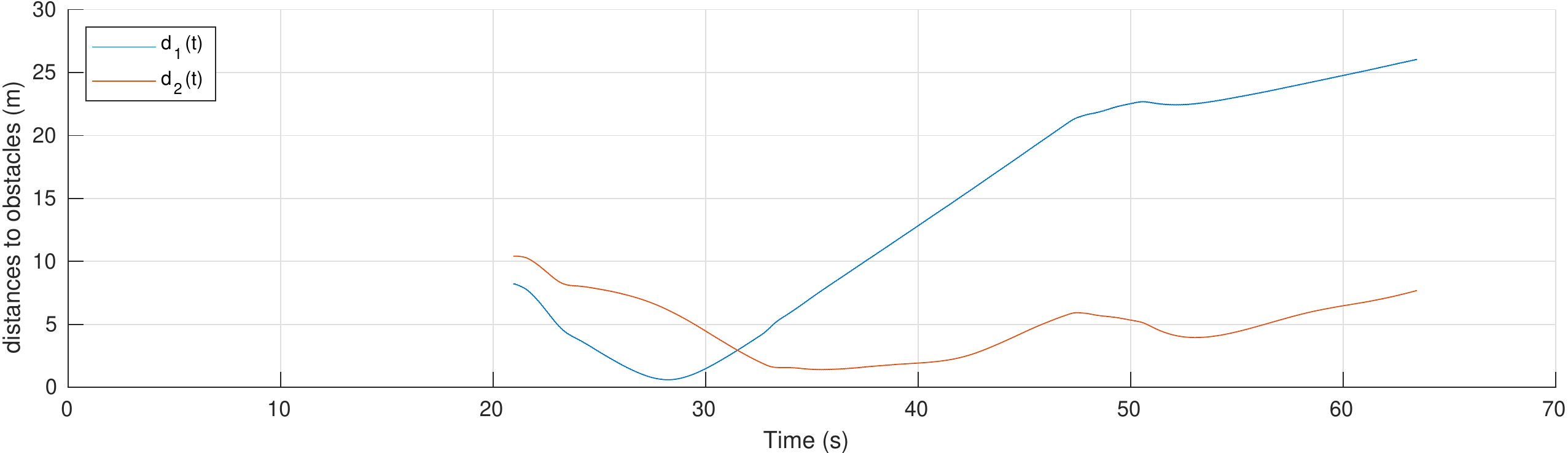} \\
	\includegraphics[clip,width=0.75\columnwidth]{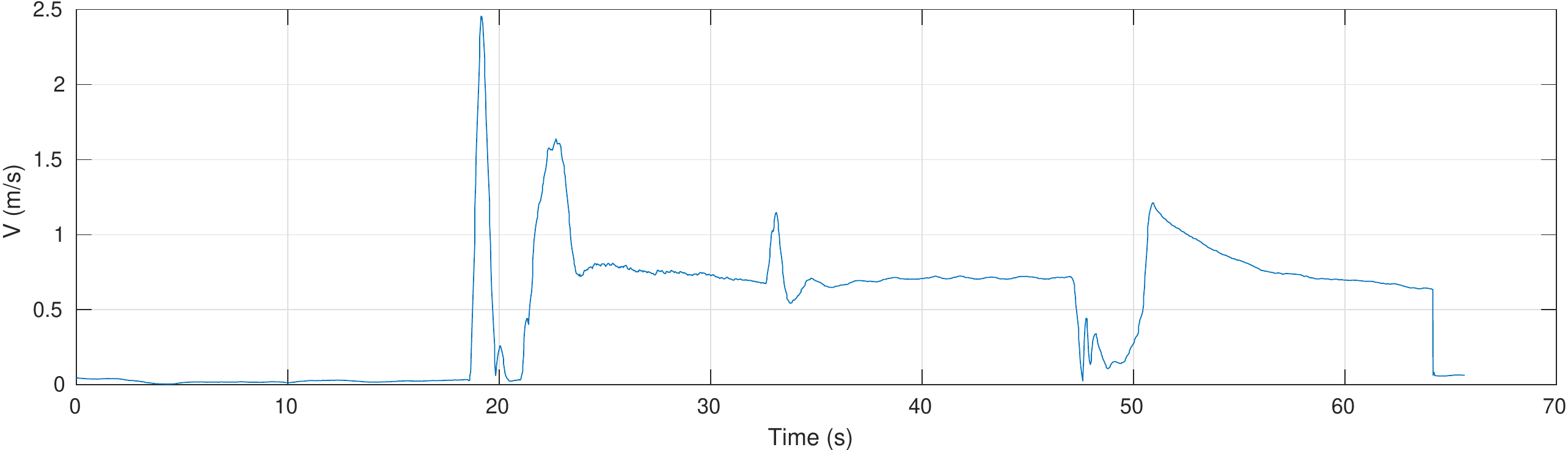}%
	\caption{Distances to obstacles and UAV velocity for scenario 2}
	\label{fig:results3}
\end{figure}

\begin{figure}[!htb]
	\centering
	\includegraphics[clip,width=0.75\columnwidth]{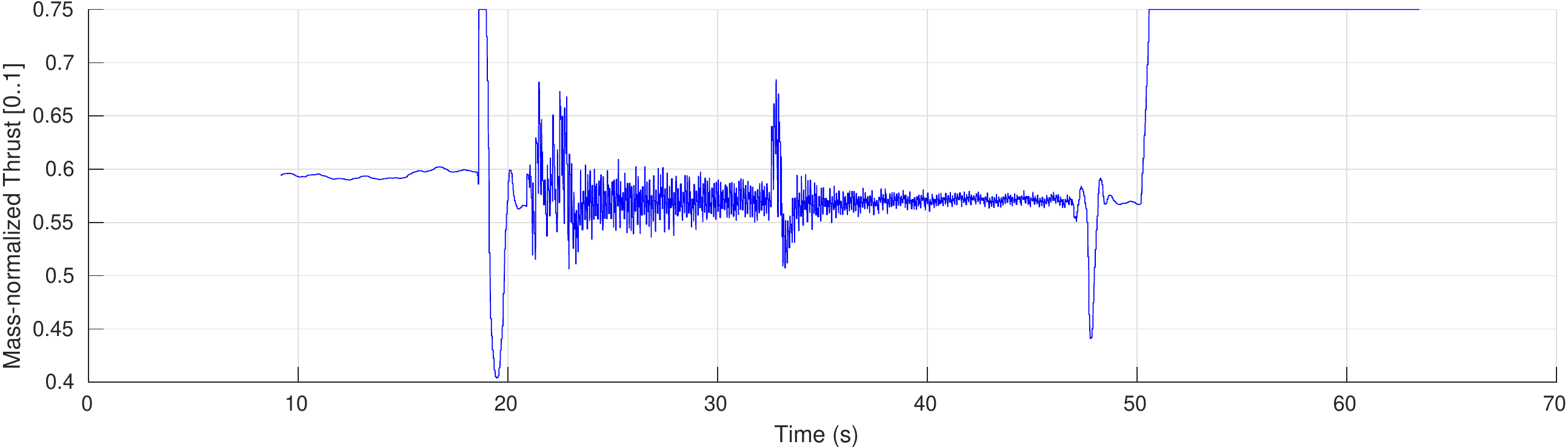} \\
	\includegraphics[clip,width=0.75\columnwidth]{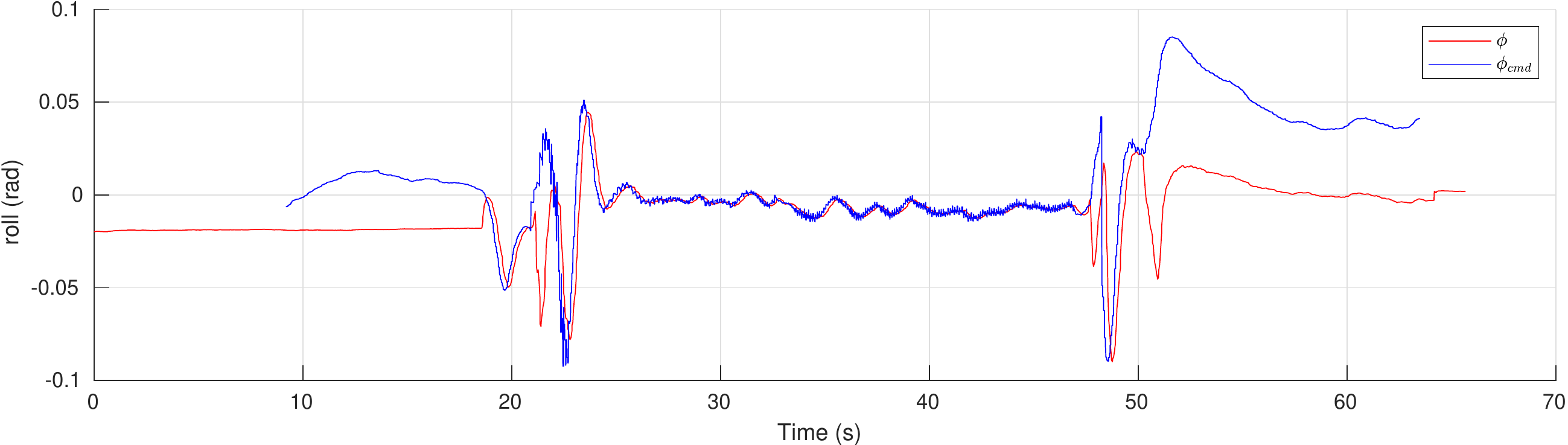} \\
	\includegraphics[clip,width=0.75\columnwidth]{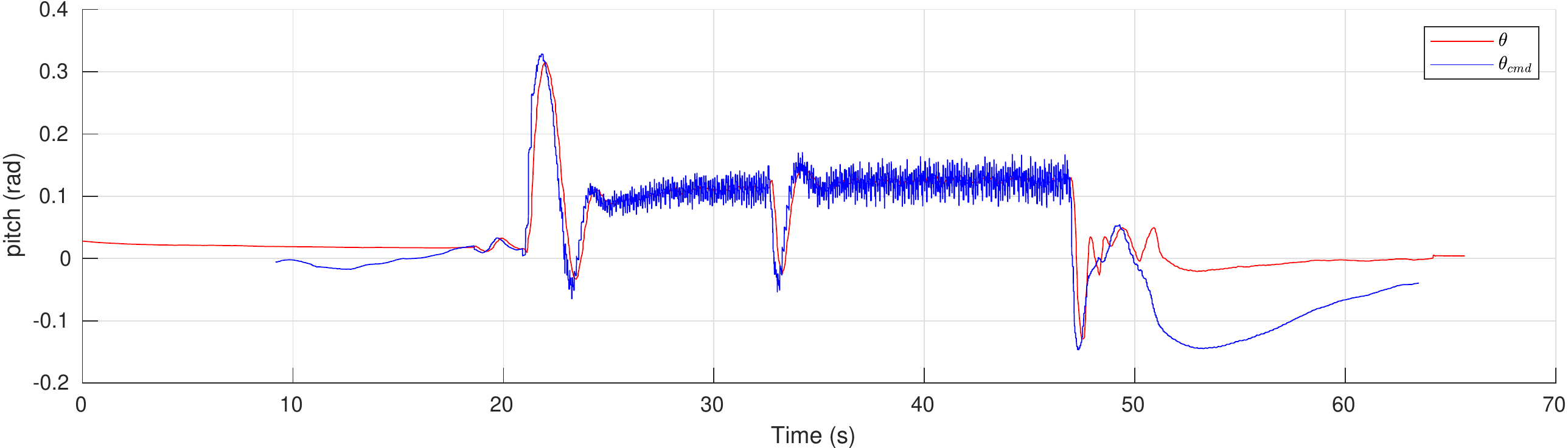} \\
	\includegraphics[clip,width=0.75\columnwidth]{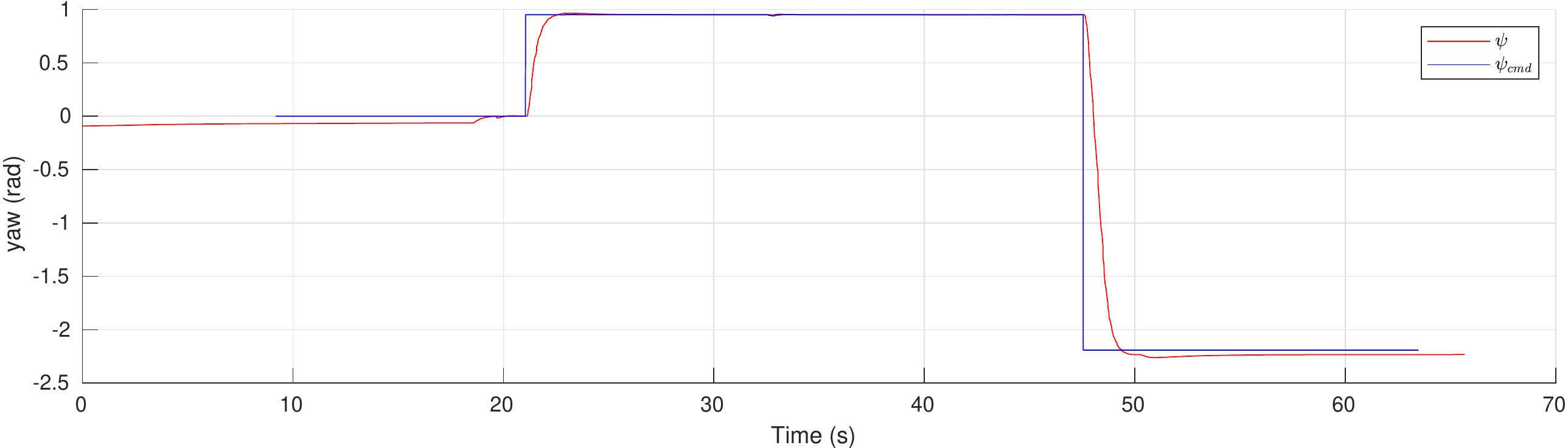}
	\caption{Input mass-normalized thrust and attitude commands vs time for scenario 2}
	\label{fig:results3b}
\end{figure}

\section{Conclusion}\label{sec:ch6:conclusion}

This chapter presented a 3D navigation strategy for collision avoidance in unknown and dynamic environments suitable for different UAV types and AUVs.
It adopts a sense and avoid paradigm with good computational cost to provide quick reflex-like reactions to obstacles.
Quintic Bezier splines are used to generate real-time smooth deformations around obstacles.
Implementation details with quadrotor UAVs were also provided.
Simulations were carried out in MATLAB and Gazebo to show the effectiveness of the suggested method.
Potential directions for future work include investigating different approaches to determine deformation direction based on onboard cameras and/or range sensors in addition to real implementation with a quadrotor in a dynamic environment.

\newcommand{\RotMatrix}{{\bf R}}%

\chapter[Autonomous Navigation in Tunnel-Like Environments]{A Method for Autonomous Collision-Free Navigation in Unknown Tunnel-Like Environments\label{cha:tunnel_navigation}}

Unmanned aerial vehicles (UAVs) have become essential tools for exploring, mapping and inspection of unknown tunnel-like environments which can be sometimes dangerous and/or unreachable by humans.  
The harsh conditions in such environments urge the need for more reliable safe navigation methods that can allow a fully autonomous operation.
Thus, a computationally-light navigation method is developed in this chapter for UAVs considering this special case of the navigation problem tackled in this report to autonomously guide the vehicle through unknown three-dimensional (3D) confined environments (research question 2 in \cref{ch1:problem}).
It can use depth information from onboard sensors to estimate points along the tunnel axis which can then be used to direct UAV motion without the need for accurate localization.
The development of this method is based on a general kinematic model which makes it applicable to different UAV types and autonomous underwater vehicles.
Considering the confined space structure of such environments in the control design can provide more tailored methods for this particular application to autonomously determine the direction of progressive advancement through the environment without colliding with its boundaries.
One can further combine this approach with one of the previously designed reactive obstacle avoidance methods in \cref{cha:methods_reactive3D,cha:reactive_impl,cha:deforming_approach} to handle both progressive advancement through the environment while avoiding collisions with its boundaries in addition to other obstructing unknown/dynamic obstacles.
Several Computer simulations were carried out to validate the proposed method considering different 3D tunnel-like environments and realistic sensing models. %
Furthermore, implementation details are provided for quadrotor-type UAVs in addition to a control design based on the differential-flatness property of quadrotor dynamics and sliding mode control similar to the previously developed controller.
Experiments were carried out to autonomously fly a quadrotor using the proposed method through tunnel-like structures where all computations needed for navigation were done onboard.
The work presented in this chapter is published in \cite{elmokadem2021method}.

\section{Introduction}

Recent developments in technologies related to unmanned aerial vehicles (UAVs) have made them very popular in many applications as agile mobile platforms with low operational costs.
It has become possible with UAVs to perform hard tasks in unreachable harsh environments that are risky to human lives.
One important problem in this area is the safe navigation of unmanned aerial vehicles through unknown tunnel-like environments which is the main focus of this study.
This problem arises in many industrial applications such as navigation of flying robots through 
underground mines and connected tunnels, navigating small aerial vehicles in cluttered indoor 
environments, 3D mapping of cave networks, interior inspection of pipeline networks, search \& rescue missions during disaster events in underground rail networks, etc.
For example, some variants of these applications that have gained a great interest by researchers recently are dam penstocks inspection and/or mapping  \cite{ozaslan2015inspection,ozaslan2016towards,ozaslan2017autonomous,ozaslan2018spatio}, chimney inspection \cite{quenzel2019autonomous}, hazardous deep tunnels inspection \cite{tan2018smart,tan2019design}, mapping and navigation in underground mines/tunnels \cite{mansouri2018towards,mascarich2018multi,kanellakis2019open,li2020visual,kominiak2020mav,sharif2020mav, li2020autonomous,mansouri2020deploying}, search \& rescue in underground mines \cite{dang2020autonomous,petrlik2020robust}, inspection of ventilation systems \cite{petrlik2020robust}, inspection of narrow sewer tunnels \cite{chataigner2020arsi} and inspection tasks in the oil industry \cite{shukla2016application}.
In all these applications, a UAV should navigate through a tunnel-like unknown environment while avoiding collisions with the tunnel walls.
A more favorable behavior is to have a fully autonomous operation with least human interaction.
This problem is highly challenging due to several factors that may vary from one environment to another such as poor lighting conditions, narrow flying space, absence of GPS signals and featureless structures. 
Some other challenging factors are vehicle-specific such as sensing capability, payload capacity, onboard computing power and maximum flight time.
All these factors have a great effect on the overall system design and navigation algorithm development.

The problem under consideration is also of great importance to many marine applications with autonomous underwater vehicles (AUVs) where it is required to navigate through underwater tunnel-like environments.
This include applications in underwater geology and archeology, inspecting different kinds of underwater structures, military operations, inspecting flooded spring tunnels, bypass tunnels for dams, storm runoff networks and freshwater delivery tunnels, etc (for example, see \cite{mallios2016toward,fairfield2007real,vidal2018online,am20013d,martins2018uxnexmin,vidal2020multisensor,nocerino20193d,jacobi2015autonomous,weidner2017underwater,white2010malta,gary20083d} and references therein).

In general, existing solutions to the navigation problem in unknown environments may be classified into \textit{planning-based} or \textit{reactive} methods.
\textit{planning-based} approaches require a map representation of the environment and localization information to find safe paths locally.
Local path planning normally adopts an optimization-based or sampling-based search approach.
As the search space size increases, the computational cost of such algorithms becomes more expensive \cite{li2021efficient}.
These approaches mostly adopt simultaneous localization and mapping (SLAM) techniques to allow for autonomous operation in unknown environments.
On contrary, \textit{reactive} approaches may directly generate motion decisions based on light processing of current sensors observations to produce reflex-like reactions \cite{hoy2015algorithms}.
These methods can provide a better computational cost compared with planning-based methods without the need for accurate localization.

The available methods addressing the navigation problem of interest suggest different approaches in terms of the overall system design, the level of autonomy and the algorithm adopted to traverse the tunnel.
The choices made for UAV system design are mostly made specifically to serve a specific application. 
The use of redundant sensors may be found common among different systems to attain a fully autonomous operation in some harsh environments by combining range and vision-based sensors.
Depending only on one kind of sensors may cause the system to fail at some situations.
For example, range sensors can suffer from wet structures causing them to fail sometimes, and vision-based solutions may be useless against textureless environments \cite{ozaslan2016towards}.
An evaluation of localization solutions in underground mines based solely on cameras can be found in \cite{kanellakis2016evaluation}.
Therefore, it is common in the available solutions to use multi-modal sensors which can improve localization and/or reactive responses to cope with the harsh conditions in tunnel environments.
The following subsection summarizes some of the recent solutions.

\subsection{Related Work}

Many of the available navigation methods tackling the same problem fall under the planning-based category where the main focus of the development is shifted towards the localization system design.
For example, the approach presented in \cite{ozaslan2015inspection} suggested a combination of Unscented Kalman Filter (UKF) and a particle filter to process IMU and range measurements for UAV localization in dam penstocks where a map was available a priori.
Then, the navigation was achieved in a semi-autonomous fashion to perform an inspection task where a remote operator was sending goal position commands to guide the UAV through the tunnel environment.
Extensions were then proposed in \cite{ozaslan2016towards} and \cite{ozaslan2017autonomous} in an effort towards a more autonomous solution for penstocks inspection with UAVs.
In \cite{ozaslan2016towards}, UKF was used to provide 6-DOF estimation of the UAV pose by fusing data from IMU, two range sensors and four cameras against a 3D occupancy grid map known in advance.
However, a remote operator was still needed to provide waypoints to guide the UAV.
A SLAM-based approach was then suggested in \cite{ozaslan2017autonomous} combining range and vision-based estimators.
An algorithm was proposed to perform local mapping where fitting a cylindrical model was applied to point clouds obtained from the heterogeneous sensors.
Then, the tunnel axis is estimated from the local map, and the UAV position is determined along the tunnel axis which was then used to guide the UAV.

Another SLAM-based method was presented in \cite{mascarich2018multi} to address the problem of autonomous exploration and mapping in underground tunnels using a UAV equipped with two IMUs, four cameras and three depth sensors.
The open-source Robust Visual Inertial Odometry (ROVIO) framework \cite{bloesch2015robust} was adopted in that work to perform SLAM.
A local planner based on rapidly-exploring random tree (RRT) algorithm was used in a receding horizon manner to generate motion commands towards a direction that maximizes some exploration gain.
The same SLAM framework and local planning algorithm was also used in \cite{papachristos2019autonomous}.
Similarly, a local planner based on the Rapidly-exploring Random Graph (RRG) algorithm is used in \cite{dang2019field} and \cite{dang2020autonomous} to guide the UAV maximizing volumetric exploration gain in underground tunnels with multiple branching locations. 
In these works, data from range, thermal, vision and inertial sensors are fused as a part of their SLAM implementation (sometimes using only a subset of these sensors).
In \cite{petrlik2020robust}, a different approach was presented for operations in extremely narrow tunnels to find safe paths using a modified A* algorithm in 2D occupancy maps generated by onboard SLAM.
A low-level model predictive controller was used to track generated local trajectories based on the planned paths.

Inspection of deep tunnels (i.e. vertical) was also addressed in \cite{tan2018smart,tan2019design} where the authors suggested the use of a UAV with a rotating camera for minimal field-of-view (FOV) obstructions when collecting images for inspection. 
Localization was performed using measurements from an array of laser range sensors to estimate the UAV position and heading in the tunnel with a prior knowledge about its geometry.
They also proposed a method to estimate the tunnel axis using measurements from the sensors array.
Their navigation method was based on maintaining the localized position of the UAV at the center of the tunnel. 
Additionally, an optical-flow sensor was used with a time-of-flight range sensor to estimate the distance traveled along the tunnel axis.

A rather different approach based on deep learning was presented in \cite{mansouri2018towards,mansouri2020deploying,sharif2020mav} for navigation in underground mines.
These works suggested a low-cost UAV system design which relies only on a single camera with LED light bar.
Convolution Neural Network (CNN) was used to classify images into three categories (left, center and right) which was then used to correct the UAV heading to avoid collisions with tunnel walls without relying on localization information.
The UAV motion was controlled in the horizontal plane with a fixed altitude provided by a remote operator.
The performance of such methods relies on how good the training dataset is which can be challenging when deployed in new environments.

On the other hand, some reactive methods have been proposed to address the general problem of navigation in tunnel-like environments such as \cite{savkin2017method,matveev2018proofs,matveev2020method} which rely only on local sensory depth information of the surrounding tunnel walls.
In these works, a 3D nonholonomic kinematic model is considered for the motion, and rigorous mathematical proofs of the methods' performance are provided.
In \cite{savkin2017method}, a control law was developed to maintain a movement in a direction parallel to the tunnel axis while keeping a safe distance from tunnel walls. 
Alternatively, \cite{matveev2018proofs,matveev2020method} presented a method based on estimating a direction parallel to a nearby sensed patch of the tunnel surface in the direction of progressive motion through the tunnel.

\subsection{Aims \& Contributions}

The main contributions of this chapter can be highlighted as follows:
\begin{itemize}
	\item A novel collision-free autonomous navigation method is proposed in this work for UAVs flying in unknown 3D tunnel-like environments.
	\item Rigorous mathematical proof is provided, in contrast to many of the existing methods, to show that our method can safely guide the UAV to progressively advance through the environment.
	\item Detailed implementation approach is suggested for quadrotor UAVs considering the system dynamics and suggesting a low-level sliding mode controller design.
	\item Perception pipelines and algorithms with different computational cost based on the suggested method are proposed for simple and robust implementations with narrow field-of-view sensors.
	\item Experimental results with a quadrotor are given to further validate the overall approach and discuss some of the practical aspects to consider.
\end{itemize}
The novelty of the approach is that it can handle movements in tunnel-like environments that changes shape and direction in 3D in a reactive manner where it is not suitable to use any of the existing 2D reactive approaches as they constraint UAV movement to some fixed altitude.
On contrary to available 3D planning-based approaches, our method can provide a computationally-light solution for the autonomous navigation problem which can be suitable for vehicles with limited resources.
Motion decisions are mainly based on available measurements from onboard sensors to guide the UAV with no need for accurate localization.
The suggested method can also benefit from available local maps of the surroundings if one exists.

The general idea adopted here is to move the UAV towards estimated three-dimensional (3D) points on the tunnel curvy axis in the direction of progressive movement through the tunnel. 
These points are interpreted from available depth measurements of the tunnel walls which can be for example in the form of 3D point clouds represented in a sensor-fixed coordinate frame.
Note that we do not consider environments filled with obstacles.
However, it is possible to extend our approach to consider those environments  by combining it with a reactive obstacle avoidance law using a behavior-based control approach; for example, see \cite{hoy2015algorithms,matveev2011method,wang2018strategy,elmokadem2019reactive} and references therein.
The use of a general kinematic model for the development makes our approach applicable to vehicles moving in 3D such as UAVs of different types (multi-rotors and fixed-wing) and autonomous underwater vehicles.
Slight modifications could be done to take into consideration nonholonomic constraints related to some vehicles such as fixed-wing UAVs.
Moreover, implementation details for quadrotor-type UAVs are also presented in this work considering the system dynamics and suggesting a low-level sliding mode controller design.
Computer simulations and practical experiments were carried out to evaluate the performance of our approach in several environments.
A realistic sensing model was used in all simulations in addition to considering noisy measurements to investigate the robustness of the suggested method, and different sensors configurations and perception algorithms were used in the real experiments.

\subsection{Chapter Outline}

This chapter is structured as follows.
The UAV navigation problem in tunnel-like environments is defined in \cref{sec:problem} considering a general kinematic model.
The proposed navigation algorithm is then presented in \cref{sec:nav}.
Our navigation algorithm is first validated through several simulation scenarios considering different environments, the details and results of these simulations are given in \cref{sec:sim}.
After that, implementation details with quadrotor UAVs are presented in \cref{sec:UAVimpl}. 
Proof-of-concept experiments were carried out to validate the performance of our navigation method.
The used quadrotor UAV system and the experiment setup are described in \cref{sec:exp} along with the results.
Finally, this work is concluded in \cref{sec:conc}.

\section{Kinematic Model and Navigation Problem}\label{sec:problem}
We consider an autonomous UAV whose motion is
described by the kinematic model:
let $c(t):=[x(t),y(t),z(t)]$
be the three-dimensional vector of the UAV's Cartesian coordinates defined in a world (inertial) reference frame.
Then, the motion of the UAV is described by the equation:
\begin{equation}
	\label{1}
	\dot{c}(t) = V(t).
\end{equation}
Here, $V(t)\in \R^3$ is the linear velocity vector, $\|V(t)\|=v$ for all $t$, where $v>0$ is some given constant, and  $\|\cdot\|$ denotes the standard Euclidean vector norm.
The vector variable
$V(t)$ is the  control input,  $v$ is the speed or linear velocity of the UAV, hence, the UAV is moving with a constant speed.
We assume that the control input $V(t)$ is updated at discrete times $0,\delta,2\delta,3\delta, \ldots$:
\begin{equation}
	\label{2}
	V(t):=V_k~~~\forall t\in [k\delta,(k+1)\delta),~~~\forall k=0,1,2,\ldots
\end{equation}
where $\delta>0$ is the sampling period.
The kinematics of many unmanned aerial and underwater vehicles can be described by the  model 
(\ref{1}) or its slight modifications. %

We consider a quite general three-dimensional problem of autonomous UAV navigation in unknown tunnel-like environments with
collision  avoidance. 

\begin{definition}
	\label{DT}
	Let $I$ be a straight line in $\R^3$, and $P$ be a closed, bounded, connected and linearly connected planar set. Then, the three dimensional set ${\cal T}:=I\times P$ is called a perfect
	cylindrical tunnel, and the straight line $I$ is called the axis of the perfect cylindrical tunnel ${\cal T}$ (see \cref{fig:ch6:tunnIllustration1}).  Furthermore, the set ${\cal W}$ of all the boundary points of ${\cal T}$
	is called the wall of the  perfect tunnel ${\cal T}$. 
	Furthermore, let $C$ be a circle in $\R^3$, and $P$ be a closed, bounded, connected and linearly connected planar set. Then, the three dimensional set ${\cal T}:=C\times P$ is called a perfect torus-shaped tunnel, and the circle $C$ is called the axis of the perfect torus-shaped tunnel ${\cal T}$ (see \cref{fig:ch6:tunnIllustration2}).
\end{definition}

Now we can introduce the following definition generalizing Definition \ref{DT}.
\begin{definition}
	\label{DT1}
	Let $C$ be a smooth non-self-intersecting infinite  (or closed) curve in $\R^3$. Assume that for any point $q\in C$ there exists a closed, bounded, connected and linearly connected planar set $P(q)$ intersecting $C$ at the only point $q$ and such that the plane of $P(q)$ is orthogonal to $C$ at the point $q$. Also, we assume that $P(q_1)$ and $P(q_2)$ do not overlap for any $q_1\neq q_2$. Then, the three dimensional set ${\cal T}:=\cup_{q\in C} P(q)$ is called a deformed cylindrical (or torus-shaped) tunnel, and the curve $C$ is called the curvy axis of the deformed tunnel ${\cal T}$, see \cref{fig:ch6:tunnIllustration3,fig:ch6:tunnIllustration4}.
	Furthermore, the set ${\cal W}$ of all the boundary points of ${\cal T}$ is called the wall of the  deformed cylindrical (torus-shaped) tunnel ${\cal T}$.
\end{definition}

It is obvious that perfect tunnels are  special cases of  deformed tunnels where the axis is either a straight line or a circle and all sets $P(q)$ are identical.

{\bf Notations:} We introduce some curvilinear coordinate along the curvy axes $C$ so that the difference of the coordinates of any two points of $C$ is the length of the segment of $C$ between them.
In the case of a deformed cylindrical tunnel, the curvilinear coordinate takes values in $(-\infty, +\infty)$, and in the case of deformed torus-like tunnel, the curvilinear coordinate takes values in $[0,L)$ where $L>0$ is the length of the closed axis curve $C$.
By Definition \ref{DT1}, for any point $a$ in the deformed tunnel $\mathcal{T}$,  there exists a unique
$q(a)\in C$ such that $a\in P(q(a))$. Let $\tilde{q}(a)$ denote the curvilinear coordinate of $q(a)$. Also, let $r(a)$ denote the distance between the points $a$ and $q(a)$. Moreover, $w(a)$ will denote
the tangent vector to the curve $C$ at the point $q(a)$, see \cref{fig:ch6:tunnIllustration}. Furthermore, let $a$ be some point in the deformed tunnel,  $F$ be some vector, and
$D_2>D_1\geq 0$ be given numbers. We introduce the points $O_1(a,F)$ and $O_2(a,F)$ ahead of the point $a$ at the distances $D_1$ and $D_2$, respectively, in the direction of the vector $F$.
Let $\mathcal{P}_1(a,F)$ and $\mathcal{P}_2(a,F)$ be the planes that are
orthogonal to $F$ and contain the points $O_1(a,F)$ and $O_2(a,F)$, correspondingly; see \cref{fig:ch6:illustration} for $a=c(k\delta)$.
Then, let $G_1(a,F)\in \mathcal{P}_1(a,F)$ and $G_2(a,F)\in \mathcal{P}_2(a,F)$ be the gravity centers of the sets of the tunnel wall points belonging to 
the planes $\mathcal{P}_1(a,F)$ and $\mathcal{P}_2(a,F)$, respectively. Furthermore, we introduce the vector $A(a,F)$ departing from $G_1(a,F)$ to $G_2(a,F)$. Moreover, let $h(a,F)$ denote the distance from the point $a$ to the straight line connecting $G_1(a,F)$ and $G_2(a,F)$.

{\bf Available Measurements:} Let $D_2>D_1\geq 0 $ 
be  given.   
We assume that for any time $t\geq 0$, the UAV can measure the coordinates
of all the points of the tunnel wall lying in the planes $\mathcal{P}_1(c(t),V(t))$ and $\mathcal{P}_2(c(t),V(t))$; \cref{fig:ch6:illustration}. Hence, the UAV can calculate
the vector $A(c(t),V(t))$ and the number $h(c(t),V(t))$.

\begin{definition}
	\label{D}
	Let $d_{safe}>0$ be a given constant, and 
	let $d(t)$ denote the distance between the robots' coordinates $c(t)$ and the wall of the deformed cylindrical or torus-shaped tunnel $\mathcal{T}$.
	A UAV navigation law is said to be  safely navigating through the deformed tunnel $\mathcal{T}$ if 
	\begin{eqnarray}
		\label{dist}
		d(t)>d_{safe}~~~\forall t\geq 0,
	\end{eqnarray}
	\begin{eqnarray}
		\label{tend}
		\tilde{q}(c(t))\rightarrow\infty~~~as~~~t\rightarrow\infty.
	\end{eqnarray}
	Moreover, a UAV navigation law is said to be  safely navigating through the deformed torus-shaped tunnel ${\cal T}$ if (\ref{dist}) holds and for any $Q\in [0,L)$ there exists a sequence $t_k\rightarrow+\infty$ such that
	\begin{eqnarray}
		\label{tend1}
		\tilde{q}(c(t_k))=Q.
	\end{eqnarray}
\end{definition}

The requirement (\ref{tend}) means that in the case of deformed cylindrical tunnel, the UAV will go to infinity inside the tunnel, and the requirement (\ref{tend1}) means that in the case of deformed torus-shaped tunnel, the UAV will do infinitely many loops  inside the tunnel.

{\bf Problem Statement:} Our objective is  to design a navigation law for quadrotor UAVs to safely navigate through the deformed cylindrical or torus-shaped tunnel $\mathcal{T}$.

\begin{figure}[!htb]
	\centering
	\begin{adjustbox}{minipage=\linewidth,scale=1.0}
		\begin{subfigure}[t]{0.4\textwidth}
			\centering
			\includegraphics[width=\linewidth]{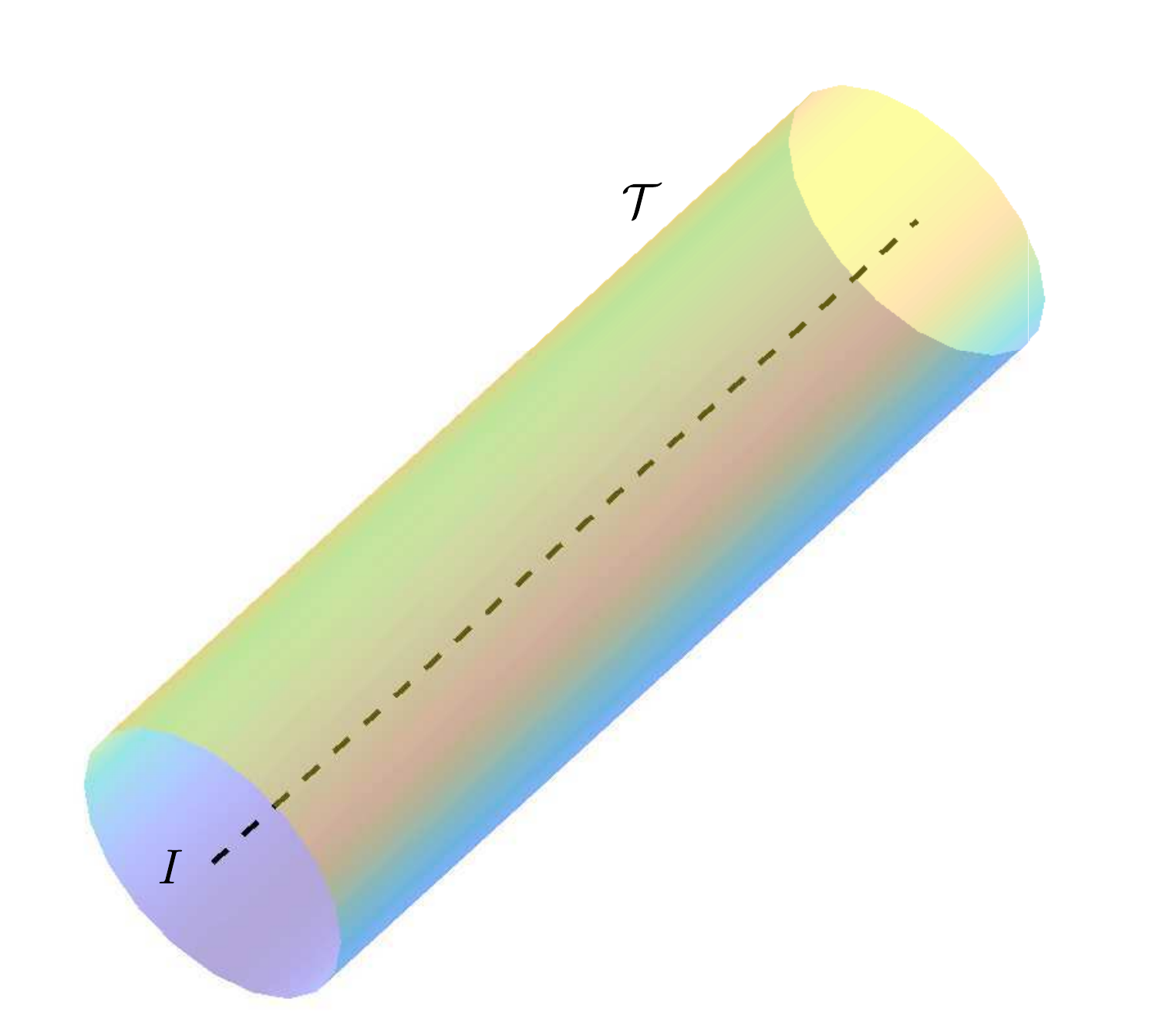} 
			\caption{Perfect cylindrical Tunnel}
			\label{fig:ch6:tunnIllustration1}
		\end{subfigure}
		\hfill
		\begin{subfigure}[t]{0.4\textwidth}
			\centering
			\includegraphics[trim={10cm 2cm 12cm 6cm},clip,width=\linewidth]{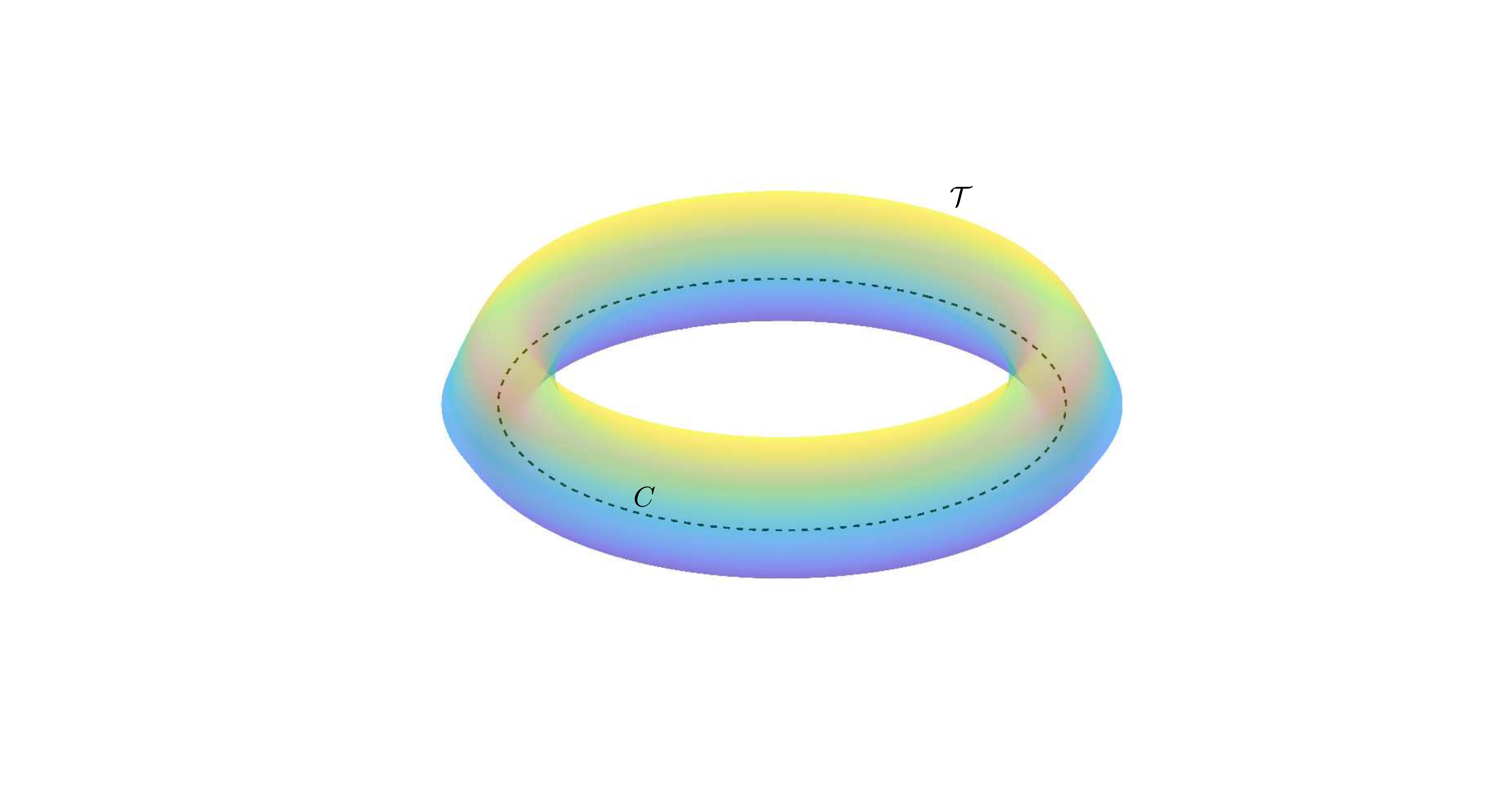} 
			\caption{Perfect torus-shaped tunnel}
			\label{fig:ch6:tunnIllustration2}
		\end{subfigure}
	
		\begin{subfigure}[t]{0.4\textwidth}
			\centering
			\includegraphics[width=\linewidth]{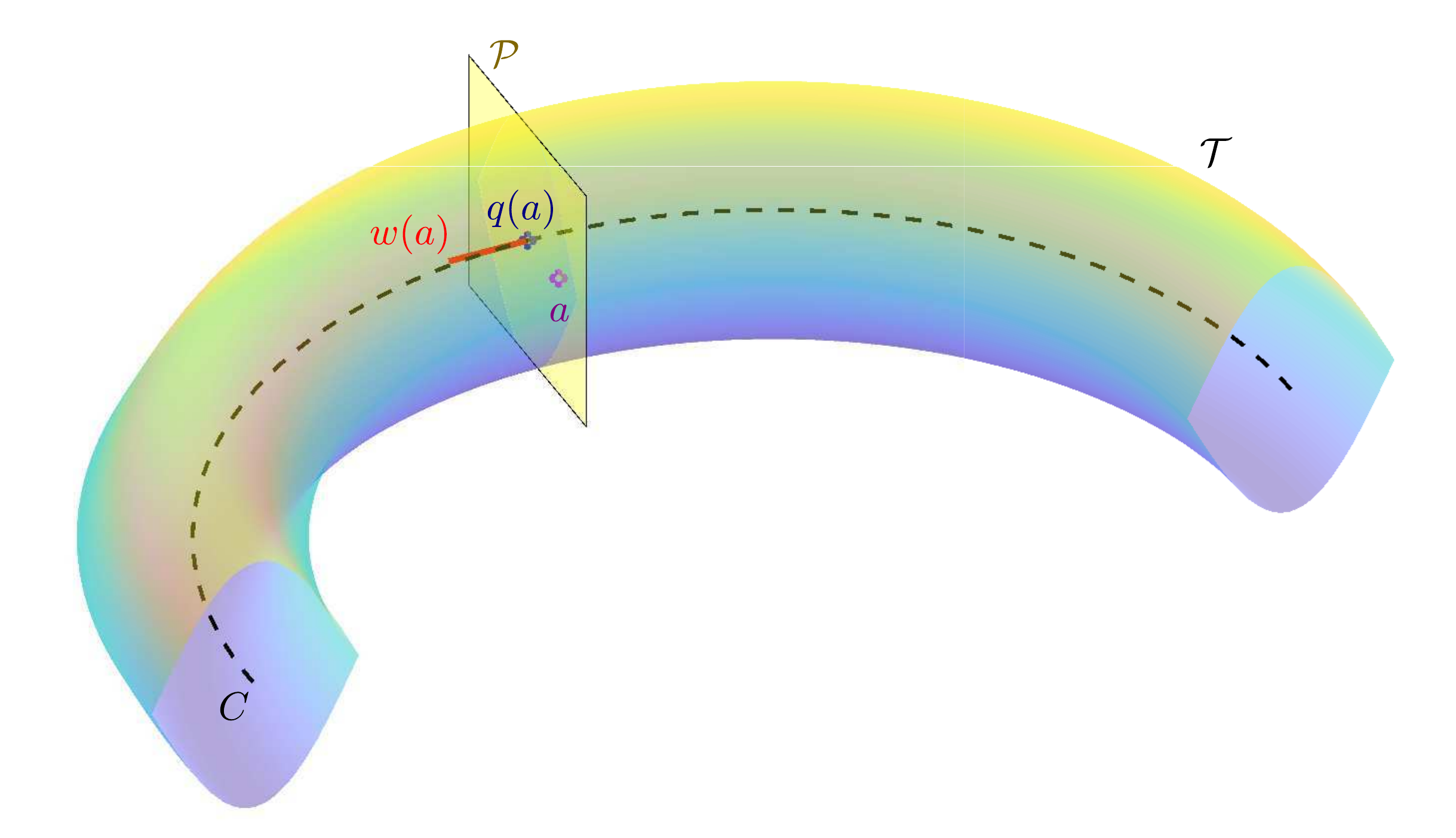} 
			\caption{Deformed cylindrical tunnel}
			\label{fig:ch6:tunnIllustration3}
		\end{subfigure}
		\hfill
		\begin{subfigure}[t]{0.4\textwidth}
			\centering
			\includegraphics[trim={12cm 0cm 12cm 0cm},clip,width=\linewidth]{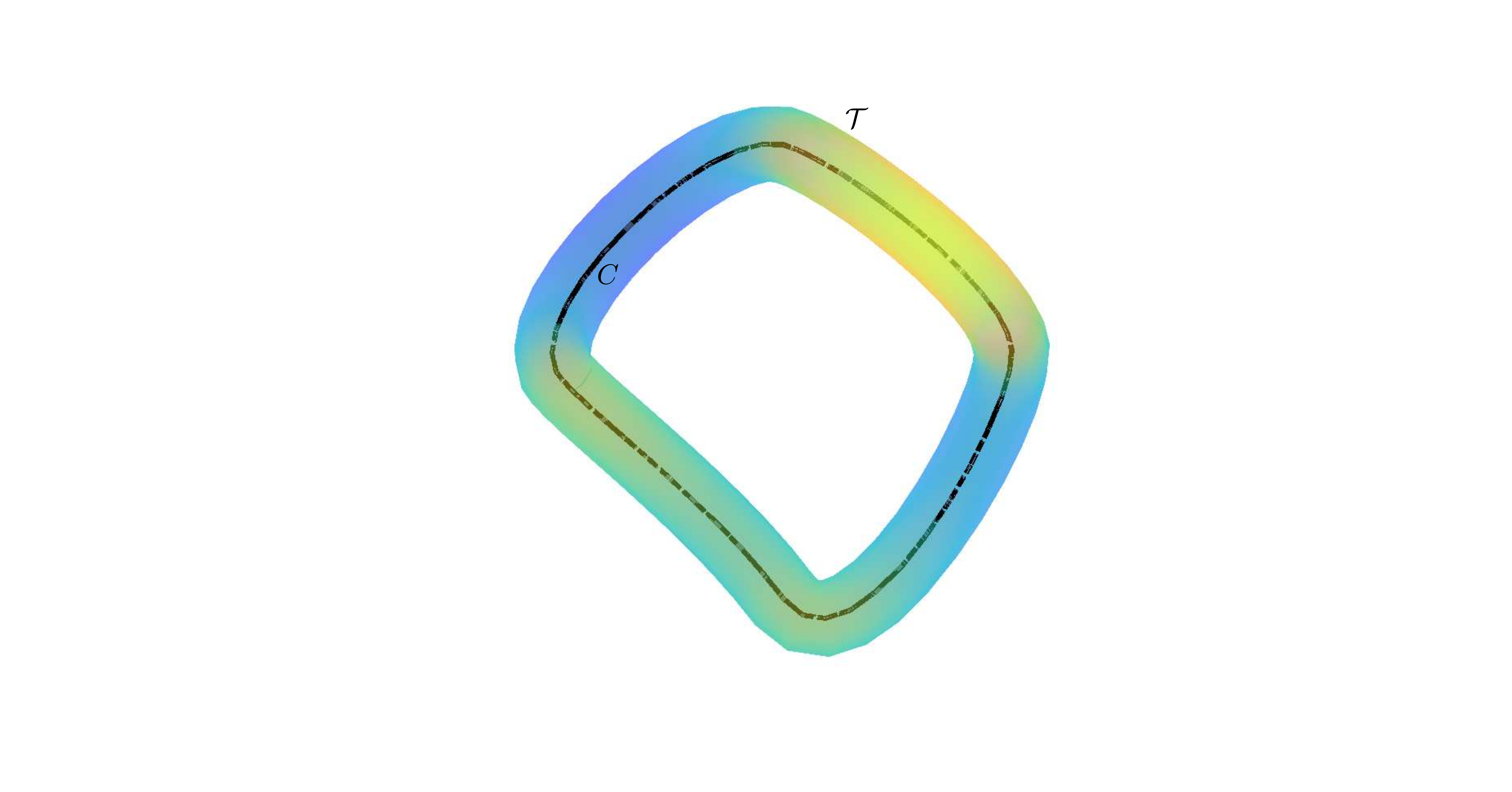} 
			\caption{Deformed torus-shaped tunnel}
			\label{fig:ch6:tunnIllustration4}
		\end{subfigure}
	\end{adjustbox}
		\caption{An illustration of tunnels definitions}
		\label{fig:ch6:tunnIllustration}
\end{figure}

\section{Navigation Algorithm}\label{sec:nav}

In the following assumptions the deformed tunnel can be either cylindrical or torus-shaped.
Suppose that there exist constants $0< \alpha <\frac{\pi}{2}, \beta>0, \beta_0>0, R>0$ such that $\beta+\beta_0<\alpha$, and the following assumptions hold.

\begin{assumption}
	\label{A1}
	At time $0$ the UAV is inside the deformed tunnel $\mathcal{T}$, i.e. $c(0)\in \mathcal{T}$, and $r(c(0))\leq R-\epsilon_0$ where $\epsilon_0:=v\delta$. Moreover, the UAV knows some
	estimate $V_0$ of the tangent vector $w(c(0))$ such that the  angle between the vectors $V_0$ and $w(c(0))$ is less than $\alpha$.
	This $V_0$ is used as the first input in the controller (\ref{2}).
\end{assumption}

\begin{assumption}
	\label{A2}
	Any set $P(q)$ of the deformed tunnel contains the disc $W_R$ consisting of the points $H$ such that $r(H)\leq R$. Moreover, for all such points $H$, the safety constraint (\ref{dist}) holds.
\end{assumption}

\begin{assumption}
	\label{A3}
	For any points $a_1,a_2\in W_R$ and any vector $F_1$ such that $\|a_1-a_2\|<\epsilon_0$ and the angle between the vectors $F_1$ and $w(a_1)$ is less than $\alpha$, the angle between
	the vectors $A(a_1,F_1)$ and $w(a_2)$ is less than $\beta$.
\end{assumption}

\begin{assumption}
	\label{A4}
	For any points $a_1,a_2\in W_R$ and any vector $F_1$ such that $\|a_1-a_2\|<\epsilon_0$, and the angle between the vectors $F_1$ and $w(a_1)$ is less than $\alpha$, the inequality  $|h(a_1,F_1)-r(a_2)|<\epsilon_1$ holds where $\epsilon_1:=\frac{v\delta\sin{\beta_0}}{2}$.
\end{assumption}

\begin{remark}
	\label{R1}
	In the case of a deformed cylindrical tunnel, Assumptions \ref{A3} and \ref{A4} describe how close the deformed tunnel is from a perfect tunnel, as it is obvious that for any perfect tunnel,
	these assumptions hold with $\beta=\epsilon_1=0$. In the case of a deformed torus-shaped tunnel, Assumptions \ref{A3} and \ref{A4} hold as the minimum curvature of the axis $C$ is small enough.
\end{remark}

We introduce the vector $B(c(t),V(t))$ such that the angle between the vectors $B(c(t),V(t))$ and $A(c(t),V(t))$ equals $\beta_0$,  and 
$\|B(c(t),V(t))\|=v$. %
It is clear from the construction that $A(c(t),V(t))\neq 0$.
Now,
introduce the following navigation law defined by (\ref{2}) and the following rule:
\begin{equation}
	\label{cont}
	\begin{adjustbox}{scale=0.85}
		$
		V_k:= \left\{
		\begin{array}{lcr}
			\frac{v}{\|A(k\delta)\|}A(k\delta),\ \ \ \ \ \ \ \ &  h(c(k\delta),V(k\delta))<R-2\epsilon_0 & {\bf (M1)}\\
			B(c(k\delta),V(k\delta)),\ \ \ \ \ \ \ \ &   h(c(k\delta),V(k\delta))\geq R-2\epsilon_0 & {\bf (M2)} \\
		\end{array} \right.
		$
	\end{adjustbox}
\end{equation}
for $k=1,2,\cdots$.

Now, we are in a position to present the main theoretical result of this chapter.

\begin{theorem}
	\label{T1}
	Let a constant $d_{safe}>0$ and a deformed cylindrical or torus-shaped tunnel $\mathcal{T}$ be given. Suppose that $0< \alpha <\frac{\pi}{2}, \beta>0, \beta_0>0, R>0$,
	$\epsilon_0:=v\delta$ and $\epsilon_1:=\frac{v\delta\sin{\beta_0}}{2}$  are constants such that $\beta+\beta_0<\alpha$ and  Assumptions \ref{A1}--\ref{A4} hold.
	Then, the UAV navigation law (\ref{2}), (\ref{cont}) with $V_0$ from Assumption \ref{A1} is safely navigating through the deformed tunnel $\mathcal{T}$.
\end{theorem}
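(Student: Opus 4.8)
The plan is to prove \cref{T1} by induction on the sampling index $k$, maintaining at each decision time $k\delta$ a two-part invariant: (i) a \emph{radial bound} $r(c(k\delta))\le R-\epsilon_0$, keeping the UAV strictly inside the safe disc $W_R$ of the current cross-section; and (ii) a \emph{heading bound}, namely that the angle between the step velocity $V_k$ and the axis tangent $w(c((k+1)\delta))$ at the arrival point is less than $\alpha$. The base case $k=0$ is furnished directly by \cref{A1}. This invariant does double duty: part (i) will deliver both the safety requirement (\ref{dist}) and keep every configuration inside $W_R$, so that \cref{A2,A3,A4} remain applicable; part (ii) guarantees that the incoming heading used to form the measurement planes $\mathcal{P}_1,\mathcal{P}_2$ in the control rule (\ref{cont}) at the next step satisfies the ``angle with $w$ less than $\alpha$'' hypothesis that all three geometric assumptions require. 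Throughout I adopt the causal reading that the quantities $A,h$ evaluated in (\ref{cont}) at time $k\delta$ are formed from the incoming heading.

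For the heading part I would first apply \cref{A3} with $a_1=a_2=c(k\delta)$ and $F_1$ the incoming heading, which shows that $A(c(k\delta),\cdot)$ makes an angle less than $\beta$ with $w(c(k\delta))$. Hence in mode \textbf{M1} the choice $V_k\parallel A$ lies within $\beta$ of the tangent, while in mode \textbf{M2} the choice $V_k=B$, which by construction makes angle $\beta_0$ with $A$, lies within $\beta+\beta_0$ of it. A second application of \cref{A3} with $a_2=c((k+1)\delta)$ (legitimate since the step length is $\epsilon_0$) transports the bound to the arrival point, so the angle between $V_k$ and $w(c((k+1)\delta))$ is below $\beta$ in mode \textbf{M1} and below $\beta+\beta_0$ in mode \textbf{M2}; the standing hypothesis $\beta+\beta_0<\alpha$ then closes part (ii).

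The radial part splits by mode. In mode \textbf{M1}, where the measured quantity satisfies $h<R-2\epsilon_0$, \cref{A4} applied to $a_1=c(k\delta)$, $a_2=c((k+1)\delta)$ gives $r(c((k+1)\delta))<h+\epsilon_1<R-2\epsilon_0+\epsilon_1<R-\epsilon_0$, using $\epsilon_1<\epsilon_0$, so (i) is preserved automatically. Mode \textbf{M2} is the substantive case and the expected main obstacle. It triggers only when $h\ge R-2\epsilon_0$, i.e.\ the UAV is near the wall, and the control steers along $B$, which is $A$ rotated by $\beta_0$ toward the gravity-centre line $G_1G_2$. Since $A$ is parallel to that line, a step of length $\epsilon_0$ along $B$ reduces the distance to the line by exactly $\epsilon_0\sin\beta_0=2\epsilon_1$. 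Combining this inward displacement with the invariant bound $h<r(c(k\delta))+\epsilon_1\le R-\epsilon_0+\epsilon_1$ (from \cref{A4} at $a_1=a_2=c(k\delta)$) and transferring from ``distance to the gravity-centre line'' back to the true radial distance $r$ through the $\epsilon_1$-tolerance of \cref{A4}, one recovers $r(c((k+1)\delta))\le R-\epsilon_0$. The delicate point, where I expect the real work to lie, is the bookkeeping that makes this chain of $\epsilon_0$- and $\epsilon_1$-sized estimates close: one must verify that the tilt of $B$ toward the measured line $G_1G_2$ genuinely produces inward \emph{radial} motion relative to the true curvy axis $C$, which is precisely what the closeness encoded in \cref{A3,A4} with the exact value $\epsilon_1=\frac{v\delta\sin\beta_0}{2}$ is engineered to guarantee. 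Finally, since $r$ is $1$-Lipschitz in position and each step has length $\epsilon_0$, the invariant $r(c(k\delta))\le R-\epsilon_0$ forces $r(c(t))\le R$ for all $t$; by \cref{A2} this yields $d(t)>d_{safe}$, establishing (\ref{dist}).

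It remains to prove progress. By part (ii) of the invariant, together with \cref{A3} applied at intermediate points of each step, the velocity makes an angle at most $\beta+\beta_0<\alpha<\frac{\pi}{2}$ with the local tangent $w$ throughout the motion, so the component of $\dot c$ along $w$ is bounded below by $v\cos(\beta+\beta_0)>0$ up to a bounded tubular-geometry factor (controlled by the small-curvature content noted in \cref{R1}). Hence the curvilinear coordinate $\tilde q(c(t))$ increases at a uniformly positive rate. For a deformed cylindrical tunnel this gives $\tilde q(c(t))\to\infty$, which is (\ref{tend}); for a deformed torus-shaped tunnel, where $\tilde q$ takes values in $[0,L)$ cyclically, the uniform increase forces $\tilde q(c(t))$ to sweep every $Q\in[0,L)$ infinitely often, producing the sequence $t_k\to\infty$ with $\tilde q(c(t_k))=Q$ demanded by (\ref{tend1}). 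Assembling the safety bound with the progress statement then completes the proof.
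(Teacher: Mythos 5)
Your proposal is correct and takes essentially the same route as the paper's own proof: a per-step, mode-split argument in which Assumption \ref{A3} bounds the angle between the commanded velocity and the axis tangent by $\beta$ in mode {\bf (M1)} and by $\beta+\beta_0<\alpha<\frac{\pi}{2}$ in mode {\bf (M2)} (giving the progress conditions \eqref{tend} and \eqref{tend1}), while Assumption \ref{A4} together with the $\beta_0$-tilt of $B$ controls the radial coordinate so that Assumption \ref{A2} delivers the safety requirement \eqref{dist}. The differences are cosmetic rather than structural: you carry the true radius $r$ as the induction invariant where the paper carries the measured quantity $h$ (the two agree to within $\epsilon_1$ by Assumption \ref{A4}), and you spell out details the paper compresses---the transport of the heading bound to the arrival point so that Assumptions \ref{A3}--\ref{A4} stay applicable, the torus-shaped case, and the {\bf (M2)} bookkeeping behind the paper's one-line claim $h(c((k+1)\delta),V((k+1)\delta))\le h(c(k\delta),V(k\delta))$.
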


{\bf Proof of Theorem \ref{T1}:}  At any time, the navigation law (\ref{2}), (\ref{cont}) operates in either mode ${\bf (M1)}$ or ${\bf (M2)}$. 
In any case, over any time interval $(k\delta,(k+1)\delta]$, the UAV makes the distance $\epsilon_0=v\delta$. Therefore, 
in the mode ${\bf (M1)}$, due to Assumption \ref{A3}, the angle between  the vectors $V(t)= \frac{v}{\|A(k\delta)\|}A(k\delta)$ and $w(c(t))$ is less than 
$\beta$. Correspondingly, in the mode ${\bf (M2)}$, due to Assumption \ref{A3}, the angle between  the vectors $V(t)= \frac{v}{\|A(k\delta)\|}A(k\delta)$ and $w(c(t))$ is less than $\beta+\beta_0<\frac{\pi}{2}$. Since $w(c(t))$ is a tangent vector of the tunnel axis, this implies that
$\tilde{q}(c(t))\geq \tilde{q}(c(0))+t\cos(\beta+\beta_0)\rightarrow \infty$. Therefore, the condition (\ref{tend}) of Definition \ref{D} holds.
Furthermore, if over some time interval $(k\delta,(k+1)\delta]$, the UAV operates in the  mode ${\bf (M1)}$, then it follows from (\ref{cont}) that 
$h(c(k\delta),V(k\delta))<R-2\epsilon_0$, and since the UAV makes the distance $\epsilon_0=v\delta$ over this time interval, this implies that
$h(c(t),V(t))<R-\epsilon_0$ for all $t\in (k\delta,(k+1)\delta]$. If over some time interval $(k\delta,(k+1)\delta]$, the UAV operates in the  mode 
${\bf (M2)}$, then it follows from (\ref{cont}) and Assumption \ref{A4} that $h(c((k+1)\delta),V((k+1)\delta))\leq h(c((k\delta),V(k\delta))$. This and (\ref{cont}) imply that $h(c(t),V(t))<R$ in the mode ${\bf (M2)}$. Therefore, $h(c(t),V(t))<R$ for all $t$, hence, according to Assumption \ref{A2}, 
the requirement  (\ref{dist}) of Definition \ref{D} holds.
This completes the proof of Theorem \ref{T1}.

\begin{remark}
	Note that we do not consider tunnels whose axes branch off at some points according to the problem definition in \cref{sec:problem}.
	However, it is possible to extend our navigation algorithm defined by the control law \eqref{cont} to address such cases by defining a third mode ${\bf (M3)}$.
	This mode could be responsible for guiding the UAV through one of the branches selected arbitrary or based on some heuristics.
	The switching mechanism from and to this mode can be based mainly on interpreting the tunnel axis branching off scenario from sensors measurements. 
\end{remark}

\begin{figure}[!htb]
	\centering
	\includegraphics[clip,width=0.5\linewidth]{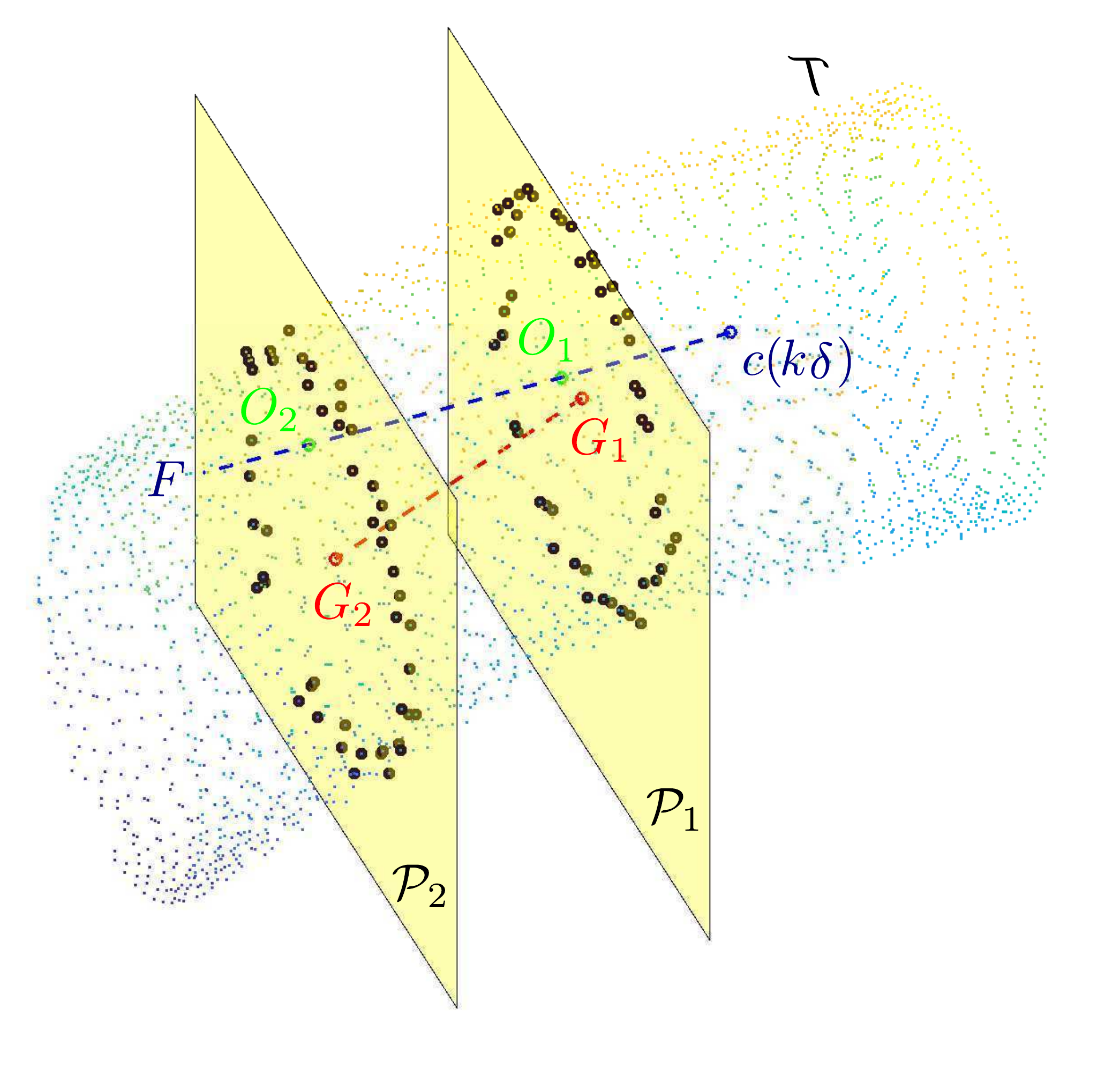}%
	\caption{An illustration of available measurements along with the corresponding gravity centers computed according to our method}
	\label{fig:ch6:illustration}
\end{figure}

\section{Computer Simulations}\label{sec:sim}

The proposed navigation strategy was validated through many simulation scenarios.
Several 3D tunnel-like environments have been considered including tunnels with nonsmooth walls and sharp turnings.
In all simulations, the environment was represented using a 3D point cloud.
The UAV sensing module has only access to a fraction of the environment limited by some sensing range $d_{sensing}$ mimicking the behavior of onboard sensors commonly used in practice.
Additionally, noisy sensor measurements were also considered in one of the simulation cases.

Our navigation algorithm provided in \eqref{cont} was implemented in these simulations as follows.
Initially, we provide the first control input $V_0$ based on some initial knowledge about the environment in accordance with \cref{A1}.
This assumption is valid in practice at the time of UAV deployment before the mission starts.
At each subsequent time step $k\delta$, a heading unit vector $F$ represents the current direction of motion is determined using:
\begin{equation}
	\label{F}
	F(k\delta) = V((k-1)\delta)/\|V((k-1)\delta)\|
\end{equation}
Then, two points ahead of $c(k\delta)$ are computed in the direction of $F(k\delta)$ using:
\begin{equation}
	\label{equ:Oi}
	\begin{array}{lll}
		O_i(c(k\delta),F(k\delta)) = c(k\delta) + D_i F(k\delta), & i=\{1,2\}
	\end{array}
\end{equation}

Let $\mathcal{W}_s := \{p \in \mathcal{W}: \|p - c(k\delta)\| \leq d_{sensing}\}$ be the fraction of tunnel wall within the sensing range (represented as a point cloud).
We then determine the two sets $\mathcal{O}_1 \subset \mathcal{W}_s$ and $\mathcal{O}_2 \subset \mathcal{W}_s$ of tunnel wall points within sensing range belonging to the planes $\mathcal{P}_1(c(k\delta),F(k\delta))$ and $\mathcal{P}_2(c(k\delta),F(k\delta))$ by a filtering process according to the following:
\begin{equation}
	\label{intersect}
	\begin{array}{lll}
		\mathcal{O}_i := \{p \in \mathcal{W}_s: \langle p - O_i,\ F(k\delta)\rangle = 0\}, & i=\{1,2\}
	\end{array}
\end{equation}
where $\langle\cdot,\cdot\rangle$ is the dot product of the two vectors.
Notice that some tolerance $\epsilon>0$ is used to pick the points within a very small proximity of $\mathcal{P}_1(c(k\delta),F(k\delta))$ and $\mathcal{P}_2(c(k\delta),F(k\delta))$ to handle point clouds discontinuities.
That is, the condition in \eqref{intersect} becomes:
\begin{equation}\label{equ:ptcloud_section}
	\begin{aligned}
		{\cal O}_i := \{p \in {\cal W}_s: |\langle p - O_i,\ F(k\delta)\rangle| \leq \epsilon\}, & i=\{1,2\}
	\end{aligned}
\end{equation}
where $\epsilon$ is some small positive constant.

After that, $G_1$ and $G_2$ are computed as the centroids of $\mathcal{O}_1$ and $\mathcal{O}_2$ respectively.
These can then be used to get $A(k\delta)$, $h(c(k\delta), V(k\delta))$ and $B(c(k\delta), V(k\delta))$ to apply our navigation law \eqref{cont}.

\Cref{fig:ch6:sim1a,fig:ch6:sim1b,fig:ch6:sim1c,fig:ch6:sim1d,fig:ch6:sim1e,fig:ch6:sim1f} present simulation scenarios for six different environments showing the executed paths by the UAV using our navigation algorithm.
Scenarios (a)-(c) considered deformed tunnels with smooth 3D deformations.
On contrary, environments with nonsmooth boundaries were handled in scenarios (d)-(f).
It was observed that the UAV managed to quickly reach and follow the curvy axis $C$ of the tunnel in cases (a)-(c) keeping a safe distance from the tunnel boundary.
In cases like (d)-(f), the UAV could sometimes diverge from moving across $C$ for a short segment when there is a sharp change in the direction of the tunnel boundary.
However, it still manages to maintain a safe distance from the wall.
These results clearly confirms the performance of our control approach.
Even though our algorithm was developed assuming that tunnel walls are smooth, it clearly shows good performance in tunnels with nonsmooth walls and sharp turnings.

An additional simulation scenario was carried out to investigate the robustness of our method against noisy sensor measurements.
The UAV was required to navigate through some pipeline structure as shown in \cref{fig:ch6:sim2}(a).
A Gaussian noise was added to the point cloud seen by the sensing module as presented in \cref{fig:ch6:sim2}(b) along with the executed motion by the UAV (different view prospectives are shown in \cref{fig:ch6:sim3a,fig:ch6:sim3b} for better visualization).
\Cref{fig:ch6:sim4} shows the time evolution of the UAV position $c(k\delta)$.
The actual distance to the tunnel wall during the motion along with the distance based on the noisy point cloud are shown in \cref{fig:ch6:sim5}.
It is clear that the motion executed by the vehicle is collision-free.
Notice that the vehicle gets close to the tunnel walls around $t= 42\ s$ because of the very sharp bend of the pipe structure at that location.
Clearly, these results shows how robust our method is against noisy measurements which is a key feature for practical implementation.
The simulations update time was selected as $\delta = 0.1 s$, and the parameters used for each scenario are provided in \cref{tab:ch6:1}.
Animations of all simulation cases with corresponding time plots showing distance to tunnel walls are available at \href{https://youtu.be/r2Add9lctEU}{https://youtu.be/r2Add9lctEU}.

\begin{table}[ht]
	\centering
	\begin{tabular}{ |c||c|c|c|c|c|c|c|}
		\hline
		\multirow{2}{4em}{Parameters} & \multicolumn{7}{|c|}{Simulation Scenario} \\ \cline{2-8}
		& a & b & c & d & e & f & g \\
		\hline
		$v\ (m/s)$           & 1.0 & 2.0 & 2.0 & 1.0 & 2.0 & 1.0 & 5.0 \\
		$D_1\ (m)$         & 1.0 & 1.5 & 1.0 & 1.0 & 1.0 & 1.0 & 2.0 \\
		$D_2\ (m)$         & 3.0 & 3.0 & 3.0 & 2.5 & 3.0 & 3.0 & 5.0 \\
		$R\ (m)$           & 1.5 & 1.0 & 1.5 & 1.5 & 1.5 & 1.5 & 1.5 \\
		$\beta_0\ (rad)$     & $\pi/4$ & $\pi/5$ & $\pi/5$ & $\pi/4$ & $\pi/5$ & $\pi/5$ & $\pi/5$ \\
		$d_{sensing}\ (m)$ & 20 & 20 & 10 & 30 & 10 & 10 & 25 \\
		\hline
	\end{tabular}
	\caption{Parameters used in simulations}
	\label{tab:ch6:1}
\end{table}

\begin{figure}[!htb]
	\centering
	\begin{adjustbox}{minipage=\linewidth,scale=1.0}
		\begin{subfigure}[t]{0.48\textwidth}
			\centering
			\includegraphics[clip, width=\linewidth]{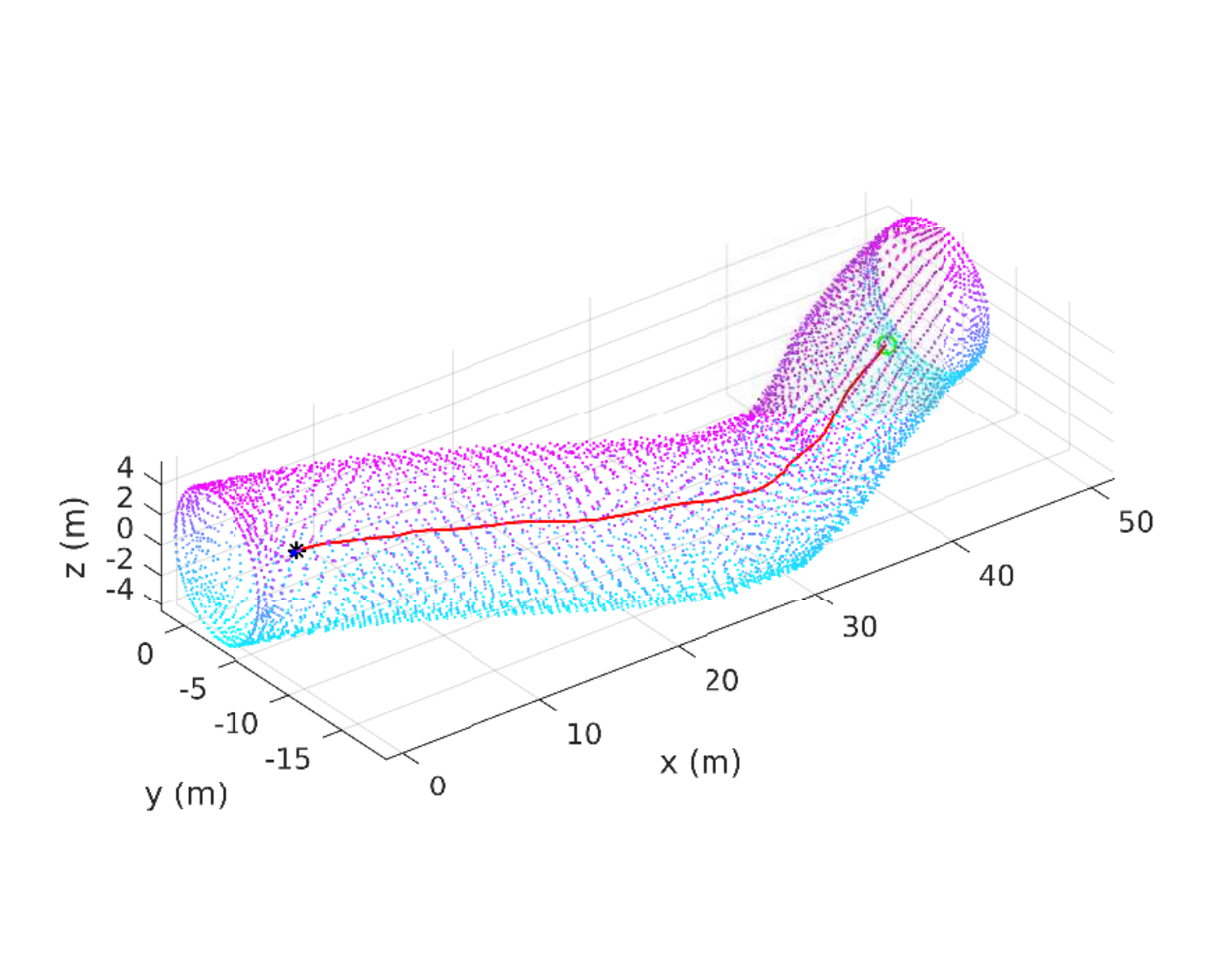} 
			\caption{A pipe with a smooth bend}
			\label{fig:ch6:sim1a}
		\end{subfigure}
		\hfill
		\begin{subfigure}[t]{0.48\textwidth}
			\centering
			\includegraphics[trim={0 0 0 0.5cm}, clip, width=\linewidth]{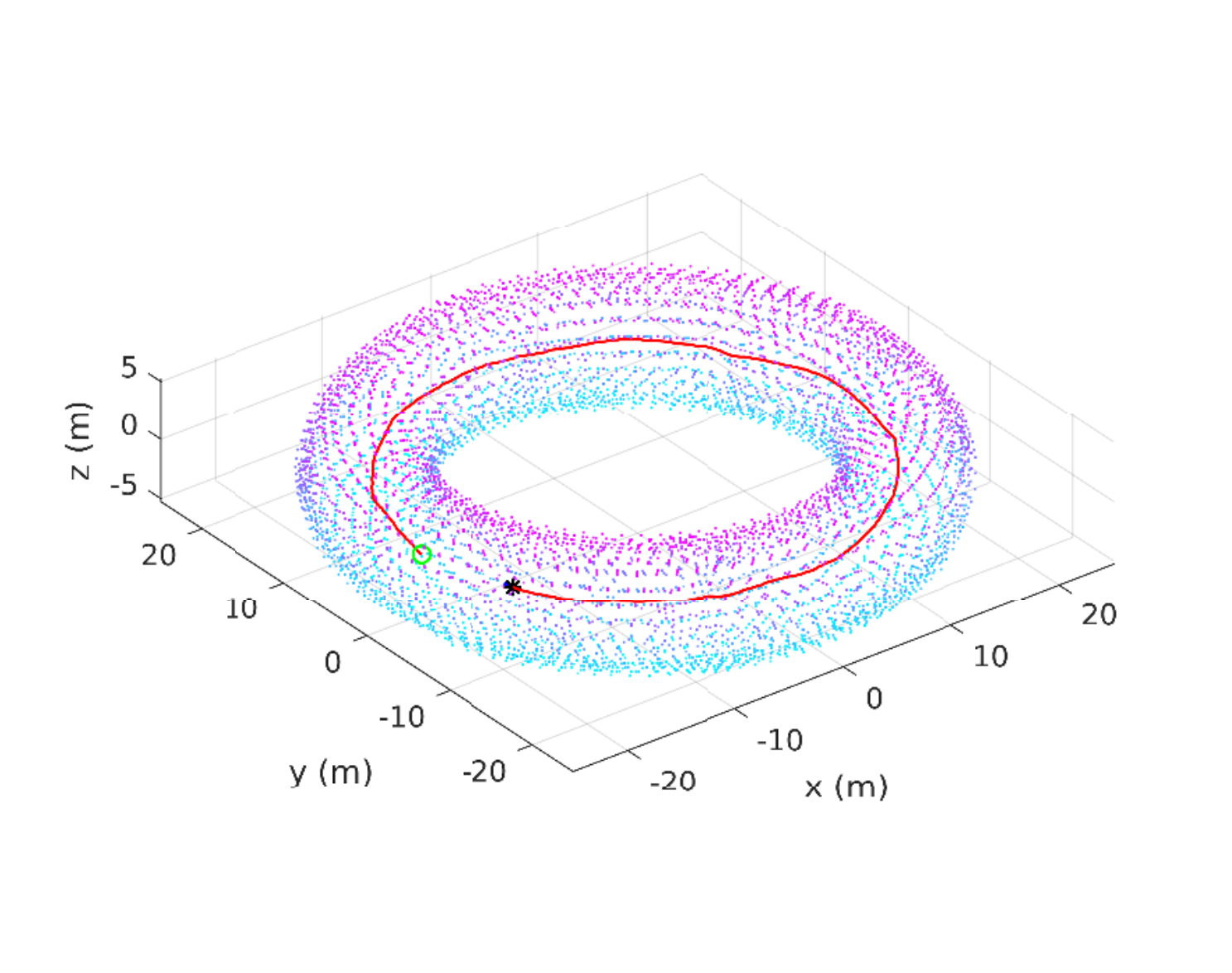} 
			\caption{A torus-shaped tunnel}
			\label{fig:ch6:sim1b}
		\end{subfigure}
		
		\begin{subfigure}[t]{0.48\textwidth}
			\centering
			\includegraphics[clip, width=\linewidth]{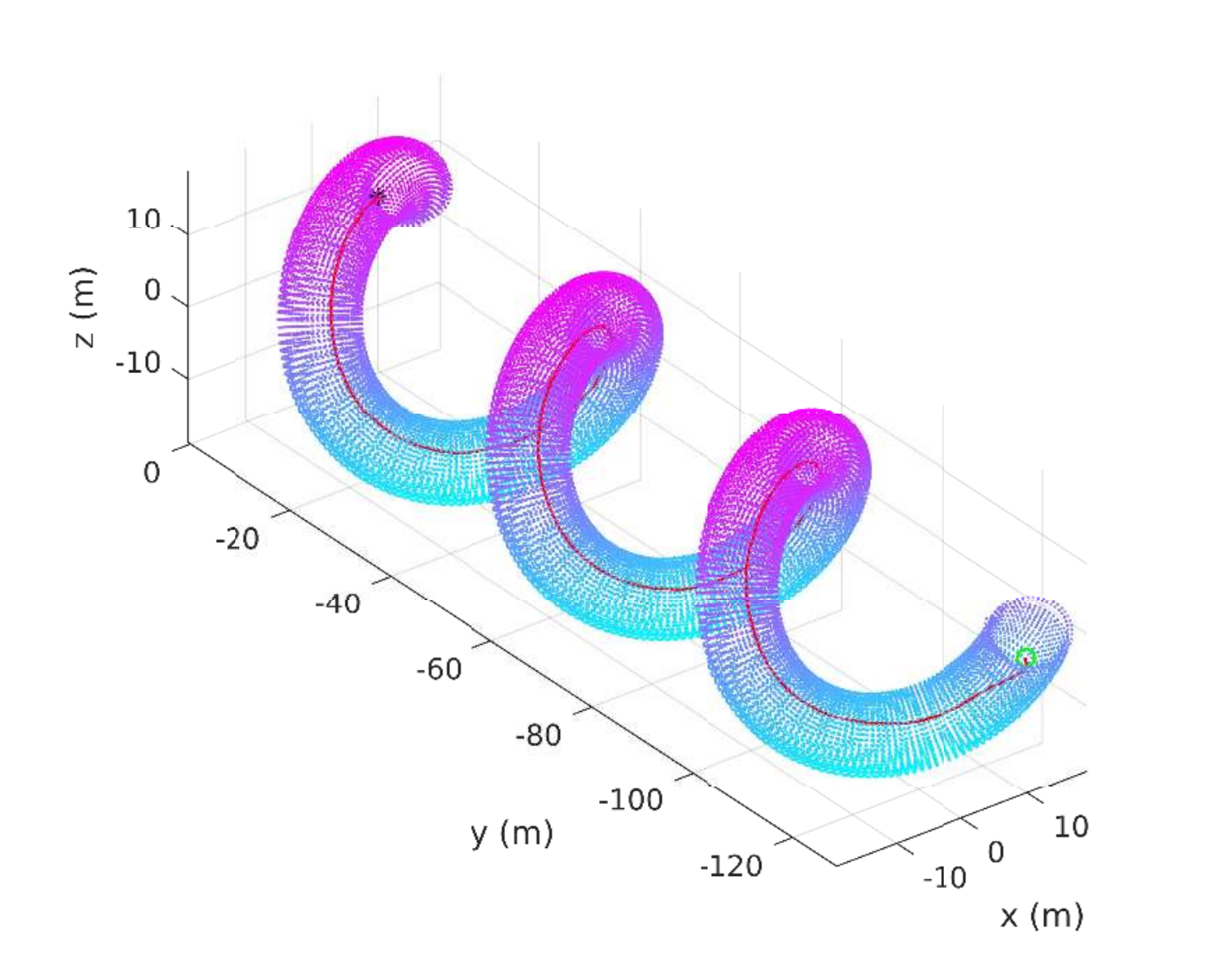} 
			\caption{A helix-shaped tunnel}
			\label{fig:ch6:sim1c}
		\end{subfigure}
		\hfill
		\begin{subfigure}[t]{0.48\textwidth}
			\centering
			\includegraphics[clip, width=\linewidth]{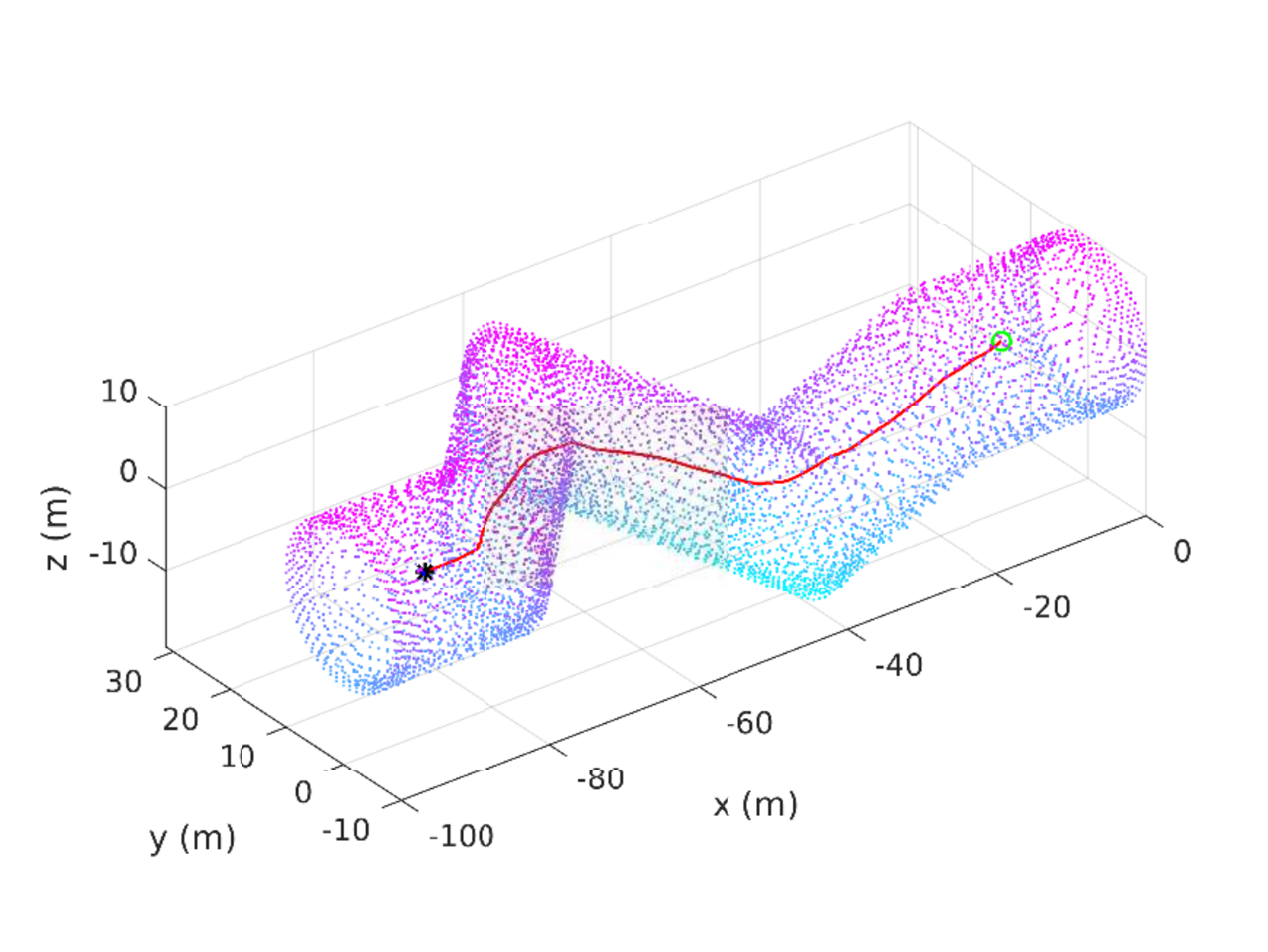} 
			\caption{A pipe with sharp bends}
			\label{fig:ch6:sim1d}
		\end{subfigure}
	
		\begin{subfigure}[t]{0.48\textwidth}
			\centering
			\includegraphics[trim={0 0 0 0.5cm}, clip, width=\linewidth]{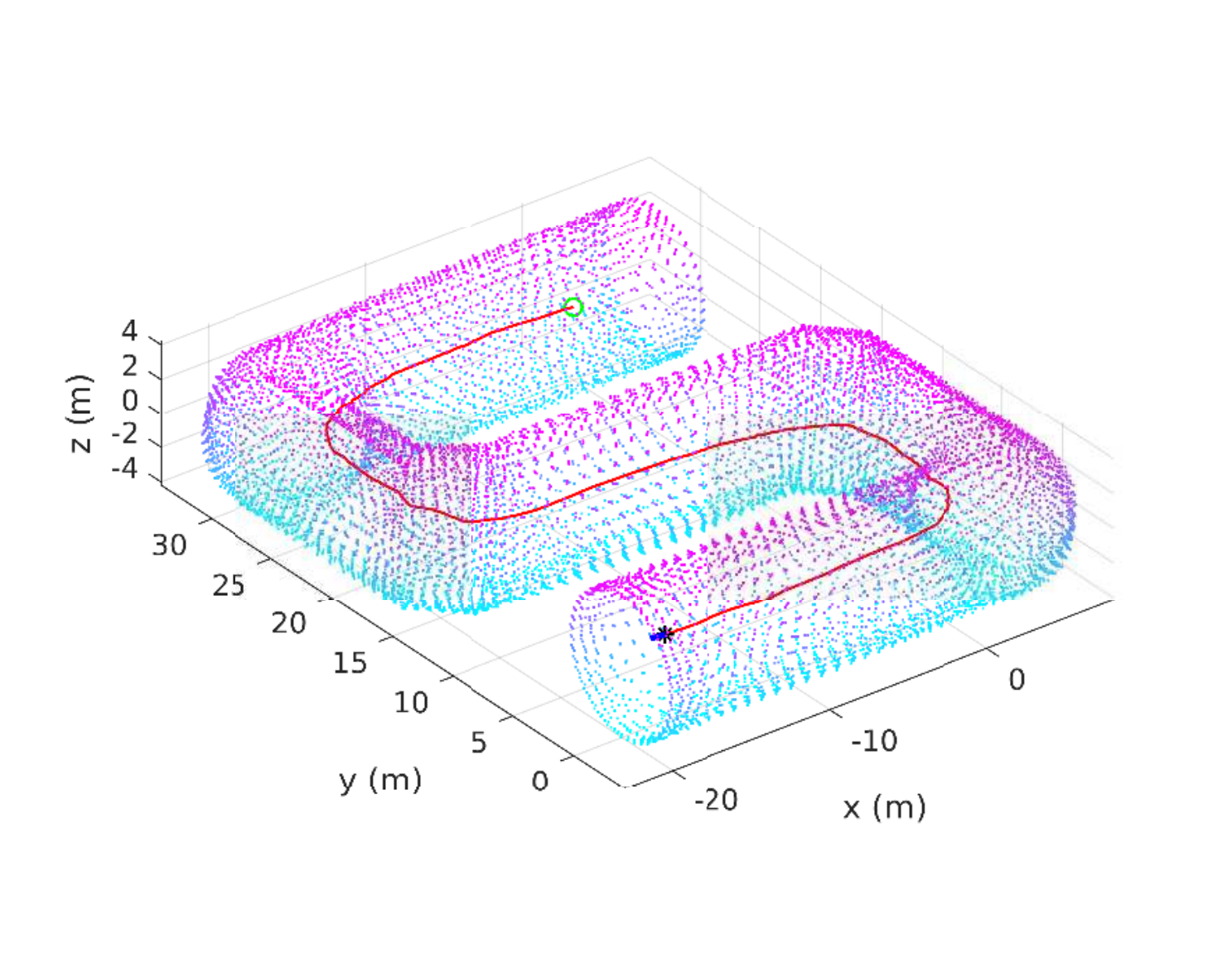} 
			\caption{S-shaped tunnel with sharp edges}
			\label{fig:ch6:sim1e}
		\end{subfigure}
		\hfill
		\begin{subfigure}[t]{0.48\textwidth}
			\centering
			\includegraphics[clip, width=\linewidth]{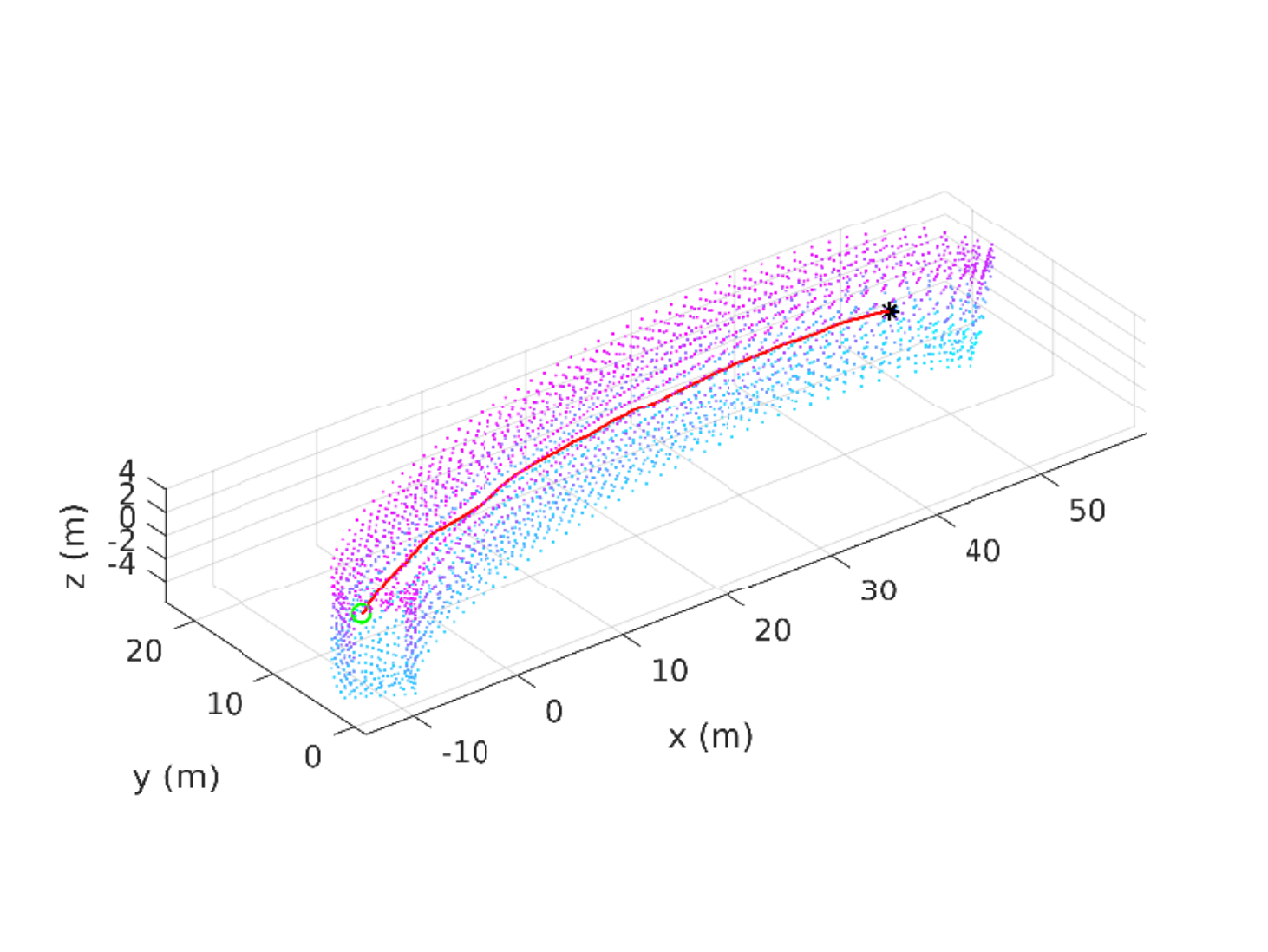} 
			\caption{A rectangular-shaped tunnel}
			\label{fig:ch6:sim1f}
		\end{subfigure}
		\caption{Simulation cases of deformed tunnel environments with different shapes considering smooth (a-c) and nonsmooth (d-f) boundaries}
		\label{fig:ch6:sim1}
	\end{adjustbox}
\end{figure}

\begin{figure}[!htb]
	\centering
	\begin{adjustbox}{minipage=\linewidth,scale=1.0}
		\begin{subfigure}[t]{0.48\textwidth}
			\centering
			\includegraphics[trim={0 0 0 0.5cm}, clip, width=\linewidth]{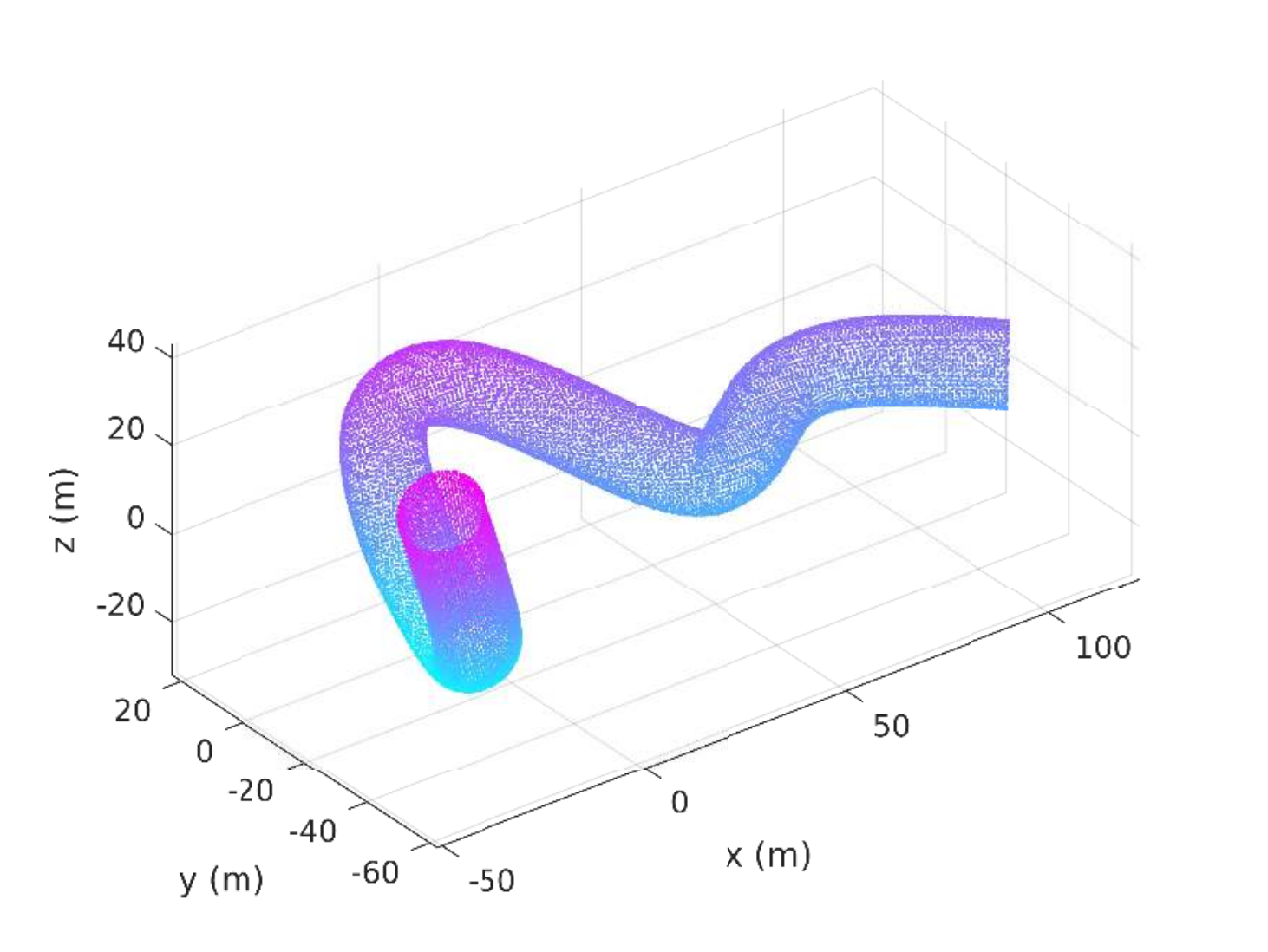} 
			\caption{A complex pipline segment}
			\label{fig:ch6:sim2a}
		\end{subfigure}
		\hfill
		\begin{subfigure}[t]{0.48\textwidth}
			\centering
			\includegraphics[clip, width=\linewidth]{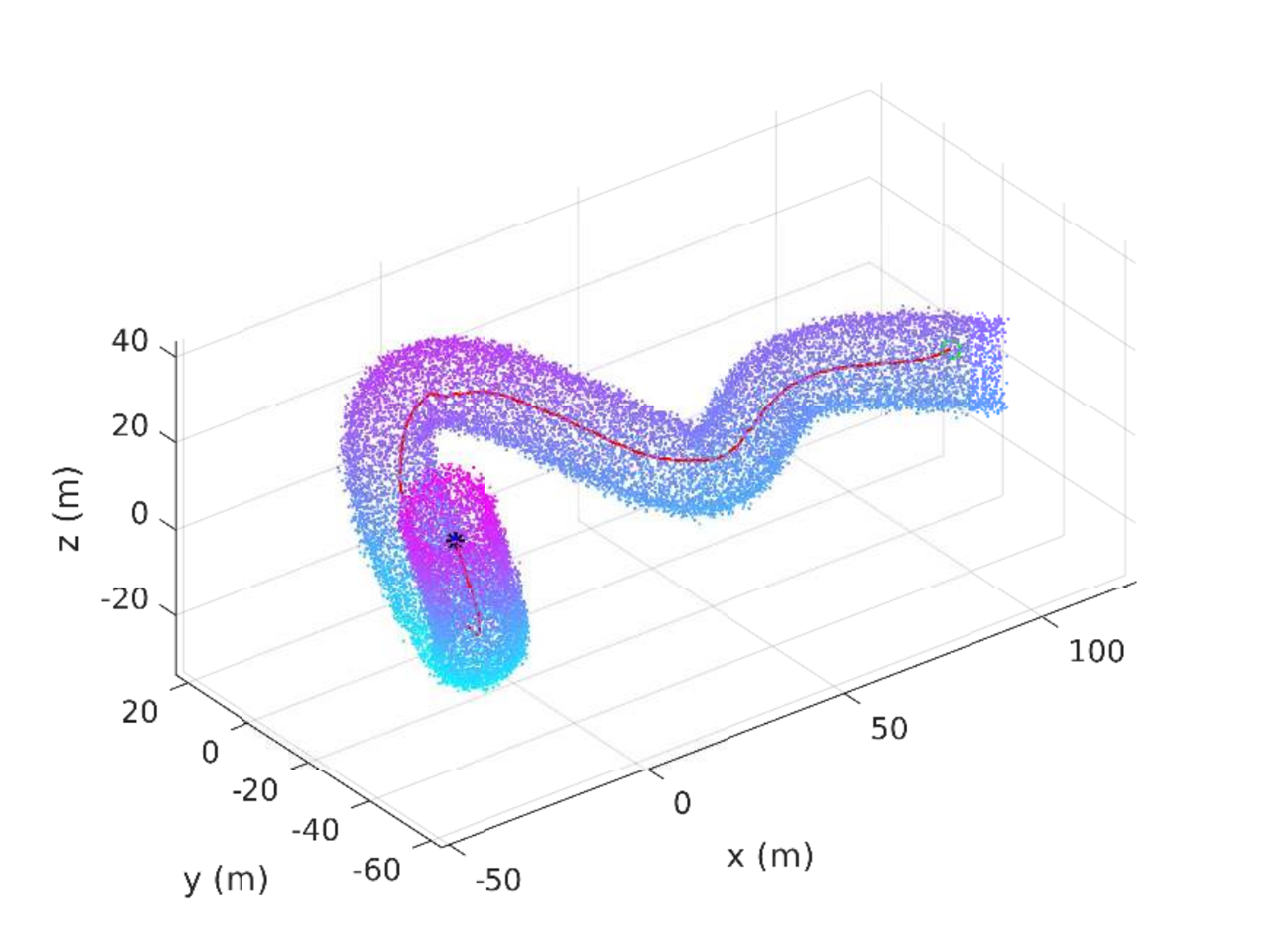} 
			\caption{The executed motion based on the noisy point cloud as seen by the UAV sensors}
			\label{fig:ch6:sim2b}
		\end{subfigure}
	
		\begin{subfigure}[t]{0.48\textwidth}
			\centering
			\includegraphics[trim={0 0 0 0.5cm}, clip, width=\linewidth]{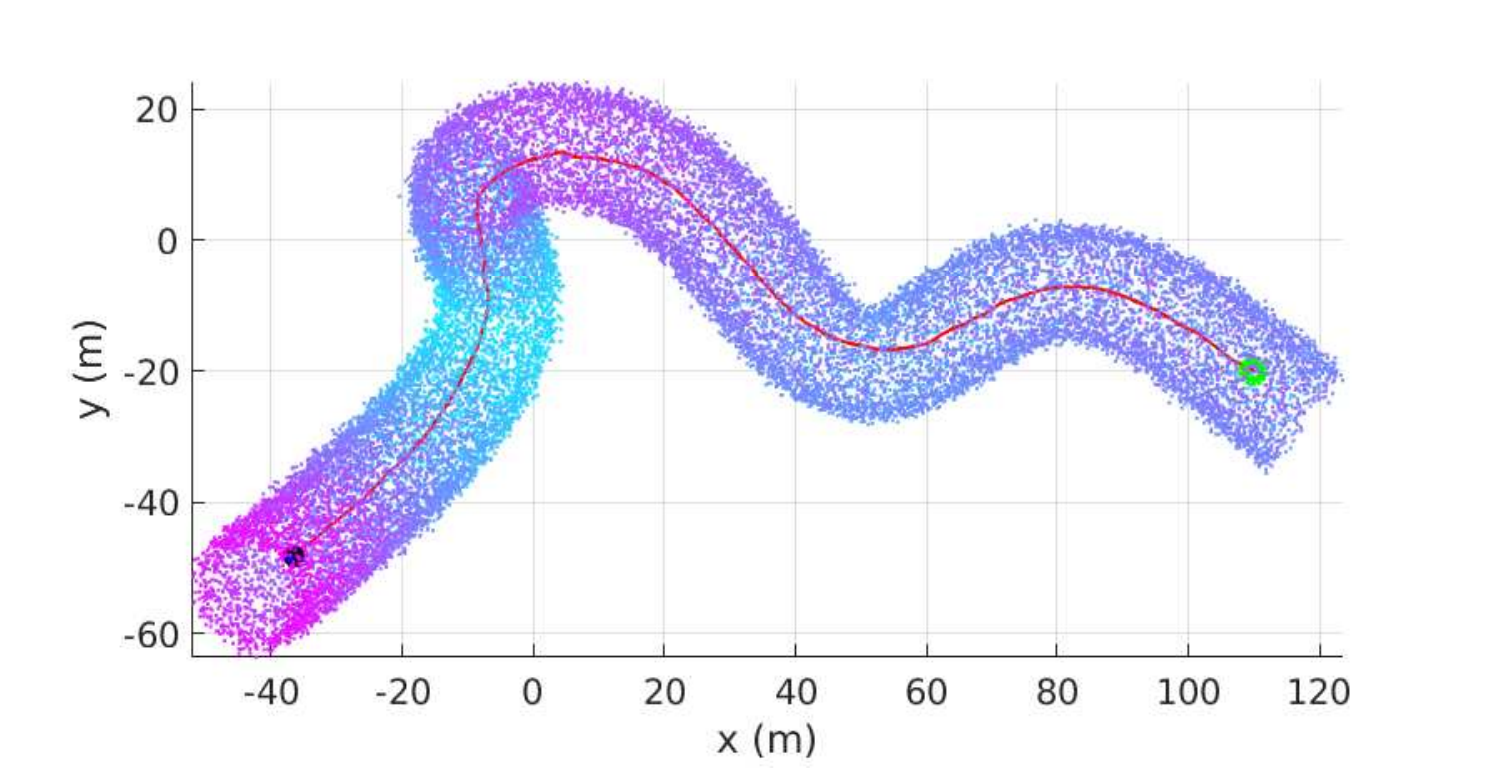} 
			\caption{XY View}
			\label{fig:ch6:sim3a}
		\end{subfigure}
		\hfill
		\begin{subfigure}[t]{0.48\textwidth}
			\centering
			\includegraphics[clip, width=\linewidth]{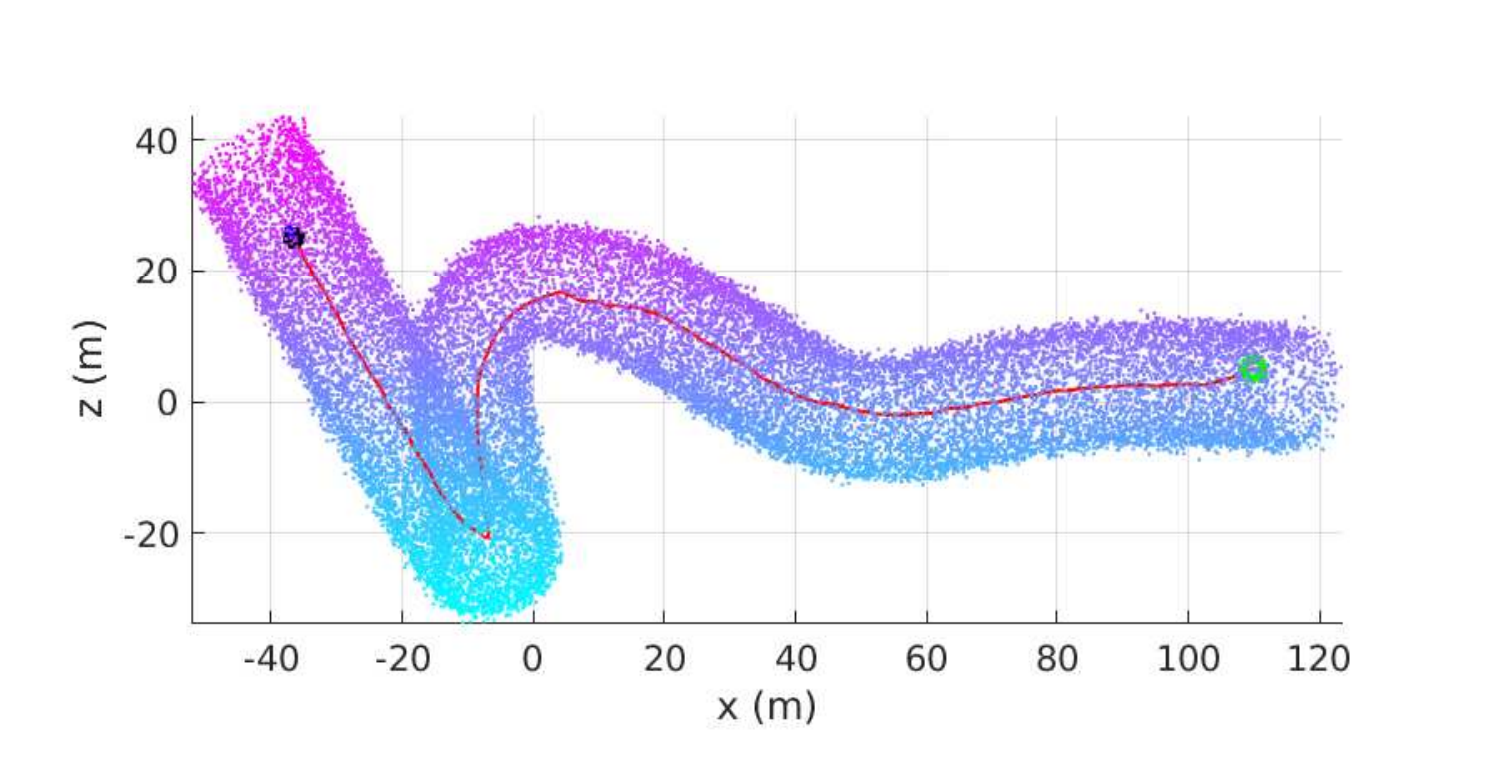} 
			\caption{XZ View}
			\label{fig:ch6:sim3b}
		\end{subfigure}

		\caption{Simulation scenario g: movement in a complex tunnel environment with noisy sensor observations}
		\label{fig:ch6:sim2}
	\end{adjustbox}
\end{figure}

\begin{figure}[!htb]
	\centering
	\includegraphics[clip,width=0.5\columnwidth]{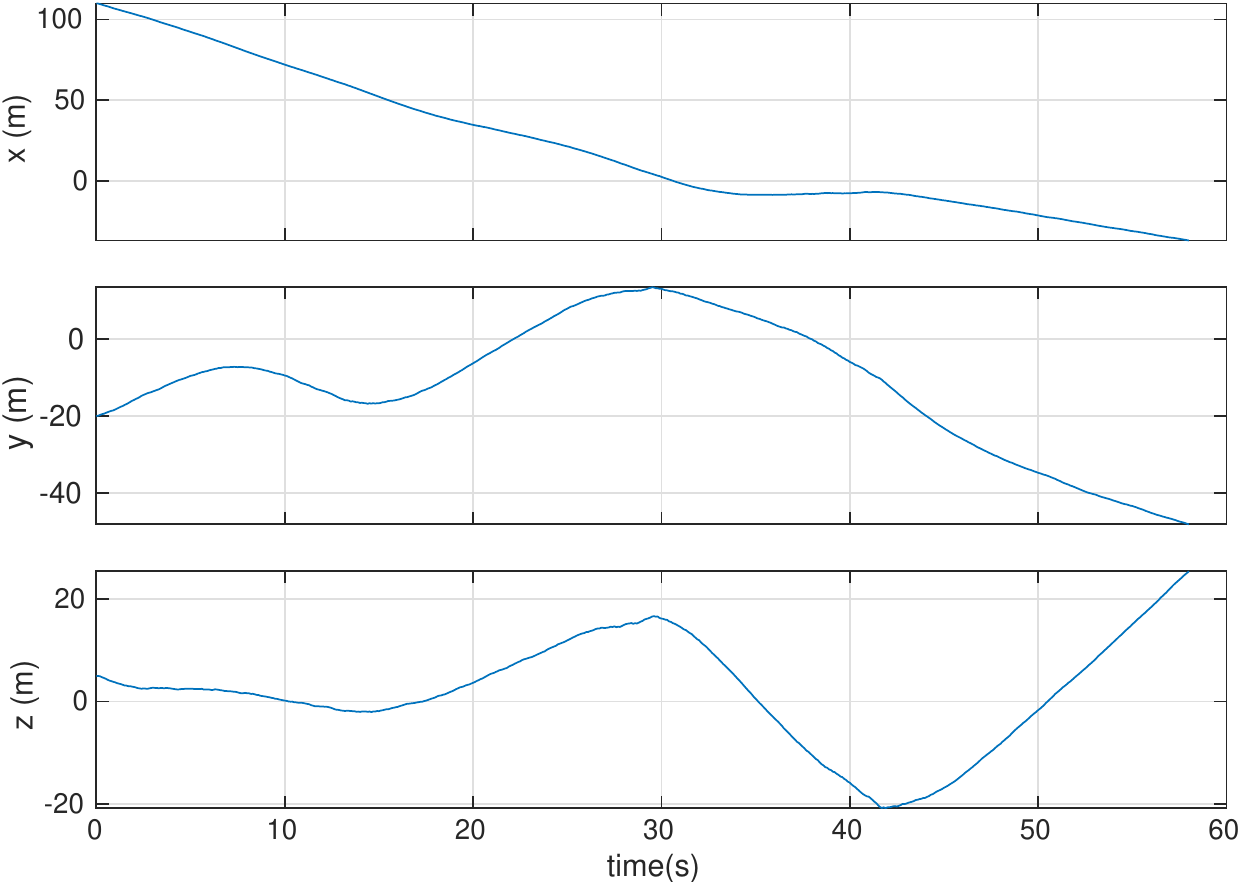}%
	\caption{Simulation scenario g: the time evolution of the UAV position (i.e $c(t)$)}
	\label{fig:ch6:sim4}
\end{figure}

\begin{figure}[!htb]
	\centering
	\includegraphics[clip,width=0.5\columnwidth]{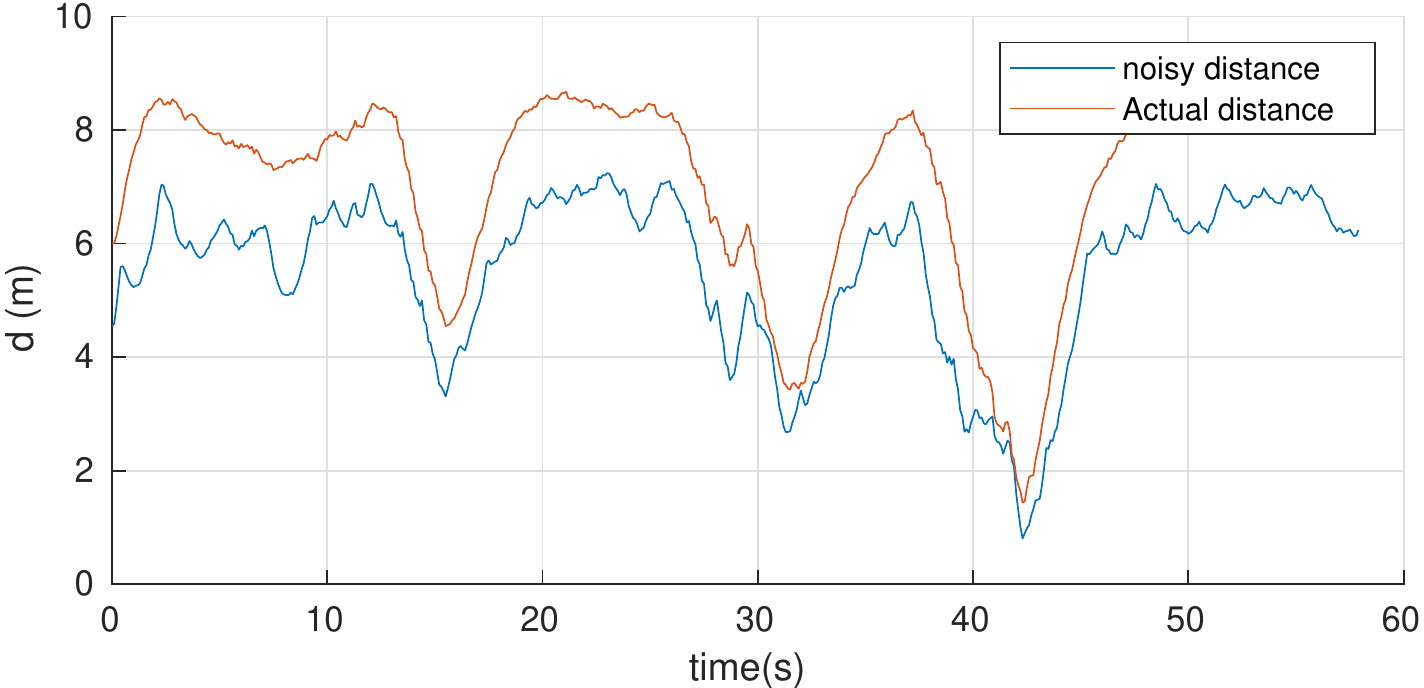}%
	\caption{Simulation scenario g: the distance between the UAV and the tunnel wall versus time}
	\label{fig:ch6:sim5}
\end{figure}

\section{Implementation with a Quadrotor UAV}\label{sec:UAVimpl}

Our navigation algorithm was developed using a general kinematic model applicable to many vehicles moving in 3D constrained environments.
Specific implementation details for quadrotor UAVs including control design and online trajectory generation method description are provided in this section.
This is the implementation used in our proof-of-concept experiment.

\subsection{Quadrotor Dynamics}

The kinematic model \eqref{1} can be extended to include quadtrotor dynamics.
To that effect, we define two coordinate frames, namely an inertial frame $\{\mathcal{I}\}$ and a body-fixed frame $\{\mathcal{B}\}$ attached to the UAV.
The origin of $\{\mathcal{I}\}$ can be chosen arbitrary in $\R^3$, and the origin of $\{\mathcal{B}\}$ coincides with the UAV's center of mass (COM).
The attitude of the UAV is expressed as a rotation matrix $\RotMatrix \subset SO(3): \{\mathcal{B}\}\to \{\mathcal{I}\}$.
An associated vector $\Omega$ is defined in $\{\mathcal{B}\}$ representing the angular velocity of the UAV relative to $\{\mathcal{I}\}$. 
Additionally, Euler angles (roll $\phi$, pitch $\theta$ and yaw $\psi$) or quaternions can also be used to describe the UAV attitude where transformations between the three representations are widely known.
Hence, the model from \cite{hamel2002dynamic,faessler2017differential} is used neglecting wind and rotor drag effects which is given by:
\begingroup
\allowdisplaybreaks
\begin{align}
	\bm{\dot{c}}(t) &= \bm{V}(t) \label{equ:model1}\\
	\bm{\dot{V}}(t) &= -g \bm{e}_3 + T(t) \RotMatrix(t) \bm{e}_3 \label{equ:model2}\\
	\dot{\RotMatrix}(t) &= \RotMatrix(t) \bm{\hat{\Omega}}(t)\label{equ:model3} \\
	\bm{\dot{\Omega}}(t) &= \bm{J}^{-1} \Big(\bm{\tau}(t) -\bm{\Omega}(t) \times \bm{J} \bm{\Omega}(t)\Big)\label{equ:model4}
\end{align}
\endgroup
where $g$ is the gravitational constant, $\bm{e}_3=[0,0,1]^T$, $T(t) \in \R^{+}$ is the mass-normalized collective thrust, $\bm{\hat{\Omega}}(t)$ is a skew-symmetric matrix defined according to $\bm{\hat{\Omega}} \bm{r} = \bm{\Omega} \times \bm{r}$ for any vector $\bm{r}\in\R^3$, $\bm{J}$ is the inertia matrix with respect to $\{\mathcal{B}\}$, and $\bm{\tau}(t) \in \R^3$ is the torques input vector defined in $\{\mathcal{B}\}$.
The above model can be modified to consider the effects of disturbances as in \cite{faessler2017differential,garcia2020robust} for a more robust control design especially when flying near to tunnels boundaries in narrow spaces.
We will assume that a low-level attitude controller exists for $\bm{\tau}(t)$ which can achieve any desired attitude $\RotMatrix_{des}(t)$.
Hence, the control design provided in the next subsection considers $T(t)$ and $\RotMatrix_{des}(t)$ as control inputs.
Note that this section adopts the notation of representing vectors and matrices using boldface letters while scalar quantities are represented using light letters.

\subsection{Control}\label{sec:uavCont}

A sliding-mode based controller design is presented here for the system \eqref{equ:model1}-\eqref{equ:model4} based on the differential-flatness property of quadrotor dynamics.
In \cite{mellinger2011minimum,faessler2017differential}, it has been shown that the model \eqref{equ:model1}-\eqref{equ:model4} is differentially flat such that it is possible to express the system states and inputs in terms of four flat outputs, namely $x$, $y$, $z$ and $\psi$, and their derivatives.

Consider a smooth reference trajectory to be tracked characterized by $\bm{r}(t)=[x_r(t), y_r(t), z_r(t), \psi_r(t)]$ with bounded time derivatives.
We define trajectory tracking errors according to (i.e. position and velocity tracking errors):
\begin{equation}\label{equ:errors}
	\bm{e}_{\bm{c}}(t) = \bm{c}_r(t) - \bm{c}(t),\ \bm{e}_{\bm{V}}(t) = \bm{\dot{c}}_r(t) - \bm{V}(t)
\end{equation}
where $\bm{c}_r(t)=[x_r(t), y_r(t), z_r(t)]^T$.
A sliding variable is then introduced as follows:
\begin{equation}\label{equ:slidingsurface}
	\bm{\sigma}(t) = \bm{e}_{\bm{V}}(t) + \bm{K}_1 \tanh(\mu\bm{e}_{\bm{c}}(t))
\end{equation}
where $\bm{K}_1 \in \R^{3\times 3}$ is a positive-definite diagonal matrix, $\tanh(\bm{v})\in\R^3$ is the element-wise hyperbolic tangent function for a vector $\bm{v}\in\R^3$, and $\mu>0$.
By applying Lyapunov's direct method, it can be easily found that this choice of a sliding variable will guarantee that both $\bm{e}_{\bm{c}}(t)$ and $\bm{e}_{\bm{V}}(t)$ asymptotically converge to $\bm{0}=[0,0,0]^T$ when the system trajectories reach the sliding surface $\bm{\sigma}(t)=\bm{0}$.

By taking the time derivative of \eqref{equ:slidingsurface}, one can get:
\begin{equation}\label{equ:Sdot}
	\bm{\dot{\sigma}}(t) = \bm{\ddot{c}}_r(t) - \bm{\dot{V}}(t) + \mu\bm{K}_1 \Big(\bm{e}_{\bm{V}}(t) \odot \text{sech}^2(\mu\bm{e}_{\bm{c}}(t))\Big)
\end{equation}
where $\text{sech}^2(\bm{v})=[\text{sech}^2(v_x),\text{sech}^2(v_y),\text{sech}^2(v_z)]^T$ for some vector $\bm{v}=[v_x,v_y,v_z]^T$, and $\bm{v}_1\odot \bm{v}_2 \in \R^3$ is defined as the element-wise product between the two vectors $\bm{v}_1,\bm{v}_2 \in \R^3$.

Let $\bm{a}_{cmd}(t) = T(t) \RotMatrix(t) \bm{e}_3$ be regarded as a virtual input (i.e. a command acceleration).
Now, we propose the following control law:
\begin{equation}\label{equ:acc_cmd}
	\begin{array}{ll}
		\bm{a}_{cmd}(t) = \ddot{\bm{c}}_{r}(t) + g \bm{e}_{3} &+ \mu\bm{K}_1 \Big(\bm{e}_{\bm{V}}(t) \odot \text{sech}^2(\mu\bm{e}_{\bm{c}}(t))\Big) \\[0.1cm]
		&+ \bm{K}_2 \tanh\left(\mu \bm{\sigma}(t)\right)
	\end{array}
\end{equation}
where $\bm{K}_2 \in \R^{3\times 3}$ is a positive-definite diagonal matrix.
By substituting \eqref{equ:model2} and \eqref{equ:acc_cmd} into \eqref{equ:Sdot}, we obtain the following:
\begin{equation}\label{equ:Sdot_final}
	\bm{\dot{\sigma}}(t) = - \bm{K}_2 \tanh\left(\mu \bm{\sigma}(t)\right)
\end{equation}
Equation \eqref{equ:Sdot_final} clearly implies that $\bm{\sigma}(t)$ is asymptotically stable.
Hence, the control law \eqref{equ:acc_cmd} will force the system trajectories to reach the sliding surface $\bm{\sigma}(t) = 0$ which leads to $\bm{e}_{\bm{c}}(t) \to \bm{0}$ and $\bm{e}_{\bm{V}}(t) \to \bm{0}$ as $t \to \infty$.

Now, the input thrust $T(t)$ and the desired attitude $\RotMatrix_{des}=[\bm{x}_{\mathcal{B},des}(t),\ \bm{y}_{\mathcal{B},des}(t),\ \bm{z}_{\mathcal{B},des}(t)]$ can be obtained to achieve \eqref{equ:acc_cmd} and $\psi_r(t)$ according to the following:
\begin{align}
	\bm{z}_{\mathcal{B},des}(t) &= \frac{\bm{a}_{cmd}(t)}{\|\bm{a}_{cmd}(t)\|} \\
	\bm{y}_{\mathcal{B},des}(t) &= \frac{ \bm{z}_{\mathcal{B},des}(t) \times \bm{x}_{C}(t) }{\|\bm{z}_{\mathcal{B},des}(t) \times \bm{x}_{C}(t)\|} \\
	\bm{x}_{\mathcal{B},des}(t) &= \bm{y}_{\mathcal{B},des}(t) \times \bm{z}_{\mathcal{B},des}(t) \\
	T(t) &= \bm{a}_{cmd}^T(t) \bm{R}(t) \bm{e}_3
\end{align}
where $\bm{x}_{C}(t)$ is defined as:
\begin{equation}\label{equ:yCref}
	\bm{x}_{C}(t) = [\cos\psi_{r}(t),\ \sin\psi_{r}(t),\ 0]^T
\end{equation}
A low-level attitude controller is then used to compute $\bm{\tau}(t)$ that can achieve the tracking $\bm{R}(t) \to \bm{R}_{des}(t)$.

\subsection{Online Trajectory Generation}\label{sec:trajGen}

In the current implementation, we use $G_1$ and $G_2$ defined in the proposed strategy to determine the direction of progressive motion through the tunnel with minimum jerk trajectories.
A computationally efficient solution proposed in \cite{mueller2015computationally} is adopted to generate minimum jerk trajectories for $(x,y,z)$ which can be done independently for each axis.
This solution treats the problem as an optimal control problem of a triple integrator system for each output with a state vector $\bm{s}(t) = [q(t),\ \dot{q}(t),\ \ddot{q}(t)]$ where $q=\{x,y,z\}$, and the jerk $\dddot{q}(t)$ is taken as input.
Furthermore, to produce minimum jerk solutions, the following cost function is used:
\begin{equation}
	J_c = \frac{1}{t_f}\int_{0}^{t_f}\dddot{q}^2(t)dt
\end{equation}
where $t_f$ is the duration of a motion segment.
The optimal solution to this problem is \cite{mueller2015computationally}:
\begin{equation}\label{equ:trajectory}
	\bm{s}^*(t) = \left[\begin{array}{c}
		\frac{k_1}{120}t^5 + \frac{k_2}{24}t^4 + \frac{k_3}{6}t^3 + \frac{\ddot{q}_0}{2}t^2 + \dot{q}_0t + q_0 \\
		\frac{k_1}{24}t^4 + \frac{k_2}{6}t^3 + \frac{k_3}{2}t^2 + \ddot{q}_0 t + \dot{q}_0 \\
		\frac{k_1}{6}t^3 + \frac{k_2}{2}t^2 + k_3 t + \ddot{q}_0
	\end{array}\right]
\end{equation}
where $(q_0,\ \dot{q}_0, \ddot{q}_0)$ are the components of the initial state vector $\bm{s}(0)$, and $(k_1,\ k_2,\ k_3)$ are solved for to satisfy the desired final state $\bm{s}(t_f)$.

So, at every computation cycle, equation \eqref{equ:trajectory} is used for each flat output to generate a trajectory segment by setting the boundary conditions as follows:
\begin{itemize}
	\item \textit{initial state:} the current state of the UAV $\bm{s}(t_0)$ where $t_0$ is the time at which computation starts or a time ahead to allow for computation latency where the states gets estimated from the trajectory currently being executed.
	\item \textit{final state:} the final position $(x(t_f),y(t_f),z(t_f))$ is set to be $G_1$, and $\psi(t_f)$ is determined such that the vehicle is oriented towards $G_2$ from $G_1$.
	Furthermore, the final velocity is set to be
	\begin{equation}
		(\dot{x}(t_f),\dot{y}(t_f),\dot{z}(t_f))=v_{avg}\frac{G_2-\bm{c}}{G_2-\bm{c}}
	\end{equation}
	where $v_{avg}$ is some desired average velocity to keep the UAV moving.
\end{itemize}
Note that a smooth trajectory for the yaw angle can be generated considering some constant yaw rate with the boundary conditions $\psi(t_0)$ and $\psi(t_f)$.

\begin{remark}
	Another possible implementation for our approach is by relying directly on velocity commands based on \eqref{cont} in the quadrotor control design without the need for localization.
	In this case, the command acceleration in \eqref{equ:acc_cmd} can be designed differently such as:
	\begin{equation}\label{equ:acc_cmd2}
		\begin{aligned}
			\bm{a}_{cmd}(t) = \bm{K}_3 \tanh\Big(\mu (\bm{V}_{cmd}(t) - \bm{V}(t))\Big) + g \bm{e}_{3}
		\end{aligned}
	\end{equation}
	where $\bm{K}_3$ is a positive definite gain matrix with some condition related to the bound of $||\bm{\dot{V}}_{cmd}(t)||$, and $\bm{V}_{cmd}(t)$ is a filtered version of \eqref{cont} obtained by applying some smoothing technique.
\end{remark}

\subsection{Perception Pipelines \& Robust Implementation}

Good interpretation of sensors measurements is a crucial component for navigation.
There are different factors that affect the design of perception systems.
Overall system cost, payload capacity, power requirements and required UAV size have great impact on deciding what kind of sensors to use.
For example, lightweight 3D LIDARs can be used to provide a sensing solution with a great field of view (FOV) but their sizes and expensive costs need to be considered.
Recently, solid-state 3D LIDARs have been developed to a state where they can even provide better solutions for UAVs in terms of size and cost.
Alternatively, the use of stereo and depth cameras tends to be popular with small sized UAVs \cite{sanchez2018survey}.
However, such depth sensors have narrow FOV, limited range, noisy depth measurements and problems with reflective or highly absorptive surfaces \cite{naudet2021constrained}.
This adds more challenges on perception algorithms development to produce reliable and robust solutions. %
In this section, we provide two possible perception pipelines based on the suggested navigation approach with different computational costs.
The goal of both algorithms is to determine an estimate of the gravity centers $G_1$ and $G_2$ described by our navigation strategy.

\subsubsection{Simple Algorithm}\label{sec:simple_perception}
The first algorithm is targeted towards vehicles with very limited computational power.
It has basic steps to allow for low-latency perception at the expense of being prone to some situations where the vehicle may need to hover and rotate to be able to continue progressing through the tunnel.

The recent available point cloud from onboard sensors are processed at certain rate according to the following.
Consider that all calculations are made in a camera-fixed frame $\{\mathcal{C}\}$ which has a known transformation relative to the body-fixed frame.
Note that we will use the notation ${}^{\mathcal{C}}\bm{p}$ to represent vectors expressed in the $\{\mathcal{C}\}$ frame.
The first step is to downsample the raw point cloud to reduce the computational cost.
Then, the nearest $k$ points to the the current UAV position ${}^{\mathcal{C}}\bm{c}$ are determined where ${}^{\mathcal{C}}\bm{c}$ is the UAV's COM expressed in the camera frame and $k$ can be chosen arbitrary. %
A geometric average ${}^{\mathcal{C}}\bar{G}_a$ is then calculated for the nearest neighbors points.
Let ${}^{\mathcal{C}}\bm{i}_{a} = {}^{\mathcal{C}}\bar{G}_a - {}^{\mathcal{C}}\bm{c}$ be the vector towards ${}^{\mathcal{C}}\bar{G}_a$.
Then, we compute $\alpha$ which is the angle between the current velocity vector and ${}^{\mathcal{C}}\bm{i}_{a}$ using:
\begin{equation}
	\alpha = \cos^{-1}\Big(\frac{{}^{\mathcal{C}}\bm{i}_{a}\cdot{}^{\mathcal{C}}\bm{V} }{\|{}^{\mathcal{C}}\bm{i}_{a}\| \|{}^{\mathcal{C}}\bm{V}\|}\Big)
\end{equation}
Another vector ${}^{\mathcal{C}}\bm{i}_{b}$ is obtained next by rotating ${}^{\mathcal{C}}\bm{V}$ by $-\alpha$ in the plane containing both ${}^{\mathcal{C}}\bm{V}$ and ${}^{\mathcal{C}}\bm{i}_{a}$.
Hence, a second point ${}^{\mathcal{C}}\bar{G}_b$ can be computed as the gravity center of the tunnel wall points in the direction of ${}^{\mathcal{C}}\bm{i}_{b}$.
Similarly, another two points ${}^{\mathcal{C}}\bar{G}_c$ and ${}^{\mathcal{C}}\bar{G}_d$ can be obtained associated with rotating the vector ${}^{\mathcal{C}}\bm{V}$ by angles $\beta$ and $-\beta$ respectively where the relation between $\alpha$ and $\beta$ is defined in our strategy.
Hence, ${}^{\mathcal{C}}G_1$ and ${}^{\mathcal{C}}G_2$ are computed according to:
\begin{equation}
	\begin{array}{ll}
		{}^{\mathcal{C}}G_1 = 0.5({}^{\mathcal{C}}\bar{G}_a + {}^{\mathcal{C}}\bar{G}_b), & {}^{\mathcal{C}}G_2 = 0.5({}^{\mathcal{C}}\bar{G}_c + {}^{\mathcal{C}}\bar{G}_d)
	\end{array}
\end{equation}
which can then be transformed to the inertial frame $\{\mathcal{I}\}$ to get $G_1$ and $G_2$.

\subsubsection{Complete \& Robust Algorithm}\label{sec:robust_perception}
The proposed strategy in this work have shown good results in simulations using sensors with wide FOV (ex. LIDAR or multiple cameras).
Based on experimental observations, additional layers can be added to the overall algorithm to deal with some practical aspects when using sensors with narrow FOV  for increased robustness.
The algorithm can be summarized using the following steps whenever new measurements arrives or at some other update rate slower than sensors measurements rate:
\begin{enumerate}
	\item Downsample the raw point cloud to obtain $\mathcal{W}_s$ for improved computational performance.
	\item Select $N$ points $({}^{\mathcal{C}}\bm{O}_1, {}^{\mathcal{C}}\bm{O}_2,\cdots,{}^{\mathcal{C}}\bm{O}_N)$ ahead of the vehicle position according to \eqref{equ:Oi} at distances $D_1, D_2, \cdots, D_N$ where $D_1 < D_2 < \cdots  D_N$ (rather than just 2 as suggested earlier).
	\item Filter the downsampled point cloud $\mathcal{W}_s$ around each point ${}^{\mathcal{C}}\bm{O}_i$ obtained from the previous step to extract the corresponding sections $\mathcal{O}_i \subset \mathcal{W}_s$ as defined in \eqref{equ:ptcloud_section} with some tolerance $\epsilon > 0$.
	\item For each filtered section $\mathcal{O}_i$, compute the geometric mean $\bm{g}_i \in \R^3$ of all the points (i.e. the centroid) and add those centroids to a list $\mathcal{L}$ such that $\mathcal{L} = \{\bm{g}_1, \bm{g}_2, \cdots, \bm{g}_N\}$.
	\item compute the minimum distance from each point in $\mathcal{L}$ to the downsampled point cloud $\mathcal{W}_s$, and flag it as valid if $d_{g,i} > d_{safe}$ where
	\begin{equation*}
		d_{g,i} = argmin_{\bm{p}_i \in \mathcal{O}_i} \|\bm{g}_i - \bm{p}_i\|
	\end{equation*}
	Otherwise, flag the point as invalid.
	\item For each invalid point in $\mathcal{L}$, compute a safer position by moving it away from the nearest neighbors in $\mathcal{W}_s$ in the direction of the average estimated surface normals at the nearest neighbors with some distance larger than $ d_{safe} - d_{g,i}$.
	\item Add the adjusted points to $\mathcal{L}$, and flag them as valid or invalid according to step 4.
	\item Iterate through $\mathcal{L}$ to obtain the closest two valid points as ${}^{\mathcal{C}}G_1$ and ${}^{\mathcal{C}}G_2$ which can then be transformed from the sensors frame to obtain $G_1$ and $G_2$.
	\item If the number of valid points in $\mathcal{L}$ is less than 2, increment some counter $i$ which was initialized with 0. Otherwise, reset $i = 0$. If the counter $i$ reaches some predefined threshold $k$, terminate.
\end{enumerate}

\section{Proof-of-Concept Experiment}\label{sec:exp}

\subsection{Implementation Details}\label{sec:impl}

We conducted different experiments to validate our navigation method using a quadrotor UAV with two different sensors configurations.
Three experimental cases are given in this section showing flights through deformed tunnel-like structures made in the lab using our suggested method.
In all experiments, the sides of the tunnel were nonsmooth and curved, and the ceiling structure was not even.
The first case deals with a deformed tunnel with approximately $2.4m$ width, $2m$ height and $5.2m$ in length.
The last two experiments were carried out using a different structure where the tunnel is more curvy in the middle.
Also, it gets narrower towards the end where it becomes more challenging to fly such that it has a $2.3m$ width and $3m$ height at the beginning which reduces to $1.5m$ width by $1.4m$ in height towards the end for a total length of approximately $6m$.
In the last case, the tunnel floor was elevated at the beginning by adding a blocking obstacle which was $0.9m$ high.

A custom made quadrotor is used in the experiments which is shown in \cref{fig:ch6:uav}.
It is equipped with a Pixhawk Flight Controller Unit (FCU) which contains a 32-bit Microcontroller Unit (MCU) running the PX4 firmware in addition to a set of sensors including gyroscopes, accelerometers, magnetometer and barometer.
The open-source PX4 software stack handles the low-level attitude stabilization and implements an Extended Kalman Filter (EKF) that fuses IMU data and visual odometry to provide an estimate of the quadrotor states (i.e. position, attitude and velocity).
To allow for a fully autonomous operation, our UAV is equipped with an onboard computer connected with two cameras for localization and sensing.
Hence, all computations needed to implement our navigation method can be done onboard.
A powerful onboard computer (Intel NUC), Intel® Core™ i5-8259U CPU @ 2.30GHz, is used to implement the overall navigation stack.
Intel RealSense tracking camera T265 is used for visual localization, and Intel RealSense D435/L515 depth cameras are used as to detect the tunnel surface.
The T265 module provides monochrome fisheye images with a great FOV, and it contains an IMU and a Vision Processing Unit (VPU) to implement onboard visual SLAM.
The D435 camera provides depth information as 3D point clouds, and it has a Depth FOV of ($87^o\pm3^o\ \text{Horizontal}\ \times\ 58^o\pm1^o\ \text{Vertical}\ \times\ 95^o\pm3^o\ \text{Diagonal}$) and a maximum range of approx. $10m$.
However, a shorter range could be used in practice as the D435 depth data are more noisy for points further than 3 meters. 
Note that it is possible to use only the D435 camera to perform both localization and tunnel surface detection on the onboard computer.
The RealSense L515 camera provides more accurate depth data with accuracy of about $5 mm\ to\ 14 mm$ for a range of $9m$ since it is based on solid-state LIDAR technology.
However, it has a narrower FOV of $70^o\ \times\ 55^o \ (\pm3^o)$.
\Cref{fig:ch6:uav} shows the two sensors configuration used in the three experiments where the one on top was used in the first case and the other configuration was used in the other two cases.
The second configuration provides a wider FOV by combining the depth data from both the D435 and L515 depth cameras, which are oriented differently, after applying proper transformations.

\begin{figure}[!htb]
	\centering
	\begin{adjustbox}{minipage=\linewidth,scale=1.0}
	\includegraphics[clip,width=0.48\textwidth]{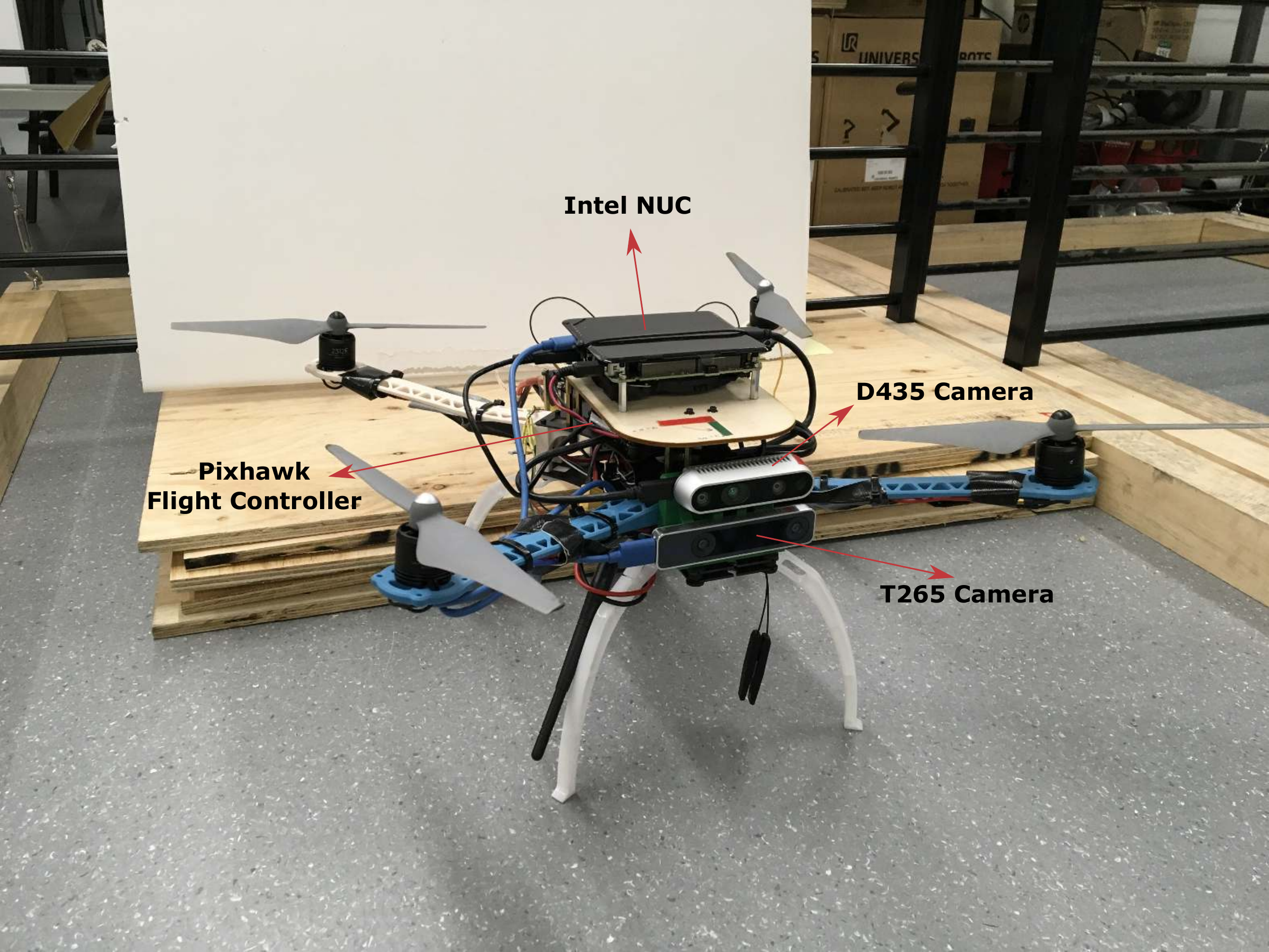}
	\hfill
	\includegraphics[clip,width=0.48\textwidth]{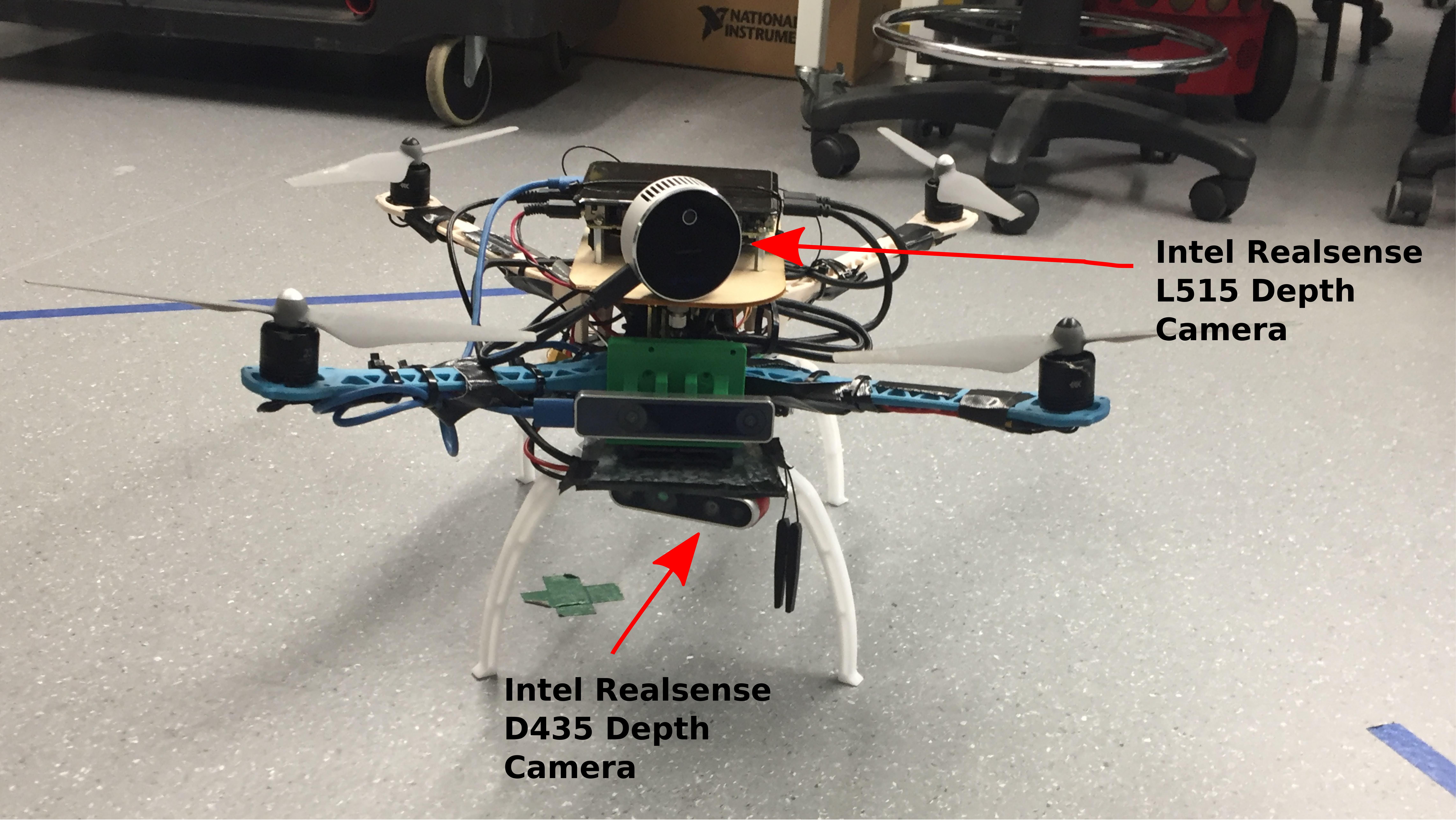}
	\end{adjustbox}
	\caption{The quadrotor used in the experiments with different sensors configurations}
	\label{fig:ch6:uav}
\end{figure}

The Robot Operating System (ROS) framework was adopted to implement the overall navigation software stack as connected nodes (i.e. simultaneously running processes) where each node handles a specific task.
A UAV control node implements the trajectory tracking controller described in \cref{sec:uavCont} to generate thrust and attitude commands for the low-level attitude controller at 100Hz.
These commands are sent to the flight controller unit through a link with the onboard computer (over USB) using the MAVLink messaging protocol through MAVROS library.
The received visual odometry from the T265 camera is also sent to the FCU to be fused with IMU data through an extended Kalman filter.
Also, camera nodes are used to process received 3D point clouds from the depth cameras to make them available for the other nodes with an update rate of 30Hz.
The proposed simpler algorithm described in \cref{sec:simple_perception} was used in the first experiment, and the more robust approach proposed in \cref{sec:robust_perception} was used in the other two cases.
These algorithms were running at 2-10Hz update rates, and they were implemented in C++ using useful tools from the Point Cloud Library (PCL) to handle point clouds processing in a computationally-efficient way.
A downsampling filter using PCL VoxelGrid is applied to the 3D point clouds to reduce the computational burden combined with some other filtering processes such as considering measurements that are within 5 meters or less.
A further processing is applied to assemble a single point cloud from all depth sensors if multiple are used by applying proper transformations from the sensors' frames to the vehicle's body-fixed frame.
The obtained points $G_1$ and $G_2$ from the previous algorithms are used to generate reference trajectories to be sent to the UAV control node where the approach described in \cref{sec:trajGen} was used in the first case.
In the last two cases, the similar idea was used but with slower straight motion trajectories based on trapezoidal velocity profile to deal with the very narrow flying space (i.e. only \eqref{equ:trajectory} was implemented differently).
Note that generating minimum jerk trajectories is recommended to produce less jerky motions; however, some corridor constraints may need to be considered to refine the result of \eqref{equ:trajectory} when flying in very narrow spaces similar to what was done in \cite{mellinger2011minimum}.

A description of the overall hardware and software architecture of our system is shown in \cref{fig:ch6:sysOverview}.

\begin{figure}[!ht]
	\centering
	\includegraphics[clip,width=\columnwidth]{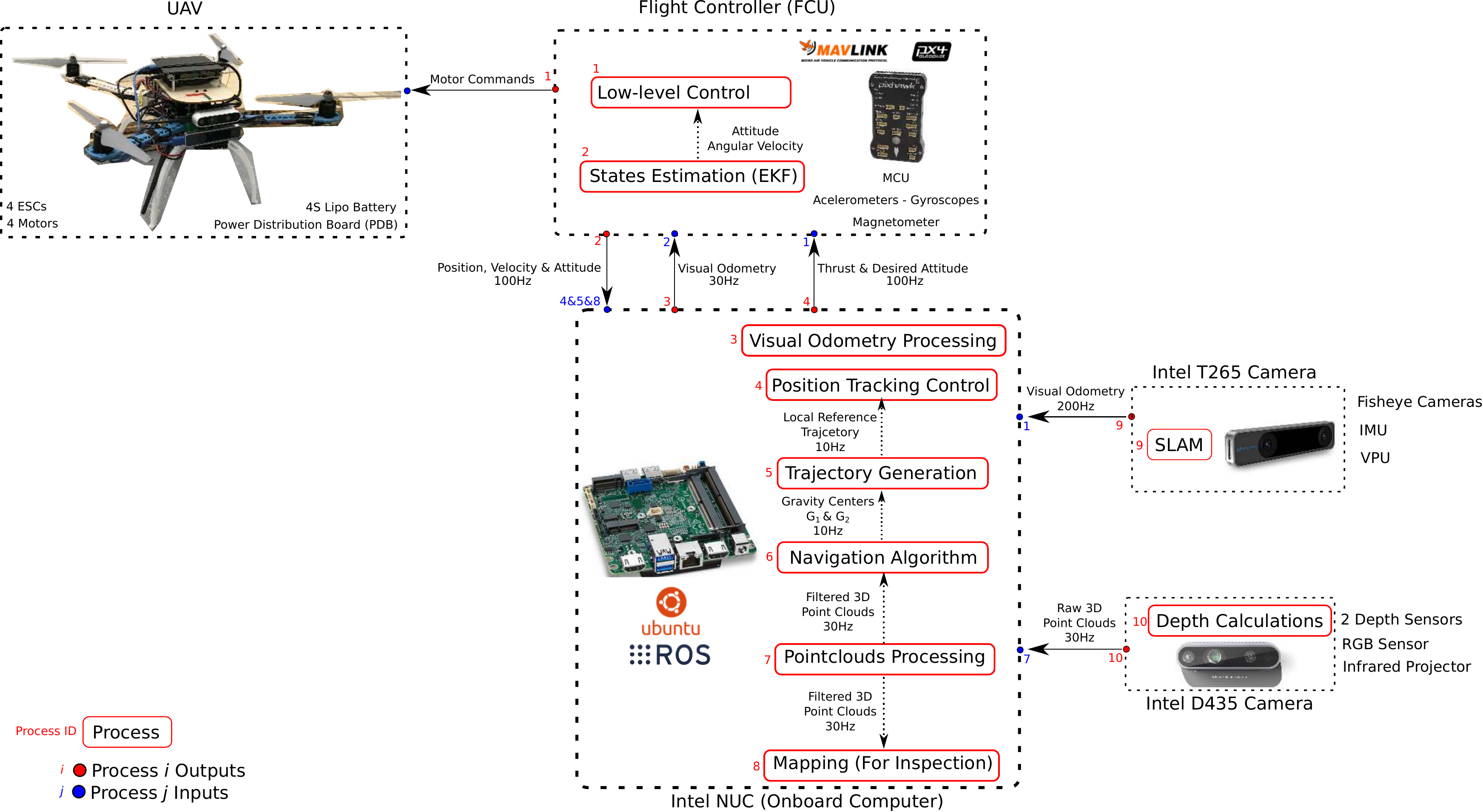}%
	\caption{Hardware and Software Architecture of our UAV system}
	\label{fig:ch6:sysOverview}
\end{figure}

\subsection{Results}

A video of the conducted experiments is available at \href{https://youtu.be/r2Add9lctEU}{https://youtu.be/r2Add9lctEU}.
Snapshots of the motion at different time instants are shown in \cref{fig:ch6:motion,fig:ch6:motion2,fig:ch6:motion3} for the three cases where a line connecting positions at each time instant was added for visualization purposes only (i.e. it is not the actual path).
Additionally, visualizations of the sensors feedback along with the results of the implemented perception pipelines at some specific moments during the flights are shown in \cref{fig:ch6:rviz,fig:ch6:perception_rviz}.

\Cref{fig:ch6:rviz} shows the detected patch of the tunnel surface, the vector directing from $G_1$ to $G_2$ and cameras feedback at the initial time for the first experiment.
In that figure, the current position of the UAV is indicated by the axes named 'base\_link' while the red arrow is at $G_1$ and directing towards $G_2$ as described in \cref{sec:trajGen}.
Notice that an online mapping algorithm was also performed onboard in this case to provide a map of the tunnel for visualization purposes only.
The velocity of the quadrotor during the flight and the distance to tunnel walls is shown in \cref{fig:ch6:results1}.
Moreover, the applied control inputs along with the vehicle's attitude is shown in \cref{fig:ch6:results1b}.
The mass-normalized collective thrust is further normalized to be within $[0,1]$ as required by PX4.
For safety purposes, the maximum value of the input thrust was limited to $0.75$.
Different regions are highlighted on the figures corresponding to the mode of operation.
Initially, sensors and safety checks are done in order to arm the drone before performing a takeoff to some predefined altitude.
Then, the vehicle switches to autonomous mode where the suggested navigation strategy is applied.
Once a terminating condition is detected, the vehicle goes out of the autonomous mode where the control commands are no longer being used in order to land.

It can be seen from the video and \cref{fig:ch6:motion} that the vehicle manages to maintain its movement along the tunnel curvy axis in the first experiment until it reaches the tunnel open end where it goes closer to one of the sides (as can be seen from \cref{fig:ch6:results1}).
It can also be observed that the nonsmooth tunnel surface results in $D_1$ and $D_2$ being dynamic during the motion when using the approach described in \cref{sec:simple_perception}.
This counts as a reaction to any bumps on the surface close to the UAV to achieve a collision-free motion.
The computational latency using this simple approach was less than $1ms$ using the mentioned mini computer.
It was observed in this experiment that using a depth sensor with small FOV which detects only a small patch of the tunnel surface can be very challenging.
This explains the behaviour near the end of the motion due to the tunnel being open and the depth measurements being filtered to only consider information within 2 meters or less.
In that case, a stopping policy was applied at the end to yaw away from the tunnel side and land immediately.
Another possible policy to apply in these situations is to hover and perform a yaw rotation to proceed the movement using that suggested simple perception approach.
One of the used methods in practice to have a wider 3D FOV is to use a mechanism to rotate the sensor at some frequency during the motion (for example, see \cite{kang2016full,kownacki2016concept}).
It is also possible to integrate more sensors by combining vision-based sensors with LIDARs depending on the application requirements and the environment conditions; however, this will reflect on the system overall cost, payload and power requirements.

The observations from the first case motivated the design of the robust algorithm given in \cref{sec:robust_perception} which was applied in the next two experiments.
Only three points were computed (i.e. $N=3$) corresponding to $D_1 = 1.25m$, $D_2 = 1.5m$ and $D_3 = 1.75m$.
Also, the sections extraction tolerance was selected as $\epsilon=0.1m$, and the safety margin was $d_{safe}=0.45m$. %

Different scenarios based on depth measurements are shown in \cref{fig:ch6:perception_rviz} where all points in the list $\mathcal{L}$ are represented with spherical markers with different colors.
Also, corresponding extracted sections $\mathcal{O}_1$, $\mathcal{O}_2$ and $\mathcal{O}_3$ from the point cloud are highlighted in different colors (green, yellow and orange respectively).
Yellow markers represent valid points obtained directly from step~4 as in \cref{fig:ch6:perception_rviz}(a); hence, steps~6-7 were not executed at that computation cycle.
\Cref{fig:ch6:perception_rviz}(b) shows a case where all computed points were invalid (red markers), and valid new points (orange markers) were obtained after performing steps~6-7.
This may happen whenever the vehicle senses only a fraction of a certain side without seeing the side in th opposite direction (the lower part of the tunnel is not detected in that case).
Similar case is shown in \cref{fig:ch6:perception_rviz}(c) where the upper part of the tunnel is not within the sensors' FOV at that point.
As a result, the geometric means will be closer to the detected portion.
However, it is clear from the experiments that the proposed approach managed to handle such cases very well.
Further case is shown in \cref{fig:ch6:perception_rviz}(d) where one of the new obtained points after applying step~6 remains invalid (blue marker).
The computational latency when steps~6-7 are not executed was less than $5ms$.
Otherwise, the latency was less than $70ms$ which can be hugely improved by estimating the surface normals for only the closest fraction of the point cloud rather than the whole downsampled cloud as was done in the experiments.
The surface normals estimation is computationally more expensive than the other steps; however, the overall computational performance is still very low compared to path planning based methods.

The quadrotor's velocity, minimum distance to tunnel walls and control inputs are shown in \cref{fig:ch6:results2,fig:ch6:results2b,fig:ch6:results3,fig:ch6:results3b} for the two cases.
The velocity of the reference trajectory was designed to be around $0.4 m/s$ which can be seen from the actual velocity plot.
The minimum distance to tunnel walls was computed based on the sensors point clouds which might not be a good estimate of the actual distance at some points.
However, these plots indicate that the vehicle maintained a safe distance during the flights.
Thus, these experiments validate the performance of the suggested tunnel navigation strategy.

\begin{figure}[!htb]
	\centering
	\includegraphics[clip,width=0.75\columnwidth]{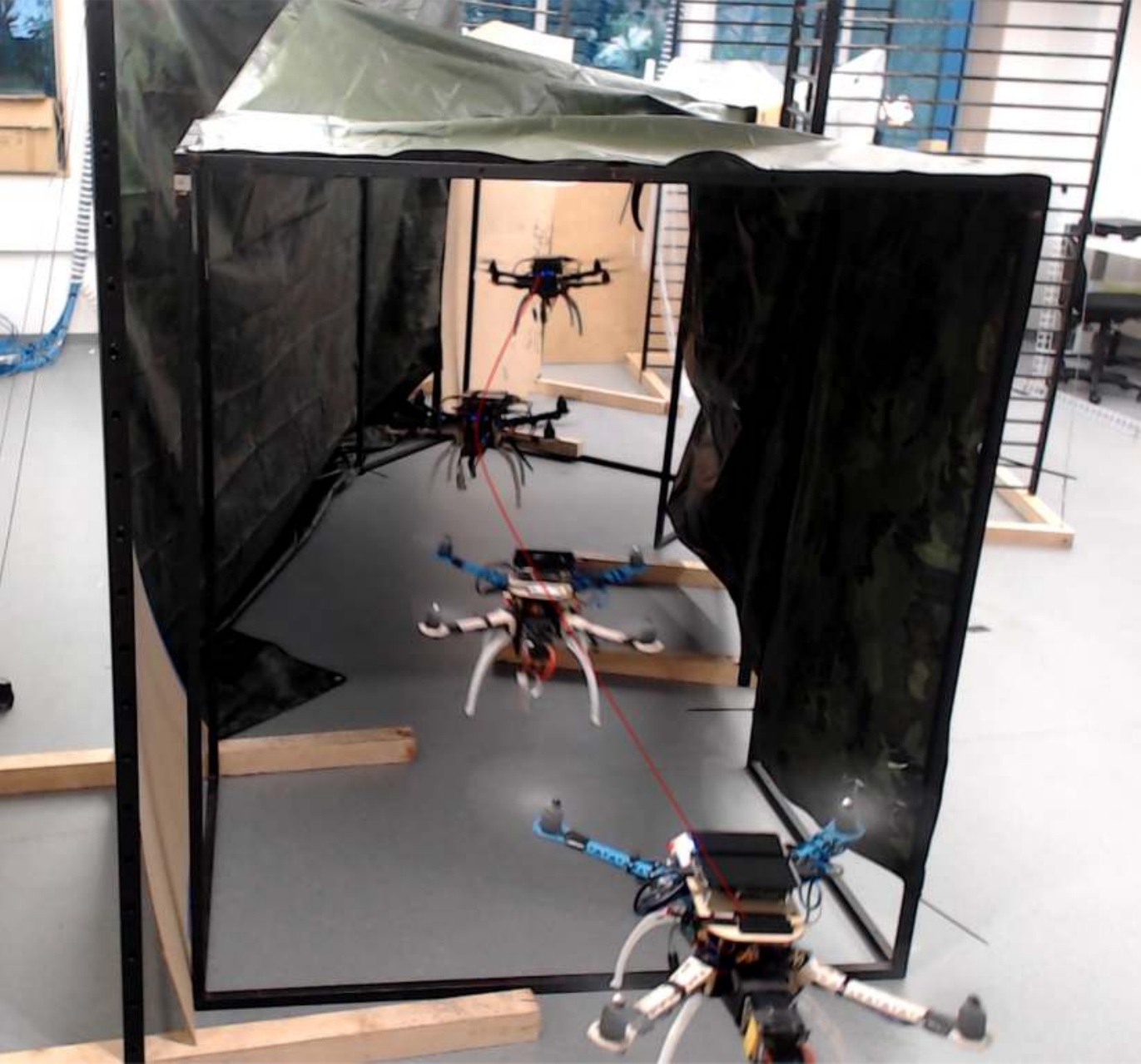}%
	\caption{Snapshots of the UAV position during movement }
	\label{fig:ch6:motion}
\end{figure}

\begin{figure}[!htb]
	\centering
	\includegraphics[clip,width=\columnwidth]{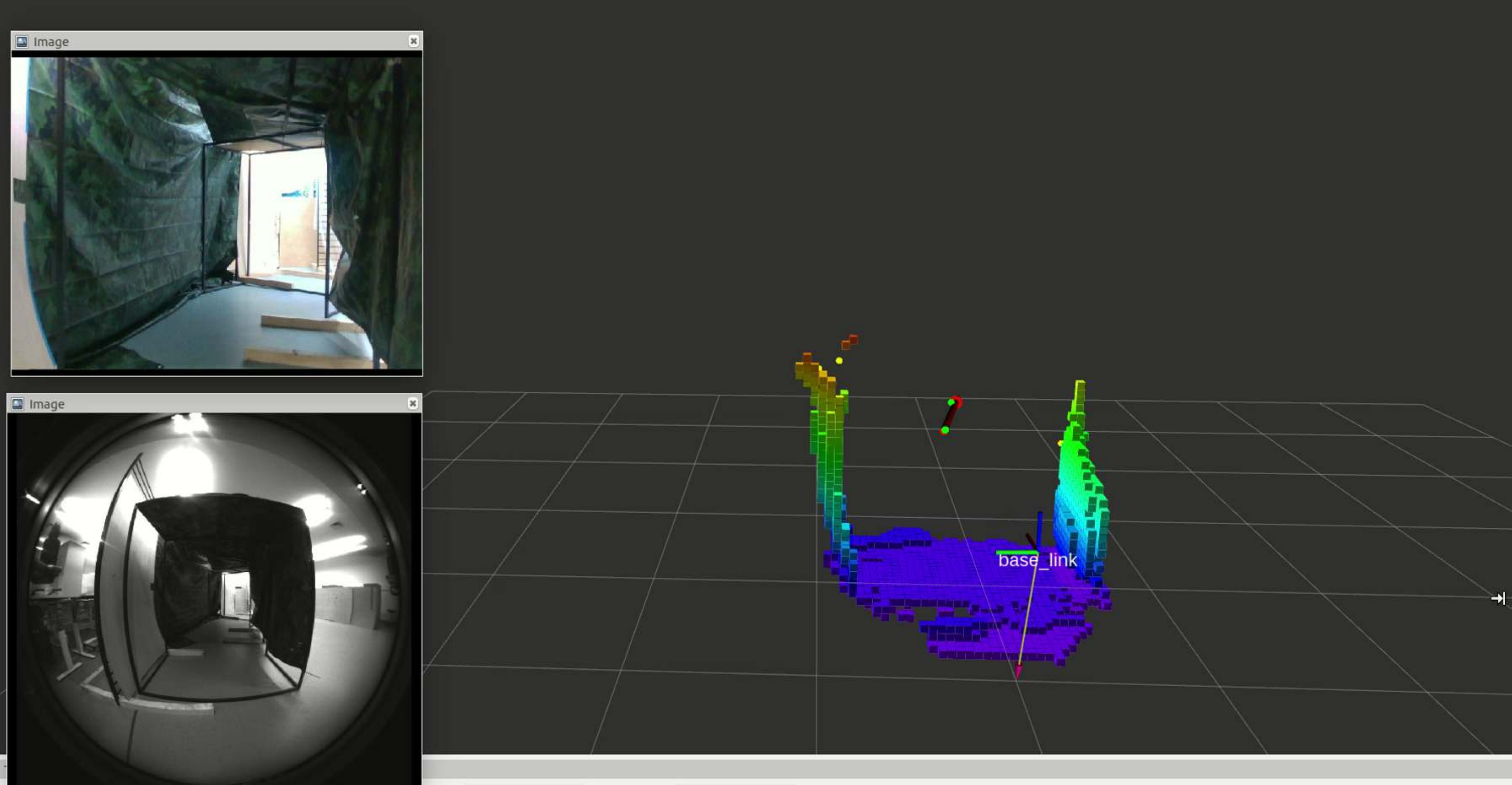}%
	\caption{A fraction of the tunnel wall sensed at motion start along with cameras feedback}
	\label{fig:ch6:rviz}
\end{figure}

\begin{figure}[!htb]
	\centering
	\includegraphics[clip,width=0.75\columnwidth]{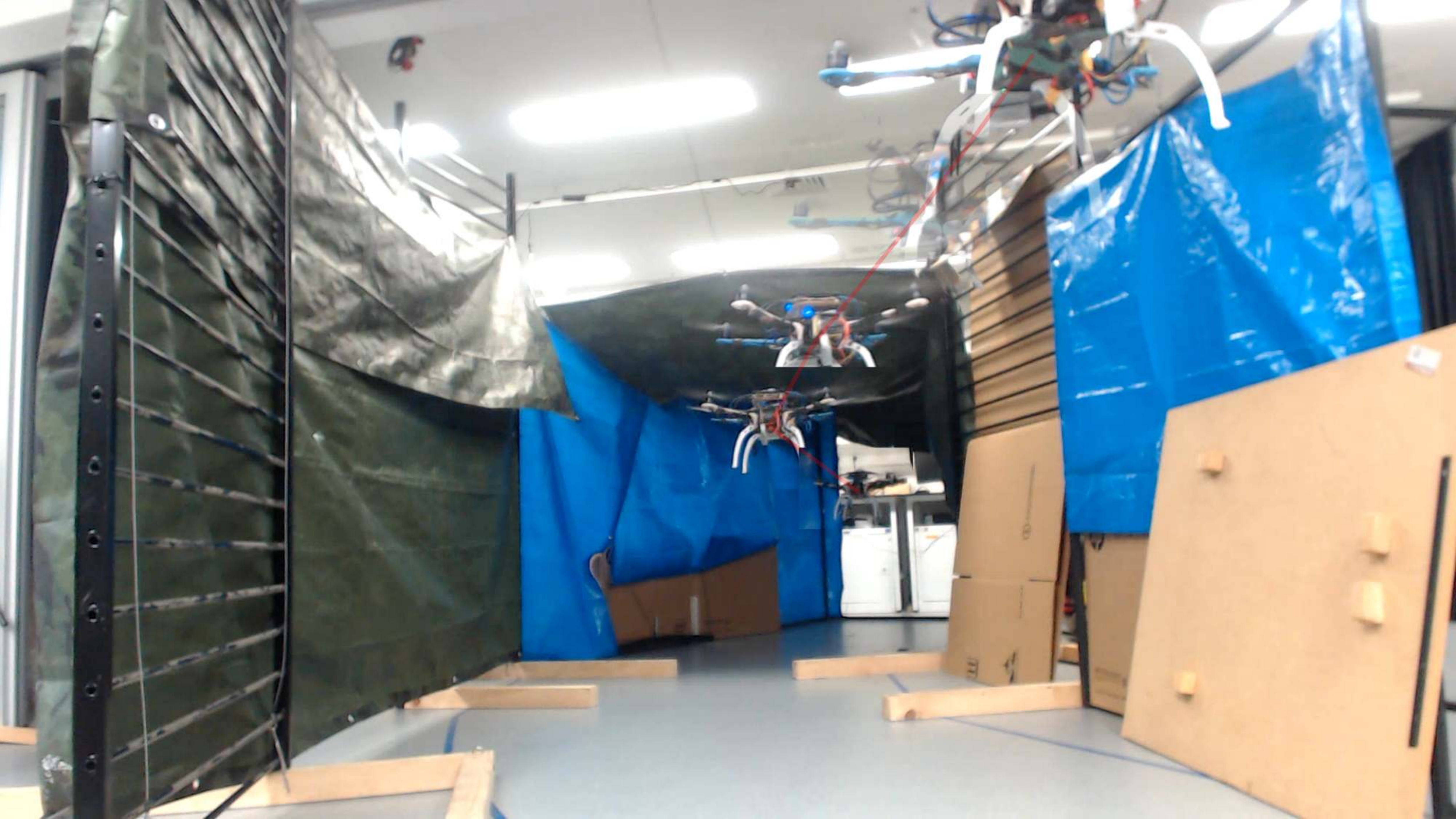}%
	\caption{Snapshots of the UAV position during movement for case 2}
	\label{fig:ch6:motion2}
\end{figure}

\begin{figure}[!htb]
	\centering
	\includegraphics[clip,width=0.75\columnwidth]{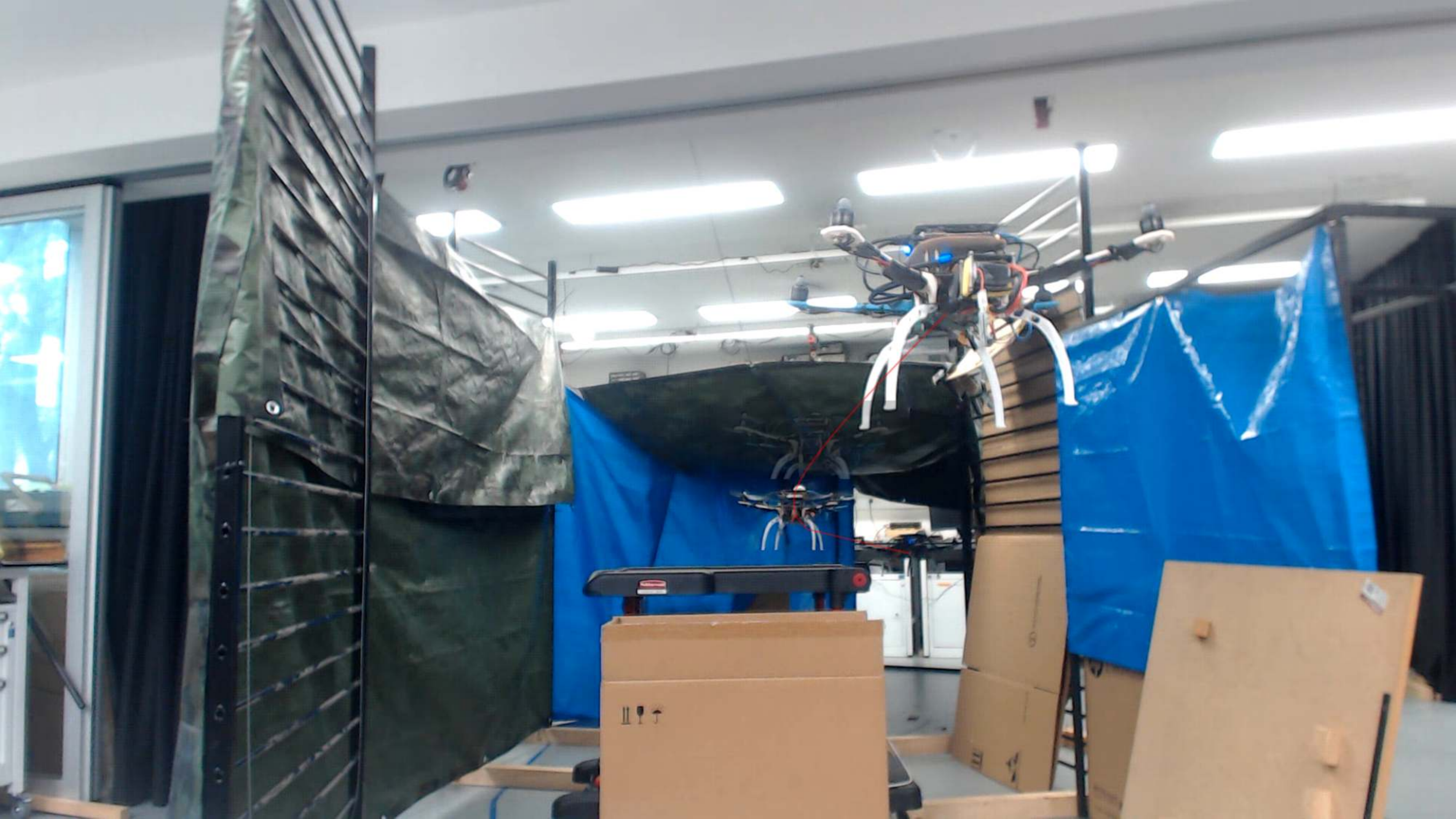}%
	\caption{Snapshots of the UAV position during movement for case 3}
	\label{fig:ch6:motion3}
\end{figure}

\begin{figure}[!htb]
	\centering
	\begin{adjustbox}{minipage=\linewidth,scale=1.0}
		\begin{subfigure}[t]{0.48\textwidth}
			\centering
			\includegraphics[clip, width=\linewidth]{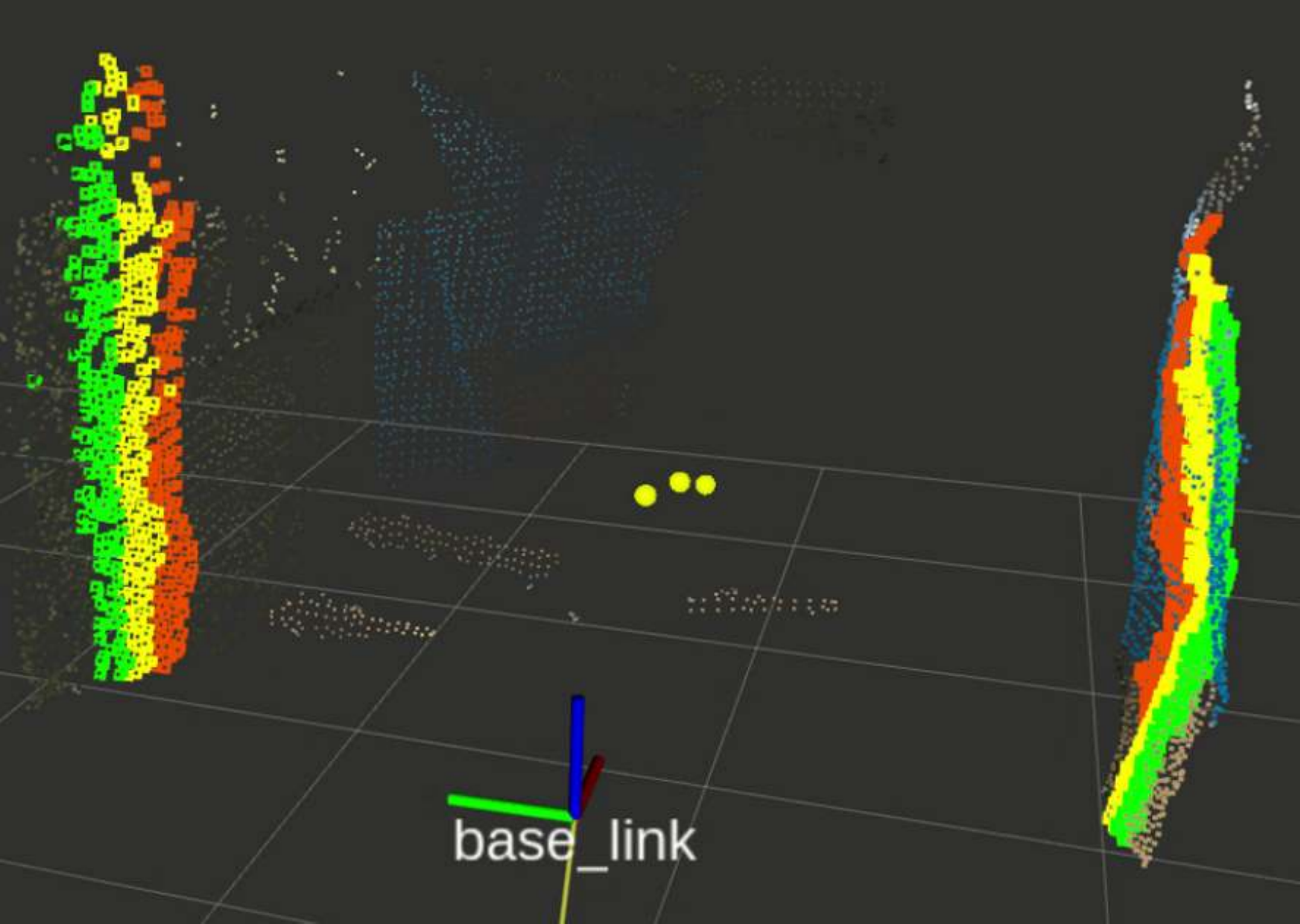} 
			\caption{}
			\label{fig:ch6:rviz2a}
		\end{subfigure}
		\hfill
		\begin{subfigure}[t]{0.48\textwidth}
			\centering
			\includegraphics[clip, width=\linewidth]{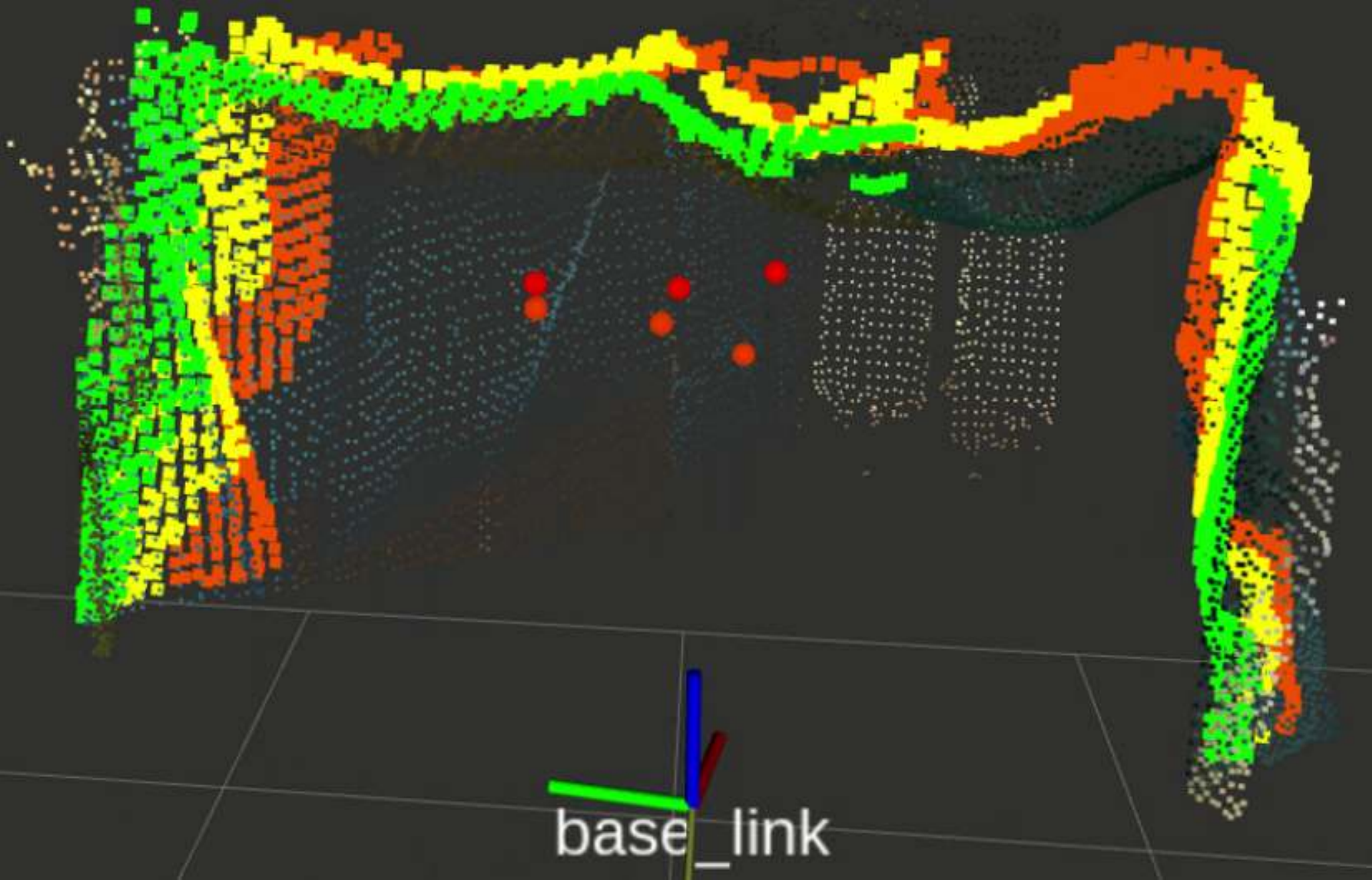} 
			\caption{}
			\label{fig:ch6:rviz2b}
		\end{subfigure}
	
		\begin{subfigure}[t]{0.48\textwidth}
			\centering
			\includegraphics[clip, width=\linewidth]{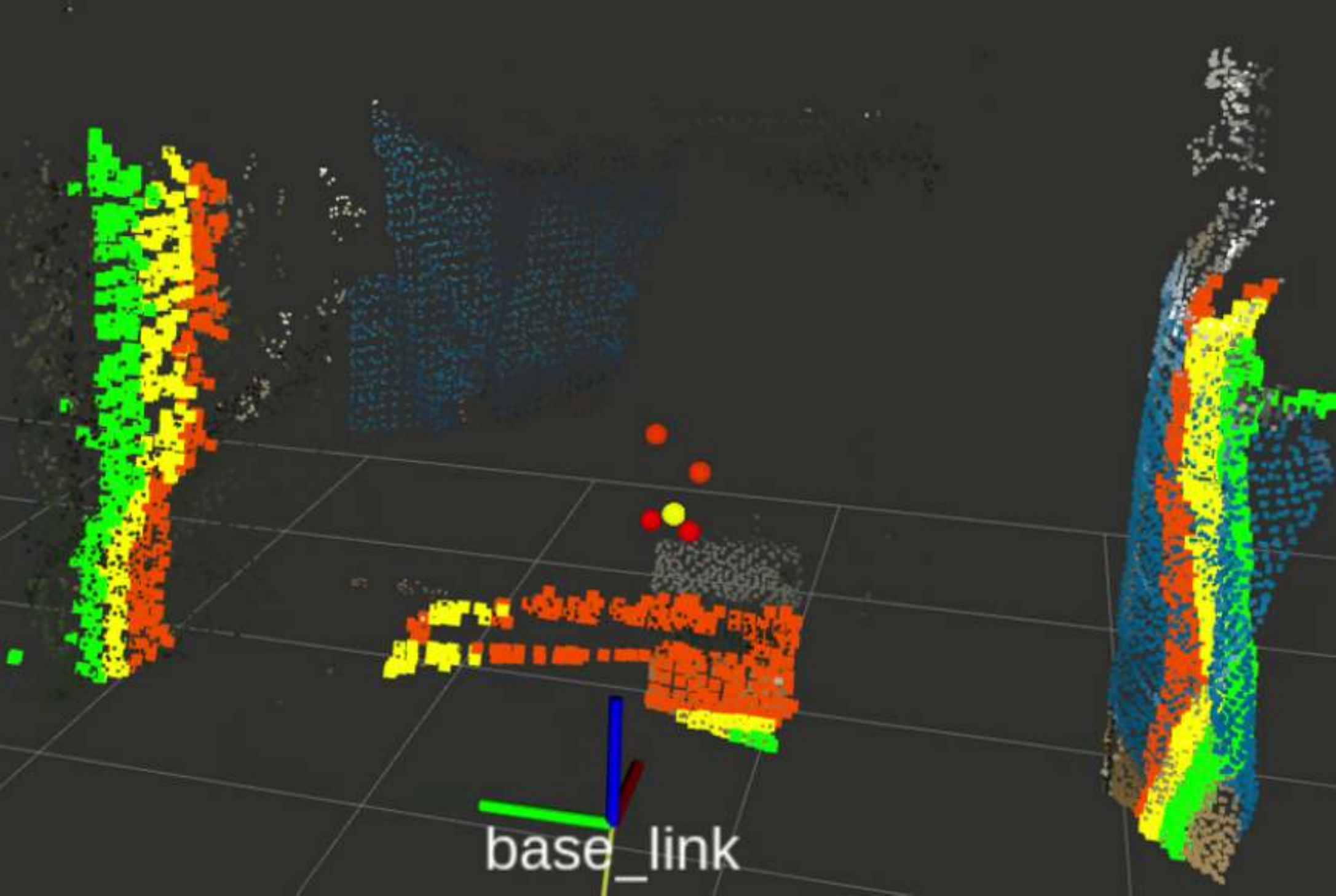} 
			\caption{}
			\label{fig:ch6:rviz3a}
		\end{subfigure}
		\hfill
		\begin{subfigure}[t]{0.48\textwidth}
			\centering
			\includegraphics[clip, width=\linewidth]{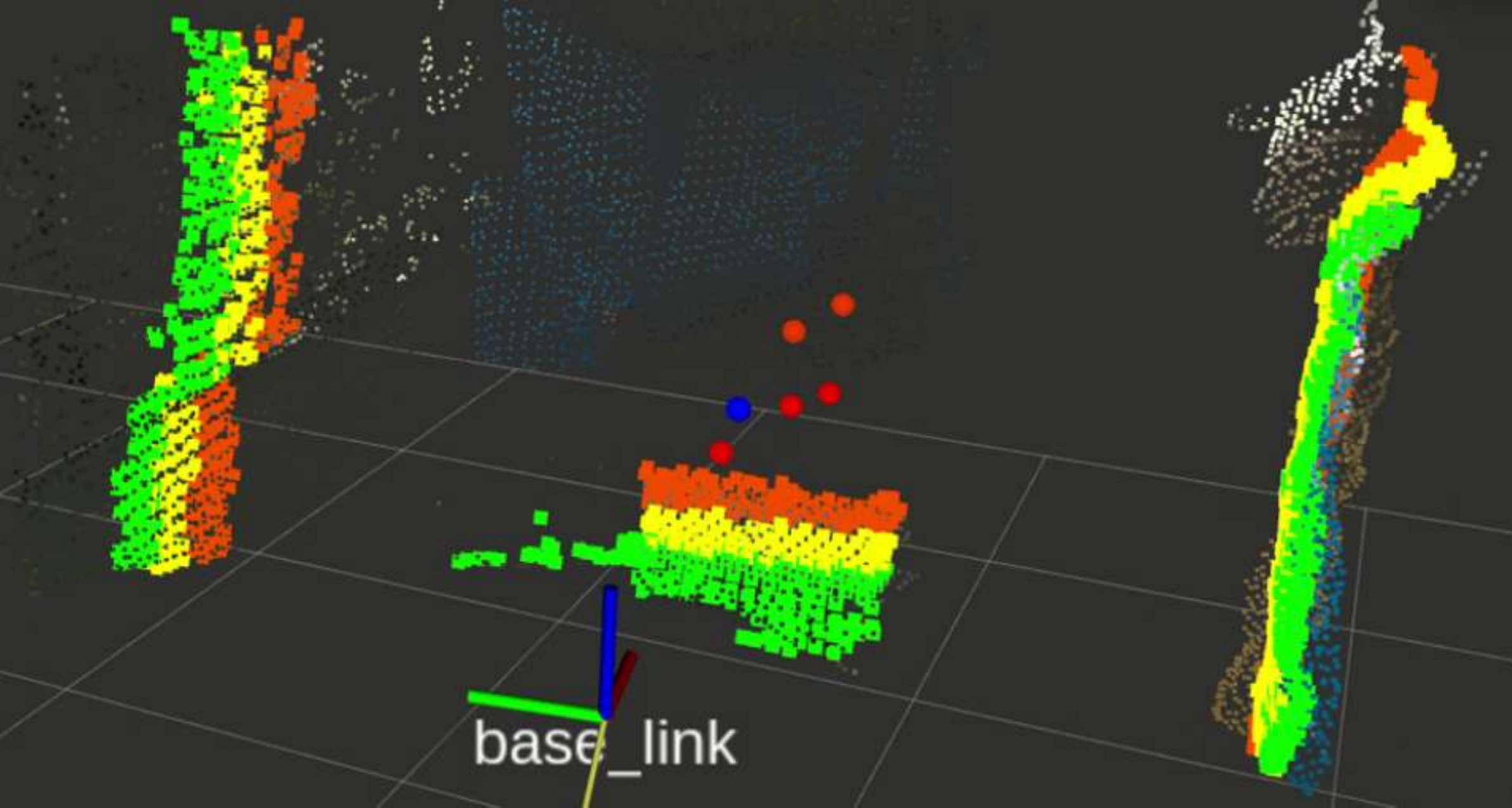} 
			\caption{}
			\label{fig:ch6:rviz3b}
		\end{subfigure}
	\end{adjustbox}
	\caption{Robust perception pipeline visualisation for cases 2 and 3}
	\label{fig:ch6:perception_rviz}
\end{figure}

\begin{figure}[!htb]
	\centering
	\begin{adjustbox}{minipage=\linewidth,scale=0.65}
	\includegraphics[clip,width=\textwidth]{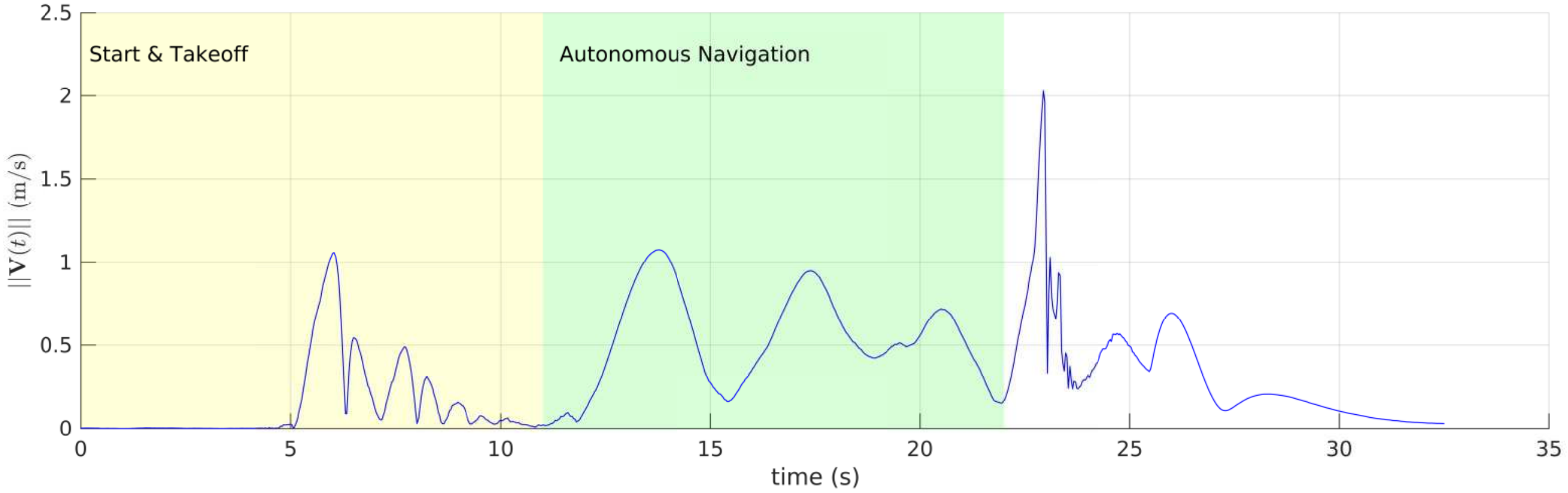} \\
	\includegraphics[clip,width=\textwidth]{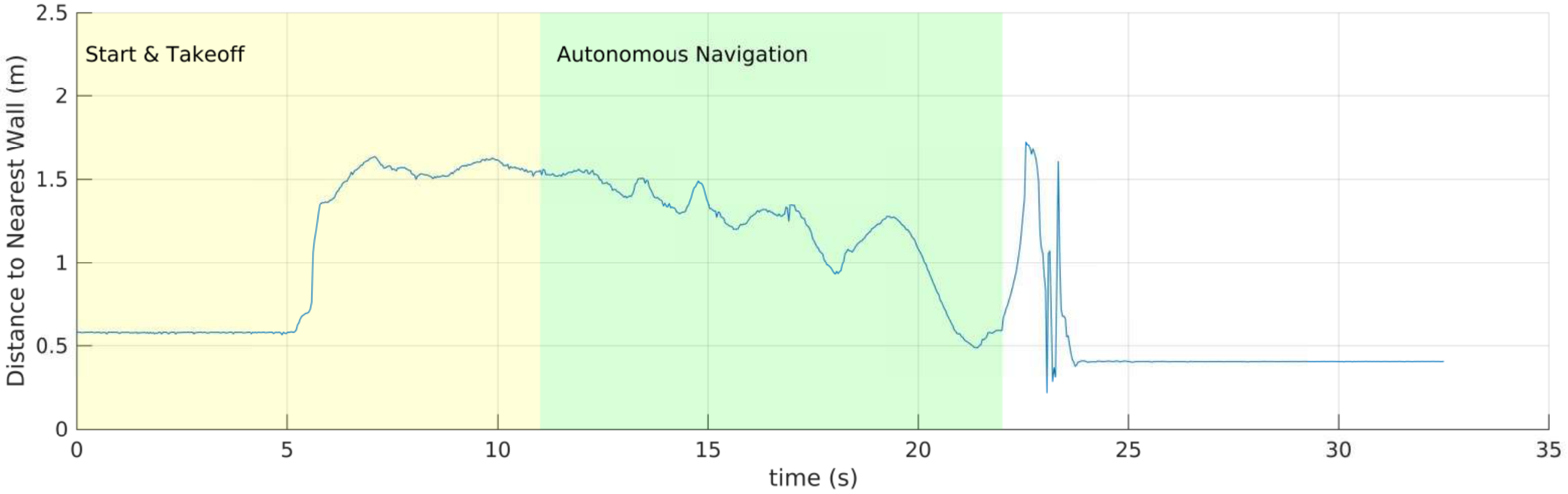}
	\end{adjustbox}
	\caption{UAV velocity \& distance to closest point versus time for case 1}
	\label{fig:ch6:results1}
\end{figure}

\begin{figure}[!htb]
	\centering
	\begin{adjustbox}{minipage=\linewidth,scale=0.65}
	\includegraphics[clip,width=\textwidth]{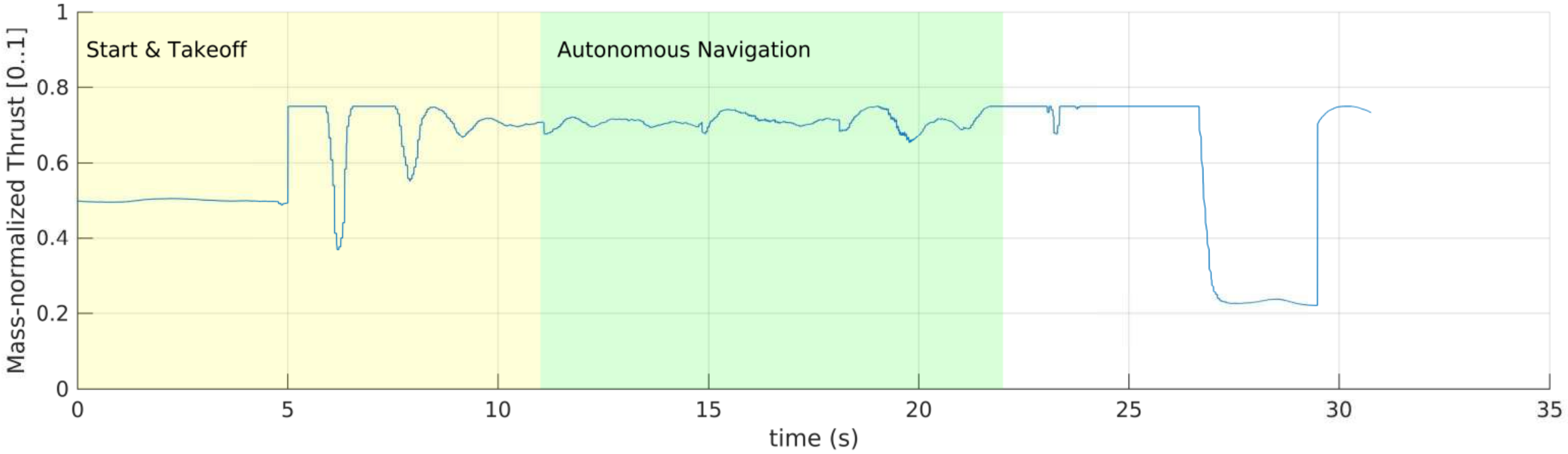} \\
	\includegraphics[clip,width=\textwidth]{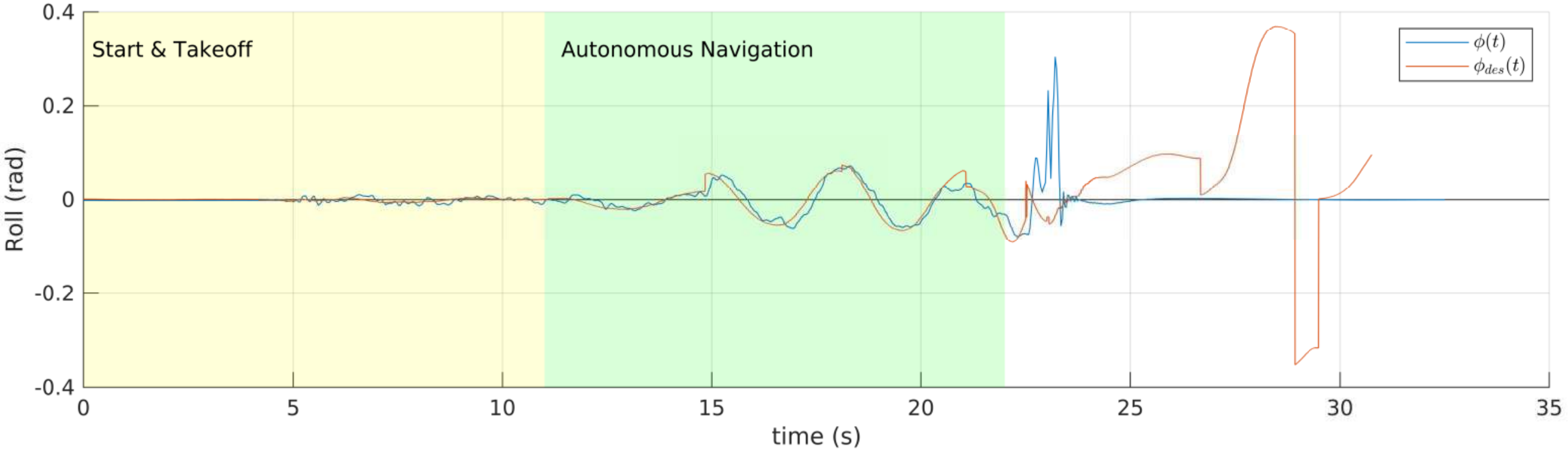} \\
	\includegraphics[clip,width=\textwidth]{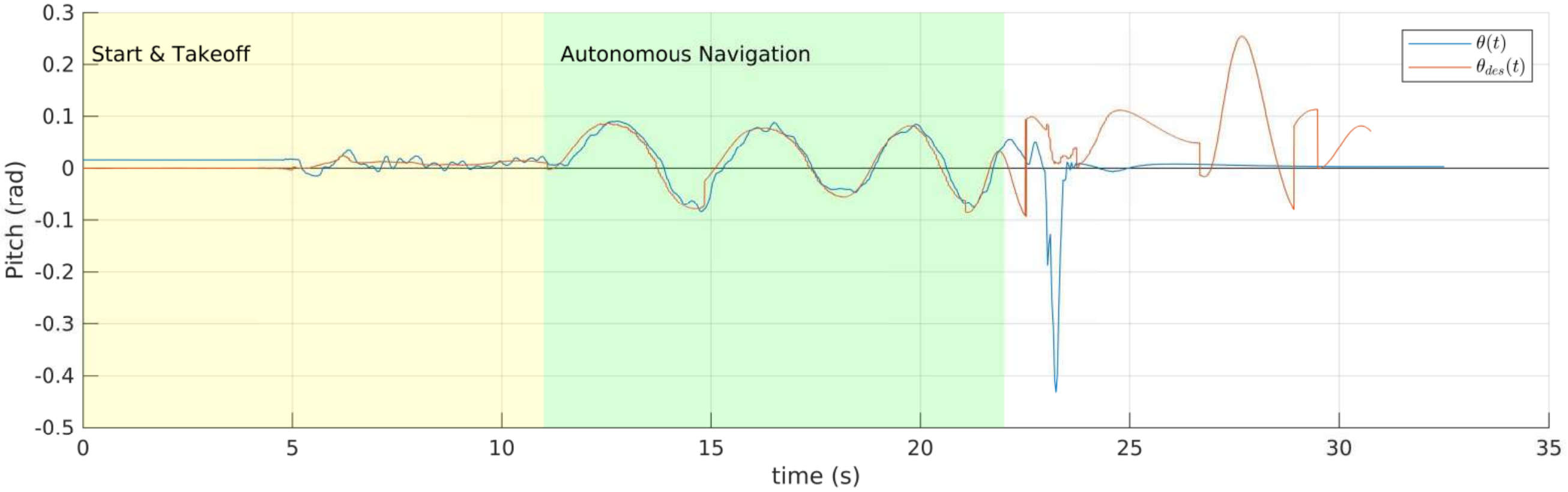} \\
	\includegraphics[clip,width=\textwidth]{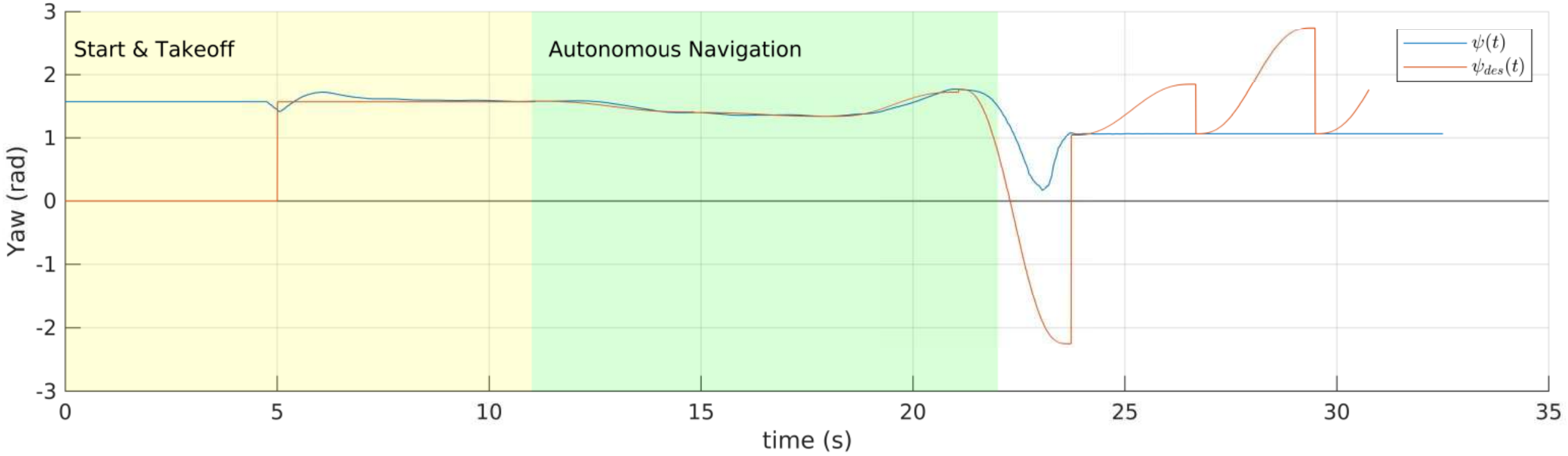}
	\end{adjustbox}
	\caption{Input mass-normalized collective thrust and attitude commands vs time for case 1}
	\label{fig:ch6:results1b}
\end{figure}

\begin{figure}[!htb]
	\centering
	\begin{adjustbox}{minipage=\linewidth,scale=0.65}
	\includegraphics[clip,width=\textwidth]{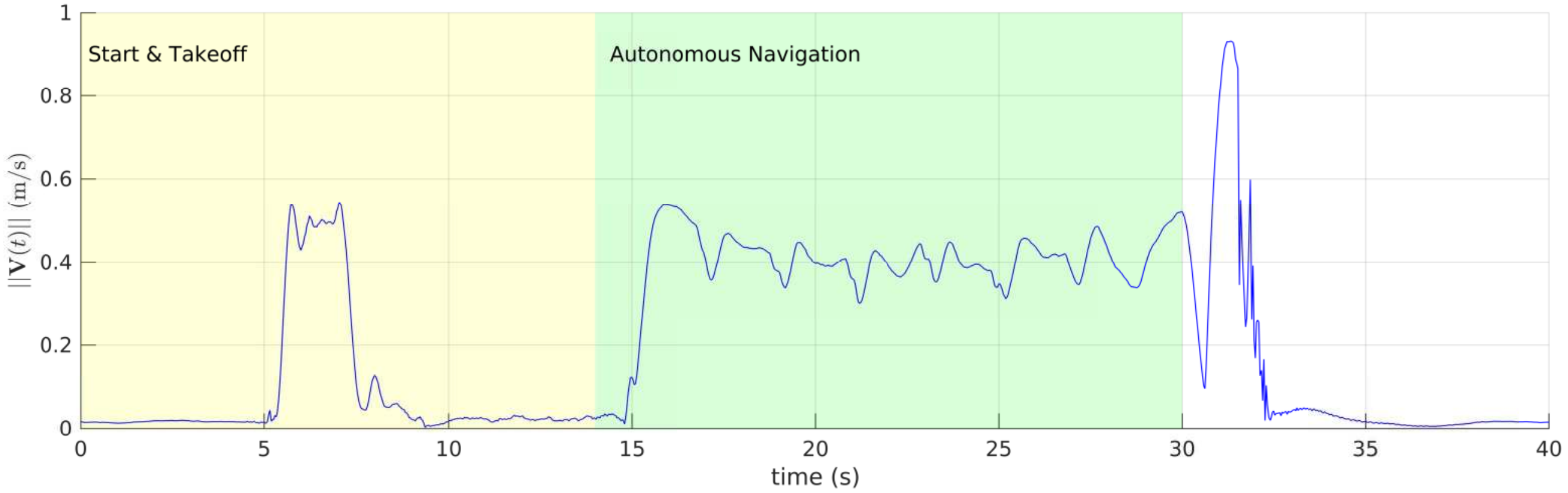} \\
	\includegraphics[clip,width=\textwidth]{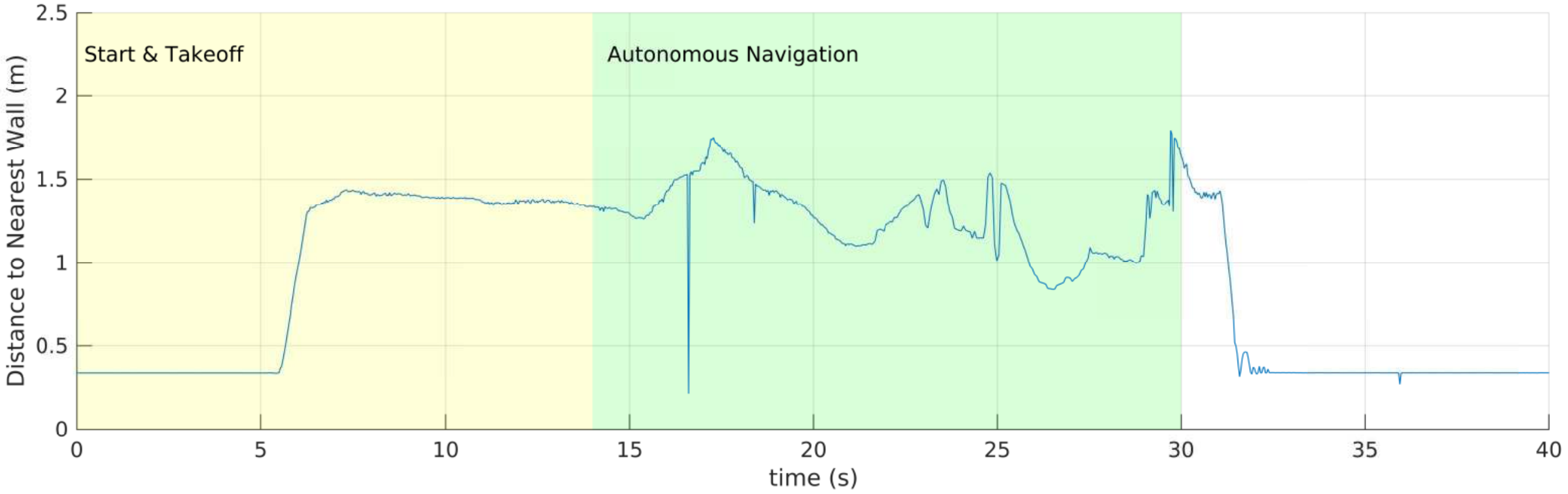}
	\end{adjustbox}
	\caption{UAV velocity \& distance to closest point versus time for case 2}
	\label{fig:ch6:results2}
\end{figure}

\begin{figure}[!htb]
	\centering
	\begin{adjustbox}{minipage=\linewidth,scale=0.65}
	\includegraphics[clip,width=\textwidth]{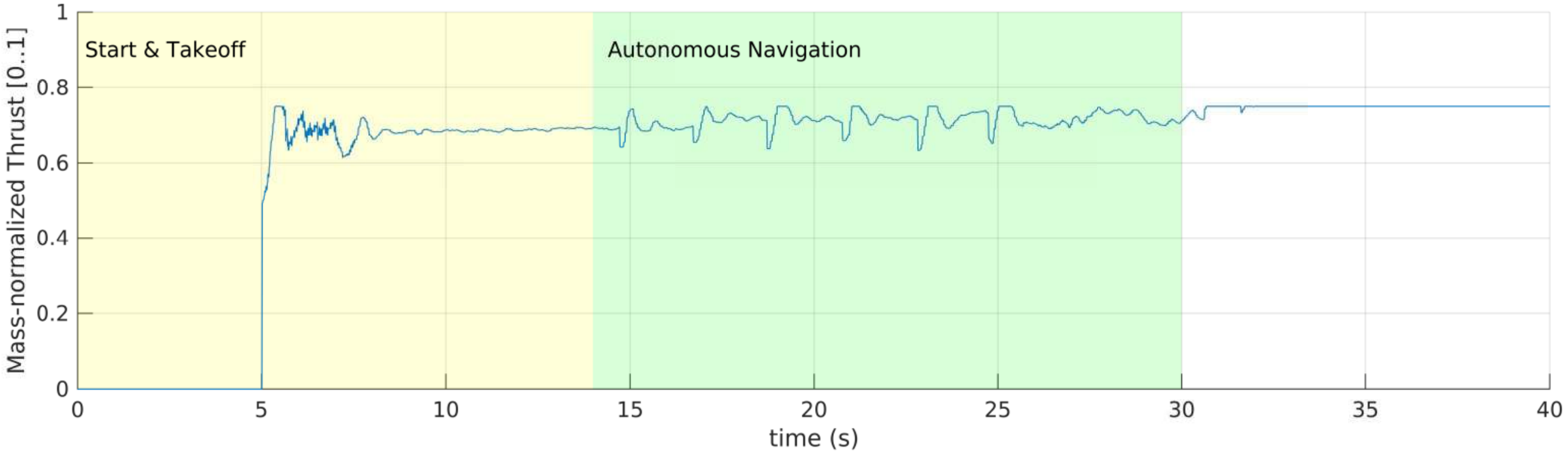} \\
	\includegraphics[clip,width=\textwidth]{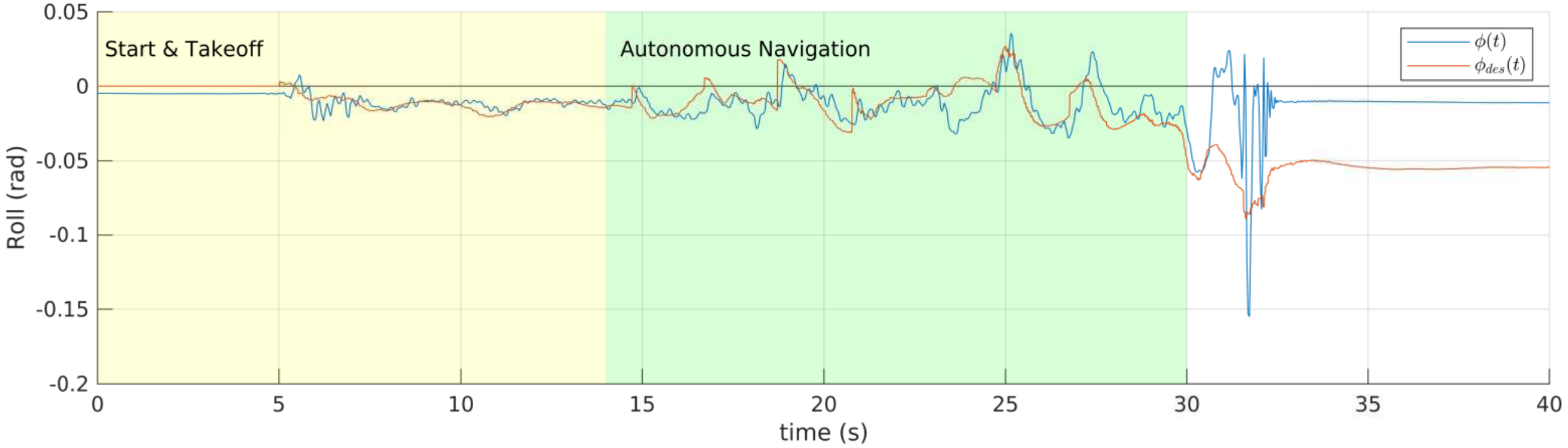} \\
	\includegraphics[clip,width=\textwidth]{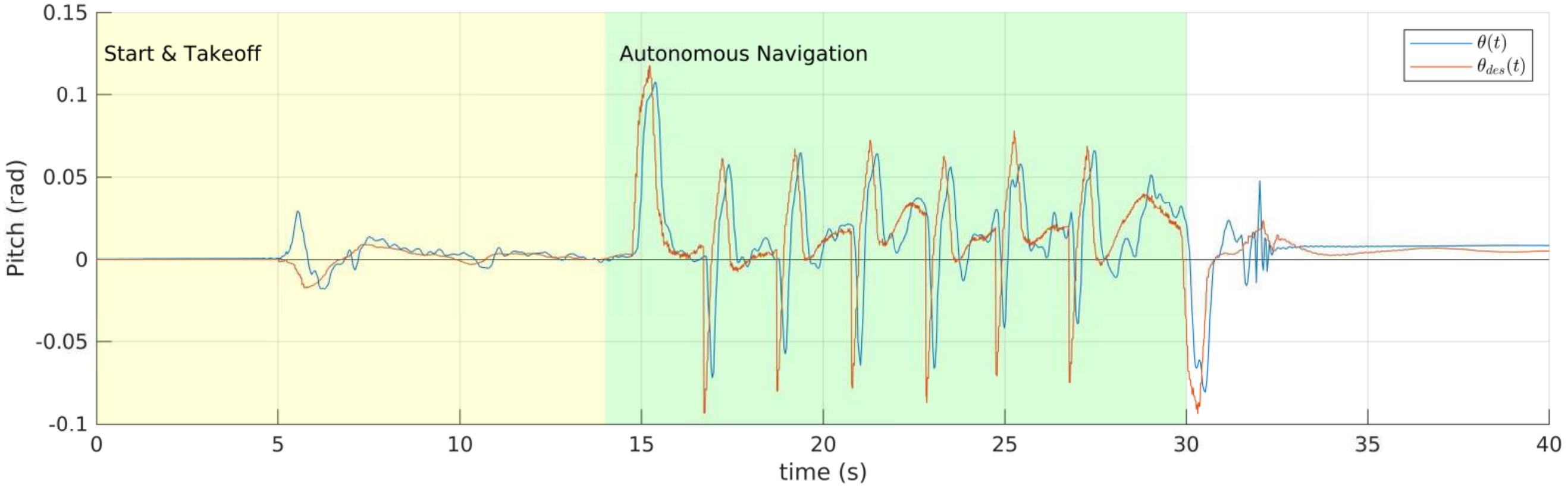} \\
	\includegraphics[clip,width=\textwidth]{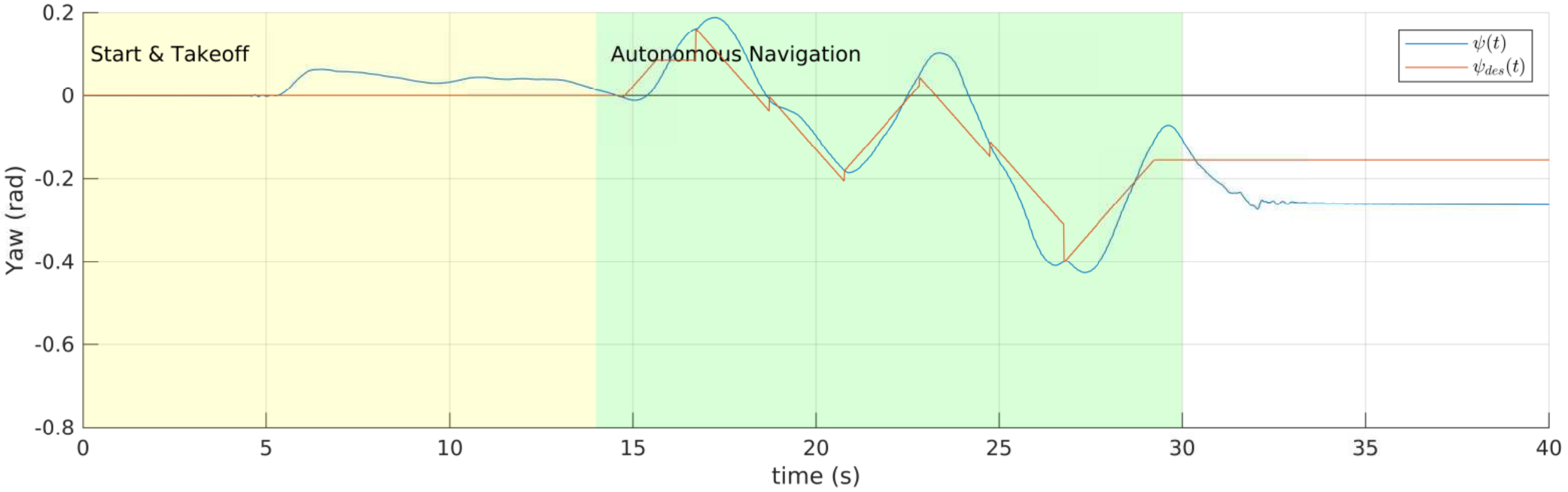}
	\end{adjustbox}
	\caption{Input mass-normalized collective thrust and attitude commands vs time for case 2}
	\label{fig:ch6:results2b}
\end{figure}

\begin{figure}[!htb]
	\centering
	\begin{adjustbox}{minipage=\linewidth,scale=0.65}
	\includegraphics[clip,width=\textwidth]{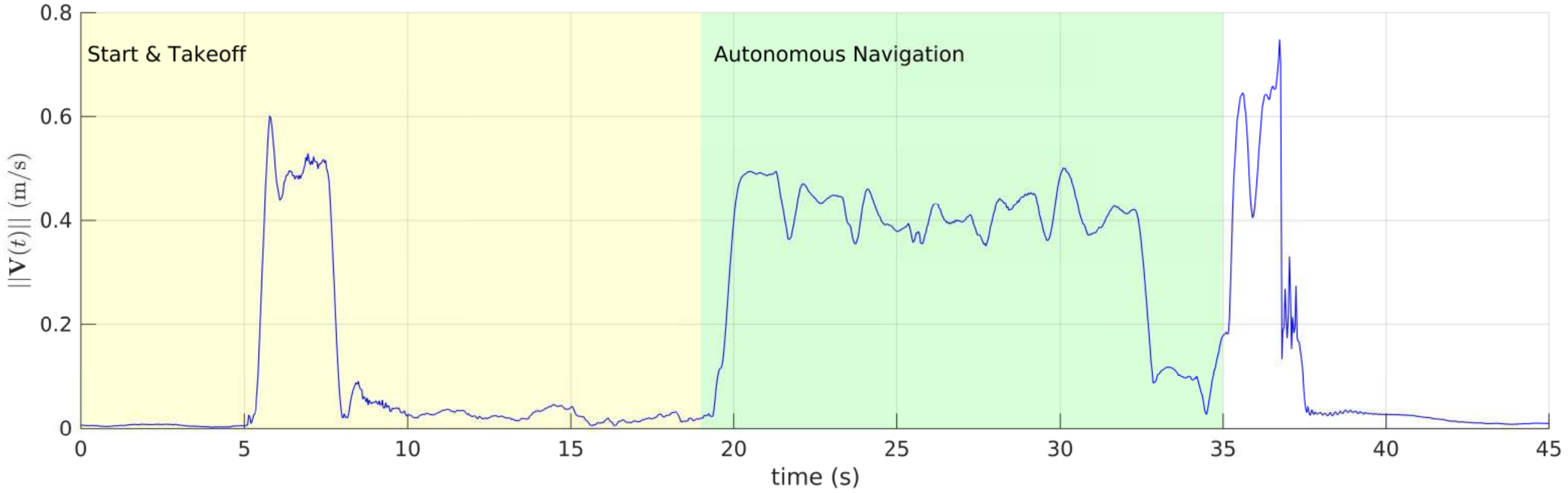} \\
	\includegraphics[clip,width=\textwidth]{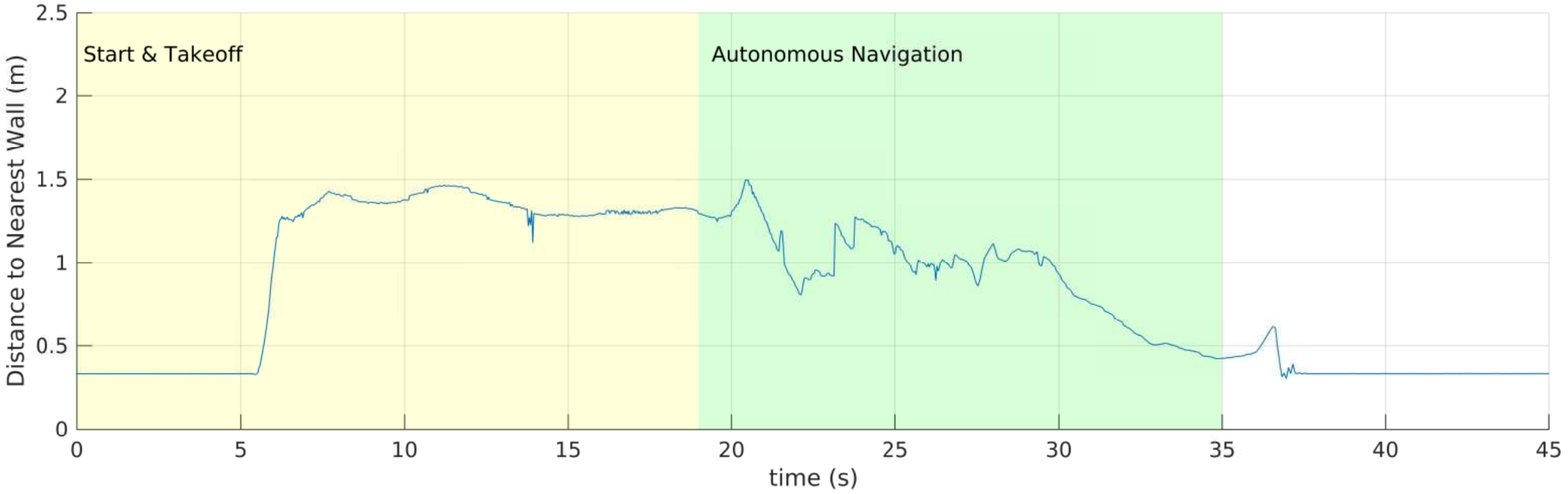}
	\end{adjustbox}
	\caption{UAV velocity \& distance to closest point versus time for case 3}
	\label{fig:ch6:results3}
\end{figure}

\begin{figure}[!htb]
	\centering
	\begin{adjustbox}{minipage=\linewidth,scale=0.65}
	\includegraphics[clip,width=\textwidth]{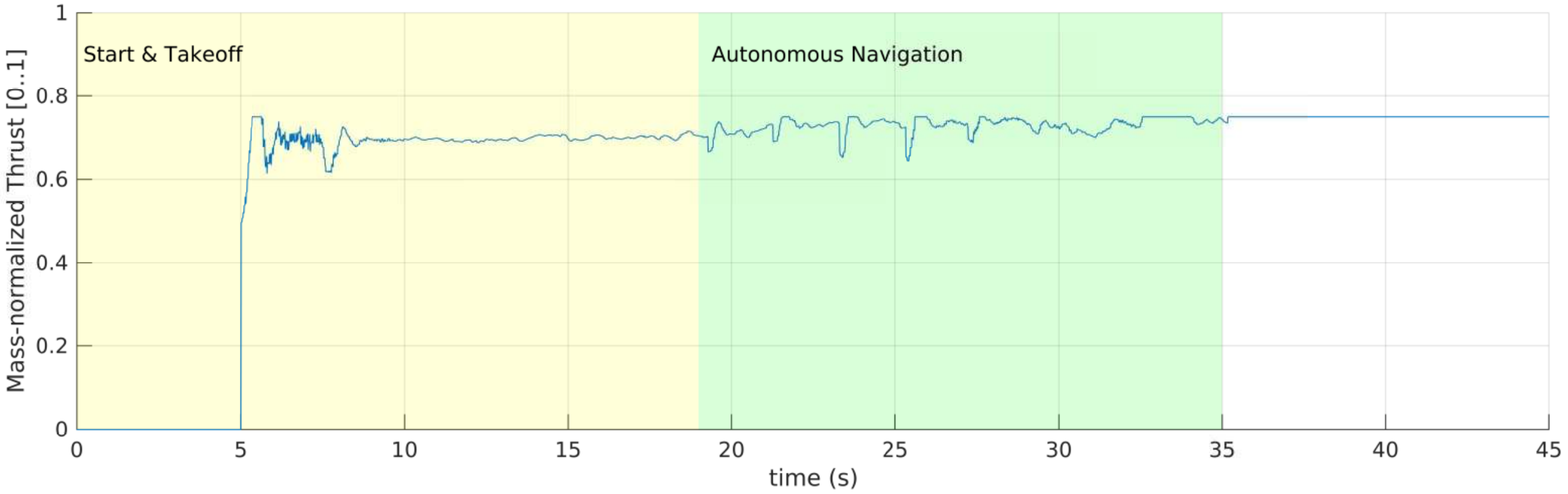} \\
	\includegraphics[clip,width=\textwidth]{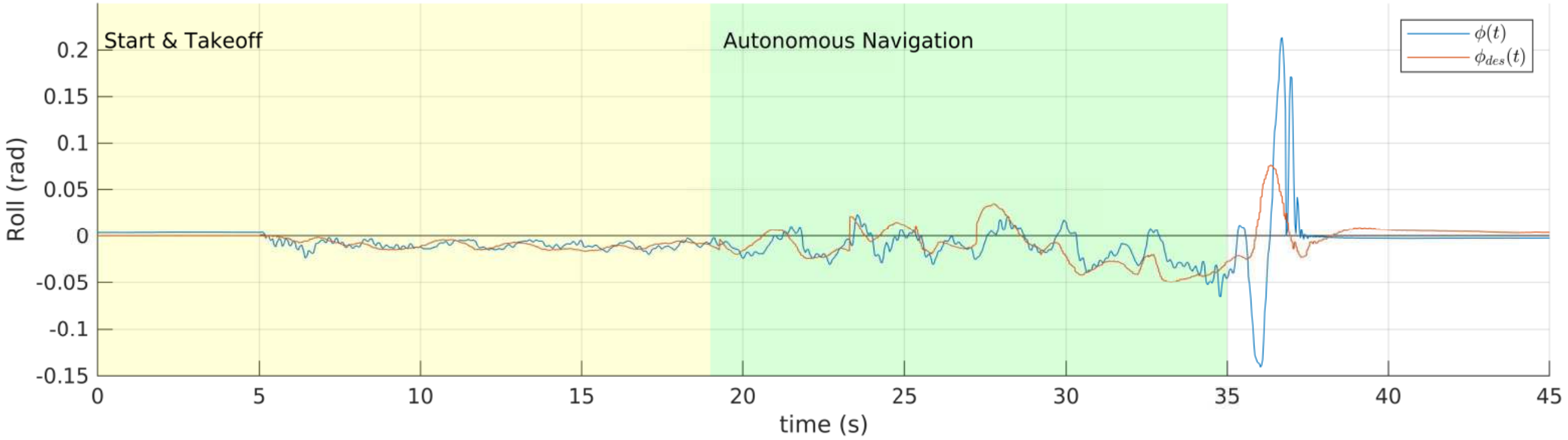} \\
	\includegraphics[clip,width=\textwidth]{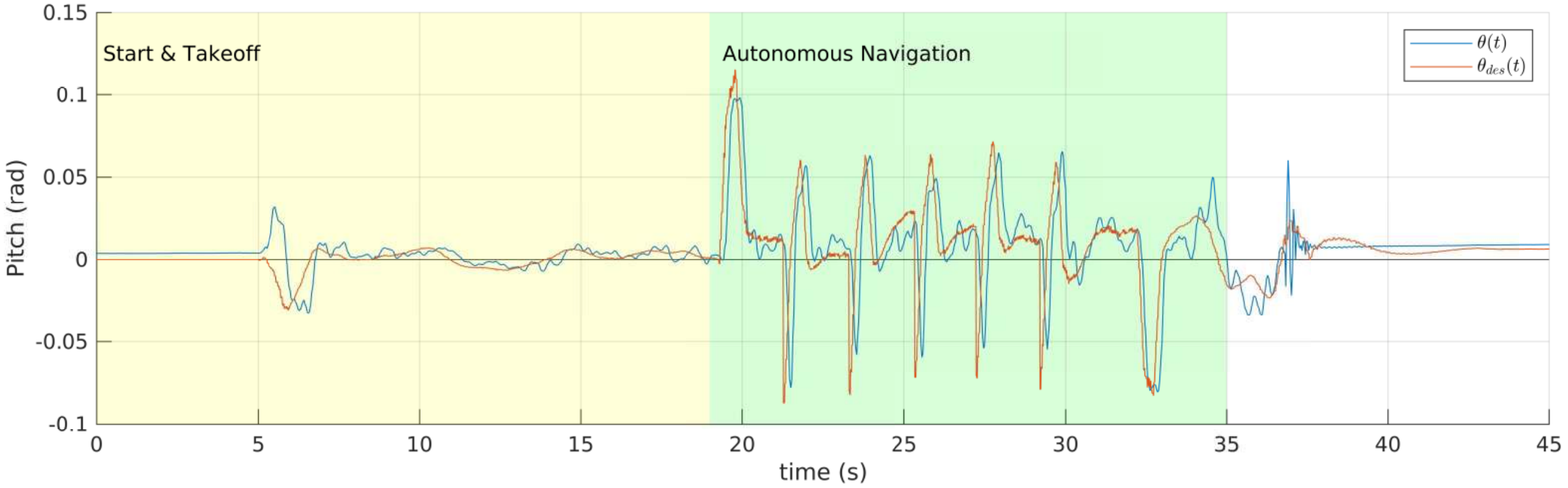} \\
	\includegraphics[clip,width=\textwidth]{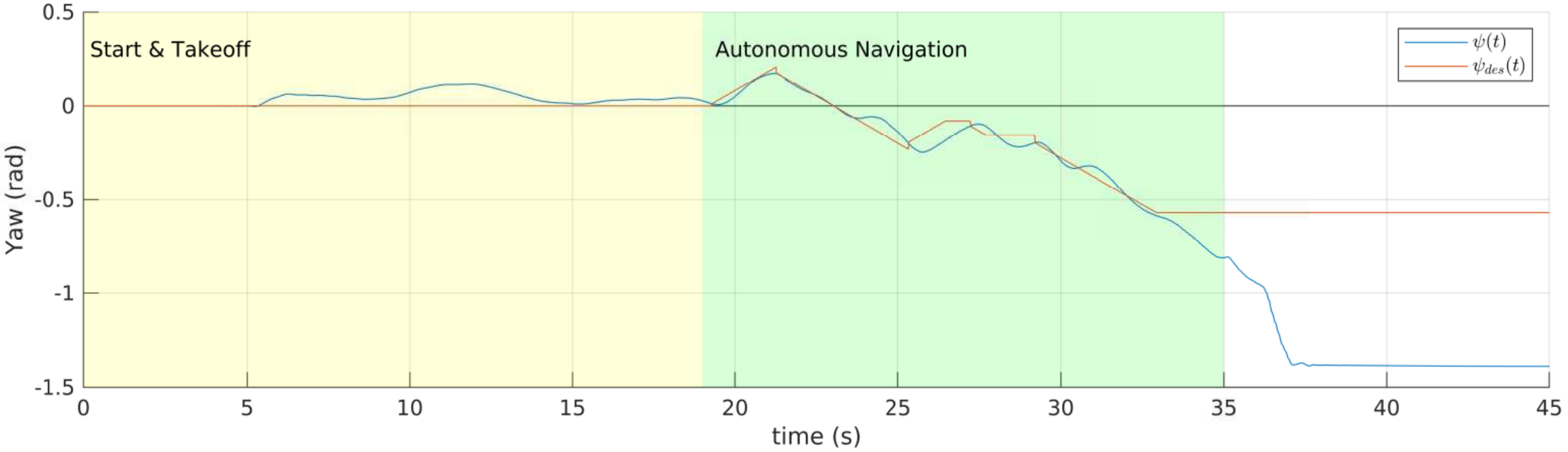}
	\end{adjustbox}
	\caption{Input mass-normalized collective thrust and attitude commands vs time for case 3}
	\label{fig:ch6:results3b}
\end{figure}

\section{Conclusion}\label{sec:conc}

This work presented a computationally-light method for UAVs to allow autonomous collision-free navigation in unknown tunnel-like environments.
It relies on light processing of sensors' measurements to guide the UAV along the tunnel axis.
A general 3D kinematic model is used for the development which extends the applicability of our method to different UAV types and autonomous underwater vehicles navigating through 3D tunnel-like environments. 
Several simulations were performed to validate our method considering tunnels with different structures using a realistic sensing model.
Robustness against noisy sensors measurements was also investigated in simulation.
Moreover, we provided implementation details for quadrotor UAVs including control design based on sliding mode control technique and differential-flatness property of quadrotor dynamics.
Experimental validation was done in a tunnel-like structure built in the laboratory where all computations needed by our navigation stack were done onboard.
Overall, the obtained results from simulations and the practical implementation show how well our navigation method works in unknown tunnel-like environments.

    \part{Motion Coordination for Multi-UAV Systems}

\chapter{Bounded Distributed Control of Multi-Vehicle Systems for Flocking Behaviour\label{cha:flocking_control}}

The previously developed safe navigation strategies can be adopted by vehicles within a multi-vehicle system in a decentralized manner.
However, it is more efficient to utilize information shared among the networked vehicles to generate more advanced motions.
Thus, this part of the report will focus more on developing distributed control methods for multi-vehicle systems.
The problem of motion coordination for multi-vehicle systems with collision avoidance is addressed in this chapter, specifically the flocking problem.
Control laws are developed to tackle this problem with a global objective of moving the whole system as a group to reach a goal region.
Collision and obstacle avoidance are considered as motion requirements within the control design, and the overall design is based on general kinematic models for 2D and 3D motions applicable to UAVs and AUVs.
The control laws are bounded to account for physical limits, and it is distributed in nature to ensure that the solution is scalable.
Stability analysis is done to show that the vehicles can safely navigate while avoiding collisions with other vehicles and respecting some safety margin, and related conditions on design parameters are provided to ensure that.
Several simulations with different number of vehicles moving in 3D are carried out to validate the performance of the developed control laws.
This chapter extends the results proposed in \cite{elmokadem2019flocking}.

\section{Introduction}\label{sec:ch8:Intro}

A great inspiration for collective behavior of multi-vehicle systems comes from different fields of biology.
Local interactions can contribute towards collective global outcomes without the need for central unit for coordination.
For example, many animal species can move coherently as a group without a leader such as bird flocks, fish schools, ants and bees \cite{gordon2014ecology}.
No doubt that multi-vehicle systems can provide more efficient solutions for many applications in terms of robustness, flexibility, cost and fault tolerance in comparison with single-vehicle systems.
This comes at the expense of increased system complexity which motivates lots of research in different areas related to the development of multi-vehicle systems including cooperative control and motion coordination strategies.

When tackling cooperative control problems, adopted methods may be either \textit{centralized}, \textit{decentralized} or \textit{distributed}.
\textit{Centralized} approaches use a central unit that compute control commands for all agents within the system which requires measurements from all agents to be available.
Obviously, such approaches are not robust nor scalable where it becomes more demanding computationally as the number of vehicles increases.
On contrary, \textit{decentralized} and \textit{distributed} approaches offer more robustness and scalability where each agent/vehicle can compute their own control actions either by only relying on its measurements (\textit{decentralized}) or by using its own measurements in addition to information communicated by neighbor agents (\textit{distributed}).
A great deal of existing cooperative control methods are either decentralized or distributed due to limited sensing and communication capabilities of mobile robots.
In the literature, some works may use the the terms decentralized and distributed interchangeably as one can consider the latter as a subset of the former.
It is clear that using distributed approaches would offer the best form of motion coordination inspired by biological systems where each agent in the group rely on local interactions to determine how to move contributing towards the collective behavior of the swarm.

Cooperative control of multi-vehicle systems is related to Networked Control Systems (NCSs) \cite{tipsuwan2003control,hespanha2007survey,wang2008networked,matveev2009estimation,bemporad2010networked,ge2017distributed}.
Thus, it is very important to keep in mind communication challenges in NCSs when designing control methods for large-scale networked multi-vehicle systems.
For example, we may assume that required information shared among the vehicles are available to the controller to facilitate the control design.
However, more practical aspects of NCSs can be further considered in the overall system design such as delays introduced in communication channels \cite{tipsuwan2003control,matveev2003problem,onat2010control}, noises \cite{matveev2007analogue,goodwin2010analysis}, loss/corruption of data \cite{matveev2003problem,onat2010control} and bandwidth constraints \cite{savkin2003set,matveev2004problem,savkin2006analysis,savkin2007detectability}.

Formation control, as a form of cooperative control, has become a very active field of research for multi-agent systems where the focus is to move multiple agents with some constraints on their states \cite{oh2015survey} to achieve some global objective(s).
The common formation control structures in the literature are \textit{leader-follower}, \textit{virtual} and \textit{behavioral-based structures}\cite{saif2019distributed}.
One of the agents is assigned as leader to be followed by the other agents within the group in \textit{leader-follower} structures such as \cite{mercado2013quadrotors,hou2015distributed,dehghani2016communication,xuan2019robust,walter2019uvdar,wang2019coordinated,tagliabue2019robust}.
Methods based on \textit{virtual structure} achieves motion formation through forcing each agent to follow a corresponding virtual target (or reference trajectory) such that the selection of these virtual references results in a desired formation.
Examples of such methods include \cite{ren2004decentralized,li2008formation,yoshioka2008formation,bayezit2012distributed,kushleyev2013towards,zhihao2020virtual}.
In a \textit{behavioral-based} structure, agents follow a set of rules contributing towards the collective behavior to achieve a certain formation or global objective(s).
Flocking is a collective behavior where a group of interacting agents needs to move together to achieve some global objectives; thus, flocking control can be classified as a subset of behavioral-based methods.
The local interactions between agents under a flocking behavior can be defined according to Reynolds' three rules based on his early model for the aggregate motion of flocks which are: flock centering (cohesion), collision avoidance (separation) and velocity matching (alignment) \cite{reynolds1987flocks,olfati2006flocking}.
A similar motion model yet simpler was proposed by Vicsek et al. \cite{vicsek1995novel} where each agent relies on local interactions in the form of information about its state and its neighbors to make motion decisions.
Several research works have addressed the flocking problem such as \cite{olfati2006flocking,tanner2007flocking,dimarogonas2008connection,savkin2010decentralized,reyes2014flocking,khaledyan2019flocking,do2011flocking,antonelli2010flocking,viragh2014flocking,ghapani2016fully,jafari2019biologically}.
Generally, flocking and other formation control laws design may differ based on the adopted model of the vehicles' motion.
Examples of considered models in the literature are single integrator \cite{ren2007distributed,ji2007distributed,antonelli2010flocking,saulnier2017resilient}, double integrator \cite{olfati2006flocking,tanner2007flocking,cao2011distributed,jafari2019biologically}, nonholonomic models \cite{liang2006decentralized,dimarogonas2007rendezvous,dimarogonas2008connection,savkin2010decentralized,do2011flocking,reyes2014flocking,khaledyan2019flocking} and Euler-Lagrangian systems \cite{yang2014fully,ghapani2016fully}.

In \cite{olfati2006flocking}, a theoretical framework for distributed flocking control of multi-agent systems was proposed along with flocking algorithms based on potential fields.
The algorithms were verified using systems of up to 100 agents with simulations in 2D and 3D environments; obstacle avoidance was only considered in the 2D cases.
This work motivated many research works in this field to rely on potential functions as a way to handle local interactions between agents according to Reynolds' rules.
Another potential-based flocking control law was analyzed in \cite{tanner2007flocking} based on a double integrator system showing robustness against arbitrary changes in sensing and communication networks as long as the network topology remains connected.
Obstacle avoidance was not considered in this work.

Control laws for single integrator and 2D nonholonomic models were suggested in \cite{dimarogonas2008connection} to regulate the inter-agent distances to achieve some desired formations without considering obstacle avoidance.
Since it is hard to set a desired formation in systems with large number of vehicles, the authors studied the connection between formation infeasibility and flocking behavior, and they provided an analytic expression for the common velocity vector all agents converge to in such situations.
A simple bio-inspired flocking control law based also on 2D nonholonomic models for wheeled vehicles was proposed in \cite{savkin2010decentralized} to move a group of vehicles in the same direction with equal speeds in an obstacle-free environment.
The work \cite{reyes2014flocking} presented a control law based also on a 2D nonholonomic model to address flocking, formation control and path following problems concurrently.
The authors also provided convergence analysis for the considered nonsmooth potential functions in their control law.
Similarly, the authors of \cite{khaledyan2019flocking} tackled the flocking problem of agents with 2D nonholonomic kinematics in combination with target interception as a global objective.
In \cite{do2011flocking}, a new pairwise potential function between two neighbor agents was considered in the derived flocking algorithm which adopted a 2D nonholonomic model as well; however, it further considered elliptical-shaped agents with limited communication ranges.

A Null-Space-based Behavioral (NSB) approach based on \cite{antonelli2008null} was proposed in \cite{antonelli2010flocking} to address the flocking problem considering single integrator kinematic models.
In this approach, each agent independently implements Reynolds' rules which were defined as behaviors with different priorities. 
A flocking control framework was presented in \cite{viragh2014flocking} taking into account realistic factors such as inertial effects, time delay, communication locality, sensors inaccuracy and refresh rates which was demonstrated by implementing two different flocking algorithms.
This framework was further evaluated experimentally in \cite{vasarhelyi2014outdoor} using a group of 10 multirotor UAVs in outdoor flights.
In \cite{ghapani2016fully}, flocking control laws using the leader-follower structure were suggested for networked Lagrange systems with parametric uncertainties.
The development of these laws was based on adaptive control theory, and they were validated in simulation using a system of four spacecrafts modeled with a 3D Lagrangian dynamical model.
Similarly, model uncertainties and unknown disturbances was considered in \cite{jafari2019biologically} where a neurologically-motivated distributed resilient flocking controller was proposed based on a double integrator model.
This approach aimed towards enabling the agents to track a virtual leader with collision avoidance while satisfying multiple control objectives such as control effort minimization and robustness against disturbances and model uncertainties.
Optimization-based methods were also utilized to address the flocking problem such as \cite{wang2017safety,ibuki2020optimization}.
For a more detailed literature review about formation control and coordination of multi-agent systems in general, the reader is referred to the surveys \cite{cao2013overview,dong2014time,oh2015survey,chung2018survey,hadi2021review}.

Many of the existing approaches were validated using systems of small sizes which may not show problems related to local minimums of adopted potential functions.
Also, the available methods based on nonholonomic kinematic models focused mostly on the 2D case, and obstacle avoidance was not considered in many approaches.
To address these limitations, this chapter proposes a new flocking control law for multi-vehicle systems by building on some of the concepts in \cite{olfati2006flocking,antonelli2010flocking,liang2006decentralized}.
A general 2D/3D kinematic model with nonholonomic constraints is adopted in the control design which make it applicable to different types of unmanned aerial vehicles (UAVs) and autonomous underwater vehicles (AUVs).

Addressing the flocking problem requires setting a global group objective; otherwise, the flocking behavior cannot be achieved according to \cite{olfati2006flocking}.
This is due to the fact that absence of global objective leads to breaking the flock into several disjoint flocks commonly known as the \textit{fragmentation} problem \cite{olfati2006flocking} which was also observed in \cite{savkin2004coordinated} using a simulated Vicsek model.
Therefore, a group objective of reaching or tracking a desired region is considered in this work with collision and obstacle avoidance as local objectives.

A bounded feedback control law is developed which is important to satisfy constraints on the vehicle's velocity and acceleration in a good computational way without the need to solve an optimization problem.
One of the main contributions in this new flocking control design is the proposal of null-space-based modified potential functions for goal reaching and collision/obstacle avoidance to avoid falling in a local minimum related to the inter-agent separation distances.
It is also designed particularly for 3D navigation which differ from many of the available flocking methods that adopt only 2D nonholonomic models.
Existing methods adopting 3D models are mostly based on single/double integrator which is one of main differences between our approach and \cite{antonelli2010flocking}. %

Furthermore, the designed control is analyzed through a stability analysis of the multi-vehicle system in addition to providing conditions on the design parameters for a guaranteed collision avoidance with respect to some safety margin.
Obstacle avoidance is also considered in the designed approach using some simple implementation for validation purposes.
However, different existing 3D reactive obstacle avoidance laws can be used with our flocking control law using the null-space-based modification to handle more complex environments.
Examples of such 3D methods can be seen in \cite{ren2008modified,matveev2015safe,yang20133d,savkin2013simple,hoy2015algorithms,elmokadem20183d} and references therein. 

This chapter is organized as follows.
In \cref{sec:ch8:problem}, we provide some preliminary information, the multi-vehicle system modeling and formulation of the tackled flocking problem.
After that, we propose a distributed flocking control law in \cref{sec:ch8:control} with proper stability analysis.
The performance of the suggested controller is then validated in \cref{sec:ch8:simulation} through simulations with systems of different sizes moving in 3D environments.
Finally, concluding remarks are made in \cref{sec:ch8:conclusion}.

\section{Preliminaries and Problem Statement}\label{sec:ch8:problem}

\subsection{Notation}\label{sec:ch8:notation}
Throughout the chapter, we represent scalar quantities using non-bold typeface letters/symbols while vectors and matrices are represented using bold typeface.
Consider the following definitions which are needed for the mathematical analysis:
$\bm{1}_k = [1\ 1\ \cdots\ 1]^T \in \R^k$, $\bm{0} = [0\ 0\ \cdots\ 0]^T\in \R^k$, and $\bm{I}_k$ is an identity matrix of size $k \times k$.
Moreover, whenever a well-known function is written in terms of a vector/matrix, the operation is intended to be element-wise unless otherwise stated.
For example, applying $\tanh(\bm{v})$ to a vector $\bm{v} = [v_1\ \cdots\ v_k]\in \R^k$ means the following $\tanh(\bm{v})= [\tanh(v_1)\ \tanh(v_2)\ \cdots\ \tanh(v_k)]^T$.
The following diagonal matrix is also defined:
\begin{equation}\label{equ:ch8:Tanh}
	\text{Tanh}(\bm{v}) = \left[\begin{array}{ccccc}
		\tanh(v_1) & 0 & 0 & \cdots & 0 \\
		0 & \tanh(v_2) & 0 & \cdots & 0 \\
		0 & 0 & \tanh(v_3) & \cdots & 0 \\
		\vdots & \vdots & \vdots & \ddots & \vdots \\
		0 & 0 & 0 & \cdots & \tanh(v_k)
	\end{array}\right] \in \R^{k\times k}
\end{equation}
such that $\text{Tanh}(\bm{v}) \bm{1}_k = \tanh(\bm{v})$.
We also use the notation $||\bm{v}||$ to express the Euclidean norm of $\bm{v}$ in $\R^k$.

\subsection{Multi-Vehicle System Modelling}\label{sec:ch8:model}
Consider a multi-vehicle system of size $n$ where each vehicle's motion in a space of dimension $m=\{2,3\}$ can be described using the following nonholonomic kinematic model:
\begin{equation}\label{equ:ch8:model}
	\begin{aligned}
		\dot{\bm{q}}_i &= v_i \bm{r}_i(\bm{\Theta}_i) \\
		\dot{\bm{\Theta}}_i &= \bm{\Omega}_i \\
		\dot{\bm{\nu}}_i &= \bm{\tau}_i
	\end{aligned}
\end{equation}
where $\bm{q}_i\in\R^m$ denotes the $i$-th vehicle position with respect to some inertial frame, $v_i\in\R$ is its linear speed, and $\bm{r}_i(\bm{\Theta}_i)\in \R^{m}$ is a unit vector expressing the vehicle's orientation which can be described in terms of orientation angles given by the vector $\bm{\Theta}_i \in \R^{m-1}$.
Changes in orientation can be described using the angular velocity vector $\bm{\Omega}_i \in \R^{m-1}$.
Also, a stacked vector of both the linear and angular velocities is denoted by $\bm{\nu}_i = [v_i,\ \bm{\Omega}_i]^T \ \in \R^m$.
The control inputs to this model are the linear and angular accelerations which are denoted by $\bm{\tau}_i = \left[\begin{array}{c}
a_i \\ \bm{\alpha}_i
\end{array}\right] \in \R^m$ (i.e $\dot{v}_i = a_i$ and $\dot{\bm{\Omega}}_i = \bm{\alpha}_i$).
The above model can describe the motion of different UAV types, autonomous underwater vehicles and unmanned ground vehicles.

Each vehicle can estimate its position, orientation and linear/angular velocities.
Any two vehicles within the system can exchange information as long as they are within a certain communication range $r_c>0$ from each other.
Thus, information exchange within the multi-UAV system can be modelled using concepts from graph theory which is summarized next based on \cite{olfati2006flocking}.

The networked multi-vehicle system can be described using a graph $G$ which consists of the pair $(\V,\E)$ where $\V$ is a set of vertices (ex. vehicles) and $\E$ is a set of edges $\E \subseteq \{(i,j): i,j\in \V, j \neq i \}$.
A graph is called \textit{undirected} if $(i,j)\in \E \leftrightarrow (j,i)\in \E$; otherwise, it is called \textit{directed}.
A \textit{path} between two vertices $i$ and $i$ is the sequence of edges connecting vertex $i$ to vertex $j$ through some intermediate vertices.  
The connectivity between vehicles can be better represented using an \textit{adjacency matrix} $A=[a_{ij}]$ of $G$ which contains non-zero elements such that $a_{ij} \neq 0$ for $(i,j) \in \E$, and $a_{ij} = 0$ otherwise.
If the adjacency matrix has full rank, the graph is called \textit{connected}.
In other words, there exists a path connecting every two vertices in $\V$.

Furthermore, a \textit{neighbourhood} of vertex $i$ is defined as follows
\begin{equation}
\mathpzc{N}_{i} = \{j\in \V \backslash \{i\}: (i,j)\in \E \}
\end{equation}
In practice, such neighbourhood can be defined in terms of some communication range $r_c>0$ as:
\begin{equation}\label{equ:ch8:comm_neighborhood}
\mathpzc{N}_{i} = \{j\in \V \backslash \{i\}: ||\bm{q}_i - \bm{q}_j|| \leq r_c \}
\end{equation}
This means that vehicle $i$ can communicate only with vehicles in $\mathpzc{N}_{i}$.
If this neighbourhood remains fixed as the vehicles move, the communication topology is called fixed; otherwise, it is called a dynamic topology (i.e. $\mathpzc{N}_{i}(t)$ changes over time).
Additionally, one can stack the position vectors $\bm{q}_i$ of all vehicles belonging to the graph (i.e. $i \in \mathcal{V}$) to form the graph configuration vector $\bm{q} = [\bm{q}_1^T,\ \cdots,\ \bm{q}_n^T]^T \in \R^{mn}$.

\subsection{Problem Statement}

\begin{problem}\label{prob:ch8:flocking_control}
	Design a distributed control law for multi-vehicle systems which can be modelled according to \cref{sec:ch8:model} to ensure flocking behaviour by achieving the following objectives:
	\begin{itemize}
	 	\item \textit{G1}: The multi-vehicle system can move towards a goal region $\goal \subset\R^m$ which is defined as a closed circle/ball with a radius $R_G$ whose center is at some location $\bm{q}_G \in \R^m$ (i.e. $\goal=\{\bm{q}_i:\ ||\bm{q}_i - \bm{q}_G|| \leq R_G\ \forall i \in \mathcal{V}\}$).
	 	\item \textit{G2}: Distances between a vehicle and its neighbours should be kept at some desired value $d_{ij}$ such that $||\bm{q}_i - \bm{q}_j|| = d_{ij}\ \forall j \in \mathpzc{N}_{i}$ for $t \geq 0$.
	 	\item \textit{G3}: Vehicles must avoid collisions with each others (i.e. $||\bm{q}_i - \bm{q}_j||> d_{s}\ \forall t$ where $d_s < d_{ij}$ is some safety margin).
	 	\item \textit{G4}: Vehicles must avoid collisions with any obstacles within the environment.
	\end{itemize}
\end{problem}

\begin{assumption}
	The radius $R_G$ is large enough such that there exist a formation where all the vehicles can maintain a separation distance of $d_{ij}$ while remaining within $\goal$
	(i.e. it is possible to satisfy both \textit{G1} and \textit{G2} simultaneously).
\end{assumption}

\begin{remark}
	The control objective \textit{G2} will result in a formation of a geometric structure referred to as $\alpha-$lattice where vehicles are separated by a certain distance (see \cite{olfati2006flocking} for a detailed description of $\alpha-$lattices).
\end{remark}                                                               

\section{Distributed Control Design and Stability Analysis}\label{sec:ch8:control}

A flocking control method was developed to address \cref{prob:ch8:flocking_control} based on some ideas from \cite{olfati2006flocking,antonelli2010flocking,liang2006decentralized}.
Details about the designed control laws and stability analysis are provided in this section.

\subsection{Aggregate Potential Function}

Consider the goal region as was defined according to \textit{G1} and the desired separation distance $d_{ij} \in \R_{+}$ defined in the objective \textit{G2}.
We also define a critical region $\mathcal{R}_{\obs} \subset \R^m$ around a nearby obstacle $\obs_k\subset \R^m$ by:
\begin{equation}
\mathcal{R}_{\obs} = \{\bm{q}^*\in \obs_k: ||\bm{q}_i - \bm{q}^*|| \leq C \}
\end{equation}
where $C > 0$, and $\bm{q}^* = argmin_{\bm{x} \in \obs_k}||\bm{x} - \bm{q}_i||$ (i.e. the closest point on the nearest obstacle's boundary).
One can then define the following scalar errors:
\begin{align}
q_{ij} &= \|\bm{q}_i - \bm{q}_j\| - d_{ij} \label{equ:ch8:error_formation} \\
q_{iG} &= \|\bm{q}_i - \bm{q}_G\| \label{equ:ch8:error_goal} \\
q_{i \obs} &= \|\bm{q}_{i} - \bm{q}^*\| \label{equ:ch8:error_obs_avoid}
\end{align}
Using the above definitions, we define an aggregate potential function for each vehicle using:
\begin{equation}\label{equ:ch8:potential_function}
	U_i(\bm{q}_i) = U_{i,\alpha}(q_{ij}) + U_{i,G}(q_{iG}) + U_{i,\obs}(q_{i \obs})
\end{equation}
where $U_{i,\alpha}(q_{ij})$ represents an inter-vehicle potential function to maintain the group's formation associated with repulsive/attractive forces, $U_{i,G}(q_{iG})$ corresponds to attractive forces to move the vehicle towards the goal region, and $U_{i,\obs}(q_{i \obs})$ corresponds to repulsive forces from nearby obstacles.

Let the inter-vehicle potential function $U_{i,\alpha}(q_{ij})$ be defined according to the following \cite{liang2006decentralized}:
\begin{equation} \label{equ:ch8:U_i_ca}
	U_{i,\alpha} = \frac{1}{2}\sum_{j \in \mathpzc{N}_{i}} k_{ij} \ln\left(\cosh\left(q_{ij}\right)\right) %
\end{equation}
where $k_{ij}>0$ is a design parameter.
The gradient-based force of the above function can be obtained as follows:
\begin{equation}\label{equ:ch8:sep_gradient}
  \bm{f}_{i,\alpha} = -\nabla U_{i,\alpha} = \sum_{j \in \mathpzc{N}_{i}} k_{ij} \tanh(q_{ij})\ \bm{n}_{ij} 
\end{equation}
where $\bm{n}_{ij} = \dfrac{\bm{q}_{j} - \bm{q}_{i}}{||\bm{q}_{j} - \bm{q}_{i}||}$ is a unit vector directing from vehicle $i$ to $j$.
The global minimum of the smooth function $U_{i,\alpha}$ occurs at $q_{ij} = 0$ which implies $||\bm{q}_i - \bm{q}_j|| = d_{ij}$.

The goal potential function $U_{i,G}(q_{iG})$ is designed to have a global minimum whenever the vehicle reaches the goal region $\bm{q}_i \in \goal$ (i.e. $q_{iG} \leq R_G$).
This is possible by choosing the gradient of $U_{i,G}(q_{iG})$ according to the follows:
\begin{equation}\label{equ:ch8:goal_gradient}
	\bm{f}_{i,G} = -\nabla U_{i,G} = k_{i,G} \Gamma_{G}(q_{iG},R_G)\ \bm{n}_{iG}
\end{equation}
where $k_{i,G}$ is a positive design parameter, and $\bm{n}_{iG} = \dfrac{\bm{q}_{G} - \bm{q}_{i}}{||\bm{q}_{G} - \bm{q}_{i}||}$ is a unit vector in the direction of $\bm{q}_{G}$ (i.e. towards the goal region).
Also, $\Gamma_{G}(q_{iG},R_G)$ is a smooth sigmoid function that vanishes whenever the vehicle reaches a region of interest which is the goal region $\goal$ in this case.
It should satisfy the following:
\begin{equation}
	\Gamma_{G}(q_{iG},R_G) = \left\{\begin{array}{lr}
	0 & q_{iG} \leq R_G \\
	\mu & otherwise
	\end{array}\right.
\end{equation}
where $\mu \in (0,1]$. 
In a similar manner, $U_{i,\obs}(q_{i \obs})$ is designed to have a global minimum whenever the vehicle is outside $\mathcal{R}_{\obs}$ in accordance with the following:
\begin{equation}\label{equ:ch8:obs_gradient}
	\bm{f}_{i,\obs} = -\nabla U_{i,\obs} = k_{i,\obs}\ \Gamma_{\obs}(q_{i \obs},d_{s})\ \bm{n}_{i\obs}
\end{equation}
where $k_{i,\obs}>0$, $\bm{n}_{i\obs}$ is a unit vector directing towards a safe direction away from a nearby obstacle, and $\Gamma_{\obs}(q_{i \obs},d_{s})$ is a smooth sigmoid function that vanishes whenever the vehicle is at a distance from the obstacle larger than some safety margin $d_s>0$.
That is, it should satisfy the following:
\begin{equation}
\Gamma_{\obs}(q_{i \obs},d_{s})= \left\{\begin{array}{lr}
0 & \bm{q}_i \notin \mathcal{R}_{\obs} \\
\mu & otherwise
\end{array}\right.
\end{equation}
where $\mu \in (0,1]$.
One of the possible designs for both $\Gamma_{G}(q_{iG},R_G)$ and $\Gamma_{\obs}(q_{i \obs},d_{s})$, considered in this work, is as follows:
\begin{equation}\label{equ:regionFunction}
\Gamma(z) = 0.5 + 0.5 \tanh(\gamma z)
\end{equation}
where $\gamma>0$, and $z$ defines the distance to some region of interest.
Thus, the parameter $z$ is defined as $z = ||\bm{q}_i - \bm{q}_G|| - R_G$ for $\Gamma_G$ and $z = C - ||\bm{q}_i - \bm{q}^*||$ for $\Gamma_{\obs}$.

Using \eqref{equ:ch8:sep_gradient}, \eqref{equ:ch8:goal_gradient} and \eqref{equ:ch8:obs_gradient}, the gradient of the aggregate potential function in \eqref{equ:ch8:U_i_ca} can be defined as follows:
\begin{equation}\label{equ:ch8:fi_total}
	-\nabla U_i := \bm{f}_i = \bm{f}_{i,\alpha} +\bm{f}_{i,G} + \bm{f}_{i,\obs}
\end{equation}

\begin{remark}
	Another possible way of designing \eqref{equ:ch8:goal_gradient} is by choosing $\bm{n}_{iG}$ to be in the direction of the whole goal region using a projection from the current position rather than having it directing towards just the center $\bm{q}_{G}$ of the goal region.
\end{remark}

\subsection{Null-space-based Modified Potential Function}

Using the aggregate potential energy function \eqref{equ:ch8:potential_function} in a potential-based approach may result in getting stuck at local minimas.
Therefore, a modified potential function is introduced in this section based on the null-space-based behavioural approach presented in \cite{antonelli2010flocking} to escape local minimum situations.
To that end, the vector field \eqref{equ:ch8:fi_total} is modified as follows:
\begin{equation}\label{equ:ch8:nsbEq}
	\bm{\tilde{f}}_i = \bm{f}_{i,1} + \bm{N}_{i,1} \bm{f}_{i,2} + \bm{N}_{i,2} \bm{f}_{i,3}
\end{equation}
where $\bm{f}_{i,1}$, $\bm{f}_{i,2}$ and $\bm{f}_{i,3}$ are some force vectors ordered by importance from highest to lowest (for example, \eqref{equ:ch8:sep_gradient}, \eqref{equ:ch8:goal_gradient} or \eqref{equ:ch8:obs_gradient}).
Also, $\bm{N}_{i,1}$ and $\bm{N}_{i,2}$ are projection matrices which are defined as follows:

\begin{align}
	\bm{N}_{i,1} &= \bm{I}_m - \bm{\bar{f}}_{i,1}\ \bm{\bar{f}}_{i,1}^T \label{equ:ch8:nsbN1} \\
	\bm{N}_{i,2} &= \bm{N}_{i,1} (\bm{I}_m - \bm{\bar{f}}_{i,2}\ \bm{\bar{f}}_{i,2}^T) \label{equ:ch8:nsbN2}
\end{align}
where $\bm{\bar{f}}_{i,k},\ k=\{1,2,3\}$ is a normalized vector such that $\bm{\bar{f}}_{i,k} = \bm{f}_{i,k}/||\bm{f}_{i,k}||$.
The second term of \eqref{equ:ch8:nsbEq} represents the projection of $\bm{f}_{i,2}$ into the null-space of vector $\bm{f}_{i,1}$.
Similarly, the last term corresponds to the projection of $\bm{f}_{i,3}$ into the null-space of $\bm{N}_{i,1} \bm{f}_{i,2}$ (i.e. the null-space of both $\bm{f}_{i,1}$ and $\bm{f}_{i,2}$).

Currently, we consider the following sequence forces ordered from the most critical to the least critical: $\{\bm{f}^{OA},\ \bm{f}^{\alpha},\ \bm{f}^{G}\}$ where $\bm{f}^{OA}$ is only considered whenever the vehicle enters the critical region around any obstacle (i.e. $\bm{q}_i \notin \mathcal{R}_{\obs}$).
Thus, the proposed modified force vector field is written based on \eqref{equ:ch8:fi_total} and \eqref{equ:ch8:nsbEq} as:
\begin{equation}\label{equ:ch8:nsbEqFinal}
	\bm{\tilde{f}}_i = \bm{f}_i + \bm{\xi}(\bm{f}_{i,1},\bm{f}_{i,2},\bm{f}_{i,3})
\end{equation}
where $\bm{\xi}:=\bm{\xi}(\bm{f}_{i,1},\bm{f}_{i,2},\bm{f}_{i,3})$ is defined by:
\begin{equation}\label{equ:ch8:xiBound}
\bm{\xi} = - \bm{\bar{f}}_1 \bm{\bar{f}}_1^T \bm{f}_{i,2} - (\bm{\bar{f}}_1 \bm{\bar{f}}_1^T + \bm{\bar{f}}_2 \bm{\bar{f}}_2^T - \bm{\bar{f}}_1 \bm{\bar{f}}_1^T \bm{\bar{f}}_2 \bm{\bar{f}}_2^T) \bm{f}_{i,3}
\end{equation}
One can see from \eqref{equ:ch8:xiBound} that the following property is true:
\begin{equation}
	||\bm{\xi}|| \leq ||\bm{f}_{i,2}|| + ||\bm{f}_{i,3}||
\end{equation}
In \eqref{equ:ch8:xiBound}, a general form was used to consider different possible ordered sequences of potential forces based on importance.
However, the considered sequence here as mentioned before is such that:
\begin{equation}\label{equ:ch8:ordered_seq}
	\{\bm{f}_{i,1},\ \bm{f}_{i,2},\ \bm{f}_{i,3}\}=\{\bm{f}_{i,\obs},\ \bm{f}_{i,\alpha},\ \bm{f}_{i,G}\}
\end{equation}

\begin{remark}
	For a two-dimensional workspace where $m=2$, only two forces will have effect on \eqref{equ:ch8:nsbEq} because the null-space of both $\bm{f}_{i,1}$ and $\bm{f}_{i,2}$ in $\R^2$ is an empty set (i.e. $\bm{N}_{i,2}=\bm{0}$).
	Hence, the considered ordered sequence can be dynamically changing based on the importance of $\bm{f}^{OA},\ \bm{f}^{\alpha}$ and $\bm{f}^{G}$ which can vary based on the situation (see \cite{antonelli2010flocking}).
\end{remark}

\subsection{Control Design}

The proposed distributed flocking control design can now be presented in this section.
To that end, consider the error vectors defined in \eqref{equ:ch8:error_formation}-\eqref{equ:ch8:error_obs_avoid} and the modified vector field \eqref{equ:ch8:nsbEqFinal}.
Note that the orientation of the vector $\bm{\tilde{f}}_i$ in $\R^m$ can be defined as:
\begin{equation}\label{equ:ch8:f_i_new}
	\bm{\tilde{f}}_i (\bm{q}_{i}) = ||\bm{\tilde{f}}_i (\bm{q}_{i})|| \bm{r}(\bm{\Theta}_{fi})
\end{equation}
where $\bm{r}(\cdot)$ and $\bm{\Theta}_{fi}$ are obtained similar to the way $\bm{r}(\cdot)$ and $\bm{\Theta}_i$ are constructed in \eqref{equ:ch8:model}.
In order to align the vehicles' orientation with $\bm{\tilde{f}}_i$, we define the following orientation error vector:
\begin{equation}
	\bm{e}_{\Theta,i} = \bm{\Theta}_i - \bm{\Theta}_{fi}
\end{equation}
Now, one can define states vectors to express the error dynamics of the multi-vehicle system as follows:
\begin{equation}
\begin{aligned}
\bm{E}_i &= [U_i,\ \bm{e}_{\Theta,i}^T,\ \dot{\bm{\bm{e}}}_{\Theta,i}^T,\ v_i]^T \in \R^{2m} \\
\bm{E} &= [\bm{E}_1^T,\ \cdots,\ \bm{E}_n^T]^T \in \R^{2mn} \\
\end{aligned}
\end{equation}

\begin{proposition}\label{prop:ch8:convergence}
	When the system trajectories converge to $\bm{\tilde{f}}_i = \bm{0}$ such that $\bm{\tilde{f}}_i$ is defined as in \eqref{equ:ch8:nsbEq}, it is guaranteed that  ${\bm{f}_{i,1},\bm{f}_{i,2},\bm{f}_{i,3}} \to \bm{0}$ escaping the case where $\bm{f}_{i,1} + \bm{f}_{i,2} + \bm{f}_{i,3} = 0$ while ${\bm{f}_{i,1},\bm{f}_{i,2},\bm{f}_{i,3}} \neq \bm{0}$.
	Furthermore, the potential functions $U_i^{OA}$, $U_{i,\alpha}$ and $U_{i,G}$ will also converge to zero reaching their local/global minimums considering any ordered sequence such as \eqref{equ:ch8:ordered_seq}.
\end{proposition}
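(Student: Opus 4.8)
The plan is to exploit the nested orthogonal-projection structure of the null-space-based field \eqref{equ:ch8:nsbEq} and peel off the three force terms one at a time, in order of decreasing priority. First I would record the algebraic properties of the projection matrices in \eqref{equ:ch8:nsbN1}--\eqref{equ:ch8:nsbN2}: each $\bm{N}_{i,1} = \bm{I}_m - \bm{\bar{f}}_{i,1}\bm{\bar{f}}_{i,1}^T$ is symmetric and idempotent with range orthogonal to $\bm{\bar{f}}_{i,1}$, so that $\bm{\bar{f}}_{i,1}^T \bm{N}_{i,1} = \bm{0}^T$, and since $\bm{N}_{i,2} = \bm{N}_{i,1}(\bm{I}_m - \bm{\bar{f}}_{i,2}\bm{\bar{f}}_{i,2}^T)$ factors through $\bm{N}_{i,1}$, one also has $\bm{\bar{f}}_{i,1}^T \bm{N}_{i,2} = \bm{0}^T$. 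Consequently the second and third terms of \eqref{equ:ch8:nsbEq} both lie in the range of $\bm{N}_{i,1}$ and are therefore orthogonal to $\bm{f}_{i,1}$, while the first term is parallel to $\bm{\bar{f}}_{i,1}$.

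Next I would carry out the peeling argument. Setting $\bm{\tilde{f}}_i = \bm{0}$ and projecting onto $\bm{\bar{f}}_{i,1}$, the orthogonality just established annihilates the last two terms and leaves $\bm{\bar{f}}_{i,1}^T \bm{\tilde{f}}_i = \|\bm{f}_{i,1}\| = 0$, hence $\bm{f}_{i,1} = \bm{0}$. I would phrase this by contradiction: if $\bm{f}_{i,1}\neq\bm{0}$ then $\bm{\bar{f}}_{i,1}$ is well defined and the projected component is strictly positive, contradicting $\bm{\tilde{f}}_i=\bm{0}$. Once $\bm{f}_{i,1}=\bm{0}$ the projection $\bm{N}_{i,1}$ reduces to $\bm{I}_m$, so \eqref{equ:ch8:nsbEq} collapses to $\bm{\tilde{f}}_i = \bm{f}_{i,2} + (\bm{I}_m - \bm{\bar{f}}_{i,2}\bm{\bar{f}}_{i,2}^T)\bm{f}_{i,3}$. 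Projecting this onto $\bm{\bar{f}}_{i,2}$ gives $\|\bm{f}_{i,2}\| = 0$ by the same reasoning, whence $\bm{f}_{i,2}=\bm{0}$; the residual equation is then simply $\bm{\tilde{f}}_i = \bm{f}_{i,3} = \bm{0}$. This establishes $\{\bm{f}_{i,1},\bm{f}_{i,2},\bm{f}_{i,3}\} \to \bm{0}$, and it rules out the spurious equilibrium $\bm{f}_{i,1}+\bm{f}_{i,2}+\bm{f}_{i,3}=\bm{0}$ with nonzero summands, since here the vanishing of the projected field forces each summand to vanish separately---precisely the local-minimum trap of the plain potential approach that the modification is designed to escape.

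The final paragraph would translate the vanishing forces back to the potentials. Because $\bm{f}_{i,\alpha}$, $\bm{f}_{i,G}$ and $\bm{f}_{i,\obs}$ are exactly the negative gradients of $U_{i,\alpha}$, $U_{i,G}$ and $U_{i,\obs}$ in \eqref{equ:ch8:sep_gradient}, \eqref{equ:ch8:goal_gradient} and \eqref{equ:ch8:obs_gradient}, each force equalling zero places the corresponding potential at a critical point, i.e. a local or global minimum. For the goal and obstacle potentials this is sharp: $\bm{f}_{i,G}=\bm{0}$ forces $\Gamma_{G}(q_{iG},R_G)=0$, i.e. $\bm{q}_i\in\goal$, and $\bm{f}_{i,\obs}=\bm{0}$ forces $\Gamma_{\obs}=0$, i.e. the vehicle lies outside the critical region, both being the global minima of those potentials, while $\bm{f}_{i,\alpha}=\bm{0}$ sits the separation potential at its $\alpha$-lattice minimum. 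I expect the main obstacle to be the bookkeeping around the degeneracy of the projections: the definitions \eqref{equ:ch8:nsbN1}--\eqref{equ:ch8:nsbN2} use the normalized vectors $\bm{\bar{f}}_{i,k}$, which are undefined exactly when a force vanishes, so the argument must adopt the convention $\bm{N}_{i,1}\to\bm{I}_m$ (and likewise the inner projection of $\bm{N}_{i,2}$) whenever its defining force is zero, and the sequential ordering must be respected so that each lower-priority projection is evaluated only after the higher-priority force has already been shown to vanish.
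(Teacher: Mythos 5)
Your proposal is correct and takes essentially the same route as the paper's proof: both exploit the nested orthogonality of the null-space projections to conclude that the three terms of $\bm{\tilde{f}}_i$ must vanish separately, then sequentially collapse $\bm{N}_{i,1}$ and $\bm{N}_{i,2}$ to the identity to obtain $\bm{f}_{i,2}=\bm{0}$ and $\bm{f}_{i,3}=\bm{0}$, and finally read off the local/global minima from the vanishing gradients. Your explicit projection identities ($\bm{\bar{f}}_{i,1}^T\bm{N}_{i,1}=\bm{0}^T$, $\bm{\bar{f}}_{i,1}^T\bm{N}_{i,2}=\bm{0}^T$) and the stated convention for the degenerate case where a defining force vanishes simply make rigorous what the paper argues informally (``cannot cancel out''), so no gap exists.
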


\begin{proof}
	According to the definition in \eqref{equ:ch8:nsbEq}, the vectors $\bm{N}_{i,1} \bm{f}_{i,2}$ and $\bm{N}_{i,2} \bm{f}_{i,3}$ are in the null-space of $\bm{f}_{i,1}$.
	Therefore, the sum of these two vectors cannot cancel out $\bm{f}_{i,1}$.
	In a similar manner, $\bm{N}_{i,2} \bm{f}_{i,3}$ lies in the null-space of $\bm{N}_{i,1} \bm{f}_{i,2}$, and they cannot cancel out each others.  
	This means that as $\bm{\tilde{f}}_i \to \bm{0}$, the following is true:
	\begin{equation}\label{equ:ch8:prop_proof1}
		\bm{f}_{i,1},\bm{N}_{i,1} \bm{f}_{i,2},\bm{N}_{i,2} \bm{f}_{i,3} \to \bm{0}
	\end{equation}
	Furthermore, \eqref{equ:ch8:nsbN1} indicates that $\bm{N}_{i,1} \to \bm{I}_m$ as $\bm{f}_{i,1} \to \bm{0}$.
	This clearly implies that $\bm{f}_{i,2}\to \bm{0}$ based on \eqref{equ:ch8:prop_proof1}.
	Similarly, $\bm{N}_{i,2} \to \bm{I}_m$ according to \eqref{equ:ch8:nsbN2} which indicates that $\bm{f}_{i,3} \to \bm{0}$ as well.
	Thus, it is ensured that ${\bm{f}_{i,1},\bm{f}_{i,2},\bm{f}_{i,3}} \to \bm{0}$ when $\bm{\tilde{f}}_i$ converges to $\bm{0}$ where the case $\bm{f}_{i,1} + \bm{f}_{i,2} + \bm{f}_{i,3} = 0$ while ${\bm{f}_{i,1},\bm{f}_{i,2},\bm{f}_{i,3}} \neq \bm{0}$ is never reached.
	Moreover, considering any ordered sequence of vector force fields such as the one in \eqref{equ:ch8:ordered_seq} and the definitions in \eqref{equ:ch8:potential_function},\eqref{equ:ch8:sep_gradient}, \eqref{equ:ch8:goal_gradient} and \eqref{equ:ch8:obs_gradient}, it is evident that the corresponding potential energy functions will converge to their local/global minimums since $\nabla U_{i,\alpha},\ \nabla U_{i,G},\ \nabla U_{i,\obs} \to \bm{0}$.
\end{proof}

Now, we can present the proposed flocking control laws as follows:
\begin{equation}\label{equ:ch8:control}
\renewcommand*{\arraystretch}{1.5}
\left[\begin{array}{c}
a_i \\ \bm{\alpha}_i
\end{array}\right] = \left[\begin{array}{l}
\bm{\tilde{f}}_i^T(\bm{q}_{i})  \bm{r}_i(\bm{\Theta}_i)- k_{v,i} \text{sgn}(v_i) \\
\ddot{\bm{\Theta}}_{fi} - \bm{K}_{1,i} \tanh(\bm{\Theta}_i - \bm{\Theta}_{fi} ) - \bm{K}_{2,i} \tanh(\dot{\bm{\Theta}}_i  - \dot{\bm{\Theta}}_{fi} )
\end{array}\right]
\end{equation}
where $\bm{K}_{1,i}$ and $\bm{K}_{2,i}$ are positive definite diagonal matrices of appropriate sizes, and $k_{v,i}>0$ is a design parameter.
Furthermore, $\text{sgn}(\cdot)$ is the signum function which is defined according to:
\begin{equation}\label{equ:ch8:sign_func}
	\text{sgn}(\alpha) = \left\{\begin{array}{lr}
		-1, & \alpha < 0 \\
		0, & \alpha = 0 \\
		1, & \alpha > 0
	\end{array}\right.
\end{equation}
Also, the following condition must be satisfied:
\begin{equation}\label{equ:ch8:controlCondition}
	k_{v,i} > k_{i,G} + \sum_{j \in \mathpzc{N}_{i}} k_{ij}
\end{equation}
For a different ordered sequence of vector force fields than the one considered in \eqref{equ:ch8:ordered_seq}, the condition in \eqref{equ:ch8:controlCondition} can be written in a more general form as:
\begin{equation}\label{equ:ch8:controlCondition_general_form}
	k_{v,i} > ||\bm{\xi}||
\end{equation}

\begin{remark}\label{rem:ch8:sign_approx}
	The signum function defined in \eqref{equ:ch8:sign_func} can be approximated in practice using some smooth saturation function (ex. the hyperbolic tangent function) to avoid the well-known chattering effect.
\end{remark}

The main results of this section are presented next.
\begin{theorem}\label{thm:ch8:main_result}
	Consider a networked multi-vehicle system of size $n$ following the model described in \cref{sec:ch8:model}.
	Under the assumption that the network graph $\graph$ is connected and the network topology is fixed, the control law \eqref{equ:ch8:control} solves the flocking control problem defined in \cref{prob:ch8:flocking_control} given that the condition \eqref{equ:ch8:controlCondition} is satisfied.
	In other words, it is guaranteed to achieve the following ($\forall i,j \in \V,\ i \neq j$):
	\begin{itemize}
		\item[(i)] \textit{Convergence to an $\alpha$-lattice formation:} $U_{i,\alpha}$ reaches a minimum as $t \to \infty$
		\item[(ii)] \textit{Velocity consensus:} $\lim\limits_{t \to \infty} ||v_i \bm{r}_i - v_j \bm{r}_j|| = 0$
		\item[(iii)] \textit{Convergence to a goal region:} $\lim\limits_{t \to \infty} \bm{q}_{i} \in \goal$
		\item[(iv)] \textit{Obstacle avoidance:} obstacles are avoided for all $t\geq 0$ (i.e. $||\bm{q}_{i} - \bm{q}^*|| > d_s$)
	\end{itemize}
\end{theorem}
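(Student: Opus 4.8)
The plan is to establish the result by a Lyapunov argument combined with LaSalle's invariance principle, viewing the closed loop as a cascade of an orientation (attitude) subsystem and a translational subsystem, and then invoking Proposition~\ref{prop:ch8:convergence} to pass from the vanishing of the modified field $\bm{\tilde{f}}_i$ to the vanishing of each constituent force, hence to the minimisation of the separation, goal and obstacle potentials. Throughout I would use the standing assumption that $\graph$ is connected with fixed topology, which is what makes the aggregate formation potential couple all $n$ vehicles.

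First I would analyse the orientation error dynamics. Substituting $\bm{\alpha}_i$ from \eqref{equ:ch8:control} into $\ddot{\bm{e}}_{\Theta,i} = \bm{\alpha}_i - \ddot{\bm{\Theta}}_{fi}$ produces the decoupled second-order system $\ddot{\bm{e}}_{\Theta,i} = -\bm{K}_{1,i}\tanh(\bm{e}_{\Theta,i}) - \bm{K}_{2,i}\tanh(\dot{\bm{e}}_{\Theta,i})$. With the candidate $V_{\Theta,i} = \bm{1}_{m-1}^T \bm{K}_{1,i}\ln(\cosh(\bm{e}_{\Theta,i})) + \tfrac{1}{2}\dot{\bm{e}}_{\Theta,i}^T\dot{\bm{e}}_{\Theta,i}$, whose derivative collapses to $-\dot{\bm{e}}_{\Theta,i}^T\bm{K}_{2,i}\tanh(\dot{\bm{e}}_{\Theta,i}) \le 0$, LaSalle gives $\bm{e}_{\Theta,i}\to\bm{0}$ and $\dot{\bm{e}}_{\Theta,i}\to\bm{0}$. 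Consequently $\bm{r}_i(\bm{\Theta}_i)$ aligns asymptotically with $\bm{\tilde{f}}_i/\|\bm{\tilde{f}}_i\|$ whenever $\bm{\tilde{f}}_i\neq\bm{0}$, so in steady state $\bm{\tilde{f}}_i^T\bm{r}_i = \|\bm{\tilde{f}}_i\|$; this alignment is precisely what excludes spurious equilibria where the heading is orthogonal to the commanded force.

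For the translational part I would take the aggregate energy $V = U + \tfrac{1}{2}\sum_i v_i^2$, where $U$ gathers the symmetrised pairwise separation potentials together with the goal and obstacle potentials, arranged so that $\nabla_{\bm{q}_i}U = -\bm{f}_i$ and hence $\dot{U} = -\sum_i v_i\,\bm{f}_i^T\bm{r}_i$. Differentiating $V$ along \eqref{equ:ch8:model} and \eqref{equ:ch8:control} and using $\bm{\tilde{f}}_i = \bm{f}_i + \bm{\xi}_i$ yields the key cancellation $\dot{V} = \sum_i v_i\,\bm{\xi}_i^T\bm{r}_i - \sum_i k_{v,i}|v_i| \le \sum_i |v_i|\,(\|\bm{\xi}_i\| - k_{v,i})$. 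The bound $\|\bm{\xi}_i\|\le\|\bm{f}_{i,2}\|+\|\bm{f}_{i,3}\|$ implicit in \eqref{equ:ch8:xiBound}, with the gain condition \eqref{equ:ch8:controlCondition} (equivalently \eqref{equ:ch8:controlCondition_general_form}), then forces $\dot{V}\le 0$ with equality only when $v_i=0$ for all $i$. Applying LaSalle on a compact sublevel set, trajectories converge to the largest invariant set inside $\{v_i=0\ \forall i\}$; invariance requires $\dot{v}_i=0$, hence $\bm{\tilde{f}}_i^T\bm{r}_i=0$, and by the steady-state alignment this forces $\bm{\tilde{f}}_i=\bm{0}$. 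Proposition~\ref{prop:ch8:convergence} then delivers $\bm{f}_{i,\alpha},\bm{f}_{i,G},\bm{f}_{i,\obs}\to\bm{0}$: $U_{i,\alpha}$ reaches a minimum, giving claim (i) (the $\alpha$-lattice); $q_{iG}\le R_G$, so $\bm{q}_i\in\goal$, giving claim (iii); and since every $v_i\to 0$, the velocity-matching claim (ii) $\|v_i\bm{r}_i - v_j\bm{r}_j\|\to 0$ follows trivially.

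The hard part will be the obstacle-avoidance claim (iv), because the sigmoid $\Gamma_{\obs}$ in \eqref{equ:ch8:obs_gradient} generates only a \emph{bounded} repulsion, so $U_{i,\obs}$ is not a genuine barrier and monotonicity of $V$ alone does not forbid a transient incursion past $d_s$. I would close this with an energy/reachability estimate: since $V$ is non-increasing, $\tfrac{1}{2}v_i^2 \le V(0)$ minus the potential floor, yielding an a priori speed bound $v_i\le v_{\max}$; I would then impose a design condition of the form $k_{i,\obs}(C - d_s) > \tfrac{1}{2}v_{\max}^2 + \Delta U$, where $\Delta U$ upper-bounds the admissible decrease of the remaining potentials across the layer $\{d_s < q_{i\obs}\le C\}$. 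This makes the repulsive-potential rise over the critical layer exceed the available kinetic-plus-driving energy, so a vehicle entering $\mathcal{R}_{\obs}$ is decelerated before reaching the safety margin, giving $q_{i\obs}>d_s$ for all $t\ge 0$. Making $v_{\max}$ and $\Delta U$ explicit, and ensuring that inside $\mathcal{R}_{\obs}$ the obstacle term dominates the goal and separation pulls (which is exactly where the null-space ordering \eqref{equ:ch8:ordered_seq} is used), is the most delicate and parameter-sensitive step, and I expect it to require the strongest additional hypotheses.
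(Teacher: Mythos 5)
Your proof's core is essentially the paper's: the same key cancellation $\dot{V} = \sum_i \big(v_i\,\bm{\xi}_i^T\bm{r}_i - k_{v,i}|v_i| - \dot{\bm{e}}_{\Theta,i}^T\bm{K}_{2,i}\tanh(\dot{\bm{e}}_{\Theta,i})\big)$, the same use of the gain condition \eqref{equ:ch8:controlCondition} via $\|\bm{\xi}_i\|\le k_{i,G}+\sum_{j}k_{ij}$, LaSalle to conclude $v_i=0$ and $\dot{\bm{e}}_{\Theta,i}=\bm{0}$ on the invariant set, the alignment of $\bm{r}_i(\bm{\Theta}_i)$ with $\bm{r}(\bm{\Theta}_{fi})$ to pass from $\bm{\tilde{f}}_i^T\bm{r}_i=0$ to $\bm{\tilde{f}}_i=\bm{0}$, and then Proposition~\ref{prop:ch8:convergence} to recover the individual forces and potentials. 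Your cascade decomposition is largely cosmetic: the paper's single Lyapunov function \eqref{equ:ch8:lyap} is exactly the sum of your $V_{\Theta,i}$'s and your translational $V$. The one technical advantage of the paper's combined function is that LaSalle is invoked on the full autonomous closed loop, whereas your translational subsystem in isolation is non-autonomous (its $\dot{V}$ involves $\bm{r}_i(\bm{\Theta}_i)$, driven by the still-converging attitude error), so strictly you would need a cascade/perturbed-system argument rather than plain LaSalle; this is patchable but worth flagging. For (ii), the paper additionally argues consensus holds \emph{during} the motion once the lattice forms (constancy of $\|\bm{q}_i-\bm{q}_j\|=d_{ij}$ gives $\dot{\bm{q}}_i=\dot{\bm{q}}_j$), while you obtain it only in the limit via $v_i\to 0$ --- which does suffice for the limit statement as written. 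The genuine divergence is (iv): the paper disposes of it in one line by citing Proposition~\ref{prop:ch8:convergence} (the obstacle potential reaches its minimum), with no transient analysis and no hypotheses beyond \eqref{equ:ch8:controlCondition}; you decline that step, correctly observing that the bounded sigmoid repulsion in \eqref{equ:ch8:obs_gradient} is not a barrier function and that asymptotic minimization cannot by itself yield $\|\bm{q}_i-\bm{q}^*\|>d_s$ for \emph{all} $t\ge 0$, and you propose an extra energy/gain condition to exclude transient incursions. That instinct is sound and exposes a step the paper treats loosely (its sublevel-set argument is reserved for inter-vehicle collision avoidance in the Corollary, not for obstacles) --- but be aware that your route then proves a variant of the theorem under hypotheses stronger than those in the statement, rather than the statement as given.
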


\begin{proof}
	Define a Lyapunov function $V(\bm{E}): \R^{2mn} \to \R$ as follows:
	\begin{equation}\label{equ:ch8:lyap}
		\begin{aligned}
		V(\bm{E}) =& \sum_{i=1}^{n} U_i + \sum_{i=1}^{n} \bm{1}_{m-1}^T\bm{K}_{1,i}\ln\Big(\cosh(\bm{e}_{\Theta,i}) \Big) \\ 
		&+ \frac{1}{2} \sum_{i=1}^{n}\dot{\bm{e}}_{\Theta,i}^T\dot{\bm{e}}_{\Theta,i} + \frac{1}{2} \sum_{i=1}^{n} v_i^2
		\end{aligned}
	\end{equation}
	The above definition clearly satisfies $V(\bm{E})>0\ \forall \bm{E} \in \R^{2mn} \textbackslash \{\bm{0}\}$ and $V(\bm{0}) = 0$.
	
	Using \eqref{equ:ch8:model} and \eqref{equ:ch8:fi_total}, the time derivative of \eqref{equ:ch8:lyap} can be obtained as follows:
	\begin{align}
		\nonumber \dot{V}(\bm{E}) =&  \sum_{i=1}^{n} \left[(\nabla_{\bm{q}_i} U_i)^T \dot{\bm{q}}_i + \bm{1}_{m-1}^T\bm{K}_{1,i} \text{Tanh}\Big(\bm{e}_{\Theta,i}\Big) \dot{\bm{e}}_{\Theta,i} \right.\\
		\nonumber &\left.+ \dot{\bm{e}}_{\Theta,i}^T\ddot{\bm{e}}_{\Theta,i} + v_i \dot{v}_i \right] \\
		=& \sum_{i=1}^{n} \left[-v_i \bm{f}_i^T \bm{r}_i + \dot{\bm{e}}_{\Theta,i}^T\bm{K}_{1,i}\tanh(\bm{e}_{\Theta,i}) + \dot{\bm{e}}_{\Theta,i}^T (\bm{\alpha}_i - \ddot{\bm{\Theta}}_{fi})+ v_i a_i \right] \label{equ:ch8:Wdot}
	\end{align}
	where the notation and properties defined in \cref{sec:ch8:notation} for $\tanh(\cdot)$ and $\text{Tanh}(\cdot)$ are used, and $\bm{r}_i = \bm{r}_i(\bm{\Theta}_i)$ and $\bm{f}_i = \bm{f}_i(\bm{q}_{i})$ are considered for brevity.
	
	Substituting the control laws \eqref{equ:ch8:control} into \eqref{equ:ch8:Wdot} yields the following:
	\begin{equation}\label{equ:ch8:Wdot_nd}
		\dot{V}(\bm{E}) = \sum_{i=1}^{n} \left[ v_i \bm{\xi}_i^T \bm{r}_i - k_{v,i} v_i \text{sgn}(v_i) - \dot{\bm{e}}_{\Theta,i}^T\bm{K}_{2,i} \tanh(\dot{\bm{e}}_{\Theta,i})  \right]
	\end{equation}
	Moreover, \eqref{equ:ch8:Wdot_nd} can be simplified to reach the following:
	\begin{align}
		\dot{V}(\bm{E}) &= \sum_{i=1}^{n} \left[ v_i \bm{\xi}_i^T \bm{r}_i - k_{v,i} |v_i| - \dot{\bm{e}}_{\Theta,i}^T\bm{K}_{2,i} \tanh(\dot{\bm{e}}_{\Theta,i})  \right] \label{equ:ch8:Wdot_nd1} \\
		&\leq \sum_{i=1}^{n} \left[ \Big(||\bm{\xi}_i|| - k_{v,i}\Big) |v_i| - \dot{\bm{e}}_{\Theta,i}^T\bm{K}_{2,i} \tanh(\dot{\bm{e}}_{\Theta,i})  \right] \label{equ:ch8:Wdot_nd2}
	\end{align}
	According to \eqref{equ:ch8:xiBound}, it is clear that $||\bm{\xi}_i||\leq k_{i,G} + \sum_{j \in \mathpzc{N}_{i}} k_{ij}$.
	Thus, under the condition \eqref{equ:ch8:controlCondition}, it is guaranteed that  $\dot{V}(\bm{E}) \leq 0 \ \forall \bm{E} \in \R^{2mn} \textbackslash \{\bm{0}\} \to V(\bm{E}(t)) \leq V(\bm{E}(0))$ since $\bm{K}_{2,i}$ is positive definite and $\tanh(\cdot)$ is an odd function.
	Therefore, there exists a compact set $\mathcal{W}\subset \R^{2mn}$ such that the system trajectory $\bm{E}(t)$ remains in it starting from any initial condition $\bm{E}(0) \in \mathcal{W}$ (i.e. $\mathcal{W}$ is invariant).
	Now, consider the set $\Omega_1=\{\bm{E}\in \mathcal{W}: \dot{V}(\bm{E}) = 0\}$.
	From \eqref{equ:ch8:Wdot_nd1}, the following is implied:
	\begin{equation}\label{equ:ch8:zero_vel}
		\dot{V}(\bm{E}) = 0 \to v_i = 0\ \ and \ \ \dot{\bm{e}}_{\Theta,i} = \bm{0}
	\end{equation}
	since $v_i\bm{\xi}_i^T \bm{r}_i < k_{v,i} |v_i|$.
	Moreover, $\dot{v}_i = a_i = 0$ and $\ddot{\bm{e}}_{\Theta,i} = (\bm{\alpha}_i - \ddot{\bm{\Theta}}_{fi}) = \bm{0}$ are also implied.
	As a result, $\bm{\tilde{f}}_i^T \bm{r}_i(\bm{\Theta}_i) = 0$ and $\tanh(\bm{e}_{\bm{\Theta},i}) = \bm{0}$ based on \eqref{equ:ch8:control}.
	This indicates that $\bm{e}_{\bm{\Theta},i} \to 0$ (i.e. $\bm{\Theta}_i \to \bm{\Theta}_{fi}$ and $\bm{r}(\bm{\Theta}_i) \to \bm{r}(\bm{\Theta}_{fi})$).
	Also, one can find out the following using \eqref{equ:ch8:f_i_new}:
	\begin{equation}
		\bm{\tilde{f}}_i^T \bm{r}_i(\bm{\Theta}_i) = ||\bm{\tilde{f}}_i|| \bm{r}^T(\bm{\Theta}_{fi})\bm{r}_i(\bm{\Theta}_i) \to 0,
	\end{equation}
	which ensures that $\bm{\tilde{f}}_i \to 0$ since $\bm{\Theta}_i \to \bm{\Theta}_{fi}$ and $||\bm{r(\cdot)}||=1$.
	Based on that, it is clear that the system trajectory will converge to the largest invariant set in $\Omega_1$ which only contains the origin.
	Hence, the control law \eqref{equ:ch8:control} guarantees the asymptotic convergence of $\bm{E}$ according to LaSalle's invariance principle.
	Moreover, using the result of \cref{prop:ch8:convergence}, it is guaranteed that $U_{i,\alpha}$, $U_{i,G}$ and $U_i^{OA}$ will reach their minimums which proves (i), (iii) and (iv).

	Additionally, the velocity consensus is achieved during motion once the formation of an $\alpha-$lattice is achieved.
	Consider the time interval $t\geq t_k$ where $t_k$ is the time when $U_{i,\alpha}$ reaches a minimum, and the formation is kept unchanged afterwards (i.e. $||\bm{q}_i - \bm{q}_j|| = d_{ij}$ for $t \geq t_k$).
	Based on that, the relative inter-agent dynamics after $t_k$ reduces to $\dot{\bm{q}}_i - \dot{\bm{q}}_j = \bm{0}$ showing that a velocity consensus is achieved during the motion.
	Also, \eqref{equ:ch8:zero_vel} indicates that all vehicles will reach zero velocity when the goal region is reached which proves (ii).
	This completes the proof.
\end{proof}

\begin{remark}
	The bounds of the control laws \eqref{equ:ch8:control} are as follows:
	\begin{align}
		|a_i| &\leq \sum_{j \in \mathpzc{N}_{i}}k_{ij} + k_{i,G} + k_{i,\obs} + k_{v,i} \\
		||\bm{\alpha}_i|| &\leq ||\ddot{\bm{\Theta}}_{fi}|| + \Big(\lambda_{max}(\bm{K}_{1,i}) + \lambda_{max}(\bm{K}_{2,i})\Big) \sqrt{m}
	\end{align}
This could make it easier to properly tune the control parameters while ensuring that any physical limits are satisfied.
\end{remark}

Further results are now presented following \cref{thm:ch8:main_result} to show that collision avoidance is also guaranteed using the proposed control laws.
Also, a proper choice of control parameters can ensure that the distance between vehicles can remain larger than some required safety margin.%
\begin{corollary}
	(Collision Avoidance) Consider a networked multi-vehicle system of size $n$ following the model described in \cref{sec:ch8:model}.
	The control laws \eqref{equ:ch8:control} can ensure that the vehicles' relative distances satisfy a safety margin $d_s < d_{ij}$ (i.e. $||\bm{q}_i - \bm{q}_j||\geq d_s,\ \ \forall i,j\in\mathcal{V},\ i \ne j,\ $) for all $t \geq 0$ for any solution starting in the set $\Omega_2 = \{\bm{E}\in\R^{2mn}: V(\bm{E}) \leq c \}$ under the condition
	\begin{equation}
		c  < \min\{k_{ij}\} \ln\left(\cosh(d_{ij} - d_{s})\right):= c^*
	\end{equation}
	 where $min\{\cdot\}$ is the minimum value.
\end{corollary}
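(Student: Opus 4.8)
The plan is to exploit the monotonicity of the Lyapunov function $V$ established in the proof of \cref{thm:ch8:main_result} together with the explicit form of the separation potential $U_{i,\alpha}$ in \eqref{equ:ch8:U_i_ca}. First I would recall that, under condition \eqref{equ:ch8:controlCondition}, the proof of \cref{thm:ch8:main_result} already delivers $\dot{V}(\bm{E}) \leq 0$ along every trajectory. Hence the sublevel set $\Omega_2 = \{\bm{E}: V(\bm{E}) \leq c\}$ is forward invariant, and any solution starting in $\Omega_2$ satisfies $V(\bm{E}(t)) \leq V(\bm{E}(0)) \leq c$ for all $t \geq 0$.

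The second step is to isolate a single pairwise term. Every summand of $V$ in \eqref{equ:ch8:lyap} is non-negative: $\ln(\cosh(\cdot)) \geq 0$ makes both $U_{i,\alpha}$ and the orientation term non-negative, the velocity and angular-rate terms are squares, and $U_{i,G}, U_{i,\obs}$ are non-negative by their construction as potentials vanishing at their global minima. Consequently $\sum_{i=1}^n U_{i,\alpha} \leq V(\bm{E}(t)) \leq c$. Using the symmetry $k_{ij}=k_{ji}$ and $q_{ij}=q_{ji}$, the double sum $\sum_i U_{i,\alpha} = \frac{1}{2}\sum_i\sum_{j\in\mathpzc{N}_{i}}k_{ij}\ln(\cosh(q_{ij}))$ counts each unordered neighbour pair exactly once with weight $k_{ij}\ln(\cosh(q_{ij}))$; since every term is non-negative, for any individual neighbour pair $(i,j)$ I obtain $k_{ij}\ln(\cosh(q_{ij})) \leq \sum_i U_{i,\alpha} \leq c$.

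Finally I would convert this energy bound into a distance bound. Reading the threshold as $c^* := \min_{i,j}\{k_{ij}\ln(\cosh(d_{ij}-d_s))\}$, the hypothesis $c < c^*$ gives $k_{ij}\ln(\cosh(q_{ij})) \leq c < k_{ij}\ln(\cosh(d_{ij}-d_s))$, so $\ln(\cosh(q_{ij})) < \ln(\cosh(d_{ij}-d_s))$. Because $x \mapsto \ln(\cosh(x))$ is even and strictly increasing in $|x|$, and $d_{ij}-d_s>0$ since $d_s<d_{ij}$, this forces $|q_{ij}| < d_{ij}-d_s$. Recalling $q_{ij} = \|\bm{q}_i-\bm{q}_j\| - d_{ij}$, the lower branch $q_{ij} > -(d_{ij}-d_s)$ rearranges to $\|\bm{q}_i-\bm{q}_j\| > d_s$, which is the claimed safety guarantee.

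The main obstacle I anticipate is not any single calculation but two bookkeeping points that must be handled carefully: first, accounting for the factor $\frac{1}{2}$ in $U_{i,\alpha}$ through the double counting of ordered pairs, which is precisely what cancels the $\frac{1}{2}$ and matches the stated threshold $c^*$ without a spurious factor of two; and second, clarifying that the guarantee is established for neighbour pairs $j\in\mathpzc{N}_{i}$, which suffices because any pair approaching within $d_s < r_c$ necessarily lies within communication range and therefore contributes an active repulsive term in the aggregate potential. I would also make the reading of $c^*$ as a minimum over all pairs explicit, since the written expression $\min\{k_{ij}\}\ln(\cosh(d_{ij}-d_s))$ is otherwise ambiguous.
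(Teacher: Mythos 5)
Your proof is correct and follows essentially the same route as the paper: both arguments bound the pairwise separation term $k_{ij}\ln\left(\cosh(q_{ij})\right)$ by the Lyapunov level $c$ (using $\dot{V}\leq 0$ for forward invariance of $\Omega_2$ and the non-negativity of all remaining terms in $V$) and then use the evenness and monotonicity of $\ln\cosh$ to convert $c < c^*$ into $\|\bm{q}_i-\bm{q}_j\| > d_s$. The only difference is logical packaging — the paper assumes a violation at some time $t_1$ and derives the contradiction $U_{\alpha}(t_1)\geq c^*$ while you prove the contrapositive directly — and your explicit handling of the invariance step, the factor $\tfrac{1}{2}$ bookkeeping, and the restriction to neighbour pairs makes precise what the paper leaves implicit.
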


\begin{proof}
	This can be proved by contradiction.
	Assume that there exist two vehicles whose relative distance is less than the safety margin at some time $t = t_1$.
	That is, $||\bm{q}_k(t_1) - \bm{q}_l(t_1)||< d_s$.
	Let's rewrite the potential energy function \eqref{equ:ch8:lyap} as follows:
	\begin{equation}\label{equ:ch8:WCor}
	 	V(\bm{E}) = \frac{1}{2}\sum_{i\in \mathcal{V}}\sum_{j \in \mathpzc{N}_{i}} k_{ij} \ln\left(\cosh\left(q_{ij}\right)\right) + V_1(\bm{E})
	\end{equation}
	The first term in the above equation corresponds to $U_{\alpha}$, and $V_1(\bm{E})>0$ corresponds to the remaining terms in \eqref{equ:ch8:lyap}. 
	At $t = t_1$, $U_{\alpha}$ is such that:
	\begin{equation}\label{equ:ch8:WCor2}
	\begin{aligned}
	U_{\alpha}(t_1) =& k_{kl} \ln\left(\cosh\Big(||\bm{q}_k(t_1) - \bm{q}_l(t_1)|| - d_{ij}\Big)\right) \\
	&+ \frac{1}{2}\sum_{i\in \mathcal{V}\backslash\{k,l\}}\sum_{j \in \mathpzc{N}_{i}\backslash\{k,l\}} k_{ij} \ln\left(\cosh\left(q_{ij}\right)\right) \\
	&\geq k_{kl} \ln\left(\cosh\Big(||\bm{q}_k(t_1) - \bm{q}_l(t_1)|| - d_{ij}\Big)\right)
	\end{aligned}
	\end{equation}
	Under the assumption that $||\bm{q}_k(t_1) - \bm{q}_l(t_1)||< d_s$, the following is implied:
	\begin{equation}\label{equ:ch8:UcaCond}
	U_{\alpha}(t_1) \geq k_{kl} \ln\left(\cosh(d_{ij} - d_{s})\right) \geq c^*
	\end{equation}
	where $d_{ij}>d_s$ is a necessary condition for any feasible formation.
	On the other hand, any solution starting in $\Omega_2$ should satisfy $U_{\alpha} \leq c < c*$ based on \eqref{equ:ch8:WCor} since $V_1(\bm{E})>0$.
	This clearly is in contradiction with \eqref{equ:ch8:UcaCond} which proves that the situation $||\bm{q}_k(t_1) - \bm{q}_l(t_1)||< d_s$ never occurs.
	Hence, collision avoidance is guaranteed as well as satisfying the safety margin $d_s$ such that $||\bm{q}_i - \bm{q}_j||\geq d_s,\ \ \forall i\in\mathcal{V},\ \forall j \in \mathpzc{N}_{i}$ for $t \geq 0$.
\end{proof}

\begin{remark}
	It is possible to extend the analysis to consider switching network topology.
	However, in this case, the error dynamics may exhibit discontinuities whenever $\mathpzc{N}_{i}(t)$ changes due to edges being added to or removed from $\E$.
	Thus, non-smooth analysis methods \cite{clarke1990optimization} and differential inclusions methods \cite{filippov2013differential} can be applied to analyze the stability under the application of the control laws \eqref{equ:ch8:control}.
\end{remark}

\begin{remark}\label{rem:ch8:local_min}
	In some cases, the potential function $U_{i,\alpha}$ may stuck in a local minimum corresponding to $\sum_{j \in \mathpzc{N}_{i}} k_{ij} \tanh(q_{ij})\ \bm{n}_{ij}=0$ where $q_{ij} \neq 0$. 
	Thus, a deviation from the $\alpha$-lattice formation may occur resulting in what can be called a quasi $\alpha$-lattice formation which is defined as follows \cite{olfati2006flocking}: 
	\begin{equation}
	-\epsilon \leq || \bm{q}_i - \bm{q}_j|| - d_{ij} \leq \epsilon,\ \ \forall j \in \mathpzc{N}_i 
	\end{equation}
	A possible solution to escape this local minimum is by further considering an associated modified vector field based on \eqref{equ:ch8:nsbEq} for the inter-agent forces.
\end{remark}

\section{Simulation Results}\label{sec:ch8:simulation}

Simulations have been carried to validate the performance of the proposed flocking control laws using MATLAB.
Different cases were considered for multi-vehicle systems of sizes $n=4$, $n=20$ and $n=100$ to show the scalability of the approach.
In all simulations, a three-dimensional workspace was considered, and the vehicles' kinematic model \eqref{equ:ch8:model} was used for $m=3$.
That is, the vehicle's position is represented using $\bm{q}_i = [x_i,\ y_i,\ z_i]^T$, and its orientation is expressed using two angles $\bm{\Theta}_i = [\theta_i,\ \psi_i]^T$ where $\theta_i$ and $\psi_i$ are the flight path and heading angles of the $i^{th}$ vehicle respectively.
The orientaion vector is constructed in terms of the orientation angles as follows:
\begin{equation}
	\bm{r}_i(\bm{\Theta}_i) = \left[\begin{array}{c}
	\cos\theta_i \cos \psi_i \\
	\cos\theta_i \sin \psi_i \\
	\sin \theta_i
	\end{array}\right]
\end{equation}
The control laws \eqref{equ:ch8:control} were applied in all cases, and the signum function $\text{sgn}(v_i)$ was approximated by the smooth hyperbolic tangent function $\tanh(v_i)$ as was suggested in \cref{rem:ch8:sign_approx}.

All vehicles were initially distributed at random in some initial region $\mathcal{I}$ which is slightly different for each case depending on the number of vehicles.
Similarly, different spherical goal regions $\goal$ was assigned to the group to reach as one of the mission objectives.
For simplicity, the desired inter-agent separation distance for all cases was chosen the same as $d_{ij}=5m$. 
In the first simulation case, an obstacle was placed between the vehicles' initial position and the goal region.
A vortex field around the obstacle was used as a way to compute $\bm{n}_{i\obs}$.
However, different advanced approaches can be applied to determine $\bm{n}_{i\obs}$ based on some of the available obstacle avoidance methods in the literature.

The first simulation case considers a system of 4 vehicles initially deployed at random within $\mathcal{I}=\{(x_i,y_i,z_i): x_i,y_i,z_i \in [0,20] \}$.
The goal region was also set such that $\bm{q}_G = [50,\ 50,\ 80]^T$ and $R_G=10$.
The control parameters for this case was selected as: $k_{ij}=0.6$, $k_{i,G}=0.5$, $k_{i,\obs}=1$, $k_{v,i}=2.5$, $\bm{K}_{1,i}= 0.25 \bm{I}_2$, and $\bm{K}_{2,i}=2 \bm{I}_2$.
Furthermore, the safety margin was chosen to be $d_{s}=1$, and the safety distance around the obstacle was chosen as $C = 5.5$ which was needed in \eqref{equ:regionFunction} to compute $\Gamma_{\obs}(q_{i \obs},d_{s})$.
Also, the communication range was chosen as $r_c = 20m$ which determines the neighbourhood of each vehicle according to \eqref{equ:ch8:comm_neighborhood}.
The obtained results for this case are presented in \cref{fig:ch8:simMotion,fig:ch8:simdObs,fig:ch8:simDij,fig:ch8:simLinVelAcc,fig:ch8:simAngVel,fig:ch8:simAngAcc}.
\Cref{fig:ch8:simMotion} shows the overall paths taken by the vehicles which clearly verifies that the vehicles can reach the goal region $\goal$ with obstacle avoidance capability (distances to obstacle are shown in \cref{fig:ch8:simdObs}).
The relative distances between the vehicles are shown in \cref{fig:ch8:simDij} which confirms that the collision avoidance objective is achieved (i.e. $||\bm{q}_i - \bm{q_j}|| \geq d_s\ \ \forall t$).
The vehicles also reach the desired formation after about $100s$ such that the desired separation distance $d_{ij}=5m$ is achieved (i.e. $||\bm{q}_i - \bm{q_j}|| \to d_{ij}\ \forall j \in \mathpzc{N}_{i}$).
After about $200s$ when the vehicles enters the critical region around the detected obstacle, the input component relative to obstacle avoidance becomes more important causing a deviation from the desired formation as long as nearby vehicles are still at a safe distance.
Once the obstacle is avoided, the groups goes back into the desired formation around $310s$.
Similar behaviour can be seen when the vehicles reach the goal region where vehicles start to slow down.
However, they quickly adjust their positions within the goal region to maintain the desired formation.
The linear/angular velocities and accelerations are given in \cref{fig:ch8:simLinVelAcc,fig:ch8:simAngVel,fig:ch8:simAngAcc}.
This shows that the vehicles come to a complete stop after reaching the goal region and forming an $\alpha-$lattice.
The velocity consensus can also be seen from \cref{fig:ch8:simMotion,fig:ch8:simLinVelAcc} every time an $\alpha-$lattice is formed except when the vehicles are avoiding the obstacle.

\begin{figure}[!htb]
	\centering
	\includegraphics[width=0.7\linewidth]{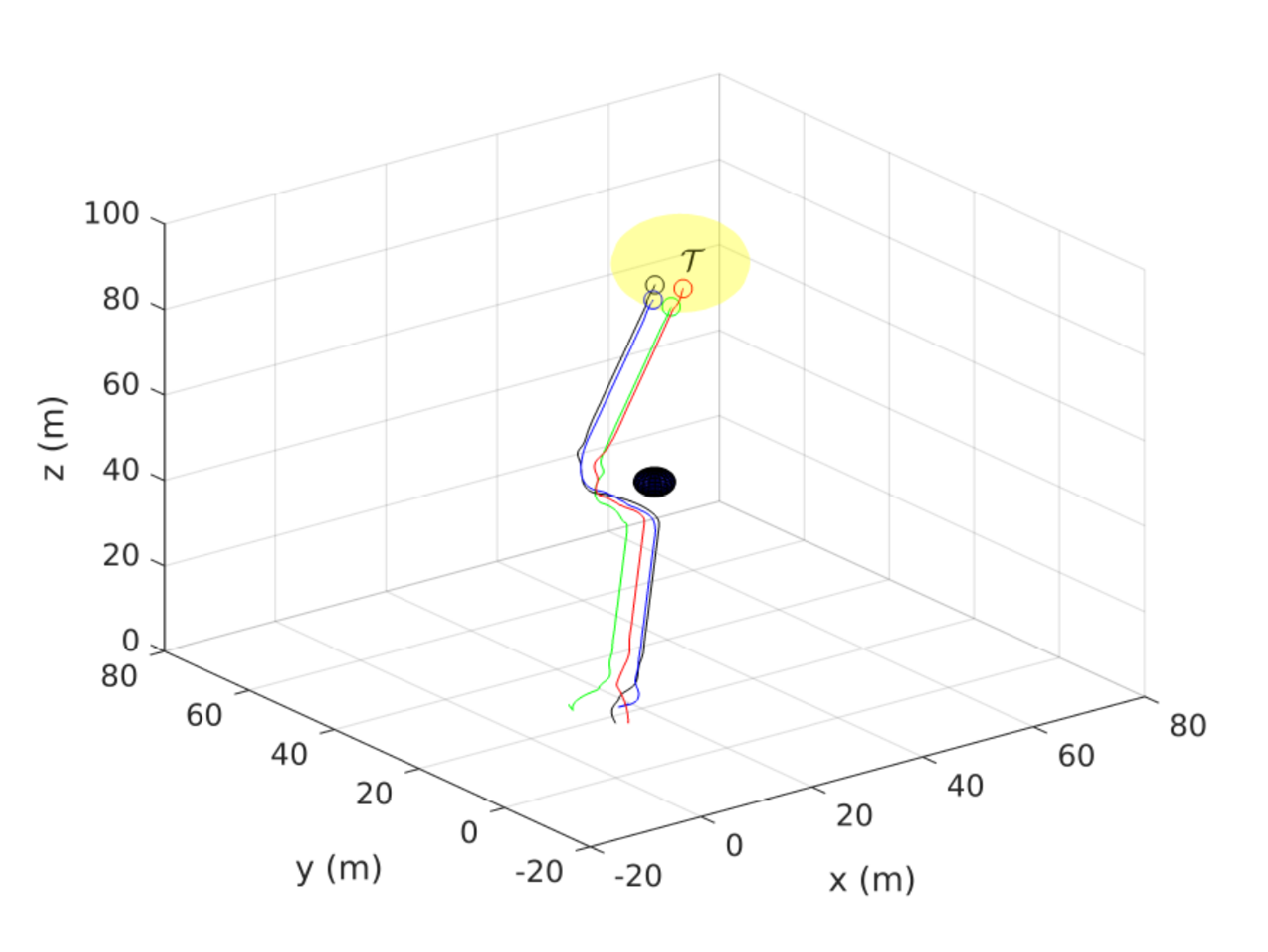} 
	\caption{Simulation case ($n = 4$): motion trajectories of vehicles} \label{fig:ch8:simMotion}
\end{figure}

\begin{figure}[!htb]
	\centering
	\includegraphics[width=0.7\linewidth]{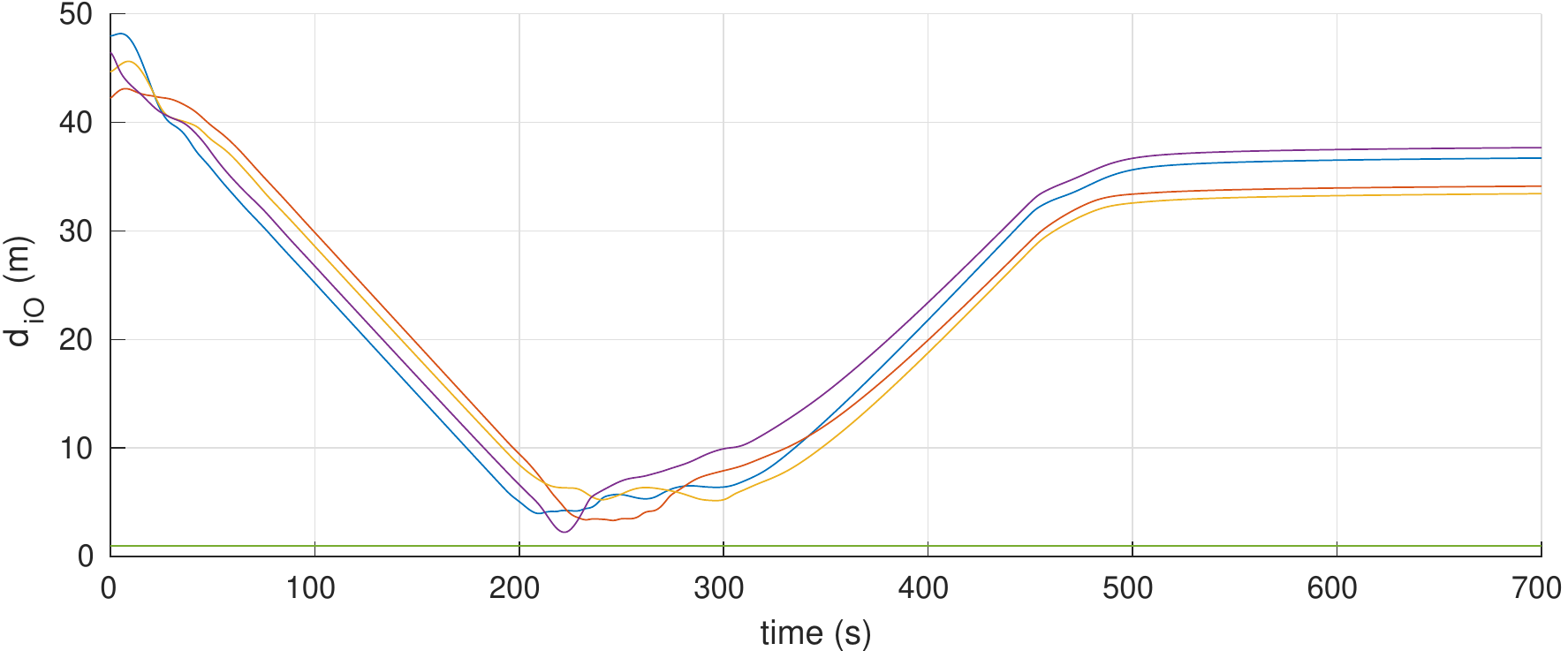} 
	\caption{Simulation case ($n = 4$): vehicles' distances to obstacle versus time} \label{fig:ch8:simdObs}
\end{figure}

\begin{figure}[!htb]
	\centering
	\includegraphics[width=0.7\linewidth]{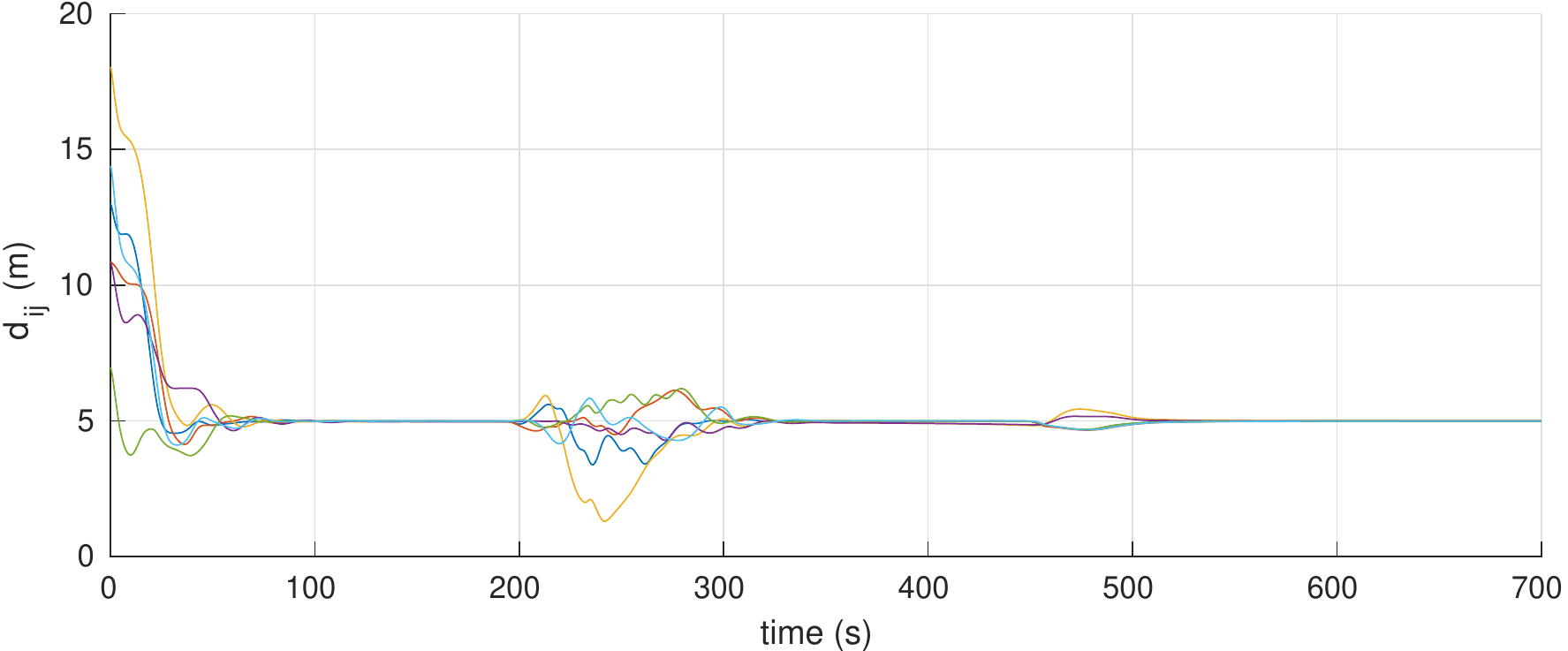} 
	\caption{Simulation case ($n = 4$): relative distances between vehicles versus time} \label{fig:ch8:simDij}
\end{figure}

\begin{figure}[!htb]
	\centering
	\includegraphics[width=0.7\linewidth]{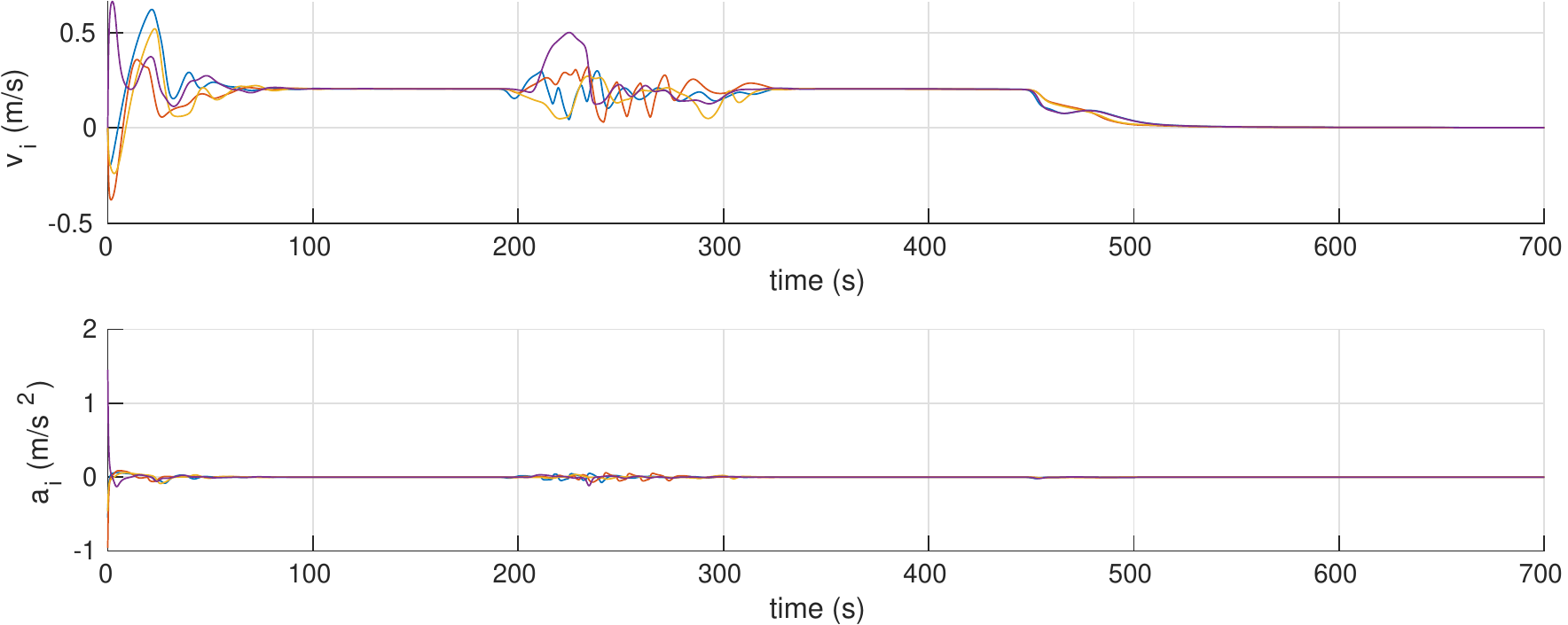} 
	\caption{Simulation case ($n = 4$): linear velocities $v_i$ and accelerations $a_i$ of vehicles versus time} \label{fig:ch8:simLinVelAcc}
\end{figure}

\begin{figure}[!htb]
	\centering
	\includegraphics[width=0.7\linewidth]{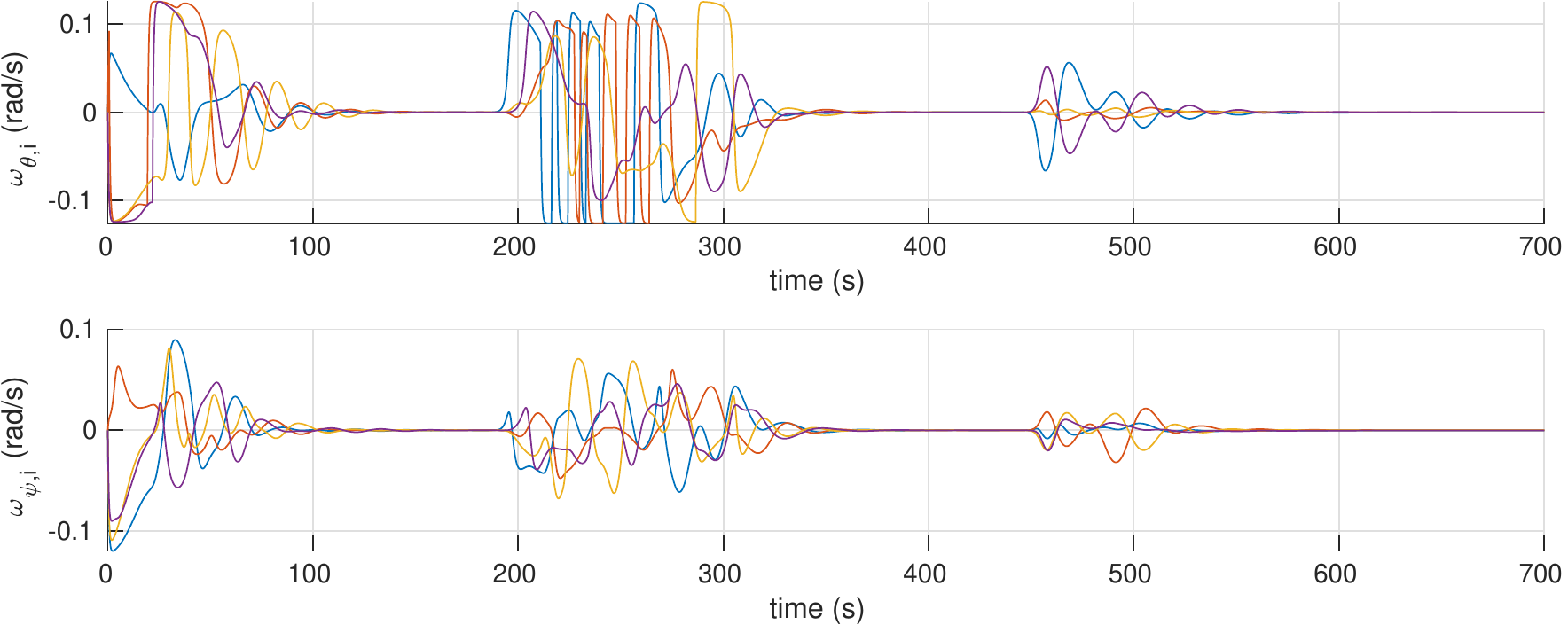} 
	\caption{Simulation case ($n = 4$): angular velocities $\Omega_{\theta,i}$ and $\Omega_{\psi,i}$ of vehicles versus time} \label{fig:ch8:simAngVel}
\end{figure}

\begin{figure}[!htb]
	\centering
	\includegraphics[width=0.7\linewidth]{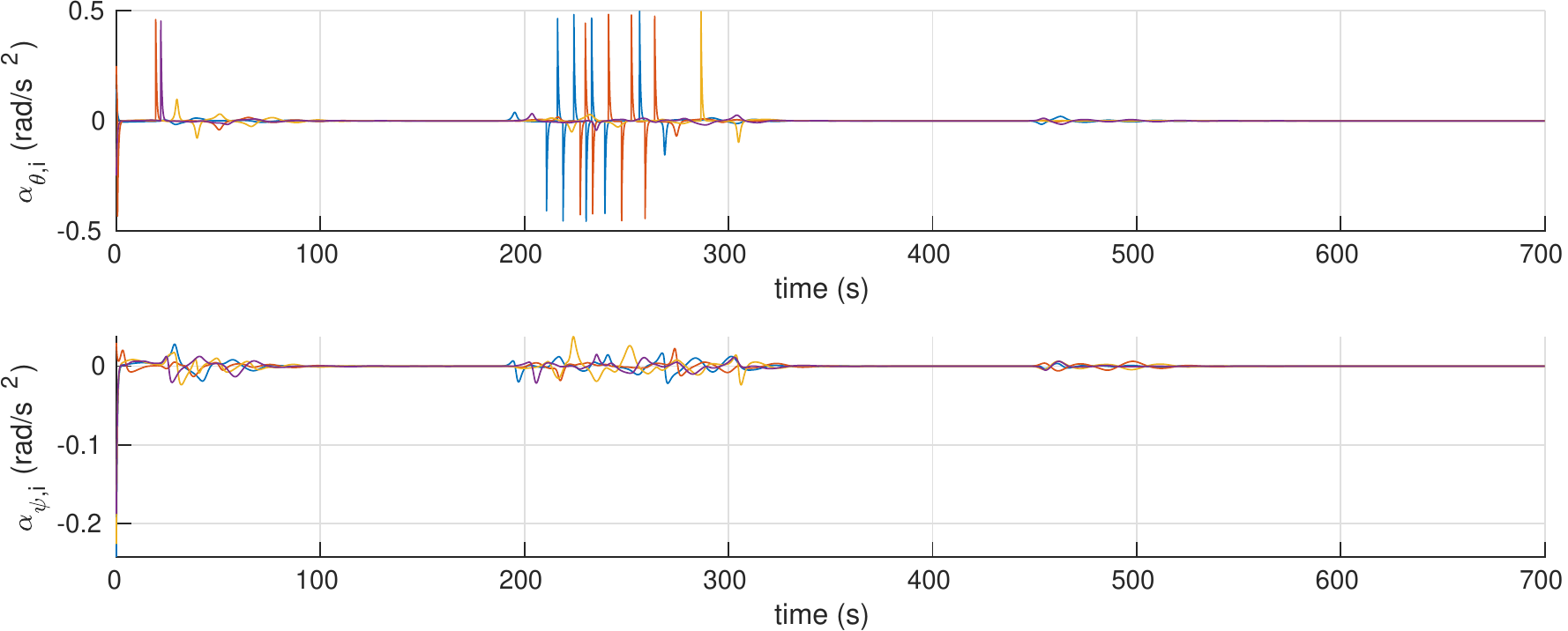} 
	\caption{Simulation case ($n = 4$): angular accelerations $\alpha_{\theta,i}$ and $\alpha_{\psi,i}$ of vehicles versus time} \label{fig:ch8:simAngAcc}
\end{figure}

The other two simulation cases were carried out using 20 and 100 vehicles respectively to show the scalability of the approach.
In these cases, no obstacles were considered; however, it would be handled the same way as it was done in the first case.
The control parameters for both cases were chosen as follows: $k_{ij}=1.0$, $k_{i,G}=0.75$, $k_{v,i}=2.0$, $\bm{K}_{1,i}= diag\{1, 2.5\}$, and $\bm{K}_{2,i}=2 \bm{I}_2$.
Also, the communications range was selected as $r_c = 12m$.
The suggestion made in \cref{rem:ch8:local_min} was used in these two simulation cases such that $\bm{f}_{i,\alpha}$ will have attractive/repulsive forces due to the nearest two vehicles.

The vehicles positions at different time instances as well as the overall executed trajectories are shown in \cref{fig:ch8:sim2MotionInstances,fig:ch8:sim2Motion,fig:ch8:sim3MotionInstances,fig:ch8:sim3Motion} for both cases.
The vehicles managed to successfully reach the goal region in both scenarios.
The relative distances are shown in \cref{fig:ch8:sim2Dij,fig:ch8:sim3Dij} which clearly confirms that the motion is collision-free.
Moreover, the vehicles keep a distance of $d_{ij}$ to its closest Neighbors forming a quasi $\alpha-$lattice. 
It was also observed that the adjacency matrix of the network graph maintained a full-rank during the motion which means that the proposed control laws managed to also persevere the connectivity of the multi-agent system.
The vehicles' linear velocities and acceleration are given in \cref{fig:ch8:sim2LinVelAcc,fig:ch8:sim3LinVelAcc} showing that velocity consensus is successfully achieved, and the vehicles come to a complete stop when they reach the goal region.

Overall, the provided results show how well our proposed distributed flocking control method work in achieving the considered control objectives.

\begin{figure}[!htb]
	\centering
	\begin{adjustbox}{minipage=\linewidth,scale=1.0}
		\begin{subfigure}[t]{0.48\textwidth}
			\centering
			\includegraphics[clip, width=\linewidth]{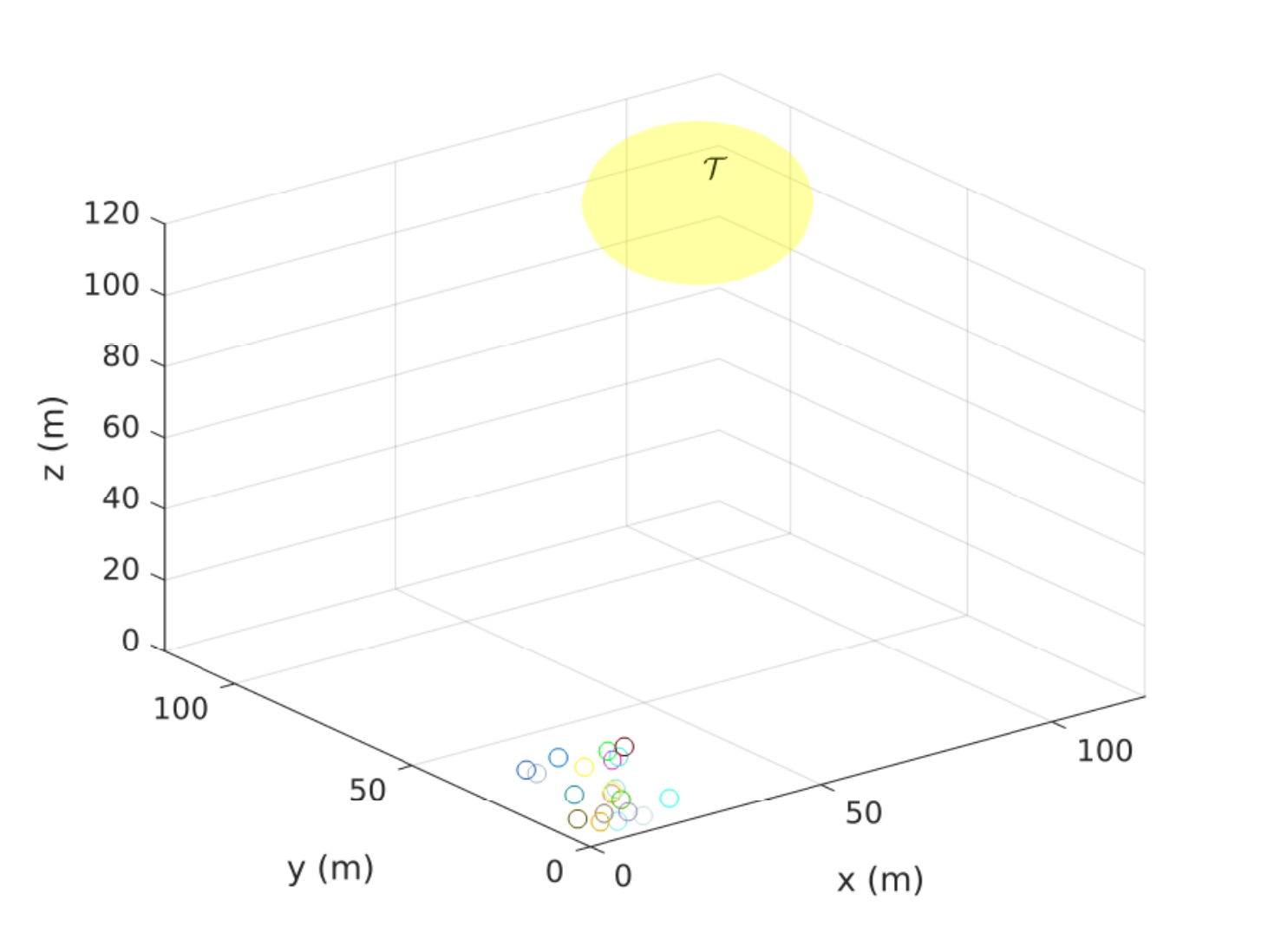} 
			\caption{}
		\end{subfigure}
		\hfill
		\begin{subfigure}[t]{0.48\textwidth}
			\centering
			\includegraphics[clip, width=\linewidth]{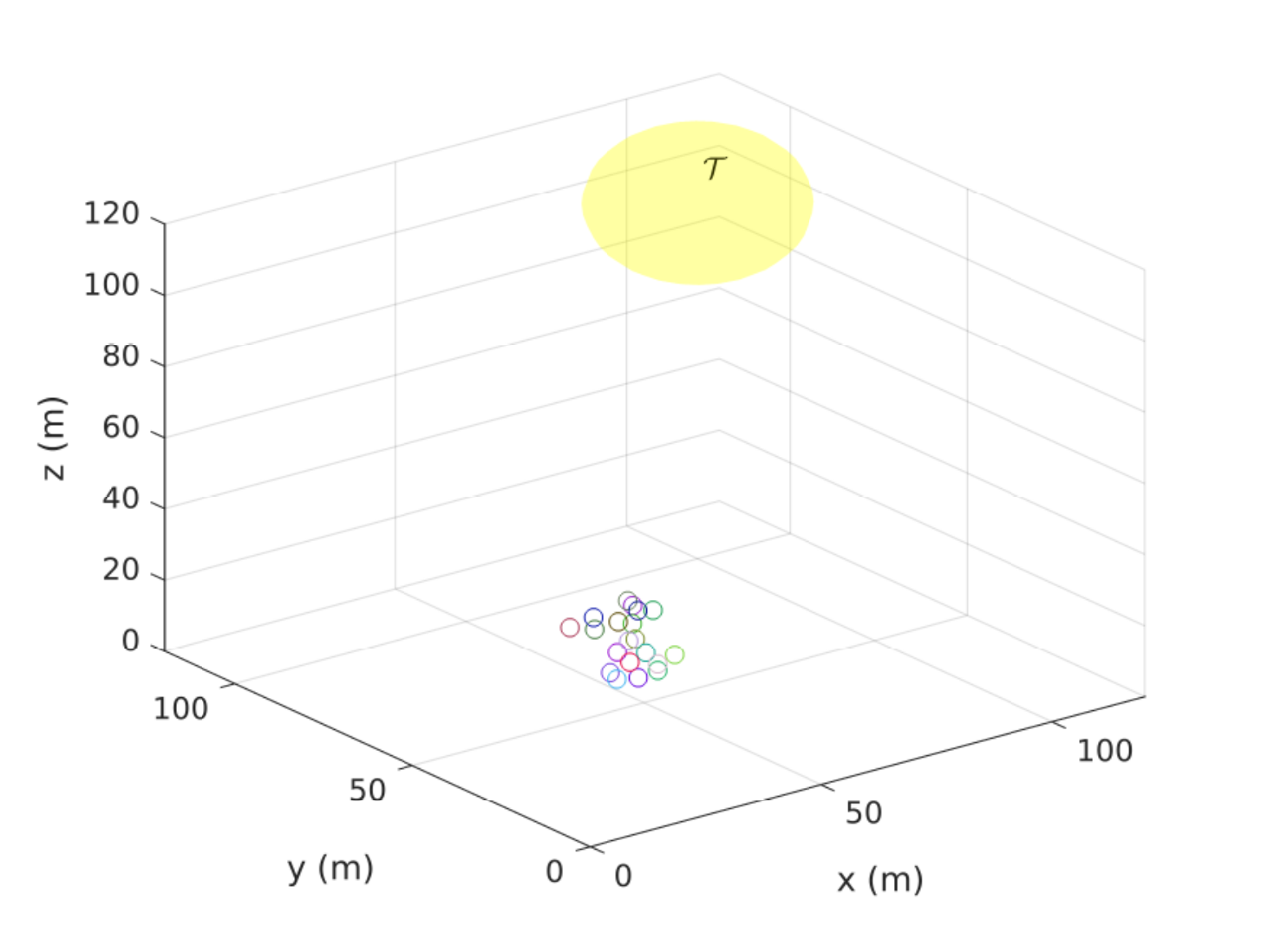}
			\caption{}
		\end{subfigure}
	
		\begin{subfigure}[t]{0.48\textwidth}
			\centering
			\includegraphics[clip, width=\linewidth]{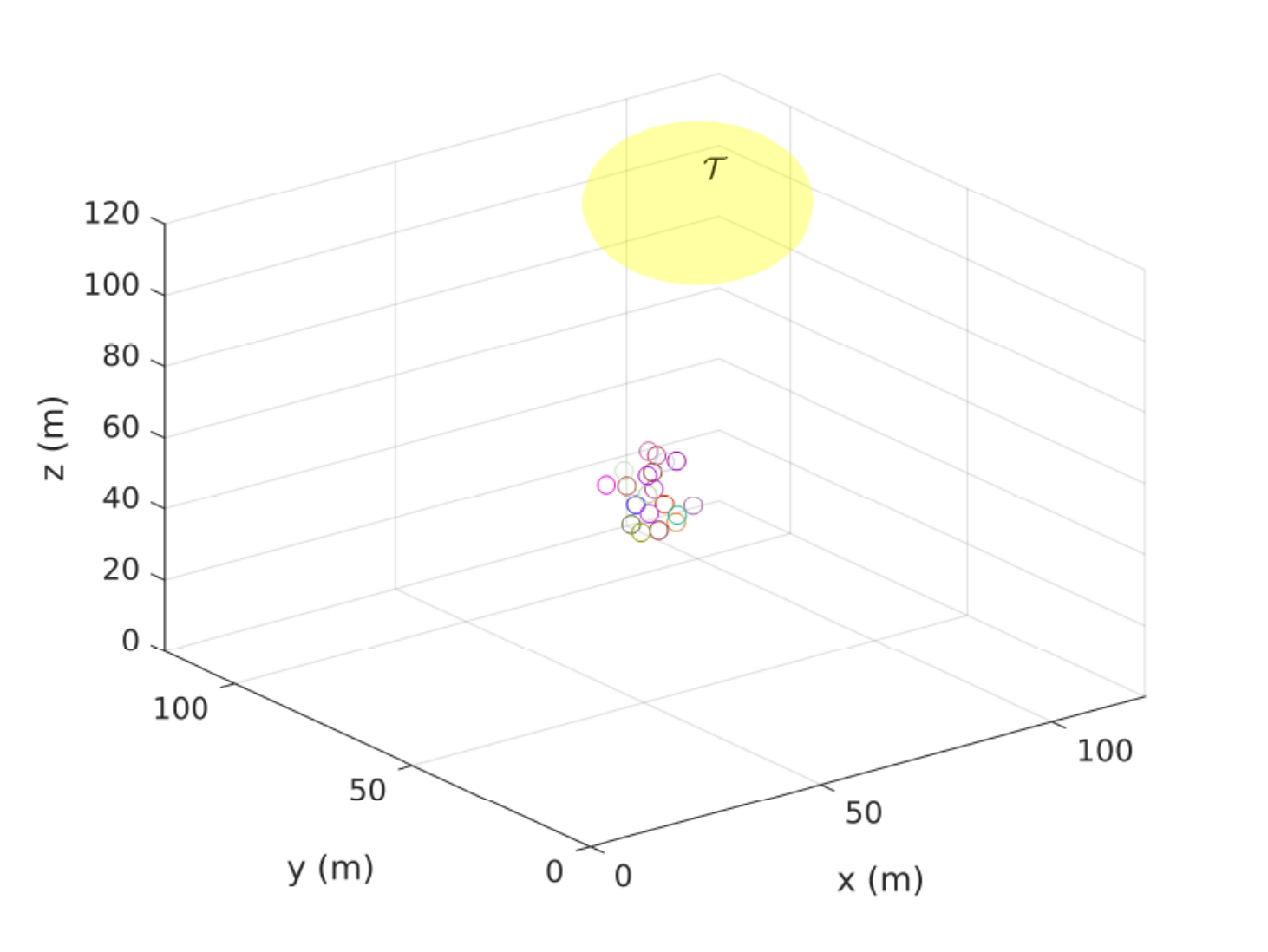} 
			\caption{}
		\end{subfigure}
		\hfill
		\begin{subfigure}[t]{0.48\textwidth}
			\centering
			\includegraphics[clip, width=\linewidth]{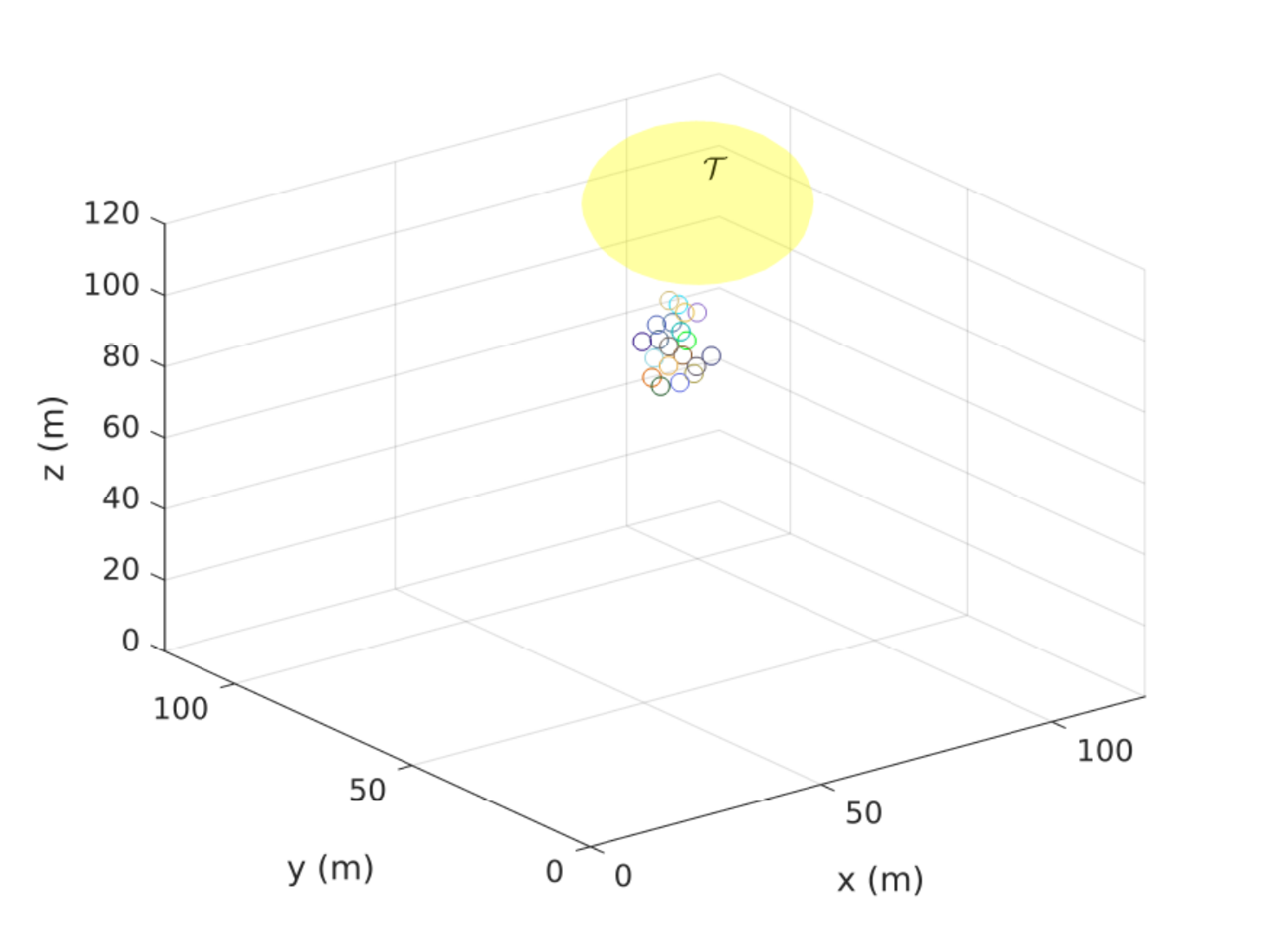}
			\caption{}
		\end{subfigure}
	
		\begin{subfigure}[t]{0.48\textwidth}
			\centering
			\includegraphics[clip, width=\linewidth]{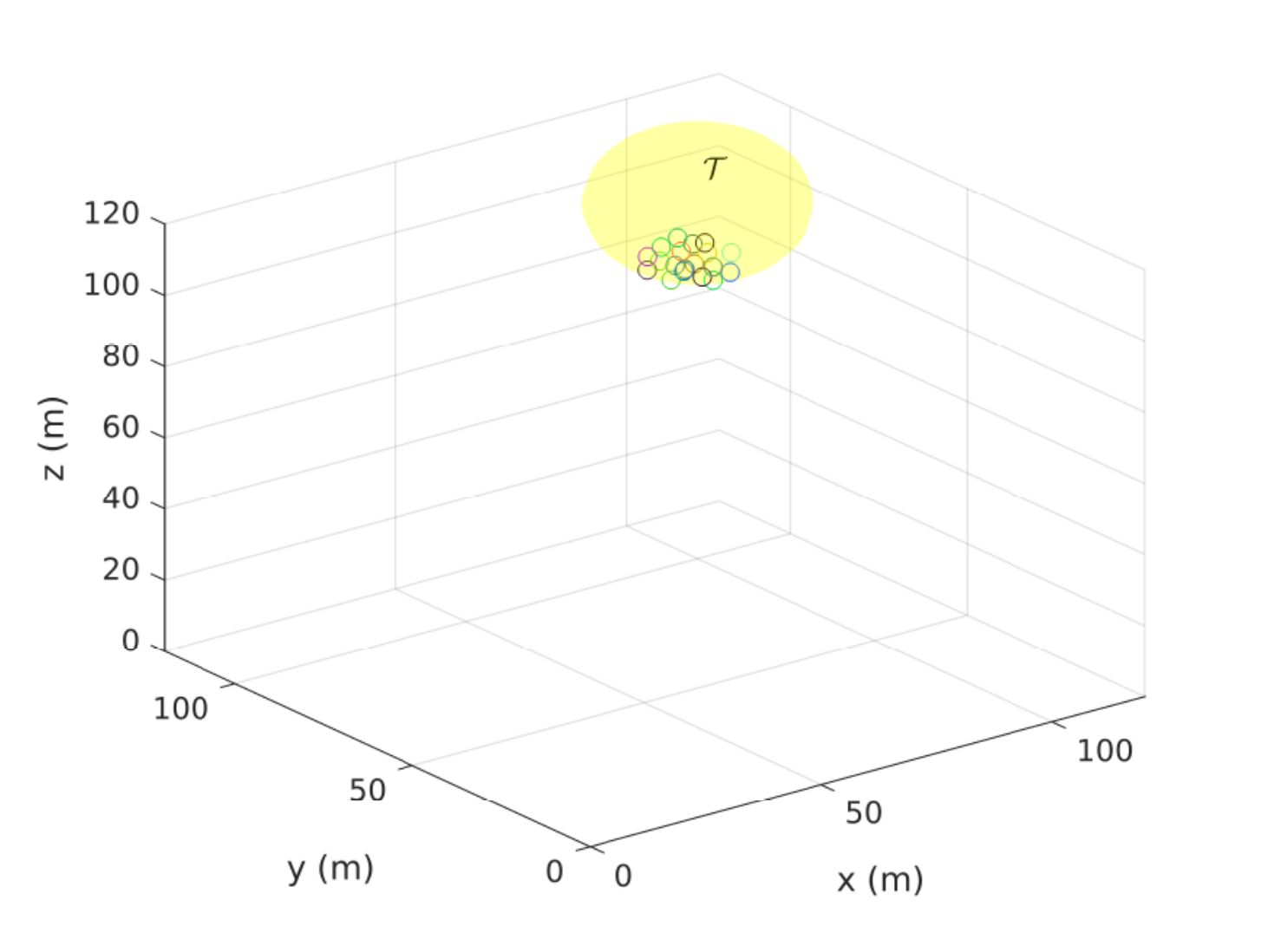} 
			\caption{}
		\end{subfigure}

		\caption{Simulation case ($n = 20$): vehicles position at different time instances during the motion}
		\label{fig:ch8:sim2MotionInstances}
	\end{adjustbox}
\end{figure}

\begin{figure}[!htb]
	\centering
	\includegraphics[width=0.7\linewidth]{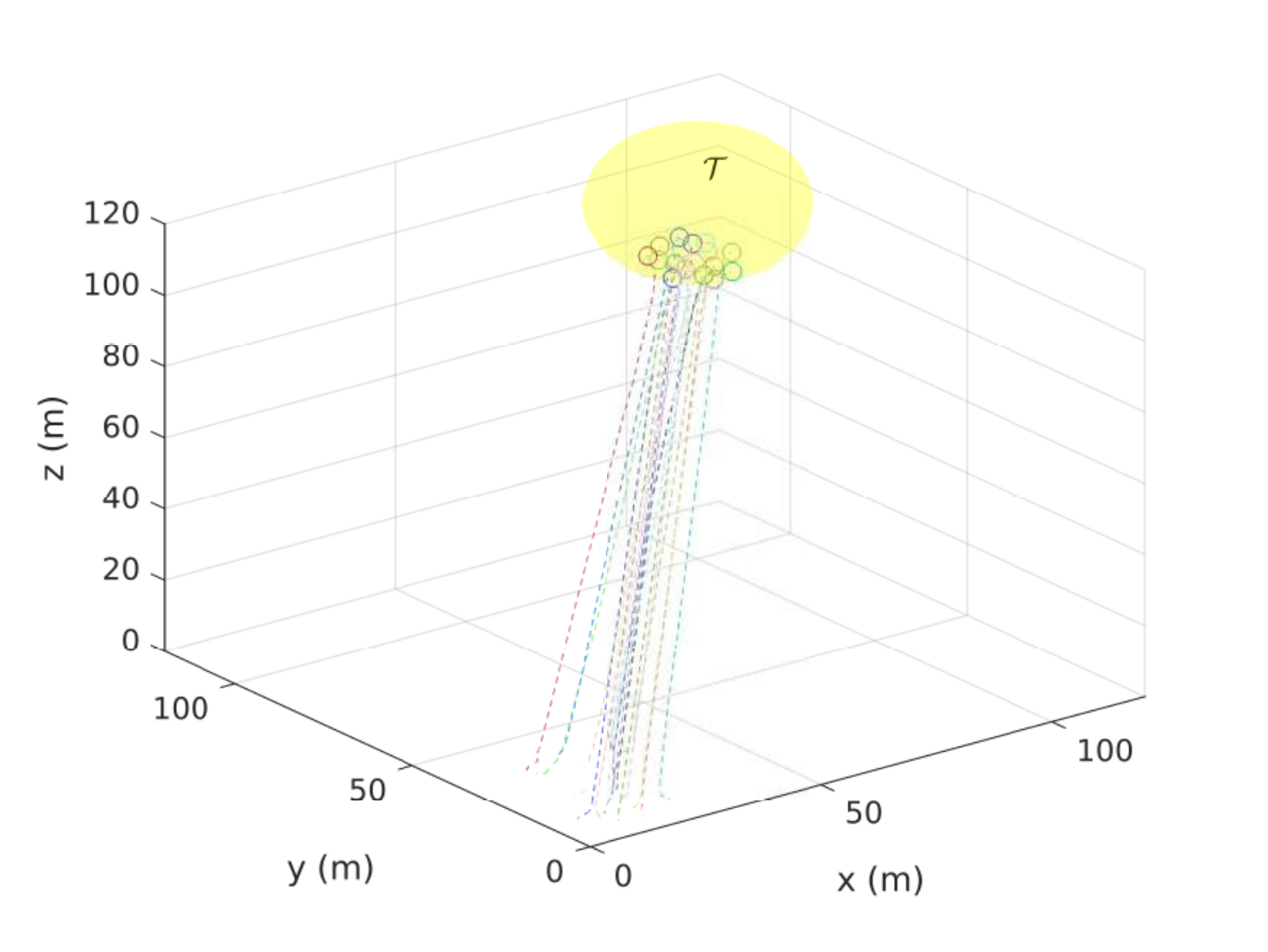} 
	\caption{Simulation case ($n = 20$): motion trajectories of vehicles} \label{fig:ch8:sim2Motion}
\end{figure}

\begin{figure}[!htb]
	\centering
	\includegraphics[width=0.7\linewidth]{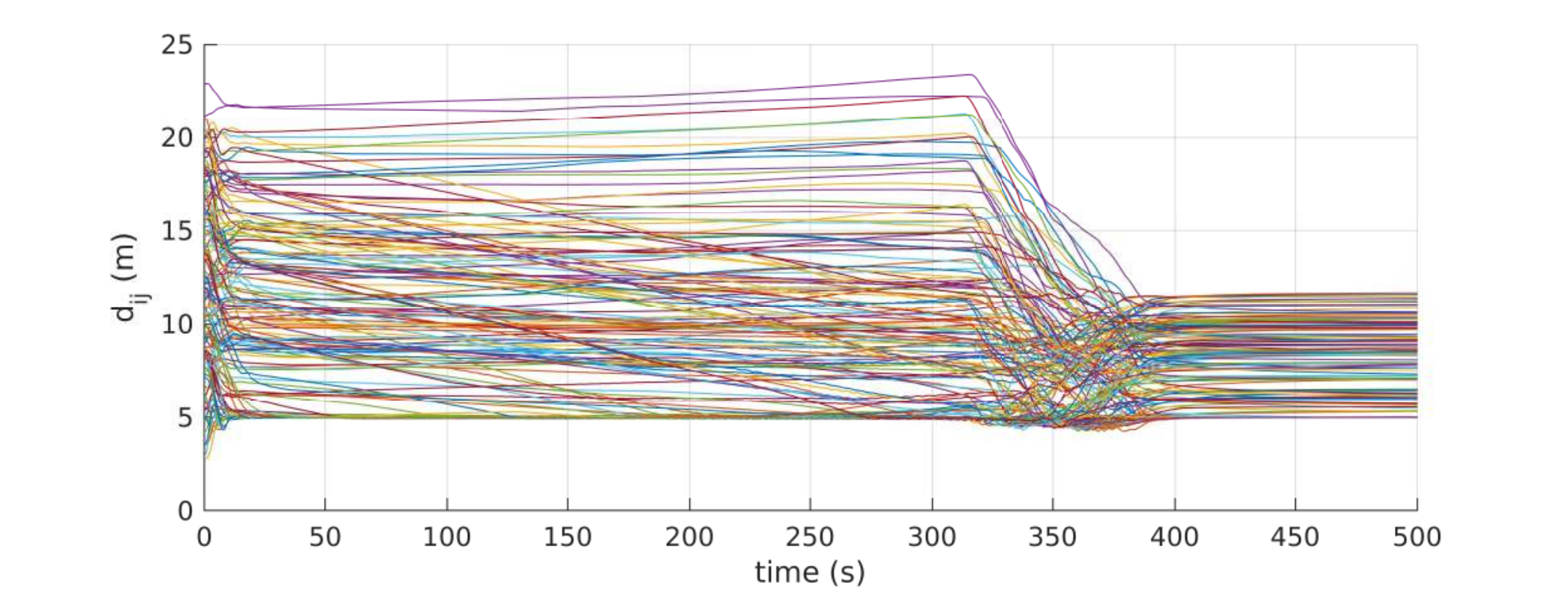} 
	\caption{Simulation case ($n = 20$): relative distances between vehicles versus time} \label{fig:ch8:sim2Dij}
\end{figure}

\begin{figure}[!htb]
	\centering
	\includegraphics[width=0.7\linewidth]{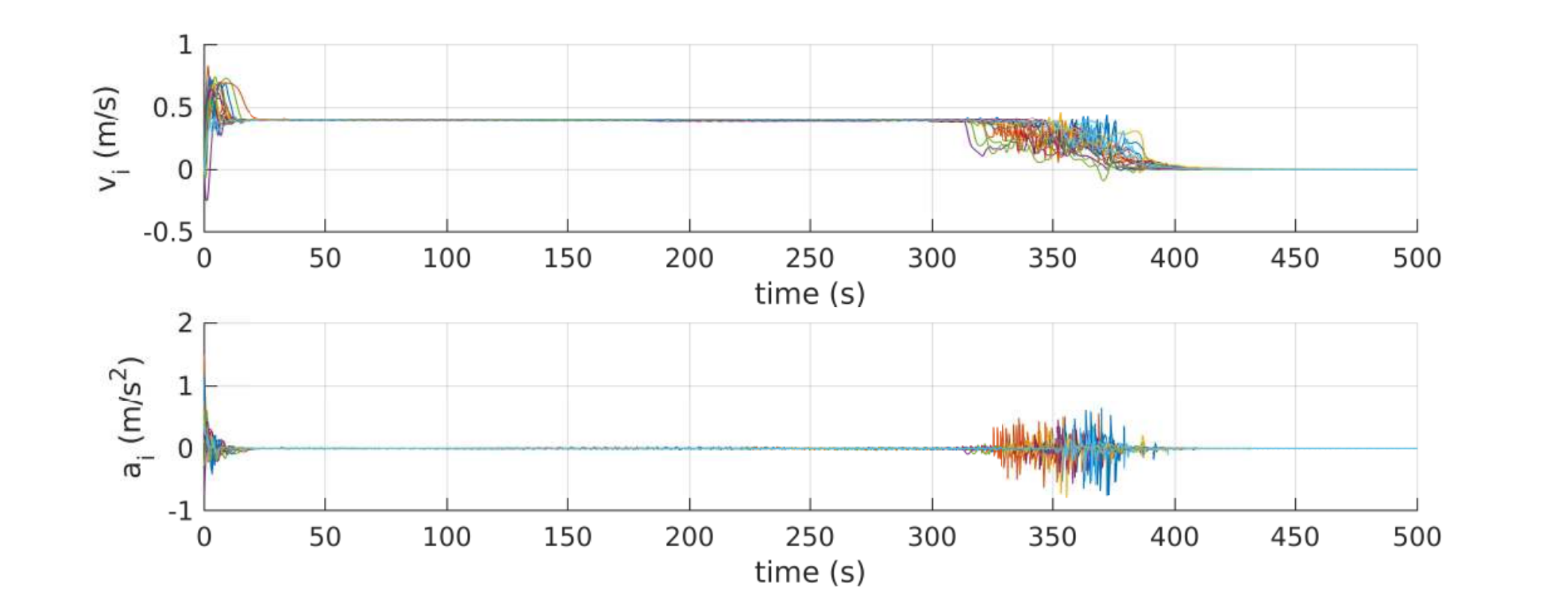} 
	\caption{Simulation case ($n = 20$): linear velocities $v_i$ and accelerations $a_i$ of vehicles versus time} \label{fig:ch8:sim2LinVelAcc}
\end{figure}

\begin{figure}[!htb]
	\centering
	\begin{adjustbox}{minipage=\linewidth,scale=1.0}
		\begin{subfigure}[t]{0.48\textwidth}
			\centering
			\includegraphics[clip, width=\linewidth]{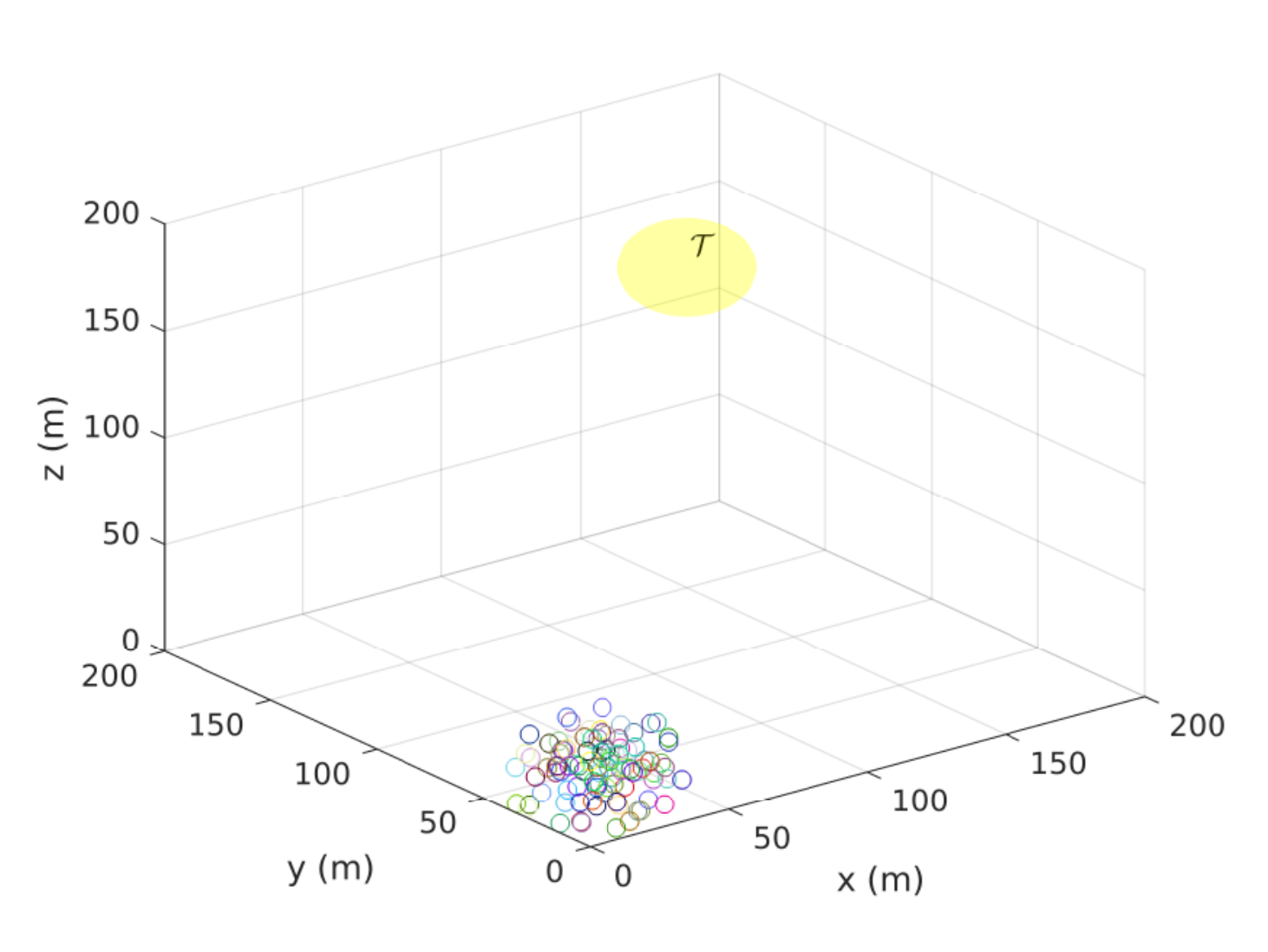} 
			\caption{}
		\end{subfigure}
		\hfill
		\begin{subfigure}[t]{0.48\textwidth}
			\centering
			\includegraphics[clip, width=\linewidth]{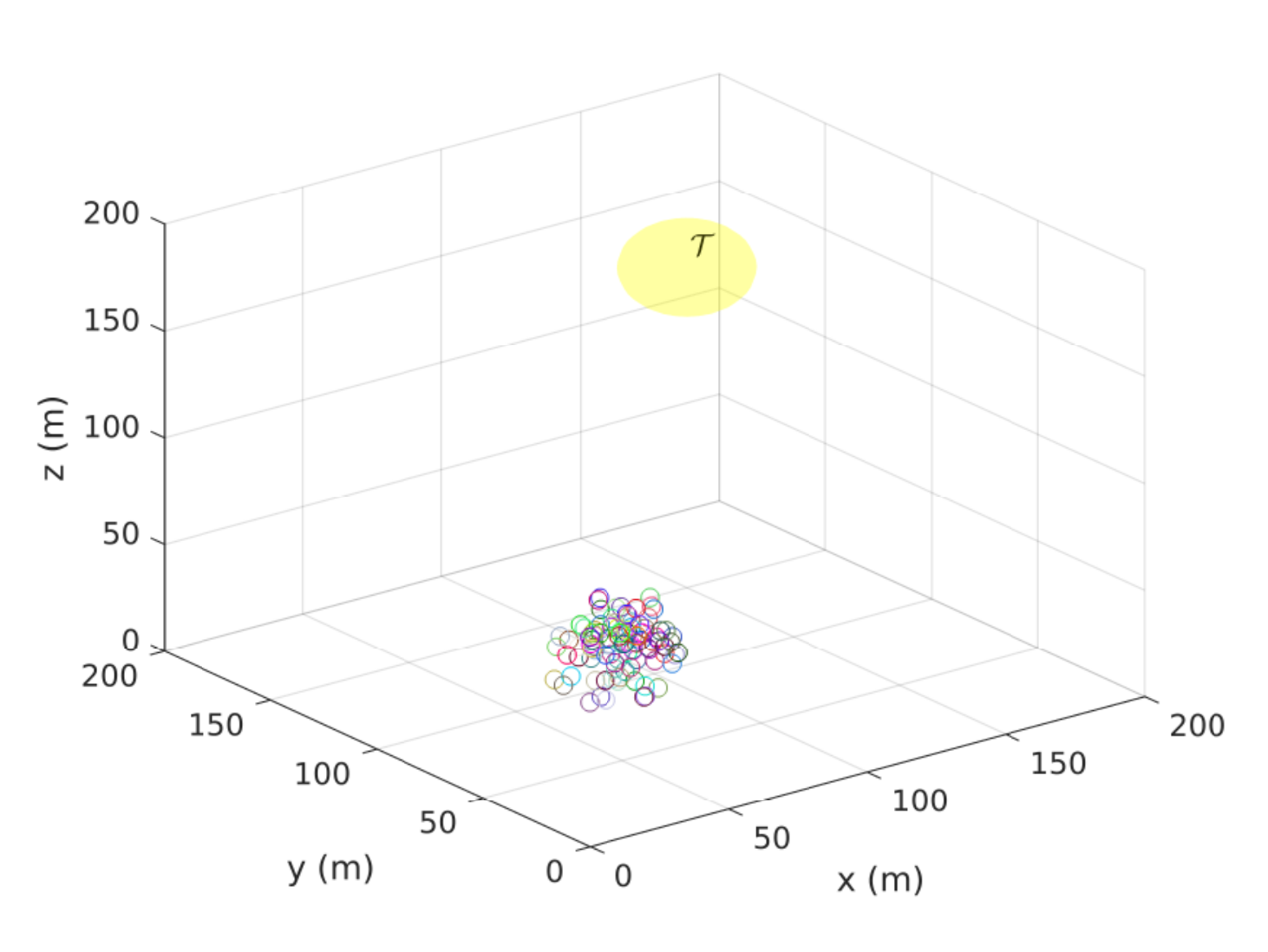}
			\caption{}
		\end{subfigure}
		
		\begin{subfigure}[t]{0.48\textwidth}
			\centering
			\includegraphics[clip, width=\linewidth]{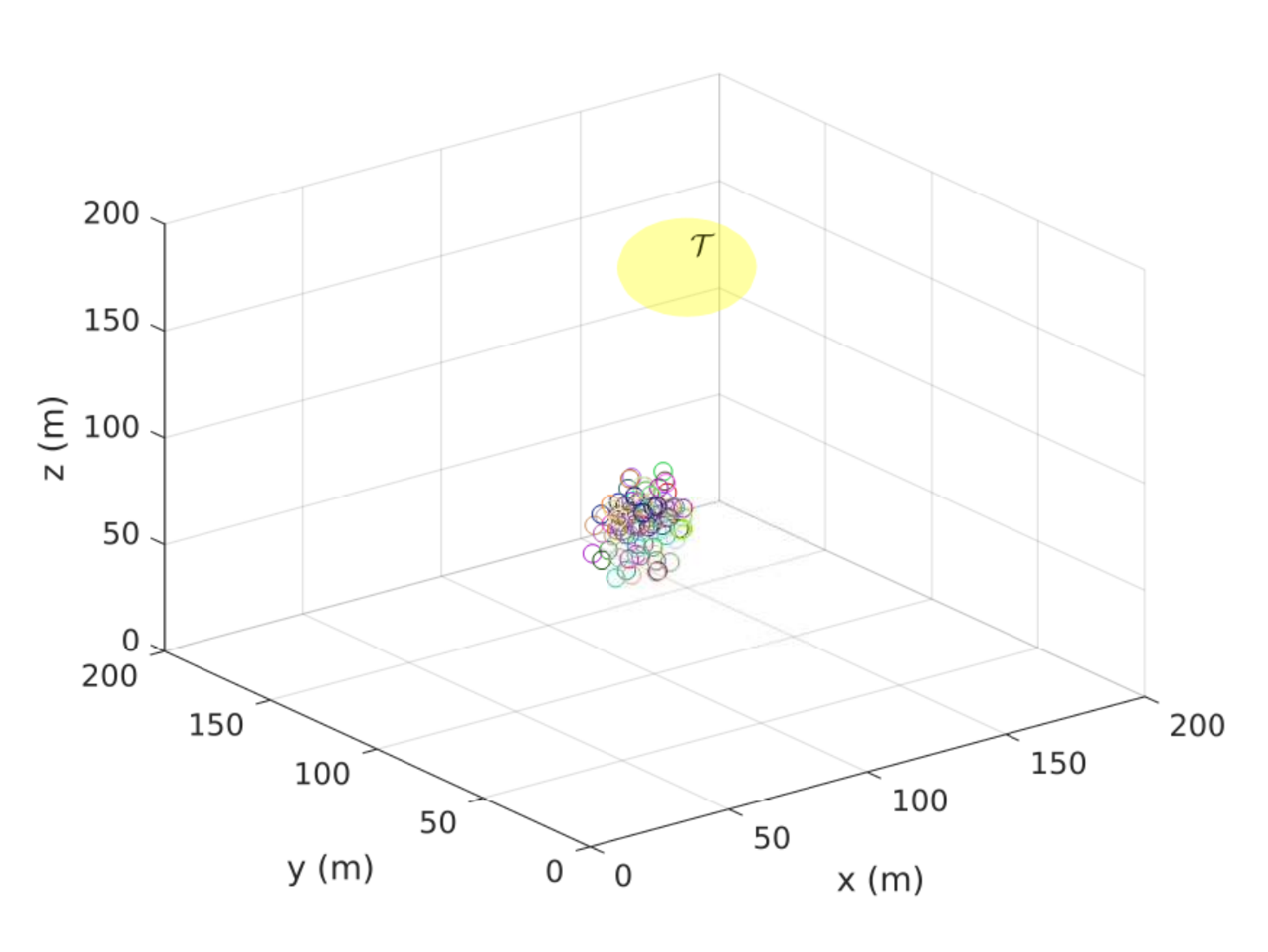} 
			\caption{}
		\end{subfigure}
		\hfill
		\begin{subfigure}[t]{0.48\textwidth}
			\centering
			\includegraphics[clip, width=\linewidth]{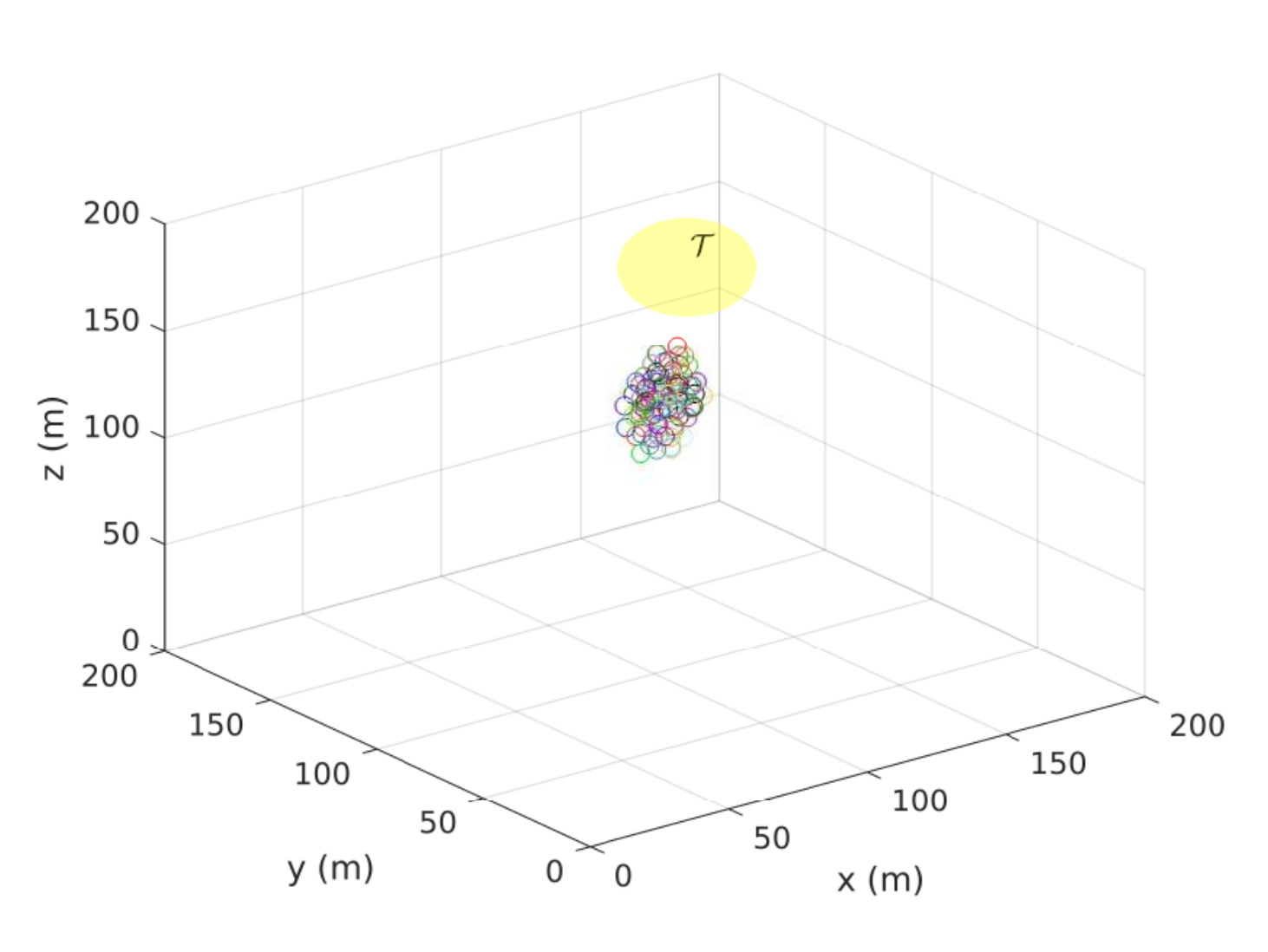}
			\caption{}
		\end{subfigure}
		
		\begin{subfigure}[t]{0.48\textwidth}
			\centering
			\includegraphics[clip, width=\linewidth]{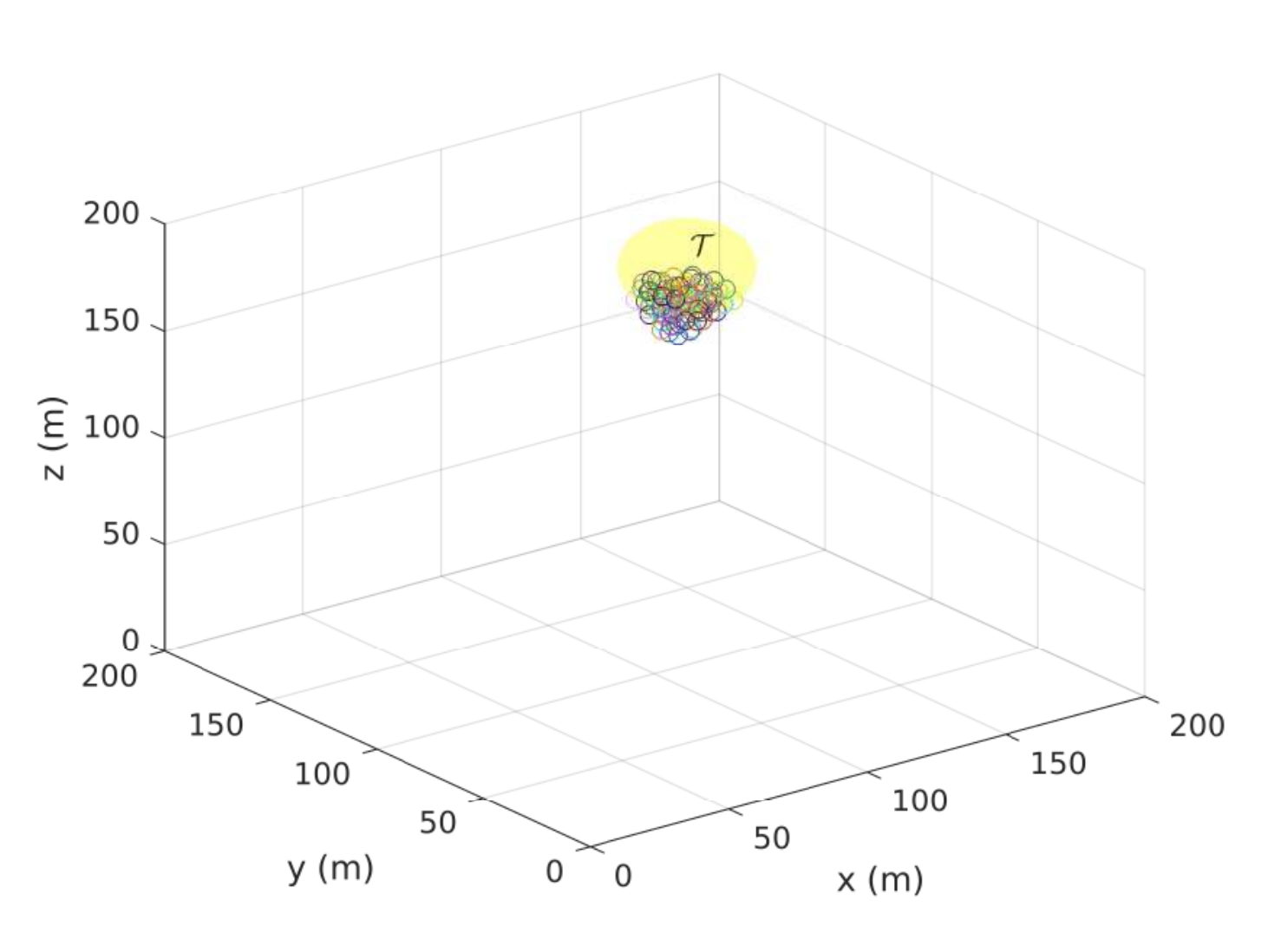} 
			\caption{}
		\end{subfigure}
		
		\caption{Simulation case ($n = 100$): vehicles position at different time instances during the motion}
		\label{fig:ch8:sim3MotionInstances}
	\end{adjustbox}
\end{figure}

\begin{figure}[!htb]
	\centering
	\includegraphics[width=0.7\linewidth]{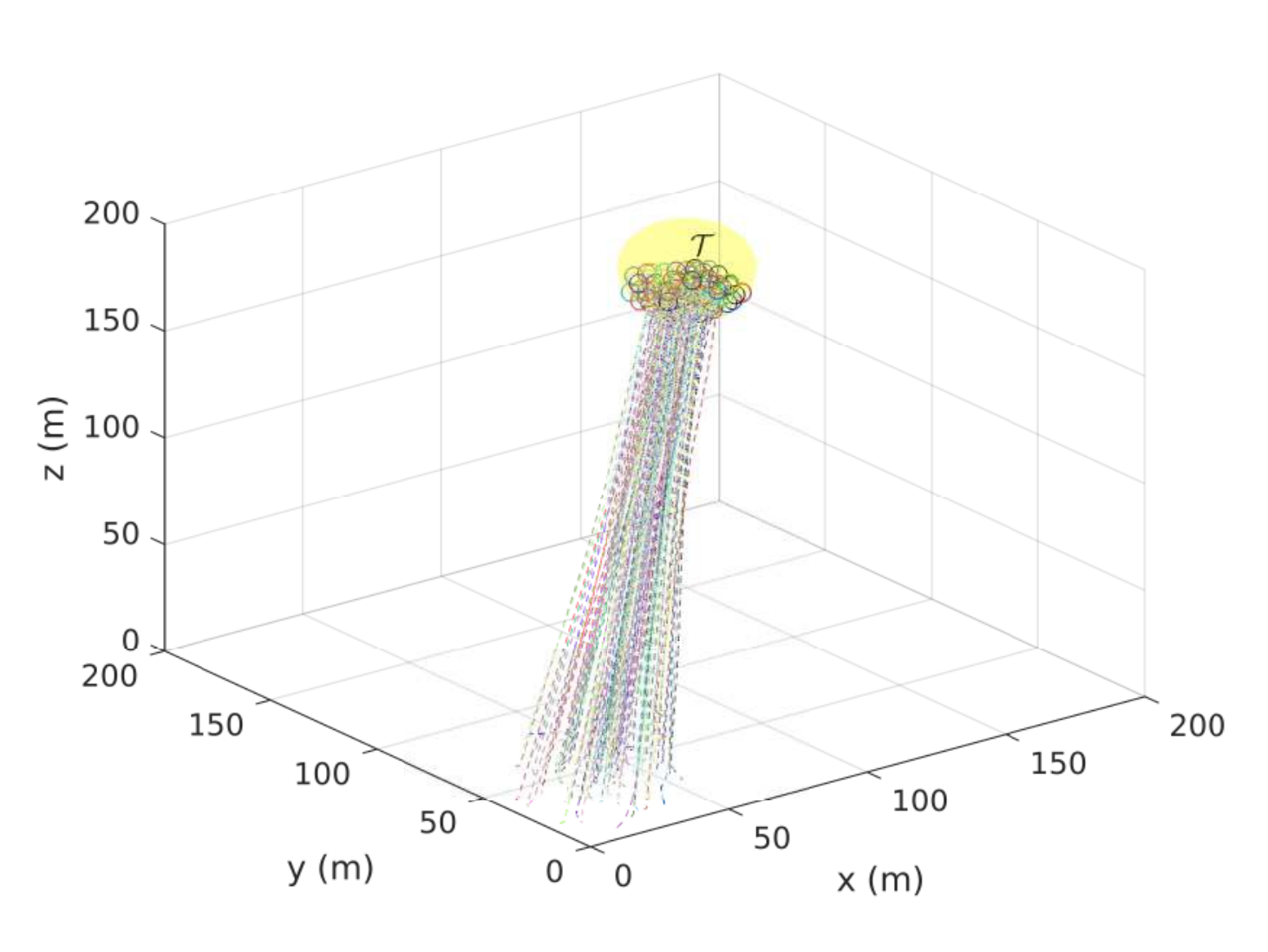} 
	\caption{Simulation case ($n = 100$): motion trajectories of vehicles} \label{fig:ch8:sim3Motion}
\end{figure}

\begin{figure}[!htb]
	\centering
	\includegraphics[width=0.7\linewidth]{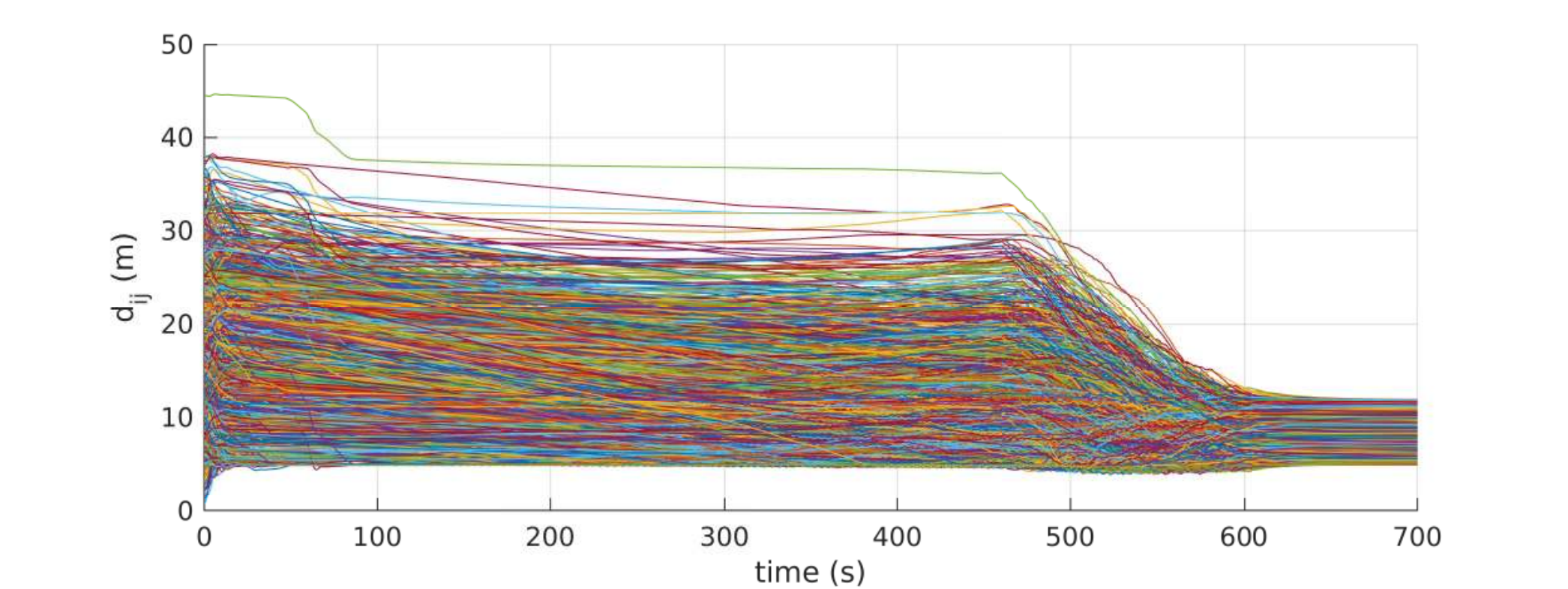} 
	\caption{Simulation case ($n = 100$): relative distances between vehicles versus time} \label{fig:ch8:sim3Dij}
\end{figure}

\begin{figure}[!htb]
	\centering
	\includegraphics[width=0.7\linewidth]{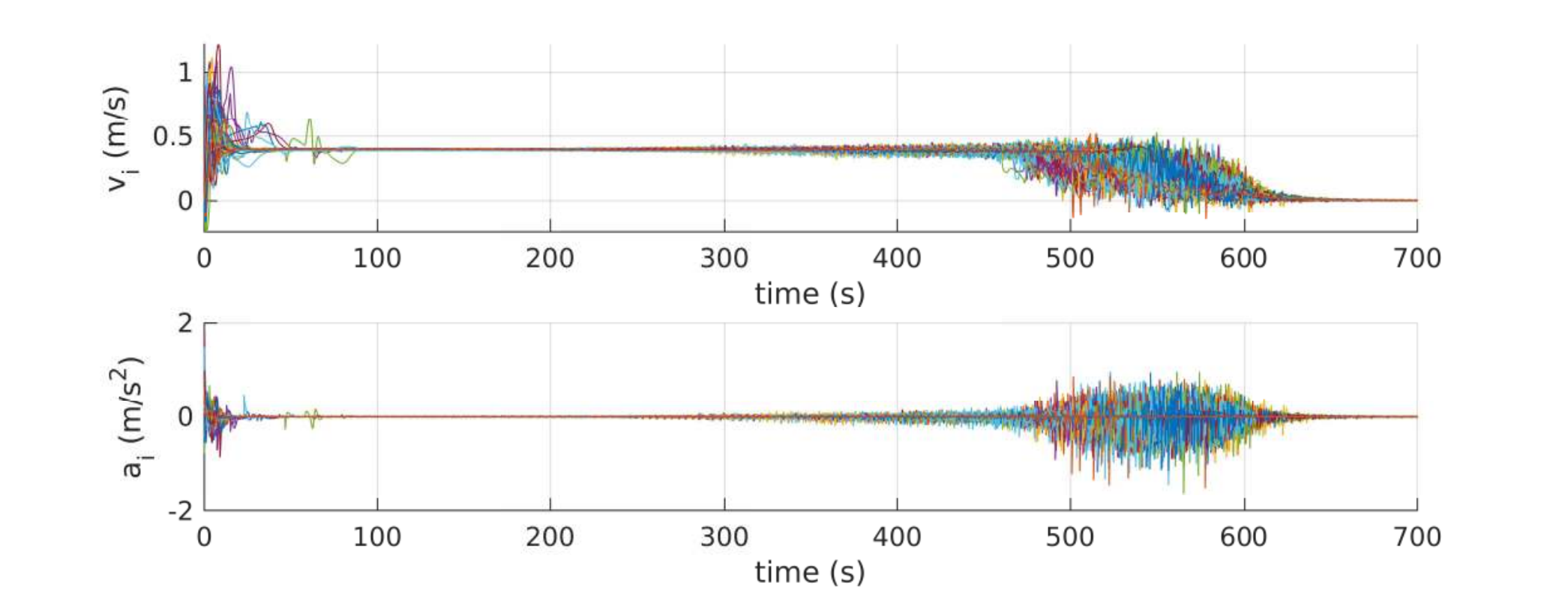} 
	\caption{Simulation case ($n = 100$): linear velocities $v_i$ and accelerations $a_i$ of vehicles versus time} \label{fig:ch8:sim3LinVelAcc}
\end{figure}

\section{Conclusion}\label{sec:ch8:conclusion}

A distributed flocking control method was suggested in this chapter for multi-vehicle systems considering a general 2D/3D kinematic model which is applicable to various unmanned vehicle types such as ground, aerial and underwater vehicles.
The main control objectives which can be achieved by the proposed control laws are maintaining a fixed formation, navigate to a goal region, collisions and obstacles avoidance.
The stability analysis of the multi-agent system have been studied under the application of the suggested control laws considering fixed communication topologies.
Also, conditions were provided on the control parameters to ensure collision-free motions while respecting some required safety margin.
Furthermore, simulations were carried out using systems with different number of vehicles moving in three-dimensional workspaces to validate our design and to show the scalability of the approach.

    \chapter{Distributed 3D Coverage Control Methods for Multi-UAV Systems\label{cha:coverage_control}}

This chapter proposes novel distributed control strategies to address coverage problems in three-dimensional (3D) sensing fields using multiple unmanned aerial vehicles (UAVs) which is another form of cooperative control with different global objective than flocking control which was addressed in \cref{cha:flocking_control}.
Two classes of coverage problems are considered here, namely barrier and sweep problems.
3D barrier coverage is defined as forming a static 3D arrangement (i.e. a barrier) of the multi-vehicle system for detecting objects/intruders going through the barrier. 
Contrarily, 3D sweeping problems require the multi-vehicle system to achieve maximal volumetric dynamic coverage with its sensors collecting data by moving across the 3D region.
The proposed control strategies adopt a region-based control approach based on Voronoi partitions to ensure collision-free self-deployment and coordinated movement of all vehicles within a 3D region.
The problem formulation is rather general considering mobile robots navigating in 3D spaces which make the proposed approach applicable to autonomous underwater vehicles (AUVs) as well.
However, further implementation details have also been investigated considering quadrotor-type UAVs with particular interest in precision agriculture applications.
Validation of the proposed methods have been performed using several simulations considering different simulation platforms such as MATLAB and Gazebo.
Software-in-the-loop simulations help to asses the real-time computational performance of the methods showing the actual implementation with quadrotors using C++ and the Robot Operating System (ROS) framework.
The work presented in this chapter is published in \cite{elmokadem2021coverage}.

\section{Introduction}\label{sec:ch9Intro}

In recent years, there has been an increasing interest in mobile wireless sensor networks (MWSNs) where a number of networked autonomous vehicles can be deployed in different environments to achieve sensing tasks.
Advances in communication made MWSNs more appealing where vehicles (sensors) can share information to perform cooperative monitoring, sensing, detection and exploration.
This have given rise to new challenges to traditional cooperative control in the field of \textit{coverage control} \cite{cortes2004coverage,hussein2007effective,pimenta2008sensing,cheng2009distributed,schwager2009decentralized,cheng2011decentralized,stergiopoulos2015distributed,savkin2015decentralized}.
Unmanned aerial vehicles (UAVs) have become a popular choice to form MWSNs especially in places inaccessible by ground vehicles.
Multi-UAV systems have been emerging in various applications such as precision agriculture \cite{chao2008band,hu2018application,ju2018multiple,maes2019perspectives,hegde2020multi}, aerial manipulation and transportation \cite{bernard2011autonomous,michael2011cooperative,fink2011planning,sreenath2013dynamics,ruggiero2018aerial}, surveillance and monitoring \cite{li2021networked}, search and rescue \cite{bernard2011autonomous,arnold2018search,hayat2020multi}, mapping and exploration \cite{cole2010system,hu2013cooperative,mahdoui2019communicating}, etc.

Coverage control problems can be classified as either static or dynamic.
Another classification is based on \cite{gage1992command} where coverage problems are categorized into \textit{Blanket coverage}, \textit{Barrier coverage} and \textit{Sweeping coverage} which are defined as follows:
\begin{itemize}
	\item \textit{Blanket coverage} is forming a static arrangement to maximize the detection rate of events through an area of interest.
	\item \textit{Barrier coverage} is a static formation over some region (i.e. a barrier) to minimize intrusions or maximizing detections of objects going through it.
	\item \textit{Sweeping coverage} is the formation of dynamic arrangements moving across a region of interest for maximal detection/exploration along the whole region.
\end{itemize}
Clearly, blanket and barrier coverage problems belong to the static class while sweeping is a dynamic coverage problem.

According to \cite{huang2018coverage}, some of the common techniques used to address static coverage control problems are resource-aware \cite{kwok2007energy,dieber2011resource,wang2012coverage}, search space-based \cite{morsly2011particle,abo2015rearrangement}, potential-based \cite{wang2008decentralized,howard2002mobile}, Voronoi partition-based \cite{cortes2004coverage,cortes2005spatially,schwager2009decentralized,bhattacharya2014multi,schwager2017robust,stergiopoulos2012autonomous,stergiopoulos2014cooperative,papatheodorou2017collaborative,thanou2014distributed,kantaros2016distributed} and angle view \cite{hexsel2011coverage,mohapatra2016big,saeed2017argus}.
There also exist recent methods addressing dynamic coverage problems such as \cite{atincc2020swarm,panagou2014vision,panagou2016distributed,li2017dynamic,bentz20173d,bentz2018complete,zuo2017dynamic,song2013persistent,bhattacharya2014multi,bhattacharya2013distributed}.

Many of the existing static and dynamic coverage control approaches consider only two-dimensional sensing fields, and the literature lacks a proper analysis of sensor networks deployed in three-dimensional (3D) sensing fields \cite{wang2012three}.
Even those proposed for multi-UAV and multi-AUV systems assume that the vehicles will be moving at a fixed altitude/depth without utilizing the full capabilities of such vehicles.
It is hence more motivating to  work towards addressing 3D coverage problems exploiting the rich geometric properties of 3D MWSNs \cite{wang2012three}.
Some efforts have been made in that area such as \cite{pompili2006deployment,stirling2010energy,barr2011efficient,wang2012three,boufares2015three,nazarzehi2018distributed}.
Thus, the main contribution of this work is to develop novel distributed control strategies to address the 3D barrier and sweep coverage problems motivated by some of the ideas in \cite{cortes2004coverage,cortes2005spatially,cheng2011decentralized}. 
In a 3D environment, a barrier can be defined as a static arrangement of sensors with overlapping sensing zones \cite{barr2011efficient} forming a surface or a 3D region.
The suggested control strategies rely on estimated centroidal Voronoi configurations over a virtual barrier generated by the sensors locations in a distributed manner depending only on shared information from neighbor vehicles.

The designed control laws require relative distances with neighbor vehicles to be shared over communication channels which makes the overall problem related to Networked Control Systems (NCSs) \cite{tipsuwan2003control,hespanha2007survey,wang2008networked,matveev2009estimation,bemporad2010networked,ge2017distributed}.
The current work assumes that such information is available to the control system. However, several challenges related to communication channels imperfections needs to be considered when evaluating the performance of the overall networked control system.
Example of such issues include delays introduced in communication channels \cite{tipsuwan2003control,matveev2003problem,onat2010control}, noises \cite{matveev2007analogue,goodwin2010analysis}, loss/corruption of data \cite{matveev2003problem,onat2010control} and bandwidth constraints \cite{savkin2003set,matveev2004problem,savkin2006analysis,savkin2007detectability}.

Overall, the vehicles' collective motion becomes constrained within a specific region (i.e. the virtual barrier) under the application of the suggested control methods similar to region-based shape control methods \cite{cheah2009region}.
Furthermore, one can control the dynamics of the barrier to generate 3D sweeping behavior which is the key idea used in the developed 3D sweeping coverage strategy.
This is also considered to handle obstacle avoidance where vehicles can collaboratively control the dynamics of the virtual barrier and even apply deformations to its shape in real-time which is then communicated through the networked multi-vehicle system. 
Also, bounded control laws are proposed which is important in practice to satisfy limits on the vehicles' velocities and accelerations.
The main advantages of the suggested approaches can be highlighted as follows:
\begin{itemize}
	\item collision avoidance among vehicles and connectivity is ensured by the adopted Voronoi-based approach
	\item the approach is highly scalable and robust against vehicles' failure
	\item obstacle avoidance can be managed in a decomposed and distributed manner
\end{itemize}
A general 3D kinematic model is adopted in the design which is applicable to different UAV types and AUVs.
A 6DOF dynamical model for quadrotors is further considered to show a possible way of implementation with low-level control design.
Several simulations were carried out to validate the performance of the suggested methods in addition to showing its scalability and robustness.
Moreover, software-in-the-loop (SITL) simulations were also performed in Gazebo based on the quadrotor full dynamical model to evaluate the computational complexity of the implemented algorithms with particular interest in applications related to precision agriculture.

The organization of this chapter is as follows.
\Cref{sec:ch9pre} introduces some essential concepts related to graph theory, locational optimization and Voronoi Partitions which are used in our control strategy, and the tackled 3D coverage problems are defined in \cref{sec:ch9problem1}.
After that, distributed barrier and sweeping coverage control strategies are proposed in \cref{sec:ch9control} considering a general 3D kinematic model.
These approaches are validated through several simulation cases in \cref{sec:ch9simulation}.
Further implementation details considering quadrotors dynamics with low-level control design are presented in \cref{sec:ch9:impl} which is evaluated using software-in-the-loop simulations.
Finally, this work is concluded in \cref{sec:ch9conclusion} with a suggestion for a potential direction of future work.

\section{Preliminaries}\label{sec:ch9pre}

The proposed methods in chapter relies on concepts from graph theory, locational optimisation and Voronoi partitions.
A summary of these concepts is provided in this section based on \cite{olfati2006flocking,cortes2004coverage,cortes2005spatially,bullo2009distributed}.
Note that when considering a multi-UAV system as a mobile wireless sensor network, UAVs may interchangeably referred to throughout the chapter as sensors, nodes, agents or vehicles.

\subsection{Graph Theory}

A multi-UAV sensor network consisting of $n$ UAVs can generally be characterised using a set of nodes/vertices $\U=\{1,2,\cdots,n\}$ and a set of edges (paired vertices) $\E \subseteq \{(i,j): i,j\in \U,\ j \neq i \}$.
Each vertex corresponds to a single UAV/sensor, and edges represent interaction between UAVs which are within communication or detection range from each others.
The overall network topology is then described using a graph $G=(\U, \E)$ which can be \textit{directed} or \textit{undirected}.
In an \textit{undirected graph}, an edge exists from vertex $i$ to vertex $j$ if and only if an edge exists from $j$ to $i$ (i.e. $(i,j)\in \E \leftrightarrow (j,i)\in \E$).
Otherwise, the graph is called \textit{directed}.
Generally, homogeneous multi-UAV systems can be described using undirected graphs since all UAVs have same communication and sensing capabilities.
Moreover, a \textit{path} between two vertices $i_0$ and $i_k$ is defined as a sequence of vertices $\{i_0,\ i_1,\ \cdots,\ i_k\} \subset \U$ where an edge exists between each subsequent vertices in the sequence such that $(i_l,i_{l+1}) \in \E,\ \forall l \in \{0,\ 1,\ \cdots,\ k-1\}$.
If every pair of vertices in $\U$ is connected by a path, the graph $G(\U,\E)$ is then called \textit{connected}.
Clearly, a crucial part for MWSNs is to maintain network connectivity all the time.

Furthermore, define a \textit{neighbourhood} $\mathpzc{N}_{i}$ around a vertex $i$ as the set of all vertices which have edges with vertex $i$ such that:
\begin{equation}
	\mathpzc{N}_{i} = \{j\in \U \backslash \{i\}: (i,j)\in \E \}
\end{equation}
For a homogeneous system, let $r>0$ denote the communication range for all UAVs.
Hence, all UAVs within a spherical region of radius $r$ around UAV $i$ belong to its neighborhood such that
\begin{equation}
\mathpzc{N}_{i} = \{j\in \U \backslash \{i\}: ||\boldsymbol{p}_i - \boldsymbol{p}_j|| \leq r \}
\end{equation}
where $\boldsymbol{p}_i \in \R^3$ is the position of UAV $i$, and $||\cdot||$ is the Euclidean norm in $\R^3$.

\subsection{Locational Optimization}

Deployment of mobile sensors in an environment to achieve optimal sensor coverage is regarded as a multicenter problem from locational optimization  (i.e. spatial resource-allocation problem).
A brief description about some of the facts related to this class of problems is summarised next based on \cite{cortes2004coverage,bullo2009distributed}.

Consider a bounded region of interest $\mathcal{Q}$, including its interior, defined in a space $\R^d$ with dimension $d$.
A \textit{partition} of $\mathcal{Q}$ consists of a group of $n$ non-overlapping polytopes $\mathcal{W} = \{W_1,\ \cdots,\ W_n\}$ such that $W_1\cup \cdots\cup W_n = \mathcal{Q}$.
Also, let $\phi: \mathcal{Q} \to \R_{+},\ (\R_{+}=\{a\in \R:a>0\})$ be defined as a \textit{distribution density function} representing a measure of information or the likelihood of an event to take place over $\mathcal{Q}$.
The \textit{sensing performance} of a sensor located at some position $\boldsymbol{p}_i$ as seen from any point $\boldsymbol{q} \in \mathcal{Q}$ depends mostly on the distance 
$||\boldsymbol{q} - \boldsymbol{p}_i||$.
Clearly, as this distance increases, the sensing performance degrades.
Hence, one can describe the sensing performance at location $\boldsymbol{q}$ of the sensor $\boldsymbol{p}_i$ using a non-increasing piecewise continuously differentiable function $f(||\boldsymbol{q} - \boldsymbol{p}_i||):\R_{+} \to \R$.
Thus, the larger the value of $f$, the better the sensing performance at $\boldsymbol{q}$ is.

Using the above definitions, one can define a multicenter cost function characterizing the average coverage provided by a set of $n$ sensors at $\boldsymbol{p}_1,\ \cdots,\ \boldsymbol{p}_n$  over an point in $\mathcal{Q}$ as follows:
\begin{equation}\label{equ:ch9locationalOpt}
\mathcal{H}(\boldsymbol{p}_1,\ \cdots,\ \boldsymbol{p}_n) = \int_{\mathcal{Q}} \max_{i \in \{1,\cdots,n\}} f(||\boldsymbol{q} - \boldsymbol{p}_i||)\phi(\boldsymbol{q})d\boldsymbol{q}
\end{equation}
The above function provides a measure of the sensing performance expected value  provided by all sensors at any point $\boldsymbol{q} \in \mathcal{Q}$ \cite{cortes2005spatially}.
Now, in order to find the optimal placement for all sensors, an optimization problem needs to be solved to maximize the value of $\mathcal{H}(\boldsymbol{p}_1,\ \cdots,\ \boldsymbol{p}_n)$.

\begin{remark}
	Note that there are slightly different definitions for $f$ in the references  \cite{cortes2004coverage,cortes2005spatially,bullo2009distributed} where it can be either considered as a representation of sensing degradation or sensing performance over $\mathcal{Q}$ (as considered here).
	This does not affect the overall analysis done here except that the considered optimization problem will either be minimization (of sensing degradation) or maximization (of sensing performance).
\end{remark}

\subsection{Voronoi Partitions}

This subsection highlights some key points about Voronoi partitions needed for our problem formulation.
A \textit{Voronoi partition/diagram} is the subdivision of a space into a number of regions generated by a set of points (see \cref{fig:ch9Voronoi} for a 2D example).
Consider that we have $n$ sensors located at fixed locations $\boldsymbol{P}=\{\boldsymbol{p}_1,\ \cdots,\ \boldsymbol{p}_n\} \in \mathcal{Q}^n$.
A voronoi partition of $\mathcal{Q}$ consists of a set of disjoint Voronoi regions/cells $\mathcal{V}(\boldsymbol{P}) = \{V_1,\ \cdots,\ V_n\}$ generated by these sensors where
\begin{equation}
V_i = \{\boldsymbol{q}\in \mathcal{Q}: ||\boldsymbol{q} - \boldsymbol{p}_i|| \leq ||\boldsymbol{q} - \boldsymbol{p}_j||,\ \forall j \neq i \}
\end{equation}
and $V_1 \cup V_2 \cup \cdots \cup V_n = \mathcal{Q}$.

It has been established that this Voronoi partition is the optimal partition of $\mathcal{Q}$ among all other partitions \cite{bullo2009distributed}.
For any sensor located at a position $\boldsymbol{p}_i$, its \textit{Voronoi neighbors} $\mathcal{N}_{\mathcal{V},i} \subset \mathcal{P}$ are defined as the sensors corresponding to adjacent Voronoi cells such that:
\begin{equation}
	\mathcal{N}_{\mathcal{V},i} = \{j\in \{1,\ \cdots,\ n\}: V_i \cap V_j \neq \emptyset,\ j \neq i \}
\end{equation}
 
\begin{figure}[!htb]
	\centering
	\includegraphics[width=0.5\linewidth]{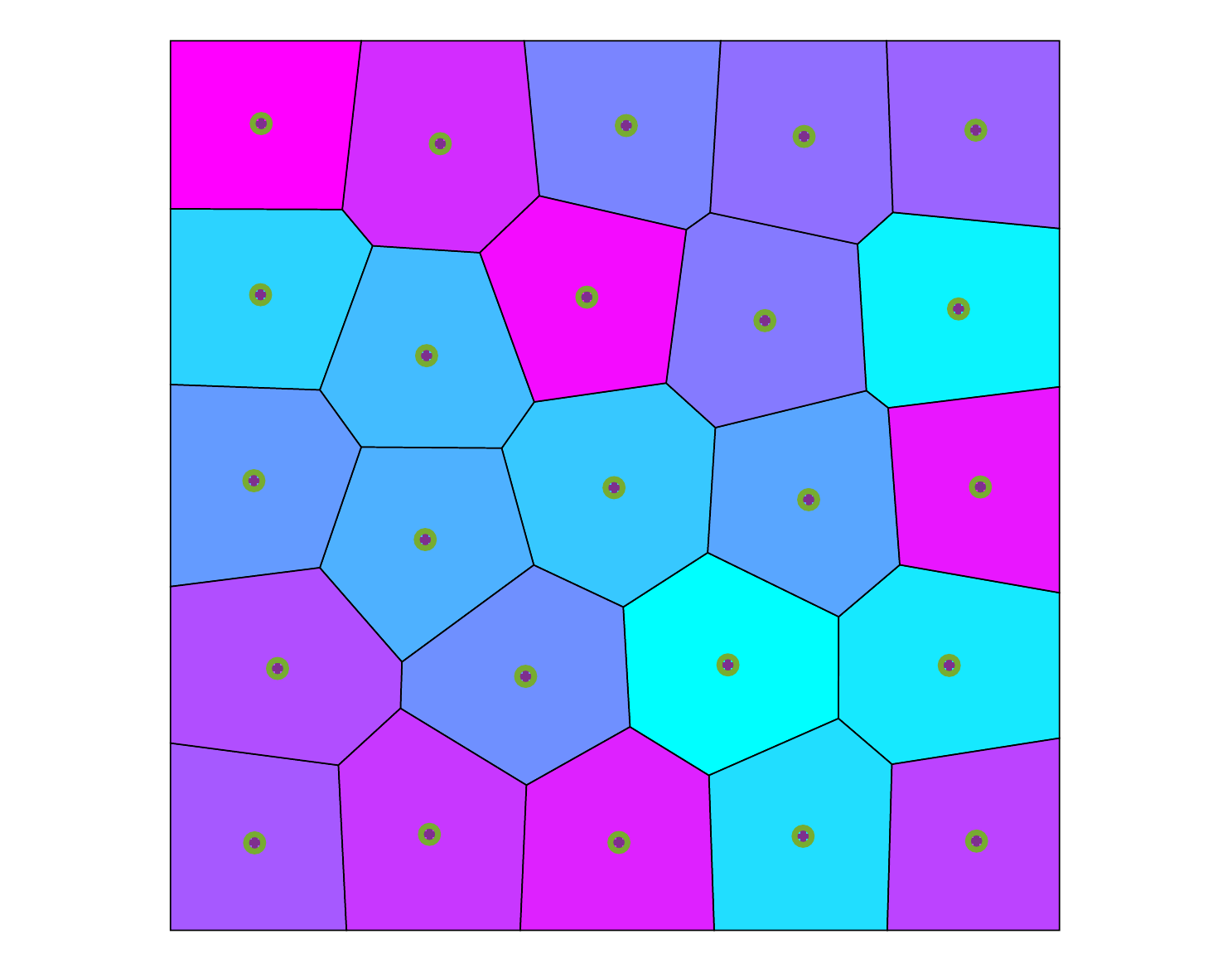} 
	\caption{Example Voronoi partition of a 2D plane divided into number of regions with their associated points} \label{fig:ch9Voronoi}
\end{figure}

Considering the above definition, one can rewrite \eqref{equ:ch9locationalOpt} as:
\begin{equation}\label{equ:ch9HV}
\mathcal{H}_{\mathcal{V}}(\boldsymbol{P},\mathcal{V}(\boldsymbol{P})) =  \sum_{i=1}^{n}\int_{V_i(\boldsymbol{P})} f(||\boldsymbol{q} - \boldsymbol{p}_i||)\phi(\boldsymbol{q})d\boldsymbol{q}
\end{equation}
By taking the partial derivative of \eqref{equ:ch9HV} with respect to $\boldsymbol{p}_i$, the following is obtained:
\begin{equation}\label{equ:ch9H_derivative}
\dfrac{\partial \mathcal{H}_{\mathcal{V}}}{\partial \boldsymbol{p}_i} (\boldsymbol{P},\mathcal{V}(\boldsymbol{P})) = \int_{V_i} \dfrac{\partial}{\partial \boldsymbol{p}_i} f(||\boldsymbol{q} - \boldsymbol{p}_i||)\phi(\boldsymbol{q})d\boldsymbol{q}
\end{equation}
where it is assumed that $f$ does not have any discontinuities. 
Furthermore, considering $f(x) = -x^2$, the multicenter cost function in \eqref{equ:ch9HV} becomes:
\begin{equation}\label{equ:ch9HV2}
\begin{aligned}
\mathcal{H}_{\mathcal{V}}(\boldsymbol{P},\mathcal{V}(\boldsymbol{P})) &=
-\sum_{i=1}^{n}\int_{V_i(\boldsymbol{P})} ||\boldsymbol{q} - \boldsymbol{p}_i||^2 \phi(\boldsymbol{q})d\boldsymbol{q} := -\sum_{i=1}^{n} J_{V_i,\boldsymbol{p}_i}
\end{aligned}
\end{equation}
where $J_{V_i,\boldsymbol{p}_i}$ is the polar moment of inertia of $V_i$ about $\boldsymbol{p}_i$.
Consequently, \eqref{equ:ch9HV2} reduces to:
\begin{equation}\label{equ:ch9dHdp}
\dfrac{\partial \mathcal{H}_{\mathcal{V}}}{\partial \boldsymbol{p}_i} = 2 M_{V_i} (\cvi - \boldsymbol{p}_i)
\end{equation}
where $M_{V_i}\in \R_{+}$ and $\cvi \in \R^3$ are the mass and center of mass (centroid) of the corresponding Voronoi partition $V_i$ with respect to the density function $\phi(\boldsymbol{q})$.
It is clear from \eqref{equ:ch9dHdp} that the critical points of $\mathcal{H}_{\mathcal{V}}(\boldsymbol{P},\mathcal{V}(\boldsymbol{P}))$ are the configurations $\boldsymbol{P}\in \mathcal{Q}^n$ where $\boldsymbol{p}_i = \cvi\ \forall i$  which are referred to as \textit{centroidal Voronoi configurations}.

\section{3D Coverage Problems}\label{sec:ch9problem1}

Consider a 3D bounded region of interest $\mathcal{S} \subset \R^3$.
A multi-vehicle system can perform coverage tasks over $\mathcal{S}$ where coverage objective may vary according to the problem in hand.
Definitions of the considered \textit{barrier} and \textit{sweep} coverage problems in 3D are defined next.

\begin{problem}\label{prob:ch9barrier} (3D Barrier Coverage) 
	Deploy a network of vehicles/sensors to form a static arrangement over some region $\mathcal{B} \subset \mathcal{S}$ (i.e. a barrier) maximizing the sensing performance of the overall network to detect any intruder going through the barrier.
\end{problem}
A special case of the above problem is when deploying the sensors over a planar region within $ \mathcal{S}$ defined by $\mathcal{B}=\{(x,y,z):\sigma(x,y,z)=0\}$ where $\sigma(x,y,z)$ is the plane's equation.
Note that for this problem to be solvable, the number of vehicles/sensors needed depends the size of $\mathcal{S}$ and the sensing range of all vehicles (assuming a homogenous system). 

\begin{problem}\label{prob:ch9sweep} (3D Sweep Coverage) 
	Consider a group of vehicles whose overall sensing range is not large enough to achieve complete coverage over $\mathcal{S}$.
	It is required to scan the region $\mathcal{S}$ by moving the whole group across $\mathcal{S}$ as a dynamic formation over some region $\mathcal{F}(t)\subset\mathcal{S}$.
	This task can be done once or contentiously.
\end{problem}
Note that $\mathcal{F}(t)$ can be of any 3D shape in which its motion along $\mathcal{S}$ following a certain pattern can achieve complete coverage.
A special case is when $\mathcal{F}(t)$ is a plane which is considered in this work.
In this case, $\mathcal{F}(t)$ will referred to as the \textit{sweeping plane}.
The dynamics of the sweeping plane $\dot{\mathcal{F}}(t)$ can be determined in a way to achieve complete coverage over $\mathcal{S}$.
It is also assumed that $\mathcal{F}(t)$ can change size and shape over time which can be utilized for other motion objectives such as obstacle avoidance as will be shown later.

\subsection{Problem Formulation}\label{sec:ch9problem2}

The aim of this work is to develop distributed control laws for multi-UAV systems to address problems~\ref{prob:ch9barrier} and \ref{prob:ch9sweep}.
We consider a system of $n$ homogenous vehicles (UAVs/AUVs) with a single integrator motion model given by:
\begin{equation}\label{equ:ch9model}
\dot{\boldsymbol{p}}_i(t) = \boldsymbol{u}_i(t)
\end{equation}
where $\boldsymbol{p}_i \in \R^3$ is the $i$-th vehicle position defined in some inertial frame $\{\mathcal{I}\}$, and $\boldsymbol{u}_i \in \R^3$ is its control input (velocity) where
\begin{equation}\label{equ:ch9:maxInput}
\|\boldsymbol{u}_i\| \leq u_{max}
\end{equation}
All vehicles can sense events in the environment within a sensing range $r_s>0$.
Also, any vehicle can exchange information with nearby vehicles within some communication range $r_c>0$.
Obviously, it is assumed that $r_c>r_s$ so that it is possible to design control laws which can maintain the connectivity of the network with minimal sensors overlapping.

\section{Distributed Coverage Control Strategies}\label{sec:ch9control}

The proposed control schemes to address problems~\ref{prob:ch9barrier} and \ref{prob:ch9sweep} are based on a self-deployment method for the multi-vehicle system over a planar region $\mathcal{B}\subset\mathcal{S}$ which is static for barrier coverage problems and dynamic for sweep coverage problems.
We consider a set of $l$ vertices $\boldsymbol{E}$ to describe the boundary of $\mathcal{B}$ as a polygon such that $\boldsymbol{E}=\{\boldsymbol{e}_1,\ \boldsymbol{e}_2,\ \cdots,\ \boldsymbol{e}_l\} \subset \mathcal{S}$; clearly, $l>2$.
Lloyd's algorithm is adopted in the designed controllers to guide the vehicles to reach the instantaneous centroids of their associated Voronoi regions over $\mathcal{B}$ (i.e. reaching the centroidal Voronoi configuration).
Once this is reached, an optimal coverage over $\mathcal{B}$ is achieved.
Furthermore, for sweeping problems, the designed dynamics of $\mathcal{B}$ will achieve coverage over $\mathcal{S}$.

\subsection{Online Computation of Centroidal Voronoi Configurations}\label{sec:ch9:centroidsVor}

The developed control law requires each vehicle to be able to compute the centroid of its Voronoi region in a distributed fashion based only on information exchanged with vehicles within its neighbourhood.
We extend the approach proposed in \cite{cortes2004coverage} to compute Voronoi cells for planar regions in 3D in a distributed fashion.

To simplify the mathematical development, a new 3D coordinate frame $\{\mathcal{B}\}$ attached to $\mathcal{B}$ is needed.
The origin of $\{\mathcal{B}\}$ can be selected to be one of the barrier vertices defined as $\mathcal{O}_{\mathcal{B}}=\boldsymbol{e}_1$.
Furthermore, the axes of $\{\mathcal{B}\}$ are defined using the orthonormal basis $\{\vect{a}_1, \vect{a}_2, \vect{a}_3\}$ where:
\begin{equation}\label{equ:ch3:barrierAxes}
\begin{aligned}
\vect{a}_1 &=\frac{ \boldsymbol{e}_2 - \boldsymbol{e}_1}{||\boldsymbol{e}_2 - \boldsymbol{e}_1||} \\
\vect{a}_2 &= \frac{h(\vect{a}_1,\vect{b})}{||h(\vect{a}_1,\vect{b})||},\ \ \vect{b} = \frac{\boldsymbol{e}_l - \boldsymbol{e}_1}{||\boldsymbol{e}_l - \boldsymbol{e}_1||} \\
\vect{a}_3 &= \vect{a}_1 \times \vect{a}_2
\end{aligned}
\end{equation}
where $h(\vect{w}_1,\vect{w}_2) = \vect{w}_2 - (\vect{w}_1 \cdot \vect{w}_2)\vect{w}_1$ is a mapping function which gives an orthogonal vector to $\vect{w}_1 \in \R^3$ directed towards $\vect{w}_2 \in \R^3$.

All computations needed to find Voronoi centroids are carried out in the $\{\mathcal{B}\}$ frame through a transformation between $\{\mathcal{B}\}$ and the inertial frame $\{\mathcal{I}\}$.
Let $\boldsymbol{v}^{\mathcal{I}} \in \R^3$ be a vector defined in the $\{\mathcal{I}\}$ frame.
This vector can be transformed to the $\{\mathcal{B}\}$ coordinate frame using a transformation matrix $T_{\mathcal{I}}^{\mathcal{B}}$ as follows:
\begin{equation}\label{equ:ch9transformation}
	\left[\begin{array}{c}
	\boldsymbol{v}^{\mathcal{B}} \\ 1
	\end{array}\right] = T_{\mathcal{I}}^{\mathcal{B}} \left[\begin{array}{c}
	\boldsymbol{v}^{\mathcal{I}} \\ 1
	\end{array}\right]
\end{equation}
where $T_{\mathcal{I}}^{\mathcal{B}}$ is a $4\times 4$ affine transformation matrix given by:
\begin{equation*}
T^{\mathcal{B}}_{\mathcal{I}} = \left(T^{\mathcal{I}}_{\mathcal{B}}\right)^{-1} = \left[\begin{array}{cccc}
\vect{a}_1 & \vect{a}_2 & \vect{a}_3 & \mathcal{O}_{\mathcal{B}} \\
0 & 0 & 0 & 1
\end{array}\right]^{-1}
\end{equation*}
Note that we represent the transformation using an augmented matrix to consider both rotation and translation in a single matrix multiplication.
Also, $\vect{a}_1, \vect{a}_2, \vect{a}_3,$ and $\mathcal{O}_{\mathcal{B}}$ are column vectors.

From now on, vectors represented in the inertial frame will be represented without the $\mathcal{I}$ superscript for simplicity.
Given a UAV at position $\bm{p}_i$, it is required to compute instantaneous Voronoi centroid $\cvi$ of its projection onto $\mathcal{B}$.
First, the following assumption is made.
\begin{assumption}\label{assm:voronoiCells}
	Each Voronoi cell $V_i \subset \mathcal{B}$, generated by the projection of UAV $i$ onto $\mathcal{B}$, is a convex polygon defined by $m$ vertices $\{v_1^{\mathcal{B}}, v_2^{\mathcal{B}}, \cdots, v_m^{\mathcal{B}}\}$ where $v_j^{\mathcal{B}} = (\bar{x}_j,\bar{y}_j,0),\ j=\{1,\cdots,m\}$.
\end{assumption}
The proposed approach can now be described in these steps:
\begin{enumerate}
	\item[\textbf{S}1:] Transform the position $\bm{p}_i$ into the $\{\mathcal{B}\}$ frame to obtain $\bm{p}_i^{\mathcal{B}}=[\bar{x}_i, \bar{y}_i,\bar{z}_i]^T$ by applying \eqref{equ:ch9transformation}.
	\item[\textbf{S}2:] Compute the projection of $\bm{p}_i^{\mathcal{B}}$ onto $\mathcal{B}$ defined in the $\{\mathcal{B}\}$ frame by setting $\bar{z}_i = 0$ as $\tilde{\bm{p}}_i^{\mathcal{B}}=[\bar{x}_i, \bar{y}_i,0]^T$.
	\item[\textbf{S}3:] Compute the Voronoi cell centroid $\tilde{\boldsymbol{C}}_{V_i}^{\mathcal{B}}=[\bar{c}_{x,V_i}, \bar{c}_{y,V_i}, 0]^T$ associated with $\tilde{\bm{p}}_i^{\mathcal{B}}$ using the following \cite{cortes2004coverage}:
	\begin{equation}\label{equ:ch9centroid}
		\begin{aligned}
			M_{V_i} &= \frac{1}{2} \sum_{k=1}^{m} (\bar{x}_k \bar{y}_{k+1} - \bar{x}_{k+1} \bar{y}_k) \\
			\bar{c}_{x,V_i} &= \frac{1}{6 M_{V_i}} \sum_{k=1}^{m} (\bar{x}_k + \bar{x}_{k+1}) (\bar{x}_k \bar{y}_{k+1} - \bar{x}_{k+1} \bar{y}_k) \\
			\bar{c}_{y,V_i} &= \frac{1}{6 M_{V_i}} \sum_{k=1}^{m} (\bar{y}_k + \bar{y}_{k+1}) (\bar{x}_k \bar{y}_{k+1} - \bar{x}_{k+1} \bar{y}_k)
		\end{aligned}
	\end{equation}
	where $v_{m+1}^{\mathcal{B}} = (\bar{x}_{m+1},\bar{y}_{m+1},0) = v_{1}^{\mathcal{B}}$.
	These equations are obtained considering assumption~\ref{assm:voronoiCells} and a constant distribution density function $\phi(\bm{q}) = 1$.
	Voronoi cell vertices $\{v_1^{\mathcal{B}}, v_2^{\mathcal{B}}, \cdots, v_m^{\mathcal{B}}\}$ can be determined based on locations of Voronoi neighbours in a distributed fashion (see remark~\ref{rem:ch9VoronoiVertices}).
	\item[\textbf{S}4:] Transform $\tilde{\boldsymbol{C}}_{V_i}^{\mathcal{B}}$ to the inertial frame $\{\mathcal{I}\}$ to get $\tilde{\boldsymbol{C}}_{V_i}=[c_{x,V_i}, c_{y,V_i}, c_{z,V_i}]^T$ using \eqref{equ:ch9transformation}.
\end{enumerate}

\begin{remark}\label{rem:ch9VoronoiVertices}
	The vertices of a Voronoi cell $V_i$ can be found as the circumcenters of triangles formed by $\tilde{\bm{p}}_i^{\mathcal{B}}$ and any two of its Voronoi neighbours.
	A triangle made by three points $\bm{p}_1$, $\bm{p}_2$ and $\bm{p}_3$ with an area of $A$ has a circumcenter at \cite{cortes2004coverage}:
	\begin{equation}\label{equ:ch9circumcenter}
	\begin{aligned}
	&circumcenter = \frac{1}{4A} \Big(||\boldsymbol{\alpha}_{32}||^2 (\boldsymbol{\alpha}_{21} \cdot \boldsymbol{\alpha}_{13}) \boldsymbol{p}_1 \\
	&+ ||\boldsymbol{\alpha}_{13}||^2 (\boldsymbol{\alpha}_{32} \cdot \boldsymbol{\alpha}_{21}) \boldsymbol{p}_2 + ||\boldsymbol{\alpha}_{21}||^2 (\boldsymbol{\alpha}_{13} \cdot \boldsymbol{\alpha}_{32}) \boldsymbol{p}_3\Big)
	\end{aligned}
	\end{equation}
	where $\boldsymbol{\alpha}_{ls} = \boldsymbol{p}_l - \boldsymbol{p}_s$.
\end{remark}

\subsection{Barrier Coverage Control Design}

In order to present our control design, some technical assumptions need to be made as follows.

\begin{assumption}\label{assmp:ch9:Gconnected}
	The communication graph $G(\U,\E)$ remains connected for $t \geq 0$.
\end{assumption}

\begin{assumption}\label{assm:ch9:VoronoiNg}
	Each UAV is capable of estimating the Voronoi cell centroid of its projected position onto $\mathcal{B}$ at any time using only information from UAVs within its communication range (i.e UAVs in its neighbourhood $\mathcal{N}_i$).
\end{assumption}

\begin{assumption}\label{assm:ch9:VoronoiProjection}
	The Voronoi neighbours of $\tilde{\bm{p}}_i^{\mathcal{B}}$ correspond to UAVs within the neighbourhood $\mathcal{N}_i$.
\end{assumption}

\begin{assumption}\label{assmp:ch9:initialP}
	The initial configuration of the multi-UAV system satisfies the following condition: $\frac{\boldsymbol{p}_i(0) - \boldsymbol{p}_j(0)}{||\boldsymbol{p}_i(0) - \boldsymbol{p}_j(0)||}\cdot \bm{a}_3 \neq \pm 1,\ \forall  i,j\in\{1,\ \cdots,\ n\}, i\neq j$ where $\bm{a}_3$ is the barriers normal as defined in \eqref{equ:ch3:barrierAxes}.
\end{assumption}

Assumption~\ref{assmp:ch9:Gconnected} is made to make sure that updated information about the barrier $\mathcal{B}$ is available to all vehicles at any time during the motion.
This is essential in cases where $\mathcal{B}$ is dynamic.
For example, a decision could be made by one of the UAVs to apply changes to the shape of $\mathcal{B}$ based on some detected obstacles.
Such information needs to be shared among all vehicles so that they can compute their Voronoi regions accordingly.
Assumptions~\ref{assm:ch9:VoronoiNg} and \ref{assm:ch9:VoronoiProjection} ensures that UAVs can determine Voronoi centroids in a distributed fashion. 
Finally, assumption~\ref{assmp:ch9:initialP} ensures that all UAVs are initially located at positions with unique projections onto $\mathcal{B}$.

Now, the main results of this section can be presented.
Consider the following control law $\mathbf{u}_i:\R^3 \to \R^3$ based on Lloyd's algorithm:
\begin{equation}\label{equ:ch9u_unbounded}
\mathbf{u}_i = \bar{\boldsymbol{K}}_i (\tilde{\boldsymbol{C}}_{V_i} - \boldsymbol{p}_i)
\end{equation}
where the parameter $\bar{\boldsymbol{K}}_i$ is a diagonal positive definite  gain matrix.
We also propose a more practical bounded control law $\mathbf{u}_i:\R^3 \to \boldsymbol{\Gamma}$ which can satisfy bounds $u_{max}$ on the control input such that $\boldsymbol{\Gamma}=\{\boldsymbol{u}\in \R^3: ||\boldsymbol{u}|| \leq u_{max}\}$.
It is given as follows:
\begin{equation}\label{equ:ch9u_bounded}
\mathbf{u}_i = \boldsymbol{K}_i \tanh\Big(\gamma_i(\tilde{\boldsymbol{C}}_{V_i} - \boldsymbol{p}_i)\Big)
\end{equation}
where $\boldsymbol{K}_i=diag\{k_{i,x},\ k_{i,y},\ k_{i,z}\}$ is a positive definite diagonal matrix, $\gamma_i>0$, and $\tanh(\boldsymbol{v})$ is the hyperbolic tangent function defined element-wise for any vector $\boldsymbol{v} \in \R^3$.
Clear, the bound of this control law depends on the gain matrix $\boldsymbol{K}_i$ as follows:
\begin{equation} \label{equ:ch9u_bounds}
||\mathbf{u}_i|| \leq \sqrt{k_{i,x}^2 + k_{i,y}^2 + k_{i,z}^2} := u_{i,max}
\end{equation}

\begin{theorem}\label{thm:ch9:barrier}
	Consider a multi-UAV system with $n$ vehicles whose motion are modelled as \eqref{equ:ch9model}.
	Under assumptions~\ref{assmp:ch9:Gconnected}-\ref{assmp:ch9:initialP},
	the distributed control law \eqref{equ:ch9u_unbounded} (or \eqref{equ:ch9u_bounded} for bounded inputs) along with the algorithm in \textbf{S1}-\textbf{S4} solves the 3D barrier coverage problem defined in problem~\ref{prob:ch9barrier}.
\end{theorem}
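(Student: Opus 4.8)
The plan is to prove convergence to a centroidal Voronoi configuration, which by the locational optimization theory established in \cref{sec:ch9pre} corresponds to the optimal barrier coverage, and to show collision avoidance is automatically guaranteed by the Voronoi structure. First I would set up a Lyapunov argument using the multicenter cost function itself. Define the coverage objective over the barrier $\mathcal{B}$ as $\mathcal{H}_{\mathcal{V}}(\boldsymbol{P},\mathcal{V}(\boldsymbol{P}))$ from \eqref{equ:ch9HV2}, computed using the \emph{projected} positions $\tilde{\bm{p}}_i^{\mathcal{B}}$ onto $\mathcal{B}$. Since $\mathcal{B}$ is static for the barrier problem, and the projection and transformations in \textbf{S1}--\textbf{S4} are smooth (guaranteed well-defined by Assumption~\ref{assmp:ch9:initialP}, which rules out the degenerate projection directions), the centroid $\tilde{\boldsymbol{C}}_{V_i}$ is a continuously differentiable function of the configuration away from configuration changes of the Voronoi diagram. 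I would then differentiate $\mathcal{H}_{\mathcal{V}}$ along the closed-loop trajectories and invoke \eqref{equ:ch9dHdp}.

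The key computation is the chain rule $\dot{\mathcal{H}}_{\mathcal{V}} = \sum_{i=1}^{n} \frac{\partial \mathcal{H}_{\mathcal{V}}}{\partial \boldsymbol{p}_i} \cdot \dot{\boldsymbol{p}}_i$. Using \eqref{equ:ch9dHdp} and substituting the control law, for the unbounded case \eqref{equ:ch9u_unbounded} this yields
\begin{equation}
	\dot{\mathcal{H}}_{\mathcal{V}} = \sum_{i=1}^{n} 2 M_{V_i} (\tilde{\boldsymbol{C}}_{V_i} - \boldsymbol{p}_i)^T \bar{\boldsymbol{K}}_i (\tilde{\boldsymbol{C}}_{V_i} - \boldsymbol{p}_i) \geq 0,
\end{equation}
and for the bounded case \eqref{equ:ch9u_bounded} the same structure gives $\dot{\mathcal{H}}_{\mathcal{V}} = \sum_{i=1}^{n} 2 M_{V_i} (\tilde{\boldsymbol{C}}_{V_i} - \boldsymbol{p}_i)^T \boldsymbol{K}_i \tanh\big(\gamma_i(\tilde{\boldsymbol{C}}_{V_i} - \boldsymbol{p}_i)\big) \geq 0$, which is nonnegative because $M_{V_i}>0$, the gain matrices are positive definite, and $\tanh$ is an odd increasing function so each term $v^T \boldsymbol{K}\tanh(\gamma v)$ is nonnegative and vanishes only at $v=\bm{0}$. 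Here a subtlety I must handle carefully: \eqref{equ:ch9dHdp} is stated for the \emph{planar} density integral over $\mathcal{B}$, so I would argue that the motion of $\bm{p}_i$ decomposes into an in-plane component (driving $\tilde{\bm{p}}_i^{\mathcal{B}}$ toward $\tilde{\boldsymbol{C}}_{V_i}^{\mathcal{B}}$) and an out-of-plane component, and since $\tilde{\boldsymbol{C}}_{V_i}$ lies on $\mathcal{B}$ the control drives the out-of-plane offset to zero as well; the cost $\mathcal{H}_{\mathcal{V}}$ is nonincreasing in $-\mathcal{H}_{\mathcal{V}}$ (equivalently $\mathcal{H}_{\mathcal{V}}$ is monotonically increasing and bounded above since $\mathcal{B}$ is bounded). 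Then by LaSalle's invariance principle the trajectories converge to the largest invariant set where $\dot{\mathcal{H}}_{\mathcal{V}}=0$, i.e. where $\tilde{\boldsymbol{C}}_{V_i} = \boldsymbol{p}_i$ for all $i$, which is precisely the centroidal Voronoi configuration on $\mathcal{B}$, establishing optimal coverage.

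The main obstacle I anticipate is the nonsmoothness of $\mathcal{H}_{\mathcal{V}}$ and of the centroid maps as the Voronoi diagram changes combinatorially (when the adjacency structure of cells switches, vertices appear or disappear). The function $\mathcal{H}_{\mathcal{V}}$ is known to remain continuously differentiable across such transitions for the standard Voronoi partition (this is the classical result that $\mathcal{H}_{\mathcal{V}}$ is $C^1$ even though individual cell integrals are not), so I would cite that regularity and argue the Lyapunov derivative expression \eqref{equ:ch9dHdp} holds globally in the configuration space. A second, more practical obstacle is justifying that the distributed projection-based algorithm \textbf{S1}--\textbf{S4} actually reproduces the true Voronoi centroid on $\mathcal{B}$: this requires Assumptions~\ref{assm:ch9:VoronoiNg} and \ref{assm:ch9:VoronoiProjection}, which guarantee that the Voronoi neighbours used in \eqref{equ:ch9circumcenter} are exactly those within communication range, so the locally computed $\tilde{\boldsymbol{C}}_{V_i}$ coincides with the centralized one. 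Finally, collision avoidance follows for free: at every instant each $\bm{p}_i$ projects into the interior of its own Voronoi cell $V_i$, and distinct cells have disjoint interiors, so the projections remain distinct; combined with Assumption~\ref{assmp:ch9:initialP} propagating forward (the control never forces two projections to coincide since each is attracted only to its own cell centroid), no two vehicles collide. I would state this as a short corollary to the main convergence argument rather than a separate Lyapunov analysis.
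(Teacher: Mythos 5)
Your proposal is correct and follows essentially the same route as the paper's proof: the paper's Lyapunov function $V_1 = \tfrac{1}{2}\bar{\mathcal{H}} - \tfrac{1}{2}\mathcal{H}_{\mathcal{V}}$ is just an affine transformation of your monotonically increasing cost $\mathcal{H}_{\mathcal{V}}$, and both arguments reduce to the same derivative computation via \eqref{equ:ch9dHdp}, positive definiteness of the gains (with oddness of $\tanh$ for the bounded law), concluding convergence to centroidal Voronoi configurations and obtaining collision avoidance from disjointness of the Voronoi cells. Your additional care about the $C^1$ regularity of $\mathcal{H}_{\mathcal{V}}$ across combinatorial changes of the partition and about the planar-projection subtlety goes beyond what the paper explicitly checks, but it does not constitute a different approach.
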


\begin{proof}
	Let $\bm{P}_e$ be a centroidal Voronoi configuration.
	We define a Lyapunov canidate function as
	\begin{equation}\label{equ:ch9:barrierLyap}
		V_1(\bm{P}(t))=\frac{1}{2}\bar{\mathcal{H}} - \frac{1}{2}\mathcal{H}_{\mathcal{V}}(\bm{P}(t),\mathcal{V}(\bm{P}(t)))
	\end{equation}
	where $\mathcal{H}_{\mathcal{V}}$ is defined in \eqref{equ:ch9HV2} with $\phi(\bm{q}) = 1$, and $\bar{\mathcal{H}}=\mathcal{H}_{\mathcal{V}}(\bm{P}_e,\mathcal{V}(\bm{P}_e))$ which is constant.
	Hence, $V_1(\bm{P}_e) = 0$
	Furthermore, $\bar{\mathcal{H}} \geq \mathcal{H}_{\mathcal{V}}(\bm{P}(t),\mathcal{V}(\bm{P}(t)))\ \ \forall \bm{P}\in \mathcal{S}^n\backslash \{\bm{P}_e\}$ since a centroidal Voronoi configuration is optimal for $\mathcal{H}_{\mathcal{V}}$ among all other configurations (see Proposition 2.13 in \cite{bullo2009distributed}).
	This indicates that $V_1(\bm{P}(t))>0$ which makes it a valid Lyapunov function.
	
	The time derivative of \eqref{equ:ch9:barrierLyap} can be obtained using \eqref{equ:ch9dHdp}, \eqref{equ:ch9model} and the control law \eqref{equ:ch9u_unbounded} as:
	\begin{eqnarray}
			\dot{V}_1 &=& \frac{-1}{2}\sum_{i=1}^n \left(\dfrac{\partial \mathcal{H}_{\mathcal{V}}}{\partial \boldsymbol{p}_i}\right)^T \boldsymbol{u}_i \label{equ:ch8:barrierProof1} \\
			&=& - \sum_{i=1}^n (\tilde{\mathcal{C}}_{V_i} - \boldsymbol{p}_i)^T \bar{\boldsymbol{L}}_i (\tilde{\mathcal{C}}_{V_i} - \boldsymbol{p}_i)  \\
			&\leq& - \sum_{i=1}^n \lambda_{min}(\bar{\boldsymbol{L}}_i) \|\tilde{\mathcal{C}}_{V_i} - \boldsymbol{p}_i\|^2 \label{equ:ch8:barrierProof3}
	\end{eqnarray}
	where $\bar{\boldsymbol{L}}_i = M_{V_i}\bar{\boldsymbol{K}}_i$, and $\lambda_{min}(\bar{\boldsymbol{L}}_i)$ is the smallest eigenvalue of $\bar{\boldsymbol{L}}_i$.
	It is clear from \eqref{equ:ch8:barrierProof3} that $\dot{V}_1 < 0 \ \ \forall \bm{P}\in \mathcal{S}^n\backslash \{\bm{P}_e\}$ since $\bar{\boldsymbol{K}}_i$ is positive definite (i.e. $\lambda_{min}(\bar{\boldsymbol{L}}_i)>0$).
	Hence, the set of centroidal Voronoi configurations $\bm{P}_e$ is locally asymptotically stable, and $\lim\limits_{t\to \infty}\bm{p}_i=\tilde{\boldsymbol{C}}_{V_i},\ \forall i$.
	
	Similarly, for the bounded control law, the time derivative of $V_1(\bm{P}(t))$ can be obtained by substituting \eqref{equ:ch9u_bounded} into \eqref{equ:ch8:barrierProof1} as follows:
	\begin{eqnarray}
	\dot{V}_1 &=& - \sum_{i=1}^n (\tilde{\mathcal{C}}_{V_i} - \boldsymbol{p}_i)^T \boldsymbol{L}_i \tanh\Big(\gamma_i(\tilde{\boldsymbol{C}}_{V_i} - \boldsymbol{p}_i)\Big) \\
	&:=& - \sum_{i=1}^n \bm{e}_i^T \bm{L}_i \tanh(\gamma_i\bm{e}_i) \label{equ:ch8:barrierProofB2}
	\end{eqnarray}
	It is evident from \eqref{equ:ch8:barrierProofB2} that $\dot{V}_1 < 0 \ \ \forall \bm{P}\in \mathcal{S}^n\backslash \{\bm{P}_e\}$ since the hyperbolic tangent function is an odd function and $\gamma_i > 0$.
	This implies that $\bm{P}_e$ is also locally asymptotically stable under the application of the bounded control law \eqref{equ:ch9u_bounded}.
	
	Therefore, the control law \eqref{equ:ch9u_unbounded} (or \eqref{equ:ch9u_bounded}) guarantees that the vehicles will converge to centroidal Voronoi configurations which maximizes the sensing performance over the barrier $\mathcal{B}$ according to Proposition~2.13 in \cite{bullo2009distributed}.
	Furthermore, assumptions~\ref{assmp:ch9:Gconnected}-\ref{assmp:ch9:initialP} ensures that all vehicles can compute the centroids of their Voronoi cells in a distributed fashion at all times with no overlapping following the algorithm in \textbf{S1}-\textbf{S4}.
	This completes the proof.
\end{proof}

Note that the proposed control law ensures that the vehicles will converge to centroidal Voronoi configurations generated by their projections onto $\mathcal{B}$.
Thus, all the vehicles will eventually reach $\mathcal{B}$ such that $\lim\limits_{t\to\infty}\bm{p}_i=\tilde{\boldsymbol{C}}_{V_i}\in \mathcal{B}$ even if they are initially deployed at some positions $\bm{p}_i(0) \notin \mathcal{B}$.
Additionally, the trajectory of each vehicle remains within its Voronoi region which does not intersect with any other regions by definition.
This guarantees that vehicles motions are collision-free using the proposed control laws.

\subsection{Sweep Coverage Control Design}

\Cref{thm:ch9:barrier} shows that the proposed control laws can force the vehicles to reach a specified region within the 3D space (i.e. the barrier) and constrain their motion within that region.
This is extended in this section to address the sweep coverage control problems.
It can be achieved by enforcing vehicles to deploy over some dynamical "virtual" region whose motion is determined by the group.

In general, motion coordination control laws for coverage problems needs to satisfy the following objectives:
\begin{itemize}
	\item[$\bm{O1}$] Avoid collisions with other vehicles while maintaining a certain formation as a group 
	\item[$\bm{O2}$] Avoid collisions with obstacles within the environment
	\item[$\bm{O3}$] Achieve optimal coverage of the targeted environment $\mathcal{S}$ collaboratively 
\end{itemize}
The proposed sweeping algorithm targets these objective on two levels.
At a lower level, the vehicles motions are constrained within a dynamic "sweeping "region $\mathcal{F}(t)\subset\mathcal{S}$, and they maintain an optimal formation over that region for maximal sensing.
This achieves objective $\bm{O1}$.
At a higher level, decisions can be made in real-time collaboratively by the vehicles to decide the dynamics of $\mathcal{F}(t)$ (i.e $\mathcal{\dot{F}}(t)$).
Note that it is also possible to apply deformations to $\mathcal{F}(t)$ (will be shown in simulations) as long as the deformed region is large enough for the vehicles to distribute over with safe spacing. 
This provides a good way in addressing objectives $\bm{O2}$ and $\bm{O3}$.
In particular, the trajectory of $\mathcal{F}(t)$ will result in sweeping the whole environment $\mathcal{S}$ providing optimal coverage.
Moreover, obstacle avoidance can be achieved by only changing the dynamics of $\mathcal{F}(t)$ rather than having each vehicle reacting independently to obstacles.

For $\boldsymbol{r}\in \mathcal{F}(t)$, the motion dynamics of the sweeping region can be described as follows:
\begin{equation}\label{equ:ch9Sdynamics}
	\dot{\boldsymbol{r}} = \boldsymbol{g}(t,\boldsymbol{r}), \ \ \|\boldsymbol{g}(t,\boldsymbol{r})\| \leq \bar{g}, \ \ \|\dot{\bm{g}}(t,\boldsymbol{r})\| \leq \bar{\bar{g}}
\end{equation}
where $\boldsymbol{g}(t,\boldsymbol{r})\in\R^3$ is a desired velocity profile, and the following conditions must hold:
\begin{equation}\label{equ:ch9Vcondition}
	\bar{g} \leq \bar{u}
\end{equation}
In other words, the speed of $\mathcal{F}(t)$ should not be larger than the maximum physical speed that can be achieved by the vehicles.
Note that $\boldsymbol{g}(t,\boldsymbol{r})\in\R^3$ can have any direction.
However, for simplicity in achieving sweeping coverage, it assumed that $\mathcal{F}(t)$ is a planar region, and it is defined similar to $\mathcal{B}$ (see section~\ref{sec:ch9:centroidsVor}) at different time instants.
A simple example is moving $\mathcal{F}(t)$ in the direction of its normal (i.e. $\vect{a}_3$) with a constant \textit{sweeping speed} $g_0>0$ such that:
\begin{equation}\label{equ:ch9S_vel}
\boldsymbol{g}(t,\boldsymbol{r}) = g_0 \vect{a}_3
\end{equation}
More complex movements can be achieved depending on the considered environment shape and nearby obstacles.
One can also adopt a 3D holonomic or non-holonomic model for \eqref{equ:ch9Sdynamics} utilizing the available literature in obstacle for these models.
Generally, obstacle avoidance can be achieved either by varying the dynamics in \eqref{equ:ch9Sdynamics} or by dynamically deforming $\mathcal{F}(t)$.
Simulation cases showing both approaches will be shown later.
Note that the distributed behaviour of the proposed algorithm is maintained in all these cases since information about $\boldsymbol{g}(t,\boldsymbol{r})$ are exchanged over the connected network following assumption~\ref{assmp:ch9:Gconnected}.
At this point, we will leave out the design of \eqref{equ:ch9Sdynamics} to a high-level controller shared among the vehicles while assuming the following:
\begin{assumption}\label{assm:ch9:sweep}
	The sweeping plane $\mathcal{F}(t)$ remains all the time within the sensing environment (i.e. $\mathcal{F}(t)\subset \mathcal{S},\ \ \forall t>0$), and its movement governed by \eqref{equ:ch9Sdynamics} will completely span the volume of the sensing environment $\mathcal{S}$.
\end{assumption}

The main results of this section can now be presented.
\begin{theorem}\label{thm:ch9:sweeping_unb}
	Consider a multi-UAV system of size $n$ where each vehicle's motion model is represented by \eqref{equ:ch9model}.
	The control law \eqref{equ:ch9u_unbounded} along with the algorithm in \textbf{S1}-\textbf{S4} defined for a sweeping region $\mathcal{F}(t)$ whose dynamics is governed by \eqref{equ:ch9Sdynamics} solves the 3D sweeping coverage problem defined in problem~\ref{prob:ch9sweep} under assumptions~\ref{assmp:ch9:Gconnected}-\ref{assm:ch9:sweep} and the condition \eqref{equ:ch9Vcondition}.
\end{theorem}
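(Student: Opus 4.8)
The plan is to reduce Theorem~\ref{thm:ch9:sweeping_unb} to the already-established barrier coverage result in Theorem~\ref{thm:ch9:barrier} by working in a coordinate frame that moves with the sweeping plane $\mathcal{F}(t)$. The key observation is that the sweeping problem is, at every instant, a barrier coverage problem posed over the time-varying planar region $\mathcal{F}(t)$: the vehicles should converge to the centroidal Voronoi configuration generated by their projections onto $\mathcal{F}(t)$, and simultaneously $\mathcal{F}(t)$ itself translates through $\mathcal{S}$ so that by Assumption~\ref{assm:ch9:sweep} its trajectory spans the whole sensing volume. If both effects hold, then the vehicles form an optimally distributed dynamic arrangement that sweeps $\mathcal{S}$, which is exactly what Problem~\ref{prob:ch9sweep} requires.

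First I would introduce the relative (error) coordinate $\boldsymbol{\rho}_i(t) := \boldsymbol{p}_i(t) - \boldsymbol{r}_i(t)$, where $\boldsymbol{r}_i(t)$ is the point of $\mathcal{F}(t)$ tracked by vehicle $i$ moving according to \eqref{equ:ch9Sdynamics}. Differentiating and using the kinematics \eqref{equ:ch9model} together with the control law \eqref{equ:ch9u_unbounded} gives the closed-loop dynamics of $\boldsymbol{\rho}_i$, in which a drift term $-\boldsymbol{g}(t,\boldsymbol{r})$ appears from the motion of the frame. I would then define the time-varying Lyapunov function analogous to \eqref{equ:ch9:barrierLyap}, namely $V_2(\boldsymbol{P}(t),t) = \tfrac{1}{2}\bar{\mathcal{H}} - \tfrac{1}{2}\mathcal{H}_{\mathcal{V}}(\boldsymbol{P}(t),\mathcal{V}(\boldsymbol{P}(t)))$ computed with respect to the instantaneous region $\mathcal{F}(t)$. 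The centroid computation algorithm \textbf{S1}--\textbf{S4} transforms into the moving frame $\{\mathcal{F}\}$ exactly as it does for the static barrier, by Assumptions~\ref{assm:ch9:VoronoiNg} and \ref{assm:ch9:VoronoiProjection}, so the distributed estimation of $\tilde{\boldsymbol{C}}_{V_i}$ remains valid; connectivity (Assumption~\ref{assmp:ch9:Gconnected}) guarantees the shared information about $\boldsymbol{g}(t,\boldsymbol{r})$ needed to keep all vehicles referring to the same $\mathcal{F}(t)$.

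The main obstacle will be handling the explicit time dependence introduced by the motion of $\mathcal{F}(t)$: unlike the static barrier case, $\dot{V}_2$ now contains an extra term $\partial_t \mathcal{H}_{\mathcal{V}}$ arising from the moving domain boundary and the drift $\boldsymbol{g}(t,\boldsymbol{r})$. The hard part is to show this extra term does not destroy the negativity of $\dot{V}_2$ that was established in \eqref{equ:ch8:barrierProof3}. The plan here is to invoke the bound $\|\boldsymbol{g}(t,\boldsymbol{r})\| \leq \bar{g}$ from \eqref{equ:ch9Sdynamics} together with the condition $\bar{g}\leq\bar{u}$ from \eqref{equ:ch9Vcondition}: since the control authority is at least as large as the sweep speed, the Voronoi-centroidal feedback dominates the drift, and I would argue practical/ultimate boundedness of $\|\boldsymbol{\rho}_i\|$ (convergence to a neighborhood of the centroidal configuration whose size shrinks with increasing gain $\bar{\boldsymbol{K}}_i$), rather than exact asymptotic convergence. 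The smoothness bound $\|\dot{\boldsymbol{g}}(t,\boldsymbol{r})\|\leq\bar{\bar{g}}$ ensures the reference acceleration stays bounded so the tracking error dynamics are well-behaved.

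Finally, I would combine the two conclusions: the lower-level argument shows the vehicles track a configuration that stays near the centroidal Voronoi partition of $\mathcal{F}(t)$, hence achieve near-optimal instantaneous coverage over $\mathcal{F}(t)$ with collision-free trajectories (each vehicle confined to its own Voronoi cell, as in the barrier proof); the higher-level Assumption~\ref{assm:ch9:sweep} guarantees the trajectory of $\mathcal{F}(t)$ spans all of $\mathcal{S}$, so the dynamic arrangement $\mathcal{F}(t)$ scans the entire environment, solving Problem~\ref{prob:ch9sweep}. I would close the proof by noting that collision avoidance is inherited from the disjointness of Voronoi cells exactly as in Theorem~\ref{thm:ch9:barrier}, and that the distributed nature is preserved because every quantity each vehicle needs ($\tilde{\boldsymbol{C}}_{V_i}$ and $\boldsymbol{g}(t,\boldsymbol{r})$) is obtained from neighbors within communication range under the standing assumptions.
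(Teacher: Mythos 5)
Your proposal is correct in substance and reaches the same conclusions as the paper — practical (ultimately bounded) convergence to the centroidal Voronoi configurations of the moving plane, with the error neighborhood shrinking as the gains grow, followed by an appeal to Assumption~\ref{assm:ch9:sweep} to conclude sweeping of $\mathcal{S}$, with collision avoidance inherited from Voronoi disjointness — but it gets there by a different technical route. The paper does not pass to a moving frame or differentiate the coverage functional over the time-varying domain; instead it defines the tracking errors $\bm{e}_{C_i}=\tilde{\bm{C}}_{V_i}-\bm{p}_i$ directly in the inertial frame and augments the barrier Lyapunov function with both position and velocity error terms, $V_2 = V_1 + \tfrac{1}{2}\sum_i\bigl(\bm{e}_{C_i}^T\bm{e}_{C_i}+\dot{\bm{e}}_{C_i}^T\dot{\bm{e}}_{C_i}\bigr)$. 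Differentiating and substituting \eqref{equ:ch9u_unbounded} produces cross terms bounded by $\|\dot{\tilde{\bm{C}}}_{V_i}\|$ (controlled by $\bar{g}$, since $\tilde{\bm{C}}_{V_i}\in\mathcal{F}(t)$) and by $\bar{\bar{g}}$, giving negativity of $\dot{V}_2$ outside a compact set with radii $\|\dot{\tilde{\bm{C}}}_{V_i}\|/\lambda_{min}(\bar{\bm{K}}_i)$ and $\bar{\bar{g}}/\lambda_{min}(\bar{\bm{K}}_i)$, i.e.\ uniform ultimate boundedness. Two remarks on the comparison. First, your flagged obstacle — the extra term $\partial_t\mathcal{H}_{\mathcal{V}}$ coming from the moving domain — is real, and your route would have to bound it explicitly; interestingly, the paper's proof quietly carries the same issue, since it reuses $\dot{V}_1<0$ from Theorem~\ref{thm:ch9:barrier}, which was established for a \emph{static} barrier, so your framing is arguably the more honest accounting of where the difficulty lives. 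Second, be careful about what does the dominating: for the unbounded law \eqref{equ:ch9u_unbounded} the quantitative suppression of the drift comes from making $\lambda_{min}(\bar{\bm{K}}_i)$ large relative to $\bar{g}$ and $\bar{\bar{g}}$, not from the condition \eqref{equ:ch9Vcondition}; that condition is a feasibility requirement (the plane must not outrun the vehicles) and is what matters most for the bounded law of Theorem~\ref{thm:ch9:sweeping_bounded}. You do state that the neighborhood shrinks with increasing $\bar{\bm{K}}_i$, so your argument has the right driver — just do not let \eqref{equ:ch9Vcondition} carry the weight of the Lyapunov estimate.
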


\begin{proof}
	Let
	\begin{equation}\label{equ:ch9:errors}
		\bm{e}_{C_i} = \tilde{\bm{C}}_{V_i} - \bm{p}_i \to \dot{\bm{e}}_{C_i} = \dot{\tilde{\bm{C}}}_{V_i} - \bm{u}_i
	\end{equation}
	where $\bm{u}_i(t)$ is given by \eqref{equ:ch9u_unbounded}.
	Now, define a Lyapunov candidate function as follows:
	\begin{equation}\label{equ:ch9Lyap_thm2}
	\begin{aligned}
	V_2(t)&:= V_2(\bm{e}_{C_1}(t),\cdots, \bm{e}_{C_n}(t),\dot{\bm{e}}_{C_1}(t),\cdots, \dot{\bm{e}}_{C_n}(t)) \\
	&= V_1 + \frac{1}{2} \sum_{i=1}^n \Big(\bm{e}_{C_i}^T \bm{e}_{C_i} + \dot{\bm{e}}_{C_i}^T \dot{\bm{e}}_{C_i} \Big)
	\end{aligned}
	\end{equation}
	where $V_1$ is defined in \eqref{equ:ch9:barrierLyap}.
	The choice \eqref{equ:ch9Lyap_thm2} guarantees that $V_2(t)>0$ for
	\begin{equation*}
		(\bm{e}_{C_1}(t),\cdots, \bm{e}_{C_n}(t),\dot{\bm{e}}_{C_1}(t),\cdots, \dot{\bm{e}}_{C_n}(t)) \neq (0,\cdots,0),
	\end{equation*}
	and $V_2(0,\cdots,0) = 0$ is also true.
	
	The time derivative of \eqref{equ:ch9Lyap_thm2} is obtained using \eqref{equ:ch9model} and \eqref{equ:ch9u_unbounded} as follows:
	\begin{align}
	\dot{V}_2 &= \dot{V}_1 + \sum_{i=1}^n \Big(\bm{e}_{C_i}^T \dot{\bm{e}}_{C_i} + \dot{\bm{e}}_{C_i}^T \ddot{\bm{e}}_{C_i} \Big) \nonumber\\
	&= \dot{V}_1 + \sum_{i=1}^n \Big(\bm{e}_{C_i}^T (\dot{\tilde{\bm{C}}}_{V_i} - \bar{\bm{K}}_i\bm{e}_{C_i}) + \dot{\bm{e}}_{C_i}^T (\ddot{\tilde{\bm{C}}}_{V_i} - \bar{\bm{K}}_i\dot{\bm{e}}_{C_i}) \Big) \nonumber\\ 
	&\leq \dot{V}_1 + \sum_{i=1}^n \Big(\|\dot{\tilde{\bm{C}}}_{V_i}\|\|\bm{e}_{C_i}\| - \lambda_{min}(\bar{\bm{K}}_i)\|\bm{e}_{C_i}\|^2 + \bar{\bar{g}} \|\dot{\bm{e}}_{C_i}\| - \lambda_{min}(\bar{\bm{K}}_i) \|\dot{\bm{e}}_{C_i}\|^2 \Big) \label{equ:ch9H1_dot}
	\end{align}
	Recall that $\dot{V}_1<0$ which was established in \eqref{equ:ch8:barrierProofB2}.
	Therefore, the time derivative of $V_2$ is negative outside the compact set $B_{\Gamma} = B_{\Gamma_1}\cup \cdots \cup B_{\Gamma_n}$ where $B_{\Gamma_i}=\{\bm{e}_{C_i}\in\R^3, \dot{\bm{e}}_{C_i}\in\R^3 : \|\bm{e}_{C_i}\| > \Gamma_{i,1}, \|\dot{\bm{e}}_{C_i}\| > \Gamma_{i,2} \}$ and $\Gamma_{i,1}$ and $\Gamma_{i,2}$ are given by:
	\begin{equation}
		\begin{array}{lr}
		\Gamma_{i,1} = \frac{\|\dot{\tilde{\bm{C}}}_{V_i}\|}{\lambda_{min}(\bar{\boldsymbol{K}}_i)}, & \Gamma_{i,2} = \frac{\bar{\bar{g}}}{\lambda_{min}(\bar{\boldsymbol{K}}_i)}
		\end{array}
	\end{equation}
	which represent closed balls with radius $\Gamma_{i,1}$ and $\Gamma_{i,2}$ respectively.
	Thus, starting from any initial condition outside the set $B_{\Gamma}$, the errors will converge to the closed set $B_{\Gamma}$ in finite time and stay there forever.
	That is,
	\begin{equation}
	\begin{array}{lr}
	\limsup\limits_{t\to \infty} \|\tilde{\bm{C}}_{V_i} - \bm{p}_i\| \leq \Gamma_{i,1}, & \limsup\limits_{t\to \infty} \|\dot{\tilde{\boldsymbol{C}}}_{V_i} - \boldsymbol{u}_i\| \leq \Gamma_{i,2}
	\end{array}
	\end{equation}
	which means that the errors will be uniformly ultimately bounded with respect to $B_{\Gamma_i}$. 
	Moreover, the radius of $B_{\Gamma_i}$ can be made arbitrary small by increasing $\bar{\boldsymbol{K}}_i$ such that $\lambda_{min}(\bar{\bm{K}}_i) \gg \bar{g}$ and $\lambda_{min}(\bar{\bm{K}}_i) \gg \bar{\bar{g}}$.
	Furthermore, since $\tilde{\bm{C}}_{V_i} \in \mathcal{F}(t)$, its derivative follows \eqref{equ:ch9Sdynamics} (i.e. $\limsup\limits_{t\to \infty} \| \bm{g}(t,\tilde{\boldsymbol{C}}_{V_i}) - \bm{u}_i \| \leq \Gamma_{i,2}$).
	Hence, all the vehicles will converge to their centroidal Voronoi configurations over $\mathcal{F}(t)$ and follow its trajectory associated with \eqref{equ:ch9Sdynamics}.
	According to \cref{assm:ch9:sweep}, the movement of the multi-UAV along the trajectory of $\mathcal{F}(t)$ solves the sweeping coverage problem.
	This completes the proof.
\end{proof}

\begin{theorem}\label{thm:ch9:sweeping_bounded}
	Consider a multi-UAV system of size $n$ where each vehicle's motion model is represented by \eqref{equ:ch9model}.
	Also, consider using the algorithm in \textbf{S1}-\textbf{S4} defined for a sweeping region $\mathcal{F}(t)$ whose dynamics is governed by \eqref{equ:ch9Sdynamics} so that the vehicles can compute their centroidal Voronoi configurations.
	The bounded control law \eqref{equ:ch9u_bounded} solves the 3D sweeping coverage problem defined in problem~\ref{prob:ch9sweep} under assumptions~\ref{assmp:ch9:Gconnected}-\ref{assm:ch9:sweep} and the condition \eqref{equ:ch9Vcondition}.
\end{theorem}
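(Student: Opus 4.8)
The plan is to mirror the ultimate-boundedness argument of \cref{thm:ch9:sweeping_unb}, the only structural change being that the linear feedback is replaced by the saturated feedback \eqref{equ:ch9u_bounded}. First I would reuse the tracking errors $\bm{e}_{C_i} = \tilde{\bm{C}}_{V_i} - \bm{p}_i$ introduced in \eqref{equ:ch9:errors}. Since each vehicle obeys \eqref{equ:ch9model} driven by \eqref{equ:ch9u_bounded}, the error dynamics read $\dot{\bm{e}}_{C_i} = \dot{\tilde{\bm{C}}}_{V_i} - \bm{K}_i \tanh(\gamma_i \bm{e}_{C_i})$, where, because $\tilde{\bm{C}}_{V_i}\in\mathcal{F}(t)$, the drift $\dot{\tilde{\bm{C}}}_{V_i}$ inherits the bounded sweeping-plane velocity of \eqref{equ:ch9Sdynamics}, so $\|\dot{\tilde{\bm{C}}}_{V_i}\|\leq\bar{g}$. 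I would take the candidate $V_3 = V_1 + \tfrac12\sum_{i=1}^{n}\bm{e}_{C_i}^{T}\bm{e}_{C_i}$, retaining $V_1$ from \eqref{equ:ch9:barrierLyap} exactly as in \cref{thm:ch9:sweeping_unb} so that convergence to the optimal (centroidal Voronoi) configuration over $\mathcal{F}(t)$ is captured alongside the tracking estimate. I would deliberately drop the velocity-error term used in the unbounded proof, since differentiating it would force me to differentiate the saturation and introduce messy $\operatorname{sech}^2$ factors.

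Next I would differentiate $V_3$ along the closed loop, obtaining $\dot{V}_3 = \dot{V}_1 + \sum_i\big(\bm{e}_{C_i}^{T}\dot{\tilde{\bm{C}}}_{V_i} - \bm{e}_{C_i}^{T}\bm{K}_i\tanh(\gamma_i\bm{e}_{C_i})\big)$, with $\dot{V}_1$ contributing non-positively as established in \cref{thm:ch9:barrier}. The two crucial estimates are the Cauchy--Schwarz bound $\bm{e}_{C_i}^{T}\dot{\tilde{\bm{C}}}_{V_i}\leq\bar{g}\,\|\bm{e}_{C_i}\|$ and the lower bound $\bm{e}_{C_i}^{T}\bm{K}_i\tanh(\gamma_i\bm{e}_{C_i})\geq\lambda_{\min}(\bm{K}_i)\sum_{j}|e_{C_i,j}|\tanh(\gamma_i|e_{C_i,j}|)$, which follows from the oddness and monotonicity of $\tanh$ as in the bounded barrier case. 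Because the restoring term saturates, the reasoning departs from the unbounded one precisely here: I would show that once $\|\bm{e}_{C_i}\|$ exceeds a threshold, at least one component enters the regime where $\tanh(\gamma_i|e_{C_i,j}|)$ is arbitrarily close to unity, so the saturated restoring effort behaves like $\lambda_{\min}(\bm{K}_i)\|\bm{e}_{C_i}\|$ and, provided the gain dominates the drift in the sense guaranteed by \eqref{equ:ch9Vcondition} together with the input budget \eqref{equ:ch9u_bounds}, overcomes the $\bar{g}\|\bm{e}_{C_i}\|$ term. This yields $\dot{V}_3<0$ outside a compact ball $B_{\Gamma}$, hence uniform ultimate boundedness of every $\bm{e}_{C_i}$, with a residual radius shrinkable by enlarging $\bm{K}_i$ subject to $u_{i,\max}\leq u_{\max}$.

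Finally I would translate boundedness of the tracking error into a solution of \cref{prob:ch9sweep}: each $\bm{p}_i$ follows its moving centroidal Voronoi target $\tilde{\bm{C}}_{V_i}\in\mathcal{F}(t)$ up to the residual $B_{\Gamma}$, so the networked group rides the sweeping plane whose trajectory, by \cref{assm:ch9:sweep}, spans the whole region $\mathcal{S}$. Collision avoidance and connectivity come for free, as in \cref{thm:ch9:barrier}: the algorithm of \textbf{S1}--\textbf{S4} keeps each vehicle inside its own Voronoi cell and these cells are pairwise disjoint by construction. The main obstacle I anticipate is exactly the saturation step. In the unbounded proof the stabilizing term is quadratic and cleanly dominates the bounded drift, whereas here I must argue carefully that the saturated restoring action still exceeds the sweeping-induced drift outside a ball; this is where condition \eqref{equ:ch9Vcondition} has to be invoked quantitatively rather than merely qualitatively, and where the componentwise behaviour of $\tanh$ must be handled with some care so that the lower bound on the restoring term does not degenerate into a vacuous estimate.
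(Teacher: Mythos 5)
Your proposal is correct, but it takes a genuinely different route from the paper's proof. The paper keeps the structure of \cref{thm:ch9:sweeping_unb} intact: it uses the Lyapunov candidate \eqref{equ:ch9Lyap_thm3}, $V_3 = V_1 + \sum_i \bigl(\tfrac{1}{\gamma_i}[1\ 1\ 1]^T\log(\cosh(\gamma_i\bm{e}_{C_i})) + \tfrac12\dot{\bm{e}}_{C_i}^T\dot{\bm{e}}_{C_i}\bigr)$, i.e.\ the log-cosh potential matched to the $\tanh$ feedback \emph{plus} the velocity-error term you deliberately discard, and it does differentiate through the saturation, producing exactly the $\mathrm{Sech}^2$ factors you wanted to avoid (these appear as $\Lambda_i = \bm{K}_i\,\mathrm{Sech}^2(\gamma_i\bm{e}_{C_i})$ in \eqref{equ:ch9V3_dot}). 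The payoff of the paper's choice is that $\dot V_3$ contains $-\lambda_{\min}(\bm{K}_i)\|\tanh(\gamma_i\bm{e}_{C_i})\|^2$ against $\|\dot{\tilde{\bm{C}}}_{V_i}\|\,\|\tanh(\gamma_i\bm{e}_{C_i})\|$, giving ultimate bounds on \emph{both} position and velocity errors with explicit radii $\Upsilon_{i,1},\Upsilon_{i,2}$. Your quadratic candidate $V_1 + \tfrac12\sum_i\|\bm{e}_{C_i}\|^2$ is more elementary and suffices for the theorem as stated, since \cref{prob:ch9sweep} only needs the vehicles to ride $\mathcal{F}(t)$ within a bounded tracking error; the price is that, because the restoring term saturates, negativity of $\dot V_3$ outside a ball forces you to state the gain condition $\lambda_{\min}(\bm{K}_i) > \bar g$ (up to a dimensional factor such as $\sqrt{3}$ from the componentwise bound) explicitly. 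It is worth noting that the paper's proof carries the same requirement implicitly: since $\|\tanh(\gamma_i\bm{e}_{C_i})\|$ saturates, the set $\{\|\tanh(\gamma_i\bm{e}_{C_i})\| > \Upsilon_{i,1}\}$ has a compact complement in $\bm{e}_{C_i}$ only when $\Upsilon_{i,1}$ is below the saturation level, which is the same dominance of $\lambda_{\min}(\bm{K}_i)$ over $\|\dot{\tilde{\bm{C}}}_{V_i}\|$; the paper relegates this to the closing remark that one should choose $\lambda_{\min}(\bm{K}_i)\gg\bar g$. Finally, your two glosses --- reusing $\dot V_1 \le 0$ from the static barrier analysis despite $\mathcal{F}(t)$ being time-varying, and identifying $\dot{\tilde{\bm{C}}}_{V_i}$ with the plane's velocity field $\bm{g}$ --- are exactly the glosses the paper itself makes, so they do not put you below the paper's level of rigor.
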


\begin{proof}
	Again, consider the errors definitions in \eqref{equ:ch9:errors}.
	We now define a Lyapunov candidate function as follows:
	\begin{equation}\label{equ:ch9Lyap_thm3}
		\begin{aligned}
		V_3(t)&:= V_3(\bm{e}_{C_1}(t),\cdots, \bm{e}_{C_n}(t),\dot{\bm{e}}_{C_1}(t),\cdots, \dot{\bm{e}}_{C_n}(t)) \\
		&= V_1 + \sum_{i=1}^n \Big(\frac{1}{\gamma_i} [1\ 1\ 1]^T \log(\cosh(\gamma_i\bm{e}_{C_i})) + \frac{1}{2} \dot{\bm{e}}_{C_i}^T \dot{\bm{e}}_{C_i} \Big)
		\end{aligned}
	\end{equation}
	where $\log(\bm{v})\in\R^3$ and $\cosh(\bm{v})\in\R^3$ are defined element-wise for any vector $\bm{v}\in\R^3$, and $\bm{u}_i$ is defined using \eqref{equ:ch9u_bounded}.
	Also, let $\text{Sech}^2(\bm{v}): \R^3 \to \R^{3\times3}$ be a mapping function, based on the hyperbolic secant function, which maps the vector $\bm{v}=[v_1, v_2, v_3]^T$ into a diagonal matrix as follows:
	\begin{equation}
	\text{Sech}^2(\bm{v}) = \left[\begin{array}{ccc}
	\text{sech}^2(v_1) & 0 & 0 \\
	0 & \text{sech}^2(v_2) & 0 \\
	0 & 0 & \text{sech}^2(v_3)
	\end{array}\right]
	\end{equation}
	The time derivative of $V_3$ can then be obtained as follows:
	\begin{align}
		\dot{V}_3 &= \dot{V}_1 + \sum_{i=1}^n \Big(\tanh(\gamma_i\bm{e}_{C_i})^T (\dot{\tilde{\bm{C}}}_{V_i} - \bm{K}_i\tanh(\gamma_i\bm{e}_{C_i})) + \dot{\bm{e}}_{C_i}^T (\ddot{\tilde{\bm{C}}}_{V_i} - \gamma_i\bm{K}_i \text{Sech}^2(\gamma_i\bm{e}_{C_i})\dot{\bm{e}}_{C_i} )  \Big) \nonumber\\ 
		&\leq \dot{V}_1 + \sum_{i=1}^n \Big(\|\dot{\tilde{\bm{C}}}_{V_i}\|\|\tanh(\gamma_i\bm{e}_{C_i})\| - \lambda_{min}(\bm{K}_i)\|\tanh(\gamma_i\bm{e}_{C_i})\|^2 + \|\ddot{\tilde{\bm{C}}}_{V_i}\| \|\dot{\bm{e}}_{C_i}^T\| - \gamma_i \lambda_{min}(\Lambda_i) \|\dot{\bm{e}}_{C_i}\|^2 \Big) \label{equ:ch9V3_dot}
	\end{align}
	where $\Lambda_i = \bm{K}_i \text{Sech}^2(\gamma_i\bm{e}_{C_i})$.
	
	Similar to the previous analysis, $\dot{V}_3$ is negative outside the compact set $D_{\Upsilon} = D_{\Upsilon_1}\cup \cdots \cup D_{\Upsilon_n}$ where $D_{\Upsilon_i}=\{\bm{e}_{C_i}\in\R^3, \dot{\bm{e}}_{C_i}\in\R^3 : \|\tanh(\bm{e}_{C_i})\| > \Upsilon_{i,1}, \|\dot{\bm{e}}_{C_i}\| > \Upsilon_{i,2} \}$ where $\Upsilon_{i,1}$ and $\Upsilon_{i,2}$ are given by:
	\begin{equation}
		\begin{array}{lr}
		\Upsilon_{i,1} = \frac{\|\dot{\tilde{\bm{C}}}_{V_i}\|}{\lambda_{min}(\boldsymbol{K}_i)}, & \Upsilon_{i,2} = \frac{\bar{\bar{g}}}{\gamma_i\lambda_{min}(\bar{\boldsymbol{K}}_i)}
		\end{array}
	\end{equation}
	Thus, the system trajectories will converge to the set $D_{\Upsilon}$ starting from any initial condition, and stay there forever leading to the following:
	\begin{equation}
	\begin{array}{lr}
	\limsup\limits_{t\to \infty} \|\tanh(\tilde{\bm{C}}_{V_i} - \bm{p}_i)\| \leq \Upsilon_{i,1}, & \limsup\limits_{t\to \infty} \|\dot{\tilde{\boldsymbol{C}}}_{V_i} - \boldsymbol{u}_i\| \leq \Upsilon_{i,2}
	\end{array}
	\end{equation}
	Therefore, the tracking errors are uniformly ultimately bounded with respect to $D_{\Upsilon_i}$.
	By choosing, $\lambda_{min}(\bm{K}_i) \gg \bar{g}$ and $\gamma_i\lambda_{min}(\bm{K}_i) \gg \bar{\bar{g}}$, the errors can be made arbitrarily small.
	
	Moreover, $\tilde{\bm{C}}_{V_i} \in \mathcal{F}(t) \to \dot{\tilde{\bm{C}}}_{V_i}=g(t, \tilde{\bm{C}}_{V_i})$ according to \eqref{equ:ch9Sdynamics} and \textbf{S1}-\textbf{S4}.
	By assumption~\ref{assm:ch9:sweep}, the multi-UAV system solves the sweeping coverage problem which completes the proof.
\end{proof}

\begin{remark}
	Note that the proposed methods can be extended to consider 3D barriers $\mathcal{B}$ and sweeping regions $\mathcal{F}(t)$ (i.e. non planar). 
	However, a computationally efficient way to compute 3D Voronoi centroids needs to be considered instead of the suggested approach in section~\ref{sec:ch9:centroidsVor}. 
\end{remark}

\section{Validation \& Discussion}\label{sec:ch9simulation}

Simulations were carried out to validate the performance of the proposed 3D coverage control laws in \eqref{equ:ch9u_unbounded} and \eqref{equ:ch9u_bounded} using the algorithm in \textbf{S1}-\textbf{S4} to compute the centroidal Voronoi configurations.
Additionally, more simulation cases were performed to demonstrate the robustness of the proposed method and how obstacle avoidance can be incorporated within the overall framework.
The following subsections provide details of these simulations and the obtained results.
\subsection{Simulation Cases 1-4: Performance Validation}

In the first set of simulations, 
a multi-UAV system of size $n=20$ was used.
All vehicles have been initially deployed to random locations in some predefined region $\mathcal{I}=\Big\{(x_i,y_i,z_i):0\leq x_i,y_i \leq 4, 0 \leq z_i \leq 2,\ \forall i=\{1,\cdots,20\}\Big\}$.
The control design parameters were chosen to be $\boldsymbol{K}_i=diag\{2.5,0.5,0.5\}$ and $\bar{\boldsymbol{K}}_i=diag\{0.3,0.3,0.3\}$ for all vehicles.

The 3D barrier coverage problem was considered in the first two simulation cases where the unbounded control law \eqref{equ:ch9u_unbounded} and the bounded control law \eqref{equ:ch9u_bounded} are used.
The goal was to achieve optimal coverage over a barrier region defined according to $\mathcal{B}=\{(x,y,z):x=20,\ 0\leq(y,z)\leq 10\}$.
The obtained results for this case are presented in \cref{fig:ch9sim1,fig:ch9sim2,fig:ch9sim3,fig:ch9sim4}.
\Cref{fig:ch9sim1,fig:ch9sim3} show the complete trajectories taken by all UAVs and their final locations which are optimally distributed over the barrier $\mathcal{B}$ (rectangular area highlighted in yellow).
The multi-UAV system reaches the centroidal Voronoi configurations and all vehicles form a static arrangement over $\mathcal{B}$.
Note that the shape of $\mathcal{B}$ and the number of UAVs were chosen arbitrarily just as a proof of concept.
However, in practical applications, the number of UAVs and the design of the barrier can be considered as a design problem which depends on the UAVs sensing and communications capabilities.
Having a larger number of vehicles may result in some overlapping between sensors field-of-view (FOV).
On the other hand, It may not be possible to completely cover $\mathcal{B}$ using lower number of UAVs than what is needed (i.e. the combined FOV of all sensors is less than the size (area/volume) of $\mathcal{B}$.
Overall, the sensing performance will be maximized using the developed strategy.
It can also be seen that the resultant trajectories are collision-free.

The time evolution of control inputs for all vehicles are shown in \cref{fig:ch9sim2,fig:ch9sim4} for both the bounded and unbounded control laws respectively.
Using \eqref{equ:ch9u_unbounded} would require choosing the controller gain matrix $\bar{\bm{K}}_i$ properly in order to satisfy the constraint \eqref{equ:ch9:maxInput}.
This depends mostly on how far the vehicle is from its centroidal Voronoi configuration initially which indicates that tuning $\bar{\bm{K}}_i$ could not be ideal in practice.
Alternatively, using \eqref{equ:ch9u_bounded} provides an easier way of choosing $\bm{K}_i$ to ensure that the physical limit on the vehicles velocity is respected (i.e. \eqref{equ:ch9:maxInput} is satisfied).
This can be clearly seen in \cref{fig:ch9sim4} where the vehicles speed remains constant for the first 8 seconds until the barrier is reached at which the velocities drop down to zero, and the vehicles become statically distributed over $\mathcal{B}$.
The upper bounds on $||\boldsymbol{u}_i||$ in this case was $u_{i,max}= 2.6\ m/s\ \forall i$ which is in accordance with \eqref{equ:ch9u_bounds}.

In the next two simulation cases, the sweeping coverage problem was considered.
The task was to completely scan a 3D sensing region $\mathcal{S}$ which was defined as $\mathcal{S} = \mathcal{S}_1 \cup \mathcal{S}_2 \cup \mathcal{S}_2$ where
\begin{align*}
\mathcal{S}_1 &= \{(x,y,z):\ 10 \leq x \leq 60,\ 0 \leq y \leq 10,\ 0 \leq z \leq 5\} \\
\mathcal{S}_2 &= \{(x,y,z):\ 10 \leq x \leq 60,\ 0 \leq y \leq 2.5,\ 5 \leq z \leq 10\} \\
\mathcal{S}_3 &= \{(x,y,z):\ 10 \leq x \leq 60,\ 7.5 \leq y \leq 10,\ 5 \leq z \leq 10\}
\end{align*}
Based on the environment shape, an initial sweeping plane $\mathcal{F}(t=0)$ was determined by the multi-UAV system as $\mathcal{F}(0)=\{(x,y,z)\in \mathcal{S}:x=10\}$.
The dynamics of $\mathcal{F}(t)$ was also considered to be the simplest case as in \eqref{equ:ch9S_vel} where the sweeping plane is moving with a constant speed of $g_0=1.5\ m/s$ in a progressive direction that can result in a sweeping behaviour as will be shown.
In more complex cases, some other patterns could be adopted for moving $\mathcal{F}(t)$ as a higher level decision making which can still be done in a distributed manner as the vehicles can exchange information over the connected network.
For example, the literature on coverage path planning for single-vehicle systems can be utilized in this case.

\Cref{fig:ch9sim5,fig:ch9sim6} shows the results for the sweeping coverage problem when using \eqref{equ:ch9u_unbounded}, and the results obtained when applying \eqref{equ:ch9u_bounded} are shown in \cref{fig:ch9sim7,fig:ch9sim8}.
For both cases, the vehicles move from their initial positions to quickly deploy over $\mathcal{F}(t)$ reaching their centroidal Voronoi configurations.
As the position of $\mathcal{F}(t)$ evolves over time according to \eqref{equ:ch9S_vel}, the vehicles corresponding centroidal Voronoi configurations evolve accordingly since $\tilde{\bm{C}}_{V_i} \in \mathcal{F}(t)$.
Hence, the vehicles will start to move with the same velocity as of $\mathcal{F}(t)$.
This can be clearly seen from \cref{fig:ch9sim6,fig:ch9sim8}.
You can see that the vehicles starts with a higher speed to reach the moving sweeping plane and achieve optimal distribution.
Once this is achieved (around $t=10s$), the vehicles are no longer moving within $\mathcal{F}(t)$ at which all velocities converge to $g_0=1.5 m/s$ in the direction of the sweeping plane's movement.
It is important to notice that the speed of $\mathcal{F}(t)$ (i.e. $g_0$) should be slower than the maximum velocity achievable by any vehicle ($\bar{u}\geq g_0$).
After scanning the desired region $\mathcal{S}$, the sweeping plane $\mathcal{F}(t)$ becomes static which reflects on all the vehicles as can be seen from the results where all velocities converge to 0. 
The complete trajectories of the vehicles along with the scanned 3D volume (highlighted in yellow) are shown in \cref{fig:ch9sim5,fig:ch9sim7} which confirms that the sweeping coverage problem is achieved over $\mathcal{S}$.
These results clearly validate the performance of the proposed 3D coverage control strategy.

\begin{figure}[!htb]
	\centering
	\includegraphics[width=0.75\linewidth]{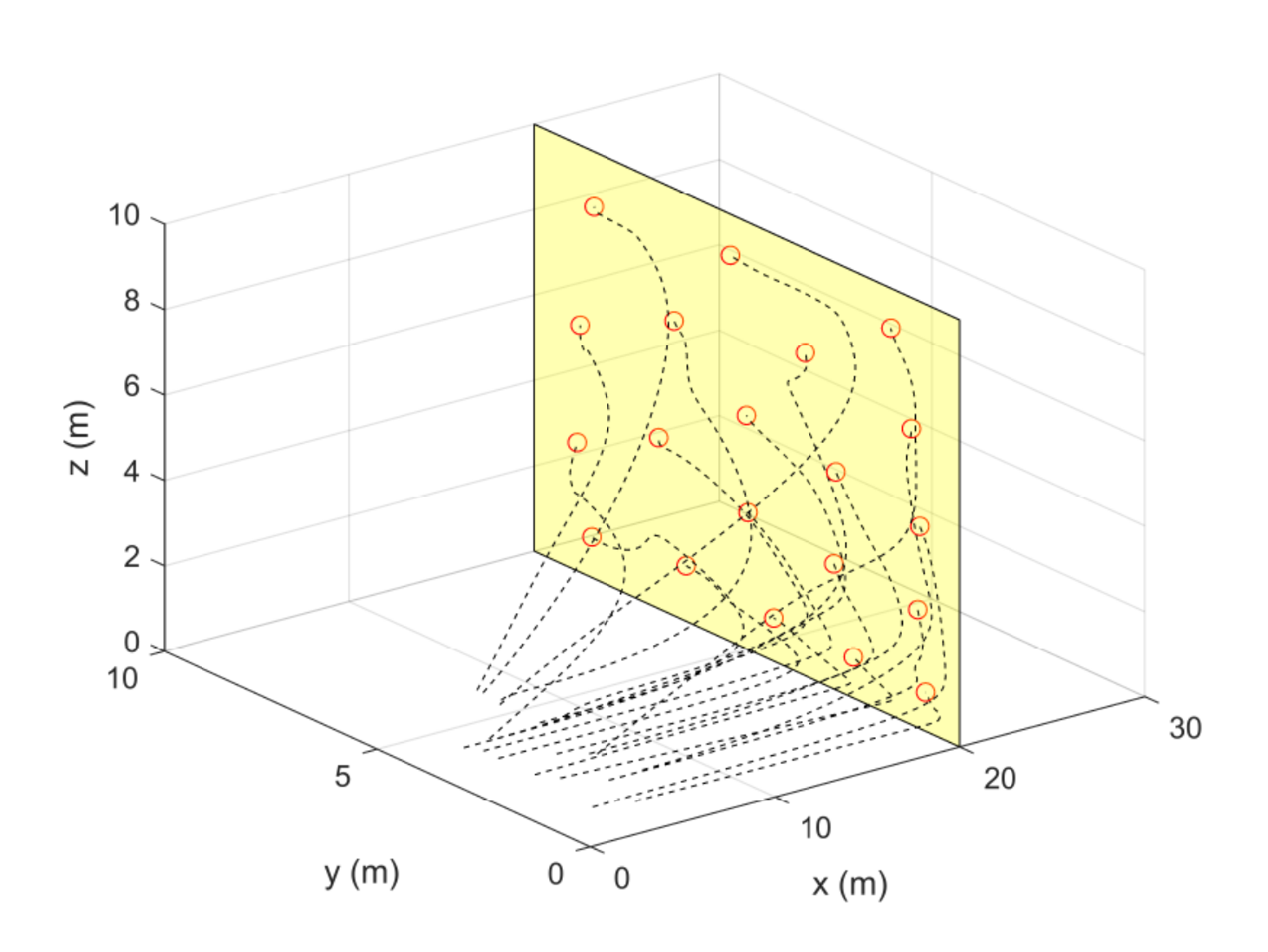} 
	\caption{UAVs trajectories for the barrier coverage case using \eqref{equ:ch9u_unbounded} (Simulation Case 1)} \label{fig:ch9sim1}
\end{figure}

\begin{figure}[!htb]
	\centering
	\includegraphics[width=0.65\linewidth]{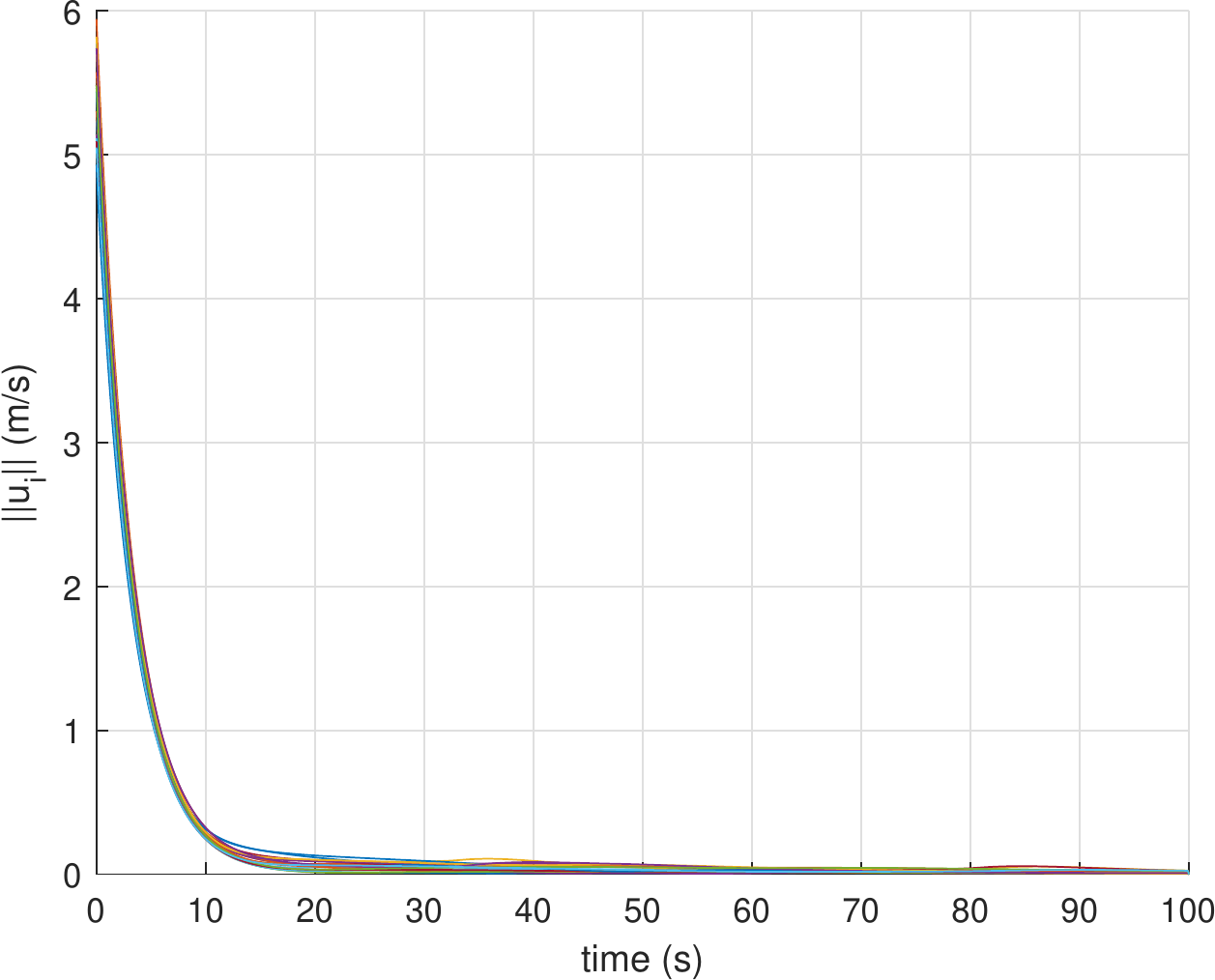} 
	\caption{Norms of the control inputs applied to the UAVs for the barrier coverage case using \eqref{equ:ch9u_unbounded} (Simulation Case 1)} \label{fig:ch9sim2}
\end{figure}

\begin{figure}[!htb]
	\centering
	\includegraphics[width=0.75\linewidth]{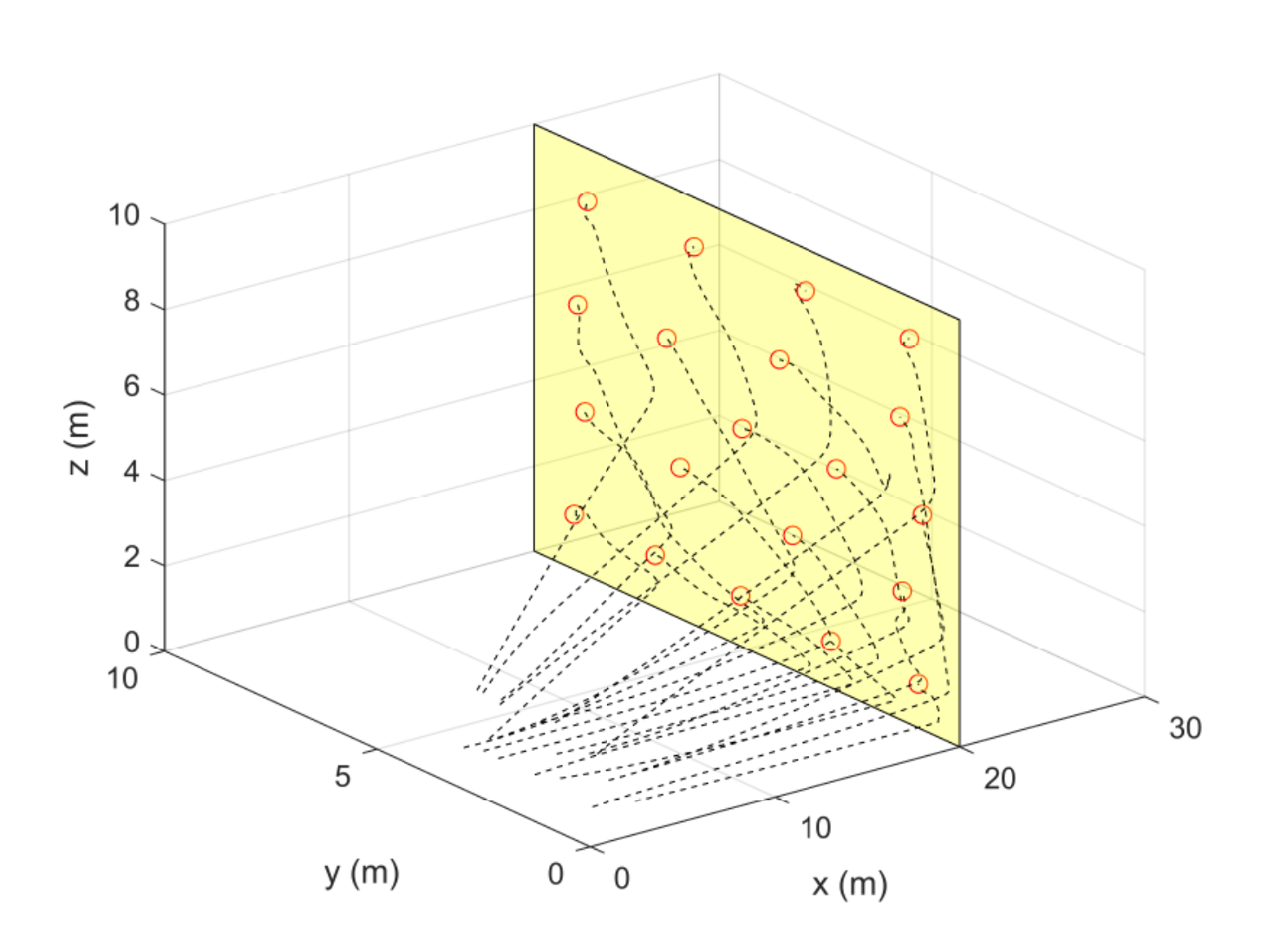} 
	\caption{Sensors trajectories for a barrier coverage case using bounded control laws \eqref{equ:ch9u_bounded} (Simulation Case 2)} \label{fig:ch9sim3}
\end{figure}

\begin{figure}[!htb]
	\centering
	\includegraphics[width=0.65\linewidth]{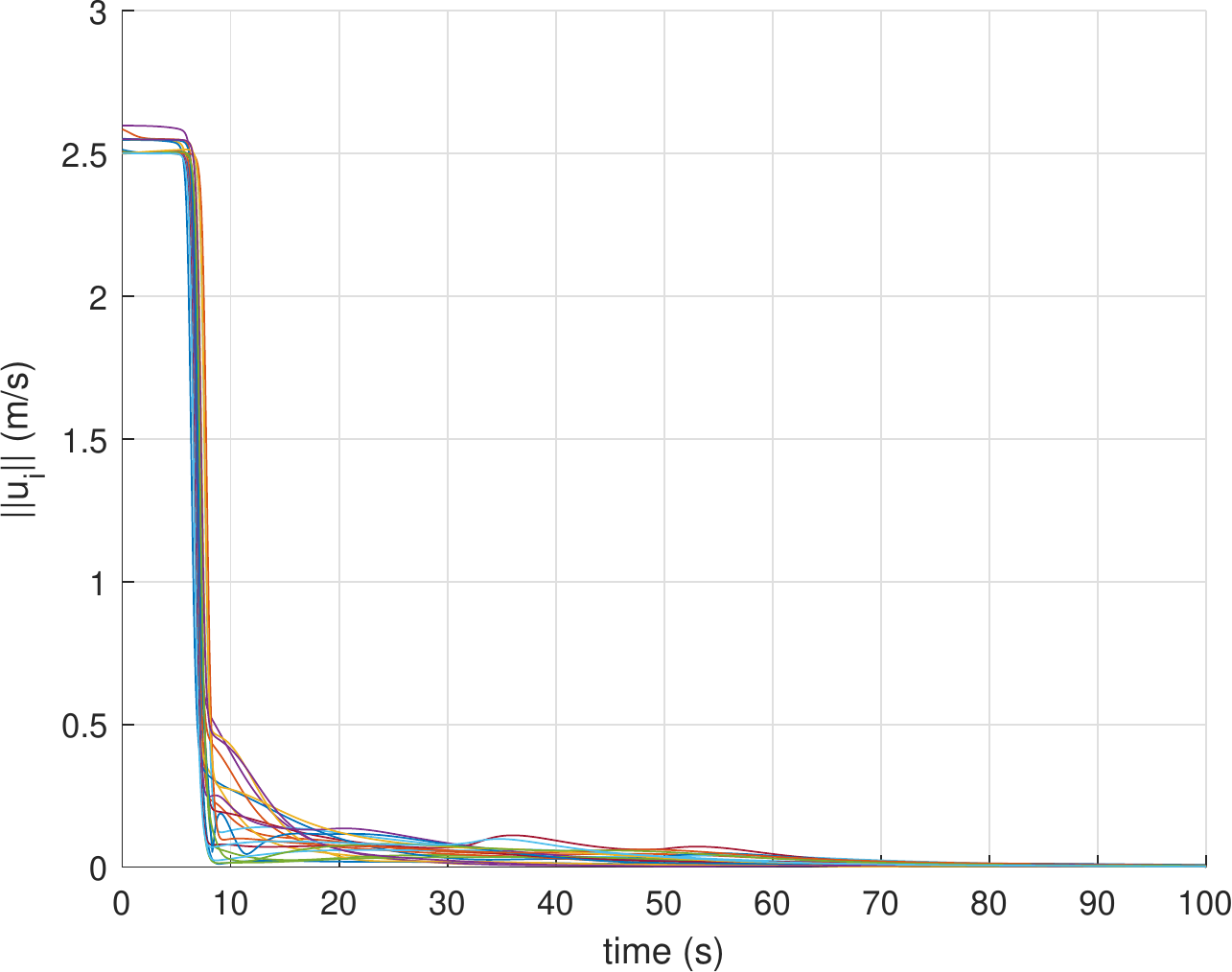} 
	\caption{Sensors control inputs norms for a barrier coverage case using bounded control laws \eqref{equ:ch9u_bounded} (Simulation Case 2)} \label{fig:ch9sim4}
\end{figure}

\begin{figure}[!htb]
	\centering
	\includegraphics[width=0.75\linewidth]{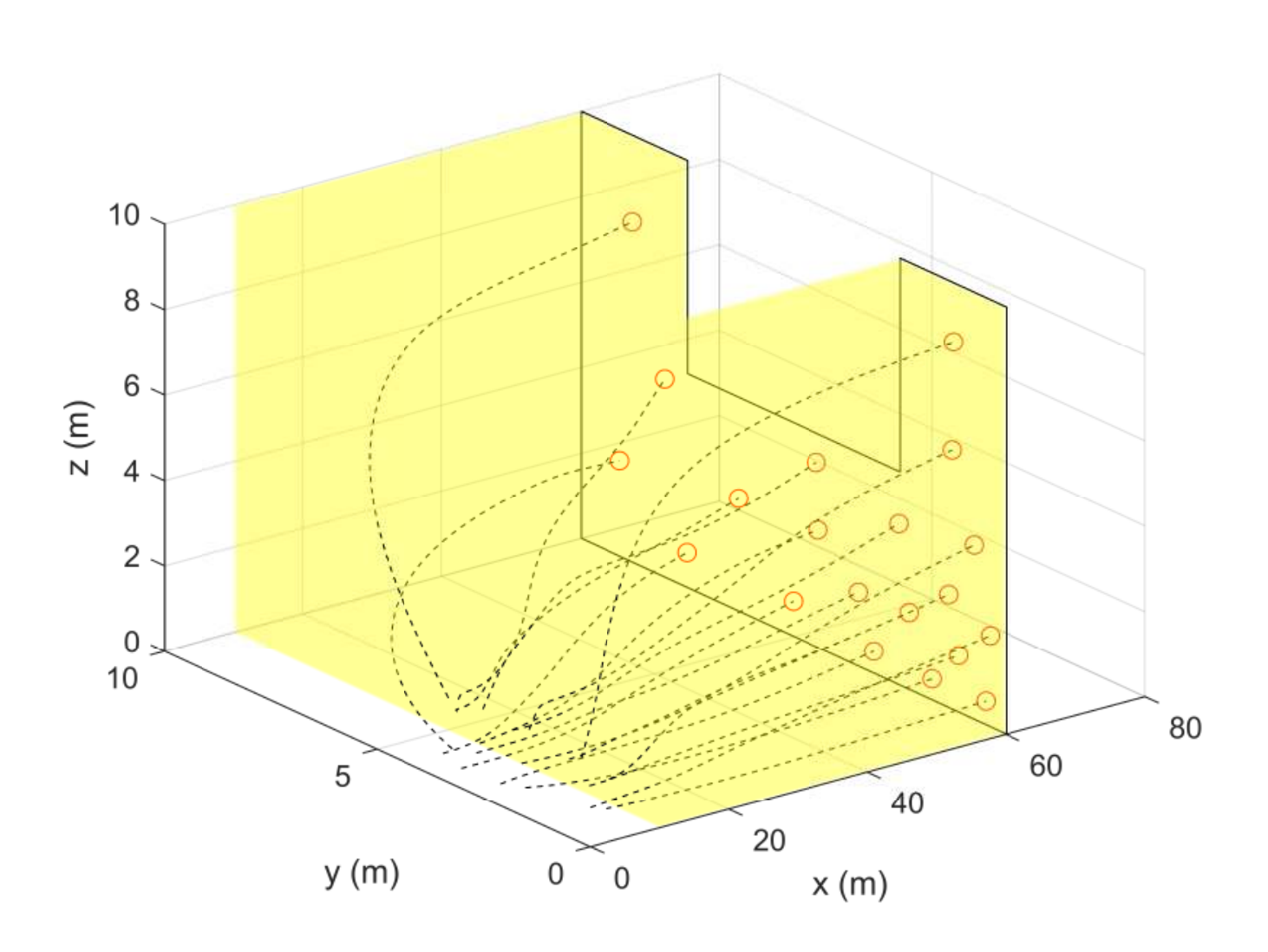} 
	\caption{Sensors trajectories for a sweeping coverage case (Simulation Case 3)} \label{fig:ch9sim5}
\end{figure}

\begin{figure}[!htb]
	\centering
	\includegraphics[width=0.65\linewidth]{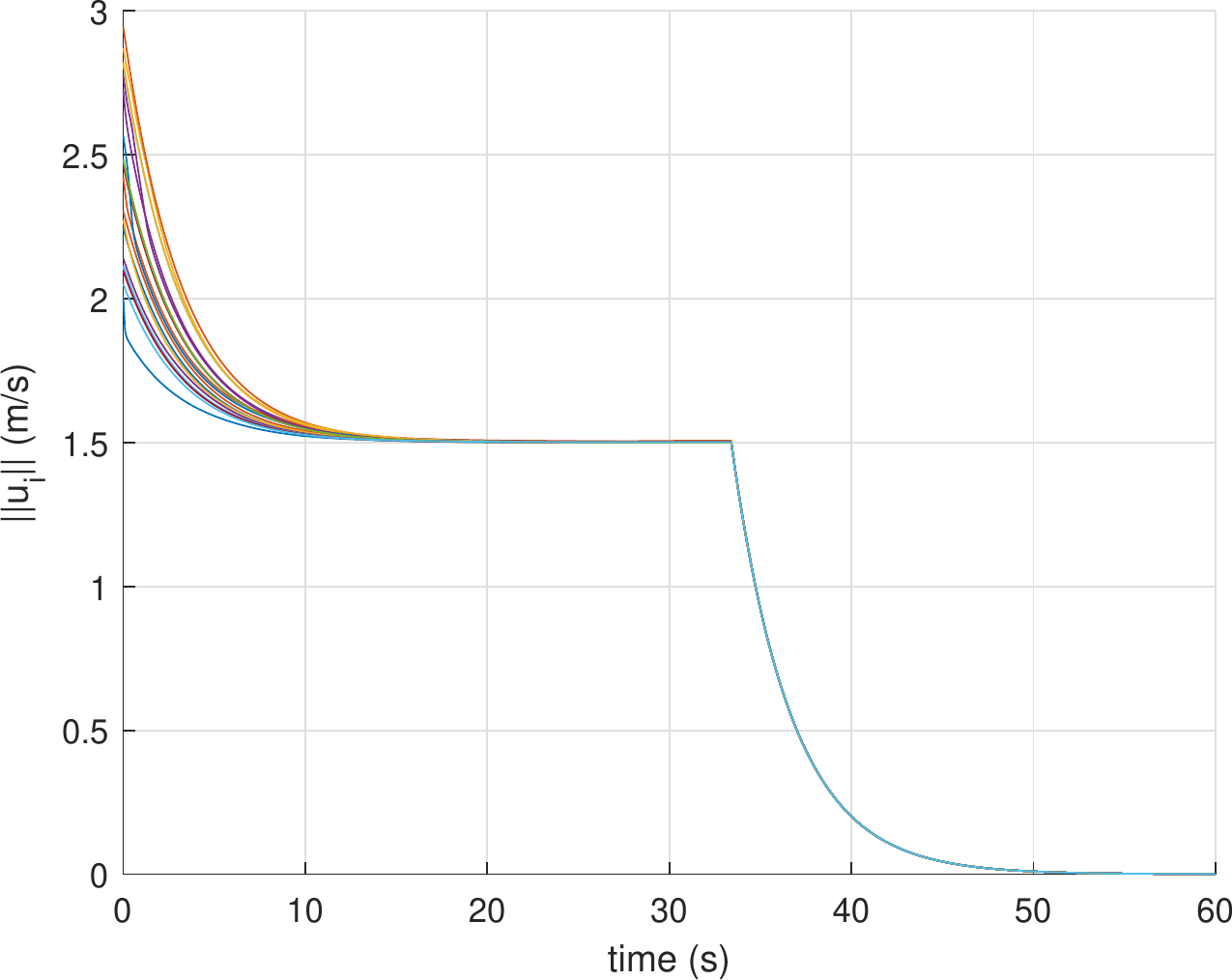} 
	\caption{Sensors control inputs norms for a sweeping coverage case using \eqref{equ:ch9u_unbounded} (Simulation Case 3)} \label{fig:ch9sim6}
\end{figure}

\begin{figure}[!htb]
	\centering
	\includegraphics[width=0.75\linewidth]{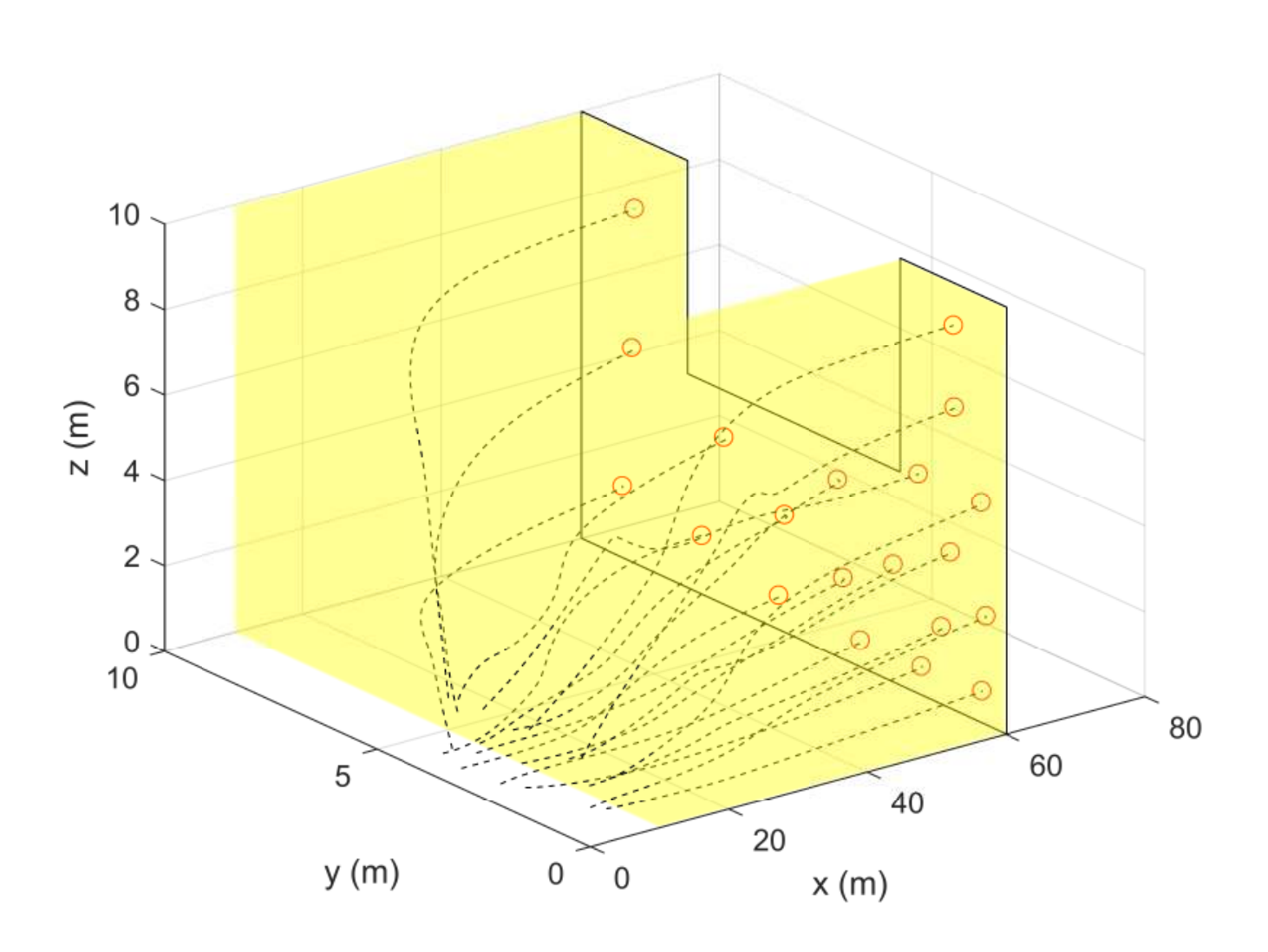} 
	\caption{Sensors trajectories for a sweeping coverage case using bounded control laws \eqref{equ:ch9u_bounded} (Simulation Case 4)} \label{fig:ch9sim7}
\end{figure}

\begin{figure}[!htb]
	\centering
	\includegraphics[width=0.65\linewidth]{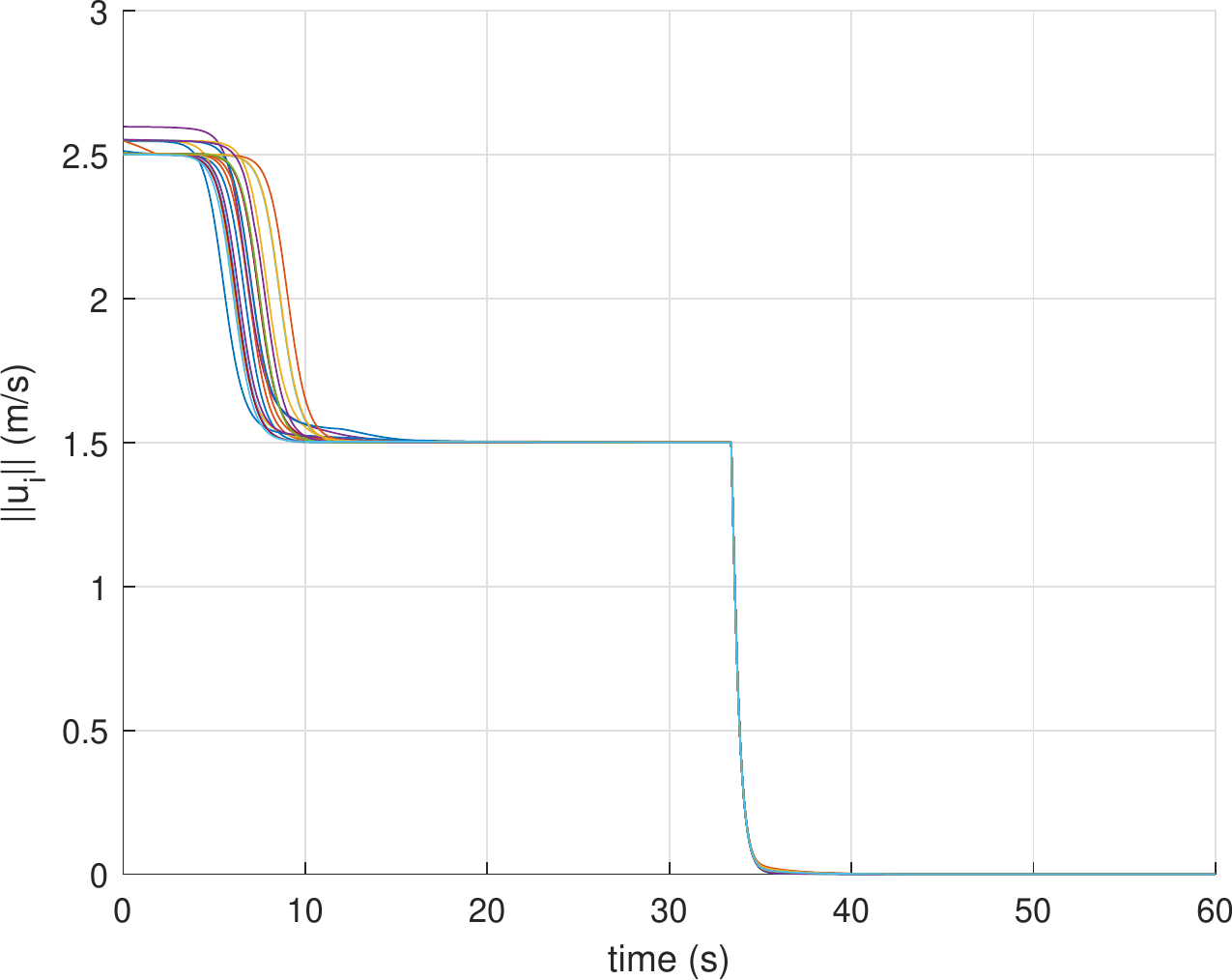} 
	\caption{Sensors control inputs norms for a sweeping coverage case using bounded control laws \eqref{equ:ch9u_bounded} (Simulation Case 4)} \label{fig:ch9sim8}
\end{figure}

\subsection{Simulation Case 5: Robustness}

Another simulation case was considered to show how the proposed method perform very well in situations where the number of active UAVs within the multi-UAV system change over time during a sweeping coverage mission.
For example, when a number of UAVs fail, the whole group should still be able to continue their coverage task as long as there is enough number of active UAVs to finish the mission.
This goes the same way when adding new vehicles to the system during the mission; however, this case was not considered here for brevity.

We considered a similar environment $\mathcal{S}$ and sweeping plane choice $\mathcal{F}(t)$ as in simulations cases 3 and 4.
Different time instants of the simulation are shown in \cref{fig:ch9:simRobustness1}.
The Voronoi regions generated by the UAVs over $\mathcal{F}(t)$ with their centroids are clearly highlighted to show how they change over time as the vehicles move.
Once centroidal Voronoi configurations are reached, the vehicles move acording to the plane's movement as was discussed earlier.
However, some vehicles fail and become inactive at certain time instants as in \cref{fig:ch9:simRobustness1}~(e,g,i,k).
Whenever this occurs, the remaining active vehicles quickly adjust their distribution over $\mathcal{F}(t)$ while still moving in accordance with $\dot{\mathcal{F}}(t)$.
For example, at $t=21s$, one vehicle fail as shown in \cref{fig:ch9:simRobustness1}~(e) which directly indicates a change of the Voronoi partition of $\mathcal{F}(t)$.
This results in a change of the centroidal Voronoi configurations which causes the vehicles to quickly adapt to the situation in a robust way by moving to the new centroidal Voronoi configurations.
It can be noticed that once a vehicle fail, only the vehicles in the neighborhood of that vehicle will be affected (i.e. their Voronoi regions will be extended).

Initially, the multi-UAV system had a size of $n=9$.
The complete collision-free trajectories of all vehicles are shown in \cref{fig:ch9:simRobustness2} where 4 vehicles have failed during the mission (inactive UAVs), and the coverage task was completed efficiently by the remaining 5 vehicles (active UAVs).
This clearly shows how robust and scalable our method can be.
It is also worth mentioning that changes to $\mathcal{F}(t)$ can be applied in real-time if its size becomes larger/smaller than what the remaining vehicles can cover based on their combined sensing FOV.
Such a decision can be autonomously made by the vehicles and shared among the connected network.
The next simulation case shows how the proposed control laws work when such changes to $\mathcal{F}(t)$ are applied which is really important when considering obstacle avoidance.

\begin{figure}[!htb]
	\centering
	\begin{adjustbox}{minipage=\linewidth,scale=1.0}
		\begin{subfigure}[t]{0.32\textwidth}
			\centering
			\includegraphics[clip, width=\linewidth]{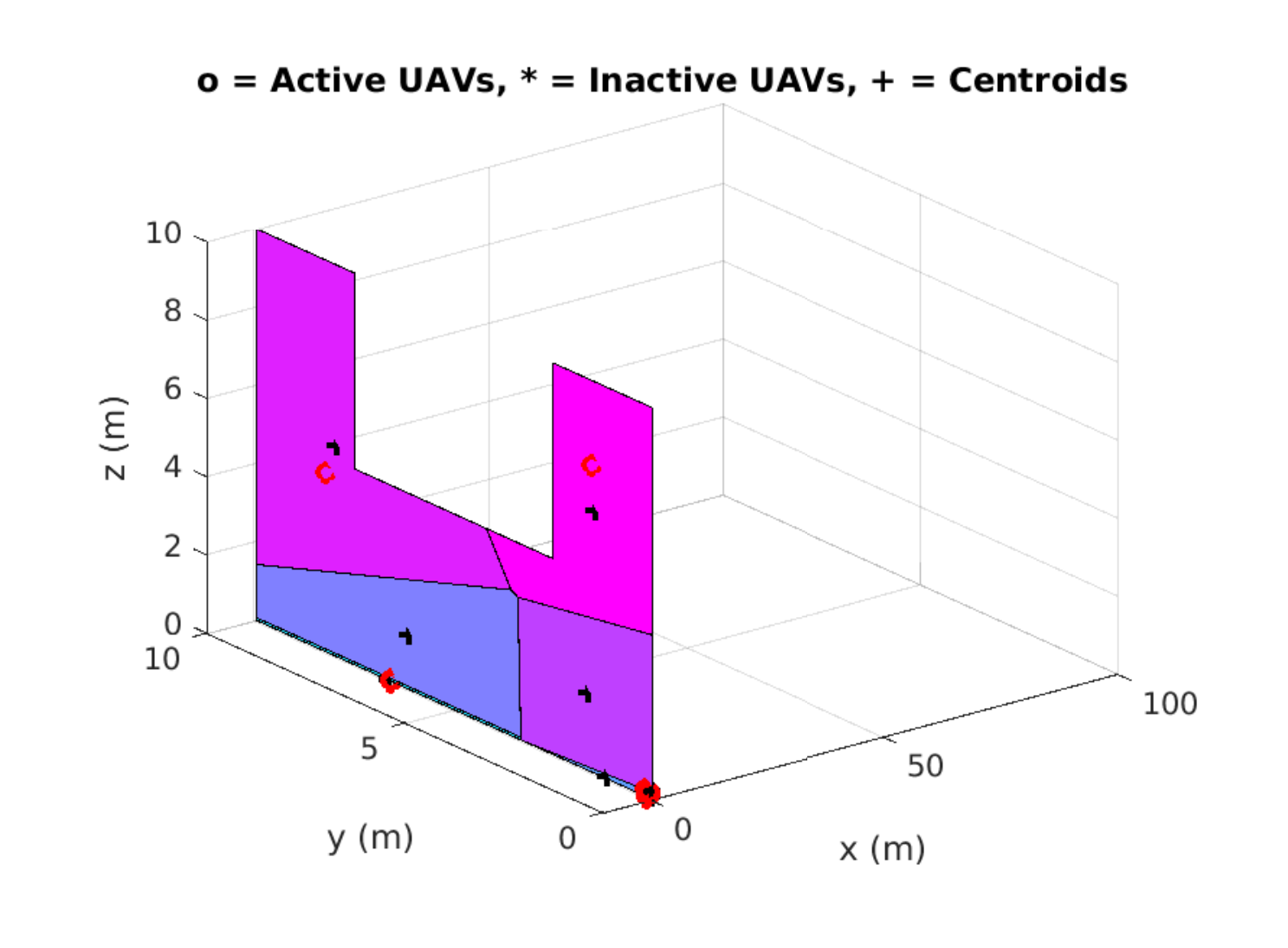} 
			\caption{t=3s}
		\end{subfigure}
		\hfill
		\begin{subfigure}[t]{0.32\textwidth}
			\centering
			\includegraphics[trim={0 0 0 0}, clip, width=\linewidth]{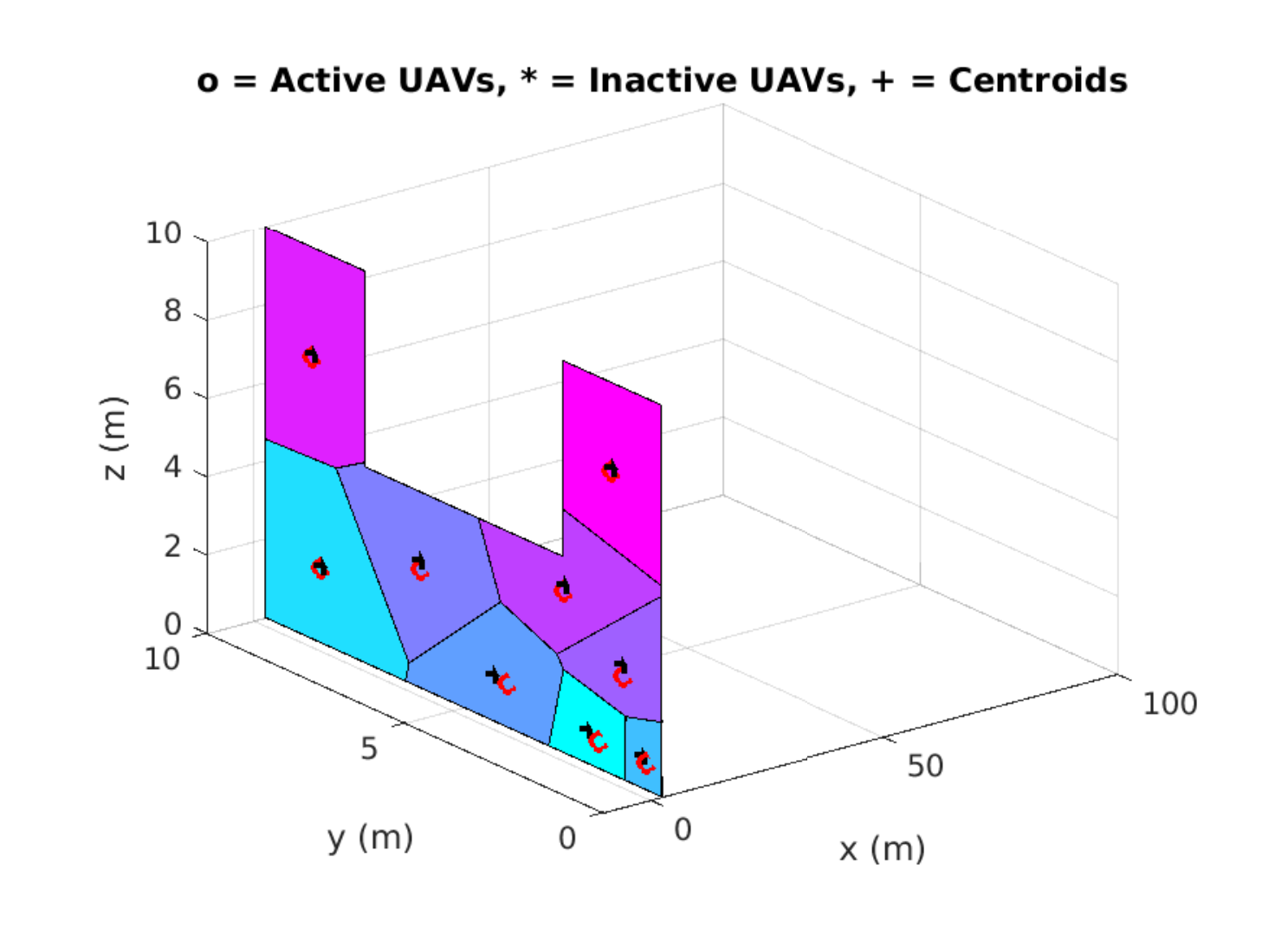} 
			\caption{t=7s}
		\end{subfigure}
		\hfill
		\begin{subfigure}[t]{0.32\textwidth}
			\centering
			\includegraphics[clip, width=\linewidth]{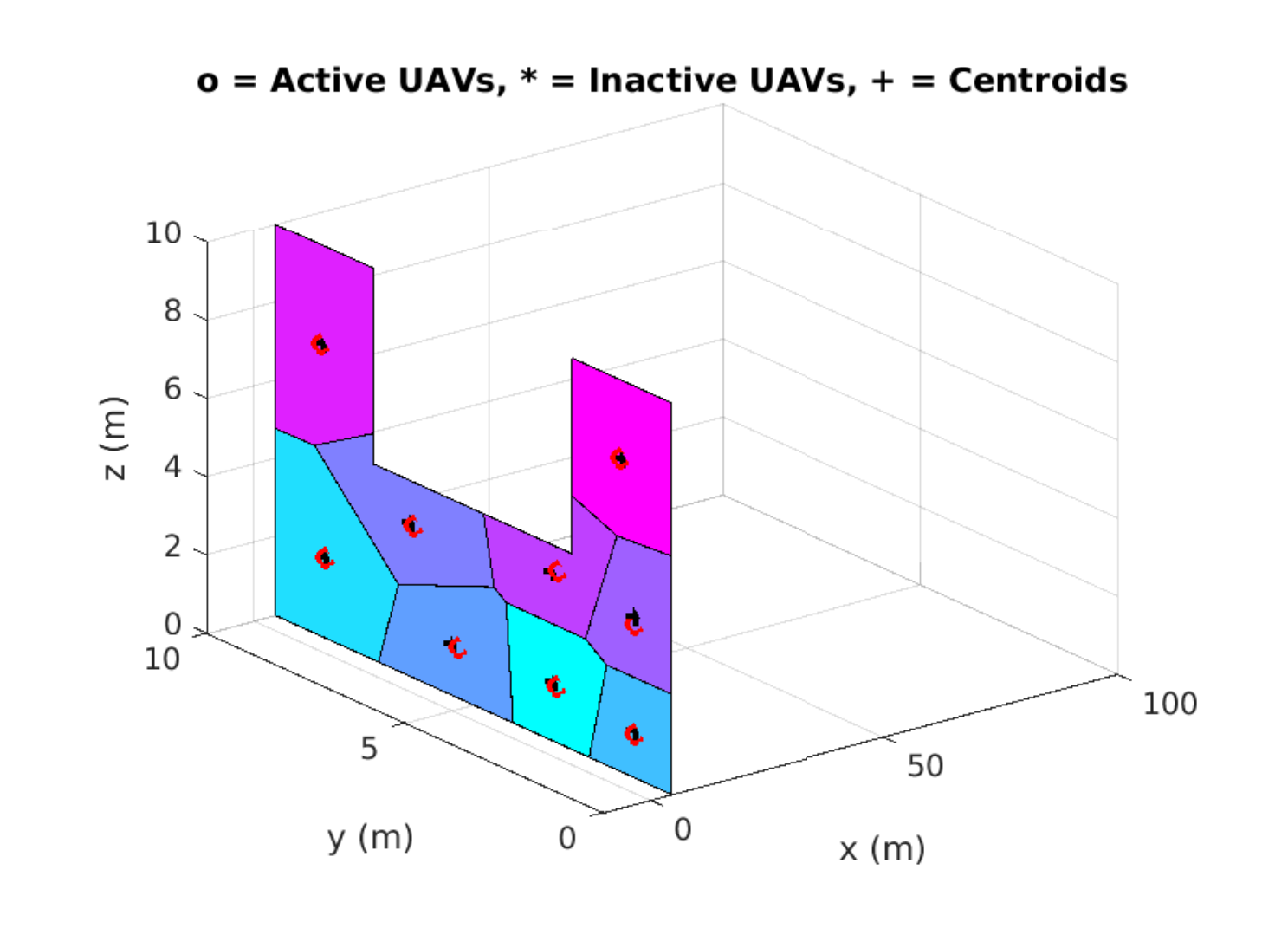} 
			\caption{t=11s}
		\end{subfigure}

		\begin{subfigure}[t]{0.32\textwidth}
			\centering
			\includegraphics[trim={0 0 0 1.25cm}, clip, width=\linewidth]{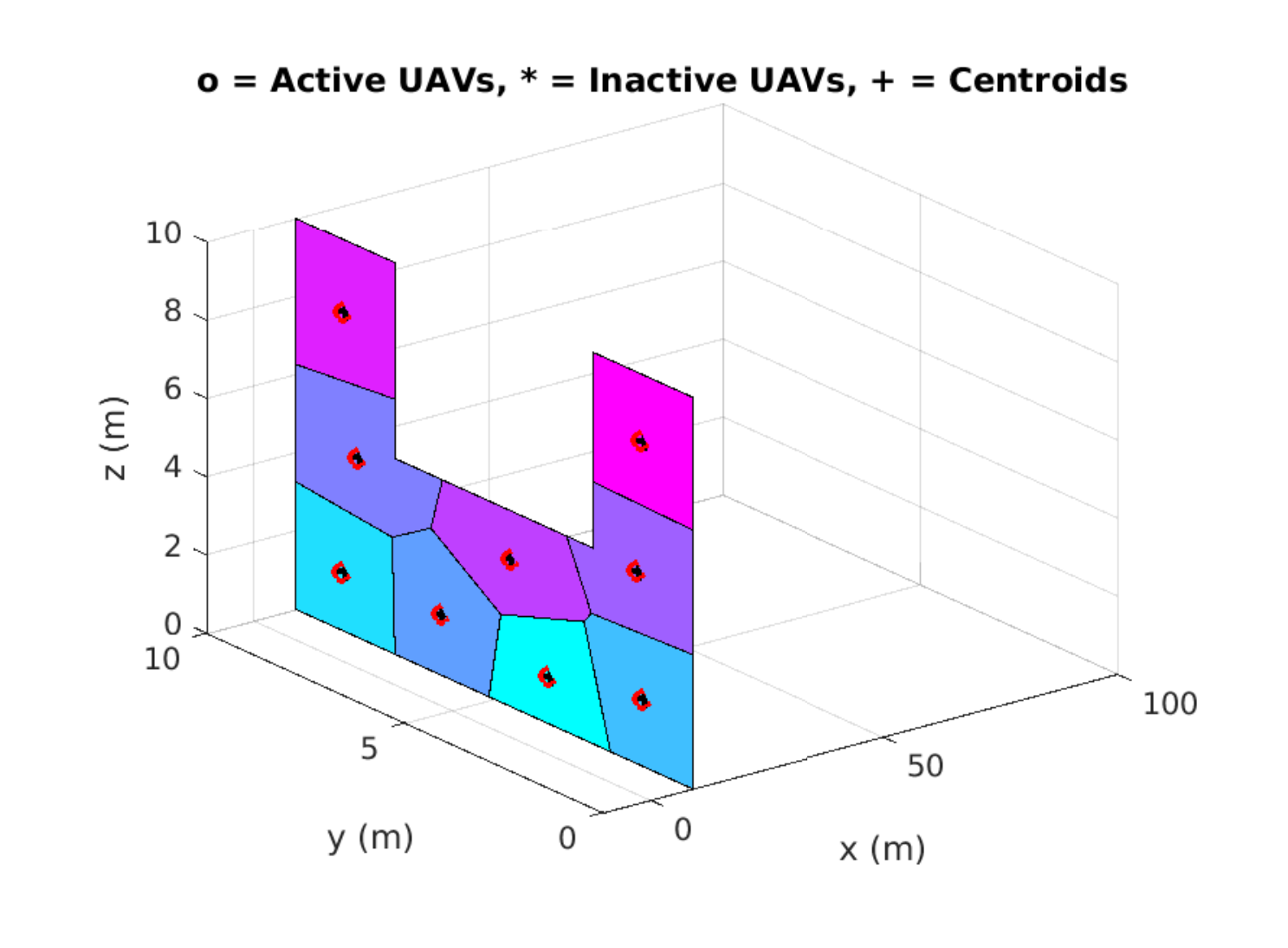} 
			\caption{t=20s}
		\end{subfigure}
		\hfill
		\begin{subfigure}[t]{0.32\textwidth}
			\centering
			\includegraphics[trim={0 0 0 1.25cm}, clip, width=\linewidth]{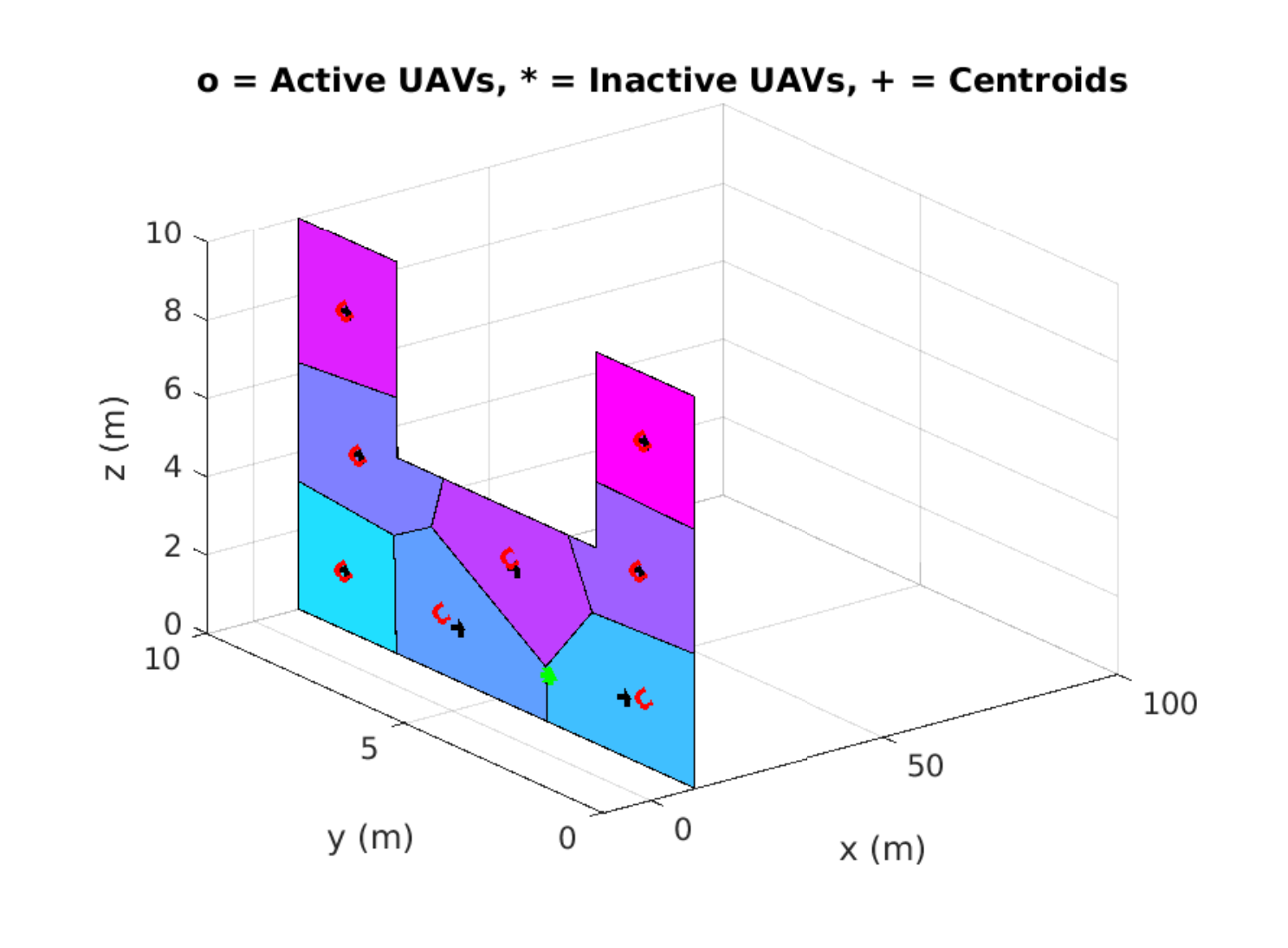} 
			\caption{t=21s}
		\end{subfigure}
		\hfill
		\begin{subfigure}[t]{0.32\textwidth}
			\centering
			\includegraphics[trim={0 0 0 1.25cm}, clip, width=\linewidth]{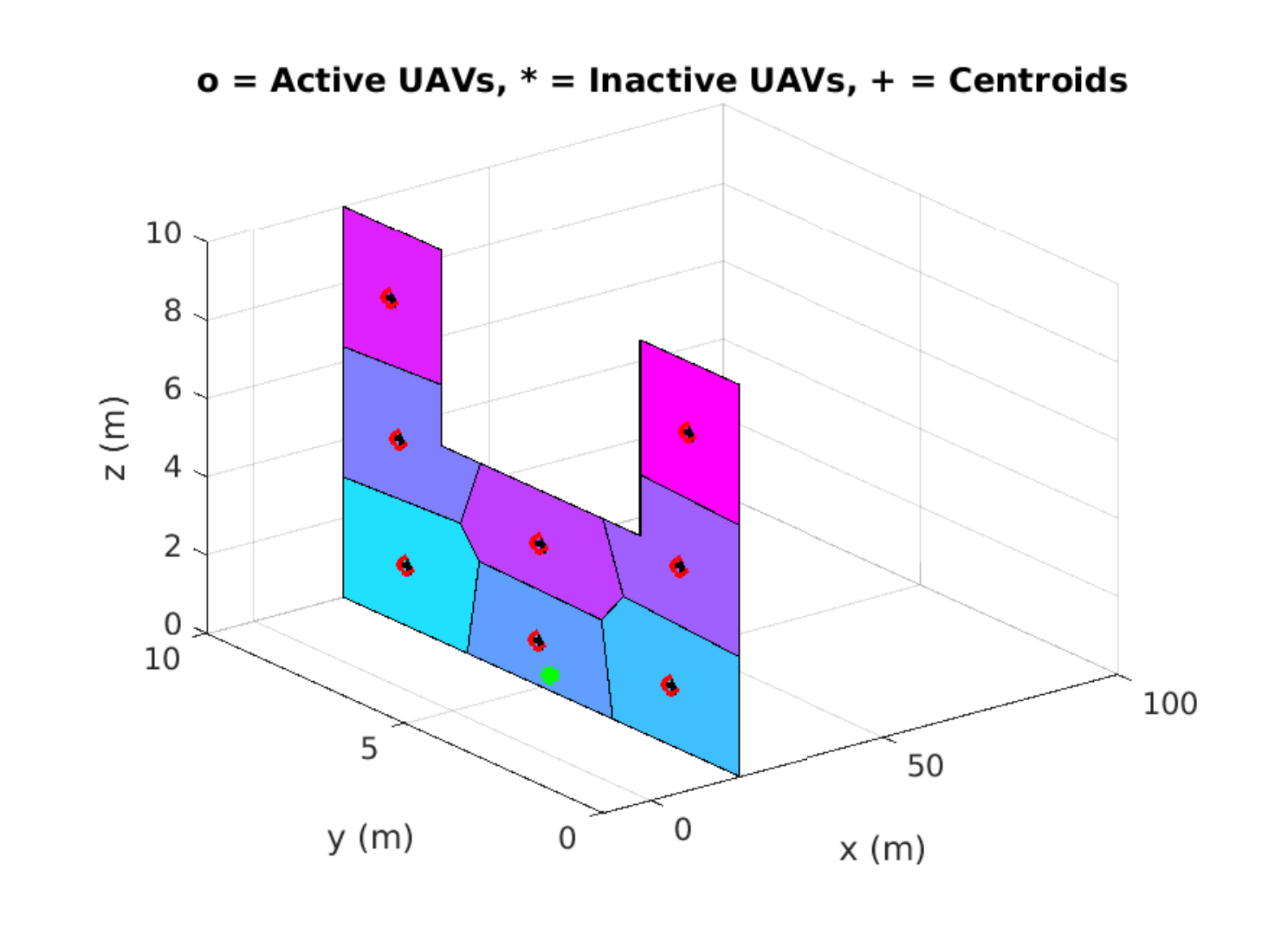} 
			\caption{t=40s}
		\end{subfigure}
	
		\begin{subfigure}[t]{0.32\textwidth}
			\centering
			\includegraphics[trim={0 0 0 1.25cm}, clip, width=\linewidth]{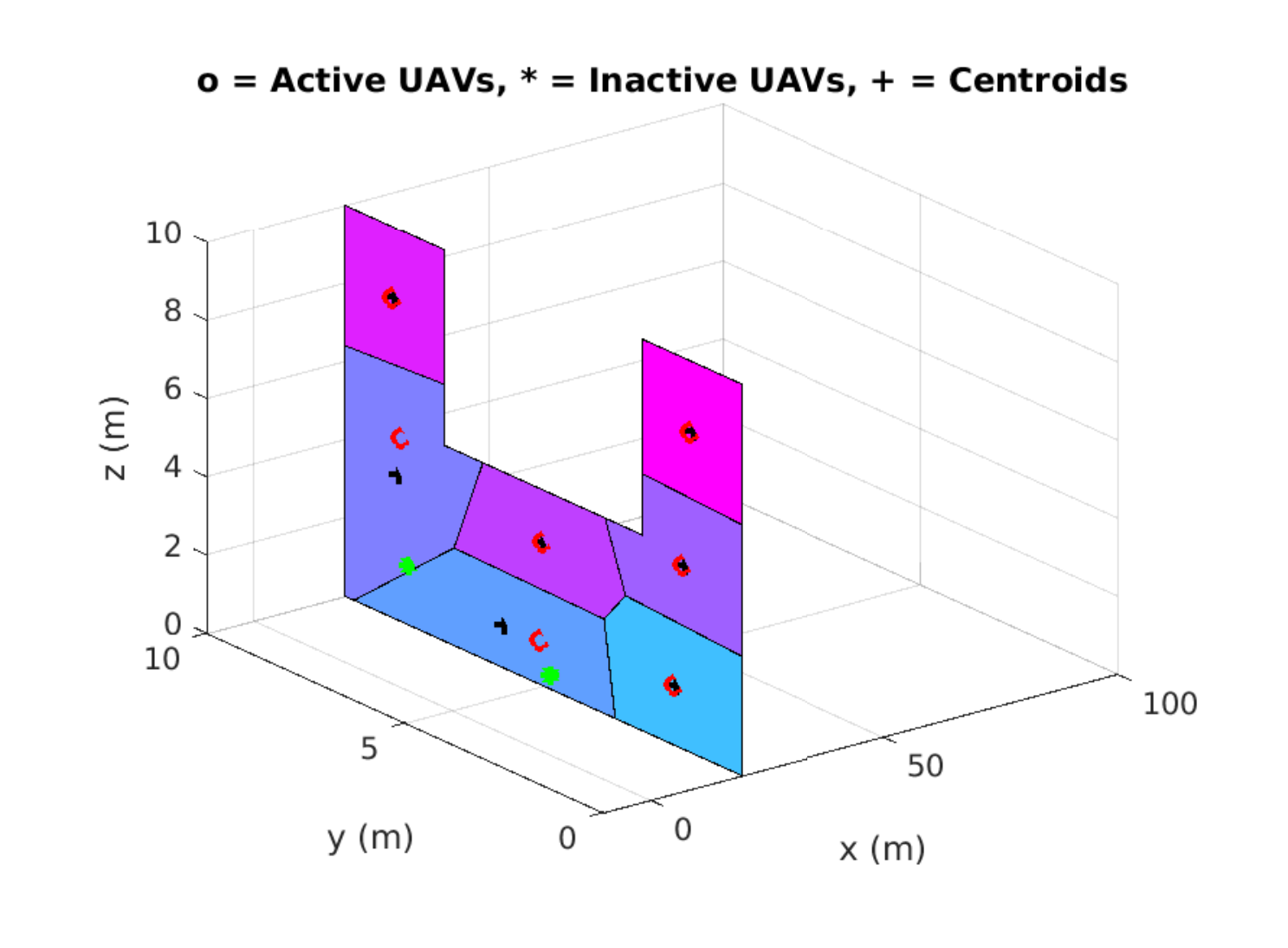} 
			\caption{t=41s}
		\end{subfigure}
		\hfill
		\begin{subfigure}[t]{0.32\textwidth}
			\centering
			\includegraphics[trim={0 0 0 1.25cm}, clip, width=\linewidth]{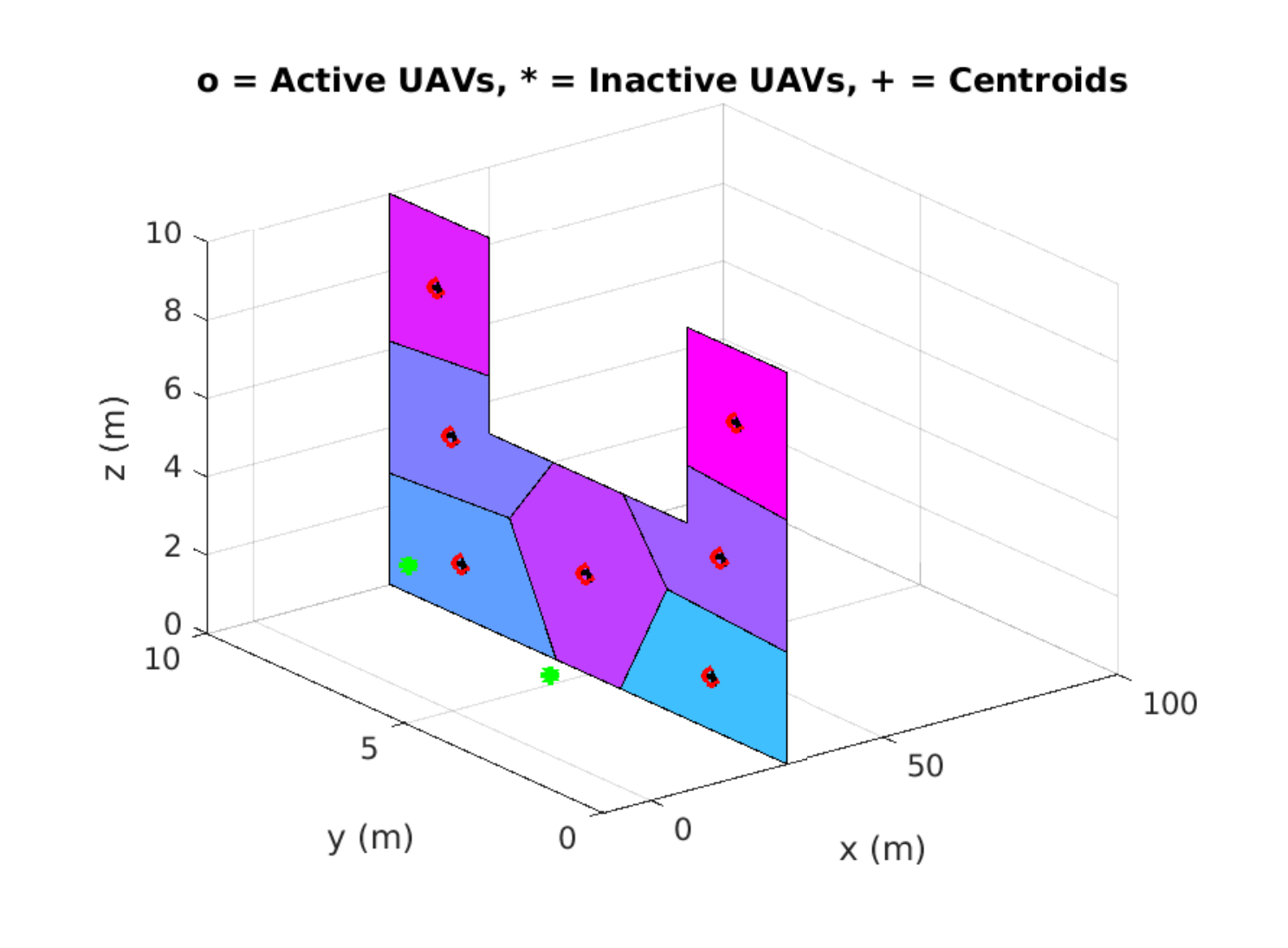} 
			\caption{t=60s}
		\end{subfigure}
		\hfill
		\begin{subfigure}[t]{0.32\textwidth}
			\centering
			\includegraphics[trim={0 0 0 1.25cm}, clip, width=\linewidth]{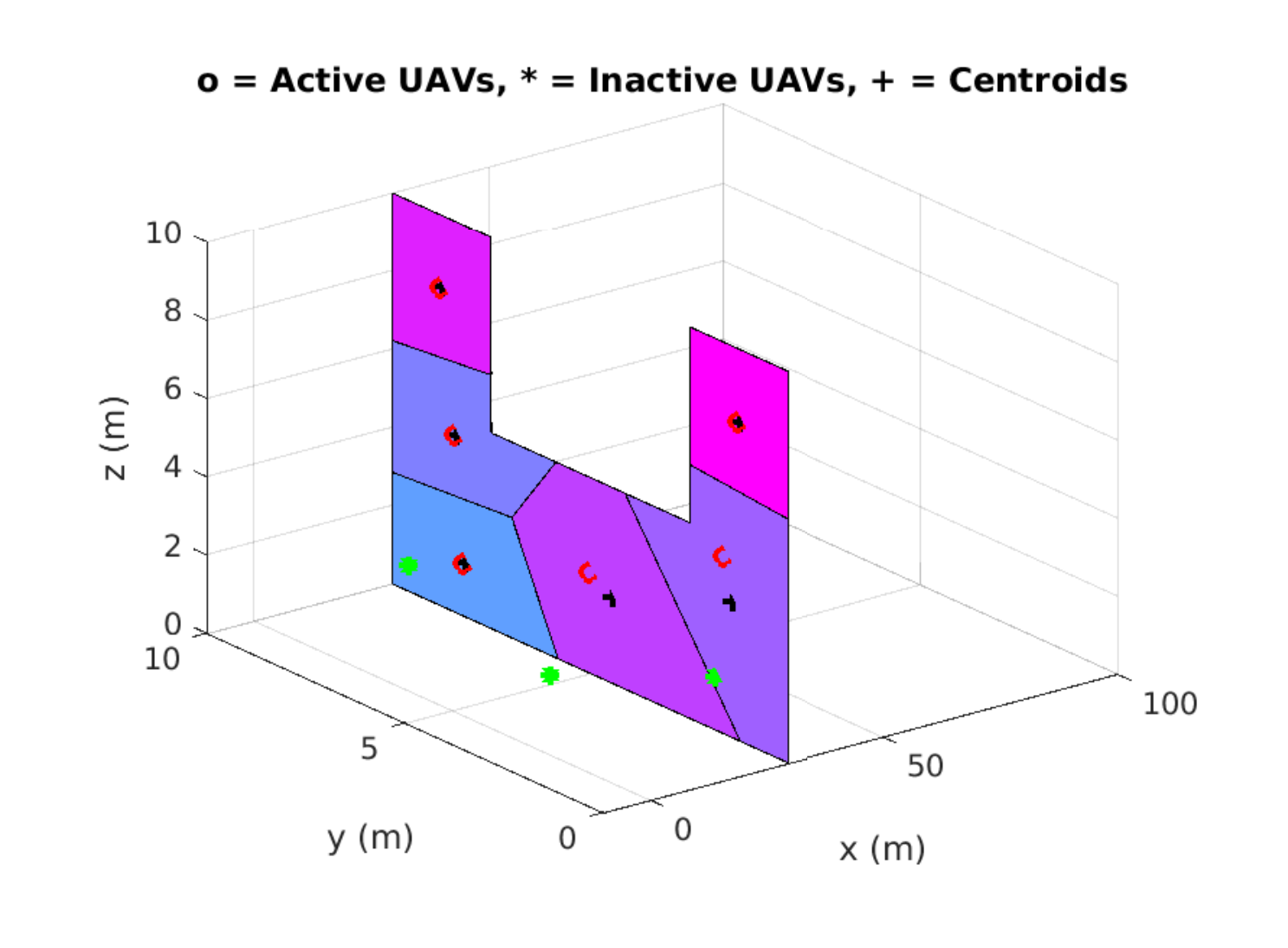} 
			\caption{t=61s}
		\end{subfigure}
	
		\begin{subfigure}[t]{0.32\textwidth}
			\centering
			\includegraphics[trim={0 0 0 1.25cm}, clip, width=\linewidth]{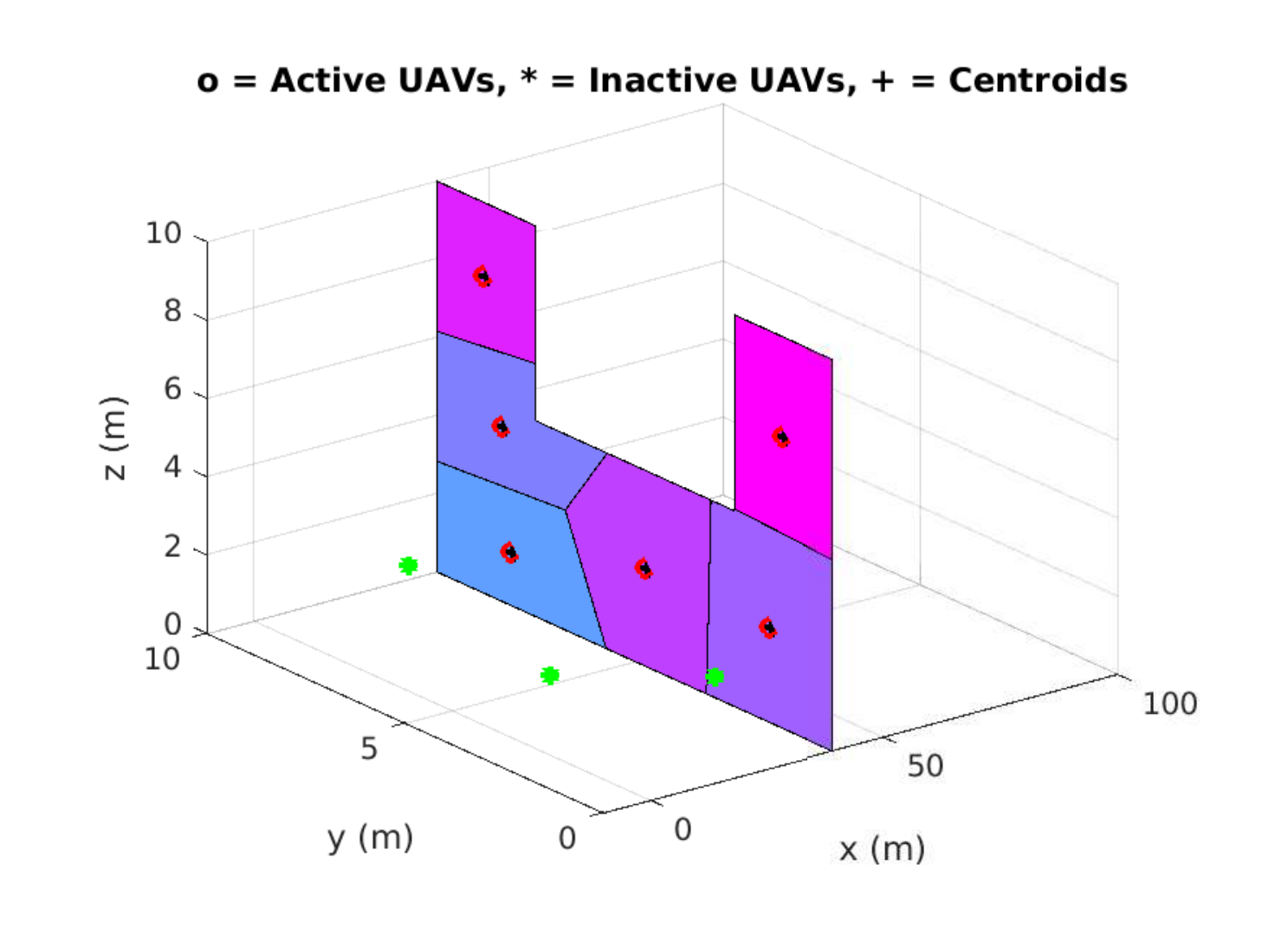} 
			\caption{t=80s}
		\end{subfigure}
		\hfill
		\begin{subfigure}[t]{0.32\textwidth}
			\centering
			\includegraphics[trim={0 0 0 1.25cm}, clip, width=\linewidth]{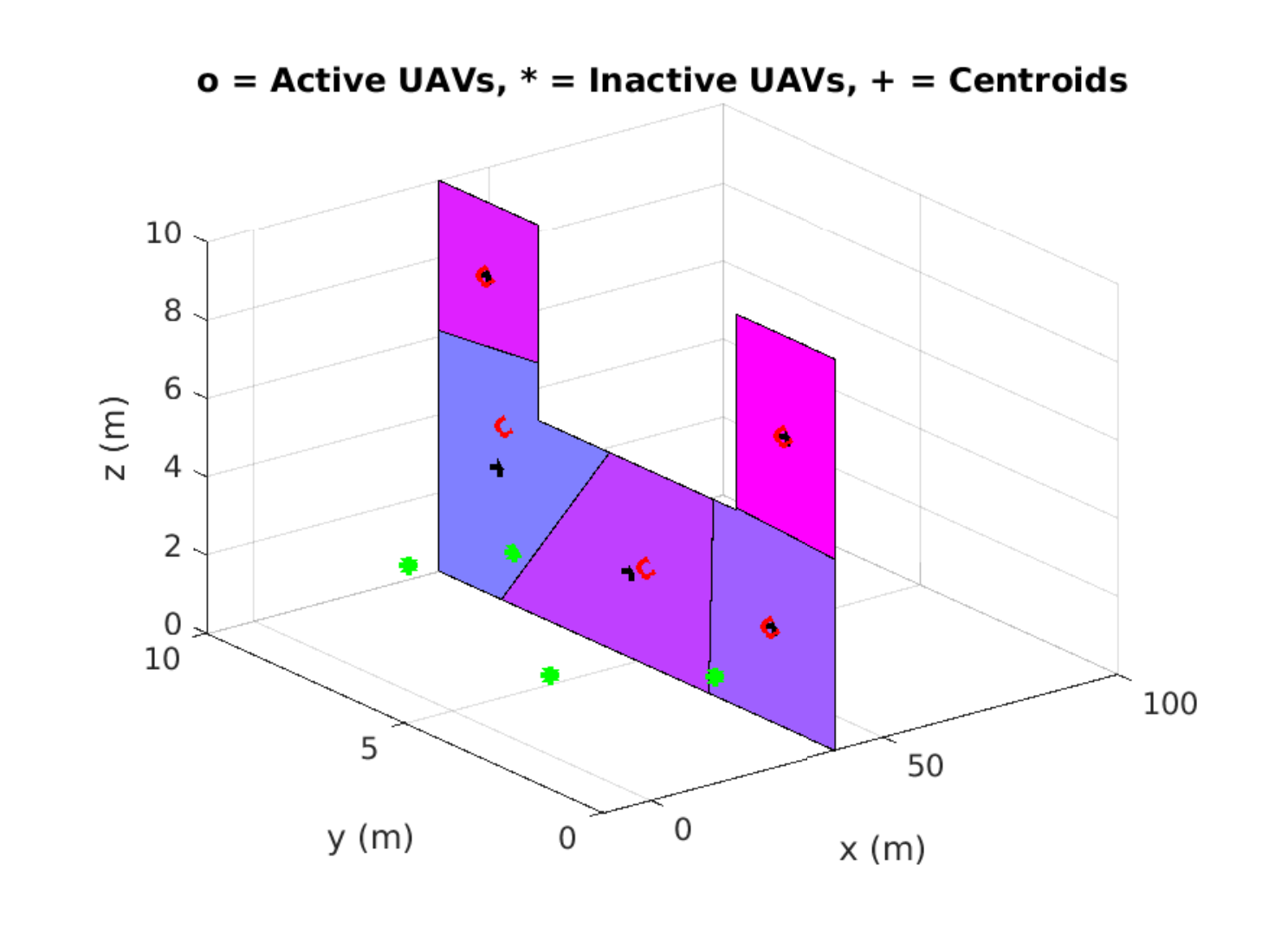} 
			\caption{t=81s}
		\end{subfigure}
		\hfill
		\begin{subfigure}[t]{0.32\textwidth}
			\centering
			\includegraphics[trim={0 0 0 1.25cm}, clip, width=\linewidth]{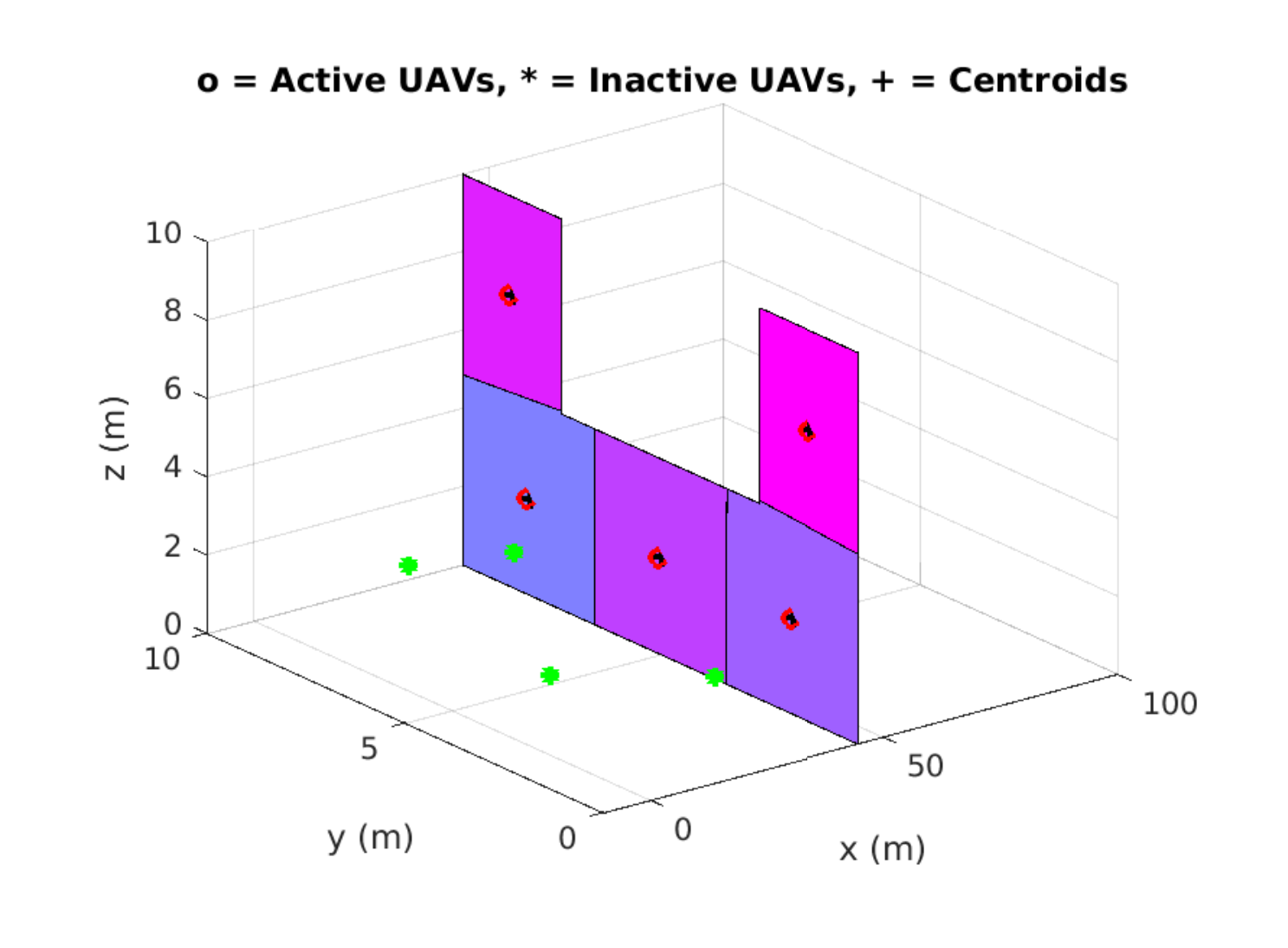} 
			\caption{t=91s}
		\end{subfigure}
		\caption{UAV locations at different time instants showing the robustness of the proposed approach against UAVs failures when they get removed from the group (Simulation Case 5)}
		\label{fig:ch9:simRobustness1}
	\end{adjustbox}
\end{figure}

\begin{figure}[!htb]
	\centering
	\includegraphics[width=0.7\linewidth]{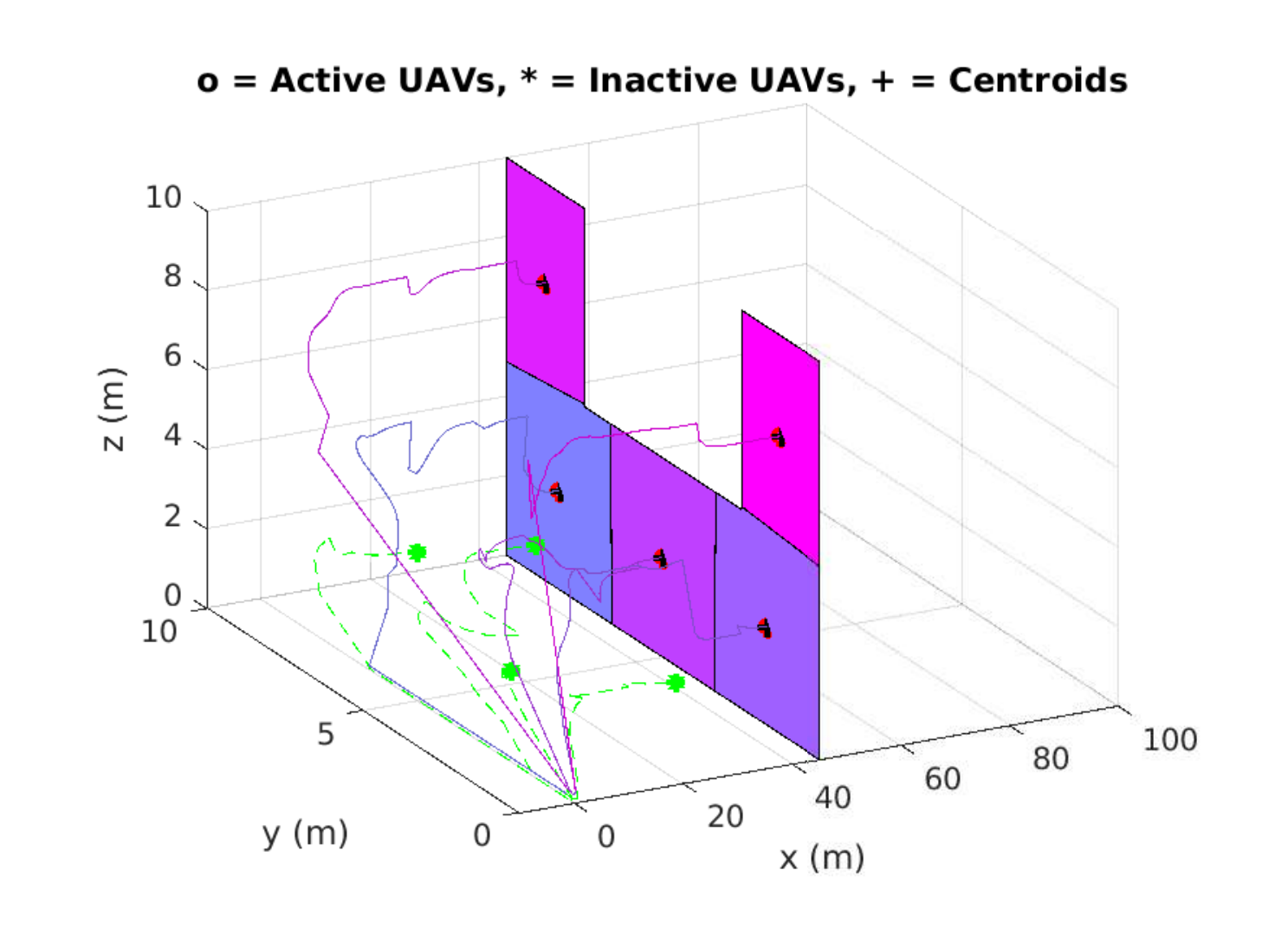} 
	\caption{Complete trajectories of the Multi-UAV system showing both active and inactive UAVs (Simulation Case 5)} \label{fig:ch9:simRobustness2}
\end{figure}

\subsection{Simulation Case 6: Obstacle Avoidance}

An additional simulation case was performed to show one way of incorporating obstacle avoidance capabilities within the proposed method.
The simulation results for this case are given in \cref{fig:ch9:simObsAvoidance1,fig:ch9:simObsAvoidance2,fig:ch9:simObsAvoidance3}.
In this case, 6 vehicles were initially deployed, and a rectangular region was considered to be the sweeping plane $\mathcal{F}(t)$.
This particular choice of $\mathcal{F}(t)$ can be used in scenarios where the vehicles are equipped with some downward-facing sensors (ex. cameras), and their sensing FOV are simply a footprint on the ground.
Even though using 6 vehicles may be redundant in this case as there will be overlapping in the vehicles' sensing FOV, the aim here is only to show a particular way for the multi-UAV system to avoid obstacles.

The adopted approach is simply to manipulate the shape, size, orientation and/or velocity of the sweeping plane $\mathcal{F}(t)$ to safely avoid detected obstacles.
Note that in this case, only vehicles within a sensing range from obstacles can determine such required action which then sent to the remaining vehicles over the connected network.
Obstacle avoidance is ensured in this case because vehicles are guaranteed to maintain their motion within $\mathcal{F}(t)$ once its reached using the proposed control laws (i.e. $\bm{p}_i(t)\in \mathcal{F}(t),\ \forall t>t_{*}$ where $t_*$ is the time at which the vehicles reach $\mathcal{F}(t)$).

It can be seen from \cref{fig:ch9:simObsAvoidance1}(a) and \cref{fig:ch9:simObsAvoidance1}(b) that the vehicles are approaching $\mathcal{F}(t)$ since they were initially deployed away from it.
Simultaneously, the vehicles move along $\mathcal{F}(t)$ to reach centroidal Voronoi configurations.
At $t = 15s$, two obstacles are detected along the way of the sweeping plane's movement by two nearby vehicles.
The vehicles decide to dynamically change the size of $\mathcal{F}(t)$ to avoid both obstacles as shown in \cref{fig:ch9:simObsAvoidance1}(g)-(j).
A different decision is also made by the multi-UAV system when detecting another obstacle at $t=49.5s$ (\cref{fig:ch9:simObsAvoidance1}(k)).
At this time, the sweeping plane is tilted in the $z$-direction as can be seen from \cref{fig:ch9:simObsAvoidance1}(l)-(n).

Overall, this approach has good potential for motion coordination of multi-vehicle systems especially when considering obstacle avoidance compared to artificial potential based approaches where sometimes there will be conflicts between achieving obstacle avoidance and avoiding collision with other vehicles.

\begin{figure}[!htb]
	\centering
	\begin{adjustbox}{minipage=\linewidth,scale=1.0}
		\begin{subfigure}[t]{0.32\textwidth}
			\centering
			\includegraphics[clip, width=\linewidth]{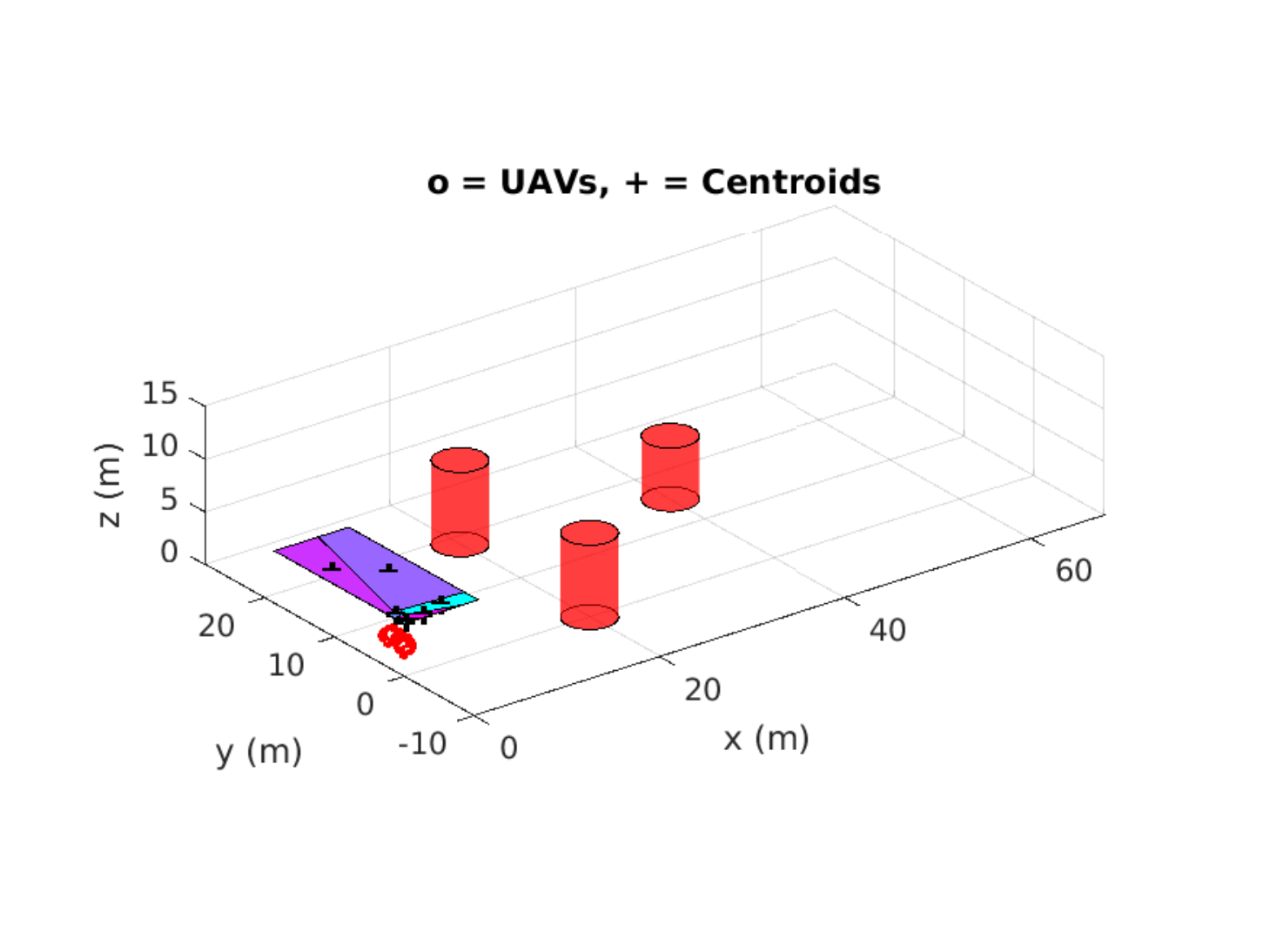} 
			\caption{t=1s}
		\end{subfigure}
		\hfill
		\begin{subfigure}[t]{0.32\textwidth}
			\centering
			\includegraphics[trim={0 0 0 0}, clip, width=\linewidth]{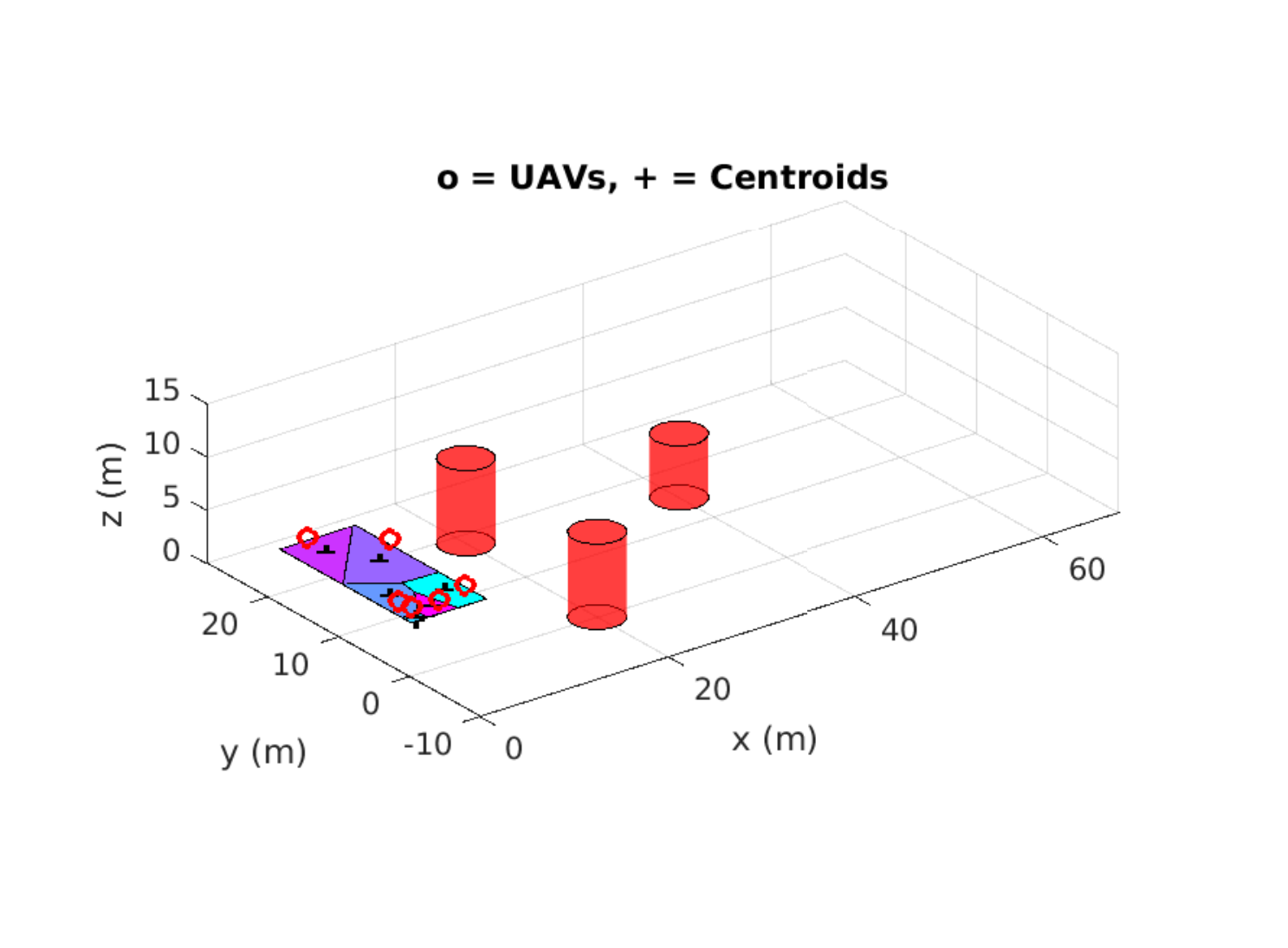} 
			\caption{t=1.5s}
		\end{subfigure}
		\hfill
		\begin{subfigure}[t]{0.32\textwidth}
			\centering
			\includegraphics[clip, width=\linewidth]{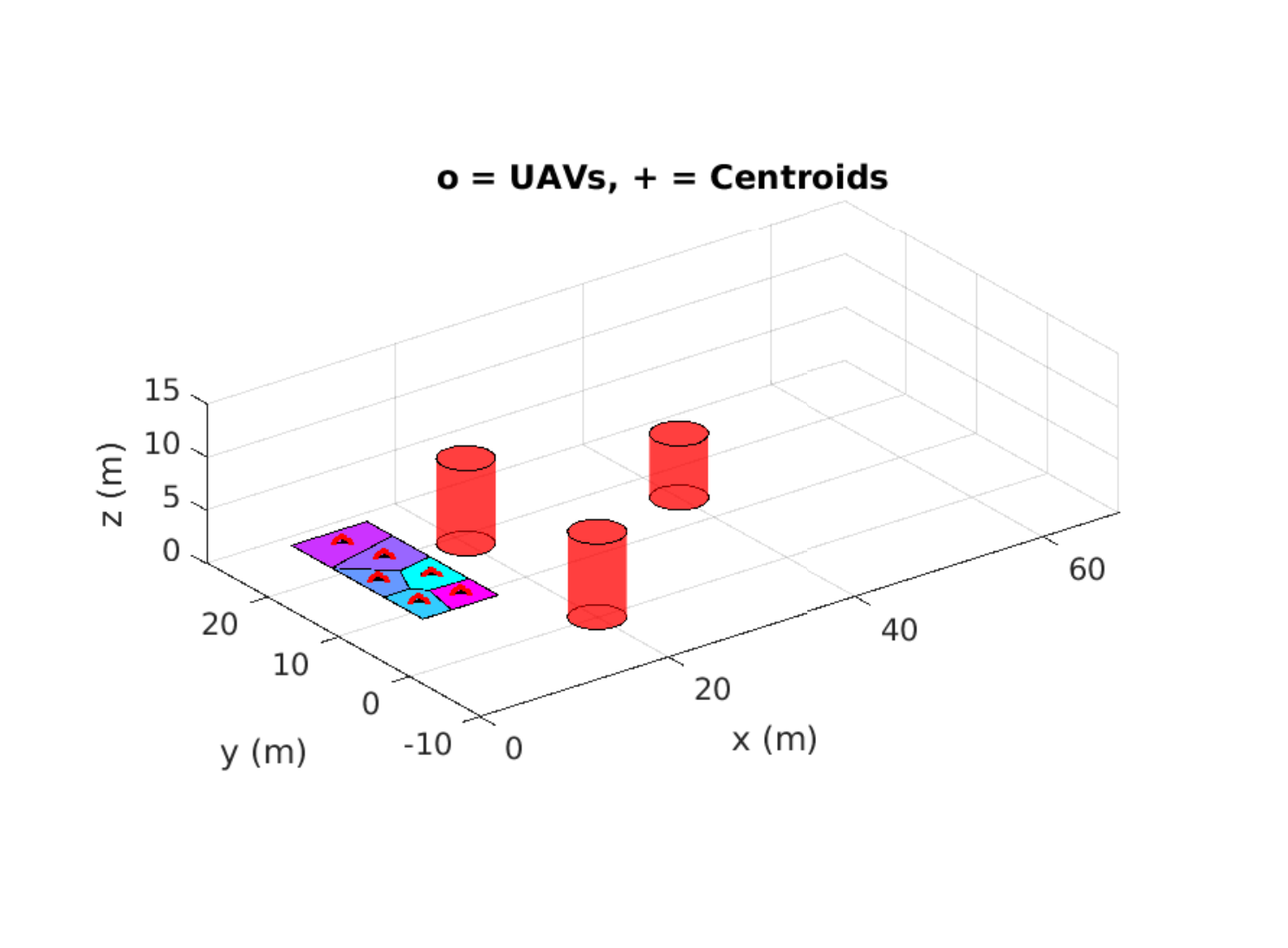} 
			\caption{t=4s}
		\end{subfigure}
		
		\begin{subfigure}[t]{0.32\textwidth}
			\centering
			\includegraphics[trim={0 0 0 2.5cm}, clip, width=\linewidth]{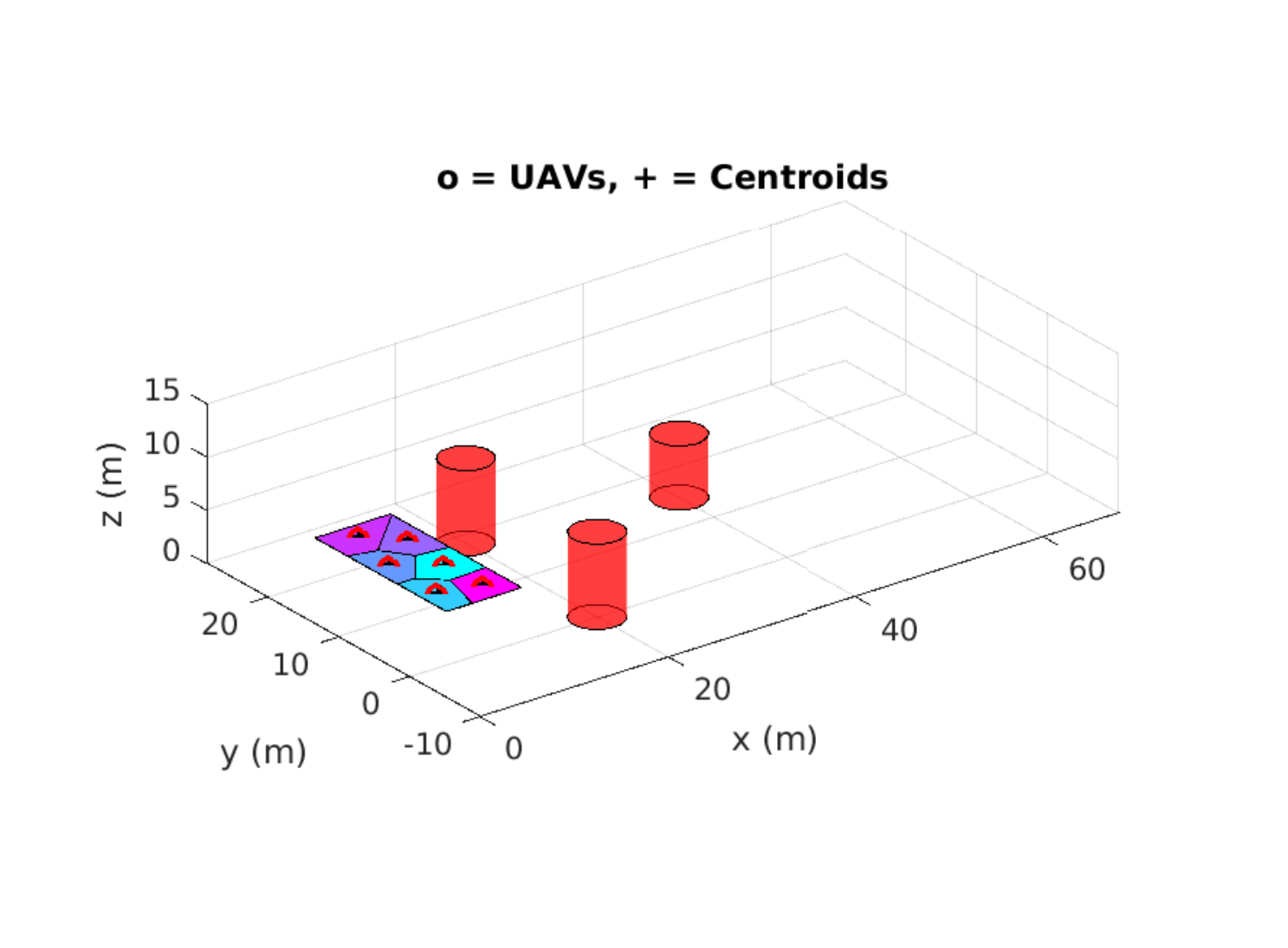} 
			\caption{t=9s}
		\end{subfigure}
		\hfill
		\begin{subfigure}[t]{0.32\textwidth}
			\centering
			\includegraphics[trim={0 0 0 2.5cm}, clip, width=\linewidth]{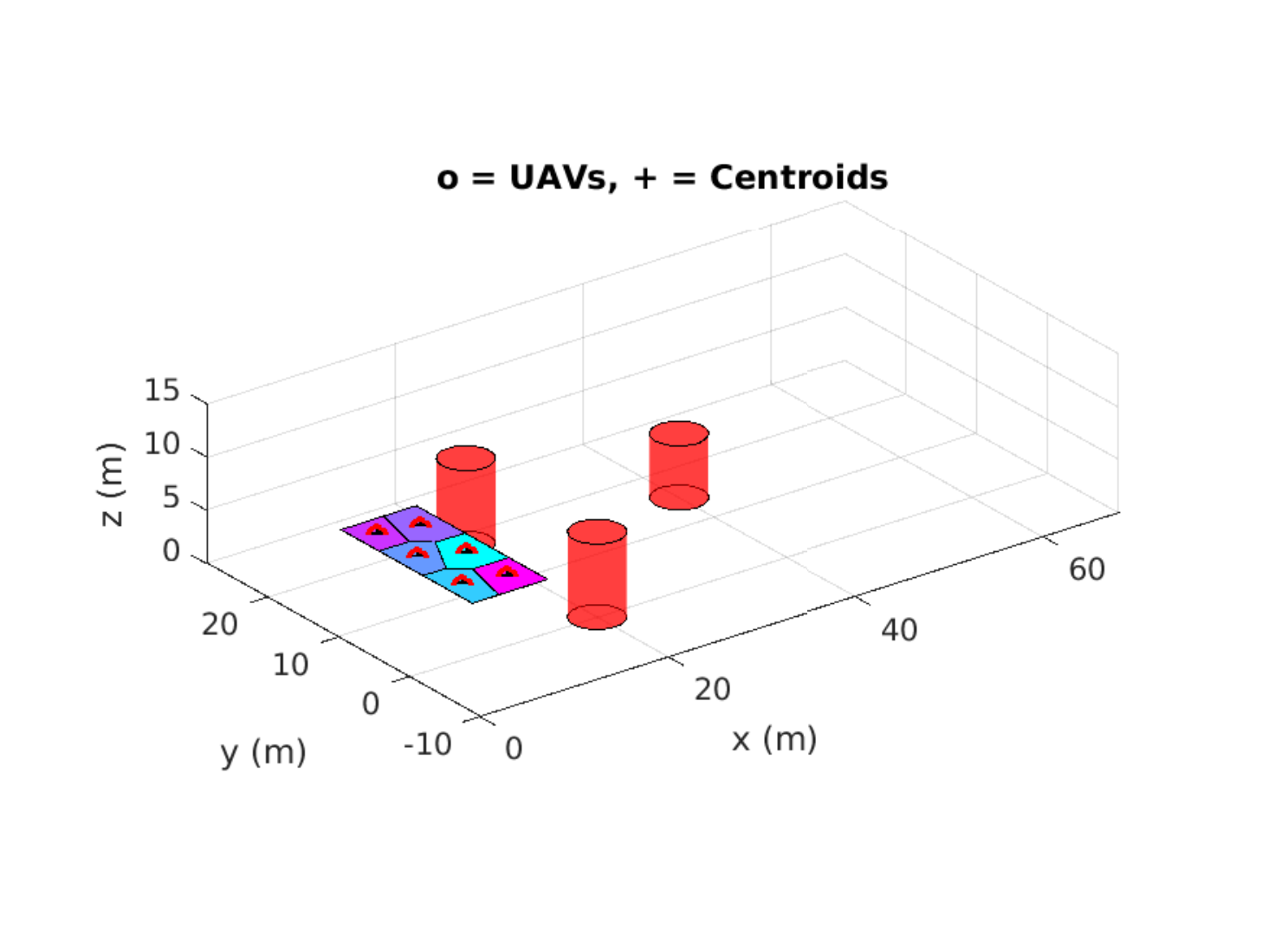} 
			\caption{t=14.5s}
		\end{subfigure}
		\hfill
		\begin{subfigure}[t]{0.32\textwidth}
			\centering
			\includegraphics[trim={0 0 0 2.5cm}, clip, width=\linewidth]{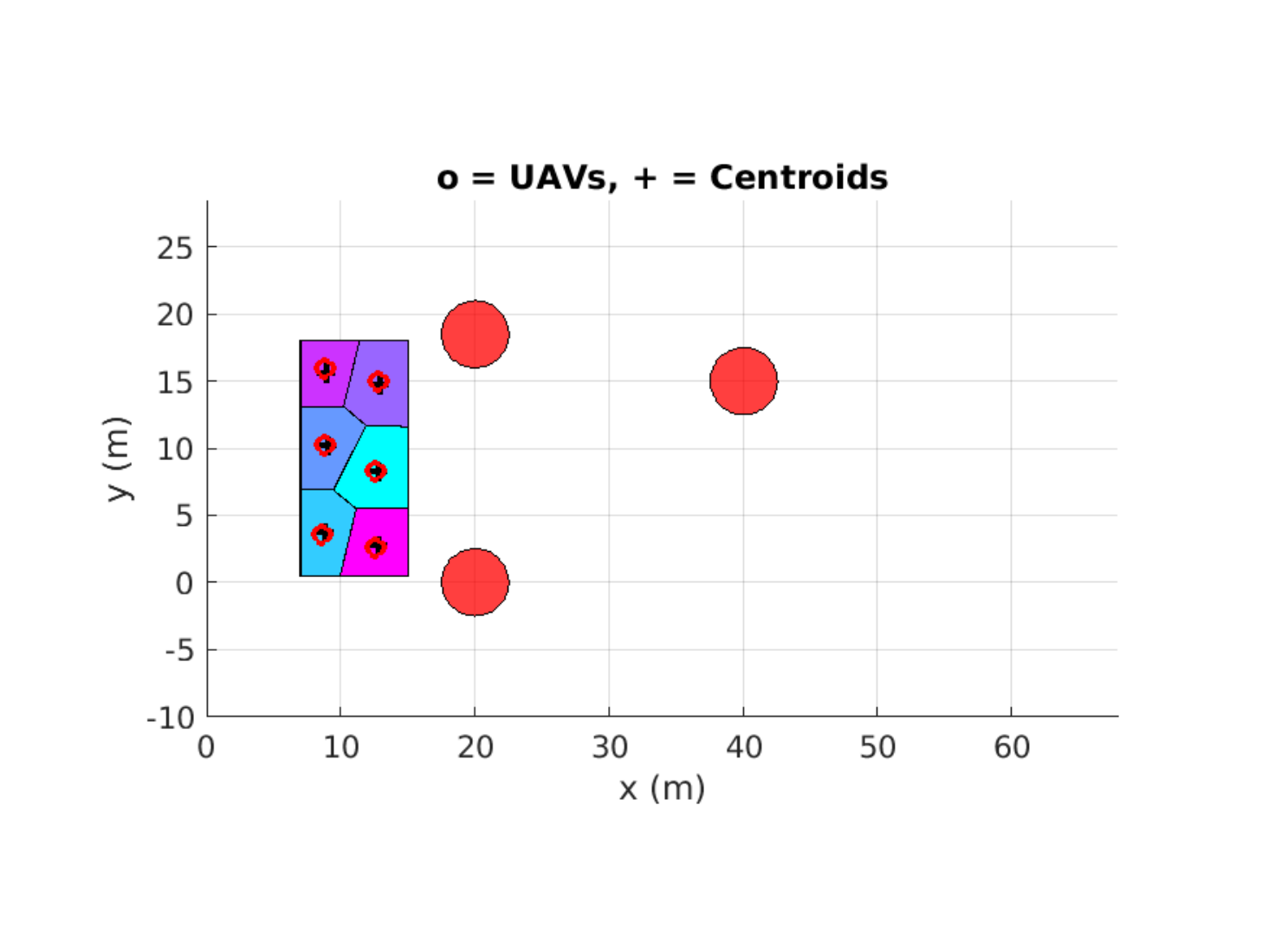} 
			\caption{t=15s}
		\end{subfigure}
		
		\begin{subfigure}[t]{0.32\textwidth}
			\centering
			\includegraphics[trim={0 0 0 2.5cm}, clip, width=\linewidth]{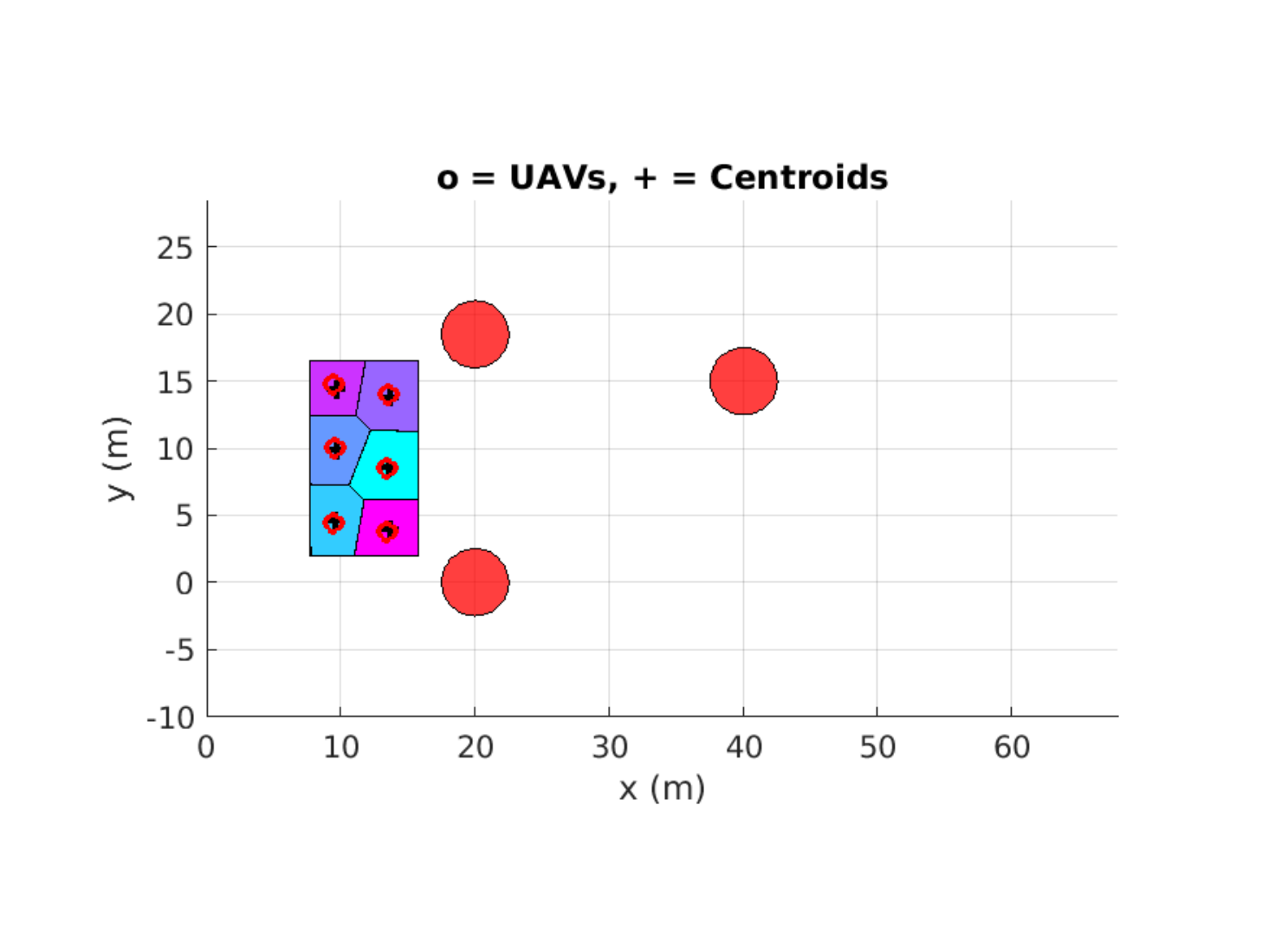} 
			\caption{t=16.5s}
		\end{subfigure}
		\hfill
		\begin{subfigure}[t]{0.32\textwidth}
			\centering
			\includegraphics[trim={0 0 0 2.5cm}, clip, width=\linewidth]{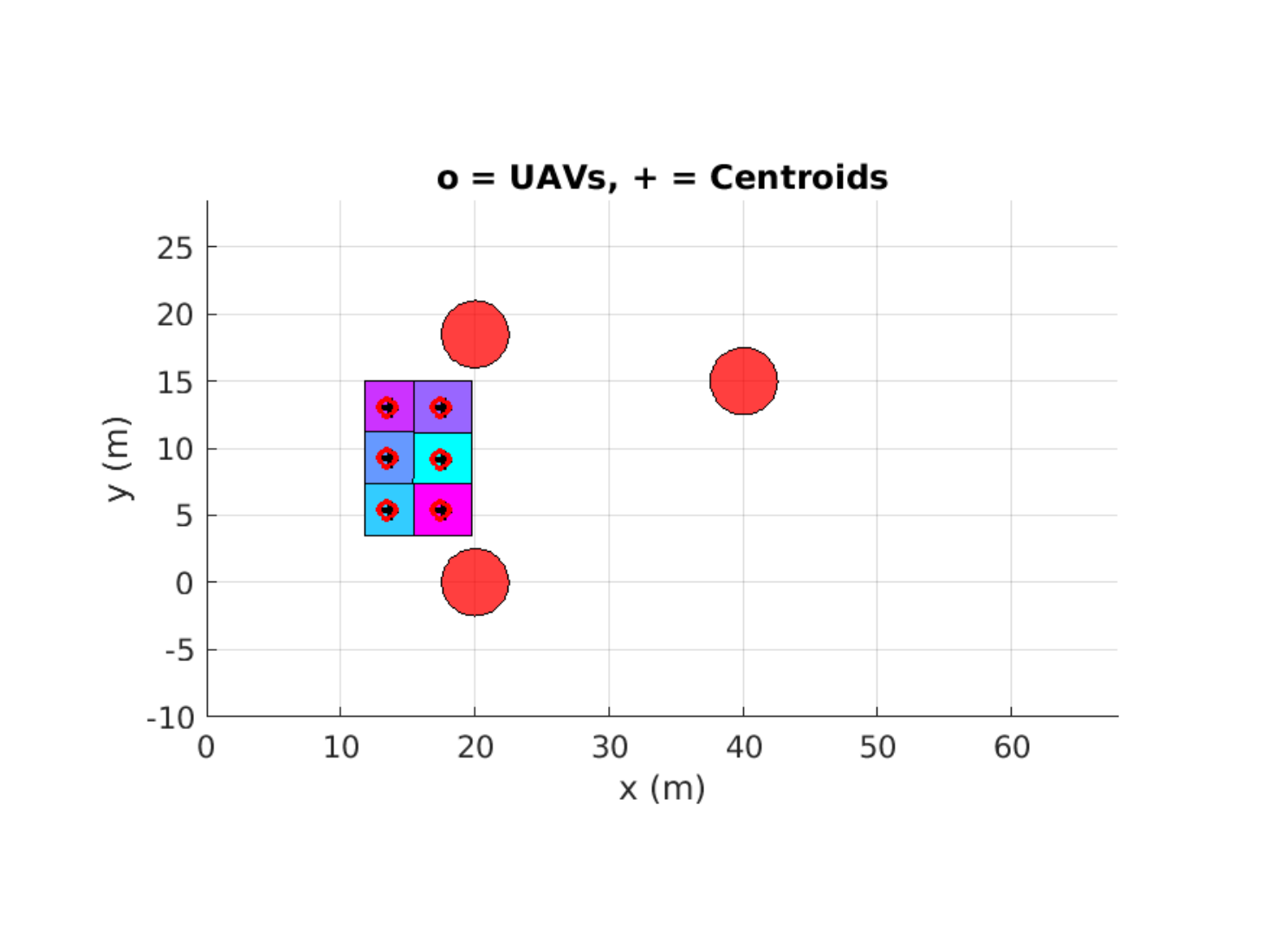} 
			\caption{t=24.5s}
		\end{subfigure}
		\hfill
		\begin{subfigure}[t]{0.32\textwidth}
			\centering
			\includegraphics[trim={0 0 0 2.5cm}, clip, width=\linewidth]{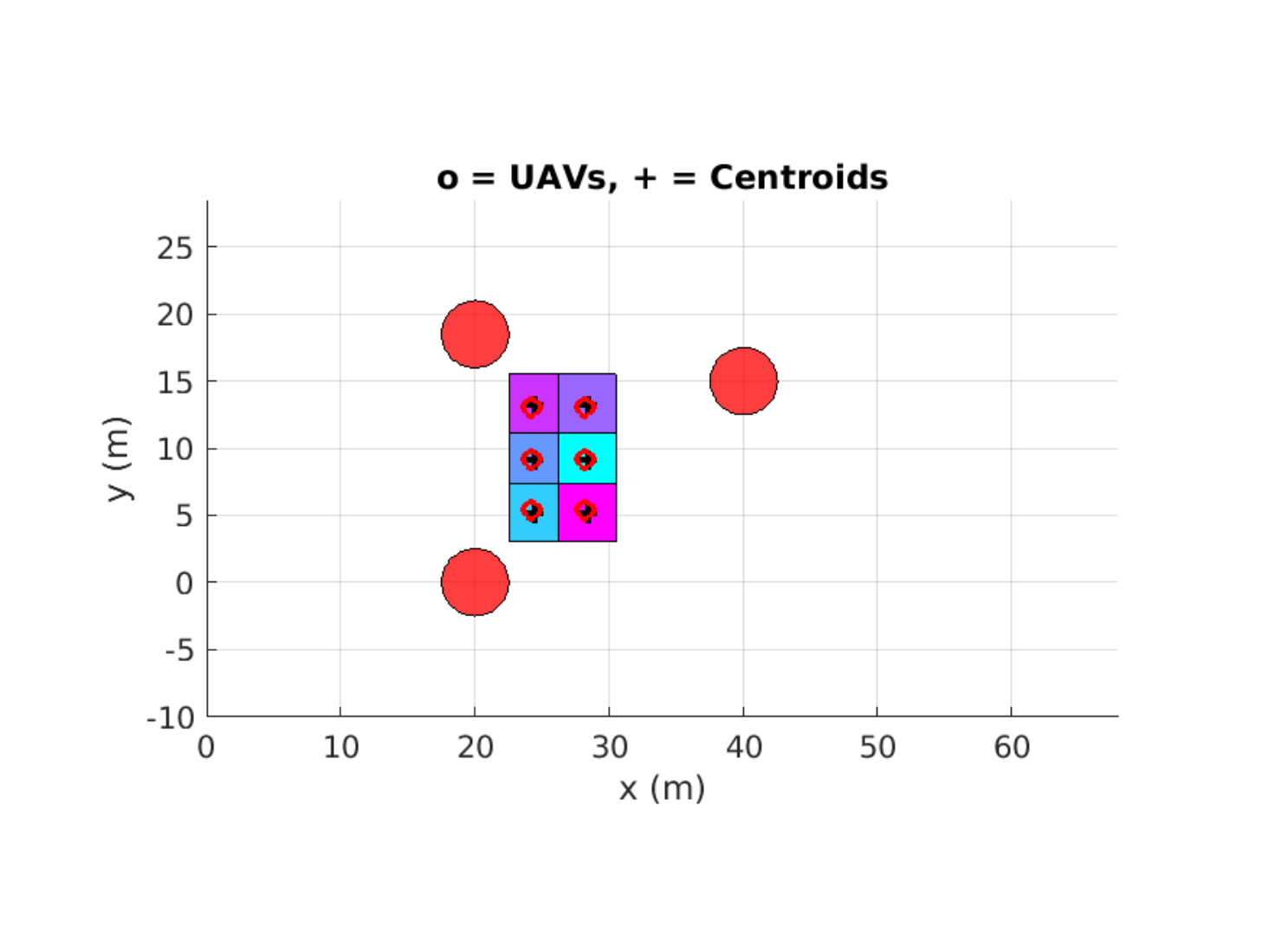} 
			\caption{t=46s}
		\end{subfigure}
		
		\begin{subfigure}[t]{0.32\textwidth}
			\centering
			\includegraphics[trim={0 0 0 2.5cm}, clip, width=\linewidth]{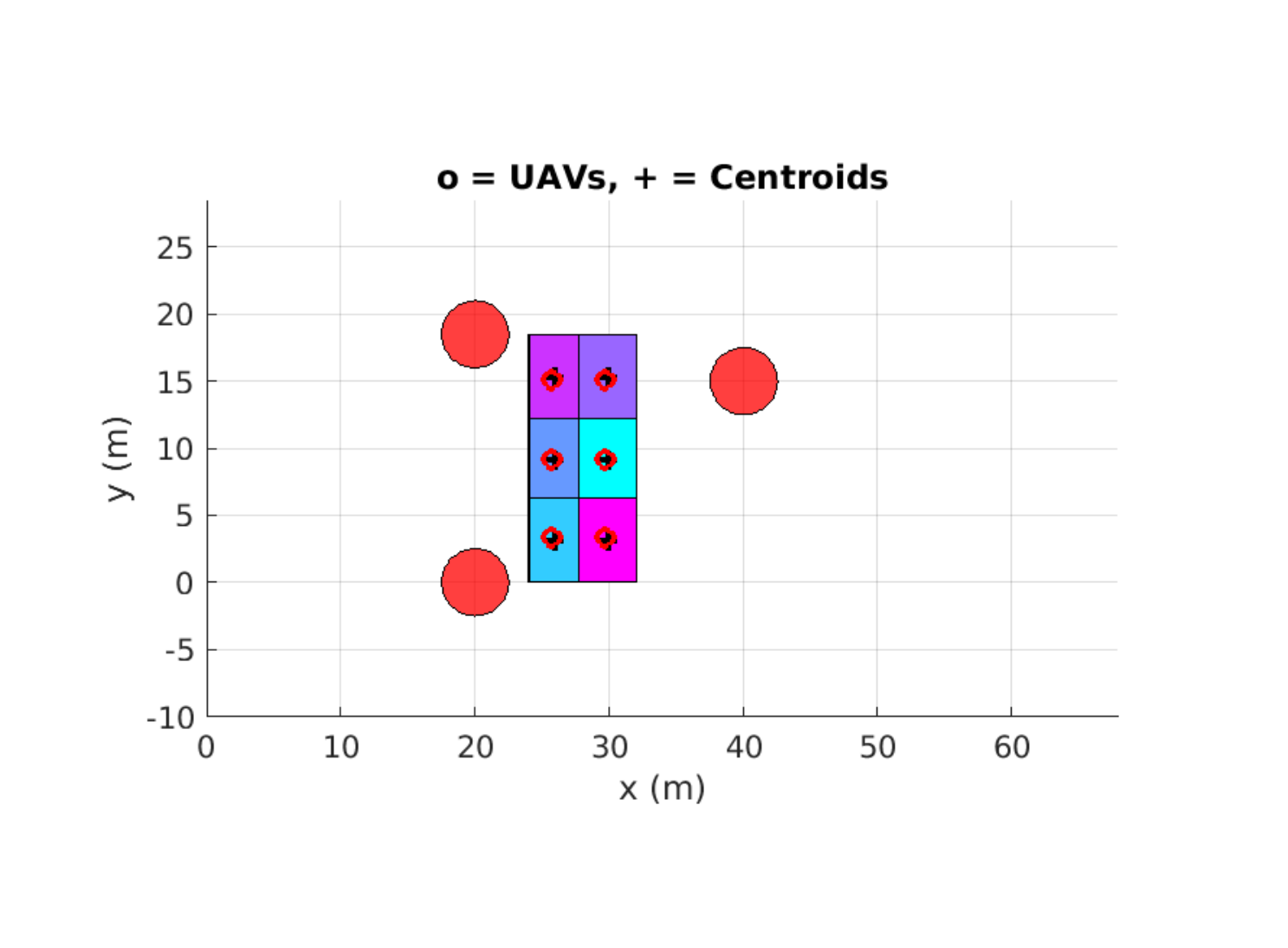} 
			\caption{t=49s}
		\end{subfigure}
		\hfill
		\begin{subfigure}[t]{0.32\textwidth}
			\centering
			\includegraphics[trim={0 0 0 2.5cm}, clip, width=\linewidth]{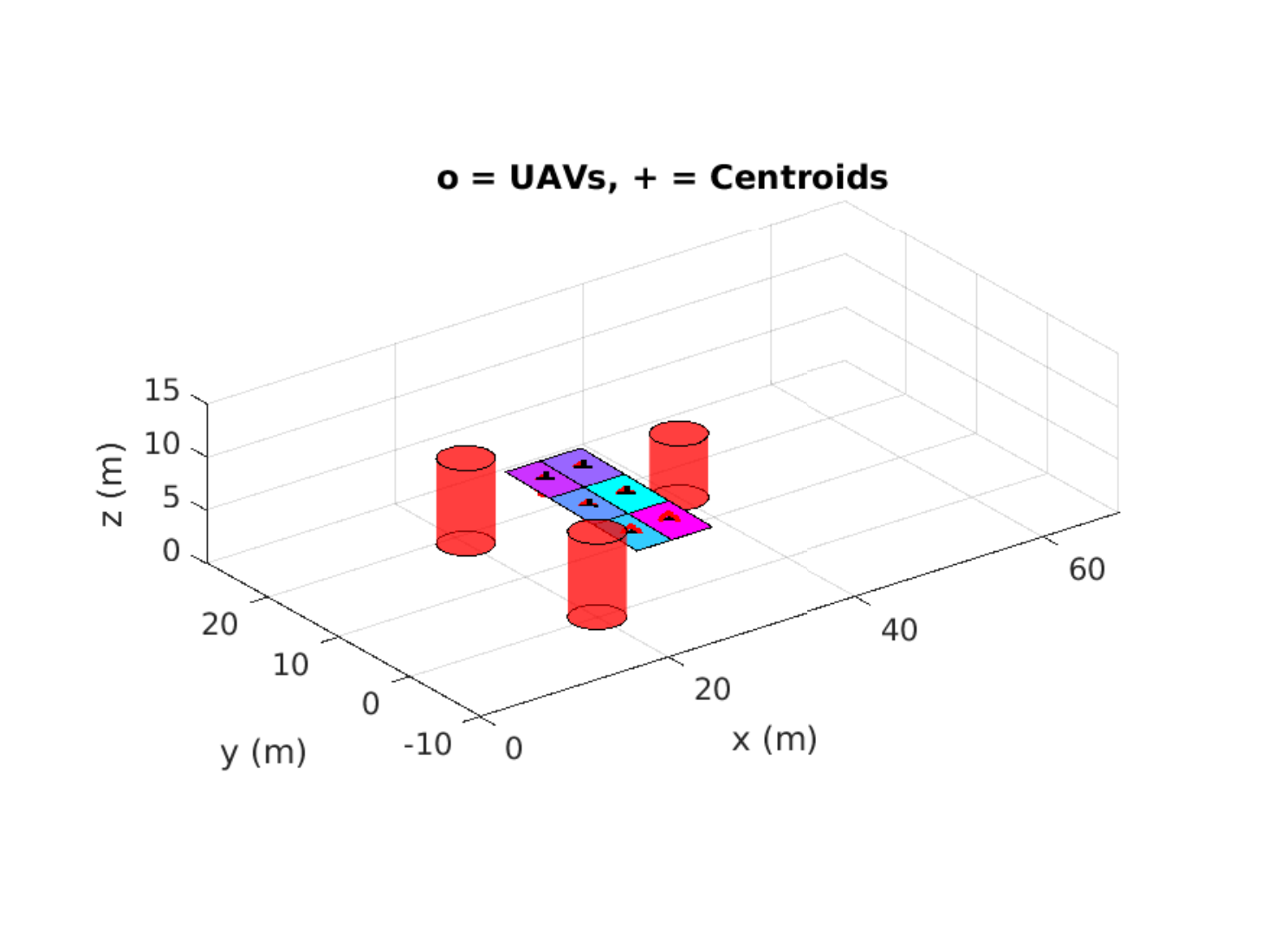} 
			\caption{t=49.5s}
		\end{subfigure}
		\hfill
		\begin{subfigure}[t]{0.32\textwidth}
			\centering
			\includegraphics[trim={0 0 0 2.5cm}, clip, width=\linewidth]{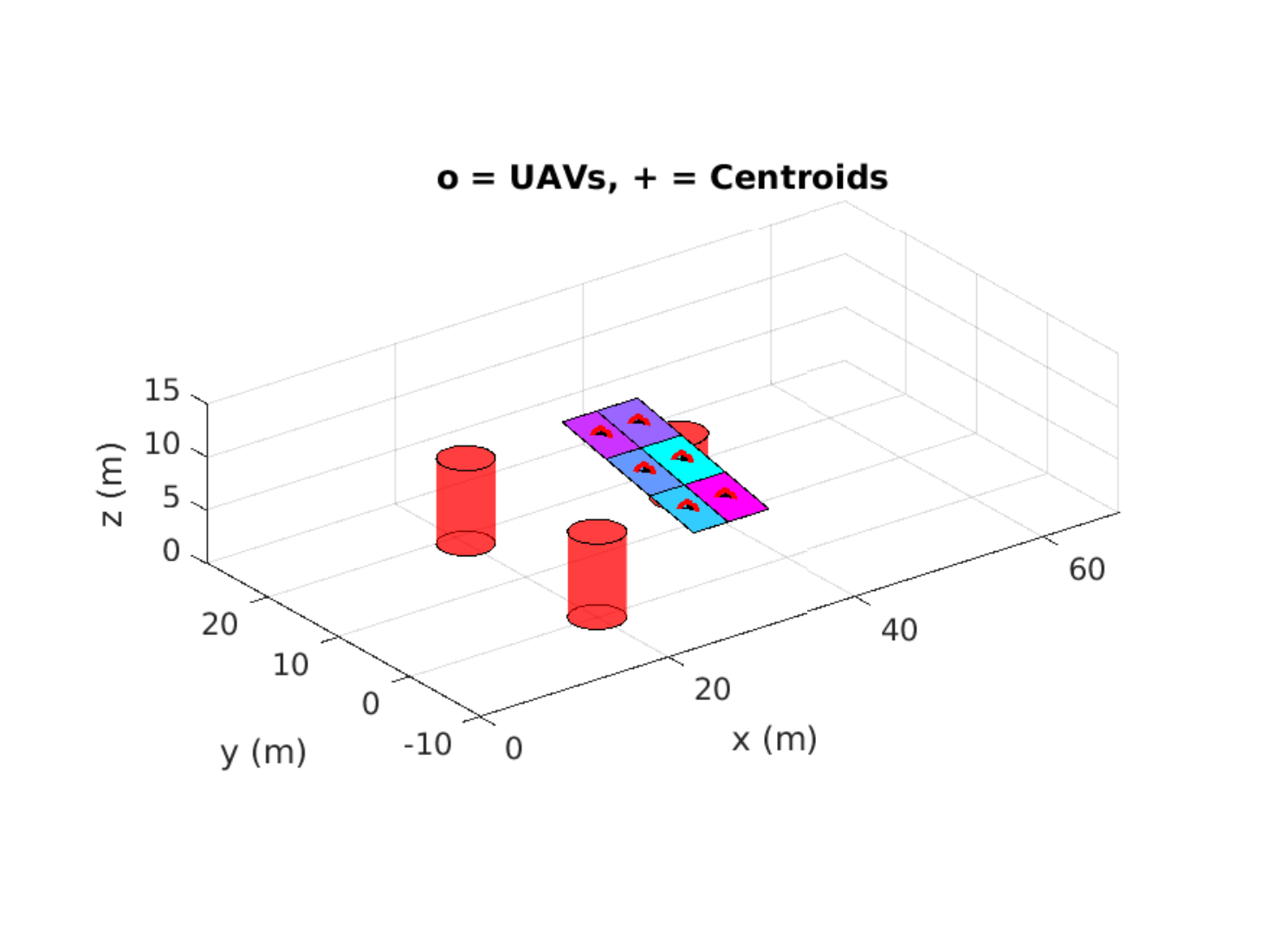} 
			\caption{t=61.5s}
		\end{subfigure}
	
		\begin{subfigure}[t]{0.32\textwidth}
			\centering
			\includegraphics[trim={0 0 0 2.5cm}, clip, width=\linewidth]{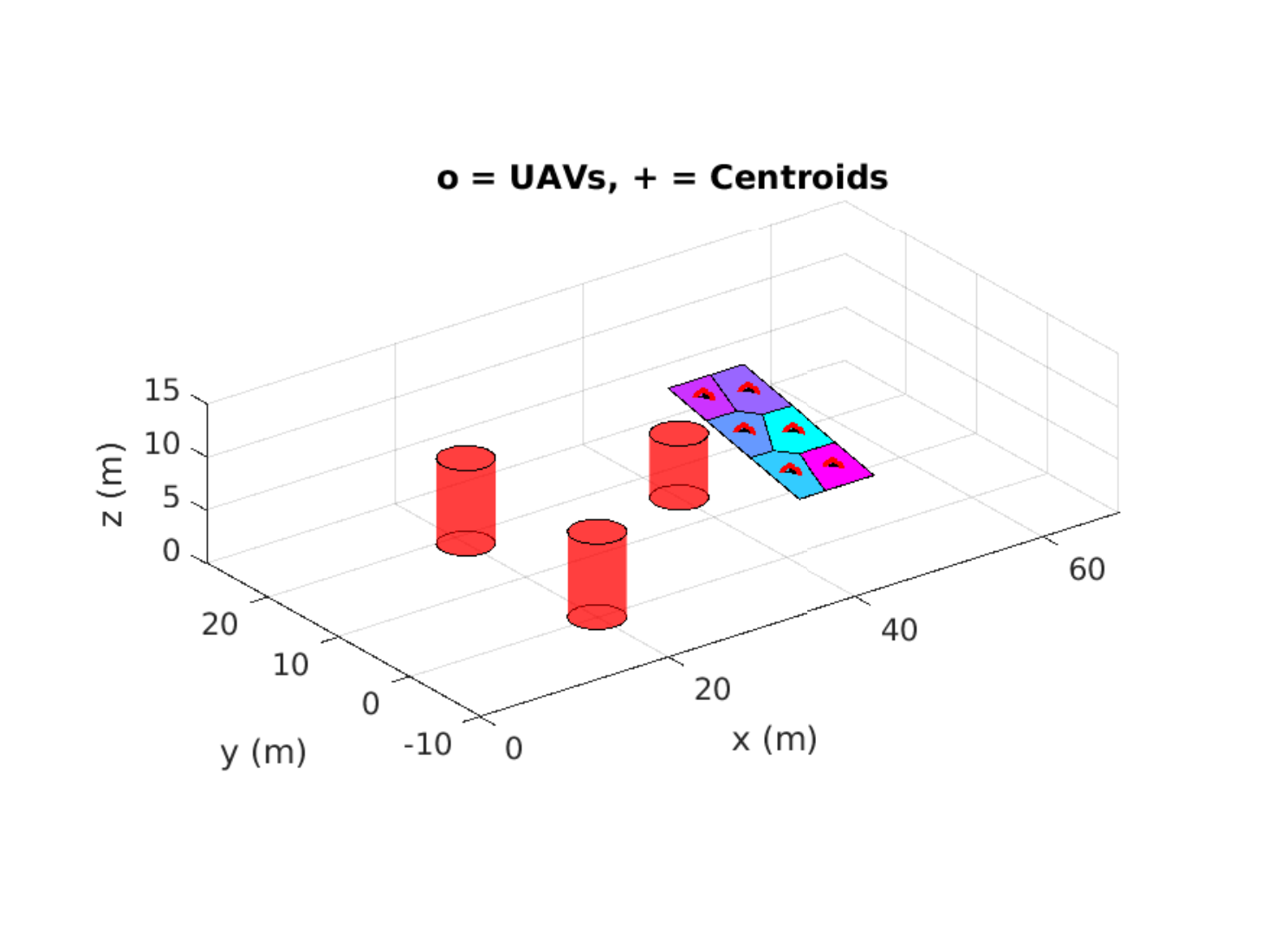} 
			\caption{t=84s}
		\end{subfigure}
		\begin{subfigure}[t]{0.32\textwidth}
			\centering
			\includegraphics[trim={0 0 0 2.5cm}, clip, width=\linewidth]{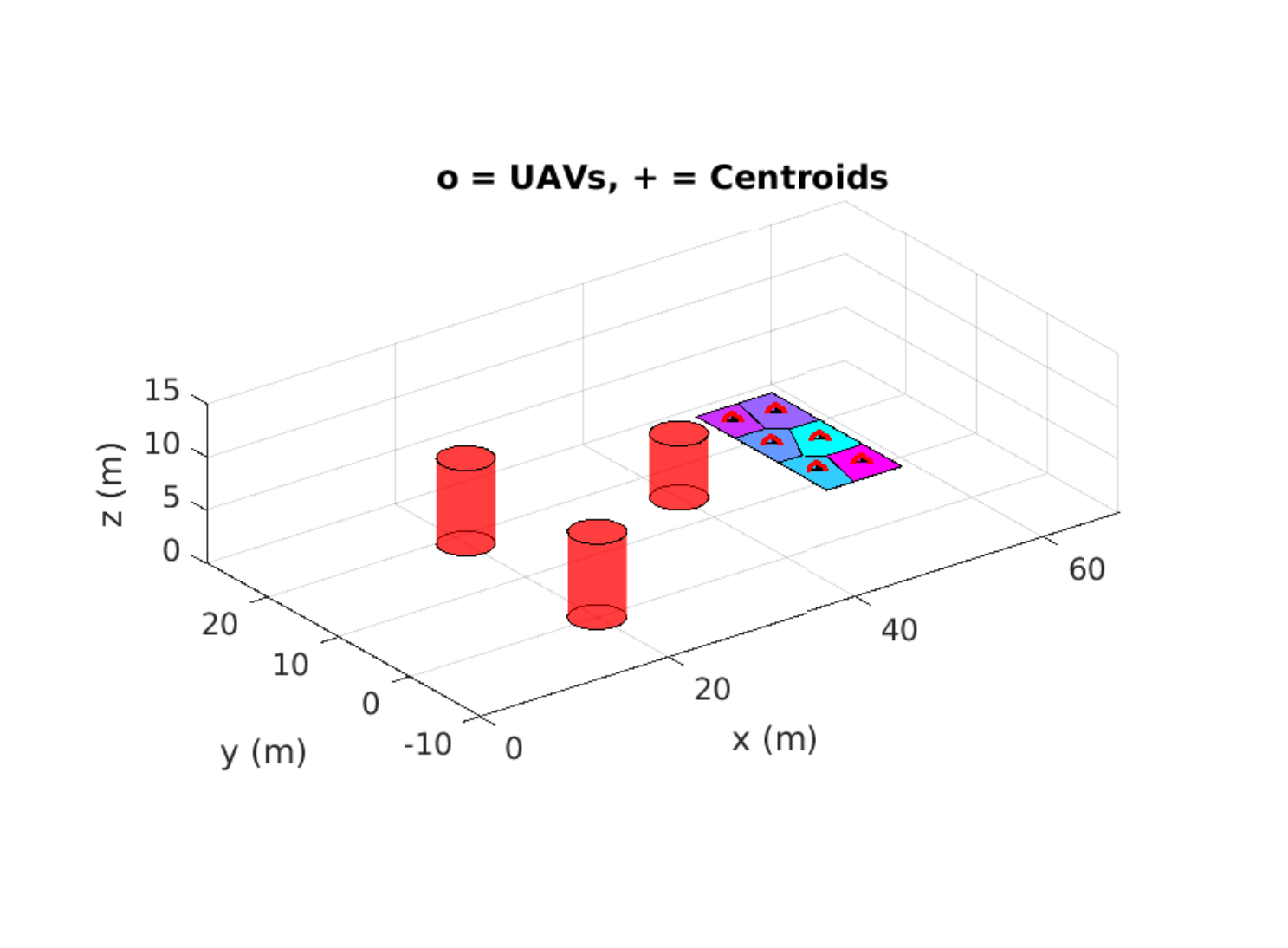} 
			\caption{t=90s}
		\end{subfigure}
		\hfill
		\caption{UAV locations at different time instants showing the obstacle avoidance capability of the proposed approach by dynamically manipulating the sweeping plane $\mathcal{F}(t)$ (Simulation Case 6)}
		\label{fig:ch9:simObsAvoidance1}
	\end{adjustbox}
\end{figure}

\begin{figure}[!htb]
	\centering
	\includegraphics[width=0.7\linewidth]{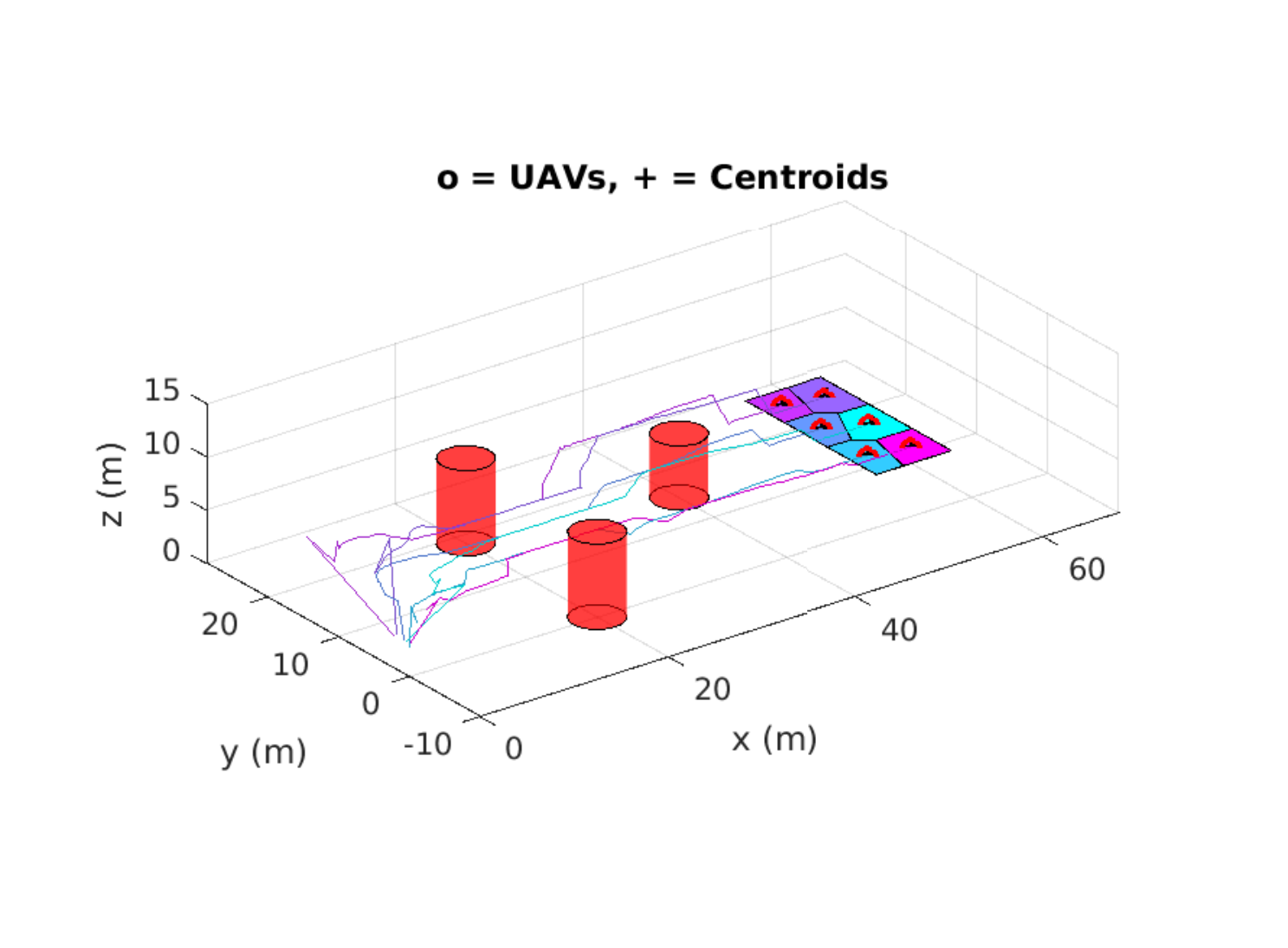} 
	\caption{Complete trajectories of the Multi-UAV system showing the obstacle avoidance capability of the proposed approach [3D View] (Simulation Case 6)} \label{fig:ch9:simObsAvoidance2}
\end{figure}

\begin{figure}[!htb]
	\centering
	\begin{adjustbox}{minipage=\linewidth,scale=1.0}
		\begin{subfigure}[t]{0.48\textwidth}
			\centering
			\includegraphics[clip, width=\linewidth]{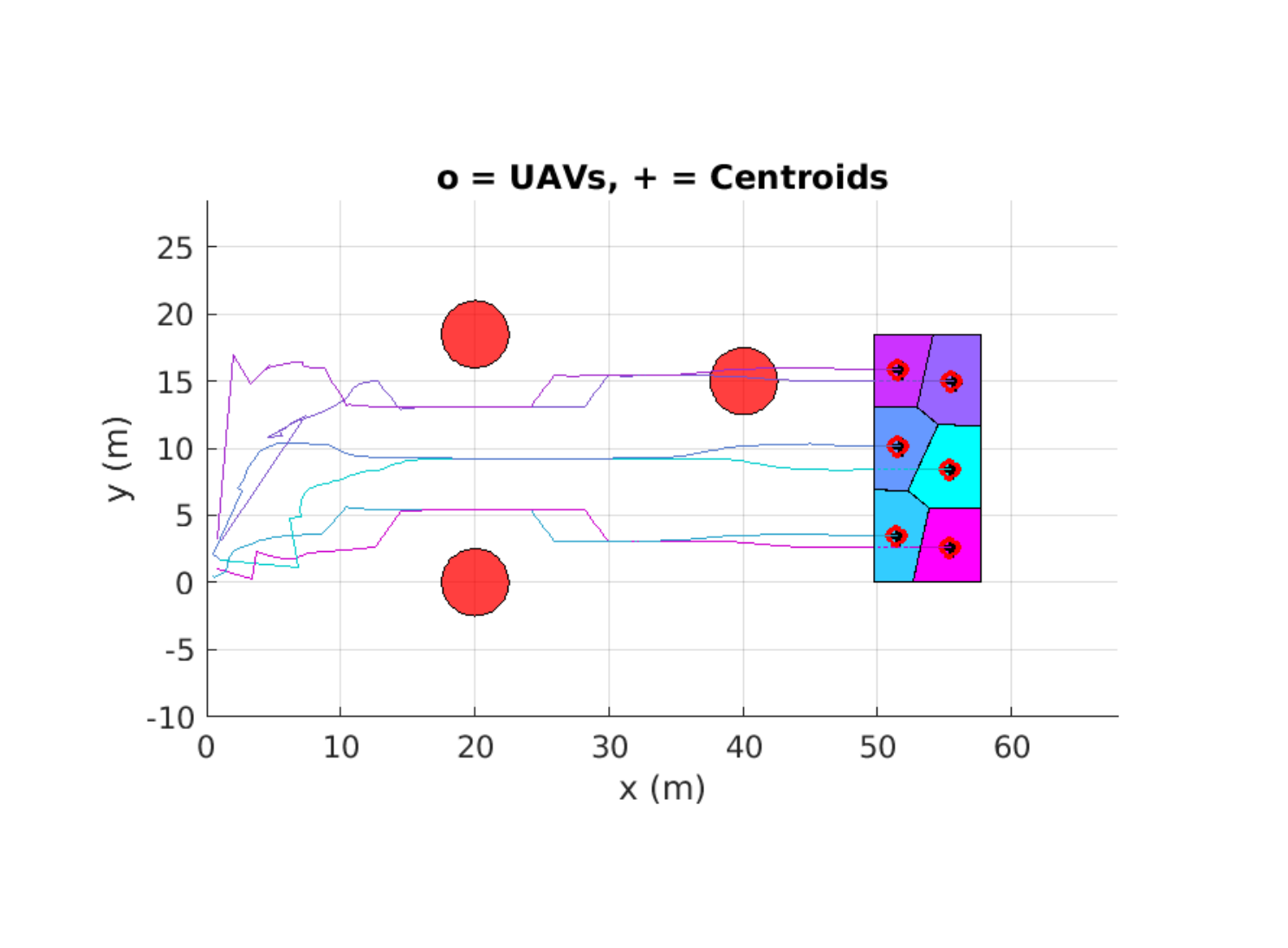} 
			\caption{XY View}
		\end{subfigure}
		\hfill
		\begin{subfigure}[t]{0.48\textwidth}
			\centering
			\includegraphics[trim={0 0 0 0}, clip, width=\linewidth]{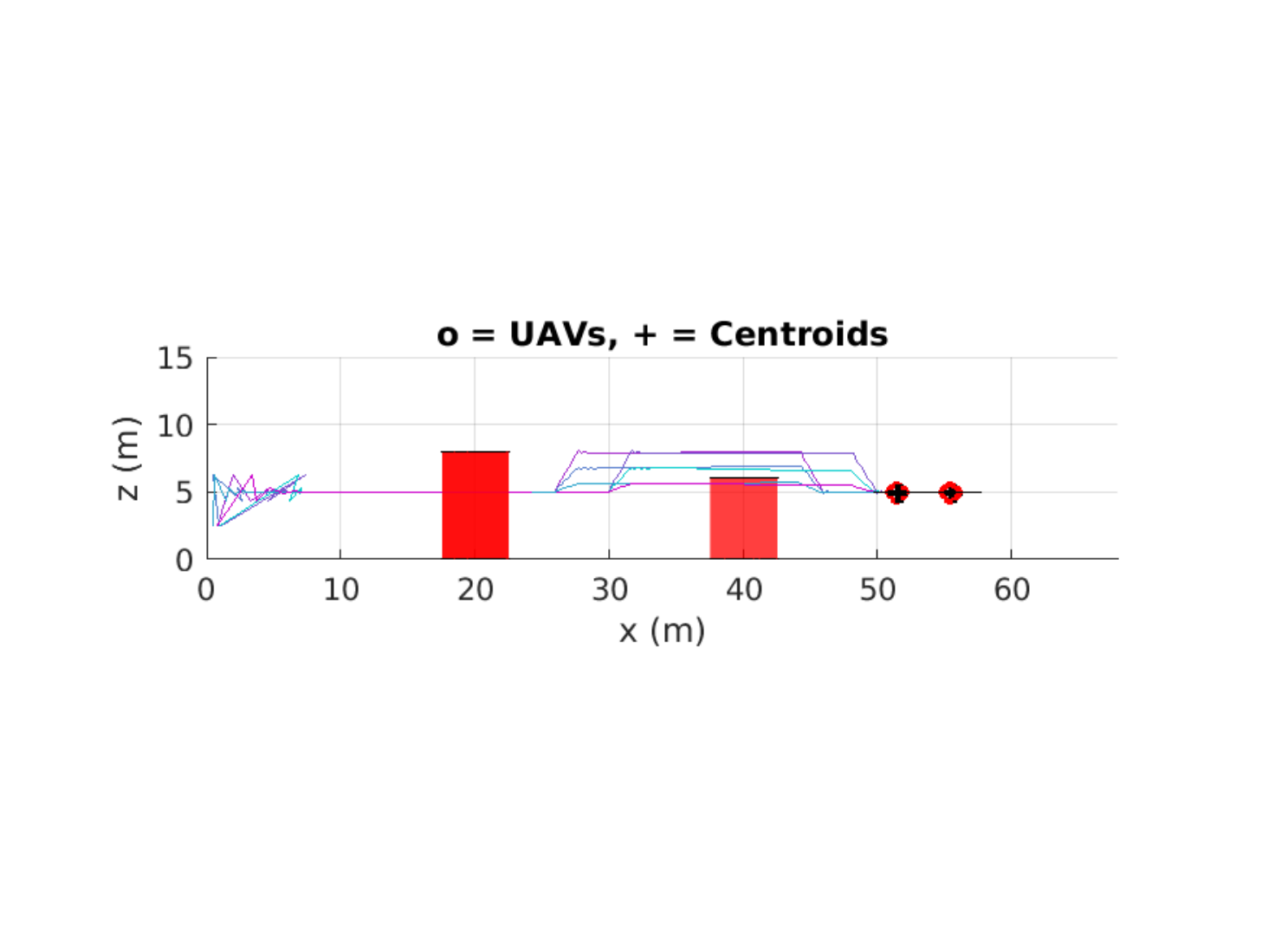} 
			\caption{XZ View}
		\end{subfigure}
		\caption{Complete trajectories of the Multi-UAV system showing the obstacle avoidance capability of the proposed approach [Planar Views] (Simulation Case 6)}
		\label{fig:ch9:simObsAvoidance3}
	\end{adjustbox}
\end{figure}

\section{Implementation using a Multi-Quadrotor System}\label{sec:ch9:impl}

The proposed barrier and sweep coverage control methods are developed based on the general kinematic model in \eqref{equ:ch9model} which is applicable to different UAV types and AUVs.
An example way in implementing the sweep coverage control method is presented in this section for a particular application in precision agriculture using a group of quadrotor UAVs.

\subsection{Quadrotor Dynamics}

Now, we will extend the model in \eqref{equ:ch9model} to include the dynamics of quadrotor-type UAVs according to the following model:
\begin{eqnarray}
\bm{\dot{p}}_i &=& \bm{v}_i \label{equ:ch9:pdot} \\
\bm{\dot{v}}_i &=& -g \bm{e}_3 + \frac{1}{m_i} T_i \bm{R}_i \bm{e}_3 \label{equ:ch9:vdot} \\
\bm{\dot{R}}_i &=& \bm{R}_i skew(\bm{\Omega}_i) \label{equ:ch9:Rdot} \\
\bm{I}_i\bm{\dot{\Omega}}_i &=& -\bm{\Omega}_i \times \bm{I}_i \bm{\Omega}_i + \bm{\tau}_i \label{equ:ch9:Omegadot}
\end{eqnarray}
where the velocity vector $\bm{u}_i$ is replaced by $\bm{v}_i$ for convenience, $g$ is the gravitational acceleration, $e_3 = [1\ 1\ 1]^T$, $m_i$ is the UAV mass, and $\bm{I}_i$ is the inertia matrix.
Recall that the vehicle's position and velocities are expressed in the inertial coordinate frame $\{\mathcal{I}\}$.
Consider also another reference frame attached to the UAV body with its origin at the center of mass.
This frame is referred to as the body-fixed frame $\{B^{i}\}$.
The orientation of the UAV is represented using a rotation matrix $\bm{R}_i$ between $\{B^{i}\}$ and $\{\mathcal{I}\}$.
The rate of change in the vehicle's orientation is denoted by $\Omega_i$ (i.e. angular velocity) which is expressed in the $\{B^{i}\}$ frame.
The control inputs for this model are the collective thrust $T_i \in \R$ and the body-torques vector $\bm{\tau}_i \in \R^3$.
For more details about quadrators modelling, the reader is referred to \cite{hamel2002dynamic,mellinger2011minimum}.

\subsection{Tracking Control}

There are different ways to implement the proposed sweeping control strategy.
One possible approach is to directly couple the velocity commands in \eqref{equ:ch9u_unbounded} (or \eqref{equ:ch9u_bounded}) into attitude and thrust control inputs.
Another possible direction is to use the determined centroidal Voronoi configurations by the algorithm \textbf{S1}-\textbf{S4} as goal positions with a position tracking controller.
The latter approach is considered here for a control design based on the differential-flatness property of the quadrotor dynamics and the sliding mode control technique.

Let's redefine the position and velocity tracking errors as follows:
\begin{eqnarray}
\bm{e}_{\bm{p},i} &=& \bm{p}_i - \bm{c}_{V_i} \\
\bm{e}_{\bm{v},i} &=& \bm{v}_i - \bm{\dot{c}}_{V_i}
\end{eqnarray}
Also, consider sliding surfaces $\bm{\sigma}_i$ such that:
\begin{equation}
\bm{\sigma}_i = \bm{e}_{\bm{v},i} + \bm{L}_{1,i} \tanh(\mu_1\bm{e}_{\bm{p},i})
\end{equation}
where $\mu_1>0$, and $\bm{L}_{1,i}$ is a positive definite diagonal matrix.
Note that this choice will ensure that the velocities can remain bounded by $\lambda_{max}(\bm{L}_{1,i})$ to account for physical limits.

A desired acceleration command can then be computed using:
\begin{equation}\label{equ:ch9:commandAcc}
\begin{aligned}
\frac{1}{m_i}\bm{a}_{des,i} =& g \bm{e}_3 + \bm{L}_{2,i} \tanh(\mu_2\bm{\sigma}_i) \\
&+ \sum_{j \in N_i} \bm{L}_{ij}\max(0,d_{ij}^2 - ||\bm{p}_i - \bm{p}_j||^2) \frac{\bm{p}_i - \bm{p}_j}{||\bm{p}_i - \bm{p}_j||}
\end{aligned}
\end{equation}
where $\bm{L}_{2,i}$ and $\bm{L}_{ij}$ are positive definite diagonal matrices.
The additional term in \eqref{equ:ch9:commandAcc} provides more safety as it represents a repelling force from vehicles closer than some critical distance $d_{ij}>0$ in the neighbourhood of vehicle $i$.

The differential-flatness property is then utilized to achieve the acceleration command in \eqref{equ:ch9:commandAcc} while maintaining some desired yaw angle $\psi_{ref,i}$ using the following equations:
\begin{eqnarray}\label{equ:ch9:thrustInput}
T_i &=& \bm{a}_{des,i}^T \bm{R}_i e_3 \\
\bm{R}_{des,i} &=& [\bm{x}^i_{\mathcal{B}^i,des}, \bm{y}^i_{\mathcal{B}^i,des}, \bm{z}^i_{\mathcal{B}^i,des}] \\
\bm{z}^i_{\mathcal{B}^i,des} &=& \frac{\bm{a}_{des,i}}{||\bm{a}_{des,i}||} \\
\bm{x}^i_{\mathcal{B}^i,des} &=& \frac{\bm{y}^i_{C^i}\times \bm{z}^i_{\mathcal{B}^i,des}}{||\bm{y}^i_{C^i}\times \bm{z}^i_{\mathcal{B}^i,des}||} \\
\bm{y}^i_{\mathcal{B}^i,des} &=& \bm{z}^i_{\mathcal{B}^i,des} \times \bm{x}^i_{\mathcal{B}^i,des} \\
\bm{y}^i_{C^i} &=& [-\sin(\psi_{ref,i})\ \cos(\psi_{ref,i})\ 0]^T
\end{eqnarray}
A low-level control is then used to compute $\bm{\tau}_i$ to ensure that the vehicle can achieve the desired attitude $\bm{R}_{des,i}$. 
Further details about the differential-flatness property of quadrotor dynamics can be found in \cite{mellinger2011minimum,faessler2017differential}.

\subsection{Software-in-the-Loop Simulations}

The performance of the suggested sweeping coverage control for multi-quadrotor systems was evaluated using Software-in-the-Loop (SITL) simulations using \textit{Gazebo} and the \textit{Robot Operating System (ROS)} framework.
This allows us to test the computational performance of the production code which can be used directly on the real vehicle's onboard computer to implement our algorithms and control strategies in real-time.
A simple scenario was considered where a group of 3 quadrotors were needed to survey a crops field $\tilde{\mathcal{S}}=\{x,y,z \in \R:  0 \leq x \leq 32, 0 \leq y \leq 40, z=0\}$.
Each vehicle is assumed to have an onboard camera providing a sensing footprint $\mathcal{A}_i \subset \mathcal{S}$ dependant on its characteristics and the UAV's altitude.
For simplicity, an obstacle-free environment was used as shown in \cref{fig:ch9:gazebo_env}.

We used the open-source PX4 flight stack as an implementation for the low-level controller and an extended Kalman filter for states estimation considering noisy measurements.
Our tracking control logic was implemented using C++ providing thrust and attitude inputs at a rate of 100Hz.
Furthermore, centroidal Voronoi configurations computations were implemented using Python utilizing some of the available tools from the "scipy.spatial" Python module to determine Voronoi cells as a convex polygon in accordance with \cref{assm:voronoiCells}.
Furthermore, the algorithm in \textbf{S1}-\textbf{S4} was used to obtain the centroids.
This turned out to be very computationally efficient since the computations rely on closed-form expressions.

In order to achieve the coverage task in hand, an initial rectangular sweeping plane was selected as: $\mathcal{F}(0) = \{x,y,z \in \R: 0 \leq x \leq 8, 0 \leq y \leq 2, z=2\}$ where it was required for the UAVs to fly at a fixed altitude of $2m$.
The classical lawn-mower pattern was considered to generate the trajectory of $\mathcal{F}(t)$ to completely scan $\tilde{\mathcal{S}}$ while keeping the yaw angle at $\psi_{ref}=0$ without loss of generality.
The following parameters were used in the simulation: $\bm{L}_{1,i}=diag\{2,2.5,3\}$, $\bm{L}_{2,i}=diag\{2.5,3,4\}$, $\bm{L}_{ij}=diag\{1,1,1\}$, $d_{ij}=0.5m$, $\mu_1=2$ and $\mu_2=1.5$ (where $diag\{\cdot\}$ represents a diagonal matrix).

\begin{figure}[ht]
	\centering
	\includegraphics[width=0.75\linewidth]{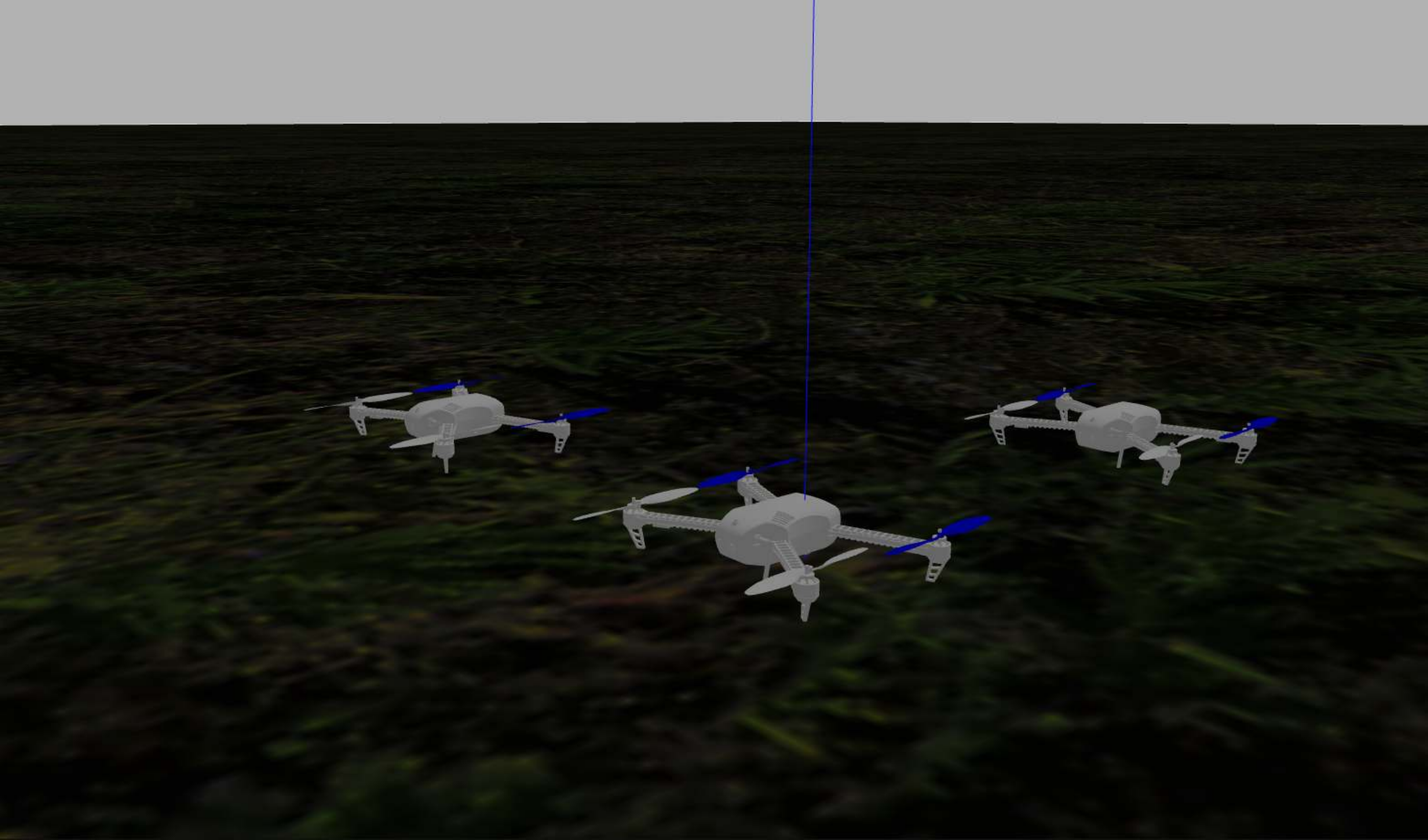}
	\caption{UAVs and environment used in simulations}
	\label{fig:ch9:gazebo_env}
\end{figure}

During the simulation, all measurements and data were captured as a bag file using available ROS recording tools.
The results are plotted using MATLAB which are shown in \cref{fig:ch9:simPos,fig:ch9:simPos3D,fig:ch9:simPos3D,fig:ch9:simVel,fig:ch9:simRPY,fig:ch9:simThrust}.
In \cref{fig:ch9:simPos}, coordinates of all vehicles are plotted with respect to time.
The overall paths taken by the vehicles are shown in \cref{fig:ch9:simPos3D,fig:ch9:simPos3D} where the highlighted rectangular area represents the crops field to be surveyed (i.e. $\tilde{\mathcal{S}}$). 
Note that in this time, the vehicles were moving along some other region $\mathcal{S}$ whose projection on the ground floor is $\tilde{\mathcal{S}}$.
Thus, the trajectory of $\mathcal{F}(t)$ was designed to cover $\tilde{\mathcal{S}}$ given that the equipped sensors have a collective footprint which ensures the optimal coverage of $\tilde{\mathcal{S}}$.  
It is clear from these results that the overall motion is collision-free, and the lawn-mower pattern decided for $\mathcal{F}(t)$ was followed by the whole group.
Velocity norms are shown in \cref{fig:ch9:simVel}.
Overall the vehicles are moving with a constant speed $g_0=1 m/s$ except for some changes in velocity at times where the vehicles are turning ($t\approx 60s,110s,160s$) in order to avoid collisions with each others.
Overall, velocities remain within the desired limits of $\|v\| \leq 3.5 m/s$ except at the start when vehicles were taking off which is based on a different control law.
The high-level control commands, namely roll, pitch, yaw and thrust, are shown in \cref{fig:ch9:simRPY,fig:ch9:simThrust} where the thrust is normalized in the range $[0,1]$.
This simulation case shows that our control can be successfully implemented using multi-quadrotor systems.

\begin{figure}[ht]
	\centering
	\includegraphics[width=0.8\linewidth]{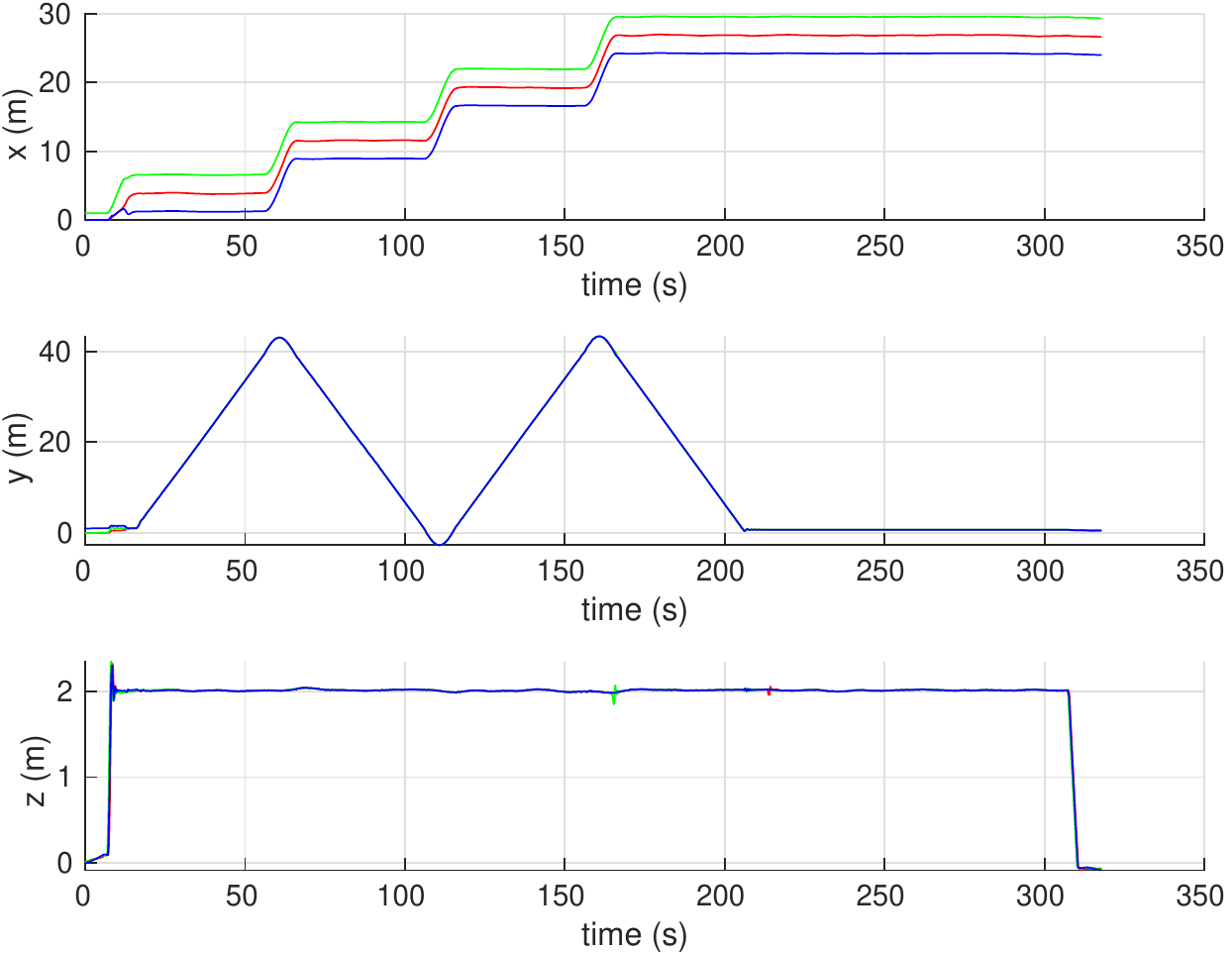}
	\caption{UAVs coordinates with respect to time}
	\label{fig:ch9:simPos}
\end{figure}

\begin{figure}[ht]
	\centering
	\includegraphics[width=0.8\linewidth]{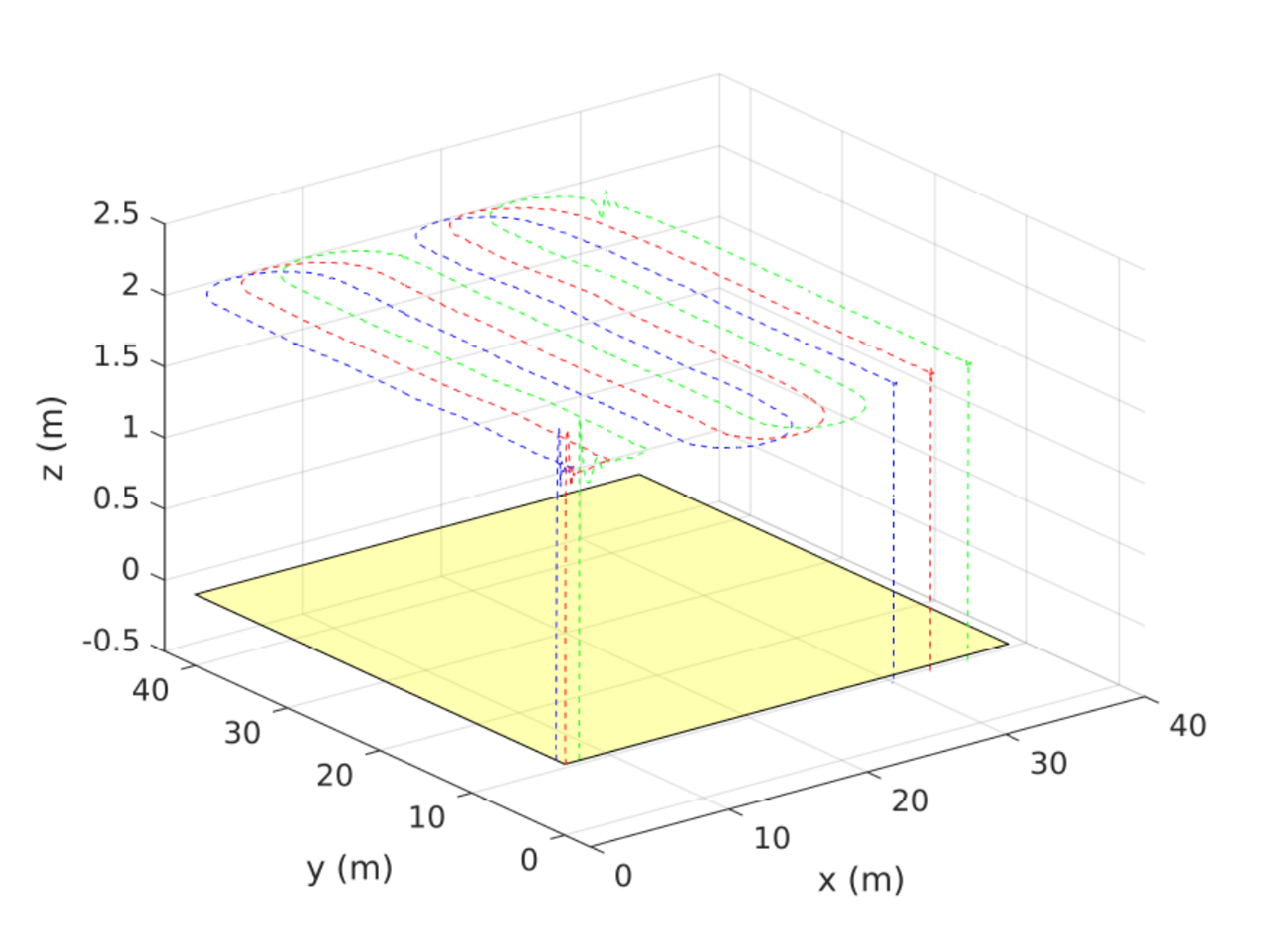}
	\caption{UAVs actual trajectories (3D prospective)}
	\label{fig:ch9:simPos3D}
\end{figure}

\begin{figure}[ht]
	\centering
	\includegraphics[width=0.7\linewidth]{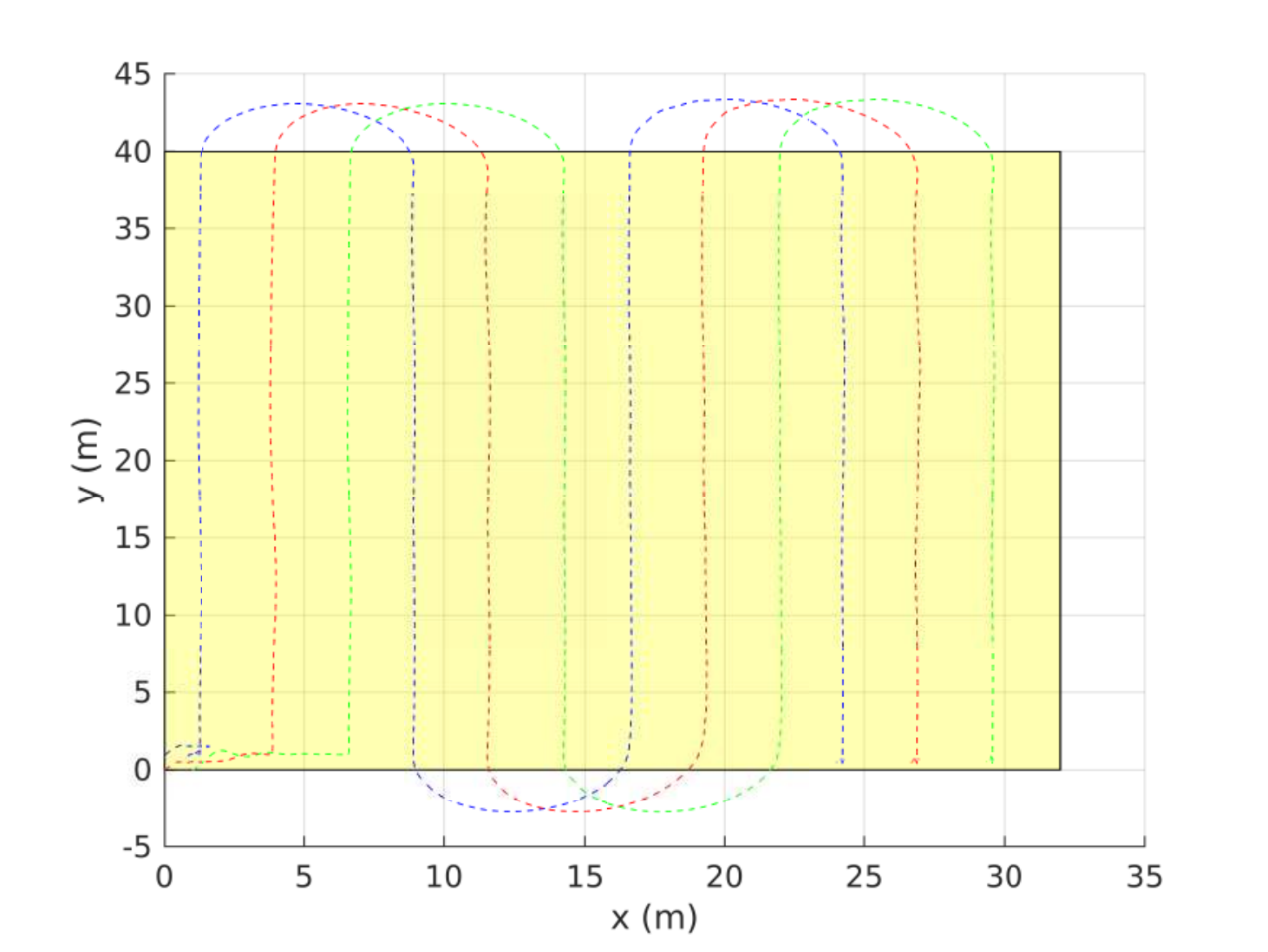}
	\caption{UAVs actual trajectories (2D prospective)}
	\label{fig:ch9:simPos2D}
\end{figure}

\begin{figure}[ht]
	\centering
	\includegraphics[width=0.7\linewidth]{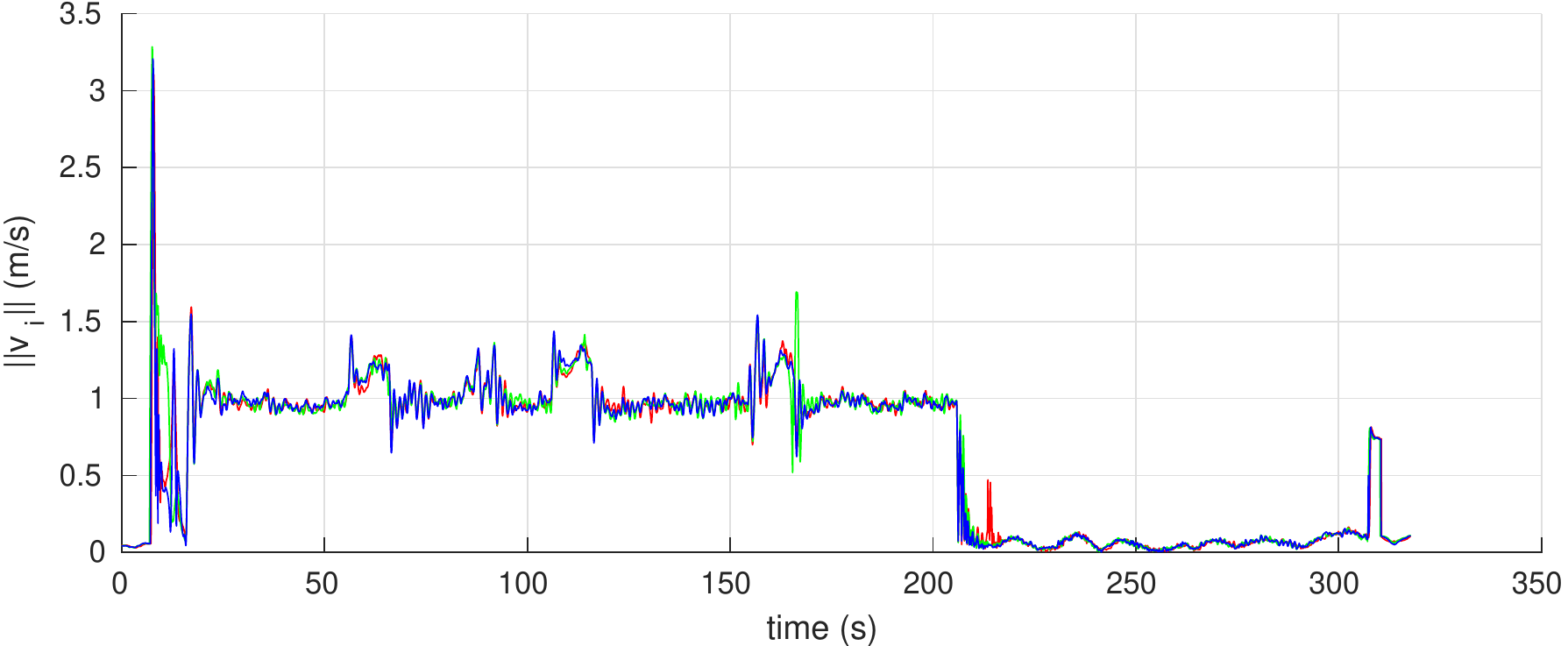}
	\caption{UAVs linear velocities with respect to time}
	\label{fig:ch9:simVel}
\end{figure}

\begin{figure}[ht]
	\centering
	\includegraphics[width=0.7\linewidth]{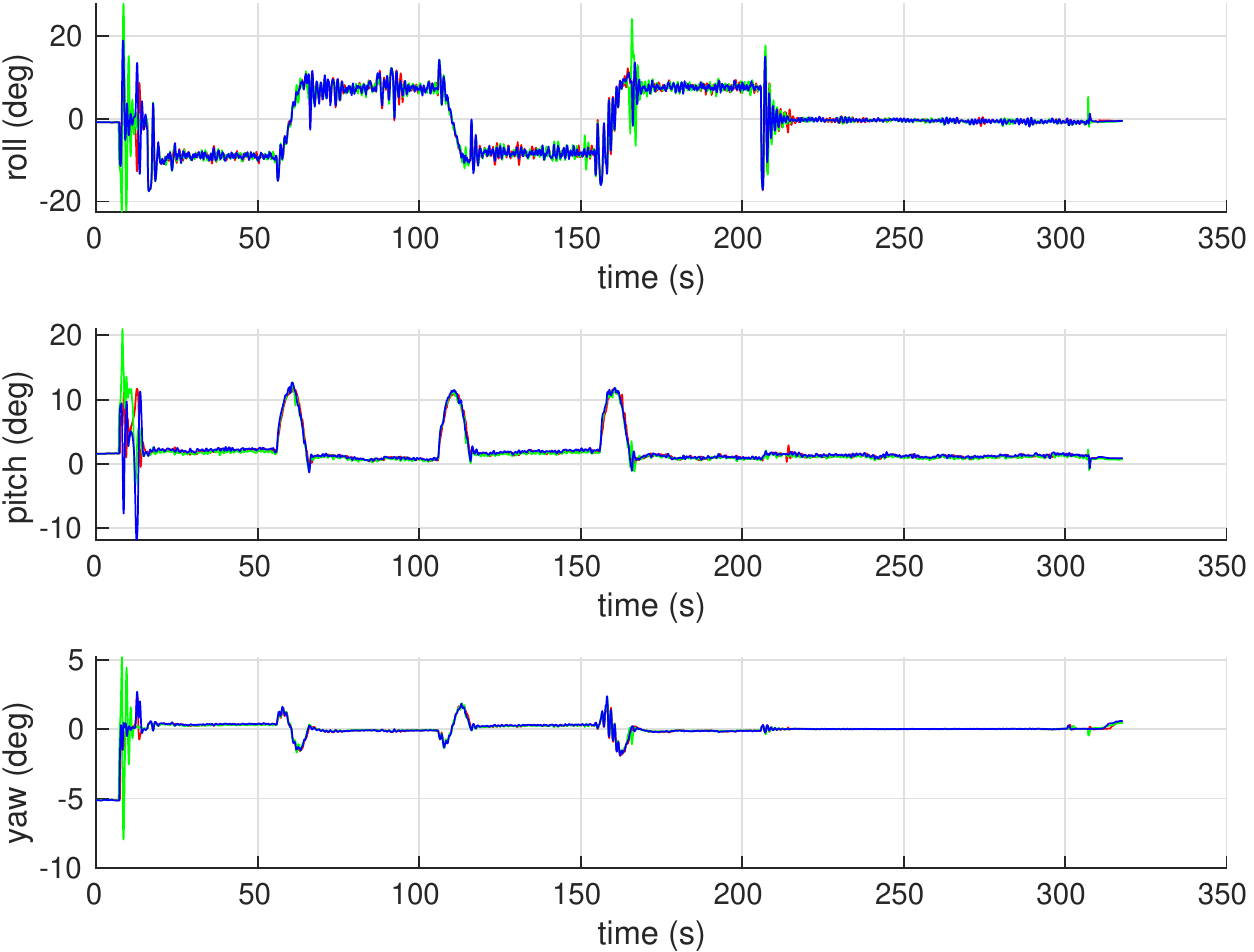}
	\caption{UAVs attitude (roll, pitch and yaw) with respect to time}
	\label{fig:ch9:simRPY}
\end{figure}

\begin{figure}[ht]
	\centering
	\includegraphics[width=0.7\linewidth]{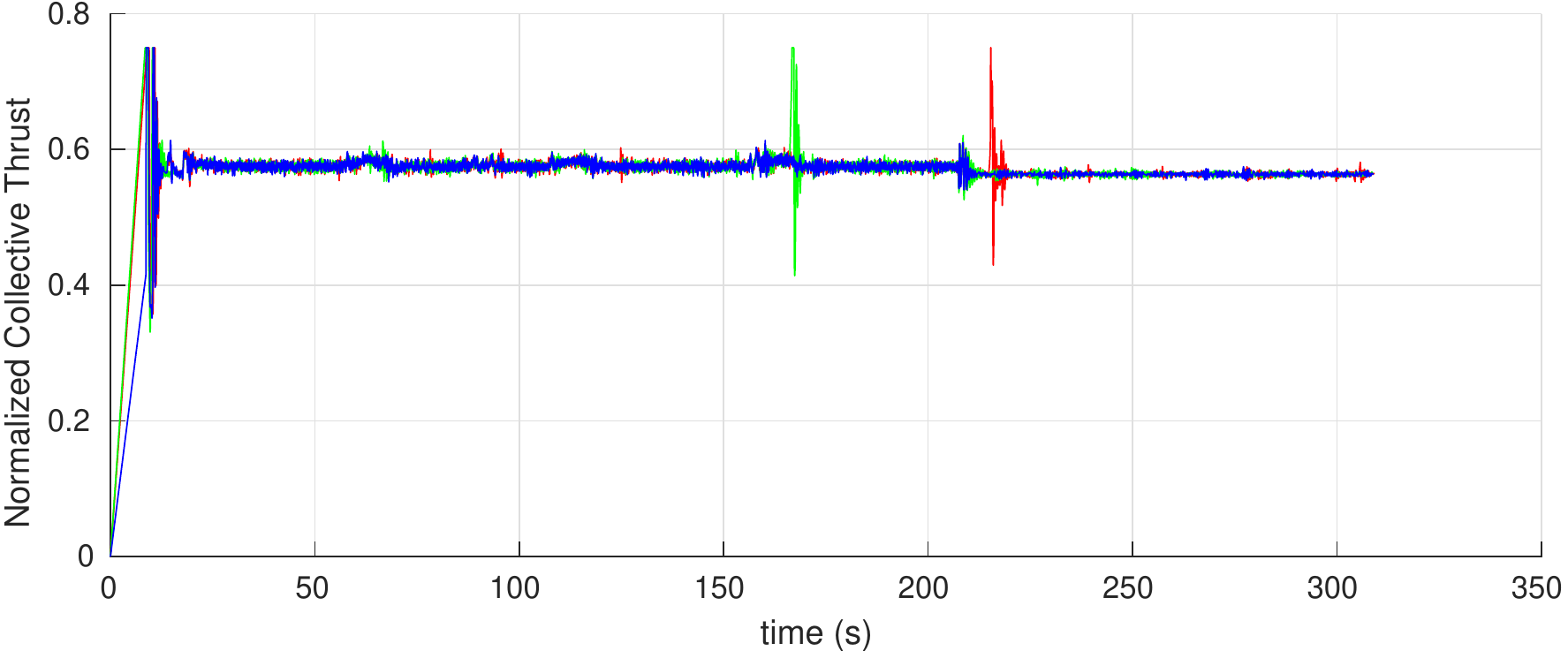}
	\caption{UAVs normalized collective thrust input with respect to time}
	\label{fig:ch9:simThrust}
\end{figure}

\section{Conclusion \& Future Work}\label{sec:ch9conclusion}

Distributed coordinate control method was proposed in this chapter for multi-UAV systems to address coverage problems in 3D spaces including barrier and sweeping problems.
The development of the control laws was based on general kinematic model which make them applicable to autonomous underwater vehicles as well.
A region-based control mechanism was adopted to constrain the movement of the vehicles with some desired region proving a computationally good solution that is both robust and scalable.
This has been shown through several simulations which have confirmed the performance and robustness of our method.
Moreover, a case was presented to show possible ways of avoiding obstacles using the suggested scheme.
Implementation details were also presented for a special case of using a multi-quadrotor system in precision agriculture.
For that case, software-in-the-loop simulations were carried out using Gazebo and ROS showing good results for real-time implementations.
Future work can consider practical implementation and developing proper ways to autonomously decide the shape of the sweeping region based on the multi-UAV system characteristics and the targeted sensing region size.

\bookmarksetup{startatroot}
\addtocontents{toc}{\bigskip}

\chapter{Conclusion\label{cha:conclusion}}

UAVs have been rapidly emerging in various fields ought to the advances in technologies related to sensing, computing, power and communication.
They provide more agile solutions to reach hard places where typical ground vehicles cannot reach.
One of the critical aspects for UAV developments is to be able to navigate safely in unknown environments.
Many of the existing solutions in the literature consider planar approaches when addressing the collision avoidance in a reactive manner.
Other solutions that treats the 3D collision avoidance problem from a path planning prospective may have expensive computational cost with high-latency reactions to obstacles.
This can be critical when moving at high speeds especially around dynamic obstacles.
The work presented in this report addressed such problem by providing reactive solutions with low computational complexity.
Moreover, the safe navigation problem of multi-vehicle systems was also addressed to develop different distributed motion-coordination control strategies for more advanced collective behaviors. 

\section{Contributions Summary}

This report made several contributions to the state-of-the-art methods for safe navigation of UAVs and motion coordination of multi-UAV systems.
A summary of these contributions is provided next.

{\bfseries\Cref{cha:litreivew}} presented a general overview of recent developments in unmanned aerial vehicle technologies towards achieving fully autonomous navigation.
The main focus of the survey was to highlight some of the recent methods proposed in the last decade targeting the problems of control and 3D motion planning. %
Additionally, a list of open-source projects and useful tools for developing autonomous applications were also provided.
This can aid researchers in this field to quickly develop and test more complex and advanced approaches through integration with existing frameworks.

{\bfseries \Cref{cha:methods_hybrid}} suggested a general framework for autonomous navigation in partially-known dynamic environments.
It adopts a hybrid-based approach by combining path planning methods with reactive guidance laws to provide a more advanced solution coping with the drawbacks of relying solely on either path planning or reactive navigation.
As a general framework, it can be adopted considering 3D methods even though it was proposed considering 2D models as a proof-of-concept.

A novel reactive navigation strategy for UAVs was then proposed in {\bfseries \cref{cha:methods_reactive3D}} to address the navigation problem in 3D unknown environments.
Guidance laws based on sliding mode control were developed using a general 3D kinematic model applicable to different UAV types and AUVs.
The overall strategy has low computational cost compared to search-based and optimization-based motion planning methods as it can rely only on local information about the environment as seen by onboard sensors.
A main differentiator of the suggested approach relative to many of the existing reactive navigation methods is that it can produce 3D obstacle avoidance maneuvers utilizing the full capabilities of UAVs.
The approach was further extended in {\bfseries \cref{cha:reactive_impl}} showing a possible implementation with multirotors.
The vehicles dynamics were considered in the low-level control design which was based on the differential-flatness property of the system and the sliding mode control technique.
Experiments were also carried out using a quadrotor to validate the overall performance. 

Another 3D navigation approach was suggested in {\bfseries \cref{cha:deforming_approach}} based on the idea of real-time deformable paths similar to elastic bands.
The deformation process has a very low computational complexity such that the overall method can be considered reactive following the "sense and avoid" paradigm.
This feature makes the approach suitable for navigation in unknown and dynamic environments.
It also makes it more appealing for vehicles with limited computing power compared to search-based and optimization-based methods.

In {\bfseries \cref{cha:tunnel_navigation}}, a special case of the UAV safe navigation problem was considered to allow autonomous navigation in unknown tunnel-like environments with a computationally-light solution. 
Novel control laws were developed to guide the UAV through the environment by relying directly on sensors measurements without the need for accurate localization.
The novelty of the approach lies in its ability to handle very complex 3D tunnel-like environments rather than assuming planar motions similar to many of the existing reactive methods.
Further implementation details with multirotors were provided considering the system dynamics.
Several real experiments and simulation cases were performed to show how well the proposed method can handle navigation in various 3D environments with different shapes.
A complete and robust perception pipeline was also suggested to be used by UAVs with limited sensing field-of-view.

{\bfseries \Cref{cha:flocking_control}} proposed bounded distributed control laws for multi-vehicle systems to address the flocking problem in 2D and 3D unknown environments.
The design adopted a Null-space-based approach with different priorities for motion objectives.
Local objectives include avoiding collisions with other vehicles and surrounding obstacles while global objectives are reaching a target region and forming a geometric structure to achieve consensus in motion.
The control laws were evaluated using several simulations with different number of vehicles.

Finally, novel distributed control strategies were proposed in {\bfseries \cref{cha:coverage_control}} to address 3D coverage problems using multi-UAV systems.
The idea relies on constraining the vehicles' movement to a virtual region whose shape and dynamics can be manipulated in real-time in a distributed fashion to achieve coverage tasks with collision/obstacle avoidance.
The vehicles maintain their positions according to the centroidal Voronoi configurations of the virtual region where a method for computing such locations were presented.
The performance of the suggested methods were evaluated through several simulations.
Additional simulation cases were also considered to show how robust and scalable these methods are.
The idea was developed using a general kinematic model to be applicable to different UAV types.
Furthermore, a dynamic controller was proposed considering quadrotor dynamics along with further implementation details.
Software-in-the-loop simulations of the full quadrotor dynamical model were performed using Gazebo and ROS to evaluate the computational complexity of the overall method.

\section{Future Work}

Developing fully autonomous UAVs remains a very complex and challenging problem as the complexity of their applications increase and more demanding tasks are needed.
To achieve a fully autonomous operation, each component of the commonly used modular approach, to implement the overall navigation stack as interconnected subsystems or modules, needs to be well developed where the performance of one module affects the performance of the whole system.
This report focused mainly on the high level control part for collision avoidance and motion coordination with an objective to have solutions with low computational demands to compensate for latencies introduced by other modules such as the sensing and perception subsystem.
Some of the suggested methods were tested considering a worst-case scenario of limited field-of-view cameras.
However, implementations with different sensors configuration is considered as future work to provide more insights in developing the suggested approaches further for more robustness.
Furthermore, future work will also consider extending the developed navigation strategies in \cref{cha:methods_reactive3D,cha:reactive_impl,cha:deforming_approach} to adopt a perception-aware design (ex. see \cite{zhang2018perception,tordesillas2021panther,Falanga2018pampc,murali2019perception,spasojevic2020perception,sheckells2016optimal}) in a way to maintain information about obstacles during the avoidance maneuver which can be important when using sensors with narrow FOV.

The design of low-level control for multirotors can also be improved to account for flapping blades, drag effects and other forms of disturbances.
This can increase the performance of the overall solution especially for the developed navigation strategy in tunnel-like environments where it's very challenging to fly near to surfaces in confined areas.
Thus, future research will also include supplementing the developed safe navigation and motion coordination control strategies by lower level controllers and filters designed by modern tools of robust control and robust state estimation such as $H_{\infty}$ control \cite{van19922,petersen2000robust,raffo2010integral}, sliding mode control \cite{utkin1999sliding,utkin2013sliding,utkin1978sliding}, robust Kalman filtering \cite{savkin1998robust,petersen1999robust,pathirana2005node,pathirana2004location,rigatos2012nonlinear,inoue2016extended}, and switched controllers \cite{savkin2002hybrid,skafidas1999stability,van2000introduction,liberzon1999basic}.
Additionally, current ongoing research includes investigating the design of low-level velocity tracking controllers to directly couple the designed kinematic reactive control laws with the low-level control using one of the previously mentioned robust control methods.

The proposed distributed control laws for multi-vehicle systems can be further analyzed mathematically to show that they can maintain the connectivity of the networked vehicles during the motion under some conditions on the design parameters.
It is also possible to extend the analysis of the designed flocking control laws in \cref{cha:flocking_control} to consider switching network topologies.
However, the system dynamics may exhibit discontinuities in this case whenever the connectivity graph changes.
Thus, the stability of the overall system can be analyzed using non-smooth analysis methods \cite{clarke1990optimization} and differential inclusions methods \cite{filippov2013differential}.
Field testing with multiple UAVs will also help to further evaluate the performance of the suggested flocking and coverage methods.
Thus, future work will also include practical implementation and developing advanced algorithms to autonomously decide the shape of the sweeping region based on the multi-UAV system characteristics and the targeted sensing region size.
It is also critical to investigate other aspects affecting the applicability of these approaches such as imperfections in communication channels which impacts the information exchanged among the vehicles and needed by the control.
Such problems attracted interest of several researchers in networked control systems \cite{tipsuwan2003control,hespanha2007survey,wang2008networked,matveev2009estimation}.

    \clearpage
    \backmatter
   
   \pagestyle{noHeading}
    \appendix
    \chapter*{Appendix A: Smooth Trapezoidal Velocity Trajectory Generation}\label{ch:appA:trapezoidal_velocity}

The considered profile consists of 7 time intervals satisfying some boundary conditions, and it is given by the following equations:
\begin{equation}
	\dddot{s}(t) = \left\{\begin{array}{lc}
	j_{m},    & 0 \leq t < t_1 \\
	0,          & t_1 \leq t < t_2 \\
	-j_{m},   & t_2 \leq t < t_3 \\
	0,          & t_3 \leq t < t_4 \\
	-j_{m},   & t_4 \leq t < t_5 \\
	0,          & t_5 \leq t < t_6 \\
	j_{m},    & t_6 \leq t < t_f
	\end{array}\right.
\end{equation}
\begin{equation}
\ddot{s}(t) = \left\{\begin{array}{lc}
j_{m}t + a_0,            & 0 \leq t < t_1 \\
a_{m},                   & t_1 \leq t < t_2 \\
-j_{m}\bar{t} + a_{m}, & t_2 \leq t < t_3 \\
0,                         & t_3 \leq t < t_4 \\
-j_{m}\bar{t},           & t_4 \leq t < t_5 \\
-a_{m},                  & t_5 \leq t < t_6 \\
j_{m}\bar{t} - a_{m},  & t_6 \leq t < t_f
\end{array}\right.
\end{equation}
\begin{equation}
\dot{s}(t) = \left\{\begin{array}{lc}
\frac{1}{2}j_{m}t^2 + a_0 t + v_0,            & 0 \leq t < t_1 \\
a_{m}\bar{t} + c_1,                   & t_1 \leq t < t_2 \\
-\frac{1}{2}j_{m}\bar{t}^2 + a_{m}\bar{t} + c_2, & t_2 \leq t < t_3 \\
v_{m},                         & t_3 \leq t < t_4 \\
-\frac{1}{2}j_{m}\bar{t}^2 + v_{m},           & t_4 \leq t < t_5 \\
-a_{m}\bar{t} + c_3,                  & t_5 \leq t < t_6 \\
\frac{1}{2}j_{m}\bar{t}^2 - a_{m}\bar{t} + c_4,  & t_6 \leq t < t_f
\end{array}\right.
\end{equation}
\begin{equation}
s(t) = \left\{\begin{array}{lc}
\frac{1}{6}j_{m}t^3 + \frac{1}{2}a_0 t^2 + v_0 t + p_0,            & 0 \leq t < t_1 \\
\frac{1}{2}a_{m}\bar{t}^2 + c_1 \bar{t} + k_1,                   & t_1 \leq t < t_2 \\
-\frac{1}{6}j_{m}\bar{t}^3 + \frac{1}{2}a_{m}\bar{t}^2 + c_2 \bar{t} + k_2, & t_2 \leq t < t_3 \\
v_{m}\bar{t} + k_3,                         & t_3 \leq t < t_4 \\
-\frac{1}{6}j_{m}\bar{t}^3 + v_{m}\bar{t} + k_4,           & t_4 \leq t < t_5 \\
-\frac{1}{2}a_{m}\bar{t}^2 + c_3 \bar{t} + k_5,                  & t_5 \leq t < t_6 \\
\frac{1}{6}j_{m}\bar{t}^3 - \frac{1}{2}a_{m}\bar{t}^2 + c_4 \bar{t} + k_6,  & t_6 \leq t < t_f
\end{array}\right.
\end{equation}
where $\bar{t}\in[0,t_b-t_a]$ is defined as $\bar{t} = t - t_a$ such that $t_a$ and $t_b$ are the start and end times of the corresponding interval respectively (i.e. $t \in [t_a,t_b]$), $\{p_0,v_0,a_0\}$ correspond to initial conditions $\{s(0),\dot{s}(0),\ddot{s}(0)\}$, and $\{v_m,a_m,j_m\}$ are maximum values to achieve for $\{\dot{s}(t),\ddot{s}(t),\dddot{s}(t)\}$ during the corresponding intervals with constant values.
Also, let $\{p_f,v_f,a_f\}$ correspond to final conditions $\{s(t_f),\dot{s}(t_f),\ddot{s}(t_f)\}$.
Then, the parameters $c_i,\ i\in\{1,\cdots,4\}$ are computed using:
\begin{equation}
	\begin{aligned}
		c_1 &= \frac{a_m^2 - a_0^2}{2j_m} + v_0\\ 
		c_2 &= -\frac{a_m^2}{2j_m} + v_{m}\\ 
		c_3 &= -\frac{a_m^2}{2j_{m}} + v_{m}\\ 
		c_4 &= v_f +\frac{a_m^2 - j_m^2a_f^2}{2j_m} \\
	\end{aligned}
\end{equation}
The time duration $\triangle t_i = t_i - t_{i-1}$ of each interval $i$ can be determined using:
\begin{equation}
	\begin{array}{ccc}
	\triangle t_1 = \frac{a_m - a_0}{j_m}, & 
	\triangle t_2 = \frac{c_2 - c_1}{a_m}, & 
	\triangle t_3 = \frac{a_m}{j_m}, \\[0.5cm]
	\triangle t_5 = \frac{a_m}{j_m}, & 
	\triangle t_6 = \frac{c_3 - c_4}{a_m}, & 
	\triangle t_7 = \frac{a_m}{j_m} + a_f
	\end{array}
\end{equation}
where $\triangle t_4$ is given later.
The parameters $k_j,\ j\in\{1,\cdots,6\}$ are computed using
\begin{equation}
	\begin{aligned}
	k_1 &= \frac{1}{6}j_m (\triangle t_1)^3 + \frac{1}{2} a_0 (\triangle t_1)^2 + v_0 \triangle t_1 + p_0\\
	k_2 &= k_1 + \frac{1}{2} a_m (\triangle t_2)^2 + c_1 \triangle t_2\\
	k_3 &= k_2 - \frac{1}{6}j_m (\triangle t_3)^3 + \frac{1}{2} a_m (\triangle t_3)^2 + c_2 \triangle t_3\\
	k_6 &= p_f - \frac{1}{6}j_m (\triangle t_7)^3 + \frac{1}{2} a_m (\triangle t_7)^2 - c_4 \triangle t_7\\
	k_5 &= k_6 + \frac{1}{2} a_m (\triangle t_6)^2 + a_m \triangle t_6\\
	k_4 &= k_5 + \frac{1}{6} a_m (\triangle t_5)^3
	\end{aligned}
\end{equation}
and finally $\triangle t_4$ is determined using:
\begin{equation}
\triangle t_4 = \frac{k_4 - k_3}{v_m}
\end{equation}

    \clearpage
    
    \bibliographystyle{IEEEtran}
    \bibliography{references}

\end{document}